\documentclass{article}
\usepackage{mydef}

\newcommand{\agd}{\textsc{Agd}\xspace}
\newcommand{\mbsgd}{\textsc{Mb-Sgd}\xspace}
\newcommand{\mbacsgd}{\textsc{Mb-Ac-Sgd}\xspace}
\newcommand{\fedac}{\textsc{FedAc}\xspace}
\newcommand{\fedacfull}{Federated Accelerated Stochastic Gradient Descent\xspace}
\newcommand{\fedavg}{\textsc{FedAvg}\xspace}
\newcommand{\fedaci}{\textsc{FedAc-I}\xspace}
\newcommand{\fedacii}{\textsc{FedAc-II}\xspace}

\title{\fedacfull}

\author{Honglin Yuan\thanks{Stanford University, E-mail:
    \texttt{yuanhl@cs.stanford.edu}}~ \and 
Tengyu Ma\thanks{Stanford University, E-mail: \texttt{tengyuma@stanford.edu}}}
\date{}

\begin{document}

\maketitle

\begin{abstract}
    We propose \fedacfull (\fedac), a principled acceleration of Federated Averaging (\fedavg, also known as Local SGD) for distributed optimization.
    \fedac is the first provable acceleration of \fedavg that improves convergence speed and communication efficiency on various types of convex functions.
    For example, for strongly convex and smooth functions, when using $M$ workers, the previous state-of-the-art \fedavg analysis can achieve a linear speedup in $M$ if given $\tildeo(M)$ rounds of synchronization, whereas \fedac only requires $\tildeo(M^{\frac{1}{3}})$ rounds.  
    Moreover, we prove stronger guarantees for \fedac when the objectives are third-order smooth. 
    Our technique is based on a potential-based perturbed iterate analysis, a novel stability analysis of generalized accelerated SGD, and a strategic tradeoff between acceleration and stability.
\end{abstract}
% !TEX root = main.tex
\section{Introduction}
Leveraging distributed computing resources and decentralized data is crucial, if not necessary, for large-scale machine learning applications. 
Communication is usually the major bottleneck for parallelization in both data-center settings and cross-device federated settings \citep{Kairouz.McMahan.ea-arXiv19}.

We study the distributed stochastic optimization $\min_{w \in \reals^d} F(w) :=  \expt_{\xi \sim \mathcal{D}} f(w; \xi) $ where $F$ is convex.
We assume there are $M$ parallel workers and each worker can access $F$ at $w$ via oracle $\nabla f(w; \xi)$ for independent sample $\xi$ drawn from distribution $\mathcal{D}$. 
We assume synchronization (communication) among workers is allowed but limited to $R$ rounds. We denote $T$ as the parallel runtime.

One of the most common and well-studied algorithms for this setting is \emph{Federated Averaging} (\fedavg) \citep{McMahan.Moore.ea-AISTATS17}, also known as Local SGD or Parallel SGD  \citep{Mangasarian-95,Zinkevich.Weimer.ea-NIPS10,Coppola-14,Zhou.Cong-IJCAI18} in the literature.\footnote{In the literature, \fedavg usually runs on a randomly sampled subset of heterogeneous workers for each synchronization round, whereas Local SGD or Parallel SGD usually run on a fixed set of workers. In this paper we do not differentiate the terminology and assumed a fixed set of workers are deployed for simplicity.} 
In \fedavg, each worker runs a local thread of SGD \citep{Robbins.Monro-51}, 
and periodically synchronizes with other workers by collecting the averages and broadcast to all workers.
The analysis of \fedavg \citep{Stich-ICLR19,Stich.Karimireddy-arXiv19,Khaled.Mishchenko.ea-AISTATS20,Woodworth.Patel.ea-ICML20} usually follows the perturbed iterate analysis framework \citep{Mania.Pan.ea-SIOPT17} where the performance of \fedavg is compared with the idealized version with infinite synchronization. The key idea is to control the stability of SGD so that the local  iterates held by parallel workers do not differ much, even with infrequent synchronization.

We study the acceleration of \fedavg and investigate whether it is possible to improve convergence speed and communication efficiency. 
The main challenge for introducing acceleration 
lies in the disaccord of acceleration and stability. 
Stability is essential for analyzing distributed algorithms such as \fedavg, whereas momentum applied for acceleration may amplify the instability of the algorithm. 
Indeed, we show that standard Nesterov accelerated gradient descent algorithm \citep{Nesterov-18} \emph{may not be initial-value stable even for smooth and strongly convex functions}, in the sense that the initial infinitesimal difference may grow exponentially fast (see \cref{instability:sketch}). 
This evidence necessitates a more scrutinized acceleration in distributed settings.

We propose a principled acceleration for \fedavg, namely \emph{\fedacfull} (\fedac), which provably improves convergence rate and communication efficiency. 
Our result extends the results of \citet{Woodworth.Patel.ea-ICML20} on \textsc{Local-Ac-Sa} for quadratic objectives to broader objectives.
To the best of our knowledge, this is the \textbf{first provable acceleration} of \fedavg (and its variants) for general or strongly convex objectives.
\fedac parallelizes a generalized version of Accelerated SGD \citep{Ghadimi.Lan-SIOPT12}, while we carefully 
balance the acceleration-stability tradeoff to accommodate distributed settings.
Under standard assumptions on smoothness, bounded variance, and strong convexity (see \cref{asm1} for details), \fedac converges at rate $\tildeo ( \frac{1}{M T} + \frac{1}{TR^3} )$.\footnote{We hide varaibles other than $T, M, R$ for simplicity. The complete bound can be found in \cref{tab:conv:rate} and the corresponding theorems.}
The bound will be dominated by $\tildeo(\frac{1}{MT})$ for $R$ as low as $\tildeo(M^{\frac{1}{3}})$, which implies the synchronization $R$ required for linear speedup in $M$ is $\tildeo(M^{\frac{1}{3}})$.\footnote{
    ``Synchronization required for linear speedup'' is a simple and common measure of the communication efficiency, which can be derived from the raw convergence rate. It is defined as the minimum number of synchronization $R$, as a function of number of workers $M$ and parallel runtime $T$, required to achieve a linear speed up --- the parallel runtime of $M$ workers is equal to the $\nicefrac{1}{M}$ fraction of a sequential single worker runtime.}
In comparison, the state-of-the-art \fedavg analysis \citet{Khaled.Mishchenko.ea-AISTATS20} showed that \fedavg converges at rate $\tildeo ( \frac{1}{M T} + \frac{1}{TR} )$, which requires $\tildeo(M)$ synchronization for linear speedup.
For general convex objective, \fedac converges at rate $\tildeo(\frac{1}{\sqrt{MT}} + \frac{1}{T^{\frac{1}{3}} R^{\frac{2}{3}}})$, which outperforms both state-of-the-art \fedavg $\tildeo(\frac{1}{\sqrt{MT}} + \frac{1}{T^{\frac{1}{3}} R^{\frac{1}{3}}})$ by \citeauthor{Woodworth.Patel.ea-ICML20} and Minibatch-SGD baseline ${\Theta}(\frac{1}{\sqrt{MT}} + \frac{1}{R})$ \citep{Dekel.Gilad-Bachrach.ea-JMLR12}.\footnote{
    Minibatch-SGD baseline corresponds to running SGD for $R$ steps with batch size $MT/R$, which can be implemented on $M$ parallel workers with $R$ communication and each worker queries $T$ gradients in total.
    }
We summarize communication bounds and convergence rates in \cref{tab:sync:bound,tab:conv:rate} (on the row marked A\ref{asm1}).

\begin{table}
    \caption{\textbf{Summary of results on the synchronization rounds $R$ required for linear speedup in $M$.} All bounds hide multiplicative $\polylog$ factors and variables other than $M$ and $T$ for ease of presentation. Notation: $M$: number of workers; $T$: parallel runtime.
    }
    \label{tab:sync:bound}
    \centering
    \resizebox{\linewidth}{!}{
    \begin{tabular}{lllll}
        \toprule
                    &           & \multicolumn{2}{c}{Synchronization Required for Linear Speedup}
        \\ \cmidrule(r){3-4}
        Assumption        & Algorithm      & Strongly Convex                                   & General Convex                                                                & Reference
        \\
        \midrule
        \cref{asm1} & \fedavg   & $T^{\frac{1}{2}} M^{\frac{1}{2}}$                 & --                                                                            & \citep{Stich-ICLR19}
        \\
                    &           & $T^{\frac{1}{3}} M^{\frac{1}{3}}$                 & --                                                                            & \citep{Haddadpour.Kamani.ea-NeurIPS19}
        \\
                    &           & $M$        & $T^{\frac{1}{2}} M^{\frac{3}{2}}$                                             & \citep{Stich.Karimireddy-arXiv19}
        \\
                    &           & $M$                                               & $T^{\frac{1}{2}} M^{\frac{3}{2}}$                                             & \citep{Khaled.Mishchenko.ea-AISTATS20}
        \\
                    & \fedac  & $\bm{M^{\frac{1}{3}}}$                            & $\min \{ \bm{ T^{\frac{1}{4}} M^{\frac{3}{4}}},\bm{T^{\frac{1}{3}} M^{\frac{2}{3}}}\}$ & 
                    \textbf{\cref{fedac:a1,fedaci:a1:gcvx,fedacii:a1:gcvx}}
        \\
        \midrule
        \cref{asm2} & \fedavg   & $\max\{ \bm{T^{-\frac{1}{2}} M^{\frac{1}{2}}},\bm{1}\}$    & ${T^{\frac{1}{2}} M^{\frac{3}{2}}}$                                           & 
                    \textbf{\cref{fedavg:a2,fedavg:a2:gcvx}}
        \\
                    & \fedac  & $\max\{ \bm{T^{-\frac{1}{6}} M^{\frac{1}{6}}}, \bm{1}\}$    & $\max\{ \bm{T^{\frac{1}{4}} M^{\frac{1}{4}}}, \bm{T^{\frac{1}{6}} M^{\frac{1}{2}}}\}$   & 
                    \textbf{\cref{fedacii:a2,fedacii:a2:gcvx}}
        \\
        \bottomrule
    \end{tabular}
    }
\end{table}

Our results suggest an \textbf{intriguing synergy between acceleration and parallelization}.
In the single-worker sequential setting, the convergence is usually dominated by the term related to stochasticity,
which is in general not possible to be accelerated \citep{Nemirovski.Yudin-83}.
In distributed settings, the communication efficiency is dominated by the overhead caused by infrequent synchronization, which can be accelerated as we show in the convergence rates summary \cref{tab:conv:rate}. 

\begin{table}
    \caption{\textbf{Summary of results on convergence rates.} All bounds omit multiplicative $\polylog$ factors and additive exponential decaying term (for strongly convex objective) for ease of presentation. 
    Notation: $D_0$: $\|w_0 - w^*\|$; $M$: number of workers; $T$: parallel runtime; $R$: synchronization; $\mu$: strong convexity; $L$: smoothness; $Q$: \nth{3}-order-smoothness (in \cref{asm2}).
    }
    \label{tab:conv:rate}
    \centering
    \resizebox{\linewidth}{!}{
    \begin{tabular}{llll}
        \toprule
        Assumption              & Algorithm                     & Convergence Rate $(\expt[F(\cdot)] - F^* \leq \cdots $) & Reference
        \\
        \midrule
        A\ref{asm1}($\mu > 0$) & \fedavg                       &
        exp. decay 
        $+ \frac{\sigma^2}{\mu M T} +  \frac{L \sigma^2}{\mu^2 T R} $
                               & \citep{Woodworth.Patel.ea-ICML20}
        \\
                               & \fedac                        &
        exp. decay $+ \frac{\sigma^2}{\mu M T} +  
        \min \left\{ \frac{L \sigma^2}{\mu^2 T R^2}, \frac{L^2 \sigma^2}{\mu^3 T R^3} \right\}$
                               & \textbf{\cref{fedac:a1}}
        \\
        \midrule
        A\ref{asm2}($\mu > 0$) & \fedavg                       &
        exp. decay $+ \frac{\sigma^2}{\mu M T} + \frac{Q^2 \sigma^4}{\mu^5 T^2 R^2}$  
                               & \textbf{\cref{fedavg:a2}}
        \\
                               & \fedac                        &
        exp. decay $+ \frac{\sigma^2}{\mu MT} 
        + {\frac{Q^2 \sigma^4}{\mu^5 T^2 R^6}}$
                               & \textbf{\cref{fedacii:a2}}
        \\
        \midrule
        A\ref{asm1}($\mu = 0$) & \fedavg                       &
        $\frac{LD_0^2}{T} + \frac{\sigma D_0}{\sqrt{MT}} + \frac{L^{\frac{1}{3}} \sigma^{\frac{2}{3}} D_0^{\frac{4}{3}}}{T^{\frac{1}{3}} R^{\frac{1}{3}}}$
                               & \citep{Woodworth.Patel.ea-ICML20}
        \\
                               & \fedac                        &
        $\frac{LD_0^2}{TR} + \frac{\sigma D_0}{\sqrt{MT}} + \min \left\{\frac{L^{\frac{1}{3}} \sigma^{\frac{2}{3}} D_0^{\frac{4}{3}}}{T^{\frac{1}{3}} R^{\frac{2}{3}}}, \frac{L^{\frac{1}{2}} \sigma^{\frac{1}{2}} D_0^{\frac{3}{2}}}{T^{\frac{1}{4}} R^{\frac{3}{4}}} \right\} $
                               & \textbf{\cref{fedaci:a1:gcvx,fedacii:a1:gcvx}}
        \\
        \midrule
        A\ref{asm2}($\mu = 0$) & \fedavg                       &
        $\frac{LD_0^2}{T} + \frac{\sigma D_0}{\sqrt{MT}} + \frac{Q^{\frac{1}{3}} \sigma^{\frac{2}{3}} D_0^{\frac{5}{3}}}{T^{\frac{1}{3}} R^{\frac{1}{3}}}$
                               & \textbf{\cref{fedavg:a2:gcvx}}
        \\
                               & \fedac                        &
        $\frac{LD_0^2}{TR} + \frac{\sigma D_0}{\sqrt{MT}} + \frac{L^{\frac{1}{3}} \sigma^{\frac{2}{3}} D_0^{\frac{4}{3}}}{M^{\frac{1}{3}} T^{\frac{1}{3}} R^{\frac{2}{3}}} +
            \frac{Q^{\frac{1}{3}} \sigma^{\frac{2}{3}} D_0^{\frac{5}{3}}}{T^{\frac{1}{3}} R}$
                               & \textbf{\cref{fedacii:a2:gcvx}}
        \\
        \bottomrule
    \end{tabular}
    }
\end{table}
We establish \textbf{stronger guarantees for \fedac when objectives are \nth{3}-order-smooth}, or ``close to be quadratic'' intuitively (see \cref{asm2} for details).
For strongly convex objectives, \fedac converges at rate $\tildeo (\frac{1}{MT} + \frac{1}{T^2 R^6})$ (see \cref{fedacii:a2}).
We also prove the convergence rates of \fedavg in this setting for comparison.
We summarize our results in \cref{tab:sync:bound,tab:conv:rate} (on the row marked A\ref{tab:conv:rate}).

We empirically verify the efficiency of \fedac in \cref{sec:experiment}. Numerical results suggest a considerable improvement of \fedac over all three baselines, namely \fedavg, (distributed) Minibatch-SGD, and (distributed) Accelerated Minibatch-SGD \citep{Dekel.Gilad-Bachrach.ea-JMLR12,Cotter.Shamir.ea-NeurIPS11}, especially in the regime of highly infrequent communication and abundant workers.
\subsection{Related work}
The analysis of \fedavg (a.k.a. Local SGD) is an active area of research. 
Early research on \fedavg mostly focused on the particular case of $R = 1$, also known as ``one-shot averaging'', where the iterates are only averaged once at the end of procedure \citep{Mcdonald.Mohri.ea-NIPS09,Zinkevich.Weimer.ea-NIPS10,Zhang.Duchi.ea-13,Shamir.Srebro-Allerton14,Rosenblatt.Nadler-16}.
The first convergence result on \fedavg with general (more than one) synchronization for convex objectives was established by \citet{Stich-ICLR19} under the assumption of uniformly bounded gradients.
\citet{Stich.Karimireddy-arXiv19,Haddadpour.Kamani.ea-NeurIPS19,Dieuleveut.Patel-NeurIPS19,Khaled.Mishchenko.ea-AISTATS20} relaxed this requirement and studied \fedavg under assumptions similar to our \cref{asm1}. 
These works also attained better rates than \citep{Stich-ICLR19} through an improved stability analysis of SGD.
However, recent work \citep{Woodworth.Patel.ea-ICML20} showed that all the above bounds on \fedavg are strictly dominated by minibatch SGD \citep{Dekel.Gilad-Bachrach.ea-JMLR12} baseline. 
\citet{Woodworth.Patel.ea-ICML20} provided the first bound for \fedavg that can improve over minibatch SGD for certain cases.
This is to our knowledge the state-of-the-art bound for \fedavg and its variants.
Our \fedac uniformly dominates this bound on \fedavg.

The specialty of quadratic objectives for better communication efficiency has been studied in an array of contexts \citep{Zhang.Duchi.ea-JMLR15,Jain.Kakade.ea-18}. \citet{Woodworth.Patel.ea-ICML20} studied an acceleration of \fedavg but was limited to quadratic objectives.
More generally, \citet{Dieuleveut.Patel-NeurIPS19} studied the convergence of \fedavg under bounded \nth{3}-derivative, but the bounds are still dominated by minibatch SGD baseline \citep{Woodworth.Patel.ea-ICML20}. 
Recent work by \citet{Godichon-Baggioni.Saadane-20} studied one-shot averaging under similar assumptions. 
Our analysis on \fedavg (\cref{fedavg:a2}) allows for general $R$ and reduces to a comparable bound if $R = 1$, which is further improved by our analysis on \fedac (\cref{fedacii:a2}).

\fedavg has also been studied in other more general settings. A series of recent papers (\eg, \citep{Zhou.Cong-IJCAI18,Haddadpour.Kamani.ea-ICML19,Wang.Joshi-19,Yu.Jin-ICML19,Yu.Jin.ea-ICML19,Yu.Yang.ea-AAAI19}) studied the convergence of \fedavg for non-convex objectives. 
We conjecture that \fedac can be generalized to non-convex objectives to attain better efficiency by combining our result with recent non-convex acceleration algorithms (\emph{e.g.}, \citep{Carmon.Duchi.ea-SIOPT18}).
Numerous recent papers \citep{
    Khaled.Mishchenko.ea-AISTATS20,Li.Huang.ea-ICLR20,Haddadpour.Mahdavi-arXiv19,Koloskova.Loizou.ea-ICML20} studied \fedavg in heterogeneous settings, where each worker has access to stochastic gradient oracles from different distributions.
Other variants of \fedavg have been proposed in the face of heterogeneity \citep{Pathak.Wainwright-NeurIPS20,Li.Sahu.ea-MLSys20,Karimireddy.Kale.ea-ICML20,Wang.Tantia.ea-ICLR20}.
We defer the analysis of \fedac for heterogeneous settings for future work. 
Other techniques, such as quantization, can also reduce communication cost \citep{Alistarh.Grubic.ea-NIPS17,Wen.Xu.ea-NIPS17,Stich.Cordonnier.ea-NeurIPS18,Basu.Data.ea-NeurIPS19,Mishchenko.Gorbunov.ea-arXiv19,Reisizadeh.Mokhtari.ea-AISTATS20}.
We refer readers to \citep{Kairouz.McMahan.ea-arXiv19} for a more comprehensive survey of the recent development of algorithms in Federated Learning.

Stability is one of the major topics in machine learning and has been studied for a variety of purposes \citep{Yu.Kumbier-20}. For example, \citet{Bousquet.Elisseeff-JMLR02,Hardt.Recht.ea-ICML16} showed that algorithmic stability can be used to establish generalization bounds.
\citet{Chen.Jin.ea-arXiv18} provided the stability bound of standard Accelerated Gradient Descent (AGD) for \emph{quadratic objectives}. 
To the best of our knowledge, there is no existing (positive or negative) result on the stability of AGD for general convex or strongly convex objectives. 
This work provides the first (negative) result on the stability of standard deterministic AGD, which suggests that standard AGD may not be initial-value stable even for strongly convex and smooth objectives (\cref{instability:sketch}).\footnote{
We construct the counterexample for initial-value stability for simplicity and clarity. 
We conjecture that our counterexample also extends to other algorithmic stability notions (\eg, uniform stability \citep{Bousquet.Elisseeff-JMLR02}) since initial-value stability is usually milder than the others.
} This result may be of broader interest.
The tradeoff technique of \fedac also provides a possible remedy to mitigate the instability issue, which may be applied to derive better generalization bounds for momentum-based methods.

The stochastic optimization problem $\min_{w \in \reals^d} F(w) :=  \expt_{\xi \sim \mathcal{D}} f(w; \xi)$ we consider in this paper is commonly referred to as the \emph{stochastic approximation} (SA) problem in the literature \citep{Kushner.Yin.ea-03}. Another related question is the \emph{sample average approximation} (SAA), also known as \emph{empirical risk minimization} (ERM) problem \citep{Vapnik-98}. The ERM problem is defined as $\min_{w \in \reals^d} F(w) :=  \frac{1}{N} \sum_{i=1}^N f(w; \xi^{(i)})$, where the sum of a fixed finite set of objectives is to be optimized. 
For strongly-convex ERM, it is possible to leverage variance reduction techniques \citep{Johnson.Zhang-13} to attain linear convergence. For example,  the Distributed Accelerated SVRG (DA-SVRG) \citep{Lee.Lin.ea-JMLR17} can attain expected $\varepsilon$-optimality within $\tilde{\mathcal{O}}(\frac{N}{M} \log(1/\varepsilon) )$ parallel runtime and $\tilde{\mathcal{O}}( \log(1/\varepsilon))$ rounds of communication. 
If we were to apply \fedac for ERM, it can attain expected $\varepsilon$-optimality with 
$\tilde{\mathcal{O}}(\frac{1}{M \varepsilon})$ parallel runtime and $\tilde{\mathcal{O}}(M^{\frac{1}{3}})$ rounds of communication (assuming \cref{asm1} is satisfied).
Therefore one can obtain low accuracy solution with \fedac in a short parallel runtime, whereas DA-SVRG may be preferred if high accuracy is required and $N$ is relatively small. 
It is worth mentioning that \fedac is not designed or proved for the distributed ERM setting, and we include this rough comparison for completeness. 
We conjecture that \fedac can be incorporated with appropriate variance reduction techniques to attain better performance in distributed ERM setting.

\section{Preliminaries}
\subsection{Assumptions}
We conduct our analysis on \fedac in two settings with two sets of assumptions.
The following \cref{asm1} consists of a set of standard assumptions: convexity, smoothness and bounded variance.
Comparable assumptions are assumed in existing studies on \fedavg \citep{Haddadpour.Kamani.ea-NeurIPS19,Stich.Karimireddy-arXiv19,Khaled.Mishchenko.ea-AISTATS20,Woodworth.Patel.ea-ICML20}.\footnote{In fact, \citet{Woodworth.Patel.ea-ICML20} imposes the same assumption in \cref{asm1}; \citet{Khaled.Mishchenko.ea-AISTATS20} assumes $f(w; \xi)$ are convex and smooth for all $\xi$, which is more restricted; \citet{Stich.Karimireddy-arXiv19} assumes quasi-convexity instead of convexity; \citet{Haddadpour.Kamani.ea-NeurIPS19} assumes P-\L\ condition instead of strong convexity. In this work we focus on standard (general or strong) convexity to simplify the analysis.}
\begin{assumption}[$\mu$-strong convexity, $L$-smoothness and $\sigma^2$-uniformly bounded gradient variance] \label{asm1}
    \;
    \begin{enumerate}
        \item[(a)] $F$ is $\mu$-strongly convex, i.e., $F(u) \geq F(w) + \langle \nabla F(w), u - w  \rangle + \frac{1}{2} \mu \|u - w\|^2$ for any $u, w \in \reals^d$. In addition, assume $F$ attains a finite optimum $w^* \in \reals^d$.
        (We will study both the strongly convex case $(\mu > 0)$ and the general convex case $(\mu = 0)$, which will be clarified in the context.)
        \item [(b)]  $F$ is $L$-smooth, i.e., $F(u) \leq F(w) + \langle \nabla F(w), u - w  \rangle + \frac{1}{2} L \|u-w\|^2$ for any $u, w \in \reals^d$.
        \item [(c)]  $\nabla f(w; \xi)$ has $\sigma^2$-bounded variance, i.e., $\sup_{w \in \reals^d}  \expt_{\xi \in \mathcal{D}} \| \nabla f(w; \xi) - \nabla F(w)\|^2 \leq \sigma^2$.
    \end{enumerate}
\end{assumption}
The following \cref{asm2} consists of an additional set of assumptions: \nth{3} order smoothness and bounded $\nth{4}$ central moment.
A similar version 
of \cref{asm2} 
was studied in \citep{Dieuleveut.Patel-NeurIPS19}.\footnote{In fact, \citep{Dieuleveut.Patel-NeurIPS19} assumes bounded \nth{4} central moment at optimum only. We adopt the uniformly bounded \nth{4} central moment for consistency with \cref{asm1}.}
\begin{assumption}\label{asm2}     In addition to \cref{asm1}, assume that
    \begin{enumerate}
        \item[(a)] $F$ is $Q$-\nth{3}-order-smooth, i.e., $F(u) \leq F(w) + \langle \nabla F(w), u-w \rangle +  \frac{1}{2} \langle \nabla^2 F(w) (u-w), (u-w) \rangle + \frac{1}{6} Q \|u-w\|^3$ for any $u, w \in \reals^d$.
        \item[(b)] $\nabla f(w; \xi)$ has $\sigma^4$-bounded \nth{4} central moment, i.e, $\sup_{w \in \reals^d} \expt_{\xi \in \mathcal{D}} \| \nabla f(w; \xi) - \nabla F(w)\|^4 \leq \sigma^4$.
    \end{enumerate}
\end{assumption}

\subsection{Notations}
We use $\|\cdot\|$  to denote the operator norm of a matrix or the $\ell_2$-norm of a vector, $[n]$ to denote the set $\{1, 2, \ldots, n\}$. 
Let $w^{*}$ be the optimum of $F$ and denote $F^*:=F(w^*)$. Let $D_0 := \|w_0 - w^*\|$.
be the Euclidean distance of the initial guess $w_0$ and the optimum $w^*$.
For both \fedac and \fedavg, we use $M$ to denote the number of parallel workers, $R$ to denote synchronization rounds, $K$ to denote the synchronization interval (i.e., the number of local steps per synchronization round), and $T = KR$ to denote the parallel runtime. 
We use the subscript to denote timestep, italicized superscript to denote the index of worker and unitalicized superscript ``md'' or ``ag'' to denote modifier of iterates in \fedac (see definition in \cref{alg:fedac}).
We use overline to denote averaging over all workers, \eg, $\overline{w_t^{\mathrm{ag}}} := \frac{1}{M} \sum_{m=1}^M w_t^{\mathrm{ag}, m}$. We use $\tildeo, \tilde{\Theta}$ to hide multiplicative $\polylog$ factors, which will be clarified in the formal context. 
% !TEX root = main.tex
\section{Main results}
\subsection{Main algorithm: Federated Accelerated Stochastic Gradient Descent (\fedac)} 
We formally introduce our algorithm \fedac in \cref{alg:fedac}. \fedac parallelizes a generalized version of Accelerated SGD by \citet{Ghadimi.Lan-SIOPT12}. 
In \fedac, each worker $m \in [M]$ maintains three intertwined sequences $\{w_t^m, w_t^{\mathrm{ag}, m}, w_t^{\mathrm{md},m}\}$ at each step $t$.
Here $w_t^{\mathrm{ag}, m}$ aggregates the past iterates, $w_t^{\mathrm{md},m}$ is the auxiliary sequence of ``middle points'' on which the gradients are queried, and $w_t^{m}$ is the main sequence of iterates. 
At each step, candidate next iterates $v_{t+1}^{\mathrm{ag}, m}$ and $v_{t+1}^m$ are computed. If this is a local (unsynchronized) step, they will be assigned to the next iterates $w_{t+1}^{\mathrm{ag}, m}$ and $w_{t+1}^{\mathrm{ag}, m}$. Otherwise, they will be collected, averaged, and broadcast to all the workers.
\begin{algorithm}
    \caption{\fedacfull (\fedac)}
    \begin{algorithmic}[1]
        \label{alg:fedac}
        \Procedure{\fedac }{$\alpha, \beta, \eta, \gamma$}  \Comment{See \cref{fedaci,fedacii} for hyperparameter choices}
        \State Initialize $w_0^{\mathrm{ag}, m} = w_0^{m} = w_0$ for all $m \in [M]$
        \For{$t = 0, \ldots, T-1$}
        \For{every worker $m\in [M]$ {\bf in parallel}}
        \State $w_t^{\mathrm{md}, m} \gets \beta^{-1} w_t^m + (1 - \beta^{-1}) w_t^{\mathrm{ag}, m}$
        \Comment{Compute $w_t^{\mathrm{md,m}}$ by coupling}
        \State $g_t^m \gets \nabla f(w_t^{\mathrm{md}, m} ; \xi_t^m)$ 
        \Comment{Query gradient at $w_t^{\mathrm{md}, m}$}
        \State $v_{t+1}^{\mathrm{ag}, m} \gets w_t^{\mathrm{md}, m} - \eta \cdot g_t^m$
        \Comment{Compute next iterate candidate $v_{t+1}^{\mathrm{ag}, m}$}
        \State $v_{t+1}^m \gets (1 - \alpha^{-1}) w_t^m + \alpha^{-1} w_t^{\mathrm{md}, m} - \gamma \cdot g_t^m$
        \Comment{Compute next iterate candidate $v_{t+1}^{m}$}
        \If{ sync (i.e., $t~\textup{mod}~K=-1$)}
        \State $w_{t+1}^m \gets \frac{1}{M} \sum_{m'=1}^M v_{t+1}^{m'}$; \quad $w_{t+1}^{\mathrm{ag},m} \gets \frac{1}{M} \sum_{m'=1}^M v_{t+1}^{\mathrm{ag},m'}$
        \Comment{Average \& broadcast}
        \Else
        \State $w_{t+1}^m \gets  v_{t+1}^m$; \quad $w_{t+1}^{\mathrm{ag},m} \gets v_{t+1}^{\mathrm{ag},m}$
        \Comment{Candidates assigned to be the next iterates }
        \EndIf
        \EndFor
        \EndFor
        \EndProcedure
    \end{algorithmic}
\end{algorithm}
\paragraph{Hyperparameter choice.}
We note that the particular version of Accelerated SGD in \fedac is more flexible than the most standard Nesterov version \citep{Nesterov-18}, as it has four hyperparameters instead of two. 
Our analysis suggests that this flexibility seems crucial for principled acceleration in the distributed setting to allow for acceleration-stability trade-off. 

However, we note that our theoretical analysis gives a very concrete choice of hyperparameter $\alpha, \beta$, and $\gamma$ in terms of $\eta$.
For $\mu$-strongly-convex objectives, we introduce the following two sets of hyperparameter choices, which are referred to as \fedaci and \fedacii, respectively.
As we will see in the \cref{sec:thm:a1}, under \cref{asm1},
\fedaci has a better dependency on condition number $\nicefrac{L}{\mu}$, whereas \fedacii has better communication efficiency.
\begin{alignat}{5}
     & \text{\fedaci}: \quad
     &                        & \eta \in \left(0, \frac{1}{L}\right], \quad &  & \gamma = \max \left\{ \sqrt{\frac{\eta}{\mu K}} , \eta\right\}, \quad &  & \alpha  = \frac{1}{\gamma \mu},  \quad                &  & \beta = \alpha + 1;
    \label{fedaci}
    \\
     & \text{\fedacii}: \quad
     &                        & \eta \in \left(0, \frac{1}{L}\right], \quad &  & \gamma = \max \left\{ \sqrt{\frac{\eta}{\mu K}}, \eta \right\}, \quad &  & \alpha  = \frac{3}{2 \gamma \mu} - \frac{1}{2}, \quad &  & \beta = \frac{2 \alpha^2 - 1}{\alpha - 1}.
    \label{fedacii}
\end{alignat}

Therefore, practically, if the strong convexity estimate $\mu$ is given (which is often taken to be the $\ell_2$ regularization strength), the only hyperparameter to be tuned is $\eta$, whose optimal value depends on the problem parameters.

\subsection{Theorems on the convergence for strongly convex objectives}
Now we present main theorems of \fedac for strongly convex objectives under \cref{asm1} or \ref{asm2}.
\subsubsection{Convergence of \fedac under \cref{asm1}}
\label{sec:thm:a1}

We first introduce the convergence theorem on \fedac under \cref{asm1}. \fedaci and \fedacii lead to slightly different convergence rates. 
\begin{theorem}[Convergence of \fedac]
    \label{fedac:a1}
    Let $F$ be $\mu > 0$-strongly convex, and assume \cref{asm1}.
    \begin{enumerate}
        \item [(a)] (Full version see \cref{fedaci:full}) For $\eta = \min \{ \frac{1}{L}, \tilde{\Theta} (  \frac{1}{\mu TR}) \}$, \fedaci yields
              \begin{equation}
                  \expt \left[ F( {\overline{w_{T}^{\mathrm{ag}}}})  - F^*  \right] \leq
                  \exp \left( \min \left\{ -\frac{\mu T}{L}, - \sqrt{\frac{\mu T R}{L}} \right\} \right)
                  L D_0^2 +  \tildeo\left(\frac{\sigma^2}{\mu M T} + \frac{L \sigma^2} {\mu^2 T R^2}\right).
                  \label{eq:fedac:a1:1}
              \end{equation}
        \item [(b)] (Full version see \cref{fedacii:a1:full}) For $\eta = \min \{ \frac{1}{L}, \tilde{\Theta} (  \frac{1}{\mu T R} ) \}$, \fedacii yields
              \begin{equation}
                  \expt \left[  F( {\overline{w_{T}^{\mathrm{ag}}}})  - F^*  \right] \leq
                  \exp \left( \min \left\{ -\frac{\mu T}{3L}, - \sqrt{\frac{\mu T R}{9L}}  \right\} \right)
                  L D_0^2 +
                  \tildeo \left( \frac{\sigma^2}{\mu M T} +  \frac{L^2 \sigma^2}{\mu^3 T R^3} \right).
                  \label{eq:fedac:a1:2}
              \end{equation}
    \end{enumerate}
\end{theorem}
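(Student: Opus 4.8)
The plan is to run a \emph{potential-based perturbed iterate analysis} that compares the averaged \fedac trajectory against an idealized single-thread accelerated SGD, and to control the gap through a dedicated stability bound. First I would introduce the averaged sequences $\overline{w_t} := \frac{1}{M}\sum_m w_t^m$, $\overline{w_t^{\mathrm{ag}}}$, and $\overline{w_t^{\mathrm{md}}}$. The crucial structural observation is that every update in \cref{alg:fedac} is affine in the queried gradients $g_t^m$, and averaging commutes with both the local steps and the synchronization step. Hence the averaged triple obeys \emph{exactly} the generalized accelerated SGD recursion of \citet{Ghadimi.Lan-SIOPT12}, but driven by $\overline{g_t} = \frac{1}{M}\sum_m \nabla f(w_t^{\mathrm{md},m};\xi_t^m)$ --- a gradient estimate whose queries sit at the heterogeneous local points $w_t^{\mathrm{md},m}$ rather than at the common point $\overline{w_t^{\mathrm{md}}}$. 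Decomposing $\overline{g_t} = \nabla F(\overline{w_t^{\mathrm{md}}}) + b_t + z_t$, where $b_t := \frac{1}{M}\sum_m(\nabla F(w_t^{\mathrm{md},m}) - \nabla F(\overline{w_t^{\mathrm{md}}}))$ is a deterministic bias and $z_t$ is zero-mean noise, the problem reduces to analyzing accelerated SGD with an inexact oracle: by $L$-smoothness $\|b_t\|^2 \le L^2 \mathcal{E}_t$ with $\mathcal{E}_t := \frac{1}{M}\sum_m \expt\|w_t^{\mathrm{md},m} - \overline{w_t^{\mathrm{md}}}\|^2$, and by independence of the samples across workers $\expt\|z_t\|^2 \le \sigma^2/M$ (this last factor is the source of the linear-speedup term $\frac{\sigma^2}{\mu M T}$).

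Next I would carry out the one-step potential analysis. Following the Ghadimi--Lan template I would use a Lyapunov function of the form $\Phi_t = (F(\overline{w_t^{\mathrm{ag}}}) - F^*) + c\,\|\overline{w_t} - w^*\|^2$, with $c$ and the contraction rate $\rho$ determined by the hyperparameter set (\fedaci or \fedacii). The target is a one-step inequality $\expt[\Phi_{t+1}] \le (1-\rho)\,\expt[\Phi_t] + A\cdot\frac{\sigma^2}{M} + B\, L^2 \mathcal{E}_t$, where the contraction $\rho \approx \gamma\mu$ reproduces the two regimes $\exp(-\mu T/L)$ and $\exp(-\sqrt{\mu T R/L})$ of the stated bound through the two branches of $\gamma = \max\{\sqrt{\eta/(\mu K)},\eta\}$ (the accelerated branch gives $\rho \approx \sqrt{\eta\mu/K}$, so $(1-\rho)^T \approx \exp(-R\sqrt{\eta\mu K})$, which equals $\exp(-\sqrt{\mu T R/L})$ at $\eta = 1/L$ since $KR^2 = TR$). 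This portion is essentially the known accelerated SGD argument; the only new ingredient is the bias term $b_t$, absorbed via Young's inequality against the strong-convexity slack in the potential.

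The main obstacle is the \emph{stability bound} on $\mathcal{E}_t$, i.e., quantifying how far the local \fedac threads drift apart within one synchronization interval. Since all workers restart from a common iterate after each synchronization, $\mathcal{E}_t$ vanishes at sync steps and is subsequently driven by the cross-worker disagreement of the independent stochastic gradients, whose per-step injection has magnitude $O(\gamma^2\sigma^2)$. The delicate point --- precisely the phenomenon flagged in \cref{instability:sketch} --- is that the momentum coupling can amplify this drift geometrically, so a naive bound would be exponential in $K$ and useless. I would therefore prove a tailored lemma showing that, under the specific couplings $\beta = \alpha + 1$ (\fedaci) or $\beta = (2\alpha^2-1)/(\alpha-1)$ (\fedacii) together with $\gamma = \max\{\sqrt{\eta/(\mu K)},\eta\}$, the per-step amplification of the deviation vector is only $1 + O(1/K)$, so that over a length-$K$ window the drift stays polynomially bounded, of the form $\mathcal{E}_t \le \operatorname{poly}(\gamma,K)\cdot\sigma^2$. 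This is exactly where the acceleration--stability tradeoff is hard-wired: a smaller $\gamma$ buys stability at the price of a slower contraction $\rho$, and the two couplings differ in the order of $K$-accumulation they tolerate --- \fedacii's coupling yields a tighter drift (hence the better $R^{-3}$ dependence) at the cost of an extra $L/\mu$ factor relative to \fedaci's $R^{-2}$.

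Finally I would assemble the pieces. Substituting the stability bound into the one-step inequality and unrolling over $t = 0,\dots,T-1$, the geometric weights $(1-\rho)^{T-t}$ produce (i) the decay term $(1-\rho)^T \Phi_0$, at most $\exp(\min\{-\mu T/L, -\sqrt{\mu T R/L}\})\,L D_0^2$ from the initialization, (ii) the statistical term $\frac{1}{\rho}\cdot A\frac{\sigma^2}{M} = \tildeo(\frac{\sigma^2}{\mu M T})$, and (iii) the communication term $\frac{1}{\rho}\cdot B L^2\mathcal{E} = \tildeo(\frac{L\sigma^2}{\mu^2 T R^2})$ for \fedaci and $\tildeo(\frac{L^2\sigma^2}{\mu^3 T R^3})$ for \fedacii. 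I would first establish this for an arbitrary $\eta \in (0,1/L]$ (the full versions \cref{fedaci:full,fedacii:a1:full}), and then optimize, setting $\eta = \min\{1/L, \tilde{\Theta}(1/(\mu T R))\}$ to balance the decay against the two error terms and obtain the stated rates; the $\min$ in the exponent simply records which branch of $\gamma$ is active at the optimal $\eta$.
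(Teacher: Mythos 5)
Your overall architecture (perturbed iterate analysis of the averaged recursion, a stability lemma taming the momentum-amplified drift over each length-$K$ window, the $\gamma=\max\{\sqrt{\eta/(\mu K)},\eta\}$ tradeoff, then tuning $\eta$) matches the paper, and for part (b) your plan is essentially the paper's proof: the centralized potential $\Phi_t=F(\overline{w_t^{\mathrm{ag}}})-F^*+\frac{\mu}{6}\|\overline{w_t}-w^*\|^2$ produces exactly the squared bias $\|\frac{1}{M}\sum_m\nabla F(w_t^{\mathrm{md},m})-\nabla F(\overline{w_t^{\mathrm{md}}})\|^2$ as the discrepancy overhead (\cref{fedacii:conv:main}), which under \cref{asm1} is bounded by $L^2\mathcal{E}_t$ just as you propose (\cref{fedacii:a1:stab:main}).

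The gap is in part (a). Routing the worker drift through the squared bias $\|b_t\|^2\le L^2\mathcal{E}_t$ and absorbing it against the strong-convexity slack contributes a term of order $\frac{1}{\mu}L^2\mathcal{E}$ with $\mathcal{E}\sim\eta^2K\sigma^2$, i.e.\ $\eta^2L^2K\sigma^2/\mu$, which after tuning is $\tildeo(L^2\sigma^2/(\mu^3TR^3))$ --- the rate of part (b), not the $\tildeo(L\sigma^2/(\mu^2TR^2))$ rate claimed in part (a). To get (a), the paper does something structurally different: it uses the \emph{decentralized} potential $\Psi_t=\frac{1}{M}\sum_mF(w_t^{\mathrm{ag},m})-F^*+\frac{\mu}{2}\|\overline{w_t}-w^*\|^2$, so the discrepancy enters as a \emph{product} of two first-order deviations, $\frac{1}{M}\sum_m\|\overline{w_t^{\mathrm{md}}}-w_t^{\mathrm{md},m}\|\cdot\|\cdots\|$ (\cref{fedaci:conv:main}), bounded by Cauchy--Schwarz with the two factors scaling as $\eta/\gamma$ and $O(1)$ under the transformation $\mathcal{X}$ (\cref{fedaci:stab:left1,fedaci:stab:left2}); this is first order rather than second order in the drift and yields $\eta\gamma LK\sigma^2\sim\eta^{3/2}LK^{1/2}\sigma^2\mu^{-1/2}$, whence $R^{-2}$ and a single factor of $L/\mu$. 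Relatedly, your attribution of the $R^{-2}$-versus-$R^{-3}$ difference to the two couplings tolerating different ``orders of $K$-accumulation'' is not right: both couplings give the same $1+O(\gamma^2\mu/\eta)=1+O(1/K)$ per-step amplification (\cref{fedaci:stab:bound,fedacii:stab:bound}); the difference comes from whether the discrepancy enters the convergence recursion linearly or quadratically, which in turn forces the choice of potential --- and note that the centralized recursion's inner products cancel only under \fedacii's coupling $\beta=(2\alpha^2-1)/(\alpha-1)$, so your single-Lyapunov plan would not close for \fedaci's $\beta=\alpha+1$. A further execution caveat: the $1+O(1/K)$ amplification holds only for the transformed norm $\|\mathcal{X}^{-1}\mathcal{A}\mathcal{X}\|$, not for the raw operator norm of the deviation dynamics, a point the paper flags explicitly.
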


In comparison, the state-of-the-art \fedavg analysis \citep{Khaled.Mishchenko.ea-AISTATS20,Woodworth.Patel.ea-ICML20} reveals the following result.\footnote{\cref{fedavg:a1} can be (easily) adapted from the Theorem 2 of \citep{Woodworth.Patel.ea-ICML20} which analyzes a decaying learning rate with convergence rate $\mathcal{O}\left( \frac{L^2D_0^2}{\mu T^2}  + \frac{\sigma^2}{\mu M T} \right) +  \tildeo \left( \frac{L \sigma^2}{\mu^2 T R} \right)$. This bound has no $\log$ factor attached to $\frac{\sigma^2}{\mu M T}$ term but worse (polynomial) dependency on initial state $D_0$ than \cref{fedavg:a1}. We present \cref{fedavg:a1} for consistency and the ease of comparison.}
\begin{proposition}[Convergence of \fedavg under \cref{asm1}, adapted from \citeauthor{Woodworth.Patel.ea-ICML20}]
    \label{fedavg:a1}
    In the settings of \cref{fedac:a1}, for $\eta = \min \{ \frac{1}{L}, \tilde{\Theta} ( \frac{1}{\mu T}  ) \}$, for appropriate non-negative $\{\rho_t\}_{t=0}^{T-1}$ with $\sum_{t=0}^{T-1} \rho_t = 1$, $\fedavg$ yields
        \begin{equation}
            \expt \left[ F \left( \sum_{t=0}^{T-1} \rho_t \overline{w_t} \right) - F^* \right]
            \leq \exp \left( - \frac{\mu T}{L} \right) L D_0^2 +
            \tildeo \left(\frac{\sigma^2}{\mu M T} + \frac{L \sigma^2}{\mu^2 T R} \right).
            \label{eq:fedavg:main}
        \end{equation}
\end{proposition}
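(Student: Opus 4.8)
The cleanest formal route is to adapt Theorem~2 of \citet{Woodworth.Patel.ea-ICML20} as the footnote indicates, but since the underlying mechanism is instructive, I would first outline a direct proof via the perturbed iterate framework of \citet{Mania.Pan.ea-SIOPT17}, and then describe the substitution that produces the exact stated form. The central object is the \emph{virtual averaged iterate} $\overline{w_t} := \frac{1}{M}\sum_{m=1}^{M} w_t^m$. Because averaging commutes with the local SGD update in \fedavg, the averaged sequence obeys $\overline{w_{t+1}} = \overline{w_t} - \eta \cdot \frac{1}{M}\sum_{m=1}^M \nabla f(w_t^m; \xi_t^m)$ at \emph{every} step, whether synchronized or not. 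Thus $\overline{w_t}$ behaves like a single run of SGD whose gradient noise has the variance-reduced magnitude $\sigma^2/M$ (by independence of $\xi_t^m$ across $m$), but whose gradients are queried at the \emph{drifted} local points $w_t^m$ rather than at $\overline{w_t}$ itself. The entire cost of infrequent synchronization is therefore isolated into a single \emph{consensus error} $\mathcal{E}_t := \frac{1}{M}\sum_{m=1}^M \expt\|w_t^m - \overline{w_t}\|^2$.

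First I would derive a one-step contraction for $a_t := \expt\|\overline{w_t} - w^*\|^2$. Expanding the square and taking expectations, the inner-product term is handled by $\mu$-strong convexity applied at $\overline{w_t}$, with the discrepancy $\frac{1}{M}\sum_m \langle \nabla F(w_t^m) - \nabla F(\overline{w_t}), \overline{w_t} - w^*\rangle$ controlled by $L$-smoothness and Young's inequality; the squared-gradient term contributes the variance-reduced $\eta^2 \sigma^2 / M$ together with a multiple of the excess risk that is absorbed once $\eta L \le 1$. This yields a recursion of the schematic form $a_{t+1} \le (1-\mu\eta)\,a_t - 2\eta\,\expt[F(\overline{w_t}) - F^*] + \eta^2\frac{\sigma^2}{M} + O(L)\cdot\mathcal{E}_t$, which I rearrange into a per-step upper bound on $\expt[F(\overline{w_t}) - F^*]$.

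The hard part will be the \emph{stability / drift bound} on $\mathcal{E}_t$, which is the technical heart of every \fedavg analysis and the origin of the $\frac{1}{R}$ communication term. Within a synchronization window of length $K = T/R$, all workers begin from a common iterate, so unrolling the local recursions gives $w_t^m - \overline{w_t} = -\eta\sum_{s}(g_s^m - \overline{g_s})$ summed over the at most $K$ steps since the last synchronization. Decomposing each $g_s^m$ into $\nabla F(w_s^m)$ plus zero-mean noise independent across both $m$ and $s$, the noise contribution has variance that \emph{adds} over the window, giving $\mathcal{E}_t \lesssim \eta^2 K \sigma^2$ once the gradient-difference part is shown to be lower order via $\eta L \le 1$ and the strong-convexity contraction (a short induction over the window). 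Securing the sharp linear-in-$K$ dependence here, rather than the weaker $K^2$, is precisely the improved stability estimate that distinguishes \citet{Khaled.Mishchenko.ea-AISTATS20,Stich.Karimireddy-arXiv19} from \citet{Stich-ICLR19}.

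Finally I would assemble the pieces. Substituting $\mathcal{E}_t \lesssim \eta^2 K\sigma^2$ and taking a weighted average of the per-step excess-risk bounds with geometrically increasing weights $\rho_t \propto (1-\mu\eta)^{-t}$ makes the initial contribution telescope into a factor decaying like $(1-\mu\eta)^T$, while Jensen's inequality moves the average inside $F$ to produce the $F(\sum_t \rho_t\overline{w_t})$ on the left-hand side. This leaves $\expt[F(\sum_t\rho_t\overline{w_t}) - F^*] \lesssim \mu D_0^2 (1-\mu\eta)^T + \frac{\eta\sigma^2}{M} + L\eta^2 K\sigma^2$. Choosing $\eta = \min\{\frac{1}{L}, \tilde{\Theta}(\frac{1}{\mu T})\}$, with the $\polylog$ factor inside $\tilde{\Theta}$ tuned so the first term is negligible, collapses the initial term into the clean (and, using $\mu \le L$, valid) form $\exp(-\mu T/L)\,L D_0^2$, while the remaining two terms become $\tildeo(\frac{\sigma^2}{\mu MT} + \frac{L\sigma^2}{\mu^2 TR})$ after substituting $K = T/R$. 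Equivalently, one may simply invoke Theorem~2 of \citet{Woodworth.Patel.ea-ICML20} and replace its decaying step size by this constant step size with geometric weighting, thereby trading the polynomial $\frac{L^2 D_0^2}{\mu T^2}$ initial term for the exponentially decaying one, exactly as the footnote to \cref{fedavg:a1} asserts.
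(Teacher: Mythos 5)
The paper never writes out a proof of \cref{fedavg:a1}: as its footnote states, the result is obtained by adapting Theorem~2 of \citet{Woodworth.Patel.ea-ICML20} from a decaying step size to a constant step size with geometric weighting, which is exactly the substitution you describe in your closing sentence. Your direct sketch also mirrors the architecture the paper does spell out for the analogous \cref{fedavg:a2:full} — a one-step perturbed-iterate recursion for $\|\overline{w_t}-w^*\|^2$, weights $\rho_t\propto(1-\tfrac{1}{2}\eta\mu)^{-t}$ with Jensen, a window-wise consensus bound $\mathcal{E}_t\lesssim\eta^2K\sigma^2$, and finally the choice of $\eta$ — so both the overall route and the final assembly match.

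One step of your sketch, taken literally, would not deliver the stated constant. If you control the gradient-discrepancy cross term $\frac{1}{M}\sum_m\langle\nabla F(w_t^m)-\nabla F(\overline{w_t}),\overline{w_t}-w^*\rangle$ by $L$-smoothness plus Young's inequality against the strong-convexity contraction, you pay a coefficient $\frac{L^2}{\mu}$ on $\mathcal{E}_t$ (this is precisely what the paper's own \cref{fedavg:a2:conv:onestep} does via \cref{helper:unbalanced:ineq}, with $Q$ in place of $L$), and after telescoping this yields $\tildeo\bigl(\frac{L^2\sigma^2}{\mu^3TR}\bigr)$ rather than the claimed $\tildeo\bigl(\frac{L\sigma^2}{\mu^2TR}\bigr)$ — an extra condition-number factor. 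The sharp coefficient $\eta L\,\mathcal{E}_t$ requires the three-point decomposition used by \citet{Khaled.Mishchenko.ea-AISTATS20,Woodworth.Patel.ea-ICML20}: write $\langle\nabla F(w_t^m),\overline{w_t}-w^*\rangle=\langle\nabla F(w_t^m),\overline{w_t}-w_t^m\rangle+\langle\nabla F(w_t^m),w_t^m-w^*\rangle$, lower-bound the first summand by $F(\overline{w_t})-F(w_t^m)-\frac{L}{2}\|\overline{w_t}-w_t^m\|^2$ (smoothness) and the second by $F(w_t^m)-F^*+\frac{\mu}{2}\|w_t^m-w^*\|^2$ (strong convexity), with no Young step at all. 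A smaller slip: your schematic recursion writes the drift penalty as $O(L)\cdot\mathcal{E}_t$ where it must be $O(\eta L)\cdot\mathcal{E}_t$ for your own assembled bound $L\eta^2K\sigma^2$ to follow; the remainder of the assembly, including the linear-in-$K$ drift bound and the replacement $K=T/R$, is correct.
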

\begin{remark}
The bound for \fedaci \eqref{eq:fedac:a1:1} \textbf{asymptotically universally outperforms} \fedavg \eqref{eq:fedavg:main}.
The first term in \eqref{eq:fedac:a1:1} cooresponds to the deterministic convergence, which is better than the one for \textsc{FedAvg}. 
The second term corresponds to the stochasticity of the problem which is not improvable. 
The third term corresponds to the overhead of infrequent communication, which is also better than \textsc{FedAvg} due to acceleration.
On the other hand, \fedacii has better communication efficiency since the third term of \eqref{eq:fedac:a1:2} decays at rate $R^{-3}$.
\end{remark}

\subsubsection{Convergence of \fedac under \cref{asm2} --- faster when close to be quadratic}
We establish stronger guarantees for \fedacii \eqref{fedacii} under \cref{asm2}.
\begin{theorem}[Simplified version of \cref{fedacii:a2:full}]
    \label{fedacii:a2}
    Let $F$ be $\mu > 0$-strongly convex, and assume \cref{asm2}, then 
    for $R \geq \sqrt{\frac{L}{\mu}}$,\footnote{The assumption $R \geq \sqrt{{L}/{\mu}}$ is removed in the full version (\cref{fedacii:a2:full}).} 
    for $\eta = \min \{ \frac{1}{L}, \tilde{\Theta} (  \frac{1}{\mu T R} ) \}$, \fedacii yields
        \begin{equation}
            \expt \left[  F( {\overline{w_{T}^{\mathrm{ag}}}})  - F^*  \right]
            \leq  \exp \left( \min \left\{ -\frac{\mu T}{3L}, - \sqrt{\frac{\mu T R}{9L}} \right\} \right)
            2L D_0^2 + \tildeo\left(\frac{\sigma^2}{\mu M T} 
            + \frac{Q^2 \sigma^4}{\mu^5 T^2 R^6}\right).
            \label{eq:fedacii:a2}
        \end{equation}
\end{theorem}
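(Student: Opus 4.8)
The plan is to follow the potential-based perturbed iterate framework that already underlies \cref{fedac:a1}(b), modifying only the single step where the inter-worker discrepancy is converted into a gradient error. First I would pass to the averaged sequences $\overline{w_t}$, $\overline{w_t^{\mathrm{ag}}}$, $\overline{w_t^{\mathrm{md}}}$. Since every operation in \cref{alg:fedac} is affine except the gradient query, these averages obey \emph{exactly} the single-worker \fedacii recursion driven by the averaged gradient $\overline{g_t} = \frac{1}{M}\sum_m \nabla f(w_t^{\mathrm{md},m};\xi_t^m)$. I would then decompose
\[
  \overline{g_t} = \nabla F(\overline{w_t^{\mathrm{md}}}) + \zeta_t + e_t,
\]
where $\zeta_t = \frac{1}{M}\sum_m[\nabla f(w_t^{\mathrm{md},m};\xi_t^m) - \nabla F(w_t^{\mathrm{md},m})]$ is zero-mean conditioned on the past with $\expt\|\zeta_t\|^2 \le \sigma^2/M$ (the variance reduction responsible for the $\frac{\sigma^2}{\mu MT}$ term), and $e_t = \frac{1}{M}\sum_m \nabla F(w_t^{\mathrm{md},m}) - \nabla F(\overline{w_t^{\mathrm{md}}})$ is the deterministic bias caused by infrequent synchronization.

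The only structural departure from the \cref{asm1} proof lies in bounding $e_t$. Under mere $L$-smoothness one can only use $\|e_t\| \lesssim \frac{L}{M}\sum_m \|w_t^{\mathrm{md},m}-\overline{w_t^{\mathrm{md}}}\|$, linear in the discrepancy, which yields the $R^{-3}$ overhead of \cref{fedac:a1}(b). Under \cref{asm2}(a) I would instead Taylor-expand each $\nabla F(w_t^{\mathrm{md},m})$ about $\overline{w_t^{\mathrm{md}}}$: the zeroth-order terms average to $\nabla F(\overline{w_t^{\mathrm{md}}})$, and \emph{crucially} the first-order terms $\nabla^2 F(\overline{w_t^{\mathrm{md}}})(w_t^{\mathrm{md},m}-\overline{w_t^{\mathrm{md}}})$ cancel upon averaging because $\frac{1}{M}\sum_m (w_t^{\mathrm{md},m}-\overline{w_t^{\mathrm{md}}})=0$. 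The leading surviving term is the Hessian remainder, giving $\|e_t\| \lesssim \frac{Q}{M}\sum_m \|w_t^{\mathrm{md},m}-\overline{w_t^{\mathrm{md}}}\|^2$, i.e.\ \emph{quadratic} in the discrepancy. It is exactly this quadratic (rather than linear) dependence, together with the $Q$ factor, that upgrades the overhead from $\frac{L^2\sigma^2}{\mu^3 TR^3}$ to $\frac{Q^2\sigma^4}{\mu^5 T^2 R^6}$: squaring the discrepancy roughly doubles the effective negative power of $R$ and replaces $\sigma^2$ by $\sigma^4$.

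Next I would carry out the stability analysis controlling the discrepancy. Between two synchronizations the local iterates are driven apart only by the $M$ independent noise streams, so I would bound $\expt\frac{1}{M}\sum_m\|w_t^{\mathrm{md},m}-\overline{w_t^{\mathrm{md}}}\|^2$ (and the analogous quantities for the $w$- and $\mathrm{ag}$-sequences) by propagating the per-step noise over at most $K=T/R$ local steps through the coupled \fedacii recursion. Because $e_t$ now enters quadratically, I must further control a fourth moment of the discrepancy, i.e.\ $\expt\bigl[(\frac{1}{M}\sum_m\|w_t^{\mathrm{md},m}-\overline{w_t^{\mathrm{md}}}\|^2)^2\bigr]$; this is precisely why \cref{asm2}(b) posits a bounded fourth central moment, and why $\sigma^4$ appears. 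The acceleration-stability tradeoff baked into the choice $\gamma = \max\{\sqrt{\eta/(\mu K)},\eta\}$ of \eqref{fedacii} is what prevents this moment recursion from growing geometrically --- the danger exhibited by \cref{instability:sketch} for naive momentum. Finally I would insert the decomposition into the Lyapunov/potential contraction for generalized accelerated SGD: the contraction produces the $\exp(\cdot)\,2LD_0^2$ term, the martingale term $\zeta_t$ produces $\frac{\sigma^2}{\mu MT}$, and the bias $e_t$, after summing the per-step contributions weighted by the potential and simplifying under $R \ge \sqrt{L/\mu}$ (which selects a single branch of the exponent and of $\gamma$), produces $\tildeo(\frac{Q^2\sigma^4}{\mu^5 T^2 R^6})$.

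The main obstacle I anticipate is the fourth-moment stability estimate for the accelerated recursion. Unlike plain SGD, the momentum coupling between $w_t^m$, $w_t^{\mathrm{md},m}$, and $w_t^{\mathrm{ag},m}$ can amplify perturbations, so establishing a recursion for the squared-then-squared discrepancy that is non-expansive up to the injected noise requires carefully exploiting the specific \fedacii parameter relations rather than generic smoothness, and tracking how the Hessian-remainder bias feeds back into the iterate drift without creating a circular dependence between the discrepancy bound and the convergence bound.
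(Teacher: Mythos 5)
Your proposal matches the paper's own proof in all essentials: the perturbed-iterate/potential contraction on the averaged \fedacii sequence, the decomposition of the averaged gradient into the true gradient, a variance-reduced martingale term, and a synchronization bias; the key observation that under \cref{asm2}(a) the first-order Taylor terms cancel upon averaging so the bias is quadratic in the discrepancy with a factor $Q$ (the paper's \cref{helper:3rd:Lip}); the resulting need for a fourth-moment stability bound on the local discrepancies controlled via the $\gamma=\max\{\sqrt{\eta/(\mu K)},\eta\}$ tradeoff (the paper's \cref{fedacii:stab:main,fedacii:stab:2}). This is essentially the same route as \cref{fedacii:conv:main,fedacii:stab:main} in \cref{sec:fedacii}, so no further comparison is needed.
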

In comparison, we also establish and prove the convergence rate of \fedavg under \cref{asm2}. 
\begin{theorem}[Simplified version of \cref{fedavg:a2:full}]
    \label{fedavg:a2}
    In the settings of \cref{fedacii:a2}, for $\eta = \min \left\{ \frac{1}{4L}, \tilde{\Theta}\left( \frac{1}{\mu T} \right)\right\}$, for appropriate non-negative $\{\rho_t\}_{t=0}^{T-1}$ with $\sum_{t=0}^{T-1} \rho_t = 1$, \fedavg yields
    \begin{small}
    \begin{equation}
        \expt \left[ F \left( \sum_{t=0}^{T-1} {\rho_t} \overline{w_t} \right) - F^* \right] \leq \exp \left( - \frac{\mu T}{8L} \right) 4L D_0^2 + \tildeo \left( \frac{\sigma^2}{\mu M T} + \frac{Q^2 \sigma^4}{\mu^5 T^2 R^2} \right).
        \label{eq:fedavg:a2} 
    \end{equation}
    \end{small}
\end{theorem}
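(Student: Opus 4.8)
The plan is to run a potential-based perturbed iterate analysis on the worker-averaged iterate $\overline{w_t}=\frac1M\sum_{m=1}^M w_t^m$, following the same skeleton as the \fedavg argument behind \cref{fedavg:a1} but replacing the $L$-smoothness control of the consensus error with a sharper bound exploiting \nth{3}-order smoothness. Because averaging commutes with the local SGD step, $\overline{w_t}$ obeys, at \emph{every} step, $\overline{w_{t+1}}=\overline{w_t}-\eta(\nabla F(\overline{w_t})+b_t+\zeta_t)$, where $b_t:=\frac1M\sum_m\nabla F(w_t^m)-\nabla F(\overline{w_t})$ is the consensus bias and $\zeta_t:=\frac1M\sum_m(\nabla f(w_t^m;\xi_t^m)-\nabla F(w_t^m))$ is the averaged noise with $\expt\|\zeta_t\|^2\le\sigma^2/M$. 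First I would derive the standard one-step inequality for the potential $\|\overline{w_t}-w^*\|^2$: expanding the update and taking conditional expectations, $\mu$-strong convexity produces the contraction factor and peels off $-2\eta\expt(F(\overline{w_t})-F^*)$, the term $\eta^2\|\nabla F(\overline{w_t})\|^2$ is reabsorbed through $L$-smoothness (forcing $\eta\le\frac1{4L}$), the noise contributes $\eta^2\sigma^2/M$, and the bias cross term $-2\eta\langle b_t,\overline{w_t}-w^*\rangle$ is split by Young's inequality, leaving $\mathcal{O}(\frac\eta\mu)\expt\|b_t\|^2$ once the companion multiple of $\|\overline{w_t}-w^*\|^2$ is absorbed into the contraction.

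The crucial step, and the entire source of the $R^{-2}$ (rather than $R^{-1}$) rate, is the bias bound. Under \cref{asm1} one can only write $\|b_t\|\le\frac{L}{M}\sum_m\|w_t^m-\overline{w_t}\|$, which is \emph{first order} in the drift. Under \cref{asm2} I would instead Taylor-expand each local gradient about the average, $\nabla F(w_t^m)=\nabla F(\overline{w_t})+\nabla^2 F(\overline{w_t})(w_t^m-\overline{w_t})+r_t^m$ with $\|r_t^m\|\le\frac Q2\|w_t^m-\overline{w_t}\|^2$ (the gradient-remainder form of \nth{3}-order smoothness / Hessian-Lipschitzness). Averaging \emph{annihilates the Hessian term}, since $\frac1M\sum_m(w_t^m-\overline{w_t})=0$, leaving $\|b_t\|\le\frac{Q}{2M}\sum_m\|w_t^m-\overline{w_t}\|^2$, now \emph{quadratic} in the drift. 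Consequently $\expt\|b_t\|^2\le\frac{Q^2}{4M}\sum_m\expt\|w_t^m-\overline{w_t}\|^4$, so the accuracy is governed by the \emph{fourth} moment of the consensus error rather than its second moment.

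Next I would control the drift moments. Between two synchronizations each worker takes at most $K=T/R$ local steps, and $w_t^m-\overline{w_t}$ is driven predominantly by the accumulated idiosyncratic noise of worker $m$ relative to the average, a sum of at most $K$ increments each of order $\eta\sigma$; a Rosenthal/Burkholder-type moment inequality together with the bounded \nth{4} central moment of \cref{asm2}(b) then gives $\frac1M\sum_m\expt\|w_t^m-\overline{w_t}\|^4=\mathcal{O}(\eta^4\sigma^4 K^2)$, with the deterministic gradient-gap contributions to the drift staying lower order for small $\eta$ and absorbable. Feeding this back, the per-step bias penalty $\frac\eta\mu\expt\|b_t\|^2$ becomes $\frac\eta\mu\cdot\mathcal{O}(Q^2\eta^4\sigma^4K^2)$; after dividing the distance recursion by $2\eta$ and averaging it contributes $\frac1{2\mu}\expt\|b_t\|^2=\mathcal{O}(\frac{Q^2\eta^4\sigma^4K^2}{\mu})$ to the excess loss. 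Telescoping the contraction with geometric weights $\{\rho_t\}$, applying Jensen's inequality $F(\sum_t\rho_t\overline{w_t})\le\sum_t\rho_t F(\overline{w_t})$, and choosing $\eta=\min\{\frac1{4L},\tilde{\Theta}(\frac1{\mu T})\}$ then assembles the three advertised terms: the exponential decay $\exp(-\mu T/(8L))\,4LD_0^2$ from the deterministic contraction, $\tildeo(\sigma^2/(\mu MT))$ from $\eta\sigma^2/M$, and the communication term $\tildeo(Q^2\sigma^4/(\mu^5 T^2R^2))$, since $\frac{Q^2\eta^4\sigma^4K^2}{\mu}$ with $\eta\sim\frac{\log}{\mu T}$ and $K=T/R$ scales exactly as $\frac{Q^2\sigma^4}{\mu^5 T^2R^2}$.

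I expect the main obstacle to be the fourth-moment drift estimate $\frac1M\sum_m\expt\|w_t^m-\overline{w_t}\|^4=\mathcal{O}(\eta^4\sigma^4K^2)$: the local trajectories are coupled through their own gradient dynamics, so the per-step increments are not exactly i.i.d.\ martingale differences, and one must verify both that the deterministic gradient-gap part of the drift remains lower order and that passing to the fourth moment does not accrue an extra factor of $K$. A secondary delicate point is calibrating the Young's-inequality split and the $\eta\le\frac1{4L}$ budget so that it is the \emph{quadratic}-in-drift bias, not the generic first-order bias, that ultimately drives the rate --- this is exactly what separates \eqref{eq:fedavg:a2} from the $R^{-1}$ bound of \cref{fedavg:a1}.
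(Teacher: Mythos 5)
Your proposal matches the paper's proof essentially step for step: the averaged-iterate recursion with the bias term controlled via the Taylor expansion whose Hessian contribution vanishes by mean-zero of the drift (the paper's \cref{helper:3rd:Lip}), the Young's split with $\zeta=\tfrac12\eta\mu$, the fourth-moment drift bound $\mathcal{O}(\eta^4K^2\sigma^4)$ (\cref{fedavg:a2:stab:main}), and the weighted telescoping with Jensen. The one obstacle you flag --- that the local trajectories are coupled through the gradient dynamics --- is resolved in the paper exactly as you'd hope: by convexity the pairwise gradient difference satisfies $\langle\Delta_t,\Delta_t^{\nabla}\rangle\geq 0$, so for $\eta\leq\tfrac{1}{4L}$ that term is non-positive in the recursion for $\sqrt{\expt\|\Delta_t\|^4}$, which then grows only by the additive noise increment $\mathcal{O}(\eta^2\sigma^2)$ per local step with no extra factor of $K$.
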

\begin{remark}
Our results give a smooth interpolation of the results of \citep{Woodworth.Patel.ea-ICML20} for quadratic objectives to broader function class --- the third term regarding infrequent communication overhead will vanish when the objective is quadratic since $Q = 0$.
The bound of \fedac \eqref{eq:fedacii:a2} outperforms the bound of \fedavg \eqref{eq:fedavg:a2} as long as $R \geq \sqrt{L/\mu}$ holds. 
Particularly in the case of $T \geq M$, our analysis suggests that only $\tildeo(1)$ synchronization are required for linear speedup in $M$.
We summarize our results on synchronization bounds and convergence rate in \cref{tab:sync:bound,tab:conv:rate}, respectively.
\end{remark}

\subsection{Convergence for general convex objectives}
We also study the convergence of \fedac for general convex objectives ($\mu = 0$). 
The idea is to apply \fedac to $\ell_2$-augmented objective $\tilde{F}_{\lambda}(w) := F(w) + \frac{\lambda}{2}\|w - w_0\|^2$ as a $\lambda$-strongly-convex and $(L + \lambda)$-smooth objective for appropriate $\lambda$, which is similar to the technique of \citep{Woodworth.Patel.ea-ICML20}. 
This augmented technique allows us to reuse most of the analysis for strongly-convex objectives. 
We conjecture that it is possible to construct direct versions of \fedac for general convex objectives that attain the same rates, which we defer for the future work.
We summarize the synchronization bounds in \cref{tab:sync:bound} and the convergence rates in \cref{tab:conv:rate}. We defer the statement of formal theorems to \cref{sec:gcvx} in Appendix.

\section{Proof sketch}
In this section we sketch the proof for two of our main results, namely \cref{fedac:a1}(a) and \ref{fedacii:a2}.
We focus on the proof of \cref{fedac:a1}(a) to outline our proof framework, and then illustrate the difference in the proof of \cref{fedacii:a2}.
\subsection{Proof sketch of \cref{fedac:a1}(a): \fedaci under \cref{asm1} }
\label{sec:proof:sketch:1}
Our proof framework consists of the following four steps.

\paragraph{Step 1: potential-based perturbed iterate analysis.} 
The first step is to study the difference between \fedac and its fully synchronized idealization, namely the case of $K=1$ (recall $K$ denotes the number of local steps).
To this end, we extend the perturbed iterate analysis \citep{Mania.Pan.ea-SIOPT17} to potential-based setting to analyze accelerated convergence.
For \fedaci, we study the \emph{decentralized} potential $\Psi_t := \frac{1}{M} \sum_{m=1}^M F( {w_{t}^{\mathrm{ag}, m}})  - F^* + \frac{1}{2}\mu \|\overline{w_{t}} - w^*\|^2$ and establish the following lemma.
$\Psi_t$ is adapted from the common potential for acceleration analysis \citep{Bansal.Gupta-19}.
\begin{lemma}[Simplified version of \cref{fedaci:conv:main}, Potential-based perturbed iterate analysis for \fedaci]
  \label{fedaci:conv:sketch}
  In the same settings of \cref{fedac:a1}(a), the following inequality holds
    \begin{align}
       \expt \left[\Psi_T \right]    &      \leq
      \exp \left( - \gamma \mu T \right)  \Psi_0 + \frac{\eta^2 L \sigma^2}{2\gamma \mu} + \frac{\gamma\sigma^2}{2M} \tag{\text{Convergence rate in the case of $K=1$}}
      \\
      &+ \underbrace{ L \cdot \max_{0 \leq t < T}  \expt
      \left[\frac{1}{M} \sum_{m=1}^M \left\| \overline{w_t^{\mathrm{md}}} - w_t^{\mathrm{md}, m}  \right\|
      \left\| \frac{1}{1 + \gamma \mu} (\overline{w_t} - w_t^m) + \frac{\gamma \mu}{1 + \gamma \mu}   (\overline{w_t^{\mathrm{ag}}} - w_t^{\mathrm{ag},m})  \right\|  \right]}_{\text{Discrepancy overhead}}.
      \label{eq:fedaci:conv:sketch}
    \end{align}
\end{lemma}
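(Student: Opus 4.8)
The plan is to derive a one-step recursion of the form $\expt[\Psi_{t+1}\mid\mathcal{F}_t]\le(1-\gamma\mu)\Psi_t+(\text{noise})+(\text{discrepancy}_t)$ and then telescope. The starting observation is that, because averaging is linear and is preserved by the synchronization step (both when we sync and when we take a local step one has $\overline{w_{t+1}}=\overline{v_{t+1}}$ and $\overline{w_{t+1}^{\mathrm{ag}}}=\overline{v_{t+1}^{\mathrm{ag}}}$), the averaged sequences $\overline{w_t},\overline{w_t^{\mathrm{ag}}},\overline{w_t^{\mathrm{md}}}$ obey \emph{exactly} the generalized accelerated SGD recursion driven by the single averaged gradient $\overline{g_t}=\frac1M\sum_m\nabla f(w_t^{\mathrm{md},m};\xi_t^m)$. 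The only way local steps enter is that $\overline{g_t}$ queries worker $m$ at its own middle point $w_t^{\mathrm{md},m}$ rather than at the common point $\overline{w_t^{\mathrm{md}}}$; this mismatch is precisely what the discrepancy overhead will capture.

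Next I would run the standard accelerated-SGD potential argument on the averaged dynamics, but keep the function value \emph{decentralized}. For the $F$-part I use $L$-smoothness on $v_{t+1}^{\mathrm{ag},m}=w_t^{\mathrm{md},m}-\eta g_t^m$ together with $\eta\le 1/L$, worker by worker, and then convexity/Jensen to pass from $\frac1M\sum_m F(v_{t+1}^{\mathrm{ag},m})$ to $\frac1M\sum_m F(w_{t+1}^{\mathrm{ag},m})$ --- this handles synchronization for free, since averaging can only decrease the averaged function value. For the distance part I expand $\|\overline{w_{t+1}}-w^*\|^2$ using $\overline{w_{t+1}}=\overline{z_t}-\gamma\overline{g_t}$ with $\overline{z_t}:=(1-\alpha^{-1})\overline{w_t}+\alpha^{-1}\overline{w_t^{\mathrm{md}}}$. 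Combining the two, invoking $\mu$-strong convexity at each $w_t^{\mathrm{md},m}$ with the coupling $w_t^{\mathrm{md},m}=\beta^{-1}w_t^m+(1-\beta^{-1})w_t^{\mathrm{ag},m}$, and plugging in the \fedaci choices $\alpha=1/(\gamma\mu),\ \beta=\alpha+1$ is what produces the contraction factor $(1-\gamma\mu)$; this bookkeeping is essentially Ghadimi--Lan / Bansal--Gupta acceleration algebra.

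The crux is isolating the discrepancy. After conditioning on $\mathcal{F}_t$, the only cross-worker coupling comes from the distance term $-\mu\gamma\langle\overline{g_t},\overline{z_t}-w^*\rangle$, whose mean is $-\mu\gamma\langle\frac1M\sum_m\nabla F(w_t^{\mathrm{md},m}),\overline{z_t}-w^*\rangle$. I would split $\overline{z_t}-w^*=(\overline{z_t}-z_t^m)+(z_t^m-w^*)$ with $z_t^m:=(1-\alpha^{-1})w_t^m+\alpha^{-1}w_t^{\mathrm{md},m}$: the $z_t^m-w^*$ piece recombines into the idealized per-worker contraction, while the $\overline{z_t}-z_t^m$ piece is the overhead. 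Since $\sum_m(\overline{z_t}-z_t^m)=0$, I may subtract the constant $\nabla F(\overline{w_t^{\mathrm{md}}})$ inside the gradient, and then Cauchy--Schwarz plus $L$-smoothness yields $\mu\gamma L\cdot\frac1M\sum_m\|\overline{w_t^{\mathrm{md}}}-w_t^{\mathrm{md},m}\|\,\|\overline{z_t}-z_t^m\|$. A direct computation with $\alpha^{-1}=\gamma\mu$ and $\beta^{-1}=\gamma\mu/(1+\gamma\mu)$ shows $\overline{z_t}-z_t^m=\frac{1}{1+\gamma\mu}(\overline{w_t}-w_t^m)+\frac{\gamma\mu}{1+\gamma\mu}(\overline{w_t^{\mathrm{ag}}}-w_t^{\mathrm{ag},m})$, which is exactly the second factor in the stated overhead.

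Finally, the noise terms arise from the variance $\expt\|g_t^m-\nabla F(w_t^{\mathrm{md},m})\|^2\le\sigma^2$ in the $F$-part (contributing $\tfrac{L\eta^2}{2}\sigma^2$ per step) and from $\expt\|\overline{g_t}-\expt[\overline{g_t}\mid\mathcal{F}_t]\|^2\le\sigma^2/M$, by independence across workers, in the distance part (contributing $\tfrac{\mu\gamma^2}{2}\cdot\tfrac{\sigma^2}{M}$ per step). Telescoping $\expt[\Psi_{t+1}\mid\mathcal{F}_t]\le(1-\gamma\mu)\Psi_t+\cdots$ over $t$, using $(1-\gamma\mu)^T\le\exp(-\gamma\mu T)$ and $\sum_{j\ge 0}(1-\gamma\mu)^j\le 1/(\gamma\mu)$, converts these into $\tfrac{\eta^2 L\sigma^2}{2\gamma\mu}$ and $\tfrac{\gamma\sigma^2}{2M}$; the \emph{same} $1/(\gamma\mu)$ factor cancels the $\mu\gamma$ prefactor of the discrepancy, and bounding the geometric-weighted sum by the max leaves $L\cdot\max_{0\le t<T}\expt[\cdots]$. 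I expect the \textbf{main obstacle} to be the one-step accelerated potential algebra: verifying that the $z_t^m-w^*$ piece, the $w^{\mathrm{md}}$-coupling, and the $F$-descent assemble into a clean $(1-\gamma\mu)$ contraction under the specific \fedaci hyperparameters (so that the $\|\nabla F(w_t^{\mathrm{md},m})\|^2$ and $\frac{\mu}{2}\|w_t^{\mathrm{md},m}-w^*\|^2$ terms cancel as required), and confirming that the cross term really collapses to the single weighted combination above rather than a messier residual.
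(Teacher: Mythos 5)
Your proposal matches the paper's own proof essentially step for step: the paper likewise runs the one-step potential recursion $\expt[\Psi_{t+1}\mid\mathcal{F}_t]\le(1-\gamma\mu)\Psi_t+\cdots$ by combining a per-worker $L$-smoothness descent on the decentralized function values (with Jensen absorbing the synchronization step) and an expansion of $\|\overline{w_{t+1}}-w^*\|^2$, isolates the cross-worker coupling in exactly the inner product you identify, subtracts $\nabla F(\overline{w_t^{\mathrm{md}}})$ using the zero-sum of deviations before applying Cauchy--Schwarz and smoothness, and verifies the same cancellations (the $\langle\nabla F(w_t^{\mathrm{md},m}),z_t^m-w^*\rangle$ terms, the $\|\overline{w_t^{\mathrm{md}}}-w^*\|^2$ terms, and the nonpositive coefficient $\tfrac12(\gamma^2\mu-\eta)$ on the gradient-norm term) before telescoping. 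The "main obstacle" you flag is handled in the paper by exactly the bookkeeping you describe, with the identity $\overline{z_t}-z_t^m=\tfrac{1}{1+\gamma\mu}(\overline{w_t}-w_t^m)+\tfrac{\gamma\mu}{1+\gamma\mu}(\overline{w_t^{\mathrm{ag}}}-w_t^{\mathrm{ag},m})$ computed just as you do.
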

We refer to the last term of \eqref{eq:fedaci:conv:sketch} as ``discrepancy overhead'' since it characterizes the dissimilarities among workers due to infrequent synchronization. The proof of \cref{fedaci:conv:sketch} is deferred to \cref{sec:fedaci:conv:main}.

\paragraph{Step 2: bounding discrepancy overhead.}
The second step is to bound the discrepancy overhead in \eqref{eq:fedaci:conv:sketch} via stability analysis.
Before we look into \fedac, let us first review the intuition for \fedavg. 
There are two forces governing the growth of discrepancy of \fedavg, namely the (negative) gradient and stochasticity. 
Thanks to the convexity, the gradient only makes the discrepancy lower.
The stochasticity incurs $\mathcal{O}(\eta^2 \sigma^2)$ variance per step, so the discrepancy 
$\expt [\frac{1}{M} \sum_{m=1}^M \|\overline{w_t} - w_t^m\|^2 ]$ 
grows at rate $\mathcal{O}(\eta^2 K \sigma^2)$ linear in $K$. 
The detailed proof can be found in \citep{Khaled.Mishchenko.ea-AISTATS20,Woodworth.Patel.ea-ICML20}.

For \fedac, the discrepancy analysis is subtler since acceleration and stability are at odds --- the momentum may amplify the discrepancy accumulated from previous steps.
Indeed, we establish the following \cref{instability:sketch}, which shows that the \emph{standard deterministic} Accelerated GD (\agd) may \emph{not} be initial-value stable even for strongly convex and smooth objectives, in the sense that initial infinitesimal difference may grow exponentially fast. 
We defer the formal setup and the proof of \cref{instability:sketch} to \cref{sec:instability} in Appendix.
\begin{theorem}[Initial-value instability of deterministic standard \agd]
  \label{instability:sketch}
  For any $L, \mu > 0$ such that $\nicefrac{L}{\mu} \geq 25$, and for any $K \geq 1$, there exists a 1D objective $F$ that is $L$-smooth and $\mu$-strongly-convex, and an $\varepsilon_0 > 0$, such that for any positive $\varepsilon < \varepsilon_0$, there exists initialization $w_0, u_0, w_0^{\mathrm{ag}}, u_0^{\mathrm{ag}}$ such that $|w_0 - u_0| \leq \varepsilon$, $|w_0^{\mathrm{ag}} - u_0^{\mathrm{ag}}| \leq \varepsilon$, but
  the trajectories $\{w_t^{\mathrm{ag}}, w_t^{\mathrm{md}}, w_t\}_{t=0}^{3K}$, $\{u_t^{\mathrm{ag}}, u_t^{\mathrm{md}}, u_t\}_{t=0}^{3K}$ generated by applying deterministic \agd with initialization $(w_0, w_0^{\mathrm{ag}})$ and $(u_0, u_0^{\mathrm{ag}})$ satisfies
  \begin{equation}
      |w_{3K} - u_{3K}| \geq \frac{1}{2} \varepsilon (1.02)^K, 
      \qquad
      |w^{\mathrm{ag}}_{3K} - u^{\mathrm{ag}}_{3K}| \geq \varepsilon (1.02)^K.
  \end{equation}
\end{theorem}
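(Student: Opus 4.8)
The plan is to construct an explicit one-dimensional objective on which standard deterministic \agd behaves like a linear dynamical system with a spectral radius strictly greater than $1$, so that the initial-value gap is amplified geometrically. The cleanest route is to work on a \emph{quadratic} (or piecewise-quadratic) $F$, since there the three \agd sequences $(w_t, w_t^{\mathrm{md}}, w_t^{\mathrm{ag}})$ evolve by an affine recurrence, and the difference of two trajectories $(\delta w_t, \delta w_t^{\mathrm{ag}}) := (w_t - u_t, w_t^{\mathrm{ag}} - u_t^{\mathrm{ag}})$ obeys an \emph{exactly linear} recurrence $\begin{pmatrix}\delta w_{t+1}\\ \delta w_{t+1}^{\mathrm{ag}}\end{pmatrix} = A \begin{pmatrix}\delta w_{t}\\ \delta w_{t}^{\mathrm{ag}}\end{pmatrix}$, where the $2\times 2$ transition matrix $A$ is determined by the \agd coupling coefficients and by $F''$. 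The whole statement then reduces to exhibiting, for every $L/\mu \geq 25$ and every $K \geq 1$, a curvature value $F'' \in [\mu, L]$ for which $A$ has a real or complex eigenvalue of modulus at least $(1.02)^{1/3}$, so that after $3K$ steps the gap is magnified by at least $(1.02)^K$; tracking the constants $\tfrac12$ and $1$ in the two bounds is then bookkeeping on the eigenvector direction.

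First I would fix the standard (two-parameter Nesterov) instantiation of the \agd coupling in \cref{alg:fedac}, i.e. the particular $\alpha,\beta,\eta,\gamma$ corresponding to textbook strongly-convex \agd, and write out the one-step update of $(w_t, w_t^{\mathrm{ag}})$ on a quadratic $F(w) = \tfrac12 h\, w^2$ (with $h$ to be chosen in $[\mu, L]$). Eliminating $w_t^{\mathrm{md}}$ via the coupling $w_t^{\mathrm{md}} = \beta^{-1} w_t + (1-\beta^{-1}) w_t^{\mathrm{ag}}$ and substituting $g_t = h\, w_t^{\mathrm{md}}$ yields the explicit $2\times2$ matrix $A = A(h)$. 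Because \agd is translation-covariant on a quadratic and linear in the iterates, the difference trajectory satisfies the \emph{same} recurrence with no inhomogeneous term, so $(\delta w_t, \delta w_t^{\mathrm{ag}})^\top = A^t (\delta w_0, \delta w_0^{\mathrm{ag}})^\top$. Next I would compute the characteristic polynomial of $A(h)$ and analyze its spectral radius as $h$ ranges over $[\mu, L]$. The acceleration momentum enters $A$ with a coefficient close to $1$ (the Nesterov momentum $\frac{\sqrt{L/\mu}-1}{\sqrt{L/\mu}+1}$), and when the local curvature $h$ is mismatched against the value $\mu$ used to set the momentum — this is exactly what infrequent synchronization induces — the eigenvalues of $A$ leave the unit disk. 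The condition $L/\mu \geq 25$ is there to guarantee enough room to pick such an $h$ and to make the momentum large enough that $\rho(A(h)) \geq (1.02)^{1/3}$.

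With the eigenvalue bound in hand, I would choose the initial difference vector $(\delta w_0, \delta w_0^{\mathrm{ag}})$ to align with the dominant eigenvector of $A$ (or, if the dominant eigenvalue is complex, to have a nonvanishing projection onto the corresponding invariant plane), scaled so that $|\delta w_0| \leq \varepsilon$ and $|\delta w_0^{\mathrm{ag}}| \leq \varepsilon$; then $A^{3K}$ applied to this vector has norm at least $\rho(A)^{3K}\,|\!|(\delta w_0,\delta w_0^{\mathrm{ag}})|\!| \cdot c$ for a dimension-free constant $c$ coming from the eigenvector conditioning, and I would read off the per-coordinate lower bounds $\tfrac12\varepsilon(1.02)^K$ and $\varepsilon(1.02)^K$. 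The quantity $\varepsilon_0$ is only needed if I use a \emph{piecewise}-quadratic $F$ (to keep the trajectories inside a single quadratic piece of known curvature $h$ for all $3K$ steps); on a pure global quadratic one can take $\varepsilon_0 = \infty$, but a global quadratic is only weakly convex-to-smooth-ratio controlled, so restricting to a region via piecewise construction is the safe choice and $\varepsilon_0$ bounds how far the iterates travel.

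\textbf{The main obstacle} I anticipate is not the linear-algebra amplification itself but the \emph{uniformity in $K$}: the statement demands a single curvature $h$ (equivalently a single objective $F$) whose transition matrix has $\rho(A(h)) \geq (1.02)^{1/3}$ for \emph{every} $K \geq 1$ simultaneously, and whose iterates stay within the chosen quadratic piece over the full horizon $3K$ for all sufficiently small $\varepsilon$. Since larger $K$ lets the geometric blow-up $\rho(A)^{3K}$ push the iterates farther, the piecewise region must be sized (and $\varepsilon_0$ chosen as a function of $K$ and the region width) so that this does not escape the intended curvature band — reconciling ``one fixed $F$'' with ``all $K$'' while keeping the explicit constants $\tfrac12$, $1$, and $1.02$ honest is the delicate part of the argument.
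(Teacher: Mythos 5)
There is a genuine gap, and it sits at the heart of your plan. You propose to find a single curvature value $h\in[\mu,L]$ (a fixed quadratic, or a quadratic piece the whole trajectory stays inside) whose one-step transition matrix $A(h)$ has spectral radius at least $(1.02)^{1/3}>1$. No such $h$ exists. For any fixed $h\in[\mu,L]$, the standard potential argument for \agd (applied to the quadratic $F(w)=\tfrac12 h w^2$) gives $F(w_t^{\mathrm{ag}})-F^*+\tfrac{\mu}{2}\|w_t-w^*\|^2\leq(1-\kappa^{-1/2})^t\bigl[F(w_0^{\mathrm{ag}})-F^*+\tfrac{\mu}{2}\|w_0-w^*\|^2\bigr]$, and since this potential is a positive-definite quadratic form in $(w_t^{\mathrm{ag}},w_t)$, it forces $A(h)^t\to 0$ and hence $\rho(A(h))<1$. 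You can also verify this directly from the matrix in \cref{instability:2}: at $h=\mu$ the matrix is upper triangular with diagonal $1-\kappa^{-1/2}$, at $h=L$ it is nilpotent, and in between the determinant is $(1-h/L)\frac{\sqrt{\kappa}-1}{\sqrt{\kappa}+1}<1$ with the larger eigenvalue still inside the unit disk. So the step ``exhibit a curvature with $\rho(A(h))\geq(1.02)^{1/3}$'' fails, and with it the rest of the argument; the obstacle you flagged (uniformity in $K$ and keeping iterates inside one quadratic piece) is not the real difficulty.

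The instability is fundamentally a \emph{time-varying-curvature} phenomenon: a product of individually Schur-stable matrices can have spectral radius exceeding one. The paper's proof (\cref{instability:full}) constructs a 1D objective whose second derivative, as seen at the gradient-query points $w_t^{\mathrm{md}}$ along the trajectory, equals $L$ on every third step and $\mu$ otherwise (\cref{instability:1}); the three-step product of the corresponding transition matrices equals $-2(1-\kappa^{-1/2})^3$ times an idempotent matrix, and $2(1-\kappa^{-1/2})^3>1.02$ once $\kappa\geq 25$. Making this work requires the delicate part you did not anticipate: an inductive construction of $F$ that plants an $L$-curvature bump in a small neighborhood of each $w_{3k+1}^{\mathrm{md}}$ without disturbing the earlier iterates, plus a $\delta$-neighborhood guarantee so that both perturbed trajectories see the same curvature at every step (this is what $\varepsilon_0$ is really for). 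If you want to salvage your linear-algebra framing, replace ``spectral radius of one $A(h)$'' with ``spectral radius of the product $A(\mu)A(L)A(\mu)$'' and then supply the function-construction lemma that realizes this alternating curvature along an actual \agd trajectory.
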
 
\begin{remark}
  It is worth mentioning that the instability theorem \textbf{does not contradicts the convergence} of \agd \citep{Nesterov-18}. The convergence of \agd suggests that $w_{t}^{\mathrm{ag}}$, $w_t$, $u_t^{\mathrm{ag}}$, and $u_t$ will all converge to the same point $w^*$ as $t \to \infty$, which implies $\lim_{t \to \infty} \|w_t^{\mathrm{ag}} - u_t^{\mathrm{ag}}\| = \|w_t - u_t\| = 0$. However, the convergence theorem does not imply the stability with respect to the initialization --- it does not exclude the possibility that the difference between two instances (possibly with very close initialization) first expand and only shrink until they both approach $w^*$. Our \cref{instability:sketch} suggests this possibility: for any finite steps, no matter how small the (positive) initial difference is, it is possible that the difference will grow exponentially fast. 
This is fundamentally different from the Gradient Descent (for convex objectives), for which the difference between two instances does not expand for standard choice of learning rate $\eta = \frac{1}{L}$ (where $L$ is the smoothness).
\end{remark}

Fortunately, we can show that the discrepancy can grow at a slower exponential rate via less aggressive acceleration, see \cref{fedaci:stab:sketch}. As we will discuss shortly, we adjust $\gamma$ according to $K$ to restrain the growth of discrepancy within the linear regime. The proof of \cref{fedaci:stab:sketch} is deferred to \cref{sec:fedaci:stab:main}.
\begin{lemma}[Simplified version of \cref{fedaci:stab:main}, Discrepancy overhead bounds for \fedaci]
  \label{fedaci:stab:sketch}
  In the same setting of \cref{fedac:a1}(a), the following inequality holds
    \begin{equation}
      \text{``Discrepancy overhead'' in \cref{eq:fedaci:conv:sketch}} \leq \begin{cases}
        7 \eta \gamma L K \sigma^2 \left(1 + \frac{2\gamma^2\mu}{\eta}\right)^{2K}
         & \text{if~} \gamma \in (\eta, \sqrt{\frac{\eta}{\mu}} ],
        \\
        7 \eta^2 L K \sigma^2
         &
        \text{if~} \gamma = \eta.
      \end{cases}
      \label{eq:fedaci:stab:sketch}
    \end{equation}
\end{lemma}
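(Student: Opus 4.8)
The plan is to bound, after a Cauchy--Schwarz split, the two mean-square discrepancies that combine to form the overhead, by tracking how stochastic noise accumulates between two consecutive synchronization points. Fix a step $t$ and let $t_0 = K\lfloor t/K\rfloor$ be the most recent synchronization time; since all workers are reset to a common state at $t_0$, every discrepancy vanishes there, and because $t - t_0 < K$ the noise accumulates over at most $K$ steps. Writing $\Delta_s^{w,m} := \overline{w_s} - w_s^m$, $\Delta_s^{\mathrm{ag},m} := \overline{w_s^{\mathrm{ag}}} - w_s^{\mathrm{ag},m}$ and $\Delta_s^{\mathrm{md},m} := \overline{w_s^{\mathrm{md}}} - w_s^{\mathrm{md},m}$, the linearity of averaging and of the coupling step, together with $\alpha^{-1} = \gamma\mu$ from the \fedaci\ choice \eqref{fedaci}, yields the coupled recursion
\begin{align*}
\Delta_{s+1}^{w,m} &= (1-\gamma\mu)\Delta_s^{w,m} + \gamma\mu\,\Delta_s^{\mathrm{md},m} - \gamma(\overline{g_s} - g_s^m),\\
\Delta_{s+1}^{\mathrm{ag},m} &= \Delta_s^{\mathrm{md},m} - \eta(\overline{g_s} - g_s^m),
\end{align*}
with $\Delta_s^{\mathrm{md},m} = \tfrac{\gamma\mu}{1+\gamma\mu}\Delta_s^{w,m} + \tfrac{1}{1+\gamma\mu}\Delta_s^{\mathrm{ag},m}$ since $\beta^{-1} = \tfrac{\gamma\mu}{1+\gamma\mu}$. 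I would then split $\overline{g_s} - g_s^m = \bigl(\overline{\nabla F(w_s^{\mathrm{md}})} - \nabla F(w_s^{\mathrm{md},m})\bigr) + (\overline{\zeta_s} - \zeta_s^m)$, where $\zeta_s^m := \nabla f(w_s^{\mathrm{md},m};\xi_s^m) - \nabla F(w_s^{\mathrm{md},m})$ is zero-mean. The deterministic part is controlled by $L$-smoothness in terms of $\|\Delta_s^{\mathrm{md},m}\|$, while the independent, zero-mean noise injects per-step variance $\expt[\tfrac1M\sum_m\|\overline{\zeta_s}-\zeta_s^m\|^2]\le\sigma^2$.

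Next I would convert this into a scalar recursion for a Lyapunov-type quantity $\Phi_s := \expt[\tfrac1M\sum_m(\|\Delta_s^{w,m}\|^2 + c\,\|\Delta_s^{\mathrm{ag},m}\|^2)]$ with a weight $c$ scaling like $\eta/\gamma$, chosen so that the non-symmetric momentum feedback between $\Delta^w$ and $\Delta^{\mathrm{ag}}$ is tamed. The key computation is to absorb the smoothness term using $\eta L\le 1$ and the \fedaci\ constraint $\gamma\in(\eta,\sqrt{\eta/\mu}]$, and to show that the homogeneous part grows per step by at most $(1+\tfrac{2\gamma^2\mu}{\eta})^2$, while the inhomogeneous part contributes $O(\gamma^2\sigma^2)$ to the $w$-component and $O(\eta^2\sigma^2)$ to the ag-component. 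Unrolling from $\Phi_{t_0}=0$ over the at most $K$ steps and summing the geometric series gives $\expt[\tfrac1M\sum_m\|\Delta_t^{w,m}\|^2] = O\!\bigl(K\gamma^2\sigma^2(1+\tfrac{2\gamma^2\mu}{\eta})^{2K}\bigr)$ and, because the md-weighting places almost all of its mass on the $\eta$-scale ag-component (while $\gamma^2\mu\le\eta$ keeps the $w$-contribution subordinate), $\expt[\tfrac1M\sum_m\|\Delta_t^{\mathrm{md},m}\|^2] = O\!\bigl(K\eta^2\sigma^2(1+\tfrac{2\gamma^2\mu}{\eta})^{2K}\bigr)$.

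Finally I would apply Cauchy--Schwarz to the overhead,
\[
\text{overhead} \le L\max_{0\le t<T} \sqrt{\expt[\tfrac1M\textstyle\sum_m\|\Delta_t^{\mathrm{md},m}\|^2]}\,\sqrt{\expt[\tfrac1M\textstyle\sum_m\|\tfrac{1}{1+\gamma\mu}\Delta_t^{w,m}+\tfrac{\gamma\mu}{1+\gamma\mu}\Delta_t^{\mathrm{ag},m}\|^2]},
\]
and observe that the coupling factor carries the \emph{swapped} weights, so its mass sits on the $\gamma$-scale $w$-component, giving it size $O(K\gamma^2\sigma^2(1+\cdots)^{2K})$. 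The two factors therefore multiply to $L\cdot O(\sqrt{K\eta^2\sigma^2}\,\sqrt{K\gamma^2\sigma^2}\,(1+\tfrac{2\gamma^2\mu}{\eta})^{2K}) = O(\eta\gamma LK\sigma^2(1+\tfrac{2\gamma^2\mu}{\eta})^{2K})$, and tracking absolute constants yields the factor $7$. For the boundary case $\gamma=\eta$ (i.e. $\mu\eta K\ge1$), the dynamics is well-conditioned relative to $K$, the $w$- and ag-recursions no longer amplify one another, and the amplification factor drops out, leaving the cleaner $7\eta^2LK\sigma^2$. The main obstacle is the second step: proving that the homogeneous growth factor of the coupled $(\Delta^w,\Delta^{\mathrm{ag}})$ system is exactly $1+\tfrac{2\gamma^2\mu}{\eta}$ per step rather than something larger. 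The momentum coupling makes the system non-symmetric, so the Lyapunov weight $c$ must be tuned precisely and the constraints $\eta\le1/L$ and $\gamma\le\sqrt{\eta/\mu}$ invoked at exactly the right places; this is the point at which the instability of unrestrained acceleration (\cref{instability:sketch}) would otherwise surface as an uncontrolled exponential.
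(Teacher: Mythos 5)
Your overall architecture matches the paper's: write the worker discrepancies as a coupled two-component linear recursion driven by zero-mean noise, bound the homogeneous growth per step, telescope from the last synchronization point (where everything vanishes) over at most $K$ steps, and finish with Cauchy--Schwarz on the two factors of the overhead. Your recursions for $\Delta^{w}$, $\Delta^{\mathrm{ag}}$, $\Delta^{\mathrm{md}}$ and the final assembly ($\eta$-scale mass on the md-factor, $\gamma$-scale mass on the coupled factor, product $O(\eta\gamma K\sigma^2)$) are all consistent with what the paper does in \cref{fedaci:stab:1,fedaci:stab:2,fedaci:stab:3}.

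The genuine gap is exactly at the step you flag as ``the main obstacle,'' and the tool you propose there does not work. A \emph{diagonal} Lyapunov weighting $\|\Delta^{w}\|^2 + c\,\|\Delta^{\mathrm{ag}}\|^2$ cannot yield the per-step factor $\bigl(1+\tfrac{2\gamma^2\mu}{\eta}\bigr)^2$: the one-step transition matrix (see \cref{fedaci:stab:1}) has off-diagonal blocks of size roughly $\gamma\mu$ (top-right) and $\gamma(L-\mu)$ (bottom-left, carrying the Hessian $H$), and the geometric mean $\gamma\sqrt{\mu L}$ of these couplings is invariant under any diagonal conjugation. With the \fedaci choice $\gamma=\sqrt{\eta/(\mu K)}$ and $\eta=1/L$ this gives a per-step growth of order $1+K^{-1/2}$, hence $(1+K^{-1/2})^K\approx \euler^{\sqrt{K}}$ over a synchronization interval --- exponentially worse than the required $(1+2/K)^{2K}=O(1)$. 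The paper's resolution (\cref{fedaci:stab:bound}) is a \emph{non-diagonal} change of basis $\mathcal{X}(\gamma,\eta)=\bigl[\begin{smallmatrix}\frac{\eta}{\gamma}I & 0\\ I & I\end{smallmatrix}\bigr]$, i.e.\ tracking the combination $\Delta - \tfrac{\gamma}{\eta}\Delta^{\mathrm{ag}}$ as the second coordinate; in these coordinates the $\eta H$ term from the ag-update and the $\gamma H$ term from the $w$-update cancel exactly in the lower-left block, which becomes $-\tfrac{\mu(\gamma^2-\eta^2)}{(1+\gamma\mu)\eta}I$ with no $H$-dependence and norm at most $\tfrac{\gamma^2\mu}{\eta}$. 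That cancellation is the heart of the proof and is not recoverable by tuning a scalar weight $c$; without it your step~2 cannot produce the claimed growth factor and the whole bound degrades to a useless exponential.
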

\paragraph{Step 3: trading-off acceleration and discrepancy.} 
Combining \cref{fedaci:conv:sketch,fedaci:stab:sketch} gives 
  \begin{equation}
    \label{fedaci:combine:sketch}
    \expt \left[\Psi_T \right]
    \leq \underbrace{\exp \left( - \gamma \mu T \right)  \Psi_0}_{\mathrm{(I)}}
    + \frac{\eta^2 L \sigma^2}{2\gamma \mu}
    + \frac{\gamma\sigma^2}{2M}
    +
    \underbrace{
      \begin{cases}
        7 \eta \gamma L K \sigma^2 \left(1 + \frac{2\gamma^2\mu}{\eta}\right)^{2K}
         & \text{if~} \gamma \in (\eta, \sqrt{ \frac{\eta}{\mu}} ],
        \\
        7 \eta^2 L K \sigma^2
         &
        \text{if~} \gamma = \eta.
      \end{cases}
    }_{\mathrm{(II)}}
  \end{equation}

The value of $\gamma \in [\eta, \sqrt{\eta/\mu}]$ controls the magnitude of acceleration in (I) and discrepancy growth in (II).
The upper bound choice $\sqrt{\eta/\mu}$ gives full acceleration in (I) but makes (II) grow exponentially in $K$.
On the other hand, the lower bound choice $\eta$ makes (II) linear in $K$ but loses all acceleration.
We wish to attain as much acceleration in (I) as possible while keeping the discrepancy (II) grow moderately. \textbf{Our balanced solution} is to pick $\gamma = \max \{ \sqrt{{\eta}/{(\mu K)}}, \eta\}$. One can verify that the discrepancy grows (at most) linearly in $K$.
Substituting this choice of $\gamma$ to \cref{fedaci:combine:sketch} leads to
  \begin{equation}
    \expt \left[ \Psi_T \right]
    \leq
    \underbrace{\exp \left(  \min \left\{ -\eta \mu T, - \frac{\eta^{\frac{1}{2}} \mu^{\frac{1}{2}} T}{K^{\frac{1}{2}}} \right\} \right) \Psi_0}_{\text{Monotonically decreasing } \varphi_{\downarrow}(\eta)}
    +
    \underbrace{\mathcal{O} \left( \frac{\eta^{\frac{1}{2}} \sigma^2}{ \mu^{\frac{1}{2}} M K^{\frac{1}{2}} }
      + \frac{\eta \sigma^2}{M}
      + \frac{\eta^{\frac{3}{2}} L K^{\frac{1}{2}} \sigma^2} {\mu^{\frac{1}{2}}}
      + \eta^2 L K \sigma^2\right)}_{\text{Monotonically increasing } \varphi_{\uparrow}(\eta)}.
    \label{eq:fedaci:eta:sketch}
  \end{equation}
\paragraph{Step 4: finding $\eta$ to optimize the RHS of \cref{eq:fedaci:eta:sketch}.}
It remains to show that \eqref{eq:fedaci:eta:sketch} gives the desired bound with our choice of $\eta = \min \{ \frac{1}{L}, \tilde{\Theta} (\frac{K}{\mu T^2} )\}$.
The increasing $\varphi_{\uparrow}(\eta)$ in \eqref{eq:fedaci:eta:sketch} is bounded by
\(
\tildeo ( \frac{\sigma^2}{\mu M T} +  \frac{LK^2 \sigma^2}{\mu^2 T^3} ).
\)
The decreasing term $\varphi_{\downarrow}(\eta)$ in \eqref{eq:fedaci:eta:sketch} is bounded by $\varphi_{\downarrow}({\frac{1}{L}}) + \varphi_{\downarrow} (\tilde{\Theta} (  \frac{K}{\mu T^2} ) )$, where $    \varphi_{\downarrow} ({\frac{1}{L}} )
  = \exp ( \min \{ - \frac{\mu T}{L}, - \frac{\mu^{\frac{1}{2}}T}{L^{\frac{1}{2}}K^{\frac{1}{2}} }\} )$,
and
$\varphi_{\downarrow} (\tilde{\Theta} (  \frac{K}{\mu T^2} ) ) 
\leq
\exp \left( - \frac{\mu^{\frac{1}{2}} T}{K^{\frac{1}{2}}} \cdot \sqrt{\tilde{\Theta} \left(  \frac{K}{\mu T^2} \right)} \right)$ 
can be controlled by the bound of $\varphi_{\uparrow}(\eta)$ provided $\tilde{\Theta}$ has appropriate $\polylog$ factors.
Plugging the bounds to \eqref{eq:fedaci:eta:sketch} and 
replacing $K$ with $\nicefrac{T}{R}$ completes the proof of \cref{fedac:a1}(a).
We defer the details to \cref{sec:fedaci}.
\subsection{Proof sketch of \cref{fedacii:a2}: \fedacii under \cref{asm2}}
\label{sec:proof:sketch:2}
In this section, we outline the proof of \cref{fedacii:a2} by explaining the differences with the proof in \cref{sec:proof:sketch:1}.
The first difference is that for \fedacii we study an alternative \emph{centralized potential} $\Phi_t = F(\overline{w_t^{\mathrm{ag}}}) - F^* + \frac{1}{6}\mu \| \overline{w_t} - w^*\|^2$, which leads to an alternative version of \cref{fedaci:conv:sketch} as follows.
  \begin{equation}
    \expt [\Phi_T] \leq \exp \left( - \frac{\gamma \mu T}{3} \right) \Phi_0
    + \frac{3\eta^2 L \sigma^2 }{2 \gamma \mu M}
    + \frac{\gamma \sigma^2}{2 M}
    + \frac{3}{\mu} \max_{0 \leq t < T}   \expt \left\| \frac{1}{M} \sum_{m=1}^M \nabla F (w_t^{\mathrm{md}, m}) - \nabla F(\overline{w_t^{\mathrm{md}}})    \right\|^2 .
    \label{eq:fedacii:conv:sketch}
  \end{equation}
The second difference is that the particular discrepancy in \eqref{eq:fedacii:conv:sketch} can be bounded via \nth{3}-order smoothness $Q$ since
(we omit ``md'' and $t$ to simplify the notations)
\begin{align}
  & \left\|  \frac{1}{M} \sum_{m=1}^M \nabla F(w^m) - \nabla F (\overline{w})    \right\|^2 = \left\|  \frac{1}{M} \sum_{m=1}^M \left( \nabla F(w^m) - \nabla F (\overline{w}) - \nabla^2 F (\overline{w})(w^m - \overline{w_t}) \right)   \right\|^2
\\
\leq & \frac{1}{M} \sum_{m=1}^M\left\|   \nabla F(w^m) - \nabla F (\overline{w}) - \nabla^2 F (\overline{w})(w^m - \overline{w}) \right\|^2   \leq \frac{Q^2}{4M}  \sum_{m=1}^M\left\| w^m - \overline{w} \right\|^4.
\end{align}
The proof then follows by analyzing the \nth{4}-order stability of \fedac. We defer the details to \cref{sec:fedacii}.

% !TEX root = main.tex
\section{Numerical experiments}
\label{sec:experiment}
In this section, we validate our theory and demonstrate the efficiency of \fedac via experiments.\footnote{Code repository link: \url{https://github.com/hongliny/FedAc-NeurIPS20}.}
The performance of \fedac is tested against \fedavg (a.k.a., Local SGD), (distributed) Minibatch-SGD (\mbsgd) and Minibatch-Accelerated-SGD (\mbacsgd) \citep{Dekel.Gilad-Bachrach.ea-JMLR12,Cotter.Shamir.ea-NeurIPS11} on $\ell_2$-regularized logistic regression for UCI \texttt{a9a} dataset \citep{Dua.Graff-17} from \texttt{LibSVM} \citep{Chang.Lin-11}.
The regularization strength is set as $10^{-3}$. 
The hyperparameters $(\gamma, \alpha, \beta)$ of \fedac follows \fedaci where strong-convexity $\mu$ is chosen as regularization strength $10^{-3}$.
We test the settings of $M = 2^2, \ldots, 2^{13}$ workers and $K = 2^0, \ldots, 2^8$ synchronization interval.
For all four algorithms, we tune the learning-rate $\eta$ \emph{only} from the same set of levels within $[10^{-3}, 10]$.
We choose $\eta$ based on the best suboptimality (regularized population loss).
We claim that the best $\eta$ lies in the range $[10^{-3}, 10]$ for all algorithms under all settings. 
We defer the rest of setup details to \cref{additional:expr}.
In \cref{fig:a9a:1e-3:M}, we compare the algorithms by measuring the effect of linear speedup 
under variant $K$.
\begin{figure}[ht]
    \centering
    \includegraphics[width=\textwidth]{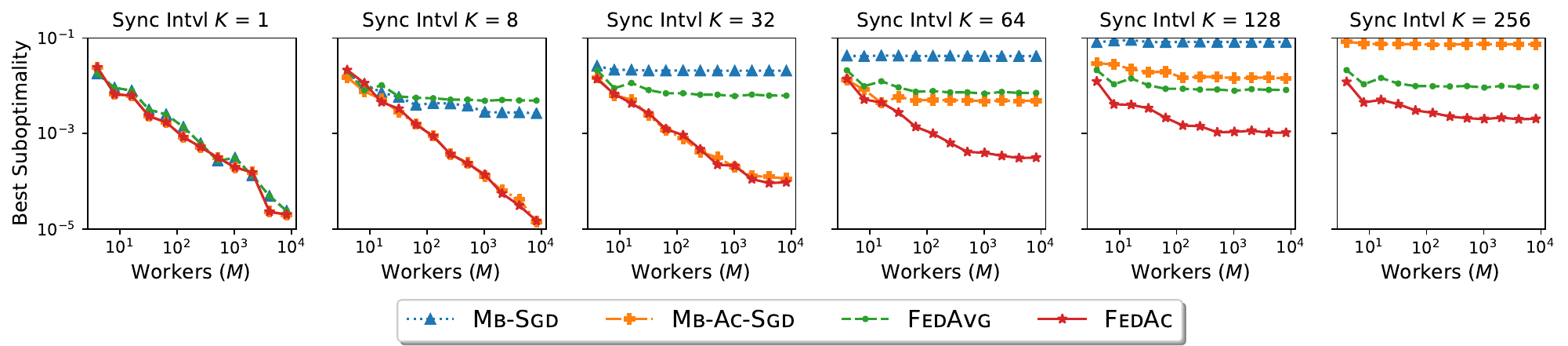}
    \caption{\textbf{Observed linear speedup with respect to the number of workers $M$ under various synchronization intervals $K$.} 
    Our \fedac is tested against three baselines \fedavg, \mbsgd, and \mbacsgd.
    While all four algorithms attain linear speedup for the fully synchronized ($K=1$) setting, \fedavg and \mbsgd lose linear speedup for $K$ as low as 8. 
    \mbacsgd is comparably better than the other two baselines but still deteriorates significantly for $K \geq 64$.
    \fedac is most robust to infrequent synchronization and outperforms the baselines by a margin for $K \geq 64$.
    }
    \label{fig:a9a:1e-3:M}
\end{figure}

In the next experiments, we provide an empirical example to show that the direct parallelization of standard accelerated SGD may indeed suffer from instability. 
This complements our \cref{instability:sketch}) on the initial-value instability of standard AGD. 
Recall that \fedaci \cref{fedaci} and \fedacii \cref{fedacii} adopt an acceleration-stability tradeoff technique that takes $\gamma = \max \{ \sqrt{\frac{\eta}{\mu K}} , \eta\}$. Formally, we denote the following direct acceleration of \fedac without such tradeoff as ``vanilla \fedac'':
$
    \eta \in (0, \frac{1}{L} ], \gamma =  \sqrt{\frac{\eta}{\mu}} , \alpha  = \frac{1}{\gamma \mu},   \beta = \alpha + 1.
$
In \cref{fig:instability}, we compare the vanilla \fedac with the (stable) \fedaci and the baseline \mbacsgd.
\begin{figure}[ht]
    \centering
    \includegraphics[width=\textwidth]{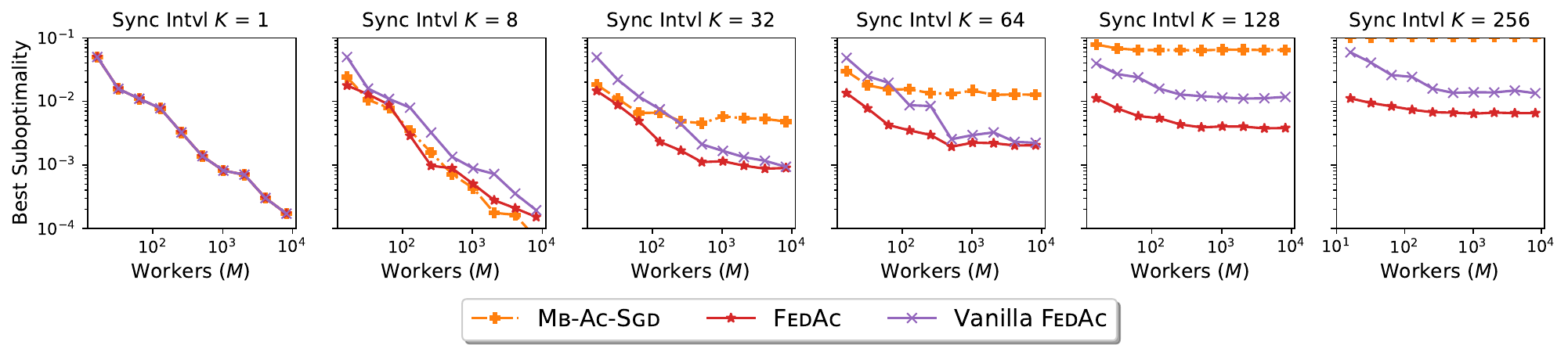}
    \caption{
        \textbf{Vanilla \fedac versus (stable) \fedaci and baseline \mbacsgd on the observed linear speedup w.r.t. $M$ under various synchronization intervals $K$.}
        Observet that Vanilla \fedac is indeed less robust to infrequent synchronization and thus worse than the \fedaci. (dataset: \textsc{a9a}, $\lambda=10^{-4}$)}
    \label{fig:instability}
\end{figure}

We include more experiments on various dataset, and more detailed analysis in \cref{additional:expr}. 

\section{Conclusions}
This work proposes \fedac, a principled acceleration of \fedavg, which provably improves convergence speed and communication efficiency. 
Our theory and experiments suggest that \fedac saves runtime and reduces communication overhead, especially in the setting of abundant workers and infrequent communication. 
We establish stronger guarantees when the objectives are third-order smooth. 
As a by-product, we also study the stability property of accelerated gradient descent, which may be of broader interest.
We expect \fedac could be generalized to broader settings, \eg, non-convex objective and/or heterogenous workers.
\section*{Acknowledgements}
Honglin Yuan would like to thank the support by the Total Innovation Fellowship. 
Tengyu Ma would like to thank the support by the Google Faculty Award. 
The work is also partially supported by SDSI and SAIL. 
We would like to thank Qian Li, Junzi Zhang, and Yining Chen for helpful discussions at various stages of this work. 
We would like to thank the anonymous reviewers for their suggestions and comments.

\clearpage
\begin{small}
\bibliography{FedAc.bib}
\end{small}

\begin{appendices}
The appendices are structured as follows. In \cref{additional:expr}, we include additional experiments with description of setup details. 
In \cref{sec:fedaci,sec:fedacii}, we prove the complete version of \cref{fedac:a1,fedacii:a2} on the convergence of \fedac under \cref{asm1} or \ref{asm2}.
  In \cref{sec:fedavg}, we prove \cref{fedavg:a2} on the convergence of \fedavg under \cref{asm2}. In \cref{sec:gcvx}, we  prove the convergence of \fedac (and \fedavg) for general convex objectives. In \cref{sec:instability}, we prove \cref{instability:sketch}  on the initial-value instability of standard accelerated gradient descent. We include some helper lemmas in \cref{sec:helper}.

\begin{spacing}{1.2}
  \listofappendices
\end{spacing}
\clearpage

\section{Additional experiments and setup details}\label{additional:expr}
\subsection{Additional setup details}
\paragraph{Baselines.}
\fedac is tested against three baselines, namely \fedavg (a.k.a., Local SGD), (distributed) Minibatch-SGD (\mbsgd), and (distributed) Minibatch-Accelerated-SGD (\mbacsgd) \citep{Dekel.Gilad-Bachrach.ea-JMLR12,Cotter.Shamir.ea-NeurIPS11}.
We fix the parallel runtime $T = 4096$, and test variant levels of synchronization interval $K$ and parallel workers $M$.
\mbsgd and \mbacsgd baselines correspond to running SGD or accelerated SGD for $\nicefrac{T}{K}$ steps with batch size $MK$. 
The comparison is fair since all algorithms can be parallelized to $M$ workers with $\nicefrac{T}{K}$ rounds of communication where each worker queries $T$ gradients in total. 
We simulate the parallelization with a \texttt{NumPy} program on a local CPU cluster. 
We start from the same random initialization for all algorithms under all settings. 

\paragraph{Datasets.}
The algorithms are tested on $\ell_2$-regularized logistic regression on the following two binary classification datasets from \texttt{LibSVM}. The preprocessing information and the download links can be found at \url{https://www.csie.ntu.edu.tw/~cjlin/libsvmtools/datasets/binary.html}. 
\begin{enumerate}[leftmargin=*]
    \item The  ``adult'' \texttt{a9a} dataset with 123 features and 32,561 training samples from the UCI Machine Learning Repository \citep{Dua.Graff-17}.
    \item The \texttt{epsilon} dataset with 2,000 features and 400,000 training samples from the PASCAL Challenge 2008 \citep{Sonnenburg.Franc.ea-08}.
\end{enumerate}

\paragraph{Evaluation.}
For all algorithms and all settings, we evaluate the suboptimality (regularized population loss) every $512$ parallel timesteps (gradient queries). 
We compute the suboptimality by comparing with a pre-computed optimum $F^*$.
We record the best suboptimality attained over the evaluations.

\paragraph{Hyperparameter choice.}
For all four algorithms, we tune the ``learning-rate'' hyperparameter $\eta$ only and record the best suboptimality attained. For \mbacsgd, the rest of hyperparameters are determined by the strong-convexity estimate $\mu$ which is taken to be the $\ell_2$-regularization strength $\lambda$. 
For \fedac, the default choice is \fedaci \cref{fedaci},\footnote{\fedacii is qualitatively similar to \fedaci empirically so we show \fedaci only.} where the strong-convexity estimate $\mu$ is also taken to be the $\ell_2$-regularization strength $\lambda$.
\subsection{Results on dataset \texttt{a9a}}
We first test on the \texttt{a9a} dataset with  $\ell_2$-regularization strength $10^{-3}$.
We test the setting of $K = 2^0, \ldots, 2^8$ and $M = 2^2, \ldots, 2^{13}$.
For all algorithms, we tune $\eta$ from the same sets: \{0.001, 0.002, 0.005, 0.01, 0.02, 0.05, 0.1, 0.2, 0.5, 1, 2, 5, 10\}. We claim that the best $\eta$ lies in $[0.001, 10]$ for all algorithms for all settings.\footnote{We search for this range to guarantee that the optimal $\eta$ lies in this range for all algorithms and all settings. One could save effort in tuning if only one algorithm were implemented.}
We plot the observed linear speedup figure in \cref{fig:a9a:1e-3:M} in the main body. To better understand the dependency on synchronization intervals $K$, we plot the following \cref{fig:a9a:1e-3:K}. 
The results suggest that \fedac is more robust to infrequent synchronization and thus more communication-efficient. 
For example, when using 8192 workers, \fedac requries only 32 rounds of communication to attain $10^{-3}$ suboptimality, whereas \mbacsgd, \mbsgd and \fedavg require 128, 1024, 4096 rounds, respectively.
\begin{figure}[!hbp]
    \centering
    \includegraphics[width=\textwidth]{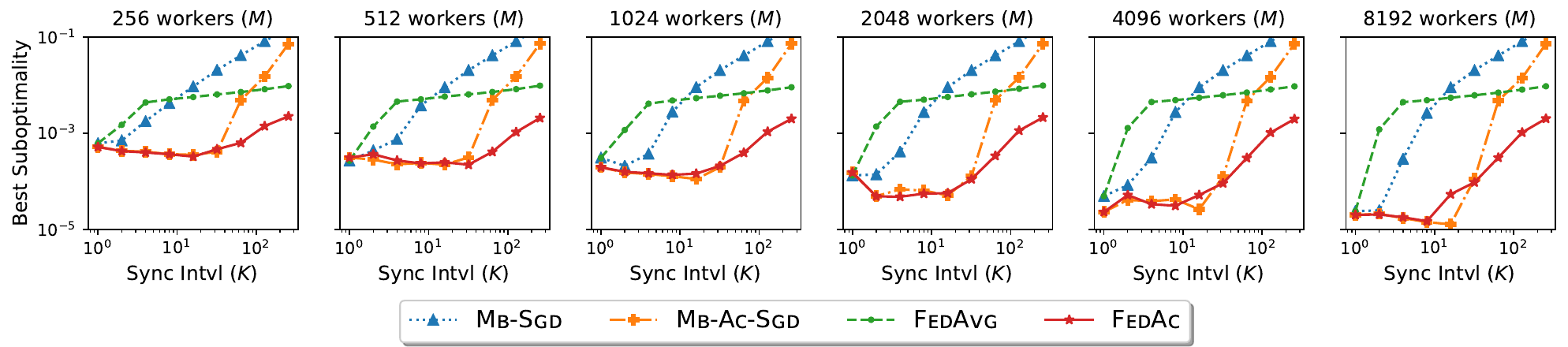}
    \caption{
        \textbf{\fedac versus baselines on the dependency of synchronization interval $K$ under various workers $M$.}
        For all tested $M$, \fedavg and \mbsgd start to deteriorate once $K$ passes $2$;
        \mbacsgd is more robust to moderate $K$ than  \fedavg and \mbsgd but sharply deteriorate once it passes a threshold at around $K=32$. 
        This is because \mbacsgd does not have enough gradient steps for convergence when the communication is too sparse. 
        In comparison, \fedac is more robust to infrequent communication.
        Dataset: \texttt{a9a}, $\ell_2$-regularization strength: $10^{-3}$.}
    \label{fig:a9a:1e-3:K}
\end{figure}

We repeat the experiments with an alternative choice of $\lambda = 10^{-2}$. This problem is relatively ``easier'' in terms of optimization since the condition number $\nicefrac{L}{\mu}$ is lower. We test the same levels of $M$, $K$ and tune the $\eta$ from the same set as above. The results are shown in \cref{fig:a9a:1e-2:M,fig:a9a:1e-2:K}. The results are qualitatively similar to the $\lambda = 10^{-3}$ case. 
For $K \leq 64$, the performance of \fedac and \mbacsgd are similar, which both outperform the other two baselines \fedavg and \mbsgd. For $K \geq 128$, the \mbacsgd drastically worsen because the gradient steps are too few, and \fedac outperforms the other baselines by a margin.

\begin{figure}[!hbp]
    \centering
    \includegraphics[width=\textwidth]{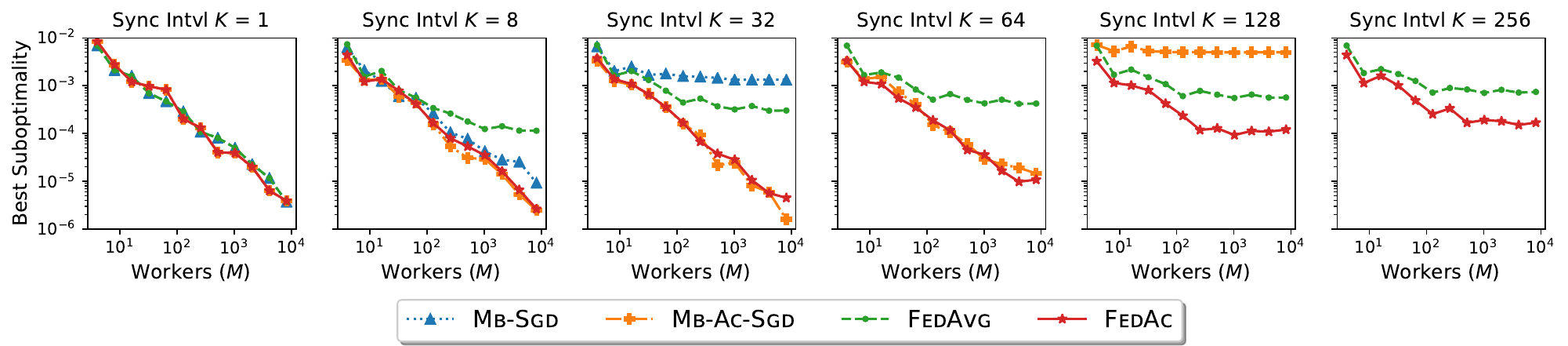}
    \caption{
        \textbf{\fedac versus baselines on the observed linear speedup w.r.t $M$ under various synchronization interval $K$.}
\        The results are qualitatively similar to \cref{fig:a9a:1e-3:M}. 
        Dataset: \texttt{a9a}, $\ell_2$-regularization strength: $10^{-2}$.}
    \label{fig:a9a:1e-2:M}
\end{figure}
\begin{figure}[!hbp]
    \centering
    \includegraphics[width=\textwidth]{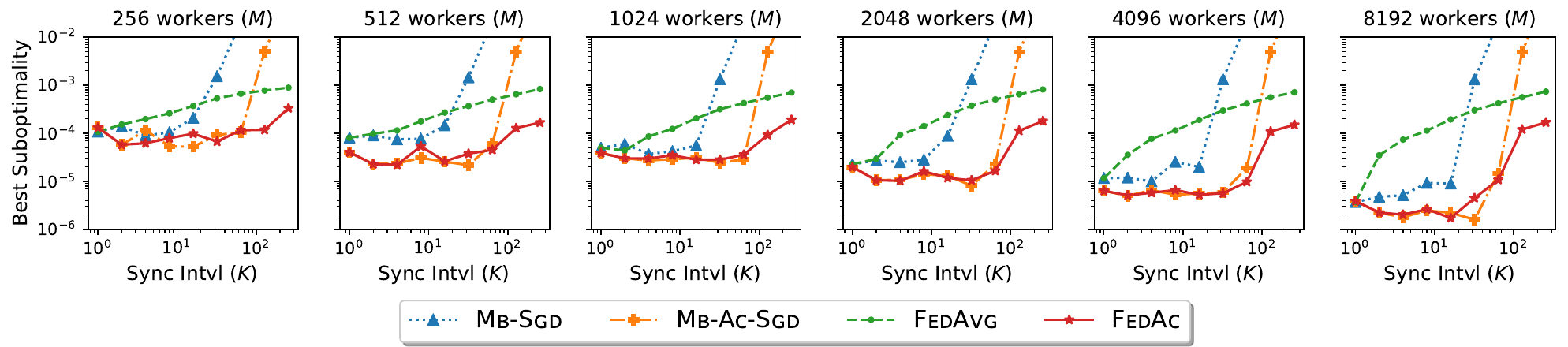}
    \caption{
        \textbf{\fedac versus baselines on the dependency of synchronization interval $K$ under various workers $M$.}
        The results are qualitatively similar to \cref{fig:a9a:1e-3:K}.
        Dataset: \texttt{a9a}, $\ell_2$-regularization strength: $10^{-2}$.}
    \label{fig:a9a:1e-2:K}
\end{figure}

\subsection{Results on dataset \texttt{epsilon}}
In this section we repeat the experiments above on the larger \texttt{epsilon} dataset with $\ell_2$-regularization $\lambda$ taken to be $10^{-4}$. 
 $\eta$ is tuned from $\{0.005, 0.01, 0.02, 0.05, 0.1, 0.2, 0.5, 1, 2, 5, 10, 20, 50\}$.
The optimal $\eta$ lies in the corresponding range for all algorithm under all tested settings. The results are shown in \cref{fig:epsilon:1e-4:M,fig:epsilon:1e-4:K}. 
The results are qualitatively similar to the previous experiments on \texttt{a9a} dataset.
\fedac is more communication-efficient than the baselines. 
For example, when using 2048 workers, \fedac requires only 64 rounds of communication (synchronization) to attain $10^{-4}$ suboptimality, whereas \mbacsgd, \mbsgd and \fedavg require 256, 4096 and 4096 rounds of communication, respectively.
\begin{figure}[!hbp]
    \centering
    \includegraphics[width=\textwidth]{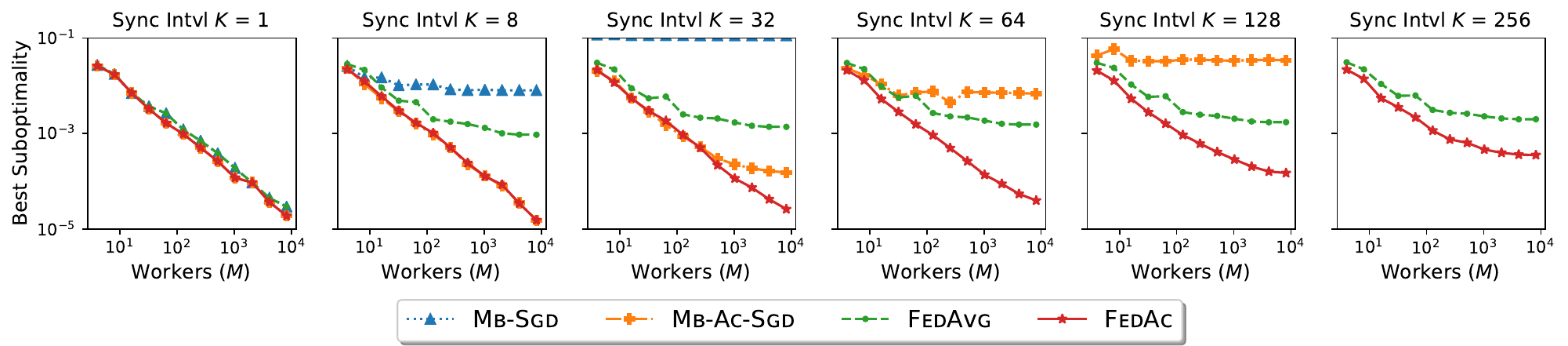}
    \caption{
        \textbf{\fedac versus baselines on the observed linear speedup w.r.t $M$ under various synchronization interval $K$.}
        The results are qualitatively similar to \cref{fig:a9a:1e-3:M}. 
        Dataset: \texttt{epsilon}, $\ell_2$-regularization strength: $10^{-4}$.}
    \label{fig:epsilon:1e-4:M}
\end{figure}
\begin{figure}[!hbp]
    \centering
    \includegraphics[width=\textwidth]{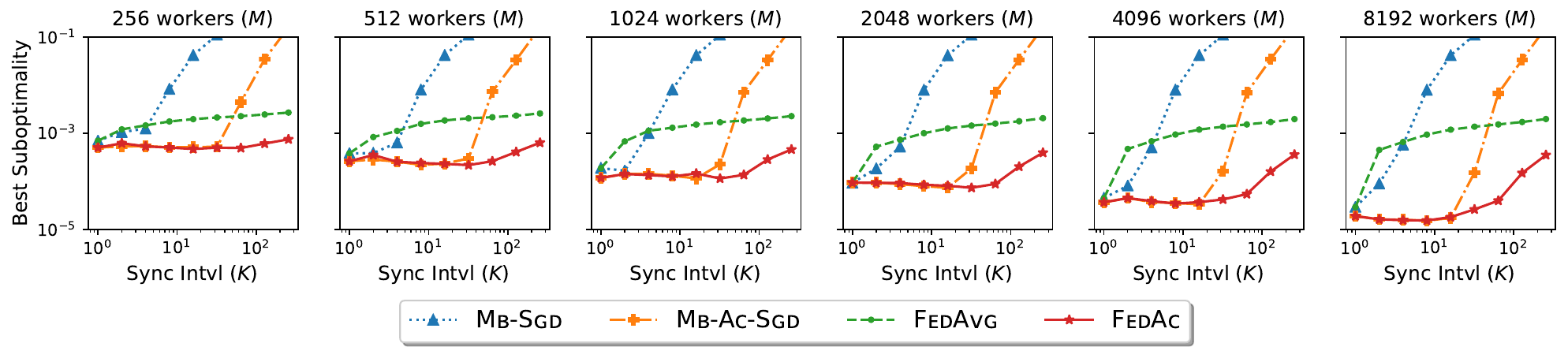}
    \caption{
        \textbf{\fedac versus baselines on the dependency of synchronization interval $K$ under various workers $M$.}
        The results are qualitatively similar to \cref{fig:a9a:1e-3:K}.
        Dataset: \texttt{epsilon}, $\ell_2$-regularization strength: $10^{-4}$.}
    \label{fig:epsilon:1e-4:K}
\end{figure}
\clearpage
\section{Analysis of \fedaci under \cref{asm1}}
\label{sec:fedaci}
In this section we study the convergence of \fedaci. We provide a complete, non-asymptotic version of \cref{fedac:a1}(a) on the convergence of \fedaci under \cref{asm1} and provide the detailed proof, which expands the proof sketch in \cref{sec:proof:sketch:1}. 
Recall that \fedaci is defined as the \fedac (\cref{alg:fedac}) with the following hyperparameters choice
\begin{equation}
  \eta \in \left( 0, \frac{1}{L} \right], \quad \gamma = \max \left\{ \sqrt{\frac{\eta}{\mu K}} , \eta\right\}, \quad   \alpha  = \frac{1}{\gamma \mu},  \quad \beta = \alpha + 1
  \tag{\fedaci}.
\end{equation}
We keep track of the convergence progress of \fedaci via the following decentralized potential $\Psi_t$.
\begin{equation}
  \Psi_t := \frac{1}{M} \sum_{m=1}^M F( {w_{t}^{\mathrm{ag}, m}})  - F^* + \frac{1}{2}\mu \|\overline{w_{t}} - w^*\|^2.
  \label{eq:fedaci:potential}
\end{equation}
Recall $\overline{w_{t}}$ is defined as $\frac{1}{M} \sum_{m=1}^M w_t^m$.
Formally, we use $\mathcal{F}_t$ to denote the $\sigma$-algebra generated by $\{w_\tau^m, w_\tau^{\mathrm{ag, m}}\}_{\tau \leq t, m \in [M]}$. Since \fedac is Markovian, conditioning on $\mathcal{F}_t$ is equivalent to conditioning on $\{w_t^m, w_t^{\mathrm{ag, m}}\}_{m \in [M]}$.

\subsection{Main theorem and lemmas: Complete version of \cref{fedac:a1}(a)}
Now we introduce the main theorem on the convergence of \fedaci.
\footnote{Note that we state our full \cref{fedaci:full} in terms of the synchronization gap $K$ instead of the synchronization round $R$ as in the simplified \cref{fedac:a1}(a). This two quantities are trivially related as $T = KR$.
  In fact, our bound \cref{fedaci:full} in terms of $K$ also holds for irregular synchronization setting as long as the maximum synchronization interval is bounded by $K$.}
\footnote{Throughout this paper we do not optimize the $\polylog$ factors or the constants. We conjecture that certain $\polylog$ factors can be improved or removed via averaging techniques such as \citep{Lacoste-Julien.Schmidt.ea-arXiv12,Stich-arXiv19}.}
\begin{theorem}[Convergence of \fedaci, complete version of \cref{fedac:a1}(a)]
  \label{fedaci:full}
  Let $F$ be $\mu>0$-strongly convex, and assume \cref{asm1}, then for
  \begin{equation}
    \eta = \min \left\{ \frac{1}{L},  \frac{K}{\mu T^2} \log^2 \left( \euler + \min \left\{ \frac{\mu M T \Psi_0}{\sigma^2}, \frac{\mu^2 T^3 \Psi_0}{L K^2 \sigma^2} \right\} \right) \right\},
  \end{equation}
  \fedaci yields
  \begin{align}
    \expt [\Psi_T]
    \leq &
    \min \left\{ \exp \left( - \frac{\mu T}{L} \right), \exp \left( - \frac{\mu^{\frac{1}{2}} T}{L^{\frac{1}{2}} K^{\frac{1}{2}}} \right)\right\}\Psi_0
    \\
    & + \frac{2\sigma^2}{\mu M T} \log^2 \left(  \euler + \frac{\mu M T \Psi_0}{\sigma^2} \right) 
    + \frac{400 LK^2 \sigma^2}{\mu^2 T^3} \log^4 \left( \euler + \frac{\mu^2 T^3 \Psi_0}{LK^2\sigma^2} \right),
  \end{align}
  where $\Psi_t$ is the decentralized potential defined in \cref{eq:fedaci:potential}.
\end{theorem}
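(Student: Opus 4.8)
The plan is to follow verbatim the four-step framework laid out in \cref{sec:proof:sketch:1}, but to execute each estimate non-asymptotically so that the exact constants and the $\log^2$ and $\log^4$ factors of \cref{fedaci:full} emerge. First I would make the potential recursion of \cref{fedaci:conv:sketch} rigorous. Conditioning on $\mathcal{F}_t$ and expanding $\expt[\Psi_{t+1}\mid\mathcal{F}_t]$ through the three coupled \fedac updates, I would use $\mu$-strong convexity and $L$-smoothness (\cref{asm1}(a,b)) to reproduce the idealized $K{=}1$ accelerated contraction $\exp(-\gamma\mu T)\Psi_0$ via the potential argument of \citep{Bansal.Gupta-19}, while the $\sigma^2$-bounded variance (\cref{asm1}(c)) acting on the two noisy updates $v_{t+1}^{\mathrm{ag},m}$ and $v_{t+1}^m$ contributes $\frac{\eta^2L\sigma^2}{2\gamma\mu}$ and $\frac{\gamma\sigma^2}{2M}$. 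Infrequent synchronization enters only through the mismatch between the gradient queried at the local middle point $w_t^{\mathrm{md},m}$ and the one the idealized method would query at $\overline{w_t^{\mathrm{md}}}$; isolating this mismatch and applying $L$-smoothness yields the discrepancy-overhead term, which is the only quantity left to control.

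The heart of the argument, and the step I expect to be the main obstacle, is bounding that discrepancy overhead (\cref{fedaci:stab:sketch}). Since the iterates are reset to a common value at each synchronization, it suffices to track, within a single length-$K$ window, how the per-worker deviations $w_t^m-\overline{w_t}$ and $w_t^{\mathrm{ag},m}-\overline{w_t^{\mathrm{ag}}}$ grow from zero. For \fedavg convexity makes the gradient drift non-expansive on these deviations, so only the injected $\mathcal{O}(\eta^2\sigma^2)$ noise accumulates and the discrepancy stays linear in $K$. The difficulty unique to \fedac is exactly the instability exhibited in \cref{instability:sketch}: the momentum coupling through $\alpha$ and $\beta$ can amplify the deviations geometrically, so a crude bound would be exponential in $K$. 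I would write the two deviation sequences as a coupled linear recursion driven by the fresh stochastic-gradient noise, bound its per-step amplification by $1+\frac{2\gamma^2\mu}{\eta}$, and sum the injected variance over the window to obtain the $K\sigma^2$ factor multiplied by the amplification $(1+\frac{2\gamma^2\mu}{\eta})^{2K}$ of \cref{fedaci:stab:sketch}; a Cauchy--Schwarz step finally converts the product of the two deviation norms appearing in the overhead into a product of their second moments.

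With both lemmas in hand, Steps 3 and 4 reduce to optimizing over $\gamma$ and $\eta$. Substituting the balanced choice $\gamma=\max\{\sqrt{\eta/(\mu K)},\eta\}$, I would verify that $\frac{2\gamma^2\mu}{\eta}\le\frac{2}{K}$, so $(1+\frac{2}{K})^{2K}\le\euler^4$ and the dangerous amplification collapses to a constant, leaving only the monotone $\varphi_{\uparrow}(\eta)$ and $\varphi_{\downarrow}(\eta)$ of \cref{eq:fedaci:eta:sketch}. Setting $\eta=\min\{\frac{1}{L},\frac{K}{\mu T^2}\log^2(\euler+\cdots)\}$, I would split into the two regimes of the $\min$: when $\eta=\frac{1}{L}$ the decreasing term already supplies the $\exp(-\mu T/L)$ factor, and when $\eta=\tilde{\Theta}(K/(\mu T^2))$ the $\log^2$ factor inside $\eta$ is calibrated so that $\varphi_{\downarrow}(\eta)$ is absorbed into $\varphi_{\uparrow}(\eta)$. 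Evaluating $\varphi_{\uparrow}$ at this $\eta$ produces the two stochastic terms $\frac{\sigma^2}{\mu MT}\log^2(\euler+\frac{\mu MT\Psi_0}{\sigma^2})$ and $\frac{LK^2\sigma^2}{\mu^2T^3}\log^4(\euler+\frac{\mu^2T^3\Psi_0}{LK^2\sigma^2})$, matching \cref{fedaci:full}; since $K=T/R$, the latter is exactly the communication term $\frac{L\sigma^2}{\mu^2TR^2}$ of \cref{eq:fedac:a1:1}. The remaining bookkeeping obstacle is keeping the two $\log$ arguments consistent across the $\min$ defining $\eta$, so that the exponential-decay factor and the communication term acquire precisely the right powers of the logarithm.
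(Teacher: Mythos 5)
Your plan reproduces the paper's four-step architecture essentially verbatim: the potential recursion of \cref{fedaci:conv:main}, the window-by-window stability argument, the balanced choice $\gamma=\max\{\sqrt{\eta/(\mu K)},\eta\}$ collapsing $(1+2/K)^{2K}$ to $\euler^4$, and the final split $\varphi_{\downarrow}(\eta)\le\varphi_{\downarrow}(1/L)+\varphi_{\downarrow}(\eta_0)$ with the $\log$-calibrated absorption of $\varphi_{\downarrow}(\eta_0)$ into $\varphi_{\uparrow}$ are all exactly what \cref{sec:fedaci} does, and the bookkeeping you describe (using $K\le T$ to fold the $\eta$- and $\eta^2$-order terms into the two displayed logarithmic terms) is the paper's as well.

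The one place where the proposal as written would fail is the step you yourself flag as the main obstacle. You propose to ``bound the per-step amplification'' of the coupled deviation recursion by $1+\frac{2\gamma^2\mu}{\eta}$, but the recursion matrix $\mathcal{A}(\mu,\gamma,\eta,H_t)$ of \cref{fedaci:stab:1} has step-dependent Hessians $\mu I\preceq H_t\preceq LI$, and its uniform \emph{Euclidean} operator norm is not $1+\mathcal{O}(\gamma^2\mu/\eta)$ — the paper states explicitly that $\sup_{\mu I\preceq H\preceq LI}\|\mathcal{A}\|_2$ is too large for the argument to go through. The missing idea is to measure the deviation vector in the transformed norm $\bigl\|\mathcal{X}(\gamma,\eta)^{-1}\bigl[\Delta_t^{\mathrm{ag}};\Delta_t\bigr]\bigr\|$ with the non-orthogonal change of basis $\mathcal{X}(\gamma,\eta)$ of \cref{eq:fedaci:X:def}, for which \cref{fedaci:stab:bound} gives the claimed $1+\frac{2\gamma^2\mu}{\eta}$ contraction factor uniformly over $H$; the same transformation supplies the $\mathcal{O}(\eta/\gamma)$ conversion factor of \cref{fedaci:stab:3} that turns the telescoped variance $2\gamma^2\sigma^2 K$ into the $7\eta\gamma K\sigma^2$ prefactor of \cref{fedaci:stab:main}. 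Without specifying this norm (or an equivalent Lyapunov function for the two-dimensional recursion), the amplification estimate you assert is unsupported, and with it falls the exponent $(1+2\gamma^2\mu/\eta)^{2K}$ and hence the entire acceleration--stability trade-off. Everything downstream of that lemma in your proposal is correct and identical to the paper's route.
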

\begin{remark}
  The simplified version \cref{fedac:a1}(a) in the main body can be obtained by replacing $K$ with $T/R$ and upper bound $\Psi_0$ by $LD_0^2$.
\end{remark}

The proof of \cref{fedaci:full} is based on the following two lemmas regarding convergence and stability respectively.
To clarify the hyperparameter dependency, we state these lemmas for general $\gamma \in \left[\eta, \sqrt{ \frac{\eta}{\mu}} \right]$, which has one more degree of freedom than \fedaci where $\gamma = \max \left\{ \sqrt{\frac{\eta}{\mu K}}, \eta\right\}$ is fixed.

\begin{lemma}[Potential-based perturbed iterate analysis for \fedaci]
  \label{fedaci:conv:main}
  Let $F$ be $\mu>0$-strongly convex, and assume \cref{asm1}, then for $\alpha = \frac{1}{\gamma \mu}$, $\beta = \alpha + 1$, $\gamma \in \left[\eta, \sqrt{ \frac{\eta}{\mu}} \right]$, $\eta \in \left(0, \frac{1}{L}\right]$, \fedac yields
  \begin{align}
     & \expt \left[\Psi_T \right]          \leq \exp \left( - \gamma \mu T \right)  \Psi_0 + \frac{\eta^2 L \sigma^2}{2\gamma \mu} + \frac{\gamma\sigma^2}{2M}
    \\
     & + L \cdot \max_{0 \leq t < T}  \expt
    \left[\frac{1}{M} \sum_{m=1}^M \left\| \overline{w_t^{\mathrm{md}}} - w_t^{\mathrm{md}, m}  \right\|
    \left\|  \frac{1}{1+\gamma\mu}(\overline{w_t} - w_t^m) + \frac{\gamma \mu}{1+\gamma\mu} (\overline{w_t^{\mathrm{ag}}} - w_t^{\mathrm{ag},m})  \right\|  \right],
  \end{align}
  where $\Psi_t$ is the decentralized potential defined in \cref{eq:fedaci:potential}.
\end{lemma}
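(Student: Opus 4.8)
The plan is to prove a one-step recursion $\expt[\Psi_{t+1}\mid\mathcal F_t]\le(1-\gamma\mu)\Psi_t+\frac{L\eta^2\sigma^2}{2}+\frac{\mu\gamma^2\sigma^2}{2M}+\gamma\mu L\,\expt[D_t]$, where $D_t$ abbreviates the per-step discrepancy product, and then unroll it. The first move is to dispose of the synchronization bookkeeping: since averaging commutes with broadcasting, the averaged main iterate $\overline{w_{t+1}}=\frac1M\sum_m v_{t+1}^m$ is the same on local and on sync steps, so the distance part $\frac\mu2\|\overline{w_{t+1}}-w^*\|^2$ is sync-invariant; and for the decentralized function part, convexity (Jensen) gives $\frac1M\sum_m F(w_{t+1}^{\mathrm{ag},m})\le\frac1M\sum_m F(v_{t+1}^{\mathrm{ag},m})$ in both cases (equality on local steps, and a sync step only helps via $F(\overline{v_{t+1}^{\mathrm{ag}}})\le\frac1M\sum_m F(v_{t+1}^{\mathrm{ag},m})$). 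Thus it suffices to bound the ``virtual'' potential built from the candidate iterates as if every step were local.

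Using $\alpha^{-1}=\gamma\mu$ and $\beta^{-1}=\frac{\gamma\mu}{1+\gamma\mu}$, the averaged triple obeys the single-worker dynamics driven by $\overline{g_t}$, whose conditional mean is $\overline{\nabla F}:=\frac1M\sum_m\nabla F(w_t^{\mathrm{md},m})$ and whose conditional variance is at most $\sigma^2/M$ by independence across workers (this is the origin of the $1/M$). I introduce $z_t^m:=\frac1{1+\gamma\mu}w_t^m+\frac{\gamma\mu}{1+\gamma\mu}w_t^{\mathrm{ag},m}$ and record the coupling identity $\gamma\mu\,w^*+(1-\gamma\mu)w_t^{\mathrm{ag},m}-w_t^{\mathrm{md},m}=\gamma\mu(w^*-z_t^m)$ (and its average over $m$), which the choices $\alpha=\frac1{\gamma\mu},\beta=\alpha+1$ are engineered to produce. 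For the function half I combine $L$-smoothness with $\eta\le\frac1L$ to get $\expt[F(v_{t+1}^{\mathrm{ag},m})]\le F(w_t^{\mathrm{md},m})-\frac\eta2\|\nabla F(w_t^{\mathrm{md},m})\|^2+\frac{L\eta^2\sigma^2}{2}$, and then apply $\mu$-strong convexity at $w_t^{\mathrm{md},m}$ against the two anchors $w^*$ and $w_t^{\mathrm{ag},m}$ with weights $\gamma\mu$ and $1-\gamma\mu$; averaging over $m$ turns $\frac1M\sum_m F(w_t^{\mathrm{md},m})$ into $(1-\gamma\mu)\big(\frac1M\sum_m F(w_t^{\mathrm{ag},m})-F^*\big)+F^*$, minus a gradient inner product along $w^*-z_t^m$ and minus strong-convexity penalties. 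For the distance half I expand $\frac\mu2\expt\|\overline{w_{t+1}}-w^*\|^2$ about $\overline{z_t}-w^*$, producing the matching inner product $+\mu\gamma\langle w^*-\overline{z_t},\overline{\nabla F}\rangle$, a gradient-norm term $\frac{\mu\gamma^2}2\|\overline{\nabla F}\|^2$, and the variance term $\frac{\mu\gamma^2\sigma^2}{2M}$.

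Three reductions then close the one-step bound. First, the gradient-norm terms combine into $\big(-\frac\eta2+\frac{\mu\gamma^2}2\big)\frac1M\sum_m\|\nabla F(w_t^{\mathrm{md},m})\|^2$ after a Jensen step $\|\overline{\nabla F}\|^2\le\frac1M\sum_m\|\nabla F(w_t^{\mathrm{md},m})\|^2$, and this is nonpositive exactly because $\gamma\le\sqrt{\eta/\mu}$ forces $\mu\gamma^2\le\eta$; this is precisely where the hypothesis on $\gamma$ enters. Second, the two inner products leave the residual $\gamma\mu\frac1M\sum_m\langle\nabla F(w_t^{\mathrm{md},m}),z_t^m-\overline{z_t}\rangle$, which I re-center by subtracting the constant $\nabla F(\overline{w_t^{\mathrm{md}}})$ (legitimate since $\sum_m(z_t^m-\overline{z_t})=0$) and then bound via Cauchy--Schwarz and $L$-smoothness by $\gamma\mu L\frac1M\sum_m\|\overline{w_t^{\mathrm{md}}}-w_t^{\mathrm{md},m}\|\,\|z_t^m-\overline{z_t}\|$, recognizing $\|z_t^m-\overline{z_t}\|$ as the stated second factor. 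Third, the quadratic distance terms contract to $(1-\gamma\mu)\frac\mu2\|\overline{w_t}-w^*\|^2$: I keep $\|\overline{z_t}-w^*\|^2$ centralized, use Jensen to replace each subtracted per-worker penalty by its centralized version ($\frac1M\sum_m\|x^m\|^2\ge\|\overline{x}\|^2$, the correct direction as these enter with a minus sign), and invoke the averaged version of the quadratic identity, whose slack is the nonnegative $\frac{\gamma\mu(1-\gamma\mu)}{1+\gamma\mu}\|\overline{w_t}-\overline{w_t^{\mathrm{ag}}}\|^2$ that I discard. Unrolling with $(1-\gamma\mu)^T\le e^{-\gamma\mu T}$ and $\sum_{j\ge0}(1-\gamma\mu)^j\le\frac1{\gamma\mu}$ inflates the per-step noises to $\frac{\eta^2L\sigma^2}{2\gamma\mu}+\frac{\gamma\sigma^2}{2M}$ and the per-step discrepancy to $L\max_t\expt[D_t]$, which is the claimed overhead.

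The main obstacle I anticipate is getting the distance terms to contract to the \emph{centralized} $\frac\mu2\|\overline{w_t}-w^*\|^2$ rather than to an average of per-worker distances. The tempting estimate $\|\overline{z_t}-w^*\|^2\le\frac1M\sum_m\|z_t^m-w^*\|^2$ points the wrong way; one must instead leave that term centralized and apply Jensen only to the subtracted strong-convexity penalties before using the quadratic identity. Keeping the direction of every convexity/Jensen step correct, and confirming that the coupling identity together with the quadratic identity (which ultimately reduces to $\|\overline{w_t}-\overline{w_t^{\mathrm{ag}}}\|^2\ge0$) really holds for the prescribed $\alpha,\beta$, is the delicate part; the variance bookkeeping and the geometric unrolling are routine.
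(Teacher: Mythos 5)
Your proposal is correct and follows essentially the same route as the paper's proof: the same split of $\Psi_t$ into the averaged-distance and decentralized-function halves, the same coupling combination $z_t^m=\frac{1}{1+\gamma\mu}w_t^m+\frac{\gamma\mu}{1+\gamma\mu}w_t^{\mathrm{ag},m}$ engineered by $\alpha=\frac{1}{\gamma\mu}$, $\beta=\alpha+1$, the same cancellation of the inner products up to a residual that is re-centered by $\nabla F(\overline{w_t^{\mathrm{md}}})$ and bounded by smoothness to produce the discrepancy term, the same use of $\gamma^2\mu\le\eta$ to discard the gradient-norm term, and the same Jensen treatment of synchronization and telescoping. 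The only deviations are cosmetic (you apply Jensen to the gradient norms in the opposite but equally valid direction, and you make explicit the nonnegative slack of the convex-combination identity that the paper simply drops via Jensen); the slightly off constant in that slack is immaterial since the term is discarded.
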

The proof of \cref{fedaci:conv:main} is deferred to \cref{sec:fedaci:conv:main}.
\begin{lemma}[Discrepancy overhead bound]
  \label{fedaci:stab:main}
  In the same setting of \cref{fedaci:conv:main}, \fedac satisfies
  \begin{align}
         & \expt
    \left[ \frac{1}{M} \sum_{m=1}^M
    \left\| \overline{w_t^{\mathrm{md}}} - w_t^{\mathrm{md}, m} \right\|
    \left\|  \frac{1}{1+\gamma\mu}(\overline{w_t} - w_t^m) + \frac{\gamma \mu}{1+\gamma\mu} (\overline{w_t^{\mathrm{ag}}} - w_t^{\mathrm{ag},m})  \right\|  \right]
    \\
    \leq &
    \begin{cases}
      7 \eta \gamma K \sigma^2 \left(1 + \frac{2\gamma^2\mu}{\eta}\right)^{2K}
       & \text{if~} \gamma \in \left(\eta, \sqrt{\frac{\eta}{\mu}} \right],
      \\
      7 \eta^2 K \sigma^2
       &
      \text{if~} \gamma = \eta.
    \end{cases}
  \end{align}
\end{lemma}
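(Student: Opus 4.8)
The plan is to reduce the mixed product on the left-hand side to two mean-square discrepancies and then bound each by a stability analysis of the local \fedac recursion between two consecutive synchronizations. Write the deviations from the worker-average as $\Delta_t^{m} := \overline{w_t} - w_t^m$, $\Delta_t^{\mathrm{ag},m} := \overline{w_t^{\mathrm{ag}}} - w_t^{\mathrm{ag},m}$, and $\Delta_t^{\mathrm{md},m} := \overline{w_t^{\mathrm{md}}} - w_t^{\mathrm{md},m} = \beta^{-1}\Delta_t^m + (1-\beta^{-1})\Delta_t^{\mathrm{ag},m}$; all three vanish at the last synchronization before $t$. By Cauchy--Schwarz over the joint index $(m,\omega)$, the left-hand side is at most $\sqrt{P_t\, Q_t}$ with $P_t := \expt[\frac1M\sum_m\|\Delta_t^{\mathrm{md},m}\|^2]$ and $Q_t := \expt[\frac1M\sum_m\|\frac{1}{1+\gamma\mu}\Delta_t^m + \frac{\gamma\mu}{1+\gamma\mu}\Delta_t^{\mathrm{ag},m}\|^2]$. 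The key is to expand $P_t$ and $Q_t$ \emph{exactly} (keeping the coefficients squared) in terms of $A_t := \expt[\frac1M\sum_m\|\Delta_t^m\|^2]$, $B_t := \expt[\frac1M\sum_m\|\Delta_t^{\mathrm{ag},m}\|^2]$ and their cross term, rather than bounding by convexity: since $\beta^{-1} = \frac{\gamma\mu}{1+\gamma\mu}$ enters $P_t$ squared and $\gamma^2\mu \le \eta$ in the admissible range, the $\Delta^m$ contribution to $P_t$ is suppressed, so $P_t$ is of the $\eta^2$-scale of $B_t$ while $Q_t$ is of the $\gamma^2$-scale of $A_t$. Their geometric mean then produces the $\eta\gamma$ scaling of the claim.

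Second, I would derive the one-step recursion for $(\Delta_t^m, \Delta_t^{\mathrm{ag},m})$ on a local (non-synchronizing) step by subtracting the averaged update from the per-worker update, obtaining a linear recursion driven by the gradient discrepancy $\overline{g_t} - g_t^m$. Splitting $g_t^m = \nabla F(w_t^{\mathrm{md},m}) + \zeta_t^m$ into its mean and a conditionally zero-mean, $\sigma^2$-bounded noise, independence of $\zeta_t^m$ from $\mathcal{F}_t$ annihilates cross terms in expectation, so the second-moment recursion splits into a deterministic propagation plus a fresh per-step noise injection of size $\mathcal{O}(\gamma^2\sigma^2)$ into $A$ and $\mathcal{O}(\eta^2\sigma^2)$ into $B$ (using $\frac1M\sum_m\|\overline{\zeta_t}-\zeta_t^m\|^2 \le \sigma^2$). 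For the deterministic part I would invoke the integral mean-value form $\nabla F(x)-\nabla F(y) = H(x-y)$ with $0 \preceq H \preceq L I$ (convexity and $L$-smoothness), turning one local step into multiplication by a state matrix $M(h)$ acting on $(\Delta^m,\Delta^{\mathrm{ag},m})$ per curvature eigendirection $h\in[0,L]$, for which a direct computation gives $\det M(h) = (1-\eta h)\frac{\alpha-1}{\alpha+1} < 1$ whenever $\eta \le 1/L$. The cross-worker average $\frac1M\sum_{m'}H\,\Delta^{\mathrm{md},m'}$ appearing in $\overline{g_t}$ is a common shift; since $\sum_m\Delta_t^m = 0$ is preserved by the dynamics, it contributes only a benign nonnegative term $\mathcal{O}(\gamma^2 L^2\, P_t)$ that I fold into the growth factor.

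The crux, and the main obstacle, is that $M(h)$ is \emph{non-normal}: its spectral radius does not control the norm of the products $\prod_s M(h_s)$ with $h_s$ varying along a non-quadratic trajectory, and this transient amplification is exactly the mechanism behind the instability of standard \agd established in \cref{instability:sketch}. Rather than arguing through eigenvalues, I would exhibit a quadratic Lyapunov form on $(\Delta^m,\Delta^{\mathrm{ag},m})$ --- a weighted combination of $\|\Delta^m\|^2$, $\|\Delta^{\mathrm{ag},m}\|^2$ and their inner product --- under which every $M(h)$ with $h\in[0,L]$ expands by at most a factor $1 + \frac{2\gamma^2\mu}{\eta}$ per step. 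This is where the admissible range $\gamma\in[\eta,\sqrt{\eta/\mu}]$ is decisive: at the aggressive end $\gamma=\sqrt{\eta/\mu}$ the factor is $1+2$, yielding genuine exponential growth in $K$, whereas the balanced choice $\gamma=\sqrt{\eta/(\mu K)}$ gives $\frac{2\gamma^2\mu}{\eta}=\frac{2}{K}$, so the accumulated factor $G := (1+\frac{2\gamma^2\mu}{\eta})^{2K}$ collapses to $\mathcal{O}(1)$.

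Finally, I would unroll the second-moment recursion over one synchronization block of length at most $K$. Starting from $A=B=0$ at the preceding synchronization, $A_t$ and $B_t$ become geometric sums of the per-step injections weighted by the propagation factor, giving $A_t \le 7\gamma^2 K\sigma^2\,G$ and $B_t \le 7\eta^2 K\sigma^2\,G$, with the cross term controlled by Cauchy--Schwarz. In the degenerate case $\gamma=\eta$, the generalized accelerated dynamics reduces to plain SGD, whose gradient step is non-expansive for $\eta\le 1/L$; the non-normal amplification disappears, the sums are simply linear in $K$, and one obtains the second branch $7\eta^2 K\sigma^2$. Substituting the exact expansions $P_t = \mathcal{O}(\eta^2 K\sigma^2 G)$ and $Q_t = \mathcal{O}(\gamma^2 K\sigma^2 G)$ from the first step into $\sqrt{P_t Q_t}$ yields the stated product bound $7\eta\gamma K\sigma^2(1+\frac{2\gamma^2\mu}{\eta})^{2K}$, completing the proof.
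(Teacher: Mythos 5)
Your plan follows essentially the same route as the paper's proof: Cauchy--Schwarz to decouple the product into two second-moment discrepancies with mismatched scales ($\eta^2$ versus $\gamma^2$), a two-dimensional linear recursion for the deviations driven by a mean-value Hessian, the observation that non-normality of the state matrix defeats spectral-radius arguments (this is precisely what \cref{instability:sketch} formalizes), a quadratic Lyapunov form under which every admissible state matrix expands by at most $1+\frac{2\gamma^2\mu}{\eta}$ (the paper's transformed operator norm $\|\mathcal{X}^{-1}\mathcal{A}\mathcal{X}\|$ with $\mathcal{X}(\gamma,\eta)$ in \cref{fedaci:stab:bound} is exactly such a form), and a telescoped sum of $\mathcal{O}(\gamma^2\sigma^2)$ per-step noise injections over a block of length at most $K$. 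Your scale bookkeeping for $P_t$ and $Q_t$, and the identification of the two regimes of $\gamma$, are correct.

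The one step that would fail as written is your treatment of the cross-worker average $\frac{1}{M}\sum_{m'}H^{m'}\Delta_t^{\mathrm{md},m'}$ arising because you run the recursion on deviations from the mean. You propose to fold a ``benign nonnegative term $\mathcal{O}(\gamma^2L^2P_t)$'' into the growth factor, but $\gamma^2L^2$ is not dominated by $\gamma^2\mu/\eta$: with $\eta=1/L$ and $\gamma^2=\eta/(\mu K)$ one gets $\gamma^2L^2=\frac{L}{\mu K}$ versus $\gamma^2\mu/\eta=\frac{1}{K}$, so compounding such a term over $K$ steps multiplies the bound by roughly $\exp(L/\mu)$ and destroys the claim. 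The fact you invoke, $\sum_m\Delta_t^m=0$, does rescue the argument, but by a different mechanism: the common shift equals exactly minus the worker-average of the propagated deviations, so the update is a centering operation, and for any inner-product norm $\frac{1}{M}\sum_m\|x_m-\bar{x}\|_\star^2=\frac{1}{M}\sum_m\|x_m\|_\star^2-\|\bar{x}\|_\star^2\le\frac{1}{M}\sum_m\|x_m\|_\star^2$; the term should be dropped, not charged to the growth factor. The paper sidesteps the issue entirely by running the recursion on pairwise differences $w_t^{m_1}-w_t^{m_2}$ (\cref{fedac:general:stab}), for which the mean-value theorem supplies a single $H_t$ and no common shift appears, recovering the average-deviation quantities only at the end via convexity. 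With either repair in place, the remaining content is the explicit construction of the Lyapunov form, which you assert but do not exhibit; that computation is the technical heart of \cref{fedaci:stab:bound}. (Minor: for $\gamma=\eta$ the dynamics do not literally reduce to SGD, since $\alpha=1/(\eta\mu)\neq 1$, but the transformed norm is indeed non-expansive there, so your conclusion for that branch stands.)
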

The proof of \cref{fedaci:stab:main} is deferred to \cref{sec:fedaci:stab:main}.

Now we plug in the choice of $\gamma = \max \left\{ \sqrt{\frac{\eta}{\mu K}}, \eta\right\}$ to \cref{fedaci:conv:main,fedaci:stab:main}, which leads to  the following lemma.
\begin{lemma}[Convergence of \fedaci for general $\eta$]
  \label{fedaci:general:eta}
    Let $F$ be $\mu > 0$-strongly convex, and assume \cref{asm1}, then for any $\eta \in \left(0, \frac{1}{L}\right]$, \fedaci yields
    \begin{equation}
      \expt[\Psi_T]
      \leq \exp \left(  - \max \left\{ \eta \mu, \sqrt{\frac{\eta \mu}{K}}\right\}T \right) \Psi_0
      + 
      \frac{\eta^{\frac{1}{2}} \sigma^2}{2 \mu^{\frac{1}{2}} M K^{\frac{1}{2}} }
      + \frac{\eta \sigma^2}{2M}
      + \frac{390 \eta^{\frac{3}{2}} L K^{\frac{1}{2}} \sigma^2} {\mu^{\frac{1}{2}}}
      + 7 \eta^2 L K \sigma^2,
      \label{eq:fedaci:general:eta}
    \end{equation}
    where $\Psi_t$ is the decentralized potential defined in \cref{eq:fedaci:potential}.
\end{lemma}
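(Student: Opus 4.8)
The plan is to substitute the specific choice $\gamma = \max\{\sqrt{\eta/(\mu K)}, \eta\}$ into the two building-block lemmas, \cref{fedaci:conv:main} (the convergence bound carrying a residual discrepancy overhead) and \cref{fedaci:stab:main} (the bound on that overhead), and then to simplify everything by a two-case analysis according to which argument of the $\max$ is active. First I would verify that this $\gamma$ is admissible, i.e.\ that $\gamma \in [\eta, \sqrt{\eta/\mu}]$ so that both lemmas apply: the lower bound $\gamma \geq \eta$ is immediate from the $\max$, and the upper bound holds because $\sqrt{\eta/(\mu K)} \leq \sqrt{\eta/\mu}$ (as $K \geq 1$) while $\eta \leq \sqrt{\eta/\mu}$ (as $\eta \leq 1/L \leq 1/\mu$). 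Plugging \cref{fedaci:stab:main} into the overhead term of \cref{fedaci:conv:main} and multiplying by the leading $L$ produces a single combined inequality
\begin{equation*}
\expt[\Psi_T] \leq \exp(-\gamma\mu T)\,\Psi_0 + \frac{\eta^2 L\sigma^2}{2\gamma\mu} + \frac{\gamma\sigma^2}{2M} + L\cdot(\text{discrepancy}),
\end{equation*}
in which, once $\gamma$ is fixed, every term is a function of $\eta$ alone.

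For the exponential term I would note $\gamma\mu = \max\{\sqrt{\eta\mu/K}, \eta\mu\}$, which reproduces the claimed rate $\exp(-\max\{\eta\mu, \sqrt{\eta\mu/K}\}T)$ verbatim. For the $\tfrac{\gamma\sigma^2}{2M}$ term I would use the elementary inequality $\gamma \leq \sqrt{\eta/(\mu K)} + \eta$ to split it uniformly into the two target terms $\tfrac{\eta^{1/2}\sigma^2}{2\mu^{1/2}MK^{1/2}}$ and $\tfrac{\eta\sigma^2}{2M}$. The remaining two source terms, $\tfrac{\eta^2 L\sigma^2}{2\gamma\mu}$ and $L\cdot(\text{discrepancy})$, are then handled by cases. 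In the first case, $\eta \leq 1/(\mu K)$ so $\gamma = \sqrt{\eta/(\mu K)}$ and I would invoke the first branch of \cref{fedaci:stab:main}: the crucial simplification is $2\gamma^2\mu/\eta = 2/K$, so the amplification factor collapses to $(1+2/K)^{2K} \leq \euler^4$; combined with $7\eta\gamma LK\sigma^2 = 7\eta^{3/2}LK^{1/2}\sigma^2/\mu^{1/2}$ and $\tfrac{\eta^2 L\sigma^2}{2\gamma\mu} = \tfrac{\eta^{3/2}LK^{1/2}\sigma^2}{2\mu^{1/2}}$, both fold into the single target term $\tfrac{390\,\eta^{3/2}LK^{1/2}\sigma^2}{\mu^{1/2}}$, since $7\euler^4 + \tfrac12 < 390$.

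In the second case, $\eta > 1/(\mu K)$ so $\gamma = \eta$, and the second branch of \cref{fedaci:stab:main} gives $L\cdot(\text{discrepancy}) \leq 7\eta^2 LK\sigma^2$, matching the last target term exactly; the leftover $\tfrac{\eta^2 L\sigma^2}{2\gamma\mu} = \tfrac{\eta L\sigma^2}{2\mu}$ is absorbed into $\tfrac{390\,\eta^{3/2}LK^{1/2}\sigma^2}{\mu^{1/2}}$ using the case hypothesis $\eta K\mu > 1$, since the ratio of the former to the latter equals $\tfrac{1}{780(\eta K\mu)^{1/2}} < 1$. Collecting the contributions in either case then yields exactly the five terms on the right-hand side of \eqref{eq:fedaci:general:eta}.

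I do not expect a genuine analytic obstacle here: the whole argument is a deterministic simplification of two lemmas already in hand, and the only real work is bookkeeping of constants. The one nontrivial estimate is the uniform amplification bound $(1+2/K)^{2K} \leq \euler^4$ for $K \geq 1$, which is precisely why the balanced choice $\gamma = \sqrt{\eta/(\mu K)}$ keeps the discrepancy growth in the linear rather than exponential regime---the quantitative payoff of the acceleration--stability tradeoff of Step~3 in the proof sketch. The remaining care is to check the numeric inequality $7\euler^4 + \tfrac12 < 390$ and the consistency of the two branches at the boundary $\eta = 1/(\mu K)$, where $\gamma = \eta$ and the first-branch bound dominates the second-branch one.
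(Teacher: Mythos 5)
Your proposal is correct and follows essentially the same route as the paper: verify $\gamma=\max\{\sqrt{\eta/(\mu K)},\eta\}\in[\eta,\sqrt{\eta/\mu}]$, substitute into \cref{fedaci:conv:main} and \cref{fedaci:stab:main}, use $(1+2/K)^{2K}\le \euler^4$ to tame the amplification factor, and collect constants via $7\euler^4+\tfrac12<390$. The only difference is stylistic bookkeeping — you split into explicit cases by which argument of the $\max$ is active, whereas the paper bounds the resulting $\min$/$\max$ expressions uniformly by single branches or sums of branches — and both yield the same five terms.
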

\begin{proof}[Proof of \cref{fedaci:general:eta}]
  It is direct to verify that $\gamma = \max \left\{\eta, \sqrt{\frac{\eta}{\mu K}} \right\} \in \left[\eta, \sqrt{\frac{\eta}{\mu}}\right]$ so both \cref{fedaci:conv:main,fedaci:stab:main} are applicable.
  Applying \cref{fedaci:conv:main} yields
    \begin{align}
      \expt[\Psi_T] \leq & \exp\left( - \max \left\{ \eta \mu, \sqrt{\frac{\eta \mu}{K}}\right\}T \right) \Psi_0
      +
      \min\left\{ \frac{\eta L \sigma^2}{2 \mu}, \frac{\eta^{\frac{3}{2}} L K^{\frac{1}{2}} \sigma^2}{2 \mu^{\frac{1}{2}}} \right\}
      + \max\left\{ \frac{\eta \sigma^2}{2M}, \frac{\eta^{\frac{1}{2}} \sigma^2}{2 \mu^{\frac{1}{2}} M K^{\frac{1}{2}} }          \right\}
      \\
                        & + L \cdot \max_{0 \leq t < T}  \expt
      \left[\frac{1}{M} \sum_{m=1}^M \left\| \overline{w_t^{\mathrm{md}}} - w_t^{\mathrm{md}, m}  \right\|
      \left\|  \frac{1}{1+\gamma\mu}(\overline{w_t} - w_t^m) + \frac{\gamma \mu}{1+\gamma\mu} (\overline{w_t^{\mathrm{ag}}} - w_t^{\mathrm{ag},m})  \right\|  \right].
      \label{eq:fedaci:proof:1}
    \end{align}
  We bound $\max\left\{ \frac{\eta \sigma^2}{2M}, \frac{\eta^{\frac{1}{2}} \sigma^2}{2 \mu^{\frac{1}{2}} M K^{\frac{1}{2}} } \right\}$ by $\frac{\eta \sigma^2}{2M}+ \frac{\eta^{\frac{1}{2}} \sigma^2}{2 \mu^{\frac{1}{2}} M K^{\frac{1}{2}} } $, and bound $    \min\left\{ \frac{\eta L \sigma^2}{2 \mu}, \frac{\eta^{\frac{3}{2}} L K^{\frac{1}{2}} \sigma^2}{2 \mu^{\frac{1}{2}}} \right\}$ by    $\frac{\eta^{\frac{3}{2}} L K^{\frac{1}{2}} \sigma^2}{2 \mu^{\frac{1}{2}}}$, which gives
  \begin{equation}
    \min\left\{ \frac{\eta L \sigma^2}{2 \mu}, \frac{\eta^{\frac{3}{2}} L K^{\frac{1}{2}} \sigma^2}{2 \mu^{\frac{1}{2}}} \right\}
    + \max\left\{ \frac{\eta \sigma^2}{2M}, \frac{\eta^{\frac{1}{2}} \sigma^2}{2 \mu^{\frac{1}{2}} M K^{\frac{1}{2}} }          \right\}
    \leq
    \frac{\eta^{\frac{3}{2}} L K^{\frac{1}{2}} \sigma^2}{2 \mu^{\frac{1}{2}}}  +  \frac{\eta \sigma^2}{2M} + \frac{\eta^{\frac{1}{2}} \sigma^2}{2 \mu^{\frac{1}{2}} M K^{\frac{1}{2}} }.
    \label{eq:fedaci:proof:1.1}
  \end{equation}
  Applying \cref{fedaci:stab:main} with $\gamma = \max\left\{ \eta, \sqrt{\frac{\eta}{\mu K}}\right\}$ gives
  \begin{align}
         & \expt
    \left[ \frac{1}{M} \sum_{m=1}^M
    \left\| \overline{w_t^{\mathrm{md}}} - w_t^{\mathrm{md}, m} \right\|
    \left\|  \frac{1}{1+\gamma\mu}(\overline{w_t} - w_t^m) + \frac{\gamma \mu}{1+\gamma\mu} (\overline{w_t^{\mathrm{ag}}} - w_t^{\mathrm{ag},m})  \right\|  \right]
    \\
    \leq &
    \begin{cases}
      7  \eta \sqrt{\frac{\eta}{\mu K}} K \sigma^2 \left(1 + \frac{2}{K}\right)^{2K}
       & \text{if~} \gamma = \sqrt{\frac{\eta}{\mu K}}
      \\
      7  \eta^2 K \sigma^2
       &
      \text{if~} \gamma = \eta
    \end{cases}
    \\
    \leq & \frac{7  \euler^4 \eta^{\frac{3}{2}} K^{\frac{1}{2}} \sigma^2} {\mu^{\frac{1}{2}}} + 7 \eta^2 K \sigma^2.
    \label{eq:fedaci:proof:2}
  \end{align}
  Combining \cref{eq:fedaci:proof:1,eq:fedaci:proof:1.1,eq:fedaci:proof:2} yields
    \begin{equation}
      \expt[\Psi_T]
      \leq \exp \left(  - \max \left\{ \eta \mu, \sqrt{\frac{\eta \mu}{K}}\right\}T \right) \Psi_0
      + 
      \frac{\eta^{\frac{1}{2}} \sigma^2}{2 \mu^{\frac{1}{2}} M K^{\frac{1}{2}} }
      + \frac{\eta \sigma^2}{2M}
      + \frac{(7 \euler^4 + \frac{1}{2}) \eta^{\frac{3}{2}} L K^{\frac{1}{2}} \sigma^2} {\mu^{\frac{1}{2}}}
      + 7 \eta^2 L K \sigma^2.
    \end{equation}
  The lemma then follows by leveraging the estimate $7 \euler^4 + \frac{1}{2} < 390$ for the coefficient of $\frac{\eta^{\frac{3}{2}} L K^{\frac{1}{2}} \sigma^2} {\mu^{\frac{1}{2}}}$.
\end{proof}
The main \cref{fedaci:full} then follows by plugging an appropriate $\eta$ to \cref{fedaci:general:eta}.
\begin{proof}[Proof of \cref{fedaci:full}]
  To simplify the notation, we denote the decreasing term in \cref{eq:fedaci:general:eta} as $\varphi_{\downarrow}(\eta)$ and the increasing term as $\varphi_{\uparrow}(\eta)$, namely
  \begin{align}
    \varphi_{\downarrow}(\eta) := \exp \left(  - \max \left\{ \eta \mu, \sqrt{\frac{\eta \mu}{K}}\right\}T \right) \Psi_0,
    \quad
    \varphi_{\uparrow}(\eta) :=  \frac{\eta^{\frac{1}{2}} \sigma^2}{2 \mu^{\frac{1}{2}} M K^{\frac{1}{2}} }
    + \frac{\eta \sigma^2}{2M}
    + \frac{390 \eta^{\frac{3}{2}} L K^{\frac{1}{2}} \sigma^2} {\mu^{\frac{1}{2}}}
    + 7 \eta^2 L K \sigma^2.
  \end{align}
  Now let 
  \begin{equation}
    \eta_0 := \frac{K}{\mu T^2} \log^2 \left( \euler + \min \left\{ \frac{\mu M T \Psi_0}{\sigma^2}, \frac{\mu^2 T^3 \Psi_0}{L K^2 \sigma^2} \right\} \right),
  \end{equation}
  and then $ \eta = \min \left\{ \frac{1}{L}, \eta_0  \right\}$. 
  Therefore, the decreasing term $\varphi_{\downarrow}(\eta)$  is upper bounded by $\varphi_{\downarrow}(\frac{1}{L}) + \varphi_{\downarrow}(\eta_0)$, where
  \begin{equation}
    \varphi_{\downarrow} \left(\frac{1}{L} \right)
    =
    \min \left\{ \exp \left( - \frac{\mu T}{L} \right), \exp \left( - \frac{\mu^{\frac{1}{2}} T}{L^{\frac{1}{2}} K^{\frac{1}{2}}} \right)\right\}\Psi_0,
    \label{eq:fedaci:proof:4}
  \end{equation}
  and
  \begin{equation}
    \varphi_{\downarrow}(\eta_0) \leq \exp \left(  - \sqrt{\frac{\eta_0 \mu}{K}}T \right) \Psi_0
    =
    \left( \euler + \min \left\{ \frac{\mu M T \Psi_0}{\sigma^2}, \frac{\mu^2 T^3 \Psi_0}{L K^2 \sigma^2} \right\} \right)^{-1} \Psi_0
    \leq
    \frac{\sigma^2}{\mu M T} + \frac{LK^2 \sigma^2}{ \mu^2 T^3}.
    \label{eq:fedaci:proof:5}
  \end{equation}
  On the other hand
  \begin{align}
    \varphi_{\uparrow}(\eta)
    \leq
    \varphi_{\uparrow}(\eta_0)
    \leq
    & 
    \frac{\sigma^2}{2\mu MT} \log \left(  \euler + \frac{\mu M T \Psi_0}{\sigma^2} \right) 
    +
    \frac{K \sigma^2}{2 \mu M T^2} \log^2 \left(  \euler + \frac{\mu M T \Psi_0}{\sigma^2} \right) 
    \\
    & + \frac{390  LK^2 \sigma^2}{\mu^2 T^3} \log^3 \left( \euler + \frac{\mu^2 T^3 \Psi_0}{LK^2\sigma^2} \right)
    + \frac{7 LK^3 \sigma^2}{\mu^2 T^4} \log^4 \left( \euler + \frac{\mu^2 T^3 \Psi_0}{LK^2\sigma^2} \right)
    \\
    \leq & \frac{\sigma^2}{\mu M T} \log^2 \left(  \euler + \frac{\mu M T \Psi_0}{\sigma^2} \right) 
    + \frac{397 LK^2 \sigma^2}{\mu^2 T^3} \log^4 \left( \euler + \frac{\mu^2 T^3 \Psi_0}{LK^2\sigma^2} \right),
    \label{eq:fedaci:proof:6}
  \end{align}
  where the last inequality is due to $\frac{K\sigma^2}{2\mu MT} \leq \frac{\sigma^2}{\mu MT}$ and $\frac{7LK^3 \sigma^2}{\mu^2T^4} \leq \frac{7LK^2 \sigma^2}{\mu^2T^3}$ since $K \leq T$.

  Combining \cref{fedaci:general:eta,eq:fedaci:proof:4,eq:fedaci:proof:5,eq:fedaci:proof:6} gives
  \begin{align}
    & \expt [\Psi_T]
    \leq 
    \varphi_{\downarrow} \left( \frac{1}{L} \right) + \varphi_{\downarrow}(\eta_0) + \varphi_{\uparrow}(\eta) 
    \\
    \leq & 
    \min \left\{ \exp \left( - \frac{\mu T}{L} \right), \exp \left( - \frac{\mu^{\frac{1}{2}} T}{L^{\frac{1}{2}} K^{\frac{1}{2}}} \right)\right\}\Psi_0
    + \frac{2\sigma^2}{\mu M T} \log^2 \left(  \euler + \frac{\mu M T \Psi_0}{\sigma^2} \right) 
    + \frac{400 LK^2 \sigma^2}{\mu^2 T^3} \log^4 \left( \euler + \frac{\mu^2 T^3 \Psi_0}{LK^2\sigma^2} \right),
  \end{align}
  completing the proof of main \cref{fedaci:full}.
\end{proof}
\subsection{Perturbed iterate analysis for \fedaci: Proof of \cref{fedaci:conv:main}}
\label{sec:fedaci:conv:main}
In this section we will prove \cref{fedaci:conv:main}.
We start by the one-step analysis of the decentralized potential $\Psi_t$ defined in \cref{eq:fedaci:potential}. The following two propositions establish the one-step analysis of the two quantities in $\Psi_t$, namely $\|\overline{w_t} - w^*\|^2$ and $\frac{1}{M} \sum_{m=1}^M F(w_t^{\mathrm{ag},m}) - F^*$. 
We only require minimal hyperparameter assumptions, namely $\alpha \geq 1, \beta \geq 1, \eta \leq \frac{1}{L}$, for these two propositions.
We will then show how the choice of $\alpha, \beta$ is determined towards the proof of \cref{fedaci:conv:main} in order to couple the two quantities into potential $\Psi_t$.
\begin{proposition}
  \label{fedaci:conv:1}
  Let $F$ be $\mu>0$-strongly convex, and assume \cref{asm1}, then for \fedac with hyperparameters assumptions $\alpha \geq 1$, $\beta \geq 1$, $\eta \leq \frac{1}{L}$, the following inequality holds
  \begin{align}
         & \expt [ \|\overline{w_{t+1}} - w^*\|^2 |\mathcal{F}_t]
    \\
    \leq & (1 - \alpha^{-1}) \| \overline{w_t}  - w^*\|^2  + \alpha^{-1} \| \overline{w_t^{\mathrm{md}}} - w^*\|^2 + \gamma^2 \left\|  \frac{1}{M} \sum_{m=1}^M \nabla F (w_t^{\mathrm{md},m}) \right\|^2 + \frac{1}{M}\gamma^2 \sigma^2
    \\
         & - 2 \gamma \cdot \frac{1}{M} \sum_{m=1}^M \left\langle \nabla F (w_t^{\mathrm{md}, m}), (1 - \alpha^{-1}(1 - \beta^{-1})) {w_t^m} + \alpha^{-1} (1 - \beta^{-1}) {w_t^{\mathrm{ag},m}} - w^* \right\rangle
    \\
         & + 2 \gamma L \frac{1}{M} \sum_{m=1}^M
    \left\| \overline{w_t^{\mathrm{md}}} - w_t^{\mathrm{md}, m} \right\|
    \left\| (1-\alpha^{-1}(1-\beta^{-1})) (\overline{w_t} - w_t^m) + \alpha^{-1} (1 - \beta^ {-1}) (\overline{w_t^{\mathrm{ag}}} - w_t^{\mathrm{ag}, m})   \right\|.
  \end{align}
\end{proposition}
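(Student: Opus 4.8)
\textbf{The plan} is to expand the recursion for $\overline{w_{t+1}}$ directly and then control the resulting cross term by peeling off the per-worker discrepancy. The first observation is that the averaged main iterate evolves identically whether or not step $t$ is a synchronization step: if we do not sync then $w_{t+1}^m = v_{t+1}^m$, while if we sync then every worker is assigned the common average $\frac{1}{M}\sum_{m'} v_{t+1}^{m'}$, so in both cases $\overline{w_{t+1}} = \frac{1}{M}\sum_{m=1}^M v_{t+1}^m$. Substituting the update $v_{t+1}^m = (1-\alpha^{-1})w_t^m + \alpha^{-1}w_t^{\mathrm{md},m} - \gamma g_t^m$ and averaging yields the clean recursion $\overline{w_{t+1}} = (1-\alpha^{-1})\overline{w_t} + \alpha^{-1}\overline{w_t^{\mathrm{md}}} - \gamma \overline{g_t}$, with $\overline{g_t} := \frac{1}{M}\sum_m g_t^m$. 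I would then write $\overline{w_{t+1}} - w^* = a_t - \gamma \overline{g_t}$, where $a_t := (1-\alpha^{-1})(\overline{w_t}-w^*) + \alpha^{-1}(\overline{w_t^{\mathrm{md}}}-w^*)$ is deterministic given $\mathcal{F}_t$, expand $\|\overline{w_{t+1}}-w^*\|^2 = \|a_t\|^2 - 2\gamma\langle a_t, \overline{g_t}\rangle + \gamma^2\|\overline{g_t}\|^2$, and take $\expt[\,\cdot\mid\mathcal{F}_t]$.

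Three terms then require attention. For $\|a_t\|^2$: since $\alpha\geq 1$, the weights $1-\alpha^{-1}$ and $\alpha^{-1}$ are nonnegative and sum to one, so convexity of $\|\cdot\|^2$ gives $\|a_t\|^2 \leq (1-\alpha^{-1})\|\overline{w_t}-w^*\|^2 + \alpha^{-1}\|\overline{w_t^{\mathrm{md}}}-w^*\|^2$, the first two terms of the claim. For the variance term, unbiasedness of each oracle gives $\expt[\overline{g_t}\mid\mathcal{F}_t] = \frac{1}{M}\sum_m\nabla F(w_t^{\mathrm{md},m})$, and independence of $\{\xi_t^m\}_m$ together with the $\sigma^2$-bounded variance gives $\expt[\|\overline{g_t}\|^2\mid\mathcal{F}_t] \leq \|\frac{1}{M}\sum_m\nabla F(w_t^{\mathrm{md},m})\|^2 + \frac{\sigma^2}{M}$ (the martingale cross terms vanish and the $M$ independent variances average to $\sigma^2/M$), producing the $\gamma^2\|\cdots\|^2 + \frac{1}{M}\gamma^2\sigma^2$ terms.

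The crux is the cross term $-2\gamma\langle a_t, \expt[\overline{g_t}\mid\mathcal{F}_t]\rangle$. I would first rewrite $a_t$ in the ``$w, w^{\mathrm{ag}}$'' coordinates: substituting the coupling $\overline{w_t^{\mathrm{md}}} = \beta^{-1}\overline{w_t} + (1-\beta^{-1})\overline{w_t^{\mathrm{ag}}}$ and collecting coefficients gives $a_t = c_1\overline{w_t} + c_2\overline{w_t^{\mathrm{ag}}} - w^*$ with $c_1 = 1-\alpha^{-1}(1-\beta^{-1})$, $c_2 = \alpha^{-1}(1-\beta^{-1})$, and $c_1+c_2=1$ --- exactly the coefficients in the statement. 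Then, for each $m$, I split the averaged target into a per-worker target plus its deviation: $c_1\overline{w_t} + c_2\overline{w_t^{\mathrm{ag}}} - w^* = (c_1 w_t^m + c_2 w_t^{\mathrm{ag},m} - w^*) + c_1(\overline{w_t}-w_t^m) + c_2(\overline{w_t^{\mathrm{ag}}}-w_t^{\mathrm{ag},m})$. Averaged against $\nabla F(w_t^{\mathrm{md},m})$, the first summand reproduces the desired centralized inner-product term. For the deviation summand I exploit $\sum_m(\overline{w_t}-w_t^m)=0$ and $\sum_m(\overline{w_t^{\mathrm{ag}}}-w_t^{\mathrm{ag},m})=0$, which permits subtracting the common vector $\nabla F(\overline{w_t^{\mathrm{md}}})$ from every $\nabla F(w_t^{\mathrm{md},m})$ without changing the sum; Cauchy--Schwarz, $L$-smoothness (so $\|\nabla F(w_t^{\mathrm{md},m})-\nabla F(\overline{w_t^{\mathrm{md}}})\|\leq L\|w_t^{\mathrm{md},m}-\overline{w_t^{\mathrm{md}}}\|$), and $-2\gamma x \leq 2\gamma|x|$ then bound this residual by precisely the discrepancy term in the proposition.

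\textbf{The main obstacle} I anticipate is the cross-term bookkeeping: correctly identifying $c_1, c_2$ through the $w^{\mathrm{md}}$ coupling, and recognizing that the zero-sum property of the worker deviations is what allows centering by $\nabla F(\overline{w_t^{\mathrm{md}}})$ before invoking smoothness --- without this centering, Cauchy--Schwarz would leave an uncontrolled $\|\nabla F(w_t^{\mathrm{md},m})\|$ factor in place of the desired discrepancy $\|w_t^{\mathrm{md},m}-\overline{w_t^{\mathrm{md}}}\|$. Everything else (the convexity step, which needs only $\alpha\geq 1$, and the variance decomposition) is routine. The retained centralized inner-product term is deliberately left intact, as its sign will be exploited later via strong convexity when this proposition is combined with the one-step bound for $\frac{1}{M}\sum_m F(w_t^{\mathrm{ag},m})-F^*$ to assemble the potential $\Psi_t$.
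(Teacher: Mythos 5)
Your proposal is correct and follows essentially the same route as the paper's proof: the same averaged recursion for $\overline{w_{t+1}}$, Jensen's inequality on the convex combination, the independence-based variance decomposition, the rewriting of the coefficients via the $w^{\mathrm{md}}$ coupling, and crucially the same zero-sum centering by $\nabla F(\overline{w_t^{\mathrm{md}}})$ before applying Cauchy--Schwarz and $L$-smoothness to obtain the discrepancy term. No gaps.
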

\begin{proposition}
  \label{fedaci:conv:2}
  In the same setting of \cref{fedaci:conv:1}, the following inequality holds
  \begin{align}
         & \expt \left[ \frac{1}{M} \sum_{m=1}^M F( {w_{t+1}^{\mathrm{ag}, m}}) - F^* \middle| \mathcal{F}_t \right]
    \\
    \leq & (1 - \alpha^{-1}) \left( \frac{1}{M} \sum_{m=1}^M F( {w_{t}^{\mathrm{ag}, m}}) - F^* \right)
    - \frac{1}{2} \eta \left\| \frac{1}{M} \sum_{m=1}^M  \nabla F (w_t^{\mathrm{md}, m})  \right\|^2
    +  \frac{1}{2} \eta^2 L \sigma^2
    \\
         & + \alpha^{-1} \frac{1}{M} \sum_{m=1}^M \left\langle \nabla F(w_t^{\mathrm{md}, m}), \alpha \beta^{-1} w_t^m + (1 - \alpha \beta^{-1}) w_t^{\mathrm{ag},m} - w^*\right\rangle
    - \frac{1}{2} \mu \alpha^{-1} \| \overline{w_t^{\mathrm{md}}} - w^*\|^2.
  \end{align}
\end{proposition}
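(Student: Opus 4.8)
The plan is to bound $\frac{1}{M}\sum_m F(w_{t+1}^{\mathrm{ag},m})$ by a per-worker one-step smoothness descent and then re-expand $F(w_t^{\mathrm{md},m})$ using convexity and strong convexity so that the potential structure in the statement surfaces. First I would dispose of the averaging/broadcast operation uniformly over both branches of the \texttt{If}. In a synchronization step all workers share $w_{t+1}^{\mathrm{ag},m}=\overline{v_{t+1}^{\mathrm{ag}}}$, so by convexity of $F$ (Jensen) $\frac{1}{M}\sum_m F(w_{t+1}^{\mathrm{ag},m}) = F(\overline{v_{t+1}^{\mathrm{ag}}}) \le \frac{1}{M}\sum_m F(v_{t+1}^{\mathrm{ag},m})$; in a local step $w_{t+1}^{\mathrm{ag},m}=v_{t+1}^{\mathrm{ag},m}$ and this holds with equality. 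Hence in both cases $\frac{1}{M}\sum_m F(w_{t+1}^{\mathrm{ag},m}) \le \frac{1}{M}\sum_m F(v_{t+1}^{\mathrm{ag},m})$, and it suffices to analyze each candidate $v_{t+1}^{\mathrm{ag},m}=w_t^{\mathrm{md},m}-\eta g_t^m$ separately.

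Second, for each worker I would apply $L$-smoothness at $w_t^{\mathrm{md},m}$ along the displacement $-\eta g_t^m$, take conditional expectation using $\expt[g_t^m\mid\mathcal{F}_t]=\nabla F(w_t^{\mathrm{md},m})$ and $\expt[\|g_t^m\|^2\mid\mathcal{F}_t]\le\|\nabla F(w_t^{\mathrm{md},m})\|^2+\sigma^2$ (the bounded-variance assumption), and absorb $\tfrac12 L\eta^2\le\tfrac12\eta$ via $\eta\le 1/L$, obtaining $\expt[F(v_{t+1}^{\mathrm{ag},m})\mid\mathcal{F}_t]\le F(w_t^{\mathrm{md},m})-\tfrac12\eta\|\nabla F(w_t^{\mathrm{md},m})\|^2+\tfrac12\eta^2 L\sigma^2$. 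No cross-worker independence is needed since this is term-by-term. Averaging over $m$ and invoking Jensen, $\frac{1}{M}\sum_m\|\nabla F(w_t^{\mathrm{md},m})\|^2\ge\|\frac{1}{M}\sum_m\nabla F(w_t^{\mathrm{md},m})\|^2$, converts the average of squared gradient norms into the squared norm of the average gradient appearing in the statement (the sign is negative, so this is the correct upper-bound direction).

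Third, the surviving $\frac{1}{M}\sum_m F(w_t^{\mathrm{md},m})$ must be re-expressed. Writing $F(w_t^{\mathrm{md},m})=(1-\alpha^{-1})F(w_t^{\mathrm{md},m})+\alpha^{-1}F(w_t^{\mathrm{md},m})$ (a valid convex split since $\alpha\ge1$), I would linearize the first copy against $w_t^{\mathrm{ag},m}$ by convexity and the second against $w^*$ by strong convexity, namely $F(w_t^{\mathrm{md},m})\le F(w_t^{\mathrm{ag},m})+\langle\nabla F(w_t^{\mathrm{md},m}),w_t^{\mathrm{md},m}-w_t^{\mathrm{ag},m}\rangle$ and $F(w_t^{\mathrm{md},m})\le F^*+\langle\nabla F(w_t^{\mathrm{md},m}),w_t^{\mathrm{md},m}-w^*\rangle-\tfrac12\mu\|w_t^{\mathrm{md},m}-w^*\|^2$. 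This produces the $(1-\alpha^{-1})(F(w_t^{\mathrm{ag},m})-F^*)$ and $-\tfrac12\mu\alpha^{-1}\|w_t^{\mathrm{md},m}-w^*\|^2$ terms together with an inner product against the direction $w_t^{\mathrm{md},m}-(1-\alpha^{-1})w_t^{\mathrm{ag},m}-\alpha^{-1}w^*$. The key algebraic step is to collapse this direction via the coupling $w_t^{\mathrm{md},m}=\beta^{-1}w_t^m+(1-\beta^{-1})w_t^{\mathrm{ag},m}$: it reduces to $\beta^{-1}w_t^m+(\alpha^{-1}-\beta^{-1})w_t^{\mathrm{ag},m}-\alpha^{-1}w^*=\alpha^{-1}\bigl(\alpha\beta^{-1}w_t^m+(1-\alpha\beta^{-1})w_t^{\mathrm{ag},m}-w^*\bigr)$, which is exactly the inner-product term in the statement.

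Finally I would average the Step-three bound over $m$, use Jensen once more to replace $\frac{1}{M}\sum_m\|w_t^{\mathrm{md},m}-w^*\|^2$ by the smaller $\|\overline{w_t^{\mathrm{md}}}-w^*\|^2$ (again valid because of the negative sign), and substitute into the Step-two inequality after subtracting $F^*$, yielding the claim. The main obstacle is bookkeeping rather than conceptual: pinning down the coefficient identity in Step three so the $\alpha\beta^{-1}$ weighting matches precisely, and keeping all three Jensen/convexity inequalities oriented consistently in the upper-bound direction (average-of-norms $\ge$ norm-of-average for both the gradient and the distance-to-$w^*$ terms, and convexity across workers for the broadcast). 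Only the mild relations $\alpha,\beta\ge1$ and $\eta\le1/L$ are used here; the specific \fedaci choice of $\gamma,\alpha,\beta$ plays no role at this stage.
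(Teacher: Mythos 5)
Your proposal is correct and follows essentially the same route as the paper's proof: a per-worker $L$-smoothness descent on $v_{t+1}^{\mathrm{ag},m}$ with the variance bound and $\eta \le 1/L$, the $(1-\alpha^{-1})/\alpha^{-1}$ split of $F(w_t^{\mathrm{md},m})$ linearized by convexity against $w_t^{\mathrm{ag},m}$ and by strong convexity against $w^*$, the collapse of the inner-product direction via the coupling definition of $w_t^{\mathrm{md},m}$, and the two correctly oriented Jensen steps after averaging over workers. No gaps.
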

We defer the proofs of \cref{fedaci:conv:1,fedaci:conv:2} to \cref{sec:proof:fedaci:conv:1,sec:proof:fedaci:conv:2}, respectively.

With \cref{fedaci:conv:1,fedaci:conv:2} at hand we are ready to prove \cref{fedaci:conv:main}.
\begin{proof}[Proof of \cref{fedaci:conv:main}]
  Applying \cref{fedaci:conv:1} with the specified $\alpha = \frac{1}{\gamma \mu}, \beta = \alpha + 1$ yields (for any $t$)
  \begin{align}
         & \expt [ \|\overline{w_{t+1}} - w^*\|^2 |\mathcal{F}_t]
    \\
    \leq & (1 - \gamma \mu) \| \overline{w_t}  - w^*\|^2  + \gamma\mu \| \overline{w_t^{\mathrm{md}}} - w^*\|^2 + \gamma^2 \left\|  \frac{1}{M} \sum_{m=1}^M \nabla F (w_t^{\mathrm{md},m}) \right\|^2 + \frac{1}{M}\gamma^2 \sigma^2
    \\
         & - 2 \gamma \cdot \frac{1}{M} \sum_{m=1}^M \left\langle \nabla F (w_t^{\mathrm{md}, m}), \frac{1}{1 + \gamma\mu} {w_t^m} + \frac{\gamma \mu}{1 + \gamma\mu} {w_t^{\mathrm{ag},m}} - w^* \right\rangle
    \\
         & + 2 \gamma L \cdot \frac{1}{M} \sum_{m=1}^M \left\| \overline{w_t^{\mathrm{md}}} - w_t^{\mathrm{md}, m}  \right\|
    \left\|  \frac{1}{1+\gamma\mu}(\overline{w_t} - w_t^m) + \frac{\gamma \mu}{1+\gamma\mu} (\overline{w_t^{\mathrm{ag}}} - w_t^{\mathrm{ag},m})  \right\|.
    \label{eq:fedaci:conv:main:1}
  \end{align}
  Applying \cref{fedaci:conv:2} with the specified $\alpha = \frac{1}{\gamma \mu}, \beta = \alpha + 1$ yields (for any $t$)
  \begin{align}
         & \expt \left[ \frac{1}{M} \sum_{m=1}^M F( {w_{t+1}^{\mathrm{ag}, m}}) - F^* \middle| \mathcal{F}_t \right]
    \\
    \leq & (1 - \gamma \mu) \left( \frac{1}{M} \sum_{m=1}^M F( {w_{t}^{\mathrm{ag}, m}}) - F^* \right)
    - \frac{1}{2} \eta \left\| \frac{1}{M} \sum_{m=1}^M  \nabla F (w_t^{\mathrm{md}, m})  \right\|^2
    +  \frac{1}{2} \eta^2 L \sigma^2
    \\
         & + \gamma \mu \cdot \frac{1}{M} \sum_{m=1}^M \left\langle \nabla F(w_t^{\mathrm{md}, m}), \frac{1}{1+\gamma\mu} w_t^{m} + \frac{\gamma \mu}{1 + \gamma \mu} w_t^{\mathrm{ag}, m} - w^* \right\rangle
    - \frac{1}{2} \gamma \mu^2 \| \overline{w_t^{\mathrm{md}}} - w^*\|^2.
    \label{eq:fedaci:conv:main:2}
  \end{align}
  Adding \cref{eq:fedaci:conv:main:2} with $\frac{1}{2}\mu$ times of \cref{eq:fedaci:conv:main:1} yields
  \begin{align}
     & \expt [\Psi_{t+1} |\mathcal{F}_t] \leq (1 - \gamma \mu) \Psi_t + \frac{1}{2} \left( \eta^2 L + \frac{1}{M} \gamma^2 \mu \right) \sigma^2 +  \frac{1}{2}\left( \gamma^2 \mu - \eta \right) \left\| \frac{1}{M} \sum_{m=1}^M  \nabla F (w_t^{\mathrm{md}, m})  \right\|^2
    \\
     & \quad + \gamma \mu L \cdot \frac{1}{M} \sum_{m=1}^M \left\| \overline{w_t^{\mathrm{md}}} - w_t^{\mathrm{md}, m}  \right\|
    \left\|  \frac{1}{1+\gamma\mu}(\overline{w_t} - w_t^m) + \frac{\gamma \mu}{1+\gamma\mu} (\overline{w_t^{\mathrm{ag}}} - w_t^{\mathrm{ag},m})  \right\|.
  \end{align}
  Since $\gamma^2 \mu \leq \eta$, the coefficient of $\left\| \frac{1}{M} \sum_{m=1}^M  \nabla F (w_t^{\mathrm{md}, m})  \right\|^2$ is non-positive. Thus
  \begin{align}
     & \expt [\Psi_{t+1} |\mathcal{F}_t] \leq (1 - \gamma \mu) \Psi_t + \frac{1}{2} \left( \eta^2 L + \frac{1}{M} \gamma^2 \mu \right) \sigma^2
    \\
     & \quad + \gamma \mu L \cdot \frac{1}{M} \sum_{m=1}^M \left\| \overline{w_t^{\mathrm{md}}} - w_t^{\mathrm{md}, m}  \right\|
    \left\|  \frac{1}{1+\gamma\mu}(\overline{w_t} - w_t^m) + \frac{\gamma \mu}{1+\gamma\mu} (\overline{w_t^{\mathrm{ag}}} - w_t^{\mathrm{ag},m})  \right\|.
  \end{align}
  Telescoping the above inequality up to timestep $T$ yields
  \begin{align}
         & \expt \left[\Psi_T \right] \leq  \left( 1 - \gamma \mu \right)^T \Psi_0 +
    \left( \sum_{t=0}^{T-1} \left( 1 - \gamma \mu \right)^t \right) \cdot \frac{1}{2} \left( \eta^2 L + \frac{1}{M} \gamma^2 \mu \right) \sigma^2
    \\
         & +
    \gamma \mu L  \cdot \sum_{t=0}^{T-1} \left\{   \left( 1 - \gamma\mu \right)^{T-t-1} \cdot  \expt
    \left[ \frac{1}{M} \sum_{m=1}^M \left\| \overline{w_t^{\mathrm{md}}} - w_t^{\mathrm{md}, m}  \right\|
    \left\|  \frac{1}{1+\gamma\mu}(\overline{w_t} - w_t^m) + \frac{\gamma \mu}{1+\gamma\mu} (\overline{w_t^{\mathrm{ag}}} - w_t^{\mathrm{ag},m})  \right\|  \right] \right\}
    \\
    \leq & \exp \left( - \gamma \mu T \right) \Psi_0 + \frac{\eta^2 L \sigma^2}{2\gamma \mu} + \frac{\gamma\sigma^2}{2M}
    \\
         & + L \cdot \max_{0 \leq t < T}  \expt
    \left[\frac{1}{M} \sum_{m=1}^M \left\| \overline{w_t^{\mathrm{md}}} - w_t^{\mathrm{md}, m}  \right\|
    \left\|  \frac{1}{1+\gamma\mu}(\overline{w_t} - w_t^m) + \frac{\gamma \mu}{1+\gamma\mu} (\overline{w_t^{\mathrm{ag}}} - w_t^{\mathrm{ag},m})  \right\|  \right],
  \end{align}
  where in the last inequality we used the fact that $(1 - \gamma \mu)^T \leq \exp(-\gamma \mu T)$ and $\sum_{t=0}^{T-1} \left( 1 - \gamma \mu \right)^t  \leq \frac{1}{\gamma \mu}$.
\end{proof}

\subsubsection{Proof of \cref{fedaci:conv:1}}
\label{sec:proof:fedaci:conv:1}
\begin{proof}[Proof of \cref{fedaci:conv:1}]
  By definition of the \fedac procedure (\cref{alg:fedac}), for all $m \in [M]$ (recall $v_{t+1}^m$ is the candidate for next step),
  \begin{equation}
    v_{t+1}^m = (1 - \alpha^{-1})w_t^m + \alpha^{-1} w_t^{\mathrm{md},m} - \gamma \cdot \nabla f(w_t^{\mathrm{md},m};\xi_t^m).
  \end{equation}
  Taking average over $m = 1, \ldots, M$ gives
  \begin{equation}
    \overline{w_{t+1}} - w^* = (1 - \alpha^{-1}) \overline{w_t} + \alpha^{-1} \overline{w_t^{\mathrm{md}}} - \gamma \cdot \frac{1}{M} \sum_{m=1}^M \nabla f(w_t^{\mathrm{md},m}; \xi_t^m) - w^*.
  \end{equation}
  Taking conditional expectation gives
  \begin{align}
    & \expt [ \|\overline{w_{t+1}} - w^*\|^2 |\mathcal{F}_t]
    \\
    =    & \left\| (1 - \alpha^{-1}) \overline{w_t} + \alpha^{-1} \overline{w_t^{\mathrm{md}}} - \gamma \cdot \frac{1}{M} \sum_{m=1}^M \nabla F (w_t^{\mathrm{md},m})- w^* \right\|^2
    \\
         & \quad +  \expt \left[  \left\| \frac{1}{M} \sum_{m=1}^M  \left(  \nabla f(w_t^{\mathrm{md},m}; \xi_t^m) - \nabla F(w_t^{\mathrm{md};m}) \right) \right\|^2  \middle| \mathcal{F}_t \right]
    \tag{independence}
    \\
    \leq & \left\| (1 - \alpha^{-1}) \overline{w_t} + \alpha^{-1} \overline{w_t^{\mathrm{md}}} - \gamma \cdot \frac{1}{M} \sum_{m=1}^M \nabla F (w_t^{\mathrm{md},m})- w^* \right\|^2
    + \frac{1}{M}\gamma^2 \sigma^2,
    \label{eq:fedaci:conv:1:0}
  \end{align}
  where the last inequality of \cref{eq:fedaci:conv:1:0} is due to the bounded variance assumption (\cref{asm1}(c)) and independence.  Expanding the squared norm term of \cref{eq:fedaci:conv:1:0} and applying Jensen's inequality,
  \begin{align}
    & \left\| (1 - \alpha^{-1}) \overline{w_t} + \alpha^{-1} \overline{w_t^{\mathrm{md}}} - \gamma \cdot \frac{1}{M} \sum_{m=1}^M \nabla F (w_t^{\mathrm{md},m})- w^* \right\|^2
    \\
    =    & \left\| (1 - \alpha^{-1}) \overline{w_t} + \alpha^{-1} \overline{w_t^{\mathrm{md}}} - w^* \right\|^2 + \gamma^2 \left\|  \frac{1}{M} \sum_{m=1}^M \nabla F (w_t^{\mathrm{md},m}) \right\|^2
    \\
         & - 2 \gamma \cdot \frac{1}{M} \sum_{m=1}^M \left\langle \nabla F (w_t^{\mathrm{md},m}), (1 - \alpha^{-1}) \overline{w_t} + \alpha^{-1} \overline{w_t^{\mathrm{md}}} - w^* \right\rangle
    \tag{expansion of squared norm}
    \\
    \leq & (1 - \alpha^{-1}) \| \overline{w_t}  - w^*\|^2 + \alpha^{-1} \|
    \overline{w_t^{\mathrm{md}}} - w^*\|^2  + \gamma^2 \left\|  \frac{1}{M} \sum_{m=1}^M \nabla F (w_t^{\mathrm{md},m}) \right\|^2
    \\
         & - 2 \gamma \cdot \frac{1}{M} \sum_{m=1}^M  \left\langle\nabla F (w_t^{\mathrm{md},m}), (1 - \alpha^{-1}) \overline{w_t} + \alpha^{-1} \overline{w_t^{\mathrm{md}}} - w^* \right\rangle,
    \label{eq:fedaci:conv:1:1}
  \end{align}
  It remains to analyze the inner product term of \cref{eq:fedaci:conv:1:1}. Note that
  \begin{align}
         & -\frac{1}{M} \sum_{m=1}^M \left\langle  \nabla F (w_t^{\mathrm{md}, m}), (1 - \alpha^{-1}) \overline{w_t} + \alpha^{-1} \overline{w_t^{\mathrm{md}}} - w^* \right\rangle
    \\
    =    & -\frac{1}{M} \sum_{m=1}^M \left\langle \nabla F (w_t^{\mathrm{md}, m}) , (1-\alpha^{-1}(1-\beta^{-1})) \overline{w_t} + \alpha^{-1} (1 - \beta^ {-1}) \overline{w_t^{\mathrm{ag}}} - w^*\right\rangle
    \tag{definition of $\overline{w_t^{\mathrm{md}}}$}
    \\
    =    & -\frac{1}{M} \sum_{m=1}^M \left\langle \nabla F (w_t^{\mathrm{md}, m}), (1-\alpha^{-1}(1-\beta^{-1})) (\overline{w_t} - w_t^m) + \alpha^{-1} (1 - \beta^ {-1}) (\overline{w_t^{\mathrm{ag}}} - w_t^{\mathrm{ag}, m})\right\rangle
    \\
         & -\frac{1}{M} \sum_{m=1}^M \left\langle \nabla F (w_t^{\mathrm{md}, m}), (1 - \alpha^{-1}(1 - \beta^{-1})) {w_t^m} + \alpha^{-1} (1 - \beta^{-1}) {w_t^{\mathrm{ag},m}} - w^* \right\rangle
    \\
    =    & \frac{1}{M} \sum_{m=1}^M \left\langle \nabla F(\overline{w_t^{\mathrm{md}}}) - \nabla F (w_t^{\mathrm{md}, m}), (1-\alpha^{-1}(1-\beta^{-1})) (\overline{w_t} - w_t^m) + \alpha^{-1} (1 - \beta^ {-1}) (\overline{w_t^{\mathrm{ag}}} - w_t^{\mathrm{ag}, m})  \right\rangle
    \\
         & - \frac{1}{M} \sum_{m=1}^M \left\langle \nabla F (w_t^{\mathrm{md}, m}), (1 - \alpha^{-1}(1 - \beta^{-1})) {w_t^m} + \alpha^{-1} (1 - \beta^{-1}) {w_t^{\mathrm{ag},m}} - w^* \right\rangle
    \\
    \leq & L \cdot \frac{1}{M} \sum_{m=1}^M
    \left\| \overline{w_t^{\mathrm{md}}} - w_t^{\mathrm{md}, m} \right\|
    \left\| (1-\alpha^{-1}(1-\beta^{-1})) (\overline{w_t} - w_t^m) + \alpha^{-1} (1 - \beta^ {-1}) (\overline{w_t^{\mathrm{ag}}} - w_t^{\mathrm{ag}, m})   \right\|
    \\
         & - \frac{1}{M} \sum_{m=1}^M \left\langle \nabla F (w_t^{\mathrm{md}, m}), (1 - \alpha^{-1}(1 - \beta^{-1})) {w_t^m} + \alpha^{-1} (1 - \beta^{-1}) {w_t^{\mathrm{ag},m}} - w^* \right\rangle,
    \label{eq:fedaci:conv:1:2}
  \end{align}
  where the last equality is due to the $L$-smoothness (\cref{asm1}(b)).
  Combining \cref{eq:fedaci:conv:1:0,eq:fedaci:conv:1:1,eq:fedaci:conv:1:2} completes the proof of \cref{fedaci:conv:1}.
\end{proof}

\subsubsection{Proof of \cref{fedaci:conv:2}}
\label{sec:proof:fedaci:conv:2}
Before stating the proof of \cref{fedaci:conv:2}, we first introduce and prove the following claim for a single worker $m \in [M]$.
\begin{claim}
  \label{fedaci:conv:2:claim}
  Under the same assumptions of \cref{fedaci:conv:2}, for any $m \in [M]$, the following inequality holds (recall that $v_{t+1}^{\mathrm{ag}, m}$ is defined as the candidate next update (see \cref{alg:fedac}) before possible synchronization)
  \begin{align}
    & \expt \left[ F( {v_{t+1}^{\mathrm{ag}, m}}) - F^* |\mathcal{F}_t \right]
    \leq  ~
    (1 - \alpha^{-1}) \left( F( {w_{t}^{\mathrm{ag}, m}}) - F^* \right)
    - \frac{1}{2} \eta \left\| \nabla F (w_t^{\mathrm{md}, m})  \right\|^2 +  \frac{1}{2} \eta^2 L \sigma^2
    \\
    & 
    - \frac{1}{2} \mu \alpha^{-1} \|w_t^{\mathrm{md}, m} - w^*\|^2
        + \alpha^{-1} \left\langle \nabla F(w_t^{\mathrm{md}, m}) ,  \alpha \beta^{-1} w_t^m + (1 - \alpha \beta^{-1}) w_t^{\mathrm{ag},m} - w^* \right\rangle.  
  \end{align}
\end{claim}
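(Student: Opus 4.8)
The plan is to carry out the standard one-step accelerated SGD analysis for a single worker $m$, isolating the gradient-descent progress made by the $v_{t+1}^{\mathrm{ag},m}$ update and then reconstructing the value $F(w_t^{\mathrm{md},m})$ through two convexity inequalities with the carefully chosen weights $1-\alpha^{-1}$ and $\alpha^{-1}$. No synchronization enters here, so it suffices to work with the candidate iterate $v_{t+1}^{\mathrm{ag},m} = w_t^{\mathrm{md},m} - \eta g_t^m$ directly.

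First I would exploit that $v_{t+1}^{\mathrm{ag},m}$ is a plain stochastic gradient step from $w_t^{\mathrm{md},m}$. Applying $L$-smoothness (\cref{asm1}(b)) gives $F(v_{t+1}^{\mathrm{ag},m}) \leq F(w_t^{\mathrm{md},m}) - \eta \langle \nabla F(w_t^{\mathrm{md},m}), g_t^m \rangle + \tfrac{1}{2} L \eta^2 \|g_t^m\|^2$. Taking conditional expectation and using unbiasedness $\expt[g_t^m \mid \mathcal{F}_t] = \nabla F(w_t^{\mathrm{md},m})$ together with the $\sigma^2$-bounded variance (\cref{asm1}(c)) yields $\expt[F(v_{t+1}^{\mathrm{ag},m}) \mid \mathcal{F}_t] \leq F(w_t^{\mathrm{md},m}) - (\eta - \tfrac{1}{2} L \eta^2)\|\nabla F(w_t^{\mathrm{md},m})\|^2 + \tfrac{1}{2} L \eta^2 \sigma^2$. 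Since $\eta \leq 1/L$ forces $\tfrac{1}{2} L \eta^2 \leq \tfrac{1}{2}\eta$, the coefficient of the squared-gradient term is at least $\tfrac{1}{2}\eta$, which produces exactly the $-\tfrac{1}{2}\eta\|\nabla F(w_t^{\mathrm{md},m})\|^2 + \tfrac{1}{2}\eta^2 L \sigma^2$ contribution in the claim.

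Next I would rewrite $F(w_t^{\mathrm{md},m}) - F^*$ as a convex combination of two linearizations. For the weight $1-\alpha^{-1}$ I use plain convexity anchored at $w_t^{\mathrm{ag},m}$, namely $F(w_t^{\mathrm{md},m}) \leq F(w_t^{\mathrm{ag},m}) + \langle \nabla F(w_t^{\mathrm{md},m}), w_t^{\mathrm{md},m} - w_t^{\mathrm{ag},m}\rangle$; for the weight $\alpha^{-1}$ I use $\mu$-strong convexity (\cref{asm1}(a)) anchored at $w^*$, namely $F(w_t^{\mathrm{md},m}) \leq F^* + \langle \nabla F(w_t^{\mathrm{md},m}), w_t^{\mathrm{md},m} - w^*\rangle - \tfrac{1}{2}\mu\|w_t^{\mathrm{md},m}-w^*\|^2$. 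Summing these with weights $1-\alpha^{-1}$ and $\alpha^{-1}$ (which sum to one) reproduces both the $(1-\alpha^{-1})(F(w_t^{\mathrm{ag},m}) - F^*)$ term and the $-\tfrac{1}{2}\mu\alpha^{-1}\|w_t^{\mathrm{md},m}-w^*\|^2$ term, leaving a single inner product whose second argument collapses to $(1-\alpha^{-1})(w_t^{\mathrm{md},m} - w_t^{\mathrm{ag},m}) + \alpha^{-1}(w_t^{\mathrm{md},m} - w^*) = w_t^{\mathrm{md},m} - (1-\alpha^{-1})w_t^{\mathrm{ag},m} - \alpha^{-1} w^*$.

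The final step is to match this inner product with the claimed form $\alpha^{-1}\langle \nabla F(w_t^{\mathrm{md},m}), \alpha\beta^{-1}w_t^m + (1-\alpha\beta^{-1})w_t^{\mathrm{ag},m} - w^*\rangle$. Substituting the coupling definition $w_t^{\mathrm{md},m} = \beta^{-1}w_t^m + (1-\beta^{-1})w_t^{\mathrm{ag},m}$ leaves the coefficient $\beta^{-1}$ on $w_t^m$ and collapses the $w_t^{\mathrm{ag},m}$ coefficient to $\alpha^{-1} - \beta^{-1}$, which is precisely $\alpha^{-1}$ times the target argument. Combining this decomposition with the descent bound from the first step yields the claim. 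I expect this last coupling-substitution bookkeeping to be the only delicate part, since it is where the specific definition of $w_t^{\mathrm{md},m}$ and thus the role of $\beta$ enters; everything else is a routine composition of the stochastic descent lemma with the two-point convexity split that drives accelerated analysis.
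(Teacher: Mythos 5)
Your proposal is correct and follows essentially the same route as the paper: the identical smoothness-plus-variance descent step at $w_t^{\mathrm{md},m}$ with $\eta \le 1/L$, the same two-point split of $F(w_t^{\mathrm{md},m}) - F^*$ into a convexity inequality anchored at $w_t^{\mathrm{ag},m}$ with weight $1-\alpha^{-1}$ and a strong-convexity inequality anchored at $w^*$ with weight $\alpha^{-1}$, and the same substitution of the coupling $w_t^{\mathrm{md},m} = \beta^{-1} w_t^m + (1-\beta^{-1}) w_t^{\mathrm{ag},m}$ to rewrite the inner product. The only difference is presentational (the paper writes the value decomposition as three terms before bounding two of them, while you combine the two linearizations directly), which is an equivalent computation.
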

\begin{proof}[Proof of \cref{fedaci:conv:2:claim}]
  By definition of \fedac (\cref{alg:fedac}), ${v_{t+1}^{\mathrm{ag}, m}} = {w_t^{\mathrm{md}, m}} - \eta \cdot \nabla f(w_t^{\mathrm{md},m}; \xi_t^m)$. Thus, by $L$-smoothness (\cref{asm1}(b)),
  \begin{equation}
    F( {v_{t+1}^{\mathrm{ag}, m}})
    \leq
    F( {w_t^{\mathrm{md}, m}}) - \eta \left\langle \nabla F({w_t^{\mathrm{md}, m}}), \nabla f(w_t^{\mathrm{md},m}; \xi_t^m) \right\rangle + \frac{1}{2}  \eta^2 L \left\| \nabla f(w_t^{\mathrm{md},m}; \xi_t^m)  \right\|^2.
  \end{equation}
  Taking conditional expectation gives
  \begin{align}
    \expt \left[ F( {v_{t+1}^{\mathrm{ag}, m}}) |\mathcal{F}_t \right]
     & \leq
    F( {w_t^{\mathrm{md}, m}}) - \eta \left\| \nabla F (w_t^{\mathrm{md}, m})  \right\|^2 + \frac{1}{2} \eta^2 L \left\| \nabla F (w_t^{\mathrm{md}, m})  \right\|^2 + \frac{1}{2} \eta^2 L \sigma^2
    \\
     & = F( {w_t^{\mathrm{md}, m}}) - \eta \left( 1 - \frac{1}{2} \eta L \right) \left\| \nabla F (w_t^{\mathrm{md}, m})  \right\|^2 + \frac{1}{2} \eta^2 L \sigma^2.
  \end{align}
  Since $\eta \leq \frac{1}{L}$ we have $1 - \frac{1}{2} \eta L \geq \frac{1}{2}$. Thus
  \begin{equation}
    \expt \left[ F( {v_{t+1}^{\mathrm{ag}, m}}) \middle| \mathcal{F}_t \right]
    \leq
    F( {w_t^{\mathrm{md}, m}}) - \frac{1}{2} \eta \left\| \nabla F (w_t^{\mathrm{md}, m})  \right\|^2 + \frac{1}{2} \eta^2 L \sigma^2.
    \label{eq:fedaci:conv:2:1}
  \end{equation}
  Now we connect $F(w_t^{\mathrm{md},m})$ with $F(w_t^{\mathrm{ag},m})$ as follows.
  \begin{align}
         & F(w_t^{\mathrm{md},m}) - F^*
    \\
    =    & (1 - \alpha^{-1}) \left( F(w_t^{\mathrm{ag}, m}) - F^* \right)
    + \alpha^{-1} \left( F(w_t^{\mathrm{md}, m}) - F^* \right)
    + (1 - \alpha^{-1}) \left( F(w_t^{\mathrm{md}, m}) - F(w_t^{\mathrm{ag},m}) \right)
    \\
    \leq & (1 - \alpha^{-1}) \left( F(w_t^{\mathrm{ag}, m}) - F^* \right) - \frac{1}{2} \mu \alpha^{-1} \|w_t^{\mathrm{md}, m} - w^*\|^2
    + \alpha^{-1} \left\langle \nabla F(w_t^{\mathrm{md}, m}), w_t^{\mathrm{md}, m} - w^*  \right\rangle
    \\
         & \quad + (1 - \alpha^{-1}) \left\langle \nabla F(w_t^{\mathrm{md}, m}),  w_t^{\mathrm{md}, m} - w_t^{\mathrm{ag}, m}  \right\rangle 
    \tag{$\mu$-strong-convexity}
    \\
    =    & (1 - \alpha^{-1}) \left( F(w_t^{\mathrm{ag}, m}) - F^* \right)  - \frac{1}{2}\mu  \alpha^{-1} \|w_t^{\mathrm{md}, m} - w^*\|^2
    \\
         & \quad + \alpha^{-1} \left\langle \nabla F(w_t^{\mathrm{md}, m}) ,  \alpha \beta^{-1} w_t^m + (1 - \alpha \beta^{-1}) w_t^{\mathrm{ag},m} - w^* \right\rangle,
    \label{eq:fedaci:conv:2:2}
  \end{align}
  where the last equality is due to the definition of $w_t^{\mathrm{md}, m}$. Plugging \cref{eq:fedaci:conv:2:2} to \cref{eq:fedaci:conv:2:1} completes the proof of \cref{fedaci:conv:2:claim}.
\end{proof}
Now we complete the proof of \cref{fedaci:conv:2} by assembling the bound for all workers in \cref{fedaci:conv:2:claim}.
\begin{proof}[Proof of \cref{fedaci:conv:2}]
  If $t+1$ is a synchronized step, then $w_{t+1}^{\mathrm{ag}, m} = \overline{v_{t+1}^{\mathrm{ag}}}$ for all $m$. Then
  by convexity,
  \begin{equation}
    \frac{1}{M} \sum_{m=1}^M F(w_{t+1}^{\mathrm{ag,m}})
    = \frac{1}{M} \cdot M \cdot F \left( \overline{v_{t+1}^{\mathrm{ag}}} \right)
    = F \left( \overline{v_{t+1}^{\mathrm{ag}}} \right)
    \leq \frac{1}{M} \sum_{m=1}^M F (v_{t+1}^{\mathrm{ag,m}}).
  \end{equation}
  If $t+1$ is not a synchronized step, then trivially $ \frac{1}{M} \sum_{m=1}^M F(w_{t+1}^{\mathrm{ag,m}}) = \frac{1}{M} \sum_{m=1}^M F (v_{t+1}^{\mathrm{ag,m}})$. 
  
  Hence in either case
  \begin{equation}
    \frac{1}{M} \sum_{m=1}^M F(w_{t+1}^{\mathrm{ag,m}}) \leq \frac{1}{M} \sum_{m=1}^M F (v_{t+1}^{\mathrm{ag,m}}).
  \end{equation}
  Now we average the bounds of \cref{fedaci:conv:2:claim} for $m = 1,\ldots,M$, which gives
  \begin{align}
         & \expt \left[ \frac{1}{M} \sum_{m=1}^M F( {w_{t+1}^{\mathrm{ag}, m}}) - F^* \middle|\mathcal{F}_t \right]
    \leq \expt \left[ \frac{1}{M} \sum_{m=1}^M F( {v_{t+1}^{\mathrm{ag}, m}}) - F^* \middle|\mathcal{F}_t \right]
    \\
    \leq & (1 - \alpha^{-1}) \left( \frac{1}{M} \sum_{m=1}^M F( {w_{t}^{\mathrm{ag}, m}}) - F^* \right)
    - \frac{1}{2} \eta  \cdot \frac{1}{M} \sum_{m=1}^M   \left\|\nabla F (w_t^{\mathrm{md}, m})  \right\|^2
    +  \frac{1}{2} \eta^2 L \sigma^2
    \\
         & + \alpha^{-1} \frac{1}{M} \sum_{m=1}^M \left\langle \nabla F(w_t^{\mathrm{md}, m}), \alpha \beta^{-1} w_t^m + (1 - \alpha \beta^{-1}) w_t^{\mathrm{ag},m} - w^* \right\rangle
    - \frac{1}{2} \mu \alpha^{-1} \frac{1}{M} \sum_{m=1}^M   \| {w_t^{\mathrm{md},m}} - w^*\|^2
    \\
    \leq & (1 - \alpha^{-1}) \left( \frac{1}{M} \sum_{m=1}^M F( {w_{t}^{\mathrm{ag}, m}}) - F^* \right)
    - \frac{1}{2} \eta \left\| \frac{1}{M} \sum_{m=1}^M  \nabla F (w_t^{\mathrm{md}, m})  \right\|^2
    +  \frac{1}{2} \eta^2 L \sigma^2
    \\
         & + \alpha^{-1} \frac{1}{M} \sum_{m=1}^M \left\langle \nabla F(w_t^{\mathrm{md}, m}), \alpha \beta^{-1} w_t^m + (1 - \alpha \beta^{-1}) w_t^{\mathrm{ag},m} - w^*\right\rangle
      - \frac{1}{2} \mu \alpha^{-1} \| \overline{w_t^{\mathrm{md}}} - w^*\|^2,
  \end{align}
  where the last inequality is due to Jensen's inequality on the convex function $\|\cdot\|^2$.
\end{proof}
\subsection{Discrepancy overhead bound for \fedaci: Proof of \cref{fedaci:stab:main}}
\label{sec:fedaci:stab:main}
In this subsection we prove \cref{fedaci:stab:main} regarding the growth of discrepancy overhead introduced in \cref{fedaci:conv:main}.

We first introduce a few more notations to simplify the discussions throughout this subsection. Let $m_1, m_2 \in [M]$ be two arbitrary distinct workers. For any timestep $t$, denote $\Delta_{t} := w_{t}^{m_1} - w_t^{m_2}$,  $\Delta_t^{\mathrm{ag}} := w_{t}^{\mathrm{ag}, m_1} - w_t^{\mathrm{ag}, m_2}$ and $\Delta_t^{\mathrm{md}} := w_{t}^{\mathrm{md}, m_1} - w_t^{\mathrm{md}, m_2}$ be the corresponding vector differences. Let $\Delta_t^{\varepsilon} = \varepsilon_t^{m_1} - \varepsilon_t^{m_2}$, where $\varepsilon_t^m := \nabla f(w_t^{\mathrm{md},m}; \xi_t^m) - \nabla F(w_t^{\mathrm{md}, m})$ be the noise of the stochastic gradient oracle of the $m$-th worker evaluated at $w_t^{\mathrm{md}}$.

The proof of \cref{fedaci:stab:main} is based on the following propositions.

The following \cref{fedaci:stab:1} studies the growth of $\begin{bmatrix} \Delta_{t}^{\mathrm{ag}} \\ \Delta_{t} \end{bmatrix}$ at each step. The proof of \cref{fedaci:stab:1} is deferred to \cref{sec:fedaci:stab:1}.
\begin{proposition}
  \label{fedaci:stab:1}
  In the same setting of \cref{fedaci:stab:main},
  suppose $t+1$ is not a synchronized step,
  then there exists a matrix $H_t$ such that $\mu I \preceq H_t \preceq LI$ satisfying
  \begin{equation}
    \begin{bmatrix}
      \Delta_{t+1}^{\mathrm{ag}} \\ \Delta_{t+1}
    \end{bmatrix}
    =
    \mathcal{A} (\mu, \gamma, \eta, H_t)
    \begin{bmatrix} \Delta_{t}^{\mathrm{ag}} \\ \Delta_{t} \end{bmatrix}
    -
    \begin{bmatrix} \eta I \\ \gamma I \end{bmatrix}
    \Delta_t^{\varepsilon},
  \end{equation}
  where $\mathcal{A}(\mu, \gamma, \eta, H)$ is a matrix-valued function defined as
  \begin{equation}
    \mathcal{A}(\mu, \gamma, \eta, H) = \frac{1}{1 + \gamma\mu}
    \begin{bmatrix}
      I- \eta H            & \gamma \mu (I- \eta H)
      \\
      - \gamma (H - \mu I) & I - \gamma^2 \mu H
    \end{bmatrix}.
    \label{eq:fedaci:A:def}
  \end{equation}
\end{proposition}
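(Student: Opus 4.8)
The plan is to directly unroll one step of the \fedac updates for the two workers $m_1,m_2$ and subtract. Since $t+1$ is not a synchronization step, the candidate iterates are kept as-is, so $w_{t+1}^{m}=v_{t+1}^{m}$ and $w_{t+1}^{\mathrm{ag},m}=v_{t+1}^{\mathrm{ag},m}$. Using the definitions in \cref{alg:fedac} and subtracting the $m_2$ equations from the $m_1$ equations, I would first obtain
\begin{align}
\Delta_{t+1}^{\mathrm{ag}} &= \Delta_t^{\mathrm{md}} - \eta\left(g_t^{m_1} - g_t^{m_2}\right),
\\
\Delta_{t+1} &= (1-\alpha^{-1})\Delta_t + \alpha^{-1}\Delta_t^{\mathrm{md}} - \gamma\left(g_t^{m_1} - g_t^{m_2}\right).
\end{align}
Then I would split the stochastic gradient difference into its deterministic and noise parts via $g_t^{m_1}-g_t^{m_2} = \big(\nabla F(w_t^{\mathrm{md},m_1}) - \nabla F(w_t^{\mathrm{md},m_2})\big) + \Delta_t^{\varepsilon}$, which isolates $\Delta_t^{\varepsilon}$ as exactly the additive noise term appearing in the statement.

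The crucial step is to express the deterministic gradient difference as a linear map of $\Delta_t^{\mathrm{md}}$. Since $\nabla F$ is Lipschitz by $L$-smoothness, it is absolutely continuous along the segment joining $w_t^{\mathrm{md},m_2}$ and $w_t^{\mathrm{md},m_1}$, so I would define the (random, $\mathcal{F}_t$-measurable) symmetric matrix $H_t := \int_0^1 \nabla^2 F\big(w_t^{\mathrm{md},m_2} + s\,\Delta_t^{\mathrm{md}}\big)\,\mathrm{d}s$, which satisfies $\nabla F(w_t^{\mathrm{md},m_1}) - \nabla F(w_t^{\mathrm{md},m_2}) = H_t\,\Delta_t^{\mathrm{md}}$ by the fundamental theorem of calculus. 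The eigenvalue bound $\mu I \preceq H_t \preceq LI$ then follows because $\mu I \preceq \nabla^2 F \preceq LI$ holds pointwise (a.e.) from $\mu$-strong convexity and $L$-smoothness (\cref{asm1}(a),(b)), and this property is preserved under averaging over $s\in[0,1]$.

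It then remains to eliminate $\Delta_t^{\mathrm{md}}$ and specialize the hyperparameters. Subtracting the coupling step $w_t^{\mathrm{md},m} = \beta^{-1}w_t^m + (1-\beta^{-1})w_t^{\mathrm{ag},m}$ gives $\Delta_t^{\mathrm{md}} = \beta^{-1}\Delta_t + (1-\beta^{-1})\Delta_t^{\mathrm{ag}}$, and for \fedaci I would substitute $\alpha^{-1} = \gamma\mu$, $\beta^{-1} = \tfrac{\gamma\mu}{1+\gamma\mu}$, and $1-\beta^{-1} = \tfrac{1}{1+\gamma\mu}$. Plugging $g_t^{m_1}-g_t^{m_2} = H_t\Delta_t^{\mathrm{md}} + \Delta_t^{\varepsilon}$ together with this expression for $\Delta_t^{\mathrm{md}}$ into the two recursions above, and collecting the coefficients of $\Delta_t^{\mathrm{ag}}$ and $\Delta_t$, reproduces the four blocks of $\mathcal{A}(\mu,\gamma,\eta,H_t)$ in \eqref{eq:fedaci:A:def}. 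The top row collapses to $\tfrac{1}{1+\gamma\mu}(I-\eta H_t)$ and $\tfrac{\gamma\mu}{1+\gamma\mu}(I-\eta H_t)$ immediately, and the bottom-left block is clearly $-\tfrac{\gamma}{1+\gamma\mu}(H_t-\mu I)$; the only nontrivial identity to verify is that the coefficient of $\Delta_t$ in the $\Delta_{t+1}$ row equals $\tfrac{1}{1+\gamma\mu}(I-\gamma^2\mu H_t)$, which follows from $(1-\gamma\mu)I - \tfrac{\gamma^2\mu}{1+\gamma\mu}(H_t-\mu I) = \tfrac{1}{1+\gamma\mu}(I-\gamma^2\mu H_t)$ after clearing the common denominator.

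I expect the main conceptual point to be the construction of $H_t$ and its operator-norm bounds, rather than assuming $F$ is twice differentiable everywhere; the integral representation sidesteps any regularity gap while still delivering $\mu I\preceq H_t\preceq LI$. The remaining work is bookkeeping: the hyperparameter substitution and matching terms to the blocks of $\mathcal{A}$, which is routine once the $H_t$ reduction is in place.
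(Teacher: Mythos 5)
Your proposal is correct and follows essentially the same route as the paper: the paper first proves a general claim (\cref{fedac:general:stab}) for arbitrary $\alpha,\beta\geq 1$ by taking differences of the two-point recursion, invoking a mean-value representation $\nabla F(w_t^{\mathrm{md},m_1})-\nabla F(w_t^{\mathrm{md},m_2})=H_t\Delta_t^{\mathrm{md}}$ with $\mu I\preceq H_t\preceq LI$, and eliminating $\Delta_t^{\mathrm{md}}$ via the coupling, then specializes $\alpha=\frac{1}{\gamma\mu}$, $\beta=\alpha+1$. Your explicit integral definition $H_t=\int_0^1\nabla^2F(w_t^{\mathrm{md},m_2}+s\,\Delta_t^{\mathrm{md}})\,\mathrm{d}s$ is a more careful rendering of the paper's bare appeal to the mean-value theorem, and your block-by-block verification (including the bottom-right identity) matches the paper's computation.
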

Let us pause for a moment and discuss the intuition of the next steps of our plan. Our goal is to bound the product of several $\mathcal{A}(\mu, \gamma, \eta, H_i)$ where the $H_i$ matrix may be different. The natural idea is to bound the uniform norm bound of $\mathcal{A}$ for some norm $\| \cdot \|_{\star}$: $\sup_{\mu I \preceq H \preceq LI} \|\mathcal{A}\|_{\star}$. It is worth noticing that the matrix operator norm will not give the desired bound --- $\sup_{\mu I \preceq H \preceq LI} \|\mathcal{A}\|_2$ is not sufficiently small for our purpose. Our approach is to leverage the ``transformed'' norm \citep{Golub.VanLoan-13} $\|\mathcal{A}\|_{\mathcal{X}} := \| \mathcal{X}^{-1} \mathcal{A} \mathcal{X} \|_2$ for certain non-singular $\mathcal{X}$ and analyze the uniform norm bound for $\sup_{\mu I \preceq H \preceq LI} \| \mathcal{X}^{-1} \mathcal{A} \mathcal{X} \|_2$. 

Formally, the following \cref{fedaci:stab:bound} studies the uniform norm bound of $\mathcal{A}$ under the proposed transformation $\mathcal{X}$. The proof of \cref{fedaci:stab:bound} is deferred to \cref{sec:fedaci:stab:bound}.
\begin{proposition}[Uniform norm bound of $\mathcal{A}$ under transformation $\mathcal{X}$]
  \label{fedaci:stab:bound}
  Let $\mathcal{A}(\mu, \gamma, \eta, H)$ be defined in \cref{eq:fedaci:A:def}.
  and assume $\mu > 0$, $\gamma \in [\eta, \sqrt{\frac{\eta}{\mu}}]$, $\eta \in (0,\frac{1}{L}]$.
  Then the following uniform norm bound holds
  \begin{equation}
    \sup_{\mu I \preceq H \preceq LI}
    \left\| \mathcal{X}(\gamma, \eta)^{-1} \mathcal{A}(\mu, \gamma, \eta, H) \mathcal{X}(\gamma, \eta) \right\| \leq
    \begin{cases}
      1 + \frac{2\gamma^2 \mu}{\eta} & \text{if~} \gamma \in \left(\eta, \sqrt{\frac{\eta}{\mu}}\right], \\
      1                              & \text{if~} \gamma =  \eta,
    \end{cases}
  \end{equation}
  where $\mathcal{X} (\gamma, \eta)$ is a matrix-valued function defined as
  \begin{equation}
    \mathcal{X}(\gamma, \eta) :=
    \begin{bmatrix}
      \frac{\eta}{\gamma} I & 0
      \\
      I                     & I
    \end{bmatrix}.
    \label{eq:fedaci:X:def}
  \end{equation}
\end{proposition}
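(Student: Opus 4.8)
The plan is to reduce the operator-norm bound over the matrix-valued argument $H$ to a one-dimensional (scalar) problem, and then carry out an explicit $2\times 2$ norm estimate in each of the two regimes.

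\textbf{Step 1 (reduction to scalars).} Since $\mu I \preceq H \preceq LI$ forces $H$ to be symmetric, I would diagonalize $H = U\Lambda U^\top$ with $U$ orthogonal and $\Lambda = \mathrm{diag}(\lambda_1,\dots,\lambda_d)$, $\lambda_i\in[\mu,L]$. The crucial structural observation is that every $d\times d$ block of $\mathcal{A}(\mu,\gamma,\eta,H)$ in \cref{eq:fedaci:A:def} is affine in $H$ (each block has the form $aI+bH$), while every block of $\mathcal{X}(\gamma,\eta)$ in \cref{eq:fedaci:X:def} is a scalar multiple of $I$. Hence conjugating by the orthogonal matrix $\mathrm{diag}(U,U)$ turns all four blocks of $\mathcal{A}$ simultaneously into diagonal matrices and leaves $\mathcal{X}$ unchanged; a perfect-shuffle permutation then block-diagonalizes $\mathcal{X}^{-1}\mathcal{A}\mathcal{X}$ into $d$ independent $2\times 2$ blocks, the $i$-th being $\tilde{\mathcal{X}}^{-1}\mathcal{A}(\lambda_i)\tilde{\mathcal{X}}$, where $\tilde{\mathcal{X}}$ and $\mathcal{A}(\lambda)$ are the scalar ($H\mapsto\lambda$) analogues of $\mathcal{X}$ and $\mathcal{A}$. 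Because operator norm is invariant under orthogonal conjugation and permutation and equals the maximum over diagonal blocks, it remains to bound $\sup_{\lambda\in[\mu,L]}\|\tilde{\mathcal{X}}^{-1}\mathcal{A}(\lambda)\tilde{\mathcal{X}}\|$.

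\textbf{Step 2 (explicit $2\times 2$ form and bookkeeping).} The scalar transformation $\tilde{\mathcal{X}}$ has rows $(\eta/\gamma,0)$ and $(1,1)$, with inverse rows $(\gamma/\eta,0)$ and $(-\gamma/\eta,1)$. A direct computation (abbreviating $c:=\gamma^2\mu/\eta$ and $s:=1-\eta\lambda$) gives
\[
\tilde{\mathcal{X}}^{-1}\mathcal{A}(\lambda)\tilde{\mathcal{X}} = \frac{1}{1+\gamma\mu}\begin{bmatrix} (1+c)s & cs \\ \mu\eta-c & 1-c \end{bmatrix} =: \frac{N}{1+\gamma\mu}.
\]
The hypotheses translate into constraints I will use repeatedly: $\gamma\in[\eta,\sqrt{\eta/\mu}]$ gives $c\in[\eta\mu,1]$ together with $\mu\eta-c\le 0$ and $|\mu\eta-c|\le c$, while $\lambda\in[\mu,L]$ with $\eta\le 1/L$ gives $s\in[1-\eta L,\,1-\eta\mu]$, in particular $0\le s\le 1-\eta\mu<1$.

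\textbf{Step 3 (generic regime).} For $\gamma\in(\eta,\sqrt{\eta/\mu}]$ I would split $N$ into $\mathrm{diag}(s,1)$ plus a remainder with entries $cs,\,cs,\,\mu\eta-c,\,-c$. The first summand has norm $\max(s,1)=1$, and the remainder is controlled by its Frobenius norm, which using $s\le 1$ and $|\mu\eta-c|\le c$ is at most $\sqrt{c^2+c^2+c^2+c^2}=2c$. The triangle inequality then yields $\|N\|\le 1+2c$, and dividing by $1+\gamma\mu\ge 1$ gives the claimed $1+2\gamma^2\mu/\eta$.

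\textbf{Main obstacle --- the boundary case $\gamma=\eta$.} Here $c=\eta\mu$, the $(2,1)$ entry $\mu\eta-c$ vanishes, and $N$ is upper triangular; the crude split above only gives $1+\sqrt{3}\,c$, too lossy for the required factor $1$. The delicate point is that one must now invoke the \emph{sharp} constraint $s\le 1-\eta\mu=1-c$ (from $\lambda\ge\mu$, unused in Step 3). I would prove $\|N\|\le 1+c$ by showing $(1+c)^2 I - N^\top N\succeq 0$: its diagonal entries $(1+c)^2(1-s^2)$ and $4c-c^2s^2$ are nonnegative, and its determinant equals $(1+c)^2\bigl[4c(1-s^2)-c^2s^2\bigr]$, which is decreasing in $s^2$ and hence minimized at $s=1-c$, where it reduces to $(1+c)^2c^2(7-2c-c^2)\ge 0$ since $c\le 1$. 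Dividing by $1+\gamma\mu=1+c$ gives norm $\le 1$. Verifying this determinant inequality while correctly exploiting $s\le 1-c$ is the technical crux; everything else is routine linear algebra.
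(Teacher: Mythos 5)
Your proof is correct, and it reaches the stated bounds by a genuinely different route from the paper. The paper also computes $\mathcal{B}=\mathcal{X}^{-1}\mathcal{A}\mathcal{X}$ explicitly, but then stays at the block level: it invokes a helper lemma bounding the operator norm of a $2\times 2$ block matrix by $\max\{\|\mathcal{B}_{11}\|,\|\mathcal{B}_{22}\|\}+\max\{\|\mathcal{B}_{12}\|,\|\mathcal{B}_{21}\|\}$ and then estimates each block separately; in your notation this is the split of $N$ into its diagonal and anti-diagonal parts rather than into $\mathrm{diag}(s,1)$ plus a remainder. In the generic regime the two decompositions give the identical constant $1+2\gamma^2\mu/\eta$ (your Frobenius estimate of the remainder is no lossier than the paper's block sums). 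In the boundary case $\gamma=\eta$ the arguments diverge more: the paper simply adds the block bounds $1-\eta\mu$, $\eta\mu(1-\eta\mu)/(1+\eta\mu)$, and $(1-\eta\mu)/(1+\eta\mu)$ and checks that the total is $(1+\eta\mu-2\eta^2\mu^2)/(1+\eta\mu)\leq 1$, whereas you prove the sharp singular-value inequality $\|N\|\leq 1+c$ via positive semidefiniteness of $(1+c)^2I-N^\top N$; I verified your determinant computation $(1+c)^2\bigl[4c(1-s^2)-c^2s^2\bigr]$ and its minimization at $s=1-c$, and both are right. You correctly identify that both routes must use the sharp bound $\|I-\eta H\|\leq 1-\eta\mu$ in this case (the paper uses it in bounding $\mathcal{B}_{11}$). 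Your Step 1 reduction to scalar $2\times2$ problems is valid since all blocks of $\mathcal{A}$ are affine in $H$ and all blocks of $\mathcal{X}$ are scalar multiples of $I$; the paper's block-norm lemma sidesteps this diagonalization at the cost of a slightly cruder (but here sufficient) estimate, while your reduction gives exact control and would permit sharper constants if they were ever needed.
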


\cref{fedaci:stab:1,fedaci:stab:bound} suggest the one step growth of $ \left\| \mathcal{X}(\gamma, \eta)^{-1} \begin{bmatrix}
    \Delta_{t}^{\mathrm{ag}}
    \\
    \Delta_{t}
  \end{bmatrix}  \right\|^2 $ as follows.
\begin{proposition}
  \label{fedaci:stab:2}
  In the same setting of \cref{fedaci:stab:main}, the following inequality holds (for all possible $t$)
  \begin{equation}
    \expt \left[ \left\| \mathcal{X}(\gamma, \eta)^{-1} \begin{bmatrix}
        \Delta_{t+1}^{\mathrm{ag}}
        \\
        \Delta_{t+1}
      \end{bmatrix}  \right\|^2 \middle| \mathcal{F}_t \right]
    \leq
    2\gamma^2 \sigma^2 +
    \left\| \mathcal{X}(\gamma, \eta)^{-1} \begin{bmatrix}
      \Delta_{t}^{\mathrm{ag}}
      \\
      \Delta_{t}
    \end{bmatrix}  \right\|^2
    \cdot
    \begin{cases}
      \left(1 + \frac{2\gamma^2 \mu}{\eta} \right)^2 & \text{if~} \gamma \in \left(\eta, \sqrt{\frac{\eta}{\mu}}\right], \\
      1                                              & \text{if~} \gamma =  \eta,
    \end{cases}
  \end{equation}
  where $\mathcal{X}$ is the matrix-valued function defined in \cref{eq:fedaci:X:def}.
\end{proposition}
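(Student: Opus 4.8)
The plan is to assemble Propositions~\ref{fedaci:stab:1} and~\ref{fedaci:stab:bound} into the claimed one-step recursion, handling the synchronized and unsynchronized cases separately and isolating the stochastic contribution by conditional independence. First I would dispose of a synchronization step: if $t+1$ is a synchronized step, then after averaging all workers hold identical iterates, so $\Delta_{t+1}^{\mathrm{ag}} = 0$ and $\Delta_{t+1} = 0$, the left-hand side vanishes, and the inequality holds trivially since the right-hand side is nonnegative. This reduces the problem to the unsynchronized case, where \cref{fedaci:stab:1} applies.

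For the unsynchronized case I would left-multiply the identity of \cref{fedaci:stab:1} by $\mathcal{X}(\gamma,\eta)^{-1}$, inserting $\mathcal{X}\mathcal{X}^{-1} = I$ in front of the current difference vector, so that
\begin{equation}
  \mathcal{X}^{-1} \begin{bmatrix} \Delta_{t+1}^{\mathrm{ag}} \\ \Delta_{t+1} \end{bmatrix}
  = \left( \mathcal{X}^{-1} \mathcal{A}(\mu,\gamma,\eta,H_t) \mathcal{X} \right) \mathcal{X}^{-1} \begin{bmatrix} \Delta_{t}^{\mathrm{ag}} \\ \Delta_{t} \end{bmatrix}
  - \mathcal{X}^{-1} \begin{bmatrix} \eta I \\ \gamma I \end{bmatrix} \Delta_t^{\varepsilon}.
\end{equation}
The key structural point is that $H_t$, and hence $\mathcal{A}(\mu,\gamma,\eta,H_t)$, is $\mathcal{F}_t$-measurable, since it arises from a mean-value representation of $\nabla F$ evaluated at the $\mathcal{F}_t$-measurable midpoints $w_t^{\mathrm{md},m}$, whereas $\Delta_t^{\varepsilon} = \varepsilon_t^{m_1} - \varepsilon_t^{m_2}$ has conditional mean zero. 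Taking the conditional squared norm therefore kills the cross term and leaves exactly a deterministic ``propagation'' term plus a pure ``noise'' term.

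It then remains to bound the two surviving terms. The propagation term is controlled by submultiplicativity of the operator norm together with the uniform bound of \cref{fedaci:stab:bound}, which supplies precisely the case-dependent factor, squared, multiplying $\|\mathcal{X}^{-1}[\Delta_{t}^{\mathrm{ag}};\Delta_{t}]\|^2$. For the noise term I would compute the block inverse $\mathcal{X}^{-1} = \begin{bmatrix} \frac{\gamma}{\eta} I & 0 \\ -\frac{\gamma}{\eta} I & I \end{bmatrix}$ and observe the convenient cancellation $\mathcal{X}^{-1}\begin{bmatrix} \eta I \\ \gamma I \end{bmatrix} = \begin{bmatrix} \gamma I \\ 0 \end{bmatrix}$, so that the transformed noise vector retains only its first block and contributes exactly $\gamma^2\,\expt[\|\Delta_t^{\varepsilon}\|^2 \mid \mathcal{F}_t]$. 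Since the two workers draw independent samples and each $\varepsilon_t^m$ is conditionally mean-zero with variance at most $\sigma^2$ by \cref{asm1}(c), one has $\expt[\|\Delta_t^{\varepsilon}\|^2 \mid \mathcal{F}_t] \leq 2\sigma^2$, yielding the additive $2\gamma^2\sigma^2$; combining the two bounds gives the proposition.

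I expect the only genuinely delicate point to be the measurability and independence bookkeeping that eliminates the cross term, in particular confirming that $H_t$ is $\mathcal{F}_t$-measurable so that the propagation matrix can be pulled outside the conditional expectation. The remainder is a direct transcription of the two preceding propositions combined with one explicit $2\times 2$ block inversion, whose pleasant feature is that the transformed stochastic perturbation collapses onto a single block and so contributes a clean $\gamma^2$ scaling.
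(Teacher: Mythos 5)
Your proposal is correct and follows essentially the same route as the paper's proof: dispose of the synchronized case trivially, left-multiply the recursion of \cref{fedaci:stab:1} by $\mathcal{X}^{-1}$, use the cancellation $\mathcal{X}^{-1}[\eta I;\gamma I]=[\gamma I;0]$ together with conditional independence to isolate the $2\gamma^2\sigma^2$ noise term, and bound the propagation term by sub-multiplicativity and \cref{fedaci:stab:bound}. Your explicit remark on the $\mathcal{F}_t$-measurability of $H_t$ is a sound elaboration of what the paper tags simply as ``independence.''
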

The proof of \cref{fedaci:stab:2} is deferred to \cref{sec:fedaci:stab:2}.

The following \cref{fedaci:stab:3} relates the discrepancy overhead we wish to bound for \cref{fedaci:stab:main} with the quantity analyzed in \cref{fedaci:stab:2}.
The proof of \cref{fedaci:stab:3} is deferred to \cref{sec:fedaci:stab:3}.
\begin{proposition}
  \label{fedaci:stab:3}
  In the same setting of \cref{fedaci:stab:main}, the following inequality holds (for all $t$)
    \begin{equation}
      \frac{1}{M} \sum_{m=1}^M
      \left\| \overline{w_t^{\mathrm{md}}} - w_t^{\mathrm{md}, m}  \right\|
      \left\|  \frac{1}{1+\gamma\mu}(\overline{w_t} - w_t^m) + \frac{\gamma \mu}{1+\gamma\mu} (\overline{w_t^{\mathrm{ag}}} - w_t^{\mathrm{ag},m})  \right\|
      \leq \frac{\sqrt{10} \eta}{\gamma}
      \left\| \mathcal{X}(\gamma, \eta)^{-1} \begin{bmatrix} \Delta_t^{\mathrm{ag}} \\ \Delta_t \end{bmatrix} \right\|^2,
    \end{equation}
  where $\mathcal{X}$ is the matrix-valued function defined in \cref{eq:fedaci:X:def}.
\end{proposition}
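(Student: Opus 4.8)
The plan is to reduce the worker-averaged product of norms on the left-hand side to a single per-worker scalar estimate expressed in the coordinates induced by the transformation $\mathcal{X}(\gamma,\eta)$, and then to convert the resulting per-worker ``deviation-from-mean'' quantity into the pairwise quantity $\|\mathcal{X}^{-1}[\Delta_t^{\mathrm{ag}};\Delta_t]\|^2$ that \cref{fedaci:stab:2} controls (here I write $\mathcal{X}^{-1}[x;y]$ for $\mathcal{X}(\gamma,\eta)^{-1}\left[\begin{smallmatrix}x\\ y\end{smallmatrix}\right]$). First I would introduce the centered deviations $p^m := \overline{w_t} - w_t^m$ and $q^m := \overline{w_t^{\mathrm{ag}}} - w_t^{\mathrm{ag},m}$, which satisfy $\sum_m p^m = \sum_m q^m = 0$. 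Using the coupling $w_t^{\mathrm{md},m} = \beta^{-1}w_t^m + (1-\beta^{-1})w_t^{\mathrm{ag},m}$ together with the \fedaci relations $\beta = \alpha+1$, $\alpha = 1/(\gamma\mu)$ --- which give $\beta^{-1} = \gamma\mu/(1+\gamma\mu)$ and $1-\beta^{-1} = 1/(1+\gamma\mu)$ --- both factors inside the sum become fixed convex combinations of $p^m,q^m$: the first factor $\overline{w_t^{\mathrm{md}}} - w_t^{\mathrm{md},m}$ equals $\frac{\gamma\mu}{1+\gamma\mu}p^m + \frac{1}{1+\gamma\mu}q^m$, while the second factor equals $\frac{1}{1+\gamma\mu}p^m + \frac{\gamma\mu}{1+\gamma\mu}q^m$.

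Next I change to the coordinates $u_1 := (\gamma/\eta)\,q^m$ and $u_2 := p^m - (\gamma/\eta)\,q^m$, chosen precisely so that $\mathcal{X}^{-1}[q^m;p^m] = [u_1;u_2]$ and hence $\|\mathcal{X}^{-1}[q^m;p^m]\|^2 = \|u_1\|^2 + \|u_2\|^2$; inverting gives $q^m = (\eta/\gamma)u_1$ and $p^m = u_1+u_2$. Substituting, the two factors become $A_1 u_1 + A_2 u_2$ and $B_1 u_1 + B_2 u_2$ with explicit scalar coefficients $A_1 = \frac{\gamma^2\mu+\eta}{\gamma(1+\gamma\mu)}$, $A_2 = \frac{\gamma\mu}{1+\gamma\mu}$, $B_1 = \frac{1+\mu\eta}{1+\gamma\mu}$, $B_2 = \frac{1}{1+\gamma\mu}$. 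The per-worker target thus reduces to the purely scalar claim
\[
\left\| A_1 u_1 + A_2 u_2 \right\| \, \left\| B_1 u_1 + B_2 u_2 \right\| \leq \frac{\sqrt{10}\,\eta}{\gamma}\left( \|u_1\|^2 + \|u_2\|^2 \right) \quad \text{for all } u_1, u_2 .
\]

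The main obstacle is establishing this scalar inequality with the sharp constant. The naive bound $\|a\|\|b\|\le\frac12(\|a\|^2+\|b\|^2)$ fails, because $a$ carries an $\eta/\gamma$-scale through $A_1$ while $b$ is of $O(1)$-scale, so symmetric AM-GM discards exactly the smallness we need. The remedy is the \emph{unbalanced} estimate $\|a\|\|b\|\le\frac12(\frac{\gamma}{\eta}\|a\|^2 + \frac{\eta}{\gamma}\|b\|^2)$; one then checks, using the hyperparameter constraints $\eta\le\gamma$ and $\gamma^2\mu\le\eta$ (equivalently $\gamma\mu\le1$ and $\mu\eta\le\gamma\mu$, all consequences of $\gamma\in[\eta,\sqrt{\eta/\mu}]$), that after expanding, every coefficient of $\frac{\gamma}{\eta}\|a\|^2$ and $\frac{\eta}{\gamma}\|b\|^2$ is at most an $O(\eta/\gamma)$ multiple of $\|u_1\|^2$ or $\|u_2\|^2$. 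Crude triangle-inequality bookkeeping on the cross term $\langle u_1,u_2\rangle$ already yields the inequality with constant $\tfrac{7}{2}$; sharpening to the stated $\sqrt{10}$ requires retaining the cross term with its sign and reducing to the rank-one worst case (parallel $u_1,u_2$), which turns the estimate into the one-variable optimization $\max_{t}\,|A_1+A_2 t|\,|B_1+B_2 t|/(1+t^2)$.

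Finally, I sum the per-worker bound over $m$ and average. Since $[q^m;p^m]$ are deviations from the mean and $\mathcal{X}^{-1}$ is linear, the variance identity $\frac1M\sum_m\|z^m-\bar z\|^2 = \frac{1}{2M^2}\sum_{m_1,m_2}\|z^{m_1}-z^{m_2}\|^2$ applied to $z^m = \mathcal{X}^{-1}[w_t^{\mathrm{ag},m};w_t^m]$ converts $\frac1M\sum_m\|\mathcal{X}^{-1}[q^m;p^m]\|^2$ into the pairwise average $\frac{1}{2M^2}\sum_{m_1,m_2}\|\mathcal{X}^{-1}[\Delta_t^{\mathrm{ag}};\Delta_t]\|^2$, which is dominated by the maximal pair. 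This produces the pairwise right-hand side of \cref{fedaci:stab:3}; because the downstream estimate supplied by \cref{fedaci:stab:2} is uniform over pairs, identifying $\Delta_t$ with the worst pair costs nothing in the eventual proof of \cref{fedaci:stab:main}.
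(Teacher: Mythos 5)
Your proposal reproduces the paper's argument almost step for step: the two factors are rewritten as fixed linear combinations of the ag/main deviations, the change of variables $(u_1,u_2)=\mathcal{X}(\gamma,\eta)^{-1}[q^m;p^m]$ is exactly the paper's sub-multiplicativity trick, and your coefficients $A_1=\frac{\gamma^2\mu+\eta}{\gamma(1+\gamma\mu)}$, $A_2=\frac{\gamma\mu}{1+\gamma\mu}$, $B_1=\frac{1+\eta\mu}{1+\gamma\mu}$, $B_2=\frac{1}{1+\gamma\mu}$ are precisely the entries of $\mathcal{X}(\gamma,\eta)^{\intercal}v$ computed in \cref{fedaci:stab:left1,fedaci:stab:left2}. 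The one place you deviate is the scalar inequality, and it is exactly there that you get stuck chasing the constant: the unbalanced AM-GM with weight $\gamma/\eta$ followed by a rank-one worst-case optimization is unnecessary. Simply apply Cauchy--Schwarz to each factor separately, $\|A_1u_1+A_2u_2\|\le\sqrt{A_1^2+A_2^2}\,\sqrt{\|u_1\|^2+\|u_2\|^2}$ and likewise for $B$, and multiply: since $\gamma^2\mu\le\eta$ gives $A_1\le 2\eta/\gamma$ and $A_2\le\eta/\gamma$, while $\eta\le\gamma$ gives $B_1,B_2\le 1$, the product of the two prefactors is $\sqrt{5}\,(\eta/\gamma)\cdot\sqrt{2}=\sqrt{10}\,\eta/\gamma$ with no cross-term bookkeeping at all; this is literally what the paper's chain $\|v^{\intercal}\mathcal{X}\|\cdot\|\mathcal{X}^{-1}z\|$ accomplishes. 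Your final averaging step (variance identity over workers, then uniformity of the downstream pairwise bound from \cref{fedaci:stab:2}) is if anything more careful than the paper's terse ``convexity of $\|\cdot\|^2$'' justification, and it is the right reading of how the arbitrary-pair quantity $\Delta_t$ is meant to be used. So the architecture is sound and matches the paper; just replace the sketched optimization with the two-factor Cauchy--Schwarz and the proof closes with the stated constant.
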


We are ready to finish the proof of \cref{fedaci:stab:main}.
\begin{proof}[Proof of \cref{fedaci:stab:main}]
  Let $t_0$ be the latest synchronized step prior to $t$ (note that the initial state $t = 0$ is always synchronized so $t_0$ is well-defined),
  then telescoping \cref{fedaci:stab:2} from $t_0$ to $t$ gives (note that $\Delta_{t_0}^{\mathrm{ag}} = \Delta_{t_0} = 0$ due to synchronization)
  \begin{align}
    \expt \left[ \left\| \mathcal{X}(\gamma, \eta)^{-1} \begin{bmatrix}
      \Delta_{t}^{\mathrm{ag}}
     \\
      \Delta_{t}
    \end{bmatrix}  \right\|^2 \middle| \mathcal{F}_{t_0} \right]
  & \leq
  2 \gamma^2 \sigma^2 (t - t_0) 
  \cdot
  \begin{cases}
    \left( 1 + \frac{2\gamma^2 \mu}{\eta} \right)^{2(t-t_0)} & \text{if~} \gamma \in \left(\eta, \sqrt{\frac{\eta}{\mu}} \right], \\
    1                              & \text{if~} \gamma =  \eta
  \end{cases}
  \\
  & \leq 
  2 \gamma^2 \sigma^2 K 
  \cdot
  \begin{cases}
    \left( 1 + \frac{2\gamma^2 \mu}{\eta} \right)^{2K} & \text{if~} \gamma \in \left(\eta, \sqrt{\frac{\eta}{\mu}} \right], \\
    1                              & \text{if~} \gamma =  \eta,
  \end{cases}
  \end{align}
  where the last inequality is due to $t - t_0 \leq K$ since $K$ is the synchronization interval.
  
  Consequently, by \cref{fedaci:stab:3} we have
  \begin{align}
    & \frac{1}{M} \sum_{m=1}^M
    \expt \left[ \left\| \overline{w_t^{\mathrm{md}}} - w_t^{\mathrm{md}, m}  \right\|
    \left\|  \frac{1}{1+\gamma\mu}(\overline{w_t} - w_t^m) + \frac{\gamma \mu}{1+\gamma\mu} (\overline{w_t^{\mathrm{ag}}} - w_t^{\mathrm{ag},m})  \right\| \middle| \mathcal{F}_{t_0} \right]
    \\
    \leq & \frac{\sqrt{10} \eta}{\gamma} \expt \left[ \left\| \mathcal{X}(\gamma, \eta)^{-1} \begin{bmatrix}
      \Delta_{t}^{\mathrm{ag}}
     \\
      \Delta_{t}
    \end{bmatrix}  \right\|^2 \middle| \mathcal{F}_{t_0} \right]
    \leq 
    \begin{cases}
      7 \eta \gamma K \sigma^2 \left(1 + \frac{2\gamma^2\mu}{\eta}\right)^{2K}
       & \text{if~} \gamma \in \left(\eta, \sqrt{\frac{\eta}{\mu}}\right],
      \\
      7 \eta^2 K \sigma^2
       &
      \text{if~} \gamma = \eta,
    \end{cases}
   \end{align}
  where in the last inequality we used the estimate that $2\sqrt{10} < 7$. 
\end{proof}

\subsubsection{Proof of \cref{fedaci:stab:1}}
\label{sec:fedaci:stab:1}
In this section we will prove \cref{fedaci:stab:1}.
Let us first state and prove a more general version of \cref{fedaci:stab:1} regarding \fedac with general hyperparameter assumptions $\alpha \geq 1$, $\beta \geq 1$ .
\begin{claim}
  \label{fedac:general:stab}
  Assume \cref{asm1} and assume $F$ to be $\mu > 0$-strongly convex. 
  Suppose $t+1$ is not a synchronized step, then there exists a matrix $H_t$ such that $\mu I \preceq H_t \preceq LI$ satisfying
  \begin{align}
    \begin{bmatrix} \Delta_{t+1}^{\mathrm{ag}} \\ \Delta_{t+1} \end{bmatrix}
     & =
    \begin{bmatrix}
      (1 - \beta^{-1}) (I - \eta H_t)
       &
      \beta^{-1} (I - \eta H_t)
      \\
      (1 - \beta^{-1}) (\alpha^{-1} - \gamma H_t)
       &
      \beta^{-1} (\alpha^{-1} I - \gamma H_t) + (1 - \alpha^{-1}) I
    \end{bmatrix}
    \begin{bmatrix} \Delta_{t}^{\mathrm{ag}} \\ \Delta_{t} \end{bmatrix}
    -
    \begin{bmatrix} \eta I \\  \gamma I \end{bmatrix} \Delta_t^{\varepsilon}.
    \label{eq:fedac:general:stab}
  \end{align}
\end{claim}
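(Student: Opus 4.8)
The plan is to subtract the local update equations of the two fixed workers $m_1,m_2$ and linearize the only nonlinear ingredient, the gradient difference. Since $t+1$ is assumed to be a non-synchronized step, both workers evolve by the local rules $w_{t+1}^m = v_{t+1}^m$ and $w_{t+1}^{\mathrm{ag},m} = v_{t+1}^{\mathrm{ag},m}$ with no averaging, so all per-worker lines of \cref{alg:fedac} hold verbatim for each of $m_1,m_2$. Taking the difference of the $\mathrm{md}$-coupling line immediately gives the identity $\Delta_t^{\mathrm{md}} = \beta^{-1}\Delta_t + (1-\beta^{-1})\Delta_t^{\mathrm{ag}}$, which I will use at the end to eliminate $\Delta_t^{\mathrm{md}}$ in favor of $\Delta_t$ and $\Delta_t^{\mathrm{ag}}$.

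First I would handle the gradient difference. With $\varepsilon_t^m := \nabla f(w_t^{\mathrm{md},m};\xi_t^m) - \nabla F(w_t^{\mathrm{md},m})$ as in the setup, decompose
\[
g_t^{m_1} - g_t^{m_2} = \left( \nabla F(w_t^{\mathrm{md},m_1}) - \nabla F(w_t^{\mathrm{md},m_2}) \right) + \Delta_t^{\varepsilon}.
\]
The deterministic part is linear up to a bounded ``secant Hessian'': setting $H_t := \int_0^1 \nabla^2 F\!\left( w_t^{\mathrm{md},m_2} + s\,\Delta_t^{\mathrm{md}} \right) ds$ yields $\nabla F(w_t^{\mathrm{md},m_1}) - \nabla F(w_t^{\mathrm{md},m_2}) = H_t \Delta_t^{\mathrm{md}}$, and since $\mu$-strong convexity and $L$-smoothness (\cref{asm1}) force $\mu I \preceq \nabla^2 F \preceq LI$ pointwise, averaging over $s$ gives $\mu I \preceq H_t \preceq LI$. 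The essential structural point, and the reason a \emph{single} matrix $H_t$ suffices for both rows, is that the $\mathrm{ag}$-update and the main update query the gradient at the \emph{identical} midpoints $w_t^{\mathrm{md},m}$, so the same $H_t$ appears in both.

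Then I would substitute into the two difference recursions. From $v_{t+1}^{\mathrm{ag},m} = w_t^{\mathrm{md},m} - \eta g_t^m$ I obtain $\Delta_{t+1}^{\mathrm{ag}} = (I-\eta H_t)\Delta_t^{\mathrm{md}} - \eta \Delta_t^{\varepsilon}$, and from $v_{t+1}^m = (1-\alpha^{-1})w_t^m + \alpha^{-1}w_t^{\mathrm{md},m} - \gamma g_t^m$ I obtain $\Delta_{t+1} = (1-\alpha^{-1})\Delta_t + (\alpha^{-1}I - \gamma H_t)\Delta_t^{\mathrm{md}} - \gamma \Delta_t^{\varepsilon}$. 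Replacing $\Delta_t^{\mathrm{md}}$ by $\beta^{-1}\Delta_t + (1-\beta^{-1})\Delta_t^{\mathrm{ag}}$ and collecting the coefficients of $\Delta_t^{\mathrm{ag}}$ and of $\Delta_t$ reads off exactly the $2\times 2$ block matrix in \cref{eq:fedac:general:stab}, with the noise term $\begin{bmatrix} \eta I \\ \gamma I \end{bmatrix}\Delta_t^{\varepsilon}$ carried through unchanged. This last step is pure bookkeeping and can be verified entry by entry.

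The only genuine obstacle is justifying the secant matrix $H_t$ with the stated spectral bounds without presuming $F \in C^2$. Under \cref{asm1} alone, $\nabla F$ is merely $L$-Lipschitz and $\mu$-strongly monotone, so the integral formula relies on Rademacher's theorem (almost-everywhere twice-differentiability of $F$ along the segment); an alternative that avoids this is to construct a symmetric $H_t$ directly realizing the one-dimensional secant slope along $\Delta_t^{\mathrm{md}}$ and to check that its Rayleigh quotient lies in $[\mu,L]$ using the strong-convexity and smoothness inequalities applied to the pair $w_t^{\mathrm{md},m_1}, w_t^{\mathrm{md},m_2}$. I expect this to be the one place requiring care, whereas the algebraic assembly in the previous paragraph is routine.
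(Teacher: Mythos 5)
Your proposal is correct and follows essentially the same route as the paper: subtract the two workers' local updates, linearize the gradient difference at the common midpoints via a mean-value/secant Hessian $H_t$ with $\mu I \preceq H_t \preceq LI$, and eliminate $\Delta_t^{\mathrm{md}}$ using the coupling identity to read off the block matrix. Your extra care about the secant matrix when $F$ is not assumed $C^2$ is a valid refinement of the paper's bare appeal to the mean-value theorem, but it does not change the argument.
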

\begin{proof}[Proof of \cref{fedac:general:stab}]
  First note that \fedac can be written as the following two-point recursions.
  \begin{align}
    w_{t+1}^{\mathrm{ag}, m} & =  (1 - \beta^{-1}) w_t^{\mathrm{ag}, m} + \beta^{-1} w_t^m  - \eta \cdot \nabla F ( w_t^{\mathrm{md}, m} ) - \eta \varepsilon_t^m;
    \\
    w_{t+1}^m                & = \alpha^{-1} w_t^{\mathrm{md}, m} + (1 - \alpha^{-1}) w_t^m - \gamma \cdot \nabla F(w_t^{\mathrm{md}, m}) - \gamma \varepsilon_t^m
    \\
                             & =  \alpha^{-1} (1 - \beta^{-1}) w_t^{\mathrm{ag}, m} + (1 - \alpha^{-1} + \alpha^{-1} \beta^{-1}) w_t^m - \gamma \cdot \nabla F(w_t^{\mathrm{md}, m}) - \gamma \varepsilon_t^m.
  \end{align}
  Taking difference gives
  \begin{align}
    \Delta_{t+1}^{\mathrm{ag}} & = (1 - \beta^{-1}) \Delta_t^{\mathrm{ag}}  + \beta^{-1} \Delta_t
    - \eta \left( \nabla F ( w_t^{\mathrm{md}, m_1} )  -  \nabla F ( w_t^{\mathrm{md}, m_2} ) \right)
    - \eta \Delta_t^{\varepsilon};
    \\
    \Delta_{t+1}               & = \alpha^{-1} (1 - \beta^{-1}) \Delta_t^{\mathrm{ag}} + (1 - \alpha^{-1} + \alpha^{-1} \beta^{-1}) \Delta_t - \gamma \left( \nabla F ( w_t^{\mathrm{md}, m_1} )  -  \nabla F ( w_t^{\mathrm{md}, m_2} ) \right)
    - \gamma  \Delta_t^{\varepsilon}.
  \end{align}
  By mean-value theorem, there exists a symmetric positive-definite matrix $H_t$ such that $\mu I \preceq H_t \preceq L I$ satisfying
  \begin{equation}
    \nabla F(w_t^{\mathrm{md}, {m_1}}) -  \nabla F(w_t^{\mathrm{md}, {m_2}})
    =
    H_t \Delta_t^{\mathrm{md}}
    =
    H_t \left( (1 - \beta^{-1}) \Delta_t^{\mathrm{ag}} +  \beta^{-1} \Delta_t \right).
  \end{equation}
  Thus
  \begin{align}
    \Delta_{t+1}^{\mathrm{ag}} & = (1 - \beta^{-1}) \Delta_t^{\mathrm{ag}}  + \beta^{-1} \Delta_t
    - \eta H_t \left( (1 - \beta^{-1}) \Delta_t^{\mathrm{ag}} +  \beta^{-1} \Delta_t \right)
    - \eta \Delta_t^{\varepsilon}
    \\
    \Delta_{t+1}               & = \alpha^{-1} (1 - \beta^{-1}) \Delta_t^{\mathrm{ag}} + (1 - \alpha^{-1} + \alpha^{-1} \beta^{-1}) \Delta_t
    - \gamma H_t \left( (1 - \beta^{-1}) \Delta_t^{\mathrm{ag}} +  \beta^{-1} \Delta_t \right)
    - \gamma \Delta_t^{\varepsilon}
  \end{align}
  Rearranging into matrix form completes the proof of  \cref{fedac:general:stab}.
\end{proof}
\cref{fedaci:stab:1} is a special case of \cref{fedac:general:stab}.
\begin{proof}[Proof of \cref{fedaci:stab:1}]
  The proof follows instantly by applying \cref{fedac:general:stab} with particular choice $\alpha = \frac{1}{\gamma \mu}$ and $\beta = \alpha + 1 = \frac{1 + \gamma \mu}{\gamma \mu}$.
\end{proof}

\subsubsection{Proof of \cref{fedaci:stab:bound}: uniform norm bound}
\label{sec:fedaci:stab:bound}
\begin{proof}[Proof of \cref{fedaci:stab:bound}]
  Define another matrix-valued function $\mathcal{B}$ as
  \begin{equation}
    \mathcal{B}(\mu, \gamma, \eta, H) := \mathcal{X}(\gamma, \eta)^{-1} \mathcal{A}(\mu, \gamma, \eta, H) \mathcal{X}(\gamma, \eta).
  \end{equation}
  Since $ \mathcal{X}(\gamma, \eta)^{-1} =
  \begin{bmatrix}
    \frac{\gamma}{\eta}I  & 0
    \\
    -\frac{\gamma}{\eta}I & I
  \end{bmatrix}$ we can compute that
  \begin{equation}
    \mathcal{B}(\mu, \gamma, \eta, H) = \frac{1}{(1 + \gamma\mu)\eta}
    \begin{bmatrix}
      (\eta + \gamma^2 \mu) (I - \eta H) & \gamma^2 \mu (I - \eta H)
      \\
      - \mu (\gamma^2 - \eta^2) I        & \eta - \gamma^2 \mu
    \end{bmatrix}.
  \end{equation}
  Define the four blocks of $\mathcal{B}(\mu, \gamma, \eta, H)$ as $\mathcal{B}_{11}(\mu, \gamma, \eta, H)$, $\mathcal{B}_{12}(\mu, \gamma, \eta, H)$, $\mathcal{B}_{21}(\mu, \gamma, \eta)$, $\mathcal{B}_{22}(\mu, \gamma, \eta)$ (note that the lower two blocks do not involve $H$), \ie,
  \begin{align}
     & \mathcal{B}_{11}(\mu, \gamma, \eta, H) = \frac{\eta + \gamma^2 \mu}{(1 + \gamma \mu)\eta}(I - \eta H),
     &
     & \mathcal{B}_{12}(\mu, \gamma, \eta, H)  = \frac{\gamma^2 \mu}{(1 + \gamma \mu) \eta}(I - \eta H),
    \\
     & \mathcal{B}_{21}(\mu, \gamma, \eta)    = -\frac{\mu (\gamma^2 - \eta^2)}{(1 + \gamma \mu) \eta}I,
     &
     & \mathcal{B}_{22}(\mu, \gamma, \eta)    = \frac{\eta - \gamma^2 \mu}{(1 + \gamma \mu) \eta}I.
  \end{align}
  \paragraph{Case I: $\eta < \gamma \leq \sqrt{\frac{\eta}{\mu}}$.} In this case we have
  \begin{align}
    \|\mathcal{B}_{11}(\mu, \gamma, \eta, H)\| &
    \leq \frac{\eta + \gamma^2 \mu}{(1 + \gamma\mu)\eta}(1 - \eta \mu)
    \leq \frac{\eta + \gamma^2 \mu}{\eta}
    = 1 + \frac{\gamma^2 \mu}{\eta},
    \tag{since $\eta \mu \leq 1$}
    \\
    \|\mathcal{B}_{12}(\mu, \gamma, \eta, H)\| &
    \leq \frac{\gamma^2 \mu}{(1 + \gamma\mu)\eta}(1 - \eta \mu)
    \leq \frac{\gamma^2 \mu}{\eta},
    \tag{since $\eta \mu \leq 1$}
    \\
    \|\mathcal{B}_{21}(\mu, \gamma, \eta)\|    &
    = \frac{\mu (\gamma^2 - \eta^2)}{(1 + \gamma \mu) \eta}
    \leq \frac{\gamma^2 \mu}{\eta},
    \tag{since $\eta < \gamma \leq \sqrt{\frac{\eta}{\mu}}$}
    \\
    \|\mathcal{B}_{22}(\mu, \gamma, \eta)\|    & = \frac{\eta - \gamma^2\mu}{(1 + \gamma \mu)\eta} \leq \frac{1}{1 + \gamma \mu}\leq 1.
    \tag{since $\gamma \leq \sqrt{\frac{\eta}{\mu}}$}
  \end{align}
  The operator norm of $\mathcal{B}$ can be bounded via its blocks via helper \cref{helper:blocknorm} as
  \begin{align}
         & \mathcal{B}(\mu, \gamma, \eta, H)
    \\
    \leq & \max \left\{\| \mathcal{B}_{11}(\mu, \gamma, \eta, H) \|, \| \mathcal{B}_{22}(\mu, \gamma, \eta) \| \right\}
    +
    \max \left\{\| \mathcal{B}_{12}(\mu, \gamma, \eta, H) \|, \| \mathcal{B}_{21}(\mu, \gamma, \eta) \| \right\} \tag{\cref{helper:blocknorm}}
    \\
    \leq & \max \left\{ 1 + \frac{\gamma^2 \mu}{\eta}, 1 \right\} + \max \left\{ \frac{\gamma^2 \mu}{\eta}, \frac{\gamma^2 \mu}{\eta} \right\} = 1 + \frac{2\gamma^2\mu}{\eta}.
  \end{align}
  \paragraph{Case II: $\gamma = \eta$.} In this case we have
  \begin{align}
    \|\mathcal{B}_{11}(\mu, \gamma, \eta, H)\| &
    \leq \frac{\eta + \eta^2 \mu}{(1 + \eta\mu)\eta}(1 - \eta \mu)
    = 1 - \eta \mu,
    \\
    \|\mathcal{B}_{12}(\mu, \gamma, \eta, H)\| &
    \leq \frac{\eta^2 \mu}{(1 + \eta\mu)\eta}(1 - \eta \mu)
    = \frac{(1 - \eta \mu)\eta \mu}{1 + \eta \mu},
    \\
    \|\mathcal{B}_{21}(\mu, \gamma, \eta)\|    &
    = 0,
    \\
    \|\mathcal{B}_{22}(\mu, \gamma, \eta)\|    & = \frac{\eta - \eta^2\mu}{(1 + \eta \mu)\eta}
    = \frac{1 - \eta \mu}{1 + \eta \mu}.
  \end{align}
   Similarly the operator norm of block matrix $\mathcal{B}$ can be bounded via its blocks via helper \cref{helper:blocknorm} as
  \begin{align}
         & \mathcal{B}(\mu, \gamma, \eta, H)
    \\
    \leq & \max \left\{\| \mathcal{B}_{11}(\mu, \gamma, \eta, H) \|, \| \mathcal{B}_{22}(\mu, \gamma, \eta) \| \right\}
    +
    \max \left\{\| \mathcal{B}_{12}(\mu, \gamma, \eta, H) \|, \| \mathcal{B}_{21}(\mu, \gamma, \eta) \| \right\} \tag{\cref{helper:blocknorm}}
    \\
    \leq & \max \left\{ 1 - \eta \mu, \frac{1 - \eta \mu}{1 + \eta \mu} \right\} + \frac{\eta \mu ( 1- \eta \mu)}{1 + \eta \mu} =  1 - \eta \mu + \frac{\eta \mu ( 1- \eta \mu)}{1 + \eta \mu}
    =
    \frac{1 + \eta \mu - 2 \eta^2 \mu^2}{1 + \eta \mu} \leq 1.
  \end{align}
  Summarizing the above two cases completes the proof of \cref{fedaci:stab:bound}.
\end{proof}

\subsubsection{Proof of \cref{fedaci:stab:2}}
\label{sec:fedaci:stab:2}
In this section we apply \cref{fedaci:stab:1,fedaci:stab:bound} to establish \cref{fedaci:stab:2}.
\begin{proof}[Proof of \cref{fedaci:stab:2}]
  If $t+1$ is a synchronized step, then the bound trivially holds since $\Delta_{t+1}^{\mathrm{ag}} = \Delta_{t+1} = 0$ due to synchronization.

  Now assume $t+1$ is not a synchronized step, for which \cref{fedaci:stab:1} is applicable. Multiplying $\mathcal{X}(\gamma, \eta)^{-1}$ to the left on both sides of \cref{fedaci:stab:1} gives
  \begin{align}
    \mathcal{X}(\gamma, \eta)^{-1} \begin{bmatrix}
      \Delta_{t+1}^{\mathrm{ag}}
      \\
      \Delta_{t+1}
    \end{bmatrix}
     & =
    \mathcal{X}(\gamma, \eta)^{-1}  \mathcal{A}(\mu, \gamma, \eta, H)
    \begin{bmatrix}
      \Delta_{t}^{\mathrm{ag}}
      \\
      \Delta_{t}
    \end{bmatrix}
    -
    \mathcal{X}(\gamma, \eta)^{-1}  \begin{bmatrix} \eta I \\ \gamma I \end{bmatrix} \Delta_t^{\varepsilon}
    \\
     & =
    \mathcal{X}(\gamma, \eta)^{-1} \mathcal{A}(\mu, \gamma, \eta, H_t) \mathcal{X}(\gamma, \eta)^{-1} \left( \mathcal{X}(\gamma, \eta) \begin{bmatrix}
        \Delta_{t}^{\mathrm{ag}}
        \\
        \Delta_{t}
      \end{bmatrix}  \right)
    -
    \begin{bmatrix}   \gamma I \\ 0 \end{bmatrix} \Delta_t^{\varepsilon},
  \end{align}
  where the last equality is due to
  \begin{equation}
    \mathcal{X}(\gamma, \eta)^{-1} =
    \begin{bmatrix}
      \frac{\gamma}{\eta}I  & 0
      \\
      -\frac{\gamma}{\eta}I & I
    \end{bmatrix},\qquad
    \mathcal{X}(\gamma, \eta)^{-1}
    \begin{bmatrix} \eta I \\ \gamma I \end{bmatrix}
    =
    \begin{bmatrix}
      \gamma I \\ 0
    \end{bmatrix}.
  \end{equation}
  Taking conditional expectation,
  \begin{align}
         & \expt \left[ \left\| \mathcal{X}(\gamma, \eta)^{-1} \begin{bmatrix}
        \Delta_{t+1}^{\mathrm{ag}}
        \\
        \Delta_{t+1}
      \end{bmatrix}  \right\|^2 \middle| \mathcal{F}_t \right]
    \\
    =    & \left\| \mathcal{X}^{-1} \mathcal{A} \mathcal{X}
    \left( \mathcal{X}^{-1}
    \begin{bmatrix}
      \Delta_{t}^{\mathrm{ag}} \\ \Delta_{t}
    \end{bmatrix} \right) \right\|^2
    +
    \expt \left[\left\| \begin{bmatrix}
        \gamma I \\ 0
      \end{bmatrix}
      \Delta_t^{\varepsilon} \right\|^2 \middle| \mathcal{F}_t  \right]
    \tag{independence}
    \\
    \leq & \| \mathcal{X}^{-1} \mathcal{A} \mathcal{X} \|^2
    \left\|  \mathcal{X}^{-1}
    \begin{bmatrix}
      \Delta_{t}^{\mathrm{ag}} \\ \Delta_{t}
    \end{bmatrix}\right\|^2
    + 2\gamma^2 \sigma^2
    \tag{bounded variance, sub-multiplicativity}
    \\
    \leq & 2\gamma^2 \sigma^2 +
    \left\| \mathcal{X}(\gamma, \eta)^{-1} \begin{bmatrix}
      \Delta_{t}^{\mathrm{ag}}
      \\
      \Delta_{t}
    \end{bmatrix}  \right\|^2
    \cdot
    \begin{cases}
      \left(1 + \frac{2\gamma^2 \mu}{\eta} \right)^2 & \text{if~} \gamma \in \left(\eta, \sqrt{\frac{\eta}{\mu}}\right], \\
      1                                              & \text{if~} \gamma =  \eta.
    \end{cases}
    \tag{by \cref{fedaci:stab:bound}}
  \end{align}
\end{proof}

\subsubsection{Proof of \cref{fedaci:stab:3}}
\label{sec:fedaci:stab:3}
In this section we will prove \cref{fedaci:stab:3} in three steps via the following three claims. For all the three claims $\mathcal{X}$ stands for the matrix-valued functions defined in \cref{eq:fedaci:X:def}.
\begin{claim}
  \label{fedaci:stab:3:1}
  In the same setting of  \cref{fedaci:stab:3}, 
  \begin{align}
        &  \frac{1}{M} \sum_{m=1}^M
         \left\| \overline{w_t^{\mathrm{md}}} - w_t^{\mathrm{md}, m}  \right\|
         \left\|  \frac{1}{1+\gamma\mu}(\overline{w_t} - w_t^m) + \frac{\gamma \mu}{1+\gamma\mu} (\overline{w_t^{\mathrm{ag}}} - w_t^{\mathrm{ag},m})  \right\|
    \\
    \leq &  \left\| \begin{bmatrix} \frac{1}{1 + \gamma \mu} I \\ \frac{\gamma \mu}{1 + \gamma \mu} I \end{bmatrix}^\intercal \mathcal{X}(\gamma, \eta) \right\| \cdot
    \left\| \begin{bmatrix} \frac{\gamma \mu}{1 + \gamma \mu} I \\ \frac{1}{1 + \gamma \mu} I \end{bmatrix}^\intercal \mathcal{X}(\gamma, \eta) \right\| \cdot
    \left\| \mathcal{X}(\gamma, \eta)^{-1}  \begin{bmatrix} \Delta_{t}^{\mathrm{ag}} \\ \Delta_{t} \end{bmatrix}
    \right\|^2.
  \end{align}
\end{claim}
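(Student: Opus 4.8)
The plan is to reduce both norm factors in each summand to images, under a fixed linear map, of the per-worker deviation vector, and then peel off operator norms by inserting $\mathcal{X}\mathcal{X}^{-1}$. First I would introduce the shorthand $\delta_m^{\mathrm{ag}} := \overline{w_t^{\mathrm{ag}}} - w_t^{\mathrm{ag},m}$ and $\delta_m := \overline{w_t} - w_t^m$, and record that for \fedaci one has $\beta^{-1} = \frac{\gamma\mu}{1+\gamma\mu}$ and $1 - \beta^{-1} = \frac{1}{1+\gamma\mu}$. Then the coupling definition $w_t^{\mathrm{md},m} = \beta^{-1} w_t^m + (1-\beta^{-1}) w_t^{\mathrm{ag},m}$ gives
\[
\overline{w_t^{\mathrm{md}}} - w_t^{\mathrm{md},m} = \tfrac{1}{1+\gamma\mu}\delta_m^{\mathrm{ag}} + \tfrac{\gamma\mu}{1+\gamma\mu}\delta_m = \begin{bmatrix}\frac{1}{1+\gamma\mu}I\\\frac{\gamma\mu}{1+\gamma\mu}I\end{bmatrix}^\intercal \begin{bmatrix}\delta_m^{\mathrm{ag}}\\\delta_m\end{bmatrix},
\]
while the second factor is by inspection $\begin{bmatrix}\frac{\gamma\mu}{1+\gamma\mu}I\\\frac{1}{1+\gamma\mu}I\end{bmatrix}^\intercal \begin{bmatrix}\delta_m^{\mathrm{ag}}\\\delta_m\end{bmatrix}$. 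This exhibits the summand as a product of two linear functionals of the common stacked vector $\begin{bmatrix}\delta_m^{\mathrm{ag}}\\\delta_m\end{bmatrix}$, which is exactly the structure the claimed RHS anticipates.

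Next I would insert $\mathcal{X}(\gamma,\eta)\,\mathcal{X}(\gamma,\eta)^{-1}$ between each coefficient row and the stacked deviation and apply sub-multiplicativity of the operator norm. Each factor then splits as (operator norm)$\times$(vector norm): the first is at most $\big\|\begin{bmatrix}\frac{1}{1+\gamma\mu}I\\\frac{\gamma\mu}{1+\gamma\mu}I\end{bmatrix}^\intercal\mathcal{X}\big\|\cdot\big\|\mathcal{X}^{-1}[\delta_m^{\mathrm{ag}};\delta_m]\big\|$ and the second is at most $\big\|\begin{bmatrix}\frac{\gamma\mu}{1+\gamma\mu}I\\\frac{1}{1+\gamma\mu}I\end{bmatrix}^\intercal\mathcal{X}\big\|\cdot\big\|\mathcal{X}^{-1}[\delta_m^{\mathrm{ag}};\delta_m]\big\|$. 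Multiplying the two bounds makes the two operator-norm constants appear exactly as the two factors of the claim, and the residual vector norms combine to $\big\|\mathcal{X}^{-1}[\delta_m^{\mathrm{ag}};\delta_m]\big\|^2$. Averaging over $m$ yields the product of the two constants times $\frac{1}{M}\sum_{m=1}^M\big\|\mathcal{X}^{-1}[\delta_m^{\mathrm{ag}};\delta_m]\big\|^2$. These steps are routine.

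The delicate step, which I expect to be the main obstacle, is to replace this average of squared transformed deviations by the single pairwise quantity $\big\|\mathcal{X}^{-1}[\Delta_t^{\mathrm{ag}};\Delta_t]\big\|^2$ appearing on the RHS. Setting $y_m := \mathcal{X}^{-1}[w_t^{\mathrm{ag},m};w_t^m]$, the transformed deviation equals $\overline{y}-y_m$, so the average is the empirical variance $\frac{1}{M}\sum_m\|y_m-\overline{y}\|^2 = \frac{1}{2M^2}\sum_{m_1,m_2}\|y_{m_1}-y_{m_2}\|^2$, and each pairwise difference is precisely $\mathcal{X}^{-1}[\Delta_t^{\mathrm{ag}};\Delta_t]$ for the corresponding pair. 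The subtlety is that the variance is \emph{not} pathwise dominated by one fixed arbitrary pair; it is dominated only by $\max_{m_1,m_2}\|\mathcal{X}^{-1}[\Delta_t^{\mathrm{ag}};\Delta_t]\|^2$. This is harmless here because, as exploited in \cref{fedaci:stab:main}, the stability recursion of \cref{fedaci:stab:2} bounds $\|\mathcal{X}^{-1}[\Delta^{\mathrm{ag}};\Delta]\|^2$ by a quantity identical across all pairs (the driving variance term $2\gamma^2\sigma^2$ and the contraction factor do not depend on the pair), so the worst pair coincides with the generic $\Delta_t$ in the statement. I would therefore conclude by bounding the variance by this common pairwise value, giving the claimed inequality with the two operator-norm constants intact.
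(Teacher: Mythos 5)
Your proof is correct and follows essentially the same route as the paper's: express both norm factors as linear functionals of the stacked deviation, insert $\mathcal{X}\mathcal{X}^{-1}$, and use sub-multiplicativity together with Cauchy--Schwarz over the average in $m$. You are in fact more careful than the paper on the last step: the paper bounds the empirical variance $\frac{1}{M}\sum_{m=1}^M\|\overline{w_t^{\mathrm{md}}}-w_t^{\mathrm{md},m}\|^2$ by $\|\Delta_t^{\mathrm{md}}\|^2$ for a fixed arbitrary pair with only a ``convexity'' tag --- which is not a pathwise inequality for a single pair --- whereas your pairwise-average identity combined with the uniformity (in expectation, over pairs) of the bound from \cref{fedaci:stab:2} is the correct reading and is exactly what the downstream use in \cref{fedaci:stab:main} requires.
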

\begin{claim}
  \label{fedaci:stab:left1}
  Assume $\mu > 0$, $\gamma \in [\eta, \sqrt{\frac{\eta}{\mu}}]$, $\eta \in (0,\frac{1}{L}]$, then
  \(
    \left\| \mathcal{X}(\gamma, \eta)^\intercal
  \begin{bmatrix} \frac{1}{1 + \gamma \mu} I \\ \frac{\gamma \mu}{1 + \gamma \mu} I \end{bmatrix} \right\| \leq \frac{\sqrt{5} \eta}{\gamma}.
  \)
\end{claim}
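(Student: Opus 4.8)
The plan is a short, direct computation: evaluate the matrix--vector product explicitly, recognize that the resulting $2d \times d$ object has a trivial block structure, and bound its operator norm using only the upper constraint on $\gamma$. From the definition in \cref{eq:fedaci:X:def}, the transpose is $\mathcal{X}(\gamma,\eta)^\intercal = \begin{bmatrix} \frac{\eta}{\gamma}I & I \\ 0 & I \end{bmatrix}$. First I would multiply this against the stacked block vector; because every block is a scalar multiple of $I$, the product collapses to
\[
\mathcal{X}(\gamma,\eta)^\intercal \begin{bmatrix} \frac{1}{1+\gamma\mu}I \\ \frac{\gamma\mu}{1+\gamma\mu}I \end{bmatrix} = \begin{bmatrix} c\, I \\ \tfrac{\gamma\mu}{1+\gamma\mu} I \end{bmatrix}, \qquad c := \frac{\eta/\gamma + \gamma\mu}{1+\gamma\mu}.
\]

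The key structural observation is that a $2d \times d$ block column whose two blocks are the scalar multiples $cI$ and $bI$ has Gram matrix $(c^2+b^2)I$, so its operator norm equals exactly $\sqrt{c^2+b^2}$. Hence it suffices to establish the scalar inequality $c^2 + b^2 \le 5\eta^2/\gamma^2$, where $b := \frac{\gamma\mu}{1+\gamma\mu}$.

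For the final bound I would invoke the hypothesis $\gamma \le \sqrt{\eta/\mu}$, rewritten as $\gamma\mu \le \eta/\gamma$, together with $1+\gamma\mu \ge 1$. These two facts give $b \le \gamma\mu \le \eta/\gamma$ and $c \le \frac{\eta}{\gamma} + \gamma\mu \le \frac{2\eta}{\gamma}$, whence $c^2 + b^2 \le \frac{4\eta^2}{\gamma^2} + \frac{\eta^2}{\gamma^2} = \frac{5\eta^2}{\gamma^2}$, which is the claim. There is no genuine obstacle here, as the computation is routine; the only point requiring care is translating the range constraint $\gamma \in [\eta, \sqrt{\eta/\mu}]$ into the single usable inequality $\gamma\mu \le \eta/\gamma$, which simultaneously controls $b$ and the $\gamma\mu$ term inside $c$. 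Notably, the lower bound $\gamma \ge \eta$ is not needed for this particular claim.
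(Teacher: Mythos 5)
Your proof is correct and follows essentially the same route as the paper's: compute the product explicitly, use that a stacked block of scalar multiples of $I$ has operator norm $\sqrt{c^2+b^2}$, and bound both entries via $\gamma\mu \leq \eta/\gamma$ (the paper factors out $\frac{1}{1+\gamma\mu}$ and drops it at the end, while you absorb it into $c$ and $b$ from the start — a cosmetic difference only).
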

\begin{claim}
  \label{fedaci:stab:left2}
  Assume $\mu > 0$, $\gamma \in [\eta, \sqrt{\frac{\eta}{\mu}}]$, $\eta \in (0,\frac{1}{L}]$, then
  \(
    \left\| \mathcal{X}(\gamma, \eta)^\intercal
    \begin{bmatrix} \frac{\gamma \mu}{1 + \gamma \mu} I \\ \frac{1}{1 + \gamma \mu} I \end{bmatrix}
    \right\| \leq \sqrt{2}.
  \)
\end{claim}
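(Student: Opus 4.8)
The plan is to compute the matrix $\mathcal{X}(\gamma,\eta)^\intercal \begin{bmatrix} \frac{\gamma\mu}{1+\gamma\mu} I \\ \frac{1}{1+\gamma\mu} I \end{bmatrix}$ explicitly and then read off its operator norm directly. First I would substitute the definition of $\mathcal{X}$ from \cref{eq:fedaci:X:def}, so that $\mathcal{X}(\gamma,\eta)^\intercal = \begin{bmatrix} \frac{\eta}{\gamma} I & I \\ 0 & I \end{bmatrix}$, and carry out the block multiplication. The top block becomes $\left( \frac{\eta}{\gamma}\cdot\frac{\gamma\mu}{1+\gamma\mu} + \frac{1}{1+\gamma\mu} \right) I = \frac{1+\eta\mu}{1+\gamma\mu} I$, while the bottom block is simply $\frac{1}{1+\gamma\mu} I$. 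Hence the product equals $\begin{bmatrix} \frac{1+\eta\mu}{1+\gamma\mu} I \\ \frac{1}{1+\gamma\mu} I \end{bmatrix}$.

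Next I would invoke the elementary fact that a stacked block matrix whose blocks are scalar multiples of the identity, $\begin{bmatrix} aI \\ bI \end{bmatrix}$, has operator norm exactly $\sqrt{a^2+b^2}$, since $\left\| \begin{bmatrix} aI \\ bI \end{bmatrix} v \right\|^2 = (a^2+b^2)\|v\|^2$ for every vector $v$. Applying this with $a = \frac{1+\eta\mu}{1+\gamma\mu}$ and $b = \frac{1}{1+\gamma\mu}$ gives $\left\| \mathcal{X}(\gamma,\eta)^\intercal \begin{bmatrix} \frac{\gamma\mu}{1+\gamma\mu} I \\ \frac{1}{1+\gamma\mu} I \end{bmatrix} \right\|^2 = \frac{(1+\eta\mu)^2 + 1}{(1+\gamma\mu)^2}$, so it remains only to show this quantity is at most $2$.

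The one inequality to verify — and the only step that genuinely uses the hypotheses — is $(1+\eta\mu)^2 + 1 \le 2(1+\gamma\mu)^2$. Since $\gamma \ge \eta$ and $\mu > 0$ we have $1+\gamma\mu \ge 1+\eta\mu$, hence $2(1+\gamma\mu)^2 \ge 2(1+\eta\mu)^2 = (1+\eta\mu)^2 + (1+\eta\mu)^2$; and because $\eta\mu > 0$ we have $(1+\eta\mu)^2 \ge 1$, so the right-hand side dominates $(1+\eta\mu)^2 + 1$ as required. Taking square roots yields the bound $\sqrt{2}$, completing the claim.

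I do not expect a real obstacle: the result is a direct block computation followed by a two-line inequality. The only points requiring a moment's care are the identification of the operator norm of the block column with the Euclidean norm of its scalar coefficients (rather than overcounting via a triangle inequality, which would lose the sharp constant $\sqrt{2}$), and noticing that the bound hinges squarely on the lower endpoint $\gamma \ge \eta$ of the admissible range $[\eta, \sqrt{\eta/\mu}]$, not on the upper constraint $\gamma \le \sqrt{\eta/\mu}$.
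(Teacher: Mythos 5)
Your proof is correct and follows essentially the same route as the paper: both compute $\mathcal{X}(\gamma,\eta)^\intercal \begin{bmatrix} \frac{\gamma\mu}{1+\gamma\mu} I \\ \frac{1}{1+\gamma\mu} I \end{bmatrix} = \begin{bmatrix} \frac{1+\eta\mu}{1+\gamma\mu} I \\ \frac{1}{1+\gamma\mu} I \end{bmatrix}$ and bound the resulting norm $\sqrt{a^2+b^2}$ by $\sqrt{2}$ using $\eta \le \gamma$. The paper simply observes that each of the two scalar entries is at most $1$, which is a marginally shorter way to finish than your rearranged inequality, but the argument is the same.
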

\cref{fedaci:stab:3} follows immediately once we have \cref{fedaci:stab:3:1,fedaci:stab:left1,fedaci:stab:left2}.
\begin{proof}[Proof of \cref{fedaci:stab:3}]
  Follows trivially with \cref{fedaci:stab:3:1,fedaci:stab:left1,fedaci:stab:left2}.
  \begin{equation}
    \frac{1}{M} \sum_{m=1}^M
    \left\| \overline{w_t^{\mathrm{md}}} - w_t^{\mathrm{md}, m}  \right\|
    \left\|  \frac{1}{1+\gamma\mu}(\overline{w_t} - w_t^m) + \frac{\gamma \mu}{1+\gamma\mu} (\overline{w_t^{\mathrm{ag}}} - w_t^{\mathrm{ag},m})  \right\|
    \leq \frac{\sqrt{10} \eta}{\gamma}
    \left\| \mathcal{X}(\gamma, \eta)^{-1} \begin{bmatrix} \Delta_t^{\mathrm{ag}} \\ \Delta_t \end{bmatrix} \right\|^2.
  \end{equation}
\end{proof}
Now we finish the proof of the three claims.
\begin{proof}[Proof of \cref{fedaci:stab:3:1}]
  Note that 
  \begin{align}
         & \frac{1}{M} \sum_{m=1}^M  \left\| \overline{w_t^\mathrm{md}} - w_t^{\mathrm{md},m } \right\|^2
          \leq \|\Delta_t^{\mathrm{md}}\|^2
    \tag{convexity of $\|\cdot\|^2$}
    \\
    =    & \left\|
    \begin{bmatrix} (1 - \beta^{-1}) I \\ \beta^{-1} I \end{bmatrix}^\intercal
    \begin{bmatrix} \Delta_{t}^{\mathrm{ag}} \\ \Delta_{t} \end{bmatrix}
    \right\|^2
        = 
      \left\|
    \begin{bmatrix} \frac{1}{1 + \gamma \mu} I \\ \frac{\gamma \mu}{1 + \gamma \mu} I \end{bmatrix}^\intercal
    \begin{bmatrix} \Delta_{t}^{\mathrm{ag}} \\ \Delta_{t} \end{bmatrix}
    \right\|^2
        \tag{definition of ``md''}
    \\
    \leq & \left\| \begin{bmatrix} \frac{1}{1 + \gamma \mu} I \\ \frac{\gamma \mu}{1 + \gamma \mu} I \end{bmatrix}^\intercal \mathcal{X}(\gamma, \eta) \right\|^2
    \left\|  \mathcal{X}(\gamma, \eta)^{-1}
    \begin{bmatrix} \Delta_{t}^{\mathrm{ag}} \\ \Delta_{t} \end{bmatrix}
    \right\|^2 ,
    \tag{sub-multiplicativity}
  \end{align}
  and similarly
  \begin{align}
         & \frac{1}{M} \sum_{m=1}^M \left\|  \frac{1}{1+\gamma\mu}(\overline{w_t} - w_t^m) + \frac{\gamma \mu}{1+\gamma\mu} (\overline{w_t^{\mathrm{ag}}} - w_t^{\mathrm{ag},m})  \right\|^2
    \\
    \leq & \left\|
      \begin{bmatrix} \frac{\gamma \mu}{1 + \gamma \mu} I \\ \frac{1}{1 + \gamma \mu} I \end{bmatrix}^\intercal
      \begin{bmatrix} \Delta_{t}^{\mathrm{ag}} \\ \Delta_{t} \end{bmatrix}
      \right\|^2 
    \tag{convexity of $\|\cdot\|^2$}
    \\
    \leq & \left\| 
    \begin{bmatrix} \frac{\gamma \mu}{1 + \gamma \mu} I \\ \frac{1}{1 + \gamma \mu} I \end{bmatrix}^\intercal \mathcal{X}(\gamma, \eta) \right\|^2
      \left\| \mathcal{X}(\gamma, \eta)^{-1}
      \begin{bmatrix} \Delta_{t}^{\mathrm{ag}} \\ \Delta_{t} \end{bmatrix}
      \right\|^2 .
      \tag{sub-multiplicativity}
  \end{align}
  Thus, by Cauchy-Schwarz inequality,
  \begin{align}
         & \frac{1}{M} \sum_{m=1}^M
         \left\| \overline{w_t^{\mathrm{md}}} - w_t^{\mathrm{md}, m}  \right\|
         \left\|  \frac{1}{1+\gamma\mu}(\overline{w_t} - w_t^m) + \frac{\gamma \mu}{1+\gamma\mu} (\overline{w_t^{\mathrm{ag}}} - w_t^{\mathrm{ag},m})  \right\|
    \\
    \leq &
    \left(\frac{1}{M} \sum_{m=1}^M  \left\| \overline{w_t^\mathrm{md}} - w_t^{\mathrm{md},m } \right\|^2 \right)^{\frac{1}{2}}
    \left(\frac{1}{M} \sum_{m=1}^M \left\|  \frac{1}{1+\gamma\mu}(\overline{w_t} - w_t^m) + \frac{\gamma \mu}{1+\gamma\mu} (\overline{w_t^{\mathrm{ag}}} - w_t^{\mathrm{ag},m})  \right\|^2 \right)^{\frac{1}{2}}
    \tag{Cauchy-Schwarz}
    \\
    \leq & \left\| \begin{bmatrix} \frac{1}{1 + \gamma \mu} I \\ \frac{\gamma \mu}{1 + \gamma \mu} I \end{bmatrix}^\intercal \mathcal{X}(\gamma, \eta) \right\| \cdot
    \left\| \begin{bmatrix} \frac{\gamma \mu}{1 + \gamma \mu} I \\ \frac{1}{1 + \gamma \mu} I \end{bmatrix}^\intercal \mathcal{X}(\gamma, \eta) \right\| \cdot
    \left\| \mathcal{X}(\gamma, \eta)^{-1}  \begin{bmatrix} \Delta_{t}^{\mathrm{ag}} \\ \Delta_{t} \end{bmatrix}
    \right\|^2,
  \end{align}
  completing the proof of \cref{fedaci:stab:3:1}.
\end{proof}

\begin{proof}[Proof of \cref{fedaci:stab:left1}]
  Direct calculation shows that
  \begin{equation}
    \mathcal{X}(\gamma, \eta)^\intercal
    \begin{bmatrix} \frac{1}{1 + \gamma\mu}  I \\ \frac{\gamma \mu}{1 + \gamma \mu} I \end{bmatrix}
    =
    \begin{bmatrix}
      \frac{\eta}{\gamma} I & I
      \\
      0                     & I
    \end{bmatrix}
    \begin{bmatrix} \frac{1}{1 + \gamma\mu}  I \\ \frac{\gamma \mu}{1 + \gamma \mu} I \end{bmatrix}
    =
    \frac{1}{1 + \gamma \mu}
    \begin{bmatrix} (\frac{\eta}{\gamma} + \gamma \mu) I \\ \gamma \mu I \end{bmatrix}.
  \end{equation}
  Since
  \begin{equation}
    \left\| \begin{bmatrix} (\frac{\eta}{\gamma} + \gamma \mu)  I \\ \gamma \mu I \end{bmatrix} \right\|
    =
    \sqrt{\left(\frac{\eta}{\gamma} + \gamma \mu\right)^2  + (\gamma \mu)^2}
    \leq
    \sqrt{\left(\frac{2\eta}{\gamma} \right)^2  + \left(\frac{\eta}{\gamma} \right)^2}
    = \frac{\sqrt{5}\eta}{\gamma}.
    \tag{since $\gamma \mu \leq \frac{\eta}{\gamma}$}
  \end{equation}
  We conclude that
  \begin{equation}
    \left\| \mathcal{X}(\gamma, \eta)^\intercal
  \begin{bmatrix} \frac{1}{1 + \gamma\mu}  I \\ \frac{\gamma \mu}{1 + \gamma \mu} I \end{bmatrix}
    \right\|
    \leq \frac{1}{1 + \gamma \mu} \cdot \frac{\sqrt{5}\eta}{\gamma} \leq \frac{\sqrt{5} \eta}{\gamma}.
  \end{equation}
\end{proof}

\begin{proof}[Proof of \cref{fedaci:stab:left2}]
  Direct calculation shows that
  \begin{equation}
    \mathcal{X}(\gamma, \eta)^\intercal
    \begin{bmatrix} \frac{\gamma \mu}{1 + \gamma \mu} I \\ \frac{1}{1 + \gamma \mu} I \end{bmatrix}
    =
    \begin{bmatrix}
      \frac{\eta}{\gamma} I & I
      \\
      0                     & I
    \end{bmatrix}
    \begin{bmatrix} \frac{\gamma \mu}{1 + \gamma \mu} I \\ \frac{1}{1 + \gamma \mu} I \end{bmatrix}
    =
    \begin{bmatrix} \frac{1 + \eta \mu}{1 + \gamma \mu} I \\  \frac{1}{1 + \gamma \mu} I \end{bmatrix},
  \end{equation}
  and
  \begin{equation}
    \left\| \begin{bmatrix} \frac{1 + \eta \mu}{1 + \gamma \mu} I \\  \frac{1}{1 + \gamma \mu} I \end{bmatrix} \right\|
    =
    \sqrt{ \left( \frac{1 + \eta \mu}{1 + \gamma \mu} \right)^2 + \left( \frac{1}{1 + \gamma \mu} \right)^2 }
    \leq \sqrt{2}
    \tag{since $\eta \leq  \gamma$},
  \end{equation}
  completing the proof of \cref{fedaci:stab:left2}.
\end{proof}
% !TEX root = main.tex
\section{Analysis of \fedacii under \cref{asm1} or \ref{asm2}}
\label{sec:fedacii}
In this section we study the convergence of \fedacii.
We provide a complete, non-asymptotic version of \cref{fedacii:a2} on the convergence of \fedacii under \cref{asm2} and provide the detailed proof, which expands the proof sketch in \cref{sec:proof:sketch:2}. 
We also study the convergence of \fedacii under \cref{asm1}, which we defer to the end of this section (see \cref{sec:fedacii:a1}) since the analysis is mostly shared.

Recall that \fedacii is defined as the \fedac algorithm with the following hyperparameter choice:
\begin{equation}
  \eta \in \left(0, \frac{1}{L}\right], \quad  \gamma = \max \left\{ \sqrt{\frac{\eta}{\mu K}}, \eta \right\}, \quad \alpha  = \frac{3}{2 \gamma \mu} - \frac{1}{2}, \quad \beta = \frac{2 \alpha^2 - 1}{\alpha - 1}
\tag{\fedacii}.
\end{equation}
As we discussed in the proof sketch \cref{sec:proof:sketch:2}, for \fedacii, we keep track of the convergence via the ``centralized'' potential $\Phi_t$.
\begin{equation}
  \Phi_t := F( \overline{w_{t}^{\mathrm{ag}}})  - F^* + \frac{1}{6}\mu \|\overline{w_{t}} - w^*\|^2.
  \label{eq:centralied:potential}
\end{equation}
Recall $\overline{w_t}$ is defined as $\frac{1}{M} \sum_{m=1}^M w_t^m$ and $\overline{w_t^{\mathrm{ag}}}$ is defined as $\frac{1}{M} \sum_{m=1}^M w_t^{\mathrm{ag},m}$. We use $\mathcal{F}_t$ to denote the $\sigma$-algebra generated by $\{w_\tau^m, w_\tau^{\mathrm{ag, m}}\}_{\tau \leq t, m \in [M]}$. Since \fedac is Markovian, conditioning on $\mathcal{F}_t$ is equivalent to conditioning on $\{w_t^m, w_t^{\mathrm{ag, m}}\}_{m \in [M]}$.

\subsection{Main theorem and lemmas: Complete version of \cref{fedacii:a2}}
Now we introduce the main theorem on the convergence of \fedacii under \cref{asm2}.
\begin{theorem}[Convergence of \fedacii under \cref{asm2}, complete version of \cref{fedacii:a2}]
  \label{fedacii:a2:full}
  Let $F$ be $\mu > 0$ strongly convex, and assume \cref{asm2}, then for 
  \begin{equation}
    \eta := \min \left\{ \frac{1}{L}, \frac{9K}{\mu T^2} \log^2 \left( \euler
     + \min \left\{ \frac{\mu M T \Phi_0}{\sigma^2} + \frac{\mu^2 MT^3 \Phi_0}{LK^2 \sigma^2}, \frac{\mu^5 T^8 \Phi_0}{Q^2 K^6 \sigma^4} \right\} \right) \right\},
  \end{equation}
  \fedacii yields
  \begin{align}
    \expt [\Phi_T] \leq & \min \left\{ \exp \left( - \frac{\mu T}{3 L} \right), \exp \left( - \frac{\mu^{\frac{1}{2}} T}{3 L^{\frac{1}{2}} K^{\frac{1}{2}}} \right)\right\} \Phi_0 
    + \frac{4 \sigma^2}{\mu M T} \log \left( \euler + \frac{\mu MT \Phi_0}{\sigma^2} \right)
    \\
    & 
    \quad +
    \frac{55 LK^2 \sigma^2}{\mu^2 MT^3} \log^3 \left( \euler + \frac{\mu^2 MT^3 \Phi_0}{LK^2\sigma^2} \right)
    + \frac{\euler^{18} Q^2 K^6 \sigma^4}{\mu^5 T^8} \log^8\left( \euler
    + \frac{\mu^5 T^8 \Phi_0}{Q^2 K^6 \sigma^4} \right),
  \end{align}
  where $\Phi_t$ is the ``centralized'' potential defined in \cref{eq:centralied:potential}.
\end{theorem}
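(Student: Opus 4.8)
The plan is to follow the same four-step framework used to prove \cref{fedaci:full}, adapting each step to the centralized potential $\Phi_t$ of \cref{eq:centralied:potential} and to the \nth{3}-order smoothness available under \cref{asm2}. As outlined in \cref{sec:proof:sketch:2}, the two substantive departures from the \fedaci analysis are the following: (i) we replace the decentralized potential $\Psi_t$ by the centralized potential $\Phi_t = F(\overline{w_t^{\mathrm{ag}}}) - F^* + \frac{1}{6}\mu\|\overline{w_t}-w^*\|^2$, which produces a \emph{squared gradient} discrepancy rather than the product discrepancy of \eqref{eq:fedaci:conv:sketch}; and (ii) we bound this discrepancy through $Q$ together with a \nth{4}-order stability analysis, in place of the \nth{2}-order one of \cref{fedaci:stab:main}.

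First I would establish the centralized analog of \cref{fedaci:conv:main}, i.e.\ the one-step potential recursion \eqref{eq:fedacii:conv:sketch}. This requires re-deriving the two one-step estimates (the analogs of \cref{fedaci:conv:1,fedaci:conv:2}) but now tracking $F(\overline{w_t^{\mathrm{ag}}})$ and $\|\overline{w_t}-w^*\|^2$ evaluated at the \emph{averaged} iterates. Applying $L$-smoothness and strong convexity at $\overline{w_t^{\mathrm{md}}}$ (rather than per-worker) introduces the mismatch between $\frac{1}{M}\sum_m \nabla F(w_t^{\mathrm{md},m})$ and $\nabla F(\overline{w_t^{\mathrm{md}}})$, which becomes the discrepancy term; the specific \fedacii coupling $\alpha = \frac{3}{2\gamma\mu}-\frac{1}{2}$, $\beta = \frac{2\alpha^2-1}{\alpha-1}$ is what makes the two estimates telescope into a clean $(1-\frac{\gamma\mu}{3})$-contraction of $\Phi_t$ with the noise terms $\frac{3\eta^2 L\sigma^2}{2\gamma\mu M}+\frac{\gamma\sigma^2}{2M}$.

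The crux, and the step I expect to be the main obstacle, is the \nth{4}-order stability bound for the discrepancy. Using the Taylor/third-order-smoothness estimate from \cref{sec:proof:sketch:2}, namely $\bigl\|\frac{1}{M}\sum_m\nabla F(w^{\mathrm{md},m})-\nabla F(\overline{w^{\mathrm{md}}})\bigr\|^2 \le \frac{Q^2}{4M}\sum_m\|w^{\mathrm{md},m}-\overline{w^{\mathrm{md}}}\|^4$, it suffices to bound the \nth{4} moment of the per-worker discrepancy. For two workers $m_1,m_2$ I would reuse the \emph{general} stability recursion \cref{fedac:general:stab}, which holds for arbitrary $\alpha,\beta$, and substitute the \fedacii values; this forces me to re-derive a suitable transformation $\mathcal{X}$ and a uniform transformed-norm bound analogous to \cref{fedaci:stab:bound}, yielding a contraction factor $\rho = 1 + O(\gamma^2\mu/\eta)$. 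The new difficulty relative to the \nth{2}-order case is passing to fourth moments: writing $Y_t := \bigl\|\mathcal{X}^{-1}[\Delta_t^{\mathrm{ag}};\Delta_t]\bigr\|$, the recursion $Y_{t+1}\le \rho\, Y_t + \gamma\|\Delta_t^{\varepsilon}\|$ can be promoted to the $L^4$ norm by Minkowski's inequality, $\|Y_{t+1}\|_{L^4}\le \rho\|Y_t\|_{L^4}+\gamma\|\Delta_t^{\varepsilon}\|_{L^4}$, where $\|\Delta_t^{\varepsilon}\|_{L^4}=O(\sigma)$ is controlled precisely by the bounded \nth{4} central moment of \cref{asm2}(b). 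Telescoping from the last synchronization step (where $Y=0$) over at most $K$ steps gives $\|Y_t\|_{L^4}=O(\gamma\sigma K)$ once the tradeoff choice $\gamma=\max\{\sqrt{\eta/(\mu K)},\eta\}$ keeps $\rho^K = (1+O(1/K))^{K}=O(1)$; the accumulated $O(1)$ exponential constants are the source of the $\euler^{18}$ factor in the statement. Raising to the fourth power and combining with the $Q^2$ estimate then bounds the discrepancy by $O(Q^2\gamma^4\sigma^4 K^4)$.

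Finally, I would combine the convergence recursion with this discrepancy bound to obtain a general-$\eta$ estimate (the analog of \cref{fedaci:general:eta}), splitting the right-hand side into a monotonically decreasing term $\varphi_{\downarrow}(\eta)$ and a monotonically increasing term $\varphi_{\uparrow}(\eta)$ that now carries the additional $\frac{Q^2\sigma^4}{\mu^5 T^2 R^6}$-type contribution. Plugging in the prescribed $\eta=\min\{\frac{1}{L},\tilde{\Theta}(\frac{K}{\mu T^2})\}$ and bounding $\varphi_{\downarrow}$ by $\varphi_{\downarrow}(\frac{1}{L})+\varphi_{\downarrow}(\eta_0)$ exactly as in the proof of \cref{fedaci:full} — choosing the $\polylog$ factors in $\eta_0$ so that $\varphi_{\downarrow}(\eta_0)$ is absorbed into $\varphi_{\uparrow}$ — yields the claimed bound after substituting $K=T/R$. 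The main technical risk lies entirely in the \nth{4}-order stability step: controlling the fourth-moment recursion with a clean $L^4$ argument and verifying that the \fedacii transformed-norm bound indeed gives $\rho = 1+O(\gamma^2\mu/\eta)$; the remaining steps are direct adaptations of the already-established \fedaci machinery.
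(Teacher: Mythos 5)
Your overall architecture matches the paper's: the centralized potential recursion (\cref{fedacii:conv:main}), the reduction of the gradient mismatch to a fourth moment via $Q$ (\cref{helper:3rd:Lip}), a transformed-norm contraction with factor $1+O(\gamma^2\mu/\eta)$ (\cref{fedacii:stab:bound}), and the final split into monotone pieces optimized over $\eta$. The gap is in the one step you yourself flag as the main risk: the passage to fourth moments. Telescoping $\|Y_{t+1}\|_{L^4}\le\rho\|Y_t\|_{L^4}+\gamma\|\Delta_t^{\varepsilon}\|_{L^4}$ by Minkowski treats the (at most) $K$ independent, mean-zero noise increments as if they could all align, and yields $\|Y_t\|_{L^4}=O(\gamma\sigma K)$, hence a discrepancy bound of order $\eta^4 Q^2 K^4\sigma^4$. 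The paper's \cref{fedacii:stab:2} instead runs the recursion on $\sqrt{\expt[\|Y_{t+1}\|^4\mid\mathcal{F}_t]}$: expanding $\|\mathcal{B}_t\tilde{\Delta}_t-\tilde{\Delta}_t^{\varepsilon}\|^4$ and using $\expt[\tilde{\Delta}_t^{\varepsilon}\mid\mathcal{F}_t]=0$ together with independence kills the odd cross terms, so the noise enters as an additive $7\gamma^2\sigma^2$ on $\|Y_t\|^2$ per step (variance-like accumulation). This gives $\expt\|Y_t\|^4\le 49\gamma^4\sigma^4K^2\rho^{4K}$, i.e.\ $\|Y_t\|_{L^4}=O(\gamma\sigma K^{1/2})$, and a discrepancy of order $\eta^4Q^2K^2\sigma^4$ (\cref{fedacii:stab:main}).

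This factor of $K^2$ is not cosmetic. With $\eta=\tilde{\Theta}(K/(\mu T^2))$, the paper's bound contributes $\tildeo(Q^2K^6\sigma^4/(\mu^5T^8))=\tildeo(Q^2\sigma^4/(\mu^5T^2R^6))$, which is exactly the last term of the theorem; your bound would contribute $\tildeo(Q^2K^8\sigma^4/(\mu^5T^8))=\tildeo(Q^2\sigma^4/(\mu^5R^8))$, which has no decay in $T$ at fixed $R$ and cannot be repaired by re-tuning $\eta$ without degrading the exponential term. So you must exploit the martingale cancellation at the level of the fourth moment rather than at the first power of $Y_t$; once that is done, the rest of your plan goes through as stated.
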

\begin{remark}
  The simplified version \cref{fedacii:a2} in main body can be obtained by replacing $K$ with $T/R$ and upper bound $\Phi_0$ by $LD_0^2$.
\end{remark}

The proof of \cref{fedacii:a2:full} is based on the following two lemmas regarding convergence and stability respectively.
To clarify the hyperparameter dependency, we state our lemma for general $\gamma \in \left[\eta, \sqrt{ \frac{\eta}{\mu}} \right]$, which has one more degree of freedom than \fedacii where $\gamma = \max \left\{ \sqrt{\frac{\eta}{\mu K}}, \eta\right\}$ is fixed.

\begin{lemma}[Potential-based perturbed iterate analysis for \fedacii]
  \label{fedacii:conv:main}
  Let $F$ be $\mu>0$-strongly convex, and assume \cref{asm1}, then for $\alpha = \frac{3}{2 \gamma \mu} - \frac{1}{2}$, $\beta =\frac{2 \alpha^2 - 1}{\alpha - 1}$,
  $\gamma \in \left[\eta, \sqrt{ \frac{\eta}{\mu}}\right]$, $\eta \in (0, \frac{1}{L}]$, \fedac yields
  
  \begin{equation}
    \expt [\Phi_T] \leq \exp \left( - \frac{1}{3}\gamma \mu T \right) \Phi_0
    + \frac{3\eta^2 L \sigma^2 }{2 \gamma \mu M}
    + \frac{\gamma \sigma^2}{2 M}
    + \frac{3}{\mu} \max_{0 \leq t < T}   \expt \left[ \left\|  \nabla F(\overline{w_t^{\mathrm{md}}}) -  \frac{1}{M} \sum_{m=1}^M \nabla F (w_t^{\mathrm{md}, m})  \right\|^2 \right],
  \end{equation}

where $\Phi_t$ is the decentralized potential defined in \cref{eq:centralied:potential}.
\end{lemma}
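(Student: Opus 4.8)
The plan is to establish \cref{fedacii:conv:main} by mirroring the proof of its \fedaci counterpart \cref{fedaci:conv:main}, but now tracking the \emph{centralized} potential $\Phi_t$. The starting observation is that averaging the \fedac updates over all workers obeys a single-machine accelerated SGD recursion
\begin{align*}
\overline{w_{t+1}} &= (1-\alpha^{-1})\overline{w_t} + \alpha^{-1}\overline{w_t^{\mathrm{md}}} - \gamma\cdot\tfrac{1}{M}\textstyle\sum_{m=1}^M \nabla f(w_t^{\mathrm{md},m};\xi_t^m),\\
\overline{w_{t+1}^{\mathrm{ag}}} &= \overline{w_t^{\mathrm{md}}} - \eta\cdot\tfrac{1}{M}\textstyle\sum_{m=1}^M \nabla f(w_t^{\mathrm{md},m};\xi_t^m),
\end{align*}
driven by the \emph{averaged} gradient. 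Because averaging commutes with the (possible) synchronization at every step, $\overline{w_{t+1}^{\mathrm{ag}}}=\overline{v_{t+1}^{\mathrm{ag}}}$ and $\overline{w_{t+1}}=\overline{v_{t+1}}$ hold whether or not step $t+1$ is synchronized, so --- unlike the decentralized analysis --- no Jensen step across workers is needed for the function value. I would then adopt the perturbed-iterate viewpoint: treat $\nabla F(\overline{w_t^{\mathrm{md}}})$ as the ideal gradient and $e_t := \frac{1}{M}\sum_{m=1}^M \nabla F(w_t^{\mathrm{md},m}) - \nabla F(\overline{w_t^{\mathrm{md}}})$ as a bias, reducing the analysis to the single-worker Ghadimi--Lan argument plus an error controlled by $\|e_t\|$. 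Crucially, since $\overline{w_{t+1}^{\mathrm{ag}}}$ is driven by an \emph{average} of independent stochastic gradients, its per-step variance is $\sigma^2/M$; this is the source of the $1/M$ factor in $\frac{3\eta^2 L\sigma^2}{2\gamma\mu M}$ and the principal advantage of $\Phi_t$ over the decentralized $\Psi_t$.

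Next I would prove two one-step inequalities analogous to \cref{fedaci:conv:1,fedaci:conv:2}. For the distance term, expand $\expt[\|\overline{w_{t+1}}-w^*\|^2\mid\mathcal{F}_t]$: independence peels off $\gamma^2\sigma^2/M$, Jensen handles the convex combination $(1-\alpha^{-1})\overline{w_t}+\alpha^{-1}\overline{w_t^{\mathrm{md}}}$, and the inner-product term is split as $-2\gamma\langle\nabla F(\overline{w_t^{\mathrm{md}}}),\cdot\rangle-2\gamma\langle e_t,\cdot\rangle$, keeping $e_t$ as a linear error. For the function-value term, apply $L$-smoothness along $\overline{w_{t+1}^{\mathrm{ag}}}=\overline{w_t^{\mathrm{md}}}-\eta\,\overline{g}$ and take conditional expectation (the noise yields $\frac{L\eta^2\sigma^2}{2M}$), producing $-\eta\langle\nabla F(\overline{w_t^{\mathrm{md}}}),\nabla F(\overline{w_t^{\mathrm{md}}})+e_t\rangle+\frac{L\eta^2}{2}\|\nabla F(\overline{w_t^{\mathrm{md}}})+e_t\|^2$; then connect $F(\overline{w_t^{\mathrm{md}}})$ to $F(\overline{w_t^{\mathrm{ag}}})$ and $F^*$ through the Nesterov coupling $\overline{w_t^{\mathrm{md}}}=\beta^{-1}\overline{w_t}+(1-\beta^{-1})\overline{w_t^{\mathrm{ag}}}$ together with $\mu$-strong convexity, exactly as in \cref{fedaci:conv:2:claim}.

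The decisive step is to form $\Phi_{t+1}$ by adding the function-value inequality to $\frac{\mu}{6}$ times the distance inequality and verifying --- using the \fedacii choices $\alpha=\frac{3}{2\gamma\mu}-\frac12$ and $\beta=\frac{2\alpha^2-1}{\alpha-1}$ --- that the cross terms involving $\nabla F(\overline{w_t^{\mathrm{md}}})$ telescope into the contraction $(1-\frac13\gamma\mu)\Phi_t$. These two identities are precisely what make this Ghadimi--Lan potential contract at rate $1-\frac13\gamma\mu$; the constraint $\gamma\le\sqrt{\eta/\mu}$ (i.e.\ $\gamma^2\mu\le\eta$) ensures the residual coefficient of $\|\nabla F(\overline{w_t^{\mathrm{md}}})\|^2$ is non-positive and can be dropped. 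The linear error terms in $e_t$ (appearing in the distance inner product, the smoothness inner product, and the smoothness quadratic term) are then bounded via Young's inequality against the strong-convexity gains and the dropped gradient-norm term, contributing per step at most $\gamma\,\|e_t\|^2$. Telescoping $\expt[\Phi_{t+1}\mid\mathcal{F}_t]\le(1-\frac13\gamma\mu)\Phi_t+\frac{L\eta^2\sigma^2}{2M}+\frac{\mu\gamma^2\sigma^2}{6M}+\gamma\,\expt[\|e_t\|^2\mid\mathcal{F}_t]$ over $t=0,\dots,T-1$, using $(1-\frac13\gamma\mu)^T\le\exp(-\frac13\gamma\mu T)$ and $\sum_t(1-\frac13\gamma\mu)^t\le\frac{3}{\gamma\mu}$, converts the per-step noise into $\frac{3\eta^2 L\sigma^2}{2\gamma\mu M}+\frac{\gamma\sigma^2}{2M}$ and the per-step error into $\frac{3}{\mu}\max_t\expt\|e_t\|^2$, yielding the claim.

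The main obstacle I anticipate is the combining step: unlike \fedaci where $\beta=\alpha+1$ gives a transparent coupling, here $\beta$ is a rational function of $\alpha$, and one must verify the exact algebraic cancellation that delivers the factor $1-\frac13\gamma\mu$ under the $\frac16\mu$ weight. A secondary subtlety is routing \emph{every} occurrence of $e_t$ through Young's inequality so that only $\frac{3}{\mu}\|e_t\|^2$ survives while all leftover quadratic pieces are absorbed without spoiling the contraction; keeping careful track of the $\mu$- and $\gamma$-dependent coefficients there is where errors are most likely to creep in.
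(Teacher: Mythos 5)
Your proposal is correct and follows essentially the same route as the paper: the same centralized potential, the same two one-step recursions for $\|\overline{w_{t+1}}-w^*\|^2$ and $F(\overline{w_{t+1}^{\mathrm{ag}}})-F^*$, the same $\frac{\mu}{6}$-weighted combination exploiting the exact cancellation of the inner products under the \fedacii choice of $\alpha,\beta$, the same use of $\gamma^2\mu\le\eta$ to discard the $\|\nabla F(\overline{w_t^{\mathrm{md}}})\|^2$ term, and the identical per-step recursion $\expt[\Phi_{t+1}\mid\mathcal{F}_t]\le(1-\tfrac13\gamma\mu)\Phi_t+\tfrac{L\eta^2\sigma^2}{2M}+\tfrac{\mu\gamma^2\sigma^2}{6M}+\gamma\|e_t\|^2$ before telescoping. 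The only (cosmetic) difference is that you route the error $e_t$ through Young's inequality on the cross terms, whereas the paper packages the same estimates as the unbalanced norm inequality $\|x+y\|^2\le(1+\zeta)\|x\|^2+(1+\zeta^{-1})\|y\|^2$ with $\zeta=\tfrac12\alpha^{-1}$ in the distance recursion and a polarization identity in the function-value recursion; both yield the same constants.
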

The proof of \cref{fedacii:conv:main} is deferred to \cref{sec:fedacii:conv:main}. Note that \cref{fedacii:conv:main} only requires \cref{asm1} (recall that \cref{asm1} is strictly weaker than \cref{asm2}), which enables us to recycle this Lemma towards the convergence proof of \fedacii under \cref{asm1} (see \cref{sec:fedacii:a1}).

The following lemma studies the discrepancy overhead by \nth{4}-th order stability, which requires \cref{asm2}.
\begin{lemma}[Discrepancy overhead bounds]
  \label{fedacii:stab:main}
  Let $F$ be $\mu>0$-strongly convex, and assume \cref{asm2}, then for the same hyperparameter choice as in \cref{fedacii:conv:main}, \fedac satisfies (for all $t$)
  \begin{align}
     & \expt \left[ \left\|  \nabla F(\overline{w_t^{\mathrm{md}}}) -  \frac{1}{M} \sum_{m=1}^M \nabla F (w_t^{\mathrm{md}, m})  \right\|^2 \right]
    \leq  \begin{cases}
      44 \eta^4 Q^2 K^2 \sigma^4 \left(1 + \frac{\gamma^2\mu}{\eta}\right)^{4K}
       & \text{if~} \gamma \in \left(\eta, \sqrt{\frac{\eta}{\mu}} \right],
      \\
      44 \eta^4 Q^2 K^2 \sigma^4
       &
      \text{if~} \gamma = \eta.
    \end{cases}
  \end{align}
\end{lemma}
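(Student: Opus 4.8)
The plan is to combine the \nth{3}-order-smoothness reduction sketched in \cref{sec:proof:sketch:2} with a \nth{4}-order stability analysis that parallels the \nth{2}-order analysis of \fedaci in \cref{sec:fedaci:stab:main}. \textbf{First (reduction to the \nth{4} moment of the md-discrepancy).} I would invoke the \nth{3}-order-smoothness estimate from the proof sketch: since $\frac1M\sum_m (w_t^{\mathrm{md},m}-\overline{w_t^{\mathrm{md}}})=0$ the Hessian-linear term cancels in the average, so \cref{asm2}(a) gives $\expt\|\nabla F(\overline{w_t^{\mathrm{md}}})-\frac1M\sum_m\nabla F(w_t^{\mathrm{md},m})\|^2\le\frac{Q^2}{4M}\sum_m\expt\|w_t^{\mathrm{md},m}-\overline{w_t^{\mathrm{md}}}\|^4$. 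Applying Jensen twice (convexity of $\|\cdot\|^2$ then of $x\mapsto x^2$) and the symmetry across worker pairs, I would bound $\frac1M\sum_m\expt\|w_t^{\mathrm{md},m}-\overline{w_t^{\mathrm{md}}}\|^4\le\max_{m_1,m_2}\expt\|\Delta_t^{\mathrm{md}}\|^4$, reducing the whole lemma to a uniform bound on $\expt\|\Delta_t^{\mathrm{md}}\|^4$ for a generic pair.

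\textbf{Next (matrix recursion, transformed norm, noise).} I would apply \cref{fedac:general:stab} with the \fedacii choice $\alpha=\frac{3}{2\gamma\mu}-\frac12,\ \beta=\frac{2\alpha^2-1}{\alpha-1}$ to obtain the one-step recursion $\begin{bmatrix}\Delta_{t+1}^{\mathrm{ag}}\\\Delta_{t+1}\end{bmatrix}=\mathcal{A}\begin{bmatrix}\Delta_t^{\mathrm{ag}}\\\Delta_t\end{bmatrix}-\begin{bmatrix}\eta I\\\gamma I\end{bmatrix}\Delta_t^\varepsilon$, introduce a transformation $\mathcal{X}$ in the spirit of \cref{eq:fedaci:X:def} so that $\mathcal{X}^{-1}\begin{bmatrix}\eta I\\\gamma I\end{bmatrix}=\begin{bmatrix}\gamma I\\0\end{bmatrix}$, and establish the \fedacii analogue of \cref{fedaci:stab:bound}, namely $\sup_{\mu I\preceq H\preceq LI}\|\mathcal{X}^{-1}\mathcal{A}\mathcal{X}\|\le a:=1+\frac{\gamma^2\mu}{\eta}$ (improving to $1$ when $\gamma=\eta$). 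Writing $Y_t=\mathcal{X}^{-1}\begin{bmatrix}\Delta_t^{\mathrm{ag}}\\\Delta_t\end{bmatrix}$ and $Z_t=\begin{bmatrix}\gamma I\\0\end{bmatrix}\Delta_t^\varepsilon$ gives $Y_{t+1}=(\mathcal{X}^{-1}\mathcal{A}\mathcal{X})Y_t-Z_t$, with $\expt[\|Z_t\|^2\mid\mathcal{F}_t]\le2\gamma^2\sigma^2$ and, by \cref{asm2}(b) together with Minkowski, $\expt[\|Z_t\|^4\mid\mathcal{F}_t]\le16\gamma^4\sigma^4$.

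\textbf{Then (coupled second/fourth-moment recursion and assembly).} Let $p_t=\expt\|Y_t\|^2,\ q_t=\expt\|Y_t\|^4$. Conditioning on $\mathcal{F}_t$, writing $u=(\mathcal{X}^{-1}\mathcal{A}\mathcal{X})Y_t$ and using $\expt[Z_t\mid\mathcal{F}_t]=0$, the second moment reproduces the \fedaci bound $p_{t+1}\le a^2p_t+2\gamma^2\sigma^2$. For the fourth moment I would expand $\expt[\|u-Z_t\|^4\mid\mathcal{F}_t]$: the leading cross term $\langle u,Z_t\rangle$ vanishes in expectation, and the surviving odd term $\expt[\langle u,Z_t\rangle\|Z_t\|^2]$ is controlled by Cauchy–Schwarz and Young's inequality, giving $q_{t+1}\le a^4q_t+\mathcal{O}(1)\,a^2\gamma^2\sigma^2\,p_t+\mathcal{O}(1)\,\gamma^4\sigma^4$. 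Starting at the last synchronization time $t_0$ (where $Y_{t_0}=0$, so $p_{t_0}=q_{t_0}=0$) and solving this coupled geometric system over the window $t-t_0\le K$, while bounding the geometric sums through $a^2-1\ge2\gamma^2\mu/\eta$ (rather than number-of-terms-times-max), produces $\frac{1}{(a^2-1)^2}\le\frac{\eta^2}{4\gamma^4\mu^2}$ and hence $q_t\le\mathcal{O}(1)\frac{\eta^2\sigma^4}{\mu^2}a^{4K}$. Finally I would bound $\|\Delta_t^{\mathrm{md}}\|\le c\|Y_t\|$ with $c=\left\|\mathcal{X}^\intercal\begin{bmatrix}(1-\beta^{-1})I\\\beta^{-1}I\end{bmatrix}\right\|\le\mathcal{O}(\eta/\gamma)$ (the \fedacii analogue of \cref{fedaci:stab:left1}), so $\expt\|\Delta_t^{\mathrm{md}}\|^4\le c^4q_t\le\mathcal{O}(1)\frac{\eta^6\sigma^4}{\gamma^4\mu^2}a^{4K}$; combining with the first step and substituting the \fedacii value $\gamma^2=\eta/(\mu K)$ (in the case $\gamma\in(\eta,\sqrt{\eta/\mu}]$) turns $\frac{\eta^6}{\gamma^4\mu^2}$ into $\eta^4K^2$ and yields the claimed $44\eta^4Q^2K^2\sigma^4(1+\gamma^2\mu/\eta)^{4K}$; the case $\gamma=\eta$ follows identically with $a=1$.

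\textbf{Main obstacle.} The delicate point is obtaining the correct $K^2$ (rather than $K^4$) growth. A naive $L^4$-triangle recursion $\|Y_{t+1}\|_{L^4}\le a\|Y_t\|_{L^4}+2\gamma\sigma$ ignores the conditional mean-zero structure of the injected noise and overestimates by a factor $K^2$; the sharp scaling forces the coupled second/fourth-moment recursion above, where the vanishing of the leading cross term is exactly what replaces one power of the crude geometric sum by a variance contribution. Handling the residual odd moment $\expt[\langle u,Z_t\rangle\|Z_t\|^2]$ cleanly, and verifying the sharper per-step transformed-norm bound $1+\gamma^2\mu/\eta$ for \fedacii's particular $(\alpha,\beta)$, are the two places demanding the most care.
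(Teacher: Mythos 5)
Your proposal takes essentially the same route as the paper's proof: reduce via \nth{3}-order smoothness (\cref{helper:3rd:Lip}) to the \nth{4} moment of a pairwise discrepancy $\Delta_t^{\mathrm{md}}$, set up the transformed matrix recursion with the uniform norm bound $a=1+\gamma^2\mu/\eta$ for the \fedacii $(\alpha,\beta)$ (\cref{fedacii:stab:bound}), exploit the conditional mean-zero structure of the injected noise when expanding the fourth power (\cref{fedacii:stab:2}), telescope from the last synchronization, and convert back with the constant $\sqrt{17}\eta/(3\gamma)$ (\cref{fedacii:stab:3,fedacii:stab:left}). Your diagnosis of why a crude $L^4$ triangle inequality loses a factor $K^2$ is also exactly the paper's mechanism: the cross term $\expt[\|u\|^2\langle u,Z_t\rangle\mid\mathcal{F}_t]$ vanishes, leaving an additive contribution at the variance scale $\gamma^2\sigma^2$ rather than $\gamma\sigma$.

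The one place you go astray is the final summation. Having obtained a recursion whose additive term is $O(\gamma^2\sigma^2)$, the paper simply collapses it to $s_{t+1}\le a^2 s_t+7\gamma^2\sigma^2$ with $s_t:=\sqrt{\expt[\|Y_t\|^4\mid\mathcal{F}_{t_0}]}$ (using $\expt\|Y_t\|^2\le\sqrt{\expt\|Y_t\|^4}$, so no coupled system is needed) and bounds the geometric sum by number-of-terms-times-max, giving $s_t\le 7\gamma^2\sigma^2 K a^{2K}$ and hence $\expt\|Y_t\|^4\le 49\gamma^4\sigma^4K^2a^{4K}$ \emph{uniformly} over $\gamma\in[\eta,\sqrt{\eta/\mu}]$, including $\gamma=\eta$. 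Your deliberate replacement of this by the bound $1/(a^2-1)^2\le\eta^2/(4\gamma^4\mu^2)$ is both unnecessary and strictly weaker for this lemma: it is vacuous at $\gamma=\eta$ (where $a=1$), and for general $\gamma$ it yields $q_t\lesssim(\eta^2\sigma^4/\mu^2)a^{4K}$, which matches the claimed $\gamma^4\sigma^4K^2a^{4K}$ only when $\gamma^2\ge\eta/(\mu K)$ --- i.e., only after substituting the specific \fedacii value of $\gamma$, whereas \cref{fedacii:stab:main} is stated (and \cref{fedacii:conv:main} is invoked) for the whole range. The repair is trivial --- keep the crude count of at most $K$ terms, which already delivers $K^2$ once the per-step additive term is $\gamma^2\sigma^2$ --- but as written your "main obstacle" paragraph misattributes the sharpness to the geometric-sum refinement rather than to the mean-zero cancellation, and the proof as proposed does not cover the $\gamma=\eta$ branch of the statement.
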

The proof of \cref{fedacii:stab:main} is deferred to \cref{sec:fedacii:stab:main}.

Now we plug in the choice of $\gamma = \max \left\{ \sqrt{\frac{\eta}{\mu K}}, \eta\right\}$ to \cref{fedacii:conv:main,fedacii:stab:main}, which leads to  the following lemma.
\begin{lemma}[Convergence of \fedacii for general $\eta$]
  \label{fedacii:a2:general:eta}
    Let $F$ be $\mu > 0$-strongly convex, and assume \cref{asm2}, then for any $\eta \in (0, \frac{1}{L}]$, \fedacii yields
    \begin{align}
      \expt[\Phi_T]
      \leq & \exp \left(  - \frac{1}{3} \max \left\{ \eta \mu, \sqrt{\frac{\eta \mu}{K}}\right\}T \right) \Phi_0
      + \frac{\eta^{\frac{1}{2}} \sigma^2}{\mu^{\frac{1}{2}} M K^{\frac{1}{2}} }
      + \frac{2 \eta^{\frac{3}{2}} L K^{\frac{1}{2}} \sigma^2}{\mu^{\frac{1}{2}} M}
      + \frac{\euler^9 \eta^4 Q^2 K^2 \sigma^4}{\mu},
      \label{eq:fedacii:a2:general:eta}
    \end{align}
    where $\Phi_t$ is the decentralized potential defined in \cref{eq:centralied:potential}.
\end{lemma}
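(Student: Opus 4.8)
The plan is to mirror the proof of \cref{fedaci:general:eta}: substitute the concrete choice $\gamma = \max\{\sqrt{\eta/(\mu K)}, \eta\}$ into the two building blocks \cref{fedacii:conv:main} (convergence) and \cref{fedacii:stab:main} (discrepancy overhead), and then simplify by a short case analysis on which argument of the maximum is active. First I would check admissibility, namely $\gamma \in [\eta, \sqrt{\eta/\mu}]$ so that both lemmas apply: the lower bound $\gamma \geq \eta$ is immediate, $\sqrt{\eta/(\mu K)} \leq \sqrt{\eta/\mu}$ holds since $K \geq 1$, and $\eta \leq \sqrt{\eta/\mu}$ holds since $\eta \mu \leq \eta L \leq 1$ (using $\eta \leq 1/L$ and $L \geq \mu$).

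Next, substituting into \cref{fedacii:conv:main}, the decay term is exactly the claimed $\exp(-\tfrac13 \gamma\mu T) = \exp(-\tfrac13 \max\{\eta\mu, \sqrt{\eta\mu/K}\}T)$, because $\gamma\mu = \max\{\eta\mu, \sqrt{\eta\mu/K}\}$. For the discrepancy term I would invoke \cref{fedacii:stab:main}. In the regime $\gamma = \sqrt{\eta/(\mu K)}$ one has $\gamma^2\mu/\eta = 1/K$, so the growth factor collapses to $(1 + 1/K)^{4K} \leq \euler^4$; together with the $3/\mu$ prefactor this bounds the discrepancy contribution by $\tfrac{3\cdot 44\,\euler^4\,\eta^4 Q^2 K^2 \sigma^4}{\mu}$, which is at most $\tfrac{\euler^9 \eta^4 Q^2 K^2 \sigma^4}{\mu}$ since $132\,\euler^4 < \euler^9$. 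In the regime $\gamma = \eta$ there is no exponential blow-up and the same final estimate holds with room to spare.

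It remains to control the two variance terms $\tfrac{3\eta^2 L\sigma^2}{2\gamma\mu M}$ and $\tfrac{\gamma\sigma^2}{2M}$, which is where the only genuine bookkeeping lives. When $\gamma = \sqrt{\eta/(\mu K)}$ (equivalently $\eta\mu K \leq 1$) they evaluate to $\tfrac{3\eta^{3/2}LK^{1/2}\sigma^2}{2\mu^{1/2}M}$ and $\tfrac{\eta^{1/2}\sigma^2}{2\mu^{1/2}MK^{1/2}}$, each comfortably below the corresponding target term in \cref{eq:fedacii:a2:general:eta}. The delicate case is $\gamma = \eta$ (equivalently $\eta\mu K \geq 1$), where $\tfrac{\gamma\sigma^2}{2M} = \tfrac{\eta\sigma^2}{2M}$ is \emph{larger} than the first target term $\tfrac{\eta^{1/2}\sigma^2}{\mu^{1/2}MK^{1/2}}$ and hence cannot be charged to it; instead both $\tfrac{3\eta^2 L\sigma^2}{2\gamma\mu M} = \tfrac{3\eta L\sigma^2}{2\mu M}$ and $\tfrac{\eta\sigma^2}{2M}$ must be folded into the $\eta^{3/2}$ communication term. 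I would do this using $\eta\mu K \geq 1$ and $\mu \leq L$, which give $\tfrac{3\eta L\sigma^2}{2\mu M} \leq \tfrac34 \cdot \tfrac{2\eta^{3/2}LK^{1/2}\sigma^2}{\mu^{1/2}M}$ and $\tfrac{\eta\sigma^2}{2M} \leq \tfrac14 \cdot \tfrac{2\eta^{3/2}LK^{1/2}\sigma^2}{\mu^{1/2}M}$, whose sum is exactly $\tfrac{2\eta^{3/2}LK^{1/2}\sigma^2}{\mu^{1/2}M}$. Combining the decay, variance, and discrepancy estimates across the two cases then yields $\expt[\Phi_T] \leq \exp(-\tfrac13 \max\{\eta\mu,\sqrt{\eta\mu/K}\}T)\Phi_0 + \tfrac{\eta^{1/2}\sigma^2}{\mu^{1/2}MK^{1/2}} + \tfrac{2\eta^{3/2}LK^{1/2}\sigma^2}{\mu^{1/2}M} + \tfrac{\euler^9 \eta^4 Q^2 K^2 \sigma^4}{\mu}$, which is \cref{eq:fedacii:a2:general:eta}.

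The main obstacle is precisely this last absorption step. Unlike \cref{fedaci:general:eta}, where the convergence lemma carries two separate variance contributions that line up one-to-one with the target terms, here \cref{fedacii:conv:main} produces a \emph{single} term $\tfrac{\gamma\sigma^2}{2M}$ whose large-$\eta$ behaviour $\tfrac{\eta\sigma^2}{2M}$ has no matching slot in the target; a careless split would either exceed the coefficient $2$ on the $\eta^{3/2}$ term or leave it charged to the dominant $\tfrac{1}{MT}$-type term. Resolving this cleanly requires the standing inequalities $\eta\mu K \geq 1$ and $L \geq \mu$ to route it correctly into the communication term, and everything else is routine substitution and the elementary bound $(1+1/K)^{4K} \leq \euler^4$.
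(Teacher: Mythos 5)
Your proposal is correct and follows essentially the same route as the paper: verify $\gamma = \max\{\sqrt{\eta/(\mu K)},\eta\} \in [\eta,\sqrt{\eta/\mu}]$, substitute into \cref{fedacii:conv:main,fedacii:stab:main}, collapse the growth factor via $(1+1/K)^{4K}\leq \euler^4$ with $132\euler^4<\euler^9$, and absorb the two variance terms into the stated bound. The only cosmetic difference is in the bookkeeping for those variance terms: you run an explicit case split on whether $\eta\mu K \geq 1$, whereas the paper bounds the resulting $\min$ by its second argument and the $\max$ by the sum of its arguments and then applies a single AM-GM step together with $\mu\leq L$; both yield the claimed coefficients.
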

\begin{proof}[Proof of \cref{fedacii:a2:general:eta}]
  It is direct to verify that $\gamma = \max \left\{\eta, \sqrt{\frac{\eta}{\mu K}} \right\} \in \left[\eta, \sqrt{\frac{\eta}{\mu}}\right]$ so both \cref{fedacii:conv:main,fedacii:stab:main} are applicable.
  Applying \cref{fedacii:conv:main} yields
  \begin{align}
    \expt[\Phi_T] \leq & \exp\left( - \frac{1}{3} \max \left\{ \eta \mu, \sqrt{\frac{\eta \mu}{K}}\right\}T \right) \Phi_0
    +
    \min\left\{ \frac{3\eta L \sigma^2}{2 \mu M}, \frac{3 \eta^{\frac{3}{2}} L K^{\frac{1}{2}} \sigma^2}{2 \mu^{\frac{1}{2}} M} \right\}
    \\
                       & + \max\left\{ \frac{\eta \sigma^2}{2M}, \frac{\eta^{\frac{1}{2}} \sigma^2}{2 \mu^{\frac{1}{2}} M K^{\frac{1}{2}} }          \right\}
    + \frac{3}{\mu} \max_{0 \leq t < T}  \expt \left[ \left\|  \nabla F(\overline{w_t^{\mathrm{md}}}) -  \frac{1}{M} \sum_{m=1}^M \nabla F (w_t^{\mathrm{md}, m})  \right\|^2 \right].
    \label{eq:fedacii:proof:1}
  \end{align}
  We bound $\min\left\{ \frac{3\eta L \sigma^2}{2 \mu M}, \frac{3 \eta^{\frac{3}{2}} L K^{\frac{1}{2}} \sigma^2}{2 \mu^{\frac{1}{2}} M} \right\}$ with $\frac{3 \eta^{\frac{3}{2}} L K^{\frac{1}{2}} \sigma^2}{2 \mu^{\frac{1}{2}} M}$, and bound $\max\left\{ \frac{\eta \sigma^2}{2M}, \frac{\eta^{\frac{1}{2}} \sigma^2}{2 \mu^{\frac{1}{2}} M K^{\frac{1}{2}} }          \right\}$ with $\frac{\eta \sigma^2}{2M} + \frac{\eta^{\frac{1}{2}} \sigma^2}{2 \mu^{\frac{1}{2}} M K^{\frac{1}{2}} }$.
  By AM-GM inequality and $\mu \leq L$, we have 
  \begin{equation}
    \frac{\eta \sigma^2}{2M} 
    \leq 
    \frac{ \eta^{\frac{3}{2}} \mu^{\frac{1}{2}} K^{\frac{1}{2}} \sigma^2}{4 M} + \frac{ \eta^{\frac{1}{2}} \sigma^2}{4 \mu^{\frac{1}{2}} M K^{\frac{1}{2}} }
    \leq
    \frac{ \eta^{\frac{3}{2}} L K^{\frac{1}{2}} \sigma^2}{4\mu^{\frac{1}{2}} M} + \frac{ \eta^{\frac{1}{2}} \sigma^2}{4 \mu^{\frac{1}{2}} M K^{\frac{1}{2}} }
  \end{equation}
  Thus
  \begin{align}
         & \min\left\{ \frac{3\eta L \sigma^2}{2 \mu M}, \frac{3 \eta^{\frac{3}{2}} L K^{\frac{1}{2}} \sigma^2}{2 \mu^{\frac{1}{2}} M} \right\}
    + \max\left\{ \frac{\eta \sigma^2}{2M}, \frac{\eta^{\frac{1}{2}} \sigma^2}{2 \mu^{\frac{1}{2}} M K^{\frac{1}{2}} }          \right\}
    \\
    \leq & \frac{3 \eta^{\frac{3}{2}} L K^{\frac{1}{2}} \sigma^2}{2 \mu^{\frac{1}{2}} M} + \frac{\eta \sigma^2}{2M} + \frac{\eta^{\frac{1}{2}} \sigma^2}{2 \mu^{\frac{1}{2}} M K^{\frac{1}{2}} }
    \leq
    \frac{7 \eta^{\frac{3}{2}} L K^{\frac{1}{2}} \sigma^2}{4 \mu^{\frac{1}{2}} M} + \frac{3 \eta^{\frac{1}{2}} \sigma^2}{4 \mu^{\frac{1}{2}} M K^{\frac{1}{2}} },
    \label{eq:fedacii:proof:1:1}
  \end{align}

  Applying \cref{fedacii:stab:main} yields (for all $t$)
  \begin{align}
         & \frac{3}{\mu} \expt \left[ \left\|  \nabla F(\overline{w_t^{\mathrm{md}}}) -  \frac{1}{M} \sum_{m=1}^M \nabla F (w_t^{\mathrm{md}, m})  \right\|^2 \right]
    \leq
    \begin{cases}
      \frac{132}{\mu} \eta^4 Q^2 K^2 \sigma^4  \left(1 + \frac{1}{K}\right)^{4K}
       & \text{if~} \gamma = \sqrt{\frac{\eta}{\mu K}}
      \\
      \frac{132}{\mu} \eta^4 Q^2 K^2 \sigma^4,
       &
      \text{if~} \gamma = \eta
    \end{cases}
    \\
    \leq & 132 \euler^{4} \mu^{-1} \eta^4 Q^2 K^2 \sigma^4 \leq \euler^9 \mu^{-1} \eta^4 Q^2 K^2 \sigma^4,
    \label{eq:fedacii:proof:2}
  \end{align}
  where in the last inequality we used the estimation that $132 \euler^{4} < \euler^9$.

  Combining \cref{eq:fedacii:proof:1,eq:fedacii:proof:1:1,eq:fedacii:proof:2} yields
  \begin{align}
    \expt[\Phi_T]
    \leq & \exp \left(  - \frac{1}{3} \max \left\{ \eta \mu, \sqrt{\frac{\eta \mu}{K}}\right\}T \right) \Phi_0
    + \frac{\eta^{\frac{1}{2}} \sigma^2}{\mu^{\frac{1}{2}} M K^{\frac{1}{2}} }
    + \frac{2 \eta^{\frac{3}{2}} L K^{\frac{1}{2}} \sigma^2}{\mu^{\frac{1}{2}} M}
    + \frac{\euler^9 \eta^4 Q^2 K^2 \sigma^4}{\mu}.
  \end{align}
\end{proof}

The main \cref{fedacii:a2:full} then follows by plugging the appropriate $\eta$ to \cref{fedacii:a2:general:eta}. 
\begin{proof}[Proof of \cref{fedacii:a2:full}]
  To simplify the notation, we denote the decreasing term in \cref{eq:fedacii:a2:general:eta} in \cref{fedacii:a2:general:eta} as $\varphi_{\downarrow}(\eta)$ and the increasing term as $\varphi_{\uparrow}(\eta)$, namely
  \begin{align}
    \varphi_{\downarrow}(\eta) := \exp \left(  - \frac{1}{3} \max \left\{ \eta \mu, \sqrt{\frac{\eta \mu}{K}}\right\}T \right) \Phi_0,
    \quad
    \varphi_{\uparrow}(\eta) := \frac{\eta^{\frac{1}{2}} \sigma^2}{\mu^{\frac{1}{2}} M K^{\frac{1}{2}} }
    + \frac{2 \eta^{\frac{3}{2}} L K^{\frac{1}{2}} \sigma^2}{\mu^{\frac{1}{2}} M}
    + \frac{\euler^9 \eta^4 Q^2 K^2 \sigma^4}{\mu}.
  \end{align}
  Now let
  \begin{equation}
    \eta_0 := \frac{9K}{\mu T^2} \log^2 \left( \euler
     + \min \left\{ \frac{\mu M T \Phi_0}{\sigma^2} + \frac{\mu^2 MT^3 \Phi_0}{LK^2 \sigma^2}, \frac{\mu^5 T^8 \Phi_0}{Q^2 K^6 \sigma^4} \right\} \right)
  \end{equation}
  then $\eta := \min \left\{ \frac{1}{L}, \eta_0 \right\}$. Therefore, the decreasing term $\varphi_{\downarrow}(\eta)$ is upper bounded by $\varphi_{\downarrow}(\frac{1}{L}) + \varphi_{\downarrow}(\eta_0)$, where
  \begin{equation}
    \varphi_{\downarrow} \left(\frac{1}{L} \right)
    \leq
    \min \left\{ \exp \left( - \frac{\mu T}{3 L} \right), \exp \left( - \frac{\mu^{\frac{1}{2}} T}{3 L^{\frac{1}{2}} K^{\frac{1}{2}}} \right)\right\} \Phi_0,
    \label{eq:fedacii:a2:1}
  \end{equation}
  and
  \begin{align}
    \varphi_{\downarrow}(\eta_0) 
    \leq & \exp \left(  - \frac{1}{3} \sqrt{\frac{\eta_0 \mu}{K}} T \right) \Phi_0
    = 
    \left( \euler
     + \min \left\{ \frac{\mu M T \Phi_0}{\sigma^2} + \frac{\mu^2 MT^3 \Phi_0}{LK^2 \sigma^2}, \frac{\mu^5 T^8 \Phi_0}{Q^2 K^6 \sigma^4} \right\} \right)^{-1} \Phi_0
    \\
    \leq & \frac{\sigma^2}{\mu MT} + \frac{LK^2 \sigma^2}{\mu^2 MT^3} + \frac{Q^2 K^6 \sigma^4}{\mu^5 T^8}.
    \label{eq:fedacii:a2:2}
  \end{align}
  On the other hand
  \begin{align}
    \varphi_{\uparrow}(\eta)  \leq \varphi_{\uparrow}(\eta_0) 
    \leq
    & 
    \frac{3 \sigma^2}{\mu M T} \log \left( \euler + \frac{\mu MT \Phi_0}{\sigma^2} \right)
    +
    \frac{54 LK^2 \sigma^2}{\mu^2 MT^3} \log^3 \left( \euler + \frac{\mu^2 MT^3 \Phi_0}{LK^2\sigma^2} \right)
    \\
    & + \frac{9^4 \euler^9 Q^2 K^6 \sigma^4}{\mu^5 T^8} \log^8\left( \euler
    + \frac{\mu^5 T^8 \Phi_0}{Q^2 K^6 \sigma^4} \right).
    \label{eq:fedacii:a2:3}
  \end{align}
  Combining \cref{fedacii:a2:general:eta,eq:fedacii:a2:1,eq:fedacii:a2:2,eq:fedacii:a2:3} gives
  \begin{align}
    & \expt[\Phi_T] \leq \varphi_{\downarrow} \left(\frac{1}{L} \right) + \varphi_{\downarrow}(\eta_0) + \varphi_{\uparrow}(\eta_0) 
    \\
    \leq & \min \left\{ \exp \left( - \frac{\mu T}{3 L} \right), \exp \left( - \frac{\mu^{\frac{1}{2}} T}{3 L^{\frac{1}{2}} K^{\frac{1}{2}}} \right)\right\} \Phi_0 
    + \frac{4 \sigma^2}{\mu M T} \log \left( \euler + \frac{\mu MT \Phi_0}{\sigma^2} \right)
    \\
    & 
    +
    \frac{55 LK^2 \sigma^2}{\mu^2 MT^3} \log^3 \left( \euler + \frac{\mu^2 MT^3 \Phi_0}{LK^2\sigma^2} \right)
    + \frac{\euler^{18} Q^2 K^6 \sigma^4}{\mu^5 T^8} \log^8\left( \euler
    + \frac{\mu^5 T^8 \Phi_0}{Q^2 K^6 \sigma^4} \right),
  \end{align}
  where in the last inequality we used the estimate $9^4 \euler^9 + 1 < \euler^{18}$.
\end{proof}

\subsection{Perturbed iterate analysis for \fedacii: Proof of \cref{fedacii:conv:main}}
\label{sec:fedacii:conv:main}
In this subsection we will prove \cref{fedacii:conv:main}.
We start by the one-step analysis of the centralized potential defined in \cref{eq:centralied:potential}. 
The following two propositions establish the one-step analysis of the two quantities in $\Phi_t$, namely $\|\overline{w_{t}} - w^*\|^2$ and $F( \overline{w_{t}^{\mathrm{ag}}}) - F^*$. 
We only require minimal hyperparameter assumptions, namely $\alpha \geq 1, \beta \geq 1, \eta \leq \frac{1}{L}$ for these two propositions.
We will then show how the choice of $\alpha, \beta$ are determined towards the proof of \cref{fedacii:conv:main} in order to couple the two quantities into potential $\Phi_t$.

\begin{proposition}
  \label{fedacii:conv:1}
  Let $F$ be $\mu>0$-strongly convex, and assume \cref{asm1}, then for \fedac with hyperparameters assumptions $\alpha \geq 1$, $\beta \geq 1$, $\eta \leq \frac{1}{L}$, the following inequality holds
  \begin{align}
         & \expt [ \|\overline{w_{t+1}} - w^*\|^2 |\mathcal{F}_t]
    \\
    \leq & \left( 1 - \frac{1}{2} \alpha^{-1} \right)\|\overline{w_t} - w^*\|^2 + \frac{3}{2} \alpha^{-1} \| \overline{w_t^{\mathrm{md}}} - w^*\|^2 + \frac{3}{2} \gamma^2  \left\| \nabla F (\overline{w_t^{\mathrm{md}}}) \right\|^2
    \\
         & - 2 \gamma  \left( 1 + \frac{1}{2} \alpha^{-1} \right) \left\langle \nabla F (\overline{w_t^{\mathrm{md}}}) , (1-\alpha^{-1}(1-\beta^{-1})) \overline{w_t} + \alpha^{-1} (1 - \beta^ {-1}) \overline{w_t^{\mathrm{ag}}} - w^*\right\rangle
    \\
         &
    + \gamma^2 \left( 1 + 2 \alpha \right)  \left\| \nabla F (\overline{w_t^{\mathrm{md}}}) - \frac{1}{M} \sum_{m=1}^M \nabla F (w_t^{\mathrm{md}, m}) \right\|^2
    +  \frac{\gamma^2 \sigma^2}{M}.
    \label{eq:fedacii:conv:1}
  \end{align}
\end{proposition}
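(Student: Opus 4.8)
The plan is to mimic the structure of the companion \cref{fedaci:conv:1} but to carry out a \emph{centralized} expansion, isolating the discrepancy as the squared gradient difference $\|\nabla F(\overline{w_t^{\mathrm{md}}}) - \frac1M\sum_m \nabla F(w_t^{\mathrm{md},m})\|^2$ that the proof sketch in \cref{sec:proof:sketch:2} later controls via $Q$-third-order smoothness. First I would write the main-sequence update $v_{t+1}^m = (1-\alpha^{-1})w_t^m + \alpha^{-1}w_t^{\mathrm{md},m} - \gamma\nabla f(w_t^{\mathrm{md},m};\xi_t^m)$ from \cref{alg:fedac}, average over $m$ (the synchronization step, whether it fires or not, commutes with this affine update and so does not change $\overline{w_{t+1}}$), and take the conditional expectation given $\mathcal{F}_t$. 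Since the oracle noises $\varepsilon_t^m := \nabla f(w_t^{\mathrm{md},m};\xi_t^m) - \nabla F(w_t^{\mathrm{md},m})$ are independent across workers and mean-zero, the cross terms vanish and the $\frac1{M^2}\sum_m \expt\|\varepsilon_t^m\|^2$ contribution is at most $\frac{\gamma^2\sigma^2}{M}$ by \cref{asm1}(c); this produces the last term of the claim and leaves the deterministic quantity $\|a - \gamma\bar g\|^2$, where $a := (1-\alpha^{-1})\overline{w_t} + \alpha^{-1}\overline{w_t^{\mathrm{md}}} - w^*$ and $\bar g := \frac1M\sum_m \nabla F(w_t^{\mathrm{md},m})$.

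The core of the argument is to split $\bar g = g + \delta$ with $g := \nabla F(\overline{w_t^{\mathrm{md}}})$ and discrepancy $\delta := \bar g - g$, and expand $\|a-\gamma\bar g\|^2 = \|a\|^2 - 2\gamma\langle a, g\rangle + \gamma^2\|g\|^2 - 2\gamma\langle a - \gamma g, \delta\rangle + \gamma^2\|\delta\|^2$. The only terms coupling $\delta$ to the rest are $-2\gamma\langle a - \gamma g, \delta\rangle$, which I would bound by Young's inequality with an $\alpha$-dependent weight, $-2\gamma\langle a-\gamma g,\delta\rangle \le \frac{1}{2\alpha}\|a - \gamma g\|^2 + 2\alpha\gamma^2\|\delta\|^2$. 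The native $\gamma^2\|\delta\|^2$ together with the $2\alpha\gamma^2\|\delta\|^2$ from Young then collect into exactly the advertised $\gamma^2(1+2\alpha)\|\delta\|^2$. Expanding the residual $\frac{1}{2\alpha}\|a-\gamma g\|^2$, the inner-product pieces reassemble into the targeted $-2\gamma(1+\frac12\alpha^{-1})\langle g,a\rangle$, a leftover $-\frac12\gamma^2(1-\alpha^{-1})\|g\|^2 \le 0$ (using $\alpha\ge 1$) is simply discarded, and one is left with $(1+\frac12\alpha^{-1})\|a\|^2 + \frac32\gamma^2\|g\|^2 - 2\gamma(1+\frac12\alpha^{-1})\langle g,a\rangle$ plus the discrepancy term.

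It then remains to split $\|a\|^2$ by Jensen's inequality on the convex combination $a = (1-\alpha^{-1})(\overline{w_t}-w^*) + \alpha^{-1}(\overline{w_t^{\mathrm{md}}}-w^*)$, giving $\|a\|^2 \le (1-\alpha^{-1})\|\overline{w_t}-w^*\|^2 + \alpha^{-1}\|\overline{w_t^{\mathrm{md}}}-w^*\|^2$, and to absorb the prefactor using $(1+\frac12\alpha^{-1})(1-\alpha^{-1}) \le 1-\frac12\alpha^{-1}$ and $(1+\frac12\alpha^{-1})\alpha^{-1}\le\frac32\alpha^{-1}$, both valid because $\alpha \ge 1$. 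Finally, substituting $\overline{w_t^{\mathrm{md}}} = \beta^{-1}\overline{w_t} + (1-\beta^{-1})\overline{w_t^{\mathrm{ag}}}$ rewrites $a$ as $(1-\alpha^{-1}(1-\beta^{-1}))\overline{w_t} + \alpha^{-1}(1-\beta^{-1})\overline{w_t^{\mathrm{ag}}} - w^*$, matching the inner-product argument in the statement. The main obstacle is the second step: the Young weight must be $\tfrac{1}{2\alpha}$ rather than a constant, so that the distance and gradient coefficients land on the precise values $1-\frac12\alpha^{-1}$, $\frac32\alpha^{-1}$, $\frac32\gamma^2$ and the inner product carries the factor $1+\frac12\alpha^{-1}$ needed to couple with the companion bound on $F(\overline{w_t^{\mathrm{ag}}})$ in \cref{fedacii:conv:main}; the $\alpha$-dependence is exactly what keeps the discrepancy factor only $O(\alpha)$ while leaving the distance contraction essentially undamaged.
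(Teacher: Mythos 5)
Your proposal is correct and follows essentially the same route as the paper's proof: average the update, pull out the variance term $\gamma^2\sigma^2/M$ by independence, split the averaged gradient into $\nabla F(\overline{w_t^{\mathrm{md}}})$ plus the discrepancy, and control the cross term with an $\alpha$-weighted Young step (the paper packages this as \cref{helper:unbalanced:ineq} with $\zeta = \tfrac{1}{2}\alpha^{-1}$, which is exactly your $-2\gamma\langle a-\gamma g,\delta\rangle \le \tfrac{1}{2\alpha}\|a-\gamma g\|^2 + 2\alpha\gamma^2\|\delta\|^2$), followed by the same Jensen split of $\|a\|^2$ and the coefficient estimates from $\alpha\ge 1$. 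The only cosmetic difference is that you expand the squared norm and discard a leftover $-\tfrac{1}{2}\gamma^2(1-\alpha^{-1})\|g\|^2\le 0$ where the paper simply bounds $(1+\tfrac{1}{2}\alpha^{-1})\le\tfrac{3}{2}$; these are the same estimate.
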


\begin{proposition}
  \label{fedacii:conv:2}
  In the same setting of \cref{fedacii:conv:1}, the following inequality holds
  \begin{align}
         & \expt \left[F( \overline{w_{t+1}^{\mathrm{ag}}}) - F^*| \mathcal{F}_t \right]
    \\
    \leq & \left( 1 - \frac{1}{2} \alpha^{-1} \right) \left(F(\overline{w_{t}^{\mathrm{ag}}}) - F^* \right) - \frac{1}{4} \mu \alpha^{-1} \left\|  \overline{w_{t}^{\mathrm{md}}} - w^*  \right\|^2 - \frac{1}{2} \eta \left\| \nabla F(\overline{w_t^{\mathrm{md}}}) \right\|^2
    \\
         & + {\frac{1}{2} \alpha^{-1} \left\langle \nabla F(\overline{w_{t}^{\mathrm{md}}}), 2 \alpha \beta^{-1} \overline{w_t} + (1 - 2 \alpha \beta^{-1}) \overline{w_t^{\mathrm{ag}}} -  w^* \right\rangle}
    \\
         & + \frac{1}{2}  \eta \left\|  \nabla F(\overline{w_t^{\mathrm{md}}}) -  \frac{1}{M} \sum_{m=1}^M \nabla F (w_t^{\mathrm{md}, m})  \right\|^2 +  \frac{\eta^2 L \sigma^2 }{2M}.
    \label{eq:fedacii:conv:2}
  \end{align}
\end{proposition}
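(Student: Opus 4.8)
The plan is to track the single averaged iterate $\overline{w_{t+1}^{\mathrm{ag}}}$ directly, exploiting the fact that averaging is linear and the synchronization step replaces each $w_{t+1}^{\mathrm{ag},m}$ by the common average $\overline{v_{t+1}^{\mathrm{ag}}}$. Hence in either case $\overline{w_{t+1}^{\mathrm{ag}}} = \overline{v_{t+1}^{\mathrm{ag}}} = \overline{w_t^{\mathrm{md}}} - \eta \cdot \frac{1}{M}\sum_{m=1}^M \nabla f(w_t^{\mathrm{md},m};\xi_t^m)$, regardless of whether $t+1$ is a synchronization step. This reduces the claim to a one-step descent bound of $F$ at the averaged middle point $\overline{w_t^{\mathrm{md}}}$, which I would then convert into progress on the centralized potential.

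First I would apply $L$-smoothness to $F(\overline{v_{t+1}^{\mathrm{ag}}})$ expanded around $\overline{w_t^{\mathrm{md}}}$ and take conditional expectation. Writing $\bar g := \frac{1}{M}\sum_m \nabla f(w_t^{\mathrm{md},m};\xi_t^m)$ and $\bar G := \frac{1}{M}\sum_m \nabla F(w_t^{\mathrm{md},m})$, independence of the per-worker noise together with \cref{asm1}(c) gives $\expt[\|\bar g\|^2 \mid \mathcal{F}_t] \leq \|\bar G\|^2 + \sigma^2/M$, which produces the $\frac{\eta^2 L \sigma^2}{2M}$ term. Using $\eta L \leq 1$ to bound $\frac{\eta^2 L}{2}\|\bar G\|^2$ by $\frac{\eta}{2}\|\bar G\|^2$ and completing the square via $-\langle \nabla F(\overline{w_t^{\mathrm{md}}}), \bar G\rangle + \frac{1}{2}\|\bar G\|^2 = \frac{1}{2}\|\bar G - \nabla F(\overline{w_t^{\mathrm{md}}})\|^2 - \frac{1}{2}\|\nabla F(\overline{w_t^{\mathrm{md}}})\|^2$ yields $\expt[F(\overline{v_{t+1}^{\mathrm{ag}}}) \mid \mathcal{F}_t] \leq F(\overline{w_t^{\mathrm{md}}}) - \frac{\eta}{2}\|\nabla F(\overline{w_t^{\mathrm{md}}})\|^2 + \frac{\eta}{2}\|\bar G - \nabla F(\overline{w_t^{\mathrm{md}}})\|^2 + \frac{\eta^2 L \sigma^2}{2M}$. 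The final three terms already match those in the statement, since $\bar G - \nabla F(\overline{w_t^{\mathrm{md}}})$ is precisely the discrepancy quantity appearing there.

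It then remains to bound $F(\overline{w_t^{\mathrm{md}}}) - F^*$ by $(1-\frac{1}{2}\alpha^{-1})(F(\overline{w_t^{\mathrm{ag}}}) - F^*)$ plus the curvature and inner-product terms. Here I would split $F(\overline{w_t^{\mathrm{md}}}) - F^* = (1 - \frac{1}{2}\alpha^{-1})(F(\overline{w_t^{\mathrm{md}}}) - F^*) + \frac{1}{2}\alpha^{-1}(F(\overline{w_t^{\mathrm{md}}}) - F^*)$, bound the first piece by plain convexity against $\overline{w_t^{\mathrm{ag}}}$, i.e. $F(\overline{w_t^{\mathrm{md}}}) - F(\overline{w_t^{\mathrm{ag}}}) \leq \langle \nabla F(\overline{w_t^{\mathrm{md}}}), \overline{w_t^{\mathrm{md}}} - \overline{w_t^{\mathrm{ag}}}\rangle$, and bound the second piece by $\mu$-strong convexity against $w^*$, which supplies the $-\frac{1}{4}\mu\alpha^{-1}\|\overline{w_t^{\mathrm{md}}} - w^*\|^2$ term. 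Only convexity, strong convexity, and $\eta \leq 1/L$ are invoked, consistent with the hypotheses inherited from \cref{fedacii:conv:1}.

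The main obstacle — really just careful bookkeeping — is verifying that the two inner products collapse into the single term stated. I would collect them as $\langle \nabla F(\overline{w_t^{\mathrm{md}}}),\, \overline{w_t^{\mathrm{md}}} - (1-\frac{1}{2}\alpha^{-1})\overline{w_t^{\mathrm{ag}}} - \frac{1}{2}\alpha^{-1}w^*\rangle$ and substitute the coupling $\overline{w_t^{\mathrm{md}}} = \beta^{-1}\overline{w_t} + (1-\beta^{-1})\overline{w_t^{\mathrm{ag}}}$. The $\overline{w_t}$ coefficient becomes $\beta^{-1} = \frac{1}{2}\alpha^{-1}\cdot 2\alpha\beta^{-1}$, the $\overline{w_t^{\mathrm{ag}}}$ coefficient becomes $\frac{1}{2}\alpha^{-1} - \beta^{-1} = \frac{1}{2}\alpha^{-1}(1 - 2\alpha\beta^{-1})$, and the $w^*$ coefficient is $\frac{1}{2}\alpha^{-1}$, exactly reproducing $\frac{1}{2}\alpha^{-1}\langle \nabla F(\overline{w_t^{\mathrm{md}}}), 2\alpha\beta^{-1}\overline{w_t} + (1 - 2\alpha\beta^{-1})\overline{w_t^{\mathrm{ag}}} - w^*\rangle$. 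Combining this decomposition with the descent bound and recalling $\overline{w_{t+1}^{\mathrm{ag}}} = \overline{v_{t+1}^{\mathrm{ag}}}$ completes the proof of \cref{fedacii:conv:2}.
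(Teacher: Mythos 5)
Your proposal is correct and follows essentially the same route as the paper: linearity of averaging plus convexity handles the synchronization step, $L$-smoothness at $\overline{w_t^{\mathrm{md}}}$ with the variance bound yields the $\frac{\eta^2 L\sigma^2}{2M}$ term, and the same convexity/strong-convexity split with the coupling $\overline{w_t^{\mathrm{md}}} = \beta^{-1}\overline{w_t} + (1-\beta^{-1})\overline{w_t^{\mathrm{ag}}}$ produces the stated inner product. Your completing-the-square step is just the paper's polarization identity combined with the $\eta L \leq 1$ bound in a different order, so the two arguments coincide.
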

We defer the proofs of \cref{fedacii:conv:1,fedacii:conv:2} to \cref{sec:proof:fedacii:conv:1,sec:proof:fedacii:conv:2}, respectively.

Now we are ready to prove \cref{fedacii:conv:main}.
\begin{proof}[Proof of \cref{fedacii:conv:main}]
  Since $\gamma \leq \sqrt{\frac{\eta}{\mu}} \leq \sqrt{\frac{1}{\mu L}} \leq \frac{1}{\mu}$, we have $\alpha = \frac{3}{2 \gamma \mu} - \frac{1}{2} \geq 1$, and therefore $\beta = \frac{2 \alpha^2 - 1}{\alpha - 1} \geq 1$. Hence both \cref{fedacii:conv:1,fedacii:conv:2} are applicable.

  Adding \cref{eq:fedacii:conv:2} with $\frac{1}{6} \mu$ times of \cref{eq:fedacii:conv:1} gives (note that the $\|\overline{w_t^{\mathrm{md}}} - w^*\|^2$ term is cancelled because $\frac{1}{4}\mu \alpha^{-1} = \frac{1}{6} \mu \cdot \frac{3}{2}\alpha^{-1}$)
  \begin{align}
     & \expt \left[ \Phi_{t+1} |\mathcal{F}_t \right] 
     \leq 
    \underbrace{\left( 1 - \frac{1}{2} \alpha^{-1} \right) \Phi_t}_{\text{(I)}}
    + 
    \underbrace{\left( \frac{1}{4} \gamma^2 \mu  - \frac{1}{2} \eta  \right) \left\| \nabla F(\overline{w_t^{\mathrm{md}}}) \right\|^2}_{\text{(II)}}
    \\
     & \quad + 
    \underbrace{\frac{1}{2} \alpha^{-1} \left\langle \nabla F(\overline{w_{t}^{\mathrm{md}}}), 2 \alpha \beta^{-1} \overline{w_t} + (1 - 2 \alpha \beta^{-1}) \overline{w_t^{\mathrm{ag}}} -  w^* \right\rangle}_{\text{(III)}}
    \\
     & \quad - 
    \underbrace{\frac{1}{3} \gamma \mu \left( 1 + \frac{1}{2} \alpha^{-1} \right) \left\langle \nabla F (\overline{w_t^{\mathrm{md}}}) , (1-\alpha^{-1}(1-\beta^{-1})) \overline{w_t} + \alpha^{-1} (1 - \beta^ {-1}) \overline{w_t^{\mathrm{ag}}} - w^*\right\rangle}_{\text{(IV)}}
    \\
     & \quad + 
    \underbrace{\left( \frac{1}{2}  \eta  + \frac{1}{6} \gamma^2 \mu (1 + 2 \alpha)  \right) \left\|  \nabla F(\overline{w_t^{\mathrm{md}}}) -  \frac{1}{M} \sum_{m=1}^M \nabla F (w_t^{\mathrm{md}, m})  \right\|^2}_{\text{(V)}}
    +  \frac{\eta^2 L \sigma^2 }{2M} + \frac{\gamma^2 \mu \sigma^2}{6 M}.
    \label{eq:fedacii:conv:main:1}
  \end{align}

  Now we analyze the RHS of \cref{eq:fedacii:conv:main:1} term by term.
  \paragraph{Term (I) of \cref{eq:fedacii:conv:main:1}}
  Note that $\alpha^{-1} = \frac{2 \gamma \mu}{3 - \gamma \mu} \geq \frac{2}{3}\gamma \mu$, we have
  \begin{equation}
    \left( 1 - \frac{1}{2} \alpha^{-1} \right) \Phi_t \leq \left( 1 - \frac{1}{3} \gamma \mu \right) \Phi_t.
    \label{eq:fedacii:conv:main:2}
  \end{equation}

  \paragraph{Term (II) of \cref{eq:fedacii:conv:main:1}}
  Since $\gamma^2 \mu \leq \eta$ we have
  \begin{equation}
    \left( \frac{1}{4} \gamma^2 \mu  - \frac{1}{2} \eta  \right) \left\| \nabla F(\overline{w_t^{\mathrm{md}}}) \right\|^2 \leq 0.
    \label{eq:fedacii:conv:main:3}
  \end{equation}

  \paragraph{Term (III) and (IV) of \cref{eq:fedacii:conv:main:1}}
  Since $\beta =\frac{2 \alpha^2 - 1}{\alpha - 1}$, we have $2 \alpha \beta^{-1} = \frac{2 \alpha (\alpha - 1)}{2 \alpha^2 - 1} = (1 - \alpha^{-1} (1 - \beta^{-1}))$, and $1 - 2\alpha\beta^{-1} = \frac{2 \alpha - 1}{2 \alpha^2 - 1} = \alpha^{-1}(1 - \beta^{-1})$. Therefore, the two inner-product terms are cancelled:
  \begin{align}
      & \frac{1}{2} \alpha^{-1} \left\langle \nabla F(\overline{w_{t}^{\mathrm{md}}}), 2 \alpha \beta^{-1} \overline{w_t} + (1 - 2 \alpha \beta^{-1}) \overline{w_t^{\mathrm{ag}}} -  w^* \right\rangle
    \\
      & \qquad - \frac{1}{3} \gamma \mu \left( 1 + \frac{1}{2} \alpha^{-1} \right) \left\langle \nabla F (\overline{w_t^{\mathrm{md}}}) , (1-\alpha^{-1}(1-\beta^{-1})) \overline{w_t} + \alpha^{-1} (1 - \beta^ {-1}) \overline{w_t^{\mathrm{ag}}} - w^*\right\rangle
    \\
    = & \left( \frac{1}{2} \alpha^{-1} - \frac{1}{3} \gamma \mu  \left( 1 + \frac{1}{2} \alpha^{-1} \right) \right)   \left\langle \nabla F (\overline{w_t^{\mathrm{md}}}) , \frac{2 \alpha - 1}{2 \alpha^2 - 1} \overline{w_t^{\mathrm{ag}}} + \left(\frac{2 \alpha^2 - 2 \alpha}{2 \alpha^2 - 1}  \right) \overline{w_t}  - w^*\right\rangle
    \\
    = & \left( \frac{\gamma \mu}{3 - \gamma \mu} - \frac{1}{3} \gamma \mu  \left( 1 + \frac{\gamma \mu}{3 - \gamma \mu} \right) \right)   \left\langle \nabla F (\overline{w_t^{\mathrm{md}}}) , \frac{2 \alpha - 1}{2 \alpha^2 - 1} \overline{w_t^{\mathrm{ag}}} + \left(\frac{2 \alpha^2 - 2 \alpha}{2 \alpha^2 - 1}  \right) \overline{w_t}  - w^*\right\rangle
    \tag{since $\alpha^{-1} = \frac{2 \gamma \mu}{3 - \gamma \mu}$}
    \\
    = & 0.
    \label{eq:fedacii:conv:main:4}
  \end{align}

  \paragraph{Term (V) of \cref{eq:fedacii:conv:main:1}} Since $\alpha = \frac{3 - \gamma \mu}{2 \gamma \mu}$ and $\gamma \geq \eta$ we have
  \begin{equation}
    \left( \frac{1}{2}  \eta  + \frac{1}{6} \gamma^2 \mu (1 + 2 \alpha)  \right)
    = \frac{1}{2} \eta + \frac{1}{6} \gamma^2 \mu \left( \frac{6}{2 \gamma \mu}  \right)
    = \frac{1}{2}(\eta + \gamma) \leq \gamma.
    \label{eq:fedacii:conv:main:5}
  \end{equation}

  Plugging \cref{eq:fedacii:conv:main:2,eq:fedacii:conv:main:3,eq:fedacii:conv:main:4,eq:fedacii:conv:main:5} to \cref{eq:fedacii:conv:main:1} gives
  \begin{equation}
    \expt \left[\Phi_{t+1} \middle| \mathcal{F}_t \right] \leq \left( 1 - \frac{1}{3}\gamma\mu \right) \Phi_{t} +  \frac{\eta^2 L \sigma^2 }{2M} + \frac{\gamma^2 \mu \sigma^2}{6 M}
    + \gamma  \left\|  \nabla F(\overline{w_t^{\mathrm{md}}}) -  \frac{1}{M} \sum_{m=1}^M \nabla F (w_t^{\mathrm{md}, m})  \right\|^2.
    \label{eq:fedacii:conv:main:6}
  \end{equation}
  Telescoping the above inequality up to timestep $T$ yields
  \begin{align}
    \expt \left[\Phi_T \right]
    \leq & \left( 1 - \frac{1}{3} \gamma \mu \right)^T \Phi_0 +
    \left( \sum_{t=0}^{T-1} \left( 1 - \frac{1}{3} \gamma \mu \right)^t \right) \cdot \left( \frac{\eta^2 L \sigma^2 }{2M} + \frac{\gamma^2 \mu \sigma^2}{6 M} \right)
    \\
         & + \gamma  \sum_{t=0}^{T-1}  \left( 1 - \frac{1}{3}\gamma\mu \right)^{T-t-1}  \expt \left[ \left\|  \nabla F(\overline{w_t^{\mathrm{md}}}) -  \frac{1}{M} \sum_{m=1}^M \nabla F (w_t^{\mathrm{md}, m})  \right\|^2 \right]
    \\
    \leq & \exp \left( - \frac{1}{3} \gamma \mu T \right) \Phi_0 +  \left( \frac{3\eta^2 L \sigma^2 }{2 \gamma \mu M} + \frac{\gamma \sigma^2}{2 M} \right)
    + \frac{3}{\mu}  \cdot \max_{0 \leq t < T}   \expt \left[ \left\|  \nabla F(\overline{w_t^{\mathrm{md}}}) -  \frac{1}{M} \sum_{m=1}^M \nabla F (w_t^{\mathrm{md}, m})  \right\|^2 \right],
  \end{align}
  where in the last inequality we used the fact that $(1 - \frac{1}{3}\gamma \mu)^T \leq \exp(- \frac{1}{3}\gamma \mu T)$ and $\sum_{t=0}^{T-1} \left( 1 - \frac{1}{3}\gamma \mu \right)^t  \leq \sum_{t=0}^{T-1} \left( 1 - \frac{1}{3}\gamma \mu \right)^{\infty} = \frac{3}{\gamma \mu}$.
\end{proof}

\subsubsection{Proof of \cref{fedacii:conv:1}}
\label{sec:proof:fedacii:conv:1}
\begin{proof}[Proof of \cref{fedacii:conv:1}]
  By definition of the \fedac procedure (\cref{alg:fedac}),
  \begin{equation}
    \overline{w_{t+1}} - w^* = (1 - \alpha^{-1}) \overline{w_t} + \alpha^{-1} \overline{w_t^{\mathrm{md}}} - \gamma \cdot \frac{1}{M} \sum_{m=1}^M \nabla f(w_t^{\mathrm{md},m}; \xi_t^m) - w^*.
  \end{equation}
  Taking conditional expectation gives
  \begin{equation}
    \expt \left[ \|\overline{w_{t+1}} - w^*\|^2 \middle| \mathcal{F}_t \right]
    \leq
    \left\| (1 - \alpha^{-1}) \overline{w_t} + \alpha^{-1} \overline{w_t^{\mathrm{md}}} - \gamma \cdot \frac{1}{M} \sum_{m=1}^M \nabla F (w_t^{\mathrm{md},m})- w^* \right\|^2
    + \frac{1}{M}\gamma^2 \sigma^2.
    \label{eq:fedacii:conv:1:0}
  \end{equation}
  The squared norm in \cref{eq:fedacii:conv:1:0} is bounded as
  \begin{align}
         & \left\| (1 - \alpha^{-1}) \overline{w_t} + \alpha^{-1} \overline{w_t^{\mathrm{md}}} - \gamma \cdot \frac{1}{M} \sum_{m=1}^M \nabla F (w_t^{\mathrm{md},m})- w^* \right\|^2
    \\
    =    & \left\| (1 - \alpha^{-1}) \overline{w_t} + \alpha^{-1} \overline{w_t^{\mathrm{md}}} - \gamma \nabla F (\overline{w_t^{\mathrm{md}}}) - w^*
    + \gamma \left(  \nabla F (\overline{w_t^{\mathrm{md}}}) - \frac{1}{M} \sum_{m=1}^M \nabla F (w_t^{\mathrm{md}}) \right)\right\|^2 
    \\
    \leq & {\left( 1 + \frac{1}{2} \alpha^{-1} \right)} \left\|  (1 - \alpha^{-1}) \overline{w_t} + \alpha^{-1} \overline{w_t^{\mathrm{md}}} - w^* - \gamma \nabla F (\overline{w_t^{\mathrm{md}}}) \right\|^2
    \\
         & + \gamma^2  {\left( 1 + 2 \alpha \right)} \left\| \nabla F (\overline{w_t^{\mathrm{md}}}) - \frac{1}{M} \sum_{m=1}^M \nabla F (w_t^{\mathrm{md}, m}) \right\|^2
    \tag{apply helper \cref{helper:unbalanced:ineq} with $\zeta = \frac{1}{2}\alpha^{-1}$}
    \\
    =    & \underbrace{\left( 1 + \frac{1}{2} \alpha^{-1} \right) \left\| (1 - \alpha^{-1}) \overline{w_t} + \alpha^{-1} \overline{w_t^{\mathrm{md}}} - w^* \right\|^2}_{\text{(I)}}
    + \underbrace{\gamma^2  \left( 1 + \frac{1}{2} \alpha^{-1} \right) \left\| \nabla F (\overline{w_t^{\mathrm{md}}}) \right\|^2}_{\text{(II)}}
    \\
         & \underbrace{- 2 \gamma  \left( 1 + \frac{1}{2} \alpha^{-1} \right) \left\langle \nabla F (\overline{w_t^{\mathrm{md}}}), (1 - \alpha^{-1}) \overline{w_t} + \alpha^{-1} \overline{w_t^{\mathrm{md}}} - w^*\right\rangle}_{\text{(III)}}
    \\
         & + \gamma^2 \left( 1 + 2 \alpha \right)  \left\| \nabla F (\overline{w_t^{\mathrm{md}}}) - \frac{1}{M} \sum_{m=1}^M \nabla F (w_t^{\mathrm{md}, m}) \right\|^2.
    \label{eq:fedacii:conv:1:1}
  \end{align}

  The first term (I) of \cref{eq:fedacii:conv:1:1} is bounded via Jensen's inequality as follows:
  \begin{align}
         & \left( 1 + \frac{1}{2} \alpha^{-1} \right) \left\| (1 - \alpha^{-1}) \overline{w_t} + \alpha^{-1} \overline{w_t^{\mathrm{md}}} - w^* \right\|^2
    \\
    \leq & \left( 1 + \frac{1}{2} \alpha^{-1} \right) \left( (1 - \alpha^{-1}) \|\overline{w_t} - w^*\|^2 + \alpha^{-1} \| \overline{w_t^{\mathrm{md}}} - w^*\|^2 \right) \tag{Jensen's inequality}\
    \\
    \leq & \left( 1 - \frac{1}{2} \alpha^{-1} \right)\|\overline{w_t} - w^*\|^2 + \frac{3}{2} \alpha^{-1} \| \overline{w_t^{\mathrm{md}}} - w^*\|^2.
    \label{eq:fedacii:conv:1:2}
  \end{align}
  where in the last inequality of \cref{eq:fedacii:conv:1:2} we used the fact that $( 1 + \frac{1}{2} \alpha^{-1})(1 - \alpha^{-1}) = 1 - \frac{1}{2} \alpha^{-1} - \frac{1}{2} \alpha^{-2} < 1 - \frac{1}{2} \alpha^{-1}$, and $( 1 + \frac{1}{2} \alpha^{-1}) \alpha^{-1} \leq \frac{3}{2}\alpha^{-1}$ as $\alpha \geq 1$.

  The second term (II) of \cref{eq:fedacii:conv:1:1} is bounded as (since $\alpha \geq 1$)
  \begin{equation}
    \gamma^2  \left( 1 + \frac{1}{2} \alpha^{-1} \right) \left\| \nabla F (\overline{w_t^{\mathrm{md}}}) \right\|^2 \leq \frac{3}{2} \gamma^2  \left\| \nabla F (\overline{w_t^{\mathrm{md}}}) \right\|^2.
    \label{eq:fedacii:conv:1:3}
  \end{equation}

  To analyze the third term (III) of \cref{eq:fedacii:conv:1:1}, we note that by definition of $\overline{w_t^{\mathrm{md}}}$,
  \begin{align}
      & - 2 \gamma  \left( 1 + \frac{1}{2} \alpha^{-1} \right) \left\langle \nabla F (\overline{w_t^{\mathrm{md}}}), (1 - \alpha^{-1}) \overline{w_t} + \alpha^{-1} \overline{w_t^{\mathrm{md}}} - w^*\right\rangle
    \\
    = & - 2 \gamma  \left( 1 + \frac{1}{2} \alpha^{-1} \right) \left\langle \nabla F (\overline{w_t^{\mathrm{md}}}) , (1-\alpha^{-1}(1-\beta^{-1})) \overline{w_t} + \alpha^{-1} (1 - \beta^ {-1}) \overline{w_t^{\mathrm{ag}}} - w^*\right\rangle.
    \label{eq:fedacii:conv:1:4}
  \end{align}
  Plugging \cref{eq:fedacii:conv:1:1,eq:fedacii:conv:1:2,eq:fedacii:conv:1:3,eq:fedacii:conv:1:4} back to \cref{eq:fedacii:conv:1:0} yields
  \begin{align}
         & \expt [ \|\overline{w_{t+1}} - w^*\|^2 |\mathcal{F}_t]
    \\
    \leq & \left( 1 - \frac{1}{2} \alpha^{-1} \right)\|\overline{w_t} - w^*\|^2 + \frac{3}{2} \alpha^{-1} \| \overline{w_t^{\mathrm{md}}} - w^*\|^2 + \frac{3}{2}\gamma^2  \left\| \nabla F (\overline{w_t^{\mathrm{md}}}) \right\|^2
    \\
         & - 2 \gamma  \left( 1 + \frac{1}{2} \alpha^{-1} \right) \left\langle \nabla F (\overline{w_t^{\mathrm{md}}}) , (1-\alpha^{-1}(1-\beta^{-1})) \overline{w_t} + \alpha^{-1} (1 - \beta^ {-1}) \overline{w_t^{\mathrm{ag}}} - w^*\right\rangle
    \\
         &
    + \gamma^2 \left( 1 + 2 \alpha \right)  \left\| \nabla F (\overline{w_t^{\mathrm{md}}}) - \frac{1}{M} \sum_{m=1}^M \nabla F (w_t^{\mathrm{md}, m}) \right\|^2
    +  \frac{\gamma^2 \sigma^2}{M},
  \end{align}
  completing the proof of \cref{fedacii:conv:1}.
\end{proof}

\subsubsection{Proof of \cref{fedacii:conv:2}}
\label{sec:proof:fedacii:conv:2}
\begin{proof}[Proof of \cref{fedacii:conv:2}]
  By definition of the \fedac procedure we have
  \begin{equation}
    \overline{w_{t+1}^{\mathrm{ag}}}
    =
    \overline{w_t^{\mathrm{md}}} - \eta \cdot \frac{1}{M} \sum_{m=1}^M \nabla f(w_t^{\mathrm{md},m}; \xi_t^m),
  \end{equation}
  and thus by $L$-smoothness (\cref{asm1}(b)) we obtain
  \begin{equation}
    F( \overline{w_{t+1}^{\mathrm{ag}}})
    \leq
    F(\overline{w_t^{\mathrm{md}}}) - \eta \left\langle \nabla F(\overline{w_t^{\mathrm{md}}}), \frac{1}{M} \sum_{m=1}^M \nabla f(w_t^{\mathrm{md},m}; \xi_t^m) \right\rangle + \frac{\eta^2 L}{2} \left\| \frac{1}{M} \sum_{m=1}^M \nabla f(w_t^{\mathrm{md},m}; \xi_t^m)  \right\|^2.
  \end{equation}
  Taking conditional expectation, and by bounded variance (\cref{asm1}(c))
  \begin{equation}
    \expt \left[F( \overline{w_{t+1}^{\mathrm{ag}}}) |\mathcal{F}_t \right]
    \leq  F(\overline{w_t^{\mathrm{md}}})
    - \eta \left\langle \nabla F(\overline{w_t^{\mathrm{md}}}), \frac{1}{M} \sum_{m=1}^M \nabla F (w_t^{\mathrm{md}, m}) \right\rangle + \frac{\eta^2 L}{2} \left\| \frac{1}{M} \sum_{m=1}^M \nabla F (w_t^{\mathrm{md}, m})  \right\|^2 + \frac{\eta^2 L \sigma^2 }{2M}.
    \label{eq:fedacii:conv:2:0:1}
  \end{equation}
  By polarization identity we have
  \begin{align}
    & \left\langle \nabla F(\overline{w_t^{\mathrm{md}}}), \frac{1}{M} \sum_{m=1}^M \nabla F (w_t^{\mathrm{md}, m}) \right\rangle
    \\
    = &
    \frac{1}{2}
    \left( \left\| \nabla F(\overline{w_t^{\mathrm{md}}}) \right\|^2 + \left\| \frac{1}{M} \sum_{m=1}^M \nabla F (w_t^{\mathrm{md}, m}) \right\|^2 - \left\|  \nabla F(\overline{w_t^{\mathrm{md}}}) -  \frac{1}{M} \sum_{m=1}^M \nabla F (w_t^{\mathrm{md}, m})  \right\|^2 \right).
    \label{eq:fedacii:conv:2:0:2}
  \end{align}
  Combining \cref{eq:fedacii:conv:2:0:1,eq:fedacii:conv:2:0:2} gives
  \begin{align}
         & \expt \left[F( \overline{w_{t+1}^{\mathrm{ag}}}) |\mathcal{F}_t \right]
    \\
    =    & F(\overline{w_t^{\mathrm{md}}}) - \frac{1}{2} \eta \left\| \nabla F(\overline{w_t^{\mathrm{md}}}) \right\|^2
    + \frac{1}{2} \eta \left\|  \nabla F(\overline{w_t^{\mathrm{md}}}) -  \frac{1}{M} \sum_{m=1}^M \nabla F (w_t^{\mathrm{md}, m})  \right\|^2
    \\
         & - \frac{1}{2} \eta (1 - \eta L) \left\| \frac{1}{M} \sum_{m=1}^M \nabla F (w_t^{\mathrm{md}, m})  \right\|^2 +  \frac{\eta^2 L \sigma^2 }{2M}
    \\
    \leq & F(\overline{w_t^{\mathrm{md}}}) - \frac{1}{2} \eta \left\| \nabla F(\overline{w_t^{\mathrm{md}}}) \right\|^2 \
    + \frac{1}{2}  \eta \left\|  \nabla F(\overline{w_t^{\mathrm{md}}}) -  \frac{1}{M} \sum_{m=1}^M \nabla F (w_t^{\mathrm{md}, m})  \right\|^2 +  \frac{\eta^2 L \sigma^2 }{2M},
    \label{eq:fedacii:conv:2:1}
  \end{align}
  where the last inequality is due to the assumption that $\eta \leq \frac{1}{L}$.

  Now we relate $F( \overline{w_{t}^{\mathrm{md}}})$ and $F( \overline{w_{t}^{\mathrm{ag}}})$ as follows
  \begin{align}
         & F(\overline{w_t^{\mathrm{md}}})  - F^*
    \\
    =    & \left( 1 - \frac{1}{2} \alpha^{-1} \right) \left(F(\overline{w_{t}^{\mathrm{ag}}}) - F^* \right)
    + \left( 1 - \frac{1}{2} \alpha^{-1} \right) \left(F(\overline{w_{t}^{\mathrm{md}}}) - F(\overline{w_{t}^{\mathrm{ag}}}) \right)
    + \frac{1}{2} \alpha^{-1} \left(F(\overline{w_{t}^{\mathrm{md}}}) - F^* \right)
    \\
    \leq & \left( 1 - \frac{1}{2} \alpha^{-1} \right) \left(F(\overline{w_{t}^{\mathrm{ag}}}) - F^* \right)
    + \left( 1 - \frac{1}{2} \alpha^{-1} \right) \left\langle \nabla F(\overline{w_{t}^{\mathrm{md}}}), \overline{w_{t}^{\mathrm{md}}} - \overline{w_{t}^{\mathrm{ag}}} \right\rangle
    \\
         & + \frac{1}{2} \alpha^{-1}
    \left( \left\langle \nabla F(\overline{w_{t}^{\mathrm{md}}}), \overline{w_{t}^{\mathrm{md}}} - w^* \right\rangle  - \frac{\mu}{2}  \left\|  \overline{w_{t}^{\mathrm{md}}} - w^*  \right\|^2 \right)
    \tag{$\mu$-strong convexity}
    \\
    =    & \left( 1 - \frac{1}{2} \alpha^{-1} \right) \left(F(\overline{w_{t}^{\mathrm{ag}}}) - F^* \right) - \frac{1}{4} \mu \alpha^{-1} \left\|  \overline{w_{t}^{\mathrm{md}}} - w^*  \right\|^2
    \\
         &
    + {\frac{1}{2} \alpha^{-1} \left\langle \nabla F(\overline{w_{t}^{\mathrm{md}}}), 2 \alpha \overline{w_t^{\mathrm{md}}} - (2 \alpha - 1) \overline{w_t^{\mathrm{ag}}} -  w^* \right\rangle}
    \tag{rearranging}
    \\
    =    & \left( 1 - \frac{1}{2} \alpha^{-1} \right) \left(F(\overline{w_{t}^{\mathrm{ag}}}) - F^* \right) - \frac{1}{4} \mu \alpha^{-1} \left\|  \overline{w_{t}^{\mathrm{md}}} - w^*  \right\|^2
    \\
         &
    + {\frac{1}{2} \alpha^{-1} \left\langle \nabla F(\overline{w_{t}^{\mathrm{md}}}), 2 \alpha \beta^{-1} \overline{w_t} + (1 - 2 \alpha \beta^{-1}) \overline{w_t^{\mathrm{ag}}} -  w^* \right\rangle},
    \label{eq:fedacii:conv:2:2}
  \end{align}
  where the last equality is due to the definition of $\overline{w_t^{\mathrm{md}}}$.

  Plugging \cref{eq:fedacii:conv:2:2} back to \cref{eq:fedacii:conv:2:1} yields
  \begin{align}
         & \expt \left[F( \overline{w_{t+1}^{\mathrm{ag}}})  - F^* \middle|\mathcal{F}_t \right]
    \\
    \leq & \left( 1 - \frac{1}{2} \alpha^{-1} \right) \left(F(\overline{w_{t}^{\mathrm{ag}}}) - F^* \right) - \frac{1}{4} \mu \alpha^{-1} \left\|  \overline{w_{t}^{\mathrm{md}}} - w^*  \right\|^2 - \frac{1}{2} \eta \left\| \nabla F(\overline{w_t^{\mathrm{md}}}) \right\|^2
    \\
         & + {\frac{1}{2} \alpha^{-1} \left\langle \nabla F(\overline{w_{t}^{\mathrm{md}}}), 2 \alpha \beta^{-1} \overline{w_t} + (1 - 2 \alpha \beta^{-1}) \overline{w_t^{\mathrm{ag}}} -  w^* \right\rangle}
    + \frac{1}{2}  \eta \left\|  \nabla F(\overline{w_t^{\mathrm{md}}}) -  \frac{1}{M} \sum_{m=1}^M \nabla F (w_t^{\mathrm{md}, m})  \right\|^2 +  \frac{\eta^2 L \sigma^2 }{2M},
  \end{align}
  completing the proof of \cref{fedacii:conv:2}.
\end{proof}

\subsection{Discrepancy overhead bound for \fedacii: Proof of \cref{fedacii:stab:main}}
\label{sec:fedacii:stab:main}
In this subsection we prove \cref{fedacii:stab:main} regarding the regarding the growth of discrepancy overhead introduced in \cref{fedacii:conv:main}. The core of the proof is the \nth{4}-order stability of \fedacii.
Note that most of the analysis in this subsection follows closely with the analysis on \fedaci (see \cref{sec:fedaci:stab:main}), but the analysis is technically more complicated.

We will reuse a set of notations defined in \cref{sec:fedaci:stab:main}, which we restate here for clearance. Let $m_1, m_2 \in [M]$ be two arbitrary distinct machines. For any timestep $t$, denote $\Delta_{t} := w_{t}^{m_1} - w_t^{m_2}$,  $\Delta_t^{\mathrm{ag}} := w_{t}^{\mathrm{ag}, m_1} - w_t^{\mathrm{ag}, m_2}$ and $\Delta_t^{\mathrm{md}} := w_{t}^{\mathrm{md}, m_1} - w_t^{\mathrm{md}, m_2}$ be the corresponding vector differences. Let $\Delta_t^{\varepsilon} = \varepsilon_t^{m_1} - \varepsilon_t^{m_2}$, where $\varepsilon_t^m := \nabla f(w_t^{\mathrm{md},m}; \xi_t^m) - \nabla F(w_t^{\mathrm{md}, m})$ be the bias of the gradient oracle of the $m$-th worker evaluated at $w_t^{\mathrm{md}}$.

The proof of \cref{fedacii:stab:main} is based on the following propositions.

The following \cref{fedacii:stab:1} studies the growth of $\begin{bmatrix} \Delta_{t}^{\mathrm{ag}} \\ \Delta_{t} \end{bmatrix}$ at each step. 
\cref{fedacii:stab:1} is analogous to \cref{fedaci:stab:1}, but the $\mathcal{A}$ is different. Note that \cref{fedacii:stab:1} requires only \cref{asm1}.
\begin{proposition}
  \label{fedacii:stab:1}
  Let $F$ be $\mu>0$-strongly convex, assume \cref{asm1} and assume the same hyperparameter choice is taken as in  \cref{fedacii:stab:main} (namely $\alpha = \frac{3}{2 \gamma \mu} - \frac{1}{2}$, $\beta =\frac{2 \alpha^2 - 1}{\alpha - 1}$, $\gamma \in [\eta, \sqrt{ \frac{\eta}{\mu}}]$, $\eta \in (0, \frac{1}{L}]$).
  Suppose $t+1$ is not a synchronization gap, then there exists a matrix $H_t$ such that $\mu I \preceq H_t \preceq LI$ satisfying
  \begin{equation}
    \begin{bmatrix}
      \Delta_{t+1}^{\mathrm{ag}} \\ \Delta_{t+1}
    \end{bmatrix}
    =
    \mathcal{A} (\mu, \gamma, \eta, H_t)
    \begin{bmatrix} \Delta_{t}^{\mathrm{ag}} \\ \Delta_{t} \end{bmatrix}
    -
    \begin{bmatrix} \eta I \\ \gamma I \end{bmatrix}
    \Delta_t^{\varepsilon},
  \end{equation}
  where $\mathcal{A}(\mu, \gamma, \eta, H)$ is a matrix-valued function defined as
  \begin{equation}
    \mathcal{A}(\mu, \gamma, \eta, H) = \frac{1}{9 - \gamma \mu (6 + \gamma \mu)}
    \begin{bmatrix}
      (3 - \gamma \mu)(3 - 2\gamma \mu)(I - \eta H)
       & 3 \gamma \mu (1 - \gamma \mu) (I - \eta H)
      \\
      (3 - 2 \gamma \mu) (2 \gamma \mu - (3 - \gamma \mu) \gamma H)
       &
      3 (1 - \gamma \mu) ((3 - \gamma \mu)I - \gamma^2 \mu H)
    \end{bmatrix}.
    \label{eq:fedacii:A:def}
  \end{equation}
\end{proposition}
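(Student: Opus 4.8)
The plan is to reduce \cref{fedacii:stab:1} to the general one-step recursion already recorded in \cref{fedac:general:stab}, and then carry out the hyperparameter substitution specific to \fedacii. The key observation is that \cref{fedac:general:stab} holds for \emph{any} $\alpha \geq 1$, $\beta \geq 1$ under \cref{asm1}: its proof only uses the two-point recursion of \fedac together with the mean-value theorem, which produces a symmetric $H_t$ with $\mu I \preceq H_t \preceq LI$ via $\nabla F(w_t^{\mathrm{md},m_1}) - \nabla F(w_t^{\mathrm{md},m_2}) = H_t \Delta_t^{\mathrm{md}}$. Since the \fedacii choice satisfies $\alpha = \frac{3}{2\gamma\mu} - \frac{1}{2} \geq 1$ and hence $\beta = \frac{2\alpha^2-1}{\alpha-1} \geq 1$ (exactly as verified at the start of the proof of \cref{fedacii:conv:main}, using $\gamma \leq \sqrt{\eta/\mu} \leq 1/\mu$), the general recursion applies verbatim, including the unchanged noise term $-\begin{bmatrix}\eta I \\ \gamma I\end{bmatrix}\Delta_t^{\varepsilon}$. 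It thus suffices to simplify the generic transition matrix of \cref{fedac:general:stab} under these hyperparameters and match it against \eqref{eq:fedacii:A:def}.

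First I would express every scalar coefficient in \cref{fedac:general:stab} as a rational function of $x := \gamma\mu$. Writing $\alpha = \frac{3-x}{2x}$ gives immediately $\alpha^{-1} = \frac{2x}{3-x}$ and $1 - \alpha^{-1} = \frac{3(1-x)}{3-x}$. For $\beta$, the crucial computation is $2\alpha^2 - 1 = \frac{(3-x)^2 - 2x^2}{2x^2} = \frac{9 - 6x - x^2}{2x^2} = \frac{9 - x(6+x)}{2x^2}$, which combined with $\alpha - 1 = \frac{3(1-x)}{2x}$ yields $\beta^{-1} = \frac{3x(1-x)}{9 - x(6+x)}$. Notice that the denominator $9-x(6+x)$ is precisely the common factor $9 - \gamma\mu(6+\gamma\mu)$ appearing in \eqref{eq:fedacii:A:def}, which is already a good sign that the substitution is on track.

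The heart of the calculation is then the identity $1 - \beta^{-1} = \frac{9 - x(6+x) - 3x(1-x)}{9-x(6+x)} = \frac{2x^2 - 9x + 9}{9-x(6+x)} = \frac{(3-2x)(3-x)}{9-x(6+x)}$, where the factorization $2x^2 - 9x + 9 = (3-2x)(3-x)$ is what makes the upper row collapse into the clean products in \eqref{eq:fedacii:A:def}. Substituting into the blocks is then mostly mechanical: the $(1,1)$ and $(1,2)$ entries are $1-\beta^{-1}$ and $\beta^{-1}$ times $(I-\eta H)$, matching directly; and the $(2,1)$ entry follows after the cancellation $(3-x)\big(\frac{2x}{3-x}I - \gamma H\big) = 2\gamma\mu I - (3-x)\gamma H$, which absorbs the $\alpha^{-1}$ factor so that $(2,1) = \frac{3-2\gamma\mu}{9-\gamma\mu(6+\gamma\mu)}\big(2\gamma\mu I - (3-\gamma\mu)\gamma H\big)$ as required.

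I expect the $(2,2)$ block to be the main obstacle, since there the generic expression $\beta^{-1}(\alpha^{-1}I - \gamma H) + (1-\alpha^{-1})I$ mixes both hyperparameters and must be shown to equal $\frac{3(1-x)}{9-x(6+x)}\big((3-x)I - \gamma^2\mu H\big)$. The $H$-coefficient is immediate, being simply $-\beta^{-1}\gamma$, but matching the $I$-coefficient requires the auxiliary identity $6x^2 + 3\big(9-x(6+x)\big) = 27 - 18x + 3x^2 = 3(3-x)^2$, so that $\frac{6x^2(1-x)}{(9-x(6+x))(3-x)} + \frac{3(1-x)}{3-x} = \frac{3(1-x)(3-x)}{9-x(6+x)}$. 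Once this identity is in hand, all four blocks agree with \eqref{eq:fedacii:A:def}, completing the proof. The overall structure thus closely parallels the corresponding \fedaci result (\cref{fedaci:stab:1}); the only genuinely new work is this somewhat heavier algebra induced by the more elaborate coupling $\beta = \frac{2\alpha^2-1}{\alpha-1}$.
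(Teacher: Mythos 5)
Your proposal is correct and follows exactly the paper's route: the paper also proves \cref{fedacii:stab:1} by plugging $\alpha = \frac{3}{2\gamma\mu}-\frac{1}{2}$ and $\beta = \frac{2\alpha^2-1}{\alpha-1} = \frac{9-\gamma\mu(6+\gamma\mu)}{3\gamma\mu(1-\gamma\mu)}$ into the general recursion of \cref{fedac:general:stab} and simplifying the transition matrix. Your algebraic identities (the factorization $2x^2-9x+9=(3-2x)(3-x)$ and $6x^2+3(9-x(6+x))=3(3-x)^2$ with $x=\gamma\mu$) check out and simply make explicit the computation the paper states "follows instantly."
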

The proof of \cref{fedacii:stab:1} is almost identical with \cref{fedaci:stab:1} except the choice of $\alpha$ and $\beta$ are different. We include this proof in \cref{sec:fedacii:stab:1} for completeness.

The following \cref{fedacii:stab:bound} studies the uniform norm bound of $\mathcal{A}$ under the proposed transformation $\mathcal{X}$. The transformation $\mathcal{X}$ is the same as the one studied in \fedaci, which we restate here for the ease of reference.
The bound is also similar to the corresponding bound for on \fedaci as shown in \cref{fedaci:stab:bound}, though the proof is technically more complicated due to the complexity of $\mathcal{A}$.
We defer the proof of  \cref{fedacii:stab:bound} to \cref{sec:fedacii:stab:bound}.
\begin{proposition}[Uniform norm bound of $\mathcal{A}$ under transformation $\mathcal{X}$]
  \label{fedacii:stab:bound}
  Let $\mathcal{A}(\mu, \gamma, \eta, H)$ be defined as in \cref{eq:fedacii:A:def}.
  and assume $\mu > 0$, $\gamma \in [\eta, \sqrt{\frac{\eta}{\mu}}]$, $\eta \in (0,\frac{1}{L}]$.
  Then the following uniform norm bound holds
  \begin{equation}
    \sup_{\mu I \preceq H \preceq LI}
    \left\| \mathcal{X}(\gamma, \eta)^{-1} \mathcal{A}(\mu, \gamma, \eta, H) \mathcal{X}(\gamma, \eta) \right\| \leq
    \begin{cases}
      1 + \frac{\gamma^2 \mu}{\eta} & \text{if~} \gamma \in \left(\eta, \sqrt{\frac{\eta}{\mu}}\right], \\
      1                              & \text{if~} \gamma =  \eta,
    \end{cases}
  \end{equation}
  where $\mathcal{X} (\gamma, \eta)$ is a matrix-valued function defined as
  \begin{equation}
    \mathcal{X}(\gamma, \eta) :=
    \begin{bmatrix}
      \frac{\eta}{\gamma} I & 0
      \\
      I                     & I
    \end{bmatrix}.
    \label{eq:fedacii:X:def}
  \end{equation}
\end{proposition}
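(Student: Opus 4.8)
The plan is to mirror the proof of the \fedaci analogue \cref{fedaci:stab:bound}. Writing $\mathcal{B}(\mu,\gamma,\eta,H) := \mathcal{X}(\gamma,\eta)^{-1}\mathcal{A}(\mu,\gamma,\eta,H)\mathcal{X}(\gamma,\eta)$ and using $\mathcal{X}(\gamma,\eta)^{-1} = \begin{bmatrix} \frac{\gamma}{\eta}I & 0 \\ -\frac{\gamma}{\eta}I & I \end{bmatrix}$, I would first expand the four blocks of $\mathcal{B}$ from the definition \cref{eq:fedacii:A:def} of $\mathcal{A}$, and then bound each block's operator norm uniformly over $\mu I \preceq H \preceq LI$, assembling them through the block-norm helper \cref{helper:blocknorm}, which gives $\|\mathcal{B}\| \le \max\{\|\mathcal{B}_{11}\|,\|\mathcal{B}_{22}\|\} + \max\{\|\mathcal{B}_{12}\|,\|\mathcal{B}_{21}\|\}$. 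The essential structural fact to extract is that, exactly as for \fedaci, the conjugation cancels all $H$-dependence in the bottom row: both $\mathcal{B}_{21}$ and $\mathcal{B}_{22}$ collapse to scalar multiples of $I$, whereas $\mathcal{B}_{11}$ and $\mathcal{B}_{12}$ remain scalar multiples of $(I-\eta H)$. Concretely, with $D := 9 - 6\gamma\mu - \gamma^2\mu^2$ the common denominator, combining $A_{22}$ with $-\tfrac{\gamma}{\eta}A_{12}$ makes the $H$-terms cancel and yields $\mathcal{B}_{22} = \frac{3(1-\gamma\mu)(3-\gamma\mu-\gamma^2\mu/\eta)}{D}\,I$, and the same cancellation in $-A_{11}-\tfrac{\gamma}{\eta}A_{12}+\tfrac{\eta}{\gamma}A_{21}+A_{22}$ reduces $\mathcal{B}_{21}$ to the scalar $\frac{1}{D}\bigl(-\gamma\mu(3-\gamma\mu)-\frac{3\gamma^2\mu(1-\gamma\mu)}{\eta}+2\eta\mu(3-2\gamma\mu)\bigr)I$.

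Second, I would turn every block into an explicit rational function of $\gamma\mu$ together with the ratios $\eta\mu$ and $\gamma^2\mu/\eta$. The hypotheses $\mu I \preceq H \preceq LI$ and $\eta\le 1/L$ give $0\preceq I-\eta H\preceq(1-\eta\mu)I$, hence $\|I-\eta H\|\le 1-\eta\mu\le 1$, so $\|\mathcal{B}_{11}\|$ and $\|\mathcal{B}_{12}\|$ are controlled by the absolute values of their coefficients, and $\|\mathcal{B}_{21}\|,\|\mathcal{B}_{22}\|$ equal the absolute values of their scalars. I would then split into the two regimes of the statement. In the case $\gamma=\eta$ a short computation shows the coefficient of $\mathcal{B}_{11}$ equals $D$ exactly (so $\mathcal{B}_{11}=I-\eta H$) and the coefficient of $\mathcal{B}_{21}$ vanishes; feeding these into \cref{helper:blocknorm} collapses the estimate to a contraction bound $\le 1$. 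In the case $\eta<\gamma\le\sqrt{\eta/\mu}$ I would use $\gamma\mu\le\sqrt{\mu/L}\le 1$, $\gamma^2\mu\le\eta$ (hence $\gamma^2\mu/\eta\le 1$), and $\eta\le\gamma$ to bound each scalar, and then verify that the combination from \cref{helper:blocknorm} stays below the target $1+\gamma^2\mu/\eta$.

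The main obstacle will be the algebraic bookkeeping. Unlike \fedaci, whose denominator $(1+\gamma\mu)\eta$ is linear, here $D=9-6\gamma\mu-\gamma^2\mu^2$ is quadratic and each numerator is a product of up to three factors linear in $\gamma\mu$ plus $\eta$-dependent pieces, so the cancellations producing the scalar bottom row and the final bound must be tracked with care. Moreover, \fedacii promises the \emph{sharper} bound $1+\gamma^2\mu/\eta$ against \fedaci's $1+2\gamma^2\mu/\eta$, which leaves essentially no slack: the favorable sign cancellation in $\mathcal{B}_{21}$ and the tight interplay between $\mathcal{B}_{11}$ and $\mathcal{B}_{12}$ are precisely what deliver the factor-two improvement, so the crux is to reduce each regime to a polynomial inequality in $\gamma\mu$ and confirm it holds on $(0,1]$.
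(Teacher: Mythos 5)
Your plan follows the paper's proof exactly: conjugate by $\mathcal{X}$, compute the four blocks of $\mathcal{B}=\mathcal{X}^{-1}\mathcal{A}\mathcal{X}$ (with the $H$-dependence cancelling in the bottom row and surviving only as scalar multiples of $I-\eta H$ in the top row), bound each block uniformly over $\mu I\preceq H\preceq LI$, and assemble via the block-norm helper \cref{helper:blocknorm}; your explicit expressions for $\mathcal{B}_{21}$ and $\mathcal{B}_{22}$ agree with the paper's after clearing the factor of $\eta$ from the denominator. You also correctly identify the one delicate point — that the individual block bounds must come out as $1+\frac{\gamma^2\mu}{3\eta}$, $\frac{\gamma^2\mu}{3\eta}$, $\frac{2\gamma^2\mu}{3\eta}$, and $1$ (the paper's $\mathcal{B}_{21}$ estimate hinges on the AM-GM fact $\gamma^2-3\gamma\eta+3\eta^2\ge 0$) so that the helper lemma sums them to exactly $1+\frac{\gamma^2\mu}{\eta}$.
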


\cref{fedacii:stab:1,fedacii:stab:bound} suggest the one-step growth of $ \left\| \mathcal{X}(\gamma, \eta)^{-1} \begin{bmatrix}
  \Delta_{t}^{\mathrm{ag}}
  \\
  \Delta_{t}
\end{bmatrix}  \right\|^4$ as follows.
\begin{proposition}
  \label{fedacii:stab:2}
  In the same setting of \cref{fedacii:stab:main}, the following inequality holds (for all possible $t$)
  \begin{equation}
    \sqrt{
      \expt \left[ \left\| \mathcal{X}(\gamma, \eta)^{-1} \begin{bmatrix}
        \Delta_{t+1}^{\mathrm{ag}}
        \\
        \Delta_{t+1}
      \end{bmatrix}  \right\|^4 \middle| \mathcal{F}_t \right]
    }
    \leq
    7 \gamma^2 \sigma^2 +
    \left\| \mathcal{X}(\gamma, \eta)^{-1} \begin{bmatrix}
      \Delta_{t}^{\mathrm{ag}}
      \\
      \Delta_{t}
    \end{bmatrix}  \right\|^2
    \cdot
    \begin{cases}
      \left(1 + \frac{\gamma^2 \mu}{\eta} \right)^2 & \text{if~} \gamma \in \left(\eta, \sqrt{\frac{\eta}{\mu}}\right], \\
      1                                              & \text{if~} \gamma =  \eta,
    \end{cases}
  \end{equation}
  where $\mathcal{X}$ is the matrix-valued function defined in \cref{eq:fedacii:X:def}.
\end{proposition}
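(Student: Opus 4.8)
The plan is to mirror the second-moment argument for \fedaci (\cref{fedaci:stab:2}) but to propagate the \emph{fourth} moment of the transformed discrepancy, which forces us to control higher-order moments of the stochastic-gradient noise. If $t+1$ is a synchronization step the claim is immediate, since then $\Delta_{t+1}^{\mathrm{ag}} = \Delta_{t+1} = 0$; so assume $t+1$ is not a synchronization step, so that \cref{fedacii:stab:1} is in force. Writing $Y_t := \mathcal{X}(\gamma,\eta)^{-1}\begin{bmatrix}\Delta_{t}^{\mathrm{ag}}\\\Delta_{t}\end{bmatrix}$ and left-multiplying the recursion of \cref{fedacii:stab:1} by $\mathcal{X}(\gamma,\eta)^{-1}$, the identity $\mathcal{X}(\gamma,\eta)^{-1}\begin{bmatrix}\eta I\\\gamma I\end{bmatrix} = \begin{bmatrix}\gamma I\\0\end{bmatrix}$ (already used in \cref{sec:fedaci:stab:2}) gives
\begin{equation}
Y_{t+1} = u_t - N_t, \qquad u_t := \left(\mathcal{X}^{-1}\mathcal{A}\mathcal{X}\right) Y_t, \qquad N_t := \begin{bmatrix}\gamma I\\0\end{bmatrix}\Delta_t^{\varepsilon}.
\end{equation}
Here $u_t$ is $\mathcal{F}_t$-measurable (as $H_t$ is) with $\|u_t\| \le c\,\|Y_t\|$ for $c := 1 + \gamma^2\mu/\eta$ (resp.\ $c := 1$) by the uniform norm bound of \cref{fedacii:stab:bound}, while $N_t$ has conditional mean zero and $\varepsilon_t^{m_1},\varepsilon_t^{m_2}$ are independent.

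The second step records sharp conditional moments of $N_t$. Using $\Delta_t^{\varepsilon} = \varepsilon_t^{m_1} - \varepsilon_t^{m_2}$ together with independence, zero mean, and \cref{asm2}(b), I would establish
\begin{equation}
\expt\!\left[\|N_t\|^2 \,\middle|\, \mathcal{F}_t\right] \le 2\gamma^2\sigma^2, \qquad \expt\!\left[\|N_t\|^4 \,\middle|\, \mathcal{F}_t\right] \le 8\gamma^4\sigma^4.
\end{equation}
The fourth-moment bound is obtained by expanding $\|\varepsilon_t^{m_1}-\varepsilon_t^{m_2}\|^4$ and discarding every term carrying an odd power of a single worker's noise (these vanish by independence and zero mean), giving the constant $8$ rather than the crude $16$ that a triangle inequality would produce; this sharper constant is exactly what lets the final constant $7$ close.

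The third step expands the conditional fourth moment exactly. Since $\expt[\langle u_t,N_t\rangle\mid\mathcal{F}_t] = 0$,
\begin{align}
\expt\!\left[\|Y_{t+1}\|^4 \,\middle|\, \mathcal{F}_t\right] = {} & \|u_t\|^4 + 4\,\expt\langle u_t,N_t\rangle^2 + \expt\|N_t\|^4 + 2\|u_t\|^2\,\expt\|N_t\|^2 \\
& - 4\,\expt\!\left[\langle u_t,N_t\rangle\,\|N_t\|^2\right],
\end{align}
all expectations being conditional on $\mathcal{F}_t$. I would bound $\expt\langle u_t,N_t\rangle^2 \le \|u_t\|^2\,\expt\|N_t\|^2$ and, by Cauchy--Schwarz in $L^2(\Omega)$, $\big|\expt[\langle u_t,N_t\rangle\,\|N_t\|^2]\big| \le \|u_t\|\sqrt{\expt\|N_t\|^2}\,\sqrt{\expt\|N_t\|^4}$. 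Substituting the two moment estimates yields $\expt[\|Y_{t+1}\|^4\mid\mathcal{F}_t] \le \|u_t\|^4 + 12\gamma^2\sigma^2\|u_t\|^2 + 16\gamma^3\sigma^3\|u_t\| + 8\gamma^4\sigma^4$. Taking the square root and absorbing the odd cross term via $16\gamma^3\sigma^3\|u_t\| \le 2\gamma^2\sigma^2\|u_t\|^2 + 41\gamma^4\sigma^4$ (AM--GM, since $2\sqrt{82} > 16$) gives $\sqrt{\expt[\|Y_{t+1}\|^4\mid\mathcal{F}_t]} \le \|u_t\|^2 + 7\gamma^2\sigma^2$; combining with $\|u_t\|^2 \le c^2\|Y_t\|^2$ produces exactly the claimed bound.

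The main obstacle is the odd third-order noise term $\expt[\langle u_t,N_t\rangle\,\|N_t\|^2]$. Unlike in the second-moment analysis of \cref{fedaci:stab:2}, where mean-zero noise annihilates all cross terms, this term does not vanish in general (the gradient noise need not be symmetric), so it must be retained and bounded, and the numerical margin is slim enough that it only closes after the fourth-moment bound on $\Delta_t^{\varepsilon}$ is sharpened to $8\sigma^4$ using independence of the two workers.
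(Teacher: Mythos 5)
Your proposal is correct and follows essentially the same route as the paper's proof: the same transformation by $\mathcal{X}^{-1}$ combined with the uniform norm bound of \cref{fedacii:stab:bound}, the same exact expansion of the conditional fourth moment with the mean-zero cross term dropped, the same sharp moment bounds $\expt\|\Delta_t^\varepsilon\|^2\le 2\sigma^2$ and $\expt\|\Delta_t^\varepsilon\|^4\le 8\sigma^4$ (the paper's \cref{helper:diff:4th}), and the same Cauchy--Schwarz plus AM--GM absorption of the surviving odd third-order noise term into the perfect square $(\|u_t\|^2+7\gamma^2\sigma^2)^2$. Your observation that the odd term genuinely survives and that the constant only closes with the independence-sharpened fourth-moment bound matches the paper's treatment exactly.
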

We defer the proof of \cref{fedacii:stab:2} to \cref{sec:fedacii:stab:2}.

The following \cref{fedacii:stab:3} links the discrepancy overhead we wish to bound for \cref{fedacii:stab:main} with the quantity analyzed in \cref{fedacii:stab:2} via \nth{3}-order-smoothness (\cref{asm2}(a)). 
The proof of \cref{fedacii:stab:3} is deferred to \cref{sec:fedacii:stab:3}.
\begin{proposition} 
  \label{fedacii:stab:3}
  In the same setting of \cref{fedacii:stab:main}, the following inequality holds (for all possible $t$)
  \begin{equation}
    \left\|  \nabla F(\overline{w_t^{\mathrm{md}}}) -  \frac{1}{M} \sum_{m=1}^M \nabla F (w_t^{\mathrm{md}, m})  \right\|^2 
    \leq
    \frac{289 \eta^4 Q^2}{324 \gamma^4} \left\| \mathcal{X}(\gamma, \eta)^{-1} \begin{bmatrix}
      \Delta_{t}^{\mathrm{ag}}
      \\
      \Delta_{t}
    \end{bmatrix}  \right\|^4,
  \end{equation}
  where $\mathcal{X}$ is the matrix-valued function defined in \cref{eq:fedacii:X:def}.
\end{proposition}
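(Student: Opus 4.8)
The plan is to reduce the gradient discrepancy on the left-hand side to a fourth-power \emph{displacement} discrepancy via \nth{3}-order smoothness, and then to convert that displacement discrepancy into the transformed-norm quantity $\bigl\|\mathcal{X}(\gamma,\eta)^{-1}[\Delta_t^{\mathrm{ag}};\Delta_t]\bigr\|^4$ using the same matrix-norm technique that underlies \cref{fedaci:stab:3}. First I would exploit that the worker deviations $w_t^{\mathrm{md},m}-\overline{w_t^{\mathrm{md}}}$ average to zero, so the Hessian correction $\nabla^2 F(\overline{w_t^{\mathrm{md}}})(w_t^{\mathrm{md},m}-\overline{w_t^{\mathrm{md}}})$ can be inserted for free:
\[
\nabla F(\overline{w_t^{\mathrm{md}}}) - \frac1M\sum_{m=1}^M\nabla F(w_t^{\mathrm{md},m}) = -\frac1M\sum_{m=1}^M\Bigl(\nabla F(w_t^{\mathrm{md},m}) - \nabla F(\overline{w_t^{\mathrm{md}}}) - \nabla^2 F(\overline{w_t^{\mathrm{md}}})(w_t^{\mathrm{md},m}-\overline{w_t^{\mathrm{md}}})\Bigr).
\]
Applying Jensen's inequality to move the squared norm inside the average, and then the gradient Taylor remainder bound $\|\nabla F(u)-\nabla F(w)-\nabla^2 F(w)(u-w)\|\le\frac{Q}{2}\|u-w\|^2$ implied by \nth{3}-order smoothness (\cref{asm2}(a), via a helper lemma in \cref{sec:helper}), yields exactly the estimate quoted in the proof sketch of \cref{sec:proof:sketch:2}, namely $\text{LHS}\le\frac{Q^2}{4M}\sum_{m}\|w_t^{\mathrm{md},m}-\overline{w_t^{\mathrm{md}}}\|^4$.

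Next I would pass from the average deviation to a pairwise difference. Since $\overline{w_t^{\mathrm{md}}}$ is a convex combination of the $w_t^{\mathrm{md},m}$, we have $\frac1M\sum_m\|w_t^{\mathrm{md},m}-\overline{w_t^{\mathrm{md}}}\|^4\le\max_{m_1,m_2}\|w_t^{\mathrm{md},m_1}-w_t^{\mathrm{md},m_2}\|^4=\|\Delta_t^{\mathrm{md}}\|^4$, where (exactly as in \cref{sec:fedaci:stab:main}) the pair $(m_1,m_2)$ may be fixed to the maximizer because the telescoped stability bound in \cref{fedacii:stab:2} is uniform over all pairs. By the definition of the ``md'' coupling, $\Delta_t^{\mathrm{md}}=(1-\beta^{-1})\Delta_t^{\mathrm{ag}}+\beta^{-1}\Delta_t$, so inserting $\mathcal{X}(\gamma,\eta)\mathcal{X}(\gamma,\eta)^{-1}$ and using sub-multiplicativity gives
\[
\|\Delta_t^{\mathrm{md}}\|\le\Bigl\|\begin{bmatrix}(1-\beta^{-1})I\\\beta^{-1}I\end{bmatrix}^\intercal\mathcal{X}(\gamma,\eta)\Bigr\|\cdot\Bigl\|\mathcal{X}(\gamma,\eta)^{-1}\begin{bmatrix}\Delta_t^{\mathrm{ag}}\\\Delta_t\end{bmatrix}\Bigr\|.
\]

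Finally I would nail the constant. A direct computation gives $\begin{bmatrix}(1-\beta^{-1})I\\\beta^{-1}I\end{bmatrix}^\intercal\mathcal{X}(\gamma,\eta)=\bigl[((1-\beta^{-1})\tfrac{\eta}{\gamma}+\beta^{-1})I,\ \beta^{-1}I\bigr]$, whose norm is $\sqrt{((1-\beta^{-1})\tfrac{\eta}{\gamma}+\beta^{-1})^2+(\beta^{-1})^2}$. Writing $x=\gamma\mu$ and substituting $\alpha=\frac{3-x}{2x}$ and $\beta=\frac{2\alpha^2-1}{\alpha-1}$, one finds the explicit form $\beta^{-1}=\frac{3x(1-x)}{9-6x-x^2}$, which I would bound by $\beta^{-1}\le\frac{x}{3}\le\frac13\frac{\eta}{\gamma}$ (the first inequality reduces to $x^2\le 3x$, and the second to $\gamma^2\mu\le\eta$, both valid on $\gamma\in[\eta,\sqrt{\eta/\mu}]$). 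Together with $1-\beta^{-1}\le 1$ this gives norm${}^2\le(\tfrac43\tfrac{\eta}{\gamma})^2+(\tfrac13\tfrac{\eta}{\gamma})^2=\tfrac{17}{9}\tfrac{\eta^2}{\gamma^2}$, hence norm${}^4\le\tfrac{289}{81}\tfrac{\eta^4}{\gamma^4}$; multiplying by the factor $\frac{Q^2}{4}$ from the first step produces precisely $\frac{289\eta^4 Q^2}{324\gamma^4}$. The main obstacle is this last step: one must verify the clean rational bound $\beta^{-1}\le\frac13\gamma\mu$ for the genuinely complicated \fedacii choice of $\beta$, and confirm the range constraint $x=\gamma\mu\le\eta/\gamma\le 1$ keeps the denominator $9-6x-x^2$ positive throughout; a secondary subtlety is ensuring the gradient-form Taylor remainder bound is legitimately available from the function-value form of \cref{asm2}(a).
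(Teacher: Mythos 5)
Your proposal is correct and follows essentially the same route as the paper: the reduction $\|\nabla F(\overline{w})-\frac1M\sum_m\nabla F(w^m)\|^2\le\frac{Q^2}{4M}\sum_m\|w^m-\overline{w}\|^4$ (the paper's \cref{helper:3rd:Lip}), the passage to $\|\Delta_t^{\mathrm{md}}\|^4$ by convexity, the ``md''-coupling plus sub-multiplicativity through $\mathcal{X}$, and the entry-wise bounds $\frac{4\eta}{3\gamma}$ and $\frac{\eta}{3\gamma}$ yielding $\bigl(\frac{\sqrt{17}\eta}{3\gamma}\bigr)^4\cdot\frac{Q^2}{4}=\frac{289\eta^4Q^2}{324\gamma^4}$ exactly as in \cref{fedacii:stab:left}. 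The only cosmetic difference is that you bound $\beta^{-1}\le\frac{\gamma\mu}{3}\le\frac{\eta}{3\gamma}$ before assembling the row vector, whereas the paper writes out $\mathcal{X}^\intercal[(1-\beta^{-1})I;\beta^{-1}I]$ explicitly and bounds the resulting rational expressions; both give identical constants.
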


We are ready to complete the proof of \cref{fedacii:stab:main}.
\begin{proof}[Proof of \cref{fedacii:stab:main}]
  Let $t_0$ be the latest synchronized step prior to $t$. Applying \cref{fedacii:stab:2} gives
  \begin{align}
    & \sqrt{
      \expt \left[ \left\| \mathcal{X}(\gamma, \eta)^{-1} \begin{bmatrix}
        \Delta_{t+1}^{\mathrm{ag}}
        \\
        \Delta_{t+1}
      \end{bmatrix}  \right\|^4 \middle| \mathcal{F}_{t_0} \right]
    }
    \\
    \leq & 7 \gamma^2 \sigma^2 +
    \sqrt{
    \expt \left[
    \left\| \mathcal{X}(\gamma, \eta)^{-1} \begin{bmatrix}
      \Delta_{t}^{\mathrm{ag}}
      \\
      \Delta_{t}
    \end{bmatrix}  \right\|^2 \middle| \mathcal{F}_{t_0}\right]}
    \cdot
    \begin{cases}
      \left(1 + \frac{\gamma^2 \mu}{\eta} \right)^2 & \text{if~} \gamma \in \left(\eta, \sqrt{\frac{\eta}{\mu}}\right], \\
      1                                              & \text{if~} \gamma =  \eta.
    \end{cases}
  \end{align}
  Telescoping from $t_0$ to $t$ gives (note that $\Delta_{t_0}^{\mathrm{ag}} = \Delta_{t_0} = 0$)
  \begin{align}
    \expt \left[ \left\| \mathcal{X}(\gamma, \eta)^{-1} \begin{bmatrix}
      \Delta_{t}^{\mathrm{ag}}
     \\
      \Delta_{t}
    \end{bmatrix}  \right\|^4 \middle| \mathcal{F}_{t_0} \right]
  & \leq
  49 \gamma^4 \sigma^4 (t - t_0)^2
  \cdot
  \begin{cases}
    \left( 1 + \frac{\gamma^2 \mu}{\eta} \right)^{4(t-t_0)} & \text{if~} \gamma \in \left(\eta, \sqrt{\frac{\eta}{\mu}} \right], \\
    1                              & \text{if~} \gamma =  \eta
  \end{cases}
  \\
  & \leq 
  49 \gamma^4 \sigma^4 K^2
  \cdot
  \begin{cases}
    \left( 1 + \frac{\gamma^2 \mu}{\eta} \right)^{4K} & \text{if~} \gamma \in \left(\eta, \sqrt{\frac{\eta}{\mu}} \right], \\
    1                              & \text{if~} \gamma =  \eta,
  \end{cases}
  \end{align}
  where the last inequality is due to $t - t_0 \leq K$ since $K$ is the maximum synchronization interval.
  
  Consequently, by \cref{fedacii:stab:3} we have
  \begin{align}
    & 
    \expt \left[  \left\|  \nabla F(\overline{w_t^{\mathrm{md}}}) -  \frac{1}{M} \sum_{m=1}^M \nabla F (w_t^{\mathrm{md}, m})  \right\|^2 \middle| \mathcal{F}_{t_0}  \right]
    \leq \frac{289 \eta^4 Q^2}{324 \gamma^4} \expt \left[ \left\| \mathcal{X}(\gamma, \eta)^{-1} \begin{bmatrix}
      \Delta_{t}^{\mathrm{ag}}
     \\
      \Delta_{t}
    \end{bmatrix}  \right\|^4 \middle| \mathcal{F}_{t_0} \right]
    \\ 
    \leq & 
    \begin{cases}
      44 \eta^4 Q^2 K^2 \sigma^4 \left(1 + \frac{\gamma^2\mu}{\eta}\right)^{4K}
       & \text{if~} \gamma \in \left(\eta, \sqrt{\frac{\eta}{\mu}} \right],
      \\
      44 \eta^4 Q^2 K^2 \sigma^4
       &
      \text{if~} \gamma = \eta,
    \end{cases}
  \end{align}
  where in the last inequality we used the estimate that $\frac{289}{324} \cdot 49 < 44$. 
\end{proof}

\subsubsection{Proof of \cref{fedacii:stab:1}}

\label{sec:fedacii:stab:1}
\begin{proof}[Proof of \cref{fedacii:stab:1}]
  The proof of  \cref{fedacii:stab:1} follows instantly by plugging $\alpha = \frac{3}{2\gamma \mu} - \frac{1}{2}$, $\beta = \frac{2 \alpha^2 - 1}{\alpha - 1} = \frac{9 - \gamma \mu (6 + \gamma \mu)}{3 \gamma \mu (1 - \gamma\mu)}$ to the general claim on \fedac \cref{fedac:general:stab}:
  \begin{align}
    & \begin{bmatrix}
      (1 - \beta^{-1}) (I - \eta H)
       &
      \beta^{-1} (I - \eta H)
      \\
      (1 - \beta^{-1}) (\alpha^{-1} - \gamma H)
       &
      \beta^{-1} (\alpha^{-1} I - \gamma H) + (1 - \alpha^{-1}) I
    \end{bmatrix}
    \\
    = & \frac{1}{9 - \gamma \mu (6 + \gamma \mu)}
    \begin{bmatrix}
      (3 - \gamma \mu)(3 - 2\gamma \mu)(I - \eta H)
       & 3 \gamma \mu (1 - \gamma \mu) (I - \eta H)
      \\
      (3 - 2 \gamma \mu) (2 \gamma \mu - (3 - \gamma \mu) \gamma H)
       &
      3 (1 - \gamma \mu) ((3 - \gamma \mu)I - \gamma^2 \mu H)
    \end{bmatrix}.
  \end{align}
\end{proof}

\subsubsection{Proof of \cref{fedacii:stab:bound}: uniform norm bound}
\label{sec:fedacii:stab:bound}
The proof idea of this proposition is very similar to \cref{fedaci:stab:bound}, though more complicated technically.
\begin{proof}
  Define another matrix-valued function $\mathcal{B}$ as
  \begin{equation}
    \mathcal{B}(\mu, \gamma, \eta, H) := \mathcal{X}(\gamma, \eta)^{-1} \mathcal{A}(\mu, \gamma, \eta, H) \mathcal{X}(\gamma, \eta).
  \end{equation}
  Since $ \mathcal{X}(\gamma, \eta)^{-1} =
  \begin{bmatrix}
    \frac{\gamma}{\eta}I  & 0
    \\
    -\frac{\gamma}{\eta}I & I
  \end{bmatrix}$ 
  we have 
  \begin{align}
      & \mathcal{B}(\mu, \gamma, \eta, H)
    \\
    = & \frac{1}{(9 - (6 + \gamma \mu)\gamma \mu)\eta}
    \begin{bmatrix}
      \left( 3 \gamma^2 \mu ( 1 - \gamma \mu) + \eta (3 - \gamma \mu)(3 - 2 \gamma \mu) \right) (I - \eta H) &
      3\gamma^2 \mu(1 - \gamma \mu)(I - \eta H)
      \\
      - (\gamma - \eta)\left( 3 \gamma + 6 \eta - \gamma \mu (3 \gamma + 4\eta) \right) I                    &
      3 (1 - \gamma \mu)\left( 3 \eta - \gamma \mu (\gamma + \eta)  \right) I
    \end{bmatrix}.
  \end{align}
  Define the four blocks of $\mathcal{B}(\mu, \gamma, \eta, H)$ as $\mathcal{B}_{11}(\mu, \gamma, \eta, H)$, $\mathcal{B}_{12}(\mu, \gamma, \eta, H)$, $\mathcal{B}_{21}(\mu, \gamma, \eta)$, $\mathcal{B}_{22}(\mu, \gamma, \eta)$ (note that the lower two blocks do not involve $H$), namely
  \begin{align}
    \mathcal{B}_{11}(\mu, \gamma, \eta, H) & = \frac{3 \gamma^2 \mu ( 1 - \gamma \mu) + \eta (3 - \gamma \mu)(3 - 2 \gamma \mu)}{(9 - (6 + \gamma \mu)\gamma \mu) \eta } (I - \eta H),
    \\
    \mathcal{B}_{12}(\mu, \gamma, \eta, H) & = \frac{3\gamma^2 \mu(1 - \gamma \mu)}{(9 - (6 + \gamma \mu)\gamma \mu) \eta }(I - \eta H),
    \\
    \mathcal{B}_{21}(\mu, \gamma, \eta)    & = -\frac{(\gamma - \eta) \mu \left( 3 \gamma + 6 \eta - \gamma \mu (3 \gamma + 4\eta) \right)}{(9 - (6 + \gamma \mu)\gamma \mu) \eta }I,
    \\
    \mathcal{B}_{22}(\mu, \gamma, \eta)    & = \frac{3 (1 - \gamma \mu)\left( 3 \eta - \gamma \mu (\gamma + \eta)  \right)}{(9 - (6 + \gamma \mu)\gamma \mu) \eta}I.
  \end{align}
  \paragraph{Case I: $\eta < \gamma \leq \sqrt{\frac{\eta}{\mu}}$.}
  Since $\gamma \mu \leq 1$, we know that the common denominator
  \begin{equation}
    (9 - (6 + \gamma \mu)\gamma \mu) \eta \geq 2 \eta > 0.
    \label{eq:fedacii:B:denom}
  \end{equation}
  Now we bound the operator norm of each block as follows.

  \paragraph{Bound for $\|\mathcal{B}_{11}\|$.} Since $3 \gamma^2 \mu ( 1 - \gamma \mu) + \eta (3 - \gamma \mu)(3 - 2 \gamma \mu) \geq 0$, we have $\mathcal{B}_{11} \succeq 0$, and therefore
  \begin{align}
         & \|\mathcal{B}_{11}(\mu, \gamma, \eta, H)\|
    \\
    \leq & \frac{3 \gamma^2 \mu ( 1 - \gamma \mu) + \eta (3 - \gamma \mu)(3 - 2 \gamma \mu)}{(9 - (6 + \gamma \mu)\gamma \mu) \eta } (1 - \eta \mu)
    \\
    \leq & \frac{3 \gamma^2 \mu ( 1 - \gamma \mu) + \eta (3 - \gamma \mu)(3 - 2 \gamma \mu)}{(9 - (6 + \gamma \mu)\gamma \mu) \eta }
    \\
    =    &  1 + \frac{3 (\gamma - \eta) \gamma \mu  (1 - \gamma \mu)}{(9 - (6 + \gamma \mu)\gamma \mu) \eta }
    \\
    \leq & 1 + \frac{3 \gamma^2 \mu }{\eta} \cdot \frac{1 - \gamma \mu}{ 9 - 6 \gamma \mu - \gamma^2\mu^2}
    \tag{since $\gamma - \eta \leq \gamma$}
    \\
    \leq & 1 + \frac{\gamma^2 \mu}{3 \eta},
    \label{eq:fedacii:stab:bound:11}
  \end{align}
  where the last inequality is due to $\frac{1 - \gamma \mu}{ 9 - 6 \gamma \mu - \gamma^2\mu^2} \leq \frac{1}{9}$ since $\gamma \mu \leq 1$.

  \paragraph{Bound for $\|\mathcal{B}_{12}\|$.} Similarly we have
  \begin{equation}
    \|\mathcal{B}_{12}(\mu, \gamma, \eta, H)\| \leq \frac{3 \gamma^2 \mu (1 - \gamma\mu)}{(9 - (6 + \gamma \mu)\gamma \mu) \eta}(1-\eta \mu)
    \leq \frac{3 \gamma^2 \mu}{\eta} \cdot \frac{1 - \gamma \mu}{9 - (6 + \gamma \mu)\gamma \mu}
    \leq \frac{\gamma^2 \mu}{3 \eta},
    \label{eq:fedacii:stab:bound:12}
  \end{equation}
  where the last inequality is due to $\frac{1 - \gamma \mu}{ 9 - 6 \gamma \mu - \gamma^2\mu^2} \leq \frac{1}{9}$ since $\gamma \mu \leq 1$.

  \paragraph{Bound for $\|\mathcal{B}_{21}\|$.} Since $\gamma \geq \eta$, we have $(\gamma - \eta) \mu \left( 3 \gamma + 6 \eta - \gamma \mu (3 \gamma + 4\eta) \right) \geq 0$.
  Note that
  \begin{align}
     & (\gamma - \eta) \left( 3 \gamma + 6 \eta - \gamma \mu (3 \gamma + 4\eta) \right)
     \\
     = &  3 \gamma^2 + 3 \gamma \eta - 6 \eta^2 - \gamma \mu (3 \gamma^2 + \gamma \eta - 4 \eta^ 2)
     \\
     = & 4 \gamma^2 - 3\gamma^3 \mu - (\gamma^2 - 3 \gamma \eta + 6 \eta^2 + \gamma^2 \mu \eta - 4\eta^2 \gamma \mu  ),
  \end{align}
  and
  \begin{align}
        & \gamma^2 - 3 \gamma \eta + 6 \eta^2 + \gamma^2 \mu \eta - 4\eta^2 \gamma \mu 
    \\
    \geq & \gamma^2 - 3 \gamma \eta + 6 \eta^2 - 3\eta^2 \gamma \mu  \tag{since $\eta \leq \gamma$}
    \\
    \geq & \gamma^2 - 3 \gamma \eta + 3 \eta^2 \tag{since $\gamma \mu \leq 1$}
    \\
    \geq & 0. \tag{AM-GM inequality}
  \end{align}
  Consequently,
  \begin{equation}
    (\gamma - \eta) \mu \left( 3 \gamma + 6 \eta - \gamma \mu (3 \gamma + 4\eta) \right) \leq 4 \gamma^2 \mu - 3 \gamma^3 \mu^2.
    \label{eq:fedacii:stab:bound:tmp}
  \end{equation}
  It follows that
  \begin{align}
         & \|\mathcal{B}_{21}(\mu, \gamma, \eta)\| =  \frac{\mu (\gamma - \eta)  \left( 3 \gamma + 6 \eta - \gamma \mu (3 \gamma + 4\eta) \right)}{(9 - (6 + \gamma \mu)\gamma \mu) \eta }
    \\
    \leq    & \frac{4 \gamma^2 \mu - 3 \gamma^3 \mu^2}{(9 - (6 + \gamma \mu)\gamma \mu) \eta }  \tag{by \cref{eq:fedacii:stab:bound:tmp}}
    \\
    =   &  \frac{\gamma^2 \mu}{\eta} \cdot \frac{4 - 3 \gamma \mu}{9 - 6 \gamma \mu - \gamma^2 \mu^2} \leq  \frac{2\gamma^2 \mu}{3\eta}.
    \label{eq:fedacii:stab:bound:21}
  \end{align}
  where the last inequality is due to $\frac{4 - 3 \gamma \mu}{9 - 6 \gamma \mu - \gamma^2 \mu^2} \leq \frac{2}{3}$ since $\gamma \mu \leq 1$.

  \paragraph{Bound for $\mathcal{B}_{22}$.} Since $\gamma > \eta$ and $\gamma^2 \mu \leq \eta$, we have $3\eta - \gamma \mu(\gamma + \eta) \geq 3 \eta - 2 \gamma^2 \mu \geq \eta$. Thus $\mathcal{B}_{22} \succeq 0$, which implies
  \begin{equation}
    \|\mathcal{B}_{22}(\mu, \gamma, \eta)\|
    = \frac{3 (1 - \gamma \mu)\left( 3 \eta - \gamma \mu (\gamma + \eta)  \right)}{(9 - (6 + \gamma \mu)\gamma \mu) \eta}
    = 1 + \frac{\gamma \mu \left( -6 \eta - 3 \gamma + \gamma \mu (3 \gamma + 4 \eta ) \right) }{(9 - (6 + \gamma \mu)\gamma \mu) \eta }
    \leq 1.
    \label{eq:fedacii:stab:bound:22}
  \end{equation}

  The operator norm of block matrix $\mathcal{B}$ can be bounded via its blocks via \cref{helper:blocknorm} as
    \begin{align}
         & \mathcal{B}(\mu, \gamma, \eta, H)
    \\
    \leq & \max \left\{\| \mathcal{B}_{11}(\mu, \gamma, \eta, H) \|, \| \mathcal{B}_{22}(\mu, \gamma, \eta) \| \right\}
    +
    \max \left\{\| \mathcal{B}_{12}(\mu, \gamma, \eta, H) \|, \| \mathcal{B}_{21}(\mu, \gamma, \eta) \| \right\} \tag{\cref{helper:blocknorm}}
    \\
    \leq & \max \left\{ 1 + \frac{\gamma^2 \mu}{3\eta}, 1 \right\} + \max \left\{ \frac{\gamma^2 \mu}{3\eta}, \frac{2\gamma^2 \mu}{3\eta} \right\}
    \leq 1 + \frac{\gamma^2 \mu}{\eta}.
    \tag{\cref{eq:fedacii:stab:bound:11,eq:fedacii:stab:bound:12,eq:fedacii:stab:bound:21,eq:fedacii:stab:bound:22}}
  \end{align}
  \paragraph{Case II: $\gamma = \eta$.} In this case we have
  \begin{align}
    \|\mathcal{B}_{11}(\mu, \gamma, \eta, H)\| &
    \leq 1 - \eta \mu,
    \\
    \|\mathcal{B}_{12}(\mu, \gamma, \eta, H)\| &
    \leq \frac{3 \eta \mu - 6 \eta^2 \mu^2 + 3 \eta^3 \mu^3}{9 - 6 \eta \mu - \eta^2 \mu^2},
    \\
    \|\mathcal{B}_{21}(\mu, \gamma, \eta)\|    &
    = 0,
    \\
    \|\mathcal{B}_{22}(\mu, \gamma, \eta)\|    & = \frac{9- 15 \eta \mu + 6 \eta^2 \mu^2}{9 - 6 \eta \mu - \eta^2 \mu^2} = 1  - \frac{9 \eta \mu - 7 \eta^2 \mu^2}{9 - 6 \eta \mu - \eta^2 \mu^2}.
  \end{align}

 Similarly the operator norm of block matrix $\mathcal{B}$ can be bounded via its blocks via \cref{helper:blocknorm} as
   \begin{align}
         & \mathcal{B}(\mu, \gamma, \eta, H)
    \\
    \leq & \max \left\{\| \mathcal{B}_{11}(\mu, \gamma, \eta, H) \|, \| \mathcal{B}_{22}(\mu, \gamma, \eta) \| \right\}
    +
    \max \left\{\| \mathcal{B}_{12}(\mu, \gamma, \eta, H) \|, \| \mathcal{B}_{21}(\mu, \gamma, \eta) \| \right\} \tag{\cref{helper:blocknorm}}
    \\
    \leq & \max \left\{ 1 - \eta \mu + \frac{3 \eta \mu - 6 \eta^2 \mu^2 + 3 \eta^3 \mu^3}{9 - 6 \eta \mu - \eta^2 \mu^2}, \frac{9- 15 \eta \mu + 6 \eta^2 \mu^2}{9 - 6 \eta \mu - \eta^2 \mu^2}
    + \frac{3 \eta \mu - 6 \eta^2 \mu^2 + 3 \eta^3 \mu^3}{9 - 6 \eta \mu - \eta^2 \mu^2}\right\}
    \\
    \leq & \max \left\{ 1 - \frac{6 \eta \mu - 4 \eta^3 \mu^3}{9 - 6 \eta \mu - \eta^2 \mu^2} , 1 - \frac{6 \eta \mu - \eta^2\mu^2 - 3 \eta^3 \mu^3}{9 - 6 \eta \mu - \eta^2 \mu^2} \right\} \leq 1.
  \end{align}
  Summarizing the above two cases completes the proof of \cref{fedacii:stab:bound}.
\end{proof}

\subsubsection{Proof of \cref{fedacii:stab:2}}
\label{sec:fedacii:stab:2}
In this section we apply  \cref{fedacii:stab:1,fedacii:stab:bound} to establish \cref{fedacii:stab:2}.
\begin{proof}[Proof of \cref{fedacii:stab:2}]
  If $t+1$ is a synchronized step, then the bound trivially holds since $\Delta_{t+1}^{\mathrm{ag}} = \Delta_{t+1} = 0$ due to synchronization.

  From now on assume $t+1$ is not a synchronized step, for which \cref{fedacii:stab:1} is applicable. Multiplying $\mathcal{X}(\gamma, \eta)^{-1}$ to the left on both sides of \cref{fedacii:stab:1} gives (we omit the details since the reasoning is the same as in the proof of \cref{fedaci:stab:2}.
  \begin{align}
    \mathcal{X}(\gamma, \eta)^{-1} \begin{bmatrix}
      \Delta_{t+1}^{\mathrm{ag}}
      \\
      \Delta_{t+1}
    \end{bmatrix}
    =
    \mathcal{X}(\gamma, \eta)^{-1} \mathcal{A}(\mu, \gamma, \eta, H_t) \mathcal{X}(\gamma, \eta)^{-1} \left( \mathcal{X}(\gamma, \eta) \begin{bmatrix}
        \Delta_{t}^{\mathrm{ag}}
        \\
        \Delta_{t}
      \end{bmatrix}  \right)
    -
    \begin{bmatrix}   \gamma I \\ 0 \end{bmatrix} \Delta_t^{\varepsilon}.
    \label{eq:fedacii:stab:2:1}
  \end{align}
  Before we proceed, we introduce a few more notations to simplify the discussion. 
  Denote the shortcut $\mathcal{B}_t := \mathcal{X}(\gamma, \eta)^{-1} \mathcal{A}(\mu, \gamma, \eta, H_t) \mathcal{X}(\gamma, \eta)$,  $\mathcal{X} = \mathcal{X}(\gamma, \eta)$,
  $\tilde{\Delta}_t =  \mathcal{X}^{-1}\begin{bmatrix} \Delta_{t}^{\mathrm{ag}} \\  \Delta_{t} \end{bmatrix}$, and $\tilde{\Delta}_t^\varepsilon =  \begin{bmatrix} \gamma I \\ 0 \end{bmatrix} \Delta_t^{\varepsilon}$.
  Then \cref{eq:fedacii:stab:2:1} becomes $\tilde{\Delta}_{t+1} = \mathcal{B}_t \tilde{\Delta}_{t} - \tilde{\Delta}_{t}^{\varepsilon}$. Thus
  \begin{align}
         & \expt \left[ \| \tilde{\Delta}_{t+1} \|^4 |\mathcal{F}_t \right] = \expt \left[ \|  \mathcal{B}_t \tilde{\Delta}_{t} - \tilde{\Delta}_t^\varepsilon  \|^4 |\mathcal{F}_t \right]
    \tag{by \cref{fedacii:stab:1}}
    \\
    =    & \expt \left[  \left(  \| \mathcal{B}_t \tilde{\Delta}_t \|^2
      +
      \|\tilde{\Delta}_t^\varepsilon \|^2
      -
      2 \langle \mathcal{B}_t \tilde{\Delta}_t , \tilde{\Delta}_t^\varepsilon  \rangle \right)^2   \right]
    \\
    =    & \| \mathcal{B}_t \tilde{\Delta}_t \|^4
    + \expt \left[ \| \tilde{\Delta}_{t}^\varepsilon \|^4 |\mathcal{F}_t \right]
    + 4 \expt \left[     \langle \mathcal{B}_t \tilde{\Delta}_t , \tilde{\Delta}_t^\varepsilon  \rangle^2 |\mathcal{F}_t     \right]
    + 2 \| \mathcal{B}_t \tilde{\Delta}_t \|^2 \expt \left[ \| \tilde{\Delta}_{t}^\varepsilon \|^2 |\mathcal{F}_t \right]
    \\
         &
    - 4  \| \mathcal{B}_t \tilde{\Delta}_t \|^2  \expt \left[\langle \mathcal{B}_t \tilde{\Delta}_t , \tilde{\Delta}_t^\varepsilon  \rangle |\mathcal{F}_t     \right]
    - 4 \expt \left[  \| \tilde{\Delta}_{t}^\varepsilon \|^2   \langle \mathcal{B}_t \tilde{\Delta}_t , \tilde{\Delta}_t^\varepsilon  \rangle |\mathcal{F}_t     \right]
    \\
    =    & \| \mathcal{B}_t \tilde{\Delta}_t \|^4 + \expt \left[ \| \tilde{\Delta}_{t}^\varepsilon \|^4 |\mathcal{F}_t \right]
    + 4 \expt \left[     \langle \mathcal{B}_t \tilde{\Delta}_t , \tilde{\Delta}_t^\varepsilon  \rangle^2 |\mathcal{F}_t     \right]
    + 2 \| \mathcal{B}_t \tilde{\Delta}_t \|^2 \expt \left[ \| \tilde{\Delta}_{t}^\varepsilon \|^2 |\mathcal{F}_t \right]
    \\
         &
    - 4 \expt \left[  \| \tilde{\Delta}_{t}^\varepsilon \|^2   \langle \mathcal{B}_t \tilde{\Delta}_t , \tilde{\Delta}_t^\varepsilon  \rangle |\mathcal{F}_t     \right]
    \tag{by independence and $\expt [\tilde{\Delta}_t^\varepsilon |\mathcal{F}_t] = 0$}
    \\
    \leq & \| \mathcal{B}_t \tilde{\Delta}_t \|^4 + \expt \left[ \| \tilde{\Delta}_{t}^\varepsilon \|^4 |\mathcal{F}_t \right]
    + 6 \| \mathcal{B}_t \tilde{\Delta}_t \|^2 \expt \left[ \| \tilde{\Delta}_{t}^\varepsilon \|^2 |\mathcal{F}_t \right]
    + 4 \| \mathcal{B}_t \tilde{\Delta}_t \| \expt \left[  \| \tilde{\Delta}_{t}^\varepsilon \|^3   |\mathcal{F}_t     \right]
    \tag{Cauchy-Schwarz inequality}
    \\
    \leq & \| \mathcal{B}_t \tilde{\Delta}_t \|^4 + 5 \expt \left[ \| \tilde{\Delta}_{t}^\varepsilon \|^4 |\mathcal{F}_t \right]
    + 7 \| \mathcal{B}_t \tilde{\Delta}_t \|^2 \expt \left[ \| \tilde{\Delta}_{t}^\varepsilon \|^2 |\mathcal{F}_t \right]
    \tag{AM-GM inequality}
    \\
    \leq & \| \mathcal{B}_t \tilde{\Delta}_t \|^4
    + 40 \gamma^4  \sigma^4
    + 14 \gamma^2 \sigma^2\| \mathcal{B}_t \tilde{\Delta}_t \|^2
    \tag{bounded \nth{4} central moment via \cref{helper:diff:4th}}
    \\
    \leq & \left( \|\mathcal{B}_t \tilde{\Delta}_t\|^2 + 7 \gamma^2 \sigma^2 \right)^2 \leq  \left( \|\mathcal{B}_t\|^2 \|\tilde{\Delta}_t\|^2 + 7 \gamma^2 \sigma^2 \right)^2.
  \end{align}
  Applying \cref{fedacii:stab:bound},
  \begin{equation}
    \sqrt{\expt \left[ \| \tilde{\Delta}_{t+1} \|^4 |\mathcal{F}_t \right]}
    \leq
    7 \gamma^2 \sigma^2 +  \| \tilde{\Delta}_{t}\|^2 \cdot  
    \begin{cases}
      \left(1 + \frac{\gamma^2 \mu}{\eta}\right)^2 & \text{if~} \gamma \in \left(\eta, \sqrt{\frac{\eta}{\mu}}\right], \\
      1                              & \text{if~} \gamma =  \eta.
    \end{cases}
  \end{equation}
  Resetting the notations completes the proof.
\end{proof}

\subsubsection{Proof of \cref{fedacii:stab:3}}
\label{sec:fedacii:stab:3}
In this section we will prove \cref{fedacii:stab:3} in two steps via the following two claims. For both two claims $\mathcal{X}$ stands for the matrix-valued functions defined in \cref{eq:fedacii:X:def}.

\begin{claim} \label{fedacii:stab:3:1}
  In the same setting of \cref{fedacii:stab:main}, the following inequality holds (for all possible $t$)
  \begin{equation}
    \left\|  \nabla F(\overline{w_t^{\mathrm{md}}}) -  \frac{1}{M} \sum_{m=1}^M \nabla F (w_t^{\mathrm{md}, m})  \right\|^2 
    \leq
    \frac{Q^2}{4} \left\| \mathcal{X}(\gamma, \eta)^\intercal
  \begin{bmatrix} \frac{9 - 9 \gamma \mu + 2 \gamma^2 \mu^2}{9 - 6 \gamma \mu - \gamma^2 \mu^2}  I \\ \frac{3 \gamma \mu - 3\gamma^2 \mu^2}{9 - 6 \gamma \mu - \gamma^2 \mu^2} I \end{bmatrix}
     \right\|^4
     \left\| \mathcal{X}(\gamma, \eta)^{-1} \begin{bmatrix}
      \Delta_{t}^{\mathrm{ag}}
      \\
      \Delta_{t}
    \end{bmatrix}  \right\|^4.
  \end{equation}
\end{claim}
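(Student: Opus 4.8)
The plan is to combine two ingredients already developed for \fedaci: the $Q$-third-order-smoothness assumption (\cref{asm2}(a)), which converts the gradient discrepancy into a \emph{fourth} power of the iterate discrepancy, and the block transformation $\mathcal{X}(\gamma,\eta)$ of \cref{eq:fedacii:X:def}, which rewrites everything in terms of $\|\mathcal{X}(\gamma,\eta)^{-1}[\Delta_t^{\mathrm{ag}};\Delta_t]\|$ that \cref{fedacii:stab:2} controls. This is the exact analogue of \cref{fedaci:stab:3:1}, carried one order higher.

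First I would reproduce the computation from the proof sketch of \cref{fedacii:a2}. Because $\frac{1}{M}\sum_{m}(w_t^{\mathrm{md},m}-\overline{w_t^{\mathrm{md}}})=0$, the Hessian term $\nabla^2 F(\overline{w_t^{\mathrm{md}}})(w_t^{\mathrm{md},m}-\overline{w_t^{\mathrm{md}}})$ averages to zero and may be inserted for free:
\[
\left\| \nabla F(\overline{w_t^{\mathrm{md}}}) - \frac{1}{M}\sum_{m=1}^M \nabla F(w_t^{\mathrm{md},m}) \right\|^2 = \left\| \frac{1}{M}\sum_{m=1}^M \left( \nabla F(w_t^{\mathrm{md},m}) - \nabla F(\overline{w_t^{\mathrm{md}}}) - \nabla^2 F(\overline{w_t^{\mathrm{md}}})(w_t^{\mathrm{md},m} - \overline{w_t^{\mathrm{md}}}) \right) \right\|^2.
\]
Jensen's inequality for $\|\cdot\|^2$ then moves the average inside, and the standard consequence of $Q$-third-order-smoothness, $\|\nabla F(u) - \nabla F(w) - \nabla^2 F(w)(u-w)\| \le \frac{Q}{2}\|u-w\|^2$, bounds each summand, giving $\frac{Q^2}{4M}\sum_{m}\|w_t^{\mathrm{md},m}-\overline{w_t^{\mathrm{md}}}\|^4$.

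Next I would pass from the worker-to-mean fourth moment to the pairwise quantity $\Delta_t^{\mathrm{md}}$ tracked by the recursion of \cref{fedacii:stab:1}. Writing $\overline{w_t^{\mathrm{md}}} - w_t^{\mathrm{md},m}$ as an average of pairwise differences and applying Jensen on $x \mapsto x^4$ shows $\frac{1}{M}\sum_{m}\|w_t^{\mathrm{md},m}-\overline{w_t^{\mathrm{md}}}\|^4 \le \|\Delta_t^{\mathrm{md}}\|^4$, where the bound holds for the worst pair $(m_1,m_2)$; this is legitimate because the RHS of \cref{fedacii:stab:2} is uniform over the choice of pair. Then $w_t^{\mathrm{md},m} = \beta^{-1} w_t^m + (1-\beta^{-1}) w_t^{\mathrm{ag},m}$ gives $\Delta_t^{\mathrm{md}} = \begin{bmatrix}(1-\beta^{-1})I \\ \beta^{-1}I\end{bmatrix}^\intercal \begin{bmatrix}\Delta_t^{\mathrm{ag}} \\ \Delta_t\end{bmatrix}$; inserting $\mathcal{X}(\gamma,\eta)\mathcal{X}(\gamma,\eta)^{-1}=I$ and using sub-multiplicativity together with $\|A\|=\|A^\intercal\|$ yields $\|\Delta_t^{\mathrm{md}}\| \le \left\| \mathcal{X}(\gamma,\eta)^\intercal \begin{bmatrix}(1-\beta^{-1})I \\ \beta^{-1}I\end{bmatrix} \right\| \cdot \left\| \mathcal{X}(\gamma,\eta)^{-1} \begin{bmatrix}\Delta_t^{\mathrm{ag}} \\ \Delta_t\end{bmatrix} \right\|$, and raising to the fourth power produces the two factors in the statement.

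The last task is to check that the coefficient vector is exactly the one written. Substituting $\alpha = \frac{3-\gamma\mu}{2\gamma\mu}$ and $\beta = \frac{2\alpha^2-1}{\alpha-1}$, I expect $\beta^{-1} = \frac{3\gamma\mu - 3\gamma^2\mu^2}{9 - 6\gamma\mu - \gamma^2\mu^2}$ and $1 - \beta^{-1} = \frac{9 - 9\gamma\mu + 2\gamma^2\mu^2}{9 - 6\gamma\mu - \gamma^2\mu^2}$, which are precisely the entries appearing in the bound. None of these steps is deep; the points demanding care are the centering trick that annihilates the Hessian term in the first display and the bookkeeping of the \fedacii hyperparameters in this algebraic identity. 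The only genuinely new content relative to \cref{fedaci:stab:3:1} is that the entire argument now runs at fourth order, which is exactly what later lets the $Q$-dependent term in \cref{fedacii:a2:full} decay like $R^{-6}$.
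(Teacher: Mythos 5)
Your proposal is correct and follows essentially the same route as the paper: the $Q$-third-order-smoothness centering argument (the paper's \cref{helper:3rd:Lip}), Jensen's inequality to reduce to the pairwise difference $\|\Delta_t^{\mathrm{md}}\|^4$, the identity $\Delta_t^{\mathrm{md}} = \bigl[(1-\beta^{-1})I;\ \beta^{-1}I\bigr]^\intercal [\Delta_t^{\mathrm{ag}};\ \Delta_t]$ with the inserted $\mathcal{X}\mathcal{X}^{-1}$ and sub-multiplicativity, and finally the substitution $\beta^{-1} = \frac{3\gamma\mu - 3\gamma^2\mu^2}{9-6\gamma\mu-\gamma^2\mu^2}$, which you compute correctly from the \fedacii hyperparameters.
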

\begin{claim}
  \label{fedacii:stab:left}
  Assume $\mu > 0$, $\gamma \in [\eta,\sqrt{\frac{\eta}{\mu}}]$, then
  \(
    \left\| \mathcal{X}(\gamma, \eta)^\intercal
  \begin{bmatrix} \frac{9 - 9 \gamma \mu + 2 \gamma^2 \mu^2}{9 - 6 \gamma \mu - \gamma^2 \mu^2}  I \\ \frac{3 \gamma \mu - 3\gamma^2 \mu^2}{9 - 6 \gamma \mu - \gamma^2 \mu^2} I \end{bmatrix} \right\| \leq
    \frac{\sqrt{17} \eta }{3\gamma}.
  \)
\end{claim}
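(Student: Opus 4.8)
The plan is to reduce the matrix-norm bound to a scalar inequality in the single variable $x := \gamma\mu$ together with the ratio $r := \eta/\gamma$. Writing $a := \frac{9-9\gamma\mu+2\gamma^2\mu^2}{9-6\gamma\mu-\gamma^2\mu^2}$ and $b := \frac{3\gamma\mu-3\gamma^2\mu^2}{9-6\gamma\mu-\gamma^2\mu^2}$ for the two scalar coefficients, I first multiply out using $\mathcal{X}(\gamma,\eta)^\intercal = \begin{bmatrix} \frac{\eta}{\gamma}I & I \\ 0 & I\end{bmatrix}$:
\[
\mathcal{X}(\gamma,\eta)^\intercal \begin{bmatrix} a I \\ b I\end{bmatrix} = \begin{bmatrix} (r a + b)\,I \\ b\,I\end{bmatrix}.
\]
Since each block is a scalar multiple of the identity, the operator norm of the right-hand side equals $\sqrt{(ra+b)^2 + b^2}$. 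Thus it remains to prove $(ra+b)^2 + b^2 \le \frac{17}{9}r^2$, after which taking square roots and recalling $r=\eta/\gamma$ gives the claimed bound $\frac{\sqrt{17}\,\eta}{3\gamma}$.

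Second, I record two elementary facts about $a$ and $b$ valid for $x\in(0,1]$, both obtained by clearing the common denominator $9-6x-x^2$, which is strictly positive (indeed $\ge 2$) on this range so that the direction of each inequality is preserved. Namely, $a\le 1$ is equivalent to $3x(x-1)\le 0$, and $b\le \frac{x}{3}$ is equivalent to $x(x-3)\le 0$; both hold since $0<x\le 1$. Here the admissible range $x=\gamma\mu\le 1$ follows from $\gamma\le\sqrt{\eta/\mu}$ together with $\eta\le 1/L\le1/\mu$. I also invoke the defining constraints: $\gamma\ge\eta$ gives $r=\eta/\gamma\le1$, and $\gamma\le\sqrt{\eta/\mu}$ (equivalently $\gamma^2\mu\le\eta$) gives $x=\gamma\mu\le\eta/\gamma=r$.

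Third, I combine these. Since $a\le1$ and $r\ge0$ we have $ra\le r$, and since $b\le \frac{x}{3}\le \frac{r}{3}$ we have $ra+b\le \frac{4}{3}r$ and $b\le\frac{1}{3}r$, all quantities being nonnegative. Hence
\[
(ra+b)^2 + b^2 \le \Big(\tfrac{4}{3}r\Big)^2 + \Big(\tfrac{1}{3} r\Big)^2 = \tfrac{16}{9}r^2 + \tfrac{1}{9} r^2 = \tfrac{17}{9}r^2,
\]
which is exactly the required scalar inequality.

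The argument is essentially routine once the problem is scalarized; the step I would treat most carefully is the bound $b\le \frac{x}{3}$ combined with $x\le r$, since this is the only place where the upper-endpoint constraint $\gamma\le\sqrt{\eta/\mu}$ enters, and it is precisely what prevents $b$ from being large relative to $r$ when $r$ is small. The worst case of $b/x$ occurs as $x\to 0$, where $\frac{b}{x}=\frac{3(1-x)}{9-6x-x^2}\to\frac{1}{3}$ (monotonicity is confirmed by $\frac{d}{dx}\frac{3(1-x)}{9-6x-x^2}=-\frac{3(x^2-2x+3)}{(9-6x-x^2)^2}<0$, though the direct cross-multiplication above already suffices). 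Everything else—the block multiplication and the reduction of the operator norm to the Euclidean norm of two scalars—is immediate, so I expect no genuine obstacle beyond keeping the denominator's positivity in view throughout.
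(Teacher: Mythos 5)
Your proof is correct and follows essentially the same route as the paper's: both compute the product $\mathcal{X}^\intercal\begin{bmatrix} aI\\ bI\end{bmatrix}$ explicitly, bound the two scalar entries by $\frac{4\eta}{3\gamma}$ and $\frac{\eta}{3\gamma}$ respectively using $\gamma^2\mu\le\eta$ and $\gamma\mu\le 1$, and combine via $\sqrt{(4/3)^2+(1/3)^2}=\sqrt{17}/3$. The only differences are cosmetic (your $a\le 1$, $b\le x/3\le r/3$ split versus the paper's single combined fraction), plus a harmless slip where your cross-multiplication for $b\le x/3$ actually yields $x^2(x-3)\le 0$ rather than $x(x-3)\le 0$.
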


\begin{proof}[Proof of \cref{fedacii:stab:3}]
  Follow trivially with \cref{fedaci:stab:3:1,fedacii:stab:left} as
  \begin{align}
      \left\|  \nabla F(\overline{w_t^{\mathrm{md}}}) -  \frac{1}{M} \sum_{m=1}^M \nabla F (w_t^{\mathrm{md}, m})  \right\|^2 
      & \leq
      \frac{Q^2}{4} \left( \frac{\sqrt{17} \eta}{3 \gamma} \right)^4 \left\| \mathcal{X}(\gamma, \eta)^{-1} \begin{bmatrix}
        \Delta_{t}^{\mathrm{ag}}
        \\
        \Delta_{t}
      \end{bmatrix}  \right\|^4
      \\
      & = 
      \frac{289 \eta^4 Q^2}{324 \gamma^4} \left\| \mathcal{X}(\gamma, \eta)^{-1} \begin{bmatrix}
        \Delta_{t}^{\mathrm{ag}}
        \\
        \Delta_{t}
      \end{bmatrix}  \right\|^4.
  \end{align}
\end{proof}

Now we finish the proof of these two claims.

\begin{proof}[Proof of \cref{fedacii:stab:3:1}]
  Helper \cref{helper:3rd:Lip} shows that $\left\|  \nabla F(\overline{w_t^{\mathrm{md}}}) -  \frac{1}{M} \sum_{m=1}^M \nabla F (w_t^{\mathrm{md}, m})  \right\|^2 $ can be bounded by $\nth{4}$-moment of difference:
  \begin{align}
    &  \left\|  \nabla F(\overline{w_t^{\mathrm{md}}}) -  \frac{1}{M} \sum_{m=1}^M \nabla F (w_t^{\mathrm{md}, m})  \right\|^2 
    \leq
     \frac{Q^2}{4} \cdot \frac{1}{M} \sum_{m=1}^M  \|w_t^{\mathrm{md},m} - \overline{w_t^{\mathrm{md}}}\|^4
    \tag{\cref{helper:3rd:Lip}}
\\
\leq & \frac{Q^2}{4} \| \Delta_t^{\mathrm{md}} \|^4 
\tag{convexity of $\|\cdot\|^4$}
\\
=  & 
\frac{Q^2}{4}
 \left\|
 \begin{bmatrix} (1 - \beta^{-1}) I \\ \beta^{-1} I \end{bmatrix}^\intercal
 \begin{bmatrix} \Delta_{t}^{\mathrm{ag}} \\ \Delta_{t} \end{bmatrix}
 \right\|^4 
\tag{definition of ``md''}
\\
\leq & \frac{Q^2}{4} \left\| \mathcal{X}(\gamma, \eta)^\intercal \begin{bmatrix} (1 - \beta^{-1}) I \\ \beta^{-1} I \end{bmatrix} \right\|^4
\cdot \left\| \mathcal{X}(\gamma, \eta)^{-1}
 \begin{bmatrix} \Delta_{t}^{\mathrm{ag}} \\ \Delta_{t} \end{bmatrix}
 \right\|^4.
\tag{sub-multiplicativity}
\\
= & \frac{Q^2}{4} \left\| \begin{bmatrix} \frac{9 - 9 \gamma \mu + 2 \gamma^2 \mu^2}{9 - 6 \gamma \mu - \gamma^2 \mu^2}  I \\ \frac{3 \gamma \mu - 3\gamma^2 \mu^2}{9 - 6 \gamma \mu - \gamma^2 \mu^2} I \end{bmatrix} \right\|^4
\cdot \left\| \mathcal{X}(\gamma, \eta)^{-1}
 \begin{bmatrix} \Delta_{t}^{\mathrm{ag}} \\ \Delta_{t} \end{bmatrix}
 \right\|^4.
\end{align}
\end{proof}

\begin{proof}[Proof of \cref{fedacii:stab:left}]
  Direct calculation shows that
  \begin{equation}
    \mathcal{X}(\gamma, \eta)^\intercal
    \begin{bmatrix} \frac{9 - 9 \gamma \mu + 2 \gamma^2 \mu^2}{9 - 6 \gamma \mu - \gamma^2 \mu^2}  I \\ \frac{3 \gamma \mu - 3\gamma^2 \mu^2}{9 - 6 \gamma \mu - \gamma^2 \mu^2} I \end{bmatrix}
    =
    \begin{bmatrix}  \frac{3 \gamma^2 \mu (1-\gamma \mu) + \eta (3 - \gamma \mu)(3 - 2 \gamma \mu)}{\gamma (9 - 6 \gamma \mu - \gamma^2\mu^2)}  I \\ \frac{3 \gamma^2 \mu ( 1- \gamma \mu)}{\gamma (9 - 6 \gamma \mu - \gamma^2\mu^2)}  I \end{bmatrix}.
  \end{equation}
  Since $\gamma^2 \mu \leq \eta$ and $\gamma \mu \leq 1$, we have
  \begin{equation}
    0 \leq
    \frac{3 \gamma^2 \mu (1-\gamma \mu) + \eta (3 - \gamma \mu)(3 - 2 \gamma \mu)}{\gamma (9 - 6 \gamma \mu - \gamma^2\mu^2)}
    \leq
    \frac{\eta}{\gamma} \cdot \frac{12 - 12 \gamma \mu + 2 \gamma^2 \mu^2}{9 - 6 \gamma \mu - \gamma^2\mu^2}
    \leq
    \frac{4\eta}{3\gamma},
  \end{equation}
  and
  \begin{equation}
    0 \leq \frac{3 \gamma^2 \mu (1-\gamma \mu)}{\gamma (9 - 6 \gamma \mu - \gamma^2\mu^2)}
    \leq \frac{\eta}{\gamma} \cdot \frac{3  (1-\gamma \mu)}{9 - 6 \gamma \mu - \gamma^2\mu^2}
    \leq \frac{\eta}{3\gamma}.
  \end{equation}
  Consequently,
  \begin{equation}
    \left\|     \begin{bmatrix}  \frac{3 \gamma^2 \mu (1-\gamma \mu) + \eta (3 - \gamma \mu)(3 - 2 \gamma \mu)}{\gamma (9 - 6 \gamma \mu - \gamma^2\mu^2)}  I \\ \frac{3 \gamma^2 \mu ( 1- \gamma \mu)}{\gamma (9 - 6 \gamma \mu - \gamma^2\mu^2)}  I \end{bmatrix} \right\|
    \leq
    \sqrt{\left( \frac{4 \eta}{3\gamma} \right)^2 + \left( \frac{\eta}{3\gamma} \right)^2}
    \leq
    \frac{\sqrt{17} \eta }{3 \gamma}.
  \end{equation}
\end{proof}

\subsection{Convergence of \fedacii under \cref{asm1}: Complete version of \cref{fedac:a1}(b)}
\label{sec:fedacii:a1}
\subsubsection{Main theorem and lemma}
In this subsection we establish the convergence of \fedacii under \cref{asm1}. 
We will provide a complete, non-asymptotic version of \cref{fedac:a1}(b) and provide the proof. 
\begin{theorem}[Convergence of \fedacii under \cref{asm1}, complete version of \cref{fedac:a1}(b)]
  \label{fedacii:a1:full}
  Let $F$ be $\mu > 0$ strongly convex, and assume \cref{asm1}, then for 
  \begin{equation}
    \eta = \min \left\{ \frac{1}{L}, \frac{9K}{\mu T^2} \log^2 \left( \euler
     + \min \left\{ \frac{\mu M T \Phi_0}{\sigma^2} + \frac{\mu^3 T^4 \Phi_0}{L^2K^3 \sigma^2}\right\} \right)\right\},
  \end{equation}
  \fedacii yields
  \begin{align}
    \expt[\Phi_T] \leq & \min \left\{ \exp \left( - \frac{\mu T}{3 L} \right), \exp \left( - \frac{\mu^{\frac{1}{2}} T}{3 L^{\frac{1}{2}} K^{\frac{1}{2}}} \right)\right\} \Phi_0 
    \\
    & 
    + \frac{4 \sigma^2}{\mu M T} \log \left( \euler + \frac{\mu MT \Phi_0}{\sigma^2} \right)
    + \frac{8101 L^2 K^3 \sigma^2}{\mu^3 T^4} \log^4\left( \euler + \frac{\mu^3 T^4 \Phi_0}{L^2 K^3 \sigma^2} \right),
  \end{align}
  where $\Phi_t$ is the ``centralized'' potential function defined in \cref{eq:centralied:potential}.
\end{theorem}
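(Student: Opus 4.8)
The plan is to reuse the same four-step framework that established \cref{fedacii:a2:full}, replacing the fourth-order stability argument by a second-order one, since under \cref{asm1} we no longer have the \nth{3}-order smoothness constant $Q$ at our disposal. The crucial observation, already flagged after \cref{fedacii:conv:main}, is that the potential-based perturbed iterate analysis was proved assuming only \cref{asm1}, so it applies verbatim and reduces the task to controlling $\frac{3}{\mu}\max_{t}\expt\|\nabla F(\overline{w_t^{\mathrm{md}}}) - \frac{1}{M}\sum_{m}\nabla F(w_t^{\mathrm{md},m})\|^2$ on top of the usual stochasticity terms $\frac{3\eta^2 L\sigma^2}{2\gamma\mu M} + \frac{\gamma\sigma^2}{2M}$.

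First I would bound the discrepancy overhead by $L$-smoothness alone: by the triangle inequality and \cref{asm1}(b), $\|\nabla F(\overline{w_t^{\mathrm{md}}}) - \frac1M\sum_m \nabla F(w_t^{\mathrm{md},m})\| \le \frac{L}{M}\sum_m\|\overline{w_t^{\mathrm{md}}} - w_t^{\mathrm{md},m}\| \le L\|\Delta_t^{\mathrm{md}}\|$, reducing everything to the \emph{second-order} stability of \fedacii. Here I would recycle the machinery of \cref{sec:fedacii:stab:main}: the one-step recursion \cref{fedacii:stab:1} (which needs only \cref{asm1}) together with the uniform transformed-norm bound \cref{fedacii:stab:bound} gives, by the same computation as the second-order argument for \fedaci in \cref{fedaci:stab:2}, the inequality $\expt[\|\mathcal{X}^{-1}[\Delta_{t+1}^{\mathrm{ag}};\Delta_{t+1}]\|^2\mid\mathcal F_t] \le 2\gamma^2\sigma^2 + \|\mathcal{X}^{-1}[\Delta_t^{\mathrm{ag}};\Delta_t]\|^2(1+\gamma^2\mu/\eta)^2$ (and $\le 2\gamma^2\sigma^2 + \|\cdots\|^2$ when $\gamma=\eta$). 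Telescoping from the last synchronization step, where $\Delta^{\mathrm{ag}}=\Delta=0$, over at most $K$ local steps yields $\expt\|\mathcal{X}^{-1}[\Delta_t^{\mathrm{ag}};\Delta_t]\|^2 \le 2\gamma^2\sigma^2 K(1+\gamma^2\mu/\eta)^{2K}$. Combining with $\|\Delta_t^{\mathrm{md}}\|^2 \le \frac{17\eta^2}{9\gamma^2}\|\mathcal{X}^{-1}[\Delta_t^{\mathrm{ag}};\Delta_t]\|^2$ — which follows from \cref{fedacii:stab:left}, since the vector $[(1-\beta^{-1})I;\beta^{-1}I]$ is exactly the one bounded there — produces a discrepancy overhead bound of the form $\tilde O(L^2\eta^2\sigma^2 K(1+\gamma^2\mu/\eta)^{2K})$, the \fedacii analogue of \cref{fedaci:stab:main}.

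Next, exactly as in \cref{fedacii:a2:general:eta}, I would plug in the trade-off choice $\gamma = \max\{\sqrt{\eta/(\mu K)},\eta\}$, which keeps $(1+\gamma^2\mu/\eta)^{2K}\le \euler^2$ and hence keeps the discrepancy linear in $K$. This produces a ``general $\eta$'' lemma
\[
\expt[\Phi_T] \le \exp\!\Big(-\tfrac13\max\{\eta\mu,\sqrt{\eta\mu/K}\}T\Big)\Phi_0 + \frac{\eta^{1/2}\sigma^2}{\mu^{1/2}MK^{1/2}} + \frac{2\eta^{3/2}LK^{1/2}\sigma^2}{\mu^{1/2}M} + \frac{c\,\eta^2 L^2 K\sigma^2}{\mu},
\]
identical to \cref{fedacii:a2:general:eta} except that the final (discrepancy) term is now $O(\eta^2 L^2 K\sigma^2/\mu)$ rather than $O(\eta^4 Q^2 K^2\sigma^4/\mu)$. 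Finally I would optimize by taking $\eta_0 = \frac{9K}{\mu T^2}\log^2(\cdots)$: the decreasing term splits as $\varphi_\downarrow(1/L)+\varphi_\downarrow(\eta_0)$, where $\varphi_\downarrow(\eta_0)$ is controlled through the log factors, and the increasing term evaluated at $\eta_0$ produces the three contributions $\tilde O(\frac{\sigma^2}{\mu MT})$, $\tilde O(\frac{LK^2\sigma^2}{\mu^2 MT^3})$, and $\tilde O(\frac{L^2 K^3\sigma^2}{\mu^3 T^4})$.

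The one genuinely new bookkeeping step — and the place I expect the only subtlety — is arguing that the middle contribution $\frac{LK^2\sigma^2}{\mu^2 MT^3}$ does not surface in the final statement: it is absorbed by AM--GM into the other two, since $\frac{LK^2}{\mu^2 MT^3} = \sqrt{\tfrac{K}{MT}}\,\sqrt{\tfrac{1}{\mu MT}\cdot\tfrac{L^2K^3}{\mu^3 T^4}} \le \tfrac12\big(\tfrac{1}{\mu MT}+\tfrac{L^2K^3}{\mu^3 T^4}\big)$, using $K\le T$ and $M\ge1$. This is precisely why the $\min$ argument defining $\eta_0$ in \cref{fedacii:a1:full} reduces to the single sum $\frac{\mu MT\Phi_0}{\sigma^2}+\frac{\mu^3 T^4\Phi_0}{L^2K^3\sigma^2}$, in contrast to the two-element $\min$ appearing in the \cref{asm2} case. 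The remainder is purely tracking constants and $\polylog$ factors, mirroring the proof of \cref{fedacii:a2:full} and presenting no conceptual obstacle.
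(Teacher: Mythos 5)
Your proposal is correct and follows essentially the same route as the paper: reuse \cref{fedacii:conv:main} (which needs only \cref{asm1}), replace the fourth-order stability argument by second-order stability via \cref{fedacii:stab:1}, \cref{fedacii:stab:bound}, and \cref{fedacii:stab:left}, bound the discrepancy by $L$-smoothness, and then optimize over $\gamma$ and $\eta$ exactly as in \cref{fedacii:a2:full}. The only (immaterial) difference is bookkeeping: the paper absorbs the cross term $\eta^{3/2}LK^{1/2}\sigma^2/(\mu^{1/2}M)$ already in the general-$\eta$ lemma via Young's inequality with exponents $(\tfrac13,\tfrac23)$, whereas you absorb the corresponding $LK^2\sigma^2/(\mu^2 MT^3)$ contribution after substituting $\eta_0$ via a $(\tfrac12,\tfrac12)$ AM--GM, and your identity for that step checks out.
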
 
\begin{remark}
  The simplified version \cref{fedac:a1}(b) in the main body can be obtained by replacing $K$ with $T/R$ and upper bound $\Phi_0$ by $LD_0^2$.
\end{remark}
Note that most of the results established towards \cref{fedacii:a2:full} can be recycled as long as it does not assume \cref{asm2}. 
In particular, we will reuse the perturbed iterate analysis \cref{fedacii:conv:main}, and provide an alternative version of discrepancy overhead bounds, as shown in \cref{fedacii:a1:stab:main}. 
The only difference is that now we use $L$-smoothness to bound the discrepancy term. 
\begin{lemma}[Discrepancy overhead bounds]
  \label{fedacii:a1:stab:main}
  Let $F$ be $\mu > 0$-strongly convex, and assume \cref{asm1}, then for $\alpha = \frac{3}{2 \gamma \mu} - \frac{1}{2}$, $\beta =\frac{2 \alpha^2 - 1}{\alpha - 1}$, $\gamma \in [\eta, \sqrt{ \frac{\eta}{\mu}}]$, $\eta \in (0, \frac{1}{L}]$, \fedac satisfies (for all $t$)
  \begin{align}
     & \expt \left[ \left\|  \nabla F(\overline{w_t^{\mathrm{md}}}) -  \frac{1}{M} \sum_{m=1}^M \nabla F (w_t^{\mathrm{md}, m})  \right\|^2 \right]
    \leq  \begin{cases}
      4 \eta^2 L^2 K \sigma^2 \left(1 + \frac{\gamma^2\mu}{\eta}\right)^{2K}
       & \text{if~} \gamma \in \left(\eta, \sqrt{\frac{\eta}{\mu}} \right],
      \\
      4 \eta^2 L^2 K \sigma^2
       &
      \text{if~} \gamma = \eta.
    \end{cases}
  \end{align}
\end{lemma}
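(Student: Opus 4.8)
The plan is to mirror the \nth{4}-order stability argument behind \cref{fedacii:stab:main}, but replace the use of \nth{3}-order smoothness (unavailable under \cref{asm1}) by plain $L$-smoothness, which lowers the order of the stability analysis from four to two. I would bound the discrepancy term through a chain analogous to \cref{fedacii:stab:3,fedacii:stab:2,fedacii:stab:main}, reusing verbatim the one-step recursion \cref{fedacii:stab:1} and the uniform norm bound \cref{fedacii:stab:bound}; both are stated under \cref{asm1} alone and hence remain valid here, as does the purely algebraic \cref{fedacii:stab:left}.

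First I would apply Jensen's inequality followed by $L$-smoothness (\cref{asm1}(b)) to get
\[
\left\| \nabla F(\overline{w_t^{\mathrm{md}}}) - \frac{1}{M}\sum_{m=1}^M \nabla F(w_t^{\mathrm{md},m}) \right\|^2 \le \frac{1}{M}\sum_{m=1}^M \left\| \nabla F(\overline{w_t^{\mathrm{md}}}) - \nabla F(w_t^{\mathrm{md},m}) \right\|^2 \le L^2 \,\| \Delta_t^{\mathrm{md}} \|^2,
\]
where the last step uses convexity of $\|\cdot\|^2$ and worker symmetry exactly as in the proof of \cref{fedacii:stab:3:1}, but now at the second power rather than the fourth. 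This is the single place where $Q$ is traded for $L$: the \cref{asm2} argument subtracts the common Hessian term and invokes $Q$ to leave a fourth-power displacement, whereas here I keep the first-order Lipschitz bound directly. Next, writing $\Delta_t^{\mathrm{md}} = (1-\beta^{-1})\Delta_t^{\mathrm{ag}} + \beta^{-1}\Delta_t$ and inserting $\mathcal{X}\mathcal{X}^{-1}$, I reuse \cref{fedacii:stab:left} (which gives $\|\mathcal{X}^\intercal[(1-\beta^{-1})I;\beta^{-1}I]\| \le \tfrac{\sqrt{17}\,\eta}{3\gamma}$) and sub-multiplicativity to obtain $\|\Delta_t^{\mathrm{md}}\|^2 \le \tfrac{17\eta^2}{9\gamma^2}\,\big\|\mathcal{X}(\gamma,\eta)^{-1}[\Delta_t^{\mathrm{ag}};\Delta_t]\big\|^2$, and hence a bound on the discrepancy term of $\tfrac{17 L^2 \eta^2}{9\gamma^2}\big\|\mathcal{X}(\gamma,\eta)^{-1}[\Delta_t^{\mathrm{ag}};\Delta_t]\big\|^2$.

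Then I would establish a \emph{second}-moment analog of \cref{fedacii:stab:2}. Setting $\tilde\Delta_t := \mathcal{X}(\gamma,\eta)^{-1}[\Delta_t^{\mathrm{ag}};\Delta_t]$ and using \cref{fedacii:stab:1} in the transformed form $\tilde\Delta_{t+1} = \mathcal{B}_t\tilde\Delta_t - [\gamma I;0]\Delta_t^\varepsilon$, expanding the square and taking $\expt[\,\cdot\mid\mathcal{F}_t]$ kills the cross term since $\expt[\Delta_t^\varepsilon\mid\mathcal{F}_t]=0$, leaving $\expt[\|\tilde\Delta_{t+1}\|^2\mid\mathcal{F}_t] \le \|\mathcal{B}_t\|^2\|\tilde\Delta_t\|^2 + 2\gamma^2\sigma^2$ by bounded variance (\cref{asm1}(c)). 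This step is strictly cheaper than its fourth-moment counterpart, requiring no Cauchy--Schwarz or AM--GM juggling. Bounding $\|\mathcal{B}_t\|$ via \cref{fedacii:stab:bound} and telescoping from the latest synchronization $t_0$ (where $\Delta_{t_0}^{\mathrm{ag}}=\Delta_{t_0}=0$), using $t-t_0\le K$, yields $\expt\|\tilde\Delta_t\|^2 \le 2\gamma^2\sigma^2 K (1+\gamma^2\mu/\eta)^{2K}$ when $\gamma\in(\eta,\sqrt{\eta/\mu}]$ and $\le 2\gamma^2\sigma^2 K$ when $\gamma=\eta$.

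Substituting this into the previous bound, the $\gamma^2$ factors cancel to give $\tfrac{17L^2\eta^2}{9\gamma^2}\cdot 2\gamma^2\sigma^2 K = \tfrac{34}{9}\eta^2 L^2 K\sigma^2 \le 4\eta^2 L^2 K\sigma^2$, carrying the exponential factor $(1+\gamma^2\mu/\eta)^{2K}$ along, which is precisely the claimed bound. I do not anticipate a genuine obstacle, since the whole argument is a lower-order shadow of the \cref{asm2} proof and the three structural ingredients are reused unchanged; the only points needing care are (i) confirming that \cref{fedacii:stab:1,fedacii:stab:bound,fedacii:stab:left} never secretly invoke \cref{asm2} --- they are respectively a mean-value-theorem identity, a matrix-norm estimate, and an algebraic inequality --- and (ii) the constant bookkeeping, where $\tfrac{34}{9}<4$ closes the bound.
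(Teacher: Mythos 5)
Your proposal is correct and matches the paper's own proof essentially step for step: the paper likewise reuses \cref{fedacii:stab:1,fedacii:stab:bound,fedacii:stab:left} unchanged, replaces the fourth-moment recursion by the second-moment analog (\cref{fedacii:a1:stab:2}), bounds the discrepancy via $L$-smoothness as $\frac{17\eta^2 L^2}{9\gamma^2}\|\mathcal{X}^{-1}[\Delta_t^{\mathrm{ag}};\Delta_t]\|^2$ (\cref{fedacii:a1:stab:3}), and closes with the same telescoping and the same $\frac{34}{9}<4$ constant estimate.
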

The proof of \cref{fedacii:a1:stab:main} is deferred to \cref{sec:fedacii:a1:stab:main}.

Now plug in the choice of $\gamma = \max \left\{ \sqrt{\frac{\eta}{\mu K}}, \eta\right\}$ to \cref{fedacii:conv:main,fedacii:a1:stab:main}, which leads to the following lemma.
\begin{lemma}[Convergence of \fedacii for general $\eta$ under \cref{asm1}]
  \label{fedacii:a1:general:eta}
    Let $F$ be $\mu > 0$-strongly convex, and assume \cref{asm1}, then for any $\eta \in (0, \frac{1}{L}]$, \fedacii yields
    \begin{equation}
      \expt[\Phi_T]
      \leq \exp \left(  - \frac{1}{3} \max \left\{ \eta \mu, \sqrt{\frac{\eta \mu}{K}}\right\}T \right) \Phi_0
      + \frac{\eta^{\frac{1}{2}} \sigma^2}{\mu^{\frac{1}{2}} M K^{\frac{1}{2}} }
      + \frac{100 \eta^2 L^2 K \sigma^2}{\mu}.
      \label{eq:fedacii:a1:general:eta}
    \end{equation}
\end{lemma}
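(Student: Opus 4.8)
The plan is to follow the proof of \cref{fedacii:a2:general:eta} essentially verbatim, substituting the $L$-smoothness discrepancy bound (\cref{fedacii:a1:stab:main}) for the third-order-smoothness bound (\cref{fedacii:stab:main}). First I would check that the prescribed coupling $\gamma = \max\{\eta, \sqrt{\eta/(\mu K)}\}$ lies in the admissible window $[\eta, \sqrt{\eta/\mu}]$, so that both \cref{fedacii:conv:main} and \cref{fedacii:a1:stab:main} apply; this is immediate since $\gamma\mu\le 1$. Crucially, \cref{fedacii:conv:main} was established under \cref{asm1} alone and can therefore be recycled unchanged. Substituting this $\gamma$ yields the decay factor $\exp(-\frac{1}{3}\gamma\mu T)=\exp(-\frac{1}{3}\max\{\eta\mu,\sqrt{\eta\mu/K}\}T)$ acting on $\Phi_0$, the two variance contributions $\frac{3\eta^2 L\sigma^2}{2\gamma\mu M}$ and $\frac{\gamma\sigma^2}{2M}$, and the residual discrepancy overhead $\frac{3}{\mu}\max_t\expt[\|\nabla F(\overline{w_t^{\mathrm{md}}})-\frac{1}{M}\sum_m\nabla F(w_t^{\mathrm{md},m})\|^2]$.

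Second, I would invoke \cref{fedacii:a1:stab:main}, whose bound $4\eta^2 L^2 K\sigma^2(1+\gamma^2\mu/\eta)^{2K}$ now plays the role formerly taken by the fourth-order bound. In the branch $\gamma=\sqrt{\eta/(\mu K)}$ one has $\gamma^2\mu/\eta=1/K$, hence $(1+1/K)^{2K}\le\euler^2$, while in the branch $\gamma=\eta$ the prefactor is $1$; thus in either case $\frac{3}{\mu}$ times the stability bound is at most $\frac{12\euler^2}{\mu}\eta^2 L^2 K\sigma^2$, and $12\euler^2<89<100$ leaves slack below the target constant.

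The only genuinely delicate step --- and the place where the argument departs from the \cref{asm2} case --- is folding the two variance contributions into the clean two-term envelope $\frac{\eta^{1/2}\sigma^2}{\mu^{1/2}MK^{1/2}}+\frac{100\eta^2 L^2 K\sigma^2}{\mu}$, rather than carrying a separate $\frac{\eta^{3/2}LK^{1/2}\sigma^2}{\mu^{1/2}M}$ term as in \cref{fedacii:a2:general:eta}. Using $\frac{1}{\gamma}\le\sqrt{\mu K/\eta}$ bounds the first variance term by $\frac{3\eta^{3/2}LK^{1/2}\sigma^2}{2\mu^{1/2}M}$, and this cross term is then dispatched by a regime-aware argument. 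In the branch $\gamma=\sqrt{\eta/(\mu K)}$ the constraint $\eta\mu K\le 1$ lets AM-GM split it as $\frac{\eta^{3/2}LK^{1/2}\sigma^2}{\mu^{1/2}M}\le\frac{1}{2}\frac{\eta^{1/2}\sigma^2}{\mu^{1/2}MK^{1/2}}+\frac{1}{2}\frac{\eta^{5/2}L^2K^{3/2}\sigma^2}{\mu^{1/2}M}$, the second factor being absorbed into $\frac{\eta^2 L^2 K\sigma^2}{\mu}$ via $(\eta\mu K)^{1/2}/M\le 1$; in the branch $\gamma=\eta$ the constraints $\eta\mu K\ge 1$ and $\eta L\le 1$ give $\eta L^2 K\ge\mu$, so the cross term and likewise $\frac{\gamma\sigma^2}{2M}=\frac{\eta\sigma^2}{2M}$ are directly dominated by $\frac{\eta^2 L^2 K\sigma^2}{\mu}$. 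Collecting constants, $12\euler^2$ plus these small AM-GM remainders stays below $100$, closing the bound. I expect this constant-chasing case analysis --- matching the exponents of the cross term to the two regimes of $\gamma$ --- to be the main obstacle; every other step is a transcription of the third-order-smooth argument.
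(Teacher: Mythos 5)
Your proposal follows the paper's proof of \cref{fedacii:a1:general:eta} essentially step for step: recycle \cref{fedacii:conv:main} (valid under \cref{asm1} alone), substitute \cref{fedacii:a1:stab:main} for the fourth-order bound, use $(1+1/K)^{2K}\leq \euler^2$ to get the $12\euler^2\mu^{-1}\eta^2L^2K\sigma^2$ discrepancy contribution, and then absorb the cross term $\frac{\eta^{3/2}LK^{1/2}\sigma^2}{\mu^{1/2}M}$ into the two-term envelope. The one place your write-up does not quite close is the branch $\gamma=\sqrt{\eta/(\mu K)}$: the cross term carries a prefactor $\tfrac{3}{2}$, so your symmetric AM-GM split contributes $\tfrac{3}{4}\cdot\frac{\eta^{1/2}\sigma^2}{\mu^{1/2}MK^{1/2}}$, which together with the $\tfrac{1}{2}$ already coming from the $\max$ term gives coefficient $\tfrac{5}{4}>1$ on the first envelope term, exceeding the coefficient $1$ asserted in \eqref{eq:fedacii:a1:general:eta}. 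This is repaired by an unequal-weight split, e.g.\ $\tfrac{3}{2}\sqrt{xy}\leq\tfrac{1}{8}x+\tfrac{9}{2}y$, after which the second-term coefficient is still $12\euler^2+\tfrac{9}{2}<100$; the paper does exactly this reweighting in one shot (and without splitting into regimes) via Young's inequality with exponents $\tfrac{1}{3}$ and $\tfrac{2}{3}$, keeping the first-term contribution at $\tfrac{1}{4}$. Everything else in your argument, including the regime-by-regime domination in the $\gamma=\eta$ branch, is correct.
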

\begin{proof}[Proof of \cref{fedacii:a1:general:eta}]
  Applying \cref{fedacii:conv:main} yields
  \begin{align}
    \expt[\Phi_T] \leq & \exp\left( - \frac{1}{3} \max \left\{ \eta \mu, \sqrt{\frac{\eta \mu}{K}}\right\}T \right) \Phi_0
    +
    \min\left\{ \frac{3\eta L \sigma^2}{2 \mu M}, \frac{3 \eta^{\frac{3}{2}} L K^{\frac{1}{2}} \sigma^2}{2 \mu^{\frac{1}{2}} M} \right\}
    \\
                       & + \max\left\{ \frac{\eta \sigma^2}{2M}, \frac{\eta^{\frac{1}{2}} \sigma^2}{2 \mu^{\frac{1}{2}} M K^{\frac{1}{2}} }          \right\}
    + \frac{3}{\mu} \max_{0 \leq t < T}  \expt \left[ \left\|  \nabla F(\overline{w_t^{\mathrm{md}}}) -  \frac{1}{M} \sum_{m=1}^M \nabla F (w_t^{\mathrm{md}, m})  \right\|^2 \right].
    \label{eq:fedaci:proof:1:mod}
  \end{align}
  Applying \cref{fedacii:a1:stab:main} yields (for all $t$)
  \begin{align}
         & \frac{3}{\mu} \expt \left[ \left\|  \nabla F(\overline{w_t^{\mathrm{md}}}) -  \frac{1}{M} \sum_{m=1}^M \nabla F (w_t^{\mathrm{md}, m})  \right\|^2 \right]
    \leq
    \begin{cases}
      12 \mu^{-1} \eta^2 L^2 K \sigma^2  \left(1 + \frac{1}{K}\right)^{2K}
       & \text{if~} \gamma = \sqrt{\frac{\eta}{\mu K}},
      \\
      12 \mu^{-1} \eta^2 L^2 K \sigma^2
       &
      \text{if~} \gamma = \eta
    \end{cases}
    \\
    \leq & 12 \euler^{2} \mu^{-1} \eta^2 L^2 K \sigma^2.
    \label{eq:fedaci:proof:2:mod}
  \end{align}
  Note that
  \begin{align}
         & \min\left\{ \frac{3\eta L \sigma^2}{2 \mu M}, \frac{3 \eta^{\frac{3}{2}} L K^{\frac{1}{2}} \sigma^2}{2 \mu^{\frac{1}{2}} M} \right\}
    + \max\left\{ \frac{\eta \sigma^2}{2M}, \frac{\eta^{\frac{1}{2}} \sigma^2}{2 \mu^{\frac{1}{2}} M K^{\frac{1}{2}} }          \right\}
    \\
    \leq & \frac{3 \eta^{\frac{3}{2}} L K^{\frac{1}{2}} \sigma^2}{2 \mu^{\frac{1}{2}} M} + \frac{\eta \sigma^2}{2M} + \frac{\eta^{\frac{1}{2}} \sigma^2}{2 \mu^{\frac{1}{2}} M K^{\frac{1}{2}} }
    \\
    \leq &
    \frac{7 \eta^{\frac{3}{2}} L K^{\frac{1}{2}} \sigma^2}{4 \mu^{\frac{1}{2}} M} + \frac{3 \eta^{\frac{1}{2}} \sigma^2}{4 \mu^{\frac{1}{2}} M K^{\frac{1}{2}} }.
    \tag{by AM-GM inequality, and $\mu \leq L$}
  \end{align}
  By Young's inequality,
  \begin{align}
    \frac{7 \eta^{\frac{3}{2}} L K^{\frac{1}{2}} \sigma^2}{4 \mu^{\frac{1}{2}} M} 
    \leq & \left( \frac{3}{4} \frac{\eta^{\frac{1}{2}} \sigma^2}{\mu^{\frac{1}{2}} M K^{\frac{1}{2}} } \right)^{\frac{1}{3}}
    \left( 3 \cdot \frac{\eta^2 L^\frac{3}{2} K \sigma^2}{\mu^{\frac{1}{2}} M} \right)^{\frac{2}{3}}
    \tag{since $\frac{7}{4} \leq \left( \frac{3}{4} \right)^{\frac{1}{3}} (3)^{\frac{2}{3}} $ }
    \\
    \leq & \frac{1}{4} \cdot \frac{\eta^{\frac{1}{2}} \sigma^2}{\mu^{\frac{1}{2}} M K^{\frac{1}{2}} } + 2 \cdot \frac{\eta^2 L^\frac{3}{2} K \sigma^2}{\mu^{\frac{1}{2}} M}
    \tag{by Young's inequality}
    \\
    \leq & \frac{\eta^{\frac{1}{2}} \sigma^2}{4 \mu^{\frac{1}{2}} M K^{\frac{1}{2}} } + \frac{2\eta^2 L^2 K \sigma^2}{\mu}.
    \tag{since $L \geq \mu$ and $M \geq 1$}
  \end{align}
  Combining the above inequalities gives
  \begin{equation}
    \expt[\Phi_T]
    \leq \exp \left(  - \frac{1}{3} \max \left\{ \eta \mu, \sqrt{\frac{\eta \mu}{K}}\right\}T \right) \Phi_0
    + \frac{\eta^{\frac{1}{2}} \sigma^2}{\mu^{\frac{1}{2}} M K^{\frac{1}{2}} }
    + \frac{(12 \euler^2 + 2) \eta^2 L^2 K \sigma^2}{\mu}.
  \end{equation}
  The proof then follows by the estimate $12 \euler^2 + 2 < 100$.
\end{proof}

\cref{fedacii:a1:full} then follows by plugging in the appropriate $\eta$ to \cref{fedacii:a1:general:eta}.
\begin{proof}[Proof of \cref{fedacii:a1:full}]
  To simplify the notation, we denote the decreasing term in \cref{eq:fedacii:a1:general:eta} in \cref{fedacii:a1:general:eta} as $\varphi_{\downarrow}(\eta)$ and the increasing term as $\varphi_{\uparrow}(\eta)$, namely
  \begin{align}
    \varphi_{\downarrow}(\eta) := \exp \left(  - \frac{1}{3} \max \left\{ \eta \mu, \sqrt{\frac{\eta \mu}{K}}\right\}T \right) \Phi_0,
    \quad
    \varphi_{\uparrow}(\eta) := \frac{\eta^{\frac{1}{2}} \sigma^2}{\mu^{\frac{1}{2}} M K^{\frac{1}{2}} }
    + \frac{100 \eta^2 L^2 K \sigma^2}{\mu}.
  \end{align}
  Let
  \begin{equation}
    \eta_0 := \frac{9K}{\mu T^2} \log^2 \left( \euler
     + \min \left\{ \frac{\mu M T \Phi_0}{\sigma^2} + \frac{\mu^3 T^4 \Phi_0}{L^2K^3 \sigma^2}\right\} \right)
     ,
     \quad
     \text{then }
     \eta = \min \left\{ \frac{1}{L}, \eta_0 \right\}.
  \end{equation}
  Therefore
  \begin{equation}
    \varphi_{\downarrow}(\eta)
    \leq
    \min \left\{ \exp \left( - \frac{\mu T}{3 L} \right), \exp \left( - \frac{\mu^{\frac{1}{2}} T}{3 L^{\frac{1}{2}} K^{\frac{1}{2}}} \right)\right\} \Phi_0
    +
    \frac{\sigma^2}{\mu MT} + \frac{L^2 K^3 \sigma^2}{\mu^3 T^4}.
  \end{equation}
  and
  \begin{align}
    \varphi_{\uparrow}(\eta)  \leq \varphi_{\uparrow}(\eta_0) 
    \leq
    & 
    \frac{3 \sigma^2}{\mu M T} \log \left( \euler + \frac{\mu MT \Phi_0}{\sigma^2} \right)
    + \frac{8100 L^2 K^3 \sigma^2}{\mu^3 T^4} \log^4\left( \euler + \frac{\mu^3 T^4 \Phi_0}{L^2 K^3 \sigma^2} \right).
    \label{eq:fedacii:a1:3}
  \end{align}
  Consequently,
  \begin{align}
    \expt[\Phi_T] \leq & \varphi_{\downarrow} \left(\frac{1}{L} \right) + \varphi_{\downarrow}(\eta_0) + \varphi_{\uparrow}(\eta_0) \leq \min \left\{ \exp \left( - \frac{\mu T}{3 L} \right), \exp \left( - \frac{\mu^{\frac{1}{2}} T}{3 L^{\frac{1}{2}} K^{\frac{1}{2}}} \right)\right\} \Phi_0 
    \\
    & 
    + \frac{4 \sigma^2}{\mu M T} \log \left( \euler + \frac{\mu MT \Phi_0}{\sigma^2} \right)
    + \frac{8101 L^2 K^3 \sigma^2}{\mu^3 T^4} \log^4\left( \euler + \frac{\mu^3 T^4 \Phi_0}{L^2 K^3 \sigma^2} \right).
  \end{align}
\end{proof}

\subsubsection{Proof of \cref{fedacii:a1:stab:main}}
\label{sec:fedacii:a1:stab:main}
We first introduce the supporting propositions for \cref{fedacii:a1:stab:main}. We omit most of the proof details since the analysis is largely shared.

The following proposition is parallel to \cref{fedacii:stab:2}, where the difference is that the present proposition analyzes the \nth{2}-order stability instead of \nth{4}-order.
\begin{proposition}
  \label{fedacii:a1:stab:2}
  In the same setting of \cref{fedacii:a1:stab:main}, the following inequality holds (for all possible $t$) 
  
    \begin{equation}
        \expt \left[ \left\| \mathcal{X}(\gamma, \eta)^{-1} \begin{bmatrix}
          \Delta_{t+1}^{\mathrm{ag}}
          \\
          \Delta_{t+1}
        \end{bmatrix}  \right\|^2 \middle| \mathcal{F}_t \right]
      \leq
      2 \gamma^2 \sigma^2 +
      \left\| \mathcal{X}(\gamma, \eta)^{-1} \begin{bmatrix}
        \Delta_{t}^{\mathrm{ag}}
        \\
        \Delta_{t}
      \end{bmatrix}  \right\|^2
      \cdot
      \begin{cases}
        \left(1 + \frac{\gamma^2 \mu}{\eta} \right)^2 & \text{if~} \gamma \in \left(\eta, \sqrt{\frac{\eta}{\mu}}\right], \\
        1                                              & \text{if~} \gamma =  \eta,
      \end{cases}
    \end{equation}
    
    where $\mathcal{X}$ is the matrix-valued function defined in \cref{eq:fedacii:X:def}.
\end{proposition}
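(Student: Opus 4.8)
The plan is to mirror the proof of \cref{fedaci:stab:2} almost verbatim, substituting the \fedacii versions of the one-step recursion and the transformed uniform norm bound. The entire computation is second-moment in nature, so it rides on \cref{asm1} alone --- crucially, both ingredients I need (\cref{fedacii:stab:1} for the recursion and \cref{fedacii:stab:bound} for the norm bound) are already established under \cref{asm1}, so no \nth{3}-order smoothness or \nth{4}-moment control is invoked. First I would dispose of the synchronized case: if $t+1$ is a synchronized step then $\Delta_{t+1}^{\mathrm{ag}} = \Delta_{t+1} = 0$, so the left-hand side is zero and the bound holds trivially.

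For the non-synchronized case, I would invoke \cref{fedacii:stab:1} to write $\begin{bmatrix} \Delta_{t+1}^{\mathrm{ag}} \\ \Delta_{t+1} \end{bmatrix} = \mathcal{A}(\mu, \gamma, \eta, H_t) \begin{bmatrix} \Delta_{t}^{\mathrm{ag}} \\ \Delta_{t} \end{bmatrix} - \begin{bmatrix} \eta I \\ \gamma I \end{bmatrix} \Delta_t^{\varepsilon}$, then left-multiply by $\mathcal{X}(\gamma, \eta)^{-1}$. Since $\mathcal{X}$ is the same matrix-valued function as in \fedaci, the same reduction $\mathcal{X}(\gamma,\eta)^{-1} \begin{bmatrix} \eta I \\ \gamma I \end{bmatrix} = \begin{bmatrix} \gamma I \\ 0 \end{bmatrix}$ applies, yielding $\mathcal{X}^{-1} \begin{bmatrix} \Delta_{t+1}^{\mathrm{ag}} \\ \Delta_{t+1} \end{bmatrix} = (\mathcal{X}^{-1} \mathcal{A} \mathcal{X}) \left(\mathcal{X}^{-1} \begin{bmatrix} \Delta_{t}^{\mathrm{ag}} \\ \Delta_{t} \end{bmatrix}\right) - \begin{bmatrix} \gamma I \\ 0 \end{bmatrix} \Delta_t^{\varepsilon}$.

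Next I would take $\expt[\cdot | \mathcal{F}_t]$ of the squared norm. The noise $\Delta_t^{\varepsilon} = \varepsilon_t^{m_1} - \varepsilon_t^{m_2}$ is a difference of two independent, zero-mean oracle errors, so the cross term vanishes and the expectation splits into a deterministic-growth term and a noise-injection term. Using sub-multiplicativity, the first is bounded by $\|\mathcal{X}^{-1} \mathcal{A} \mathcal{X}\|^2 \cdot \|\mathcal{X}^{-1} \begin{bmatrix} \Delta_{t}^{\mathrm{ag}} \\ \Delta_{t} \end{bmatrix}\|^2$, and applying \cref{fedacii:stab:bound} replaces $\|\mathcal{X}^{-1}\mathcal{A}\mathcal{X}\|^2$ by $(1 + \gamma^2\mu/\eta)^2$ (or $1$ when $\gamma=\eta$). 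For the second, $\expt[\|\begin{bmatrix} \gamma I \\ 0 \end{bmatrix} \Delta_t^{\varepsilon}\|^2 | \mathcal{F}_t] = \gamma^2 \expt[\|\Delta_t^{\varepsilon}\|^2 | \mathcal{F}_t] \leq 2\gamma^2 \sigma^2$, where the factor $2$ comes from independence across the two workers and the $\sigma^2$-bounded variance in \cref{asm1}(c). Combining the two bounds gives exactly the claimed inequality.

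I do not anticipate a genuine obstacle here, since all the structural work is already packaged into \cref{fedacii:stab:1,fedacii:stab:bound}; the proposition is simply the \nth{2}-moment (\cref{asm1}) counterpart of the \nth{4}-moment \cref{fedacii:stab:2}. The only point requiring care is the expectation bookkeeping --- ensuring the cross term vanishes by $\expt[\Delta_t^{\varepsilon}|\mathcal{F}_t]=0$ and that I apply the \emph{second}-moment variance bound rather than the fourth-moment bound, which is what distinguishes this argument from that of \cref{fedacii:stab:2}.
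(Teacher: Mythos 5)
Your proof is correct and is essentially the paper's argument: the paper's own proof of this proposition simply says to apply the uniform norm bound \cref{fedacii:stab:bound} and repeat the computation from \cref{fedaci:stab:2}, which is exactly the recursion-plus-transformation-plus-independence calculation you lay out, including the $2\gamma^2\sigma^2$ noise term and the identity $\mathcal{X}(\gamma,\eta)^{-1}\bigl[\begin{smallmatrix}\eta I\\ \gamma I\end{smallmatrix}\bigr]=\bigl[\begin{smallmatrix}\gamma I\\ 0\end{smallmatrix}\bigr]$. No gaps.
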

\begin{proof}[Proof of \cref{fedacii:a1:stab:2}]
  Apply the uniform norm bound \cref{fedacii:stab:bound}, and the rest of the analysis is the same as \cref{fedaci:stab:2}.
\end{proof}

The following proposition is parallel to \cref{fedacii:stab:3}, where the difference is that the present proposition uses $L$-(\nth{2}-order)-smoothness to bound the LHS quantity.
\begin{proposition} 
  \label{fedacii:a1:stab:3}
  In the same setting of \cref{fedacii:a1:stab:main}, the following inequality holds (for all possible $t$)
  \begin{equation}
    \left\|  \nabla F(\overline{w_t^{\mathrm{md}}}) -  \frac{1}{M} \sum_{m=1}^M \nabla F (w_t^{\mathrm{md}, m})  \right\|^2 
    \leq
    \frac{17 \eta^2 L^2}{9 \gamma^2} \left\| \mathcal{X}(\gamma, \eta)^{-1} \begin{bmatrix}
      \Delta_{t}^{\mathrm{ag}}
      \\
      \Delta_{t}
    \end{bmatrix}  \right\|^2,
  \end{equation}
  where $\mathcal{X}$ is the matrix-valued function defined in \cref{eq:fedacii:X:def}.
\end{proposition}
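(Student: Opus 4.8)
The plan is to establish \cref{fedacii:a1:stab:3} as the $L$-smoothness counterpart of \cref{fedacii:stab:3}: where the latter invokes \nth{3}-order-smoothness (helper \cref{helper:3rd:Lip}) to obtain a fourth-power dependence on the worker discrepancy, here I would use only $L$-smoothness (\cref{asm1}(b)), which produces a second-power dependence and hence a factor $L^2$ in place of the $Q^2/4$ factor. The argument splits into the same two stages as the proof of \cref{fedacii:stab:3}: first reduce the gradient-average discrepancy to the squared difference norm $\|\Delta_t^{\mathrm{md}}\|^2$, then transfer this to $\|\mathcal{X}(\gamma,\eta)^{-1}[\Delta_t^{\mathrm{ag}};\Delta_t]\|^2$ through the same transformation $\mathcal{X}$ defined in \cref{eq:fedacii:X:def}.

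For the first stage I would write $\nabla F(\overline{w_t^{\mathrm{md}}}) - \frac{1}{M}\sum_m \nabla F(w_t^{\mathrm{md},m}) = \frac{1}{M}\sum_m\big(\nabla F(\overline{w_t^{\mathrm{md}}}) - \nabla F(w_t^{\mathrm{md},m})\big)$, apply Jensen's inequality (convexity of $\|\cdot\|^2$) to pull the norm inside the average, and then bound each summand by $L$-smoothness, giving the bound $L^2\cdot\frac{1}{M}\sum_m\|\overline{w_t^{\mathrm{md}}} - w_t^{\mathrm{md},m}\|^2$. A second use of convexity of $\|\cdot\|^2$, exactly as in the proof of \cref{fedacii:stab:3:1}, bounds the resulting worker variance by $\|\Delta_t^{\mathrm{md}}\|^2$.

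For the second stage I would reuse the computation already done for \cref{fedacii:stab:left}. By the definition of the ``md'' sequence, $\Delta_t^{\mathrm{md}} = \big[(1-\beta^{-1})I;\,\beta^{-1}I\big]^\intercal[\Delta_t^{\mathrm{ag}};\Delta_t]$; inserting $\mathcal{X}(\gamma,\eta)\mathcal{X}(\gamma,\eta)^{-1}$ and using sub-multiplicativity of the operator norm yields $\|\Delta_t^{\mathrm{md}}\|^2 \leq \big\|\mathcal{X}(\gamma,\eta)^\intercal[(1-\beta^{-1})I;\beta^{-1}I]\big\|^2\,\big\|\mathcal{X}(\gamma,\eta)^{-1}[\Delta_t^{\mathrm{ag}};\Delta_t]\big\|^2$. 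Since $\alpha$ and $\beta$ are chosen exactly as in \fedacii, the ``md'' coefficient vector coincides with the one already bounded in \cref{fedacii:stab:left}, so $\|\mathcal{X}(\gamma,\eta)^\intercal[\,\cdots]\| \leq \sqrt{17}\,\eta/(3\gamma)$, whose square is $17\eta^2/(9\gamma^2)$. Multiplying the two stages gives precisely $L^2\cdot\frac{17\eta^2}{9\gamma^2} = \frac{17\eta^2 L^2}{9\gamma^2}$, the advertised constant.

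I expect no serious obstacle, since this version avoids the \nth{3}-order Taylor cancellation of the \cref{asm2} analysis. The only points needing a little care are (i) justifying the worker-variance step $\frac{1}{M}\sum_m\|\overline{w_t^{\mathrm{md}}}-w_t^{\mathrm{md},m}\|^2 \leq \|\Delta_t^{\mathrm{md}}\|^2$, which I would handle exactly as in \cref{fedacii:stab:3:1} by exploiting that the eventual per-pair bound is uniform over the choice of $m_1,m_2$, and (ii) bookkeeping that the $L^2$ from $L$-smoothness combines with the squared transformation bound $17\eta^2/(9\gamma^2)$ to give a quadratic (rather than quartic) dependence on $\|\mathcal{X}(\gamma,\eta)^{-1}[\Delta_t^{\mathrm{ag}};\Delta_t]\|$, which is what subsequently feeds into the \nth{2}-order stability telescoping of \cref{fedacii:a1:stab:2} in the proof of \cref{fedacii:a1:stab:main}.
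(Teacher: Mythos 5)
Your proposal is correct and follows essentially the same two-step route as the paper's proof: bound the gradient-average discrepancy by $L^2\|\Delta_t^{\mathrm{md}}\|^2$ via $L$-smoothness and convexity of $\|\cdot\|^2$, then express $\Delta_t^{\mathrm{md}}$ through the ``md'' coefficient vector and reuse the $\sqrt{17}\eta/(3\gamma)$ bound of \cref{fedacii:stab:left} via sub-multiplicativity. The constant bookkeeping and the handling of the worker-variance step both match the paper.
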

\begin{proof}[Proof of \cref{fedacii:a1:stab:3}]
  By $L$-smoothness (\cref{asm1}(b)), 
  \begin{equation}
    \left\|  \nabla F(\overline{w_t^{\mathrm{md}}}) -  \frac{1}{M} \sum_{m=1}^M \nabla F (w_t^{\mathrm{md}, m})  \right\|^2 
    \leq
    L^2 \|\Delta_t^{\mathrm{md}}\|^2.
  \end{equation}
  By definition of ``md'', sub-multiplicativity, and \cref{fedacii:stab:left},
  \begin{equation}
    \|\Delta_t^{\mathrm{md}}\|^2
    =
    \left\|
    \mathcal{X}(\gamma, \eta)^{\intercal}
    \begin{bmatrix} \frac{9 - 9 \gamma \mu + 2 \gamma^2 \mu^2}{9 - 6 \gamma \mu - \gamma^2 \mu^2}  I \\ \frac{3 \gamma \mu - 3\gamma^2 \mu^2}{9 - 6 \gamma \mu - \gamma^2 \mu^2} I \end{bmatrix}
     \right\|^2
     \left\| \mathcal{X}(\gamma, \eta)^{-1} \begin{bmatrix}
      \Delta_{t}^{\mathrm{ag}}
      \\
      \Delta_{t}
    \end{bmatrix}  \right\|^2
    \leq
    \frac{17 \eta^2}{9 \gamma^2} \left\| \mathcal{X}(\gamma, \eta)^{-1} \begin{bmatrix}
      \Delta_{t}^{\mathrm{ag}}
      \\
      \Delta_{t}
    \end{bmatrix}  \right\|^2.
  \end{equation}
\end{proof}

\cref{fedacii:a1:stab:main} then follows by telescoping \cref{fedacii:a1:stab:2} and plugging in \cref{fedacii:a1:stab:3}.
\begin{proof}[Proof of \cref{fedacii:a1:stab:main}]
  Let $t_0$ be the latest synchronized step prior to $t$, then telescoping \cref{fedacii:a1:stab:2} from $t_0$ to $t$  (note that $\Delta_{t_0} = \Delta_{t_0} = 0$)
  \begin{equation}
    \expt \left[ \left\| \mathcal{X}(\gamma, \eta)^{-1}
      \begin{bmatrix} \Delta_{t}^{\mathrm{ag}} \\ \Delta_{t} \end{bmatrix}
      \right\|^2 \middle| \mathcal{F}_{t_0} \right]
    \leq
    2 \gamma^2 \sigma^2 K \cdot \begin{cases}
      \left(1 + \frac{\gamma^2 \mu}{\eta}\right)^{2K} & \text{if~} \gamma \in \left(\eta, \sqrt{\frac{\eta}{\mu}} \right],
      \\
      1                              & \text{if~} \gamma =  \eta.
    \end{cases}
    \label{eq:fedaci:mod:tmp}
  \end{equation}
  Thus, by \cref{fedacii:a1:stab:3},
  \begin{equation}
    \expt \left[ \left\|  \nabla F(\overline{w_t^{\mathrm{md}}}) -  \frac{1}{M} \sum_{m=1}^M \nabla F (w_t^{\mathrm{md}, m})  \right\|^2  \middle| \right]
    \leq
    \frac{34}{9} \eta^2 \sigma^2 K \cdot \begin{cases}
      \left(1 + \frac{\gamma^2 \mu}{\eta}\right)^{2K} & \text{if~} \gamma \in \left(\eta, \sqrt{\frac{\eta}{\mu}} \right],
      \\
      1                              & \text{if~} \gamma =  \eta.
    \end{cases}
  \end{equation}
  The \cref{fedacii:a1:stab:main} then follows by bounding $\frac{34}{9}$ with $4$.
\end{proof}
% !TEX root = main.tex
\section{Analysis of \fedavg under \cref{asm2}}
\label{sec:fedavg}
In this section we study the convergence of \fedavg under \cref{asm2}. 
We provide a complete, non-asymptotic version of \cref{fedavg:a2} and provide the proof. 
We formally define \fedavg in \cref{alg:fedavg} for reference. 

Formally we use $\mathcal{F}_t$ to denote the $\sigma$-algebra generated by $\{w_\tau^m\}_{\tau \leq t, m \in [M]}$. Since \fedavg is Markovian, conditioning on $\mathcal{F}_t$ is equivalent to conditioning on $\{w_t^m\}_{m \in [M]}$.

\begin{algorithm}
  \caption{Federated Averaging (a.k.a. Local SGD, Parallel SGD)}
  \begin{algorithmic}[1]
      \label{alg:fedavg}
      \Procedure{\fedavg }{$\eta$} 
      \State Initialize $= w_0^{m} = w_0$ for all $m \in [M]$
      \For{$t = 0, \ldots, T-1$}
      \For{every worker $m\in [M]$ {\bf in parallel}}
      \State $g_t^m \gets \nabla f(w_t^{m} ; \xi_t^m)$ 
      \Comment{Query gradient at $w_t^{m}$}
      \State $v_{t+1}^{m} \gets w_t^{m} - \eta \cdot g_t^m$
      \Comment{Compute next iterate candidate $v_{t+1}^{m}$}
      \If{sync}
      \State $w_{t+1}^m \gets \frac{1}{M} \sum_{m=1}^M v_{t+1}^m$
      \Comment{Average and broadcast}
      \Else
      \State $w_{t+1}^m \gets  v_{t+1}^m$
      \Comment{Candidates assigned to be the next iterates }
      \EndIf
      \EndFor
      \EndFor
      \EndProcedure
  \end{algorithmic}
\end{algorithm}

\subsection{Main theorem and lemma: Complete version of \cref{fedavg:a2}}
\begin{theorem}
  \label{fedavg:a2:full}
  Let $F$ be $\mu > 0$-strongly convex, and assume \cref{asm2}, then for 
  \begin{equation}
    \eta := \min \left\{ \frac{1}{4L}, \frac{2}{\mu T} \log \left( \euler + \min \left\{ \frac{\mu^2 M T^2 D_0^2}{\sigma^2}, \frac{\mu^6 T^5 D_0^2}{Q^2 K^2 \sigma^4}\right\} \right)\right\},
  \end{equation}
  \fedavg yields
  \begin{align}
    &  \expt \left[ F \left( \sum_{t=0}^{T-1} \frac{\rho_t}{S_T} \overline{w_t} \right)  \right] - F^*
    + \frac{\mu}{2} \expt [\|\overline{w_T} - w^*\|^2]
    \\
    \leq &
    \exp \left( -\frac{\mu T}{8 L} \right) 4L D_0^2
    +
    \frac{3 \sigma^2}{ \mu MT } \log \left( \euler + \frac{\mu^2 M T^2 D_0^2}{\sigma^2}  \right)
    +
    \frac{3073 Q^2 K^2 \sigma^4}{\mu^5 T^4} \log^4 \left( \euler + \frac{\mu^6 T^5 D_0^2}{Q^2 K^2 \sigma^4}  \right).
  \end{align}
  where $\rho_t := (1 - \frac{1}{2} \eta \mu)^{T-t-1}$, $S_T := \sum_{t=0}^{T-1}\rho_t$, and $D_0 = \|\overline{w_0}-w^*\|$.
\end{theorem}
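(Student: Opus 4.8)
The plan is to mirror the four-step framework developed for \fedac, specialized to the momentum-free \fedavg iteration. Because \fedavg carries no auxiliary ``ag''/``md'' sequences, the natural potential is simply the distance $\|\overline{w_t} - w^*\|^2$, and the function-value gap $F(\overline{w_t}) - F^*$ is extracted as a by-product of the one-step descent. The overall skeleton (convergence lemma $+$ discrepancy lemma $+$ general-$\eta$ bound $+$ tuning of $\eta$) is identical to the proof of \cref{fedacii:a2:full}.

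\textbf{Step 1 (perturbed iterate analysis, needs only \cref{asm1}).} Starting from $\overline{w_{t+1}} = \overline{w_t} - \eta \cdot \frac{1}{M}\sum_m \nabla f(w_t^m; \xi_t^m)$, I would expand $\expt[\|\overline{w_{t+1}}-w^*\|^2 \mid \mathcal{F}_t]$, peel off the averaged-noise variance $\frac{\eta^2 \sigma^2}{M}$, and write $\frac{1}{M}\sum_m \nabla F(w_t^m) = \nabla F(\overline{w_t}) + \delta_t$ with $\delta_t := \frac{1}{M}\sum_m \nabla F(w_t^m) - \nabla F(\overline{w_t})$. Using $\mu$-strong convexity on $\langle\nabla F(\overline{w_t}), \overline{w_t}-w^*\rangle$, the self-bounding inequality $\|\nabla F(\overline{w_t})\|^2 \le 2L(F(\overline{w_t})-F^*)$, a Young step on the $\delta_t$ cross term, and $\eta \le \frac{1}{4L}$, I expect the one-step bound $\expt[\|\overline{w_{t+1}}-w^*\|^2 \mid \mathcal{F}_t] \le (1 - \frac{1}{2}\eta\mu)\|\overline{w_t}-w^*\|^2 - \eta(F(\overline{w_t})-F^*) + \frac{3\eta}{\mu}\|\delta_t\|^2 + \frac{\eta^2\sigma^2}{M}$. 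Telescoping against the weights $\rho_t = (1-\frac{1}{2}\eta\mu)^{T-t-1}$ (which satisfy $\rho_{t-1} = (1-\frac{1}{2}\eta\mu)\rho_t$), dividing by $\eta S_T$ with $S_T = \sum_t\rho_t \le \frac{2}{\eta\mu}$, and invoking Jensen's inequality for the averaged iterate then yields a bound on $F(\sum_t\frac{\rho_t}{S_T}\overline{w_t})-F^* + \frac{\mu}{2}\|\overline{w_T}-w^*\|^2$ consisting of a decaying term $\exp(-\frac{1}{2}\eta\mu T)\cdot O(LD_0^2)$, the stochastic term $\frac{\eta\sigma^2}{M}$, and the discrepancy overhead $\frac{3}{\mu}\max_t\expt\|\delta_t\|^2$ (note $\frac{1}{\eta S_T}\ge\frac{\mu}{2}$ is what licenses keeping $\frac{\mu}{2}\|\overline{w_T}-w^*\|^2$).

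\textbf{Step 2 (discrepancy bound, the crux, needs \cref{asm2}).} Here I would exploit \nth{3}-order smoothness exactly as in the \fedacii sketch and in \cref{fedacii:stab:3:1}: since $\frac{1}{M}\sum_m(w_t^m - \overline{w_t}) = 0$, the Hessian term can be inserted for free, so by \cref{helper:3rd:Lip} one gets $\|\delta_t\|^2 \le \frac{1}{M}\sum_m\|\nabla F(w_t^m)-\nabla F(\overline{w_t})-\nabla^2 F(\overline{w_t})(w_t^m-\overline{w_t})\|^2 \le \frac{Q^2}{4M}\sum_m\|w_t^m-\overline{w_t}\|^4$. It then remains to control the \nth{4}-order iterate discrepancy $\expt[\frac{1}{M}\sum_m\|w_t^m-\overline{w_t}\|^4]$. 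Writing the intra-period recursion for $w_t^m - \overline{w_t}$, which vanishes at the last synchronization step $t_0$ with $t-t_0 \le K$, the deterministic gradient-coupling part is non-expansive because $\eta \le \frac{1}{L}$, so the discrepancy is driven purely by accumulated stochastic noise; combining the bounded \nth{4} central moment (\cref{asm2}(b)) with a helper \nth{4}-moment inequality for sums of martingale differences, I expect $\expt[\frac{1}{M}\sum_m\|w_t^m-\overline{w_t}\|^4] = O(\eta^4 K^2\sigma^4)$, hence $\expt\|\delta_t\|^2 = O(Q^2\eta^4 K^2\sigma^4)$. This $K^2\sigma^4$ scaling (versus the $K\sigma^2$ of the \cref{asm1} analysis) is precisely what produces the sharpened third term.

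\textbf{Steps 3--4 (assemble and optimize $\eta$).} Combining Steps 1 and 2 gives, for every $\eta \le \frac{1}{4L}$, a bound of the form $\exp(-\frac{1}{2}\eta\mu T)\cdot O(LD_0^2) + O(\frac{\eta\sigma^2}{M}) + O(\frac{Q^2\eta^4 K^2\sigma^4}{\mu})$, the direct analogue of \cref{fedacii:a2:general:eta} (one checks the variance contribution collapses to $\frac{\eta\sigma^2}{M}$ since $\frac{1}{S_T}\sum_t\rho_t=1$). I would then optimize exactly as in the proof of \cref{fedacii:a2:full}: split the right-hand side into a monotonically decreasing $\varphi_{\downarrow}(\eta)$ and increasing $\varphi_{\uparrow}(\eta)$, and plug in $\eta = \min\{\frac{1}{4L}, \frac{2}{\mu T}\log(\euler + \min\{\cdots\})\}$ with the stated inner arguments, choosing the $\polylog$ powers so that $\varphi_{\downarrow}(\eta_0)$ is absorbed into $\varphi_{\uparrow}$; this delivers the three stated terms with their $\polylog$ factors and the constant $4L$ in the exponential-decay term (valid since $\mu\le 4L$). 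The main obstacle is Step 2: verifying that the gradient-coupling terms do not amplify the discrepancy under $\eta\le\frac{1}{L}$ and extracting the sharp $K^2\sigma^4$ dependence from the bounded fourth moment; everything else reduces to the now-standard potential telescoping and $\eta$-tuning.
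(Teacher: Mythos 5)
Your proposal is correct and follows essentially the same route as the paper: the same weighted telescoping of the one-step contraction $(1-\frac{1}{2}\eta\mu)\|\overline{w_t}-w^*\|^2 - \eta(F(\overline{w_t})-F^*)$ with a Young step isolating $\frac{O(\eta)}{\mu}\|\delta_t\|^2$, the same reduction of $\|\delta_t\|^2$ to $\frac{Q^2}{4M}\sum_m\|w_t^m-\overline{w_t}\|^4$ via \cref{helper:3rd:Lip}, the same $\nth{4}$-order discrepancy recursion whose gradient-coupling term is killed by co-coercivity and $\eta L\le\frac{1}{4}$ so that $\sqrt{\expt\|\Delta_t\|^4}$ grows additively by $O(\eta^2\sigma^2)$ per step to give $O(\eta^4K^2\sigma^4)$, and the same final $\varphi_{\downarrow}/\varphi_{\uparrow}$ tuning of $\eta$ (\cref{fedavg:a2:conv:onestep,fedavg:a2:stab:1,fedavg:a2:general:eta}). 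The only cosmetic difference is that the paper folds the third-order-smoothness bound into the one-step proposition rather than deferring it to the discrepancy lemma.
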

The proof of \cref{fedavg:a2:full} is based on the following two lemmas regarding the convergence and \nth{4}-order stability of \fedavg. The averaging technique applied here is similar to \citep{Stich-arXiv19}.
\begin{lemma}[Perturbed iterate analysis for \fedavg under \cref{asm2}]
  \label{fedavg:a2:conv:main}
  Let $F$ be $\mu > 0$-strongly convex, and assume \cref{asm2}, then for $\eta \in (0, \frac{1}{4L}]$, \fedavg satisfies
  \begin{align}
         & \expt \left[ F \left( \sum_{t=0}^{T-1} \frac{\rho_t}{S_T} \overline{w_t} \right)  \right] - F^*
    + \frac{\mu}{2} \expt [\|\overline{w_T} - w^*\|^2]
    \\
    \leq & \frac{1}{\eta} \exp \left( - \frac{1}{2} \eta \mu T \right) D_0^2
    +
    \frac{1}{M} \eta \sigma^2 + \frac{Q^2}{\mu} \left( \max_{0 \leq t < T}  \frac{1}{M} \sum_{m=1}^M \expt \left[\|\overline{w_t} - w_t^m\|^4 \right] \right).
  \end{align}
  where $\rho_t, S_T$ are defined in the statement of \cref{fedavg:a2:full}.
\end{lemma}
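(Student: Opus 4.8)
The plan is to track only the worker-averaged iterate $\overline{w_t}$ and run a perturbed-iterate analysis on the potential $\|\overline{w_t} - w^*\|^2$. The first observation is that, whether or not step $t+1$ is a synchronization step, linearity of the averaging operation gives the clean recursion $\overline{w_{t+1}} = \overline{w_t} - \eta\cdot\frac{1}{M}\sum_{m=1}^M \nabla f(w_t^m;\xi_t^m)$. Writing $\overline{g_t} := \frac{1}{M}\sum_{m=1}^M \nabla F(w_t^m)$ and taking conditional expectation, the inter-worker cross terms vanish by independence of the oracle noise, and bounded variance (\cref{asm2}(b) subsumes \cref{asm1}(c)) controls the residual, yielding $\expt[\|\overline{w_{t+1}} - w^*\|^2\mid\mathcal{F}_t] = \|\overline{w_t} - w^* - \eta\overline{g_t}\|^2 + \tfrac{\eta^2\sigma^2}{M}$.

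The crux is to control the discrepancy $e_t := \overline{g_t} - \nabla F(\overline{w_t})$ between the averaged local gradient and the gradient at the average. Here the \nth{3}-order smoothness is decisive: because $\frac{1}{M}\sum_m (w_t^m - \overline{w_t}) = 0$, the Hessian (first-order) term of the Taylor expansion of each $\nabla F(w_t^m) - \nabla F(\overline{w_t})$ cancels upon averaging, so only the \nth{3}-order residual survives. Invoking \cref{asm2}(a) on each summand gives $\|e_t\| \leq \frac{Q}{2M}\sum_m \|w_t^m - \overline{w_t}\|^2$, hence $\|e_t\|^2 \leq \frac{Q^2}{4}\cdot\frac{1}{M}\sum_m \|w_t^m - \overline{w_t}\|^4$ by Jensen's inequality --- this is exactly the bound that replaces the cruder $L$-smoothness discrepancy estimate and produces the $Q^2$-term. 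With this in hand I would expand the squared norm, substitute $\overline{g_t} = \nabla F(\overline{w_t}) + e_t$, bound $-2\eta\langle \nabla F(\overline{w_t}), \overline{w_t} - w^*\rangle$ by $\mu$-strong convexity, use $\|\nabla F(\overline{w_t})\|^2 \leq 2L(F(\overline{w_t}) - F^*)$ from $L$-smoothness, and dispatch the cross term $-2\eta\langle e_t, \overline{w_t} - w^*\rangle$ with Young's inequality. Using $\eta \leq \tfrac{1}{4L}$ to guarantee $4\eta^2 L \leq \eta$, this gives the one-step contraction
\[
\expt[\|\overline{w_{t+1}} - w^*\|^2\mid\mathcal{F}_t] \leq \Bigl(1 - \tfrac{\eta\mu}{2}\Bigr)\|\overline{w_t} - w^*\|^2 - \eta\bigl(F(\overline{w_t}) - F^*\bigr) + \tfrac{5\eta}{2\mu}\|e_t\|^2 + \tfrac{\eta^2\sigma^2}{M}.
\]

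Finally I would isolate $\eta(F(\overline{w_t}) - F^*)$, weight by the geometric coefficient $\rho_t = (1-\tfrac{\eta\mu}{2})^{T-t-1}$, take total expectation, and telescope; the quadratic terms collapse to $(1-\tfrac{\eta\mu}{2})^T D_0^2 - \expt[\|\overline{w_T} - w^*\|^2]$. Dividing by $\eta S_T$, Jensen on the convex $F$ pulls the average inside, $(1-\tfrac{\eta\mu}{2})^T \leq \exp(-\tfrac{\eta\mu T}{2})$ together with $S_T \geq 1$ delivers the first target term, and $\sum_t \rho_t = S_T$ converts the per-step noise into $\tfrac{\eta\sigma^2}{M}$. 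The one remaining manipulation is to absorb the negative terminal term: since $S_T = \frac{1 - (1-\eta\mu/2)^T}{\eta\mu/2} \leq \frac{2}{\eta\mu}$, we have $\eta S_T \leq \tfrac{2}{\mu}$, so $\tfrac{\mu}{2} \leq \tfrac{1}{\eta S_T}$ and $\tfrac{\mu}{2}\expt[\|\overline{w_T} - w^*\|^2]$ can be moved to the left-hand side; bounding $\tfrac{5}{2\mu}\cdot\tfrac{Q^2}{4} = \tfrac{5Q^2}{8\mu} \leq \tfrac{Q^2}{\mu}$ then recovers the stated constant. The main obstacle is conceptual rather than computational --- spotting the Hessian cancellation that upgrades the discrepancy error from second to fourth order in $\|w_t^m - \overline{w_t}\|$; the surrounding descent step and the geometric-averaging bookkeeping are routine.
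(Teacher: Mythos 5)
Your proposal is correct and follows essentially the same route as the paper: the averaged-iterate recursion, the Hessian-cancellation bound $\|e_t\|^2 \leq \frac{Q^2}{4M}\sum_m\|w_t^m-\overline{w_t}\|^4$ from \nth{3}-order smoothness (the paper's \cref{helper:3rd:Lip}), a one-step contraction with coefficient $1-\tfrac{1}{2}\eta\mu$, and the $\rho_t$-weighted telescoping with division by $\eta S_T$. The only cosmetic difference is that you dispatch the discrepancy cross term by Young's inequality on the inner product, whereas the paper applies the split $\|x+y\|^2\leq(1+\zeta)\|x\|^2+(1+\zeta^{-1})\|y\|^2$ with $\zeta=\tfrac{1}{2}\eta\mu$ to the whole displacement; both yield the stated constant.
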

The proof of \cref{fedavg:a2:conv:main} is deferred to \cref{sec:fedavg:a2:conv}.
\begin{lemma}[\nth{4}-order discrepancy overhead bound for \fedavg]
  \label{fedavg:a2:stab:main}
  In the same settings of \cref{fedavg:a2:conv:main}, \fedavg satisfies (for any $t$)
  \begin{align}
    \expt \left[ \frac{1}{M} \sum_{m=1}^M \| \overline{w}_t - w_t^m \|^4 \right]
    \leq
    192 \eta^4 K^2 \sigma^4.
  \end{align}
\end{lemma}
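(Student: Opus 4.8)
The plan is to bound the averaged \nth{4}-order discrepancy by reducing to pairwise worker differences and then establishing a \nth{4}-order stability recursion, in direct analogy with the \fedacii analysis of \cref{fedacii:stab:2}. Fix an arbitrary pair of workers $m_1, m_2 \in [M]$ and, following the notation of \cref{sec:fedacii:stab:main}, set $\Delta_\tau := w_\tau^{m_1} - w_\tau^{m_2}$ and $\Delta_\tau^{\varepsilon} := \varepsilon_\tau^{m_1} - \varepsilon_\tau^{m_2}$ with $\varepsilon_\tau^m := \nabla f(w_\tau^m; \xi_\tau^m) - \nabla F(w_\tau^m)$. Since $\overline{w_t} = \frac{1}{M}\sum_{m'} w_t^{m'}$, convexity of $\|\cdot\|^4$ yields $\frac{1}{M}\sum_m \|\overline{w_t} - w_t^m\|^4 \leq \frac{1}{M^2}\sum_{m_1,m_2} \|w_t^{m_1} - w_t^{m_2}\|^4$, so it suffices to bound $\expt[\|\Delta_t\|^4]$ uniformly over pairs by $192\,\eta^4 K^2 \sigma^4$ (the $m_1=m_2$ terms vanish, so no spurious factor of $M$ appears).

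For a non-synchronized step, \fedavg queries the gradient directly at $w_\tau^m$, so the mean value theorem gives $\Delta_{\tau+1} = (I - \eta H_\tau)\Delta_\tau - \eta \Delta_\tau^{\varepsilon}$ for some symmetric $H_\tau$ with $\mu I \preceq H_\tau \preceq LI$. The crucial structural fact is that $\eta \leq \frac{1}{4L} \leq \frac{1}{L}$ forces $\|I - \eta H_\tau\| \leq 1 - \eta\mu \leq 1$, i.e. the deterministic map is non-expansive, so no transformation $\mathcal{X}$ is needed (this is precisely where \fedavg is better behaved than the accelerated iterates). Writing $A_\tau := (I-\eta H_\tau)\Delta_\tau$ with $\|A_\tau\|\leq\|\Delta_\tau\|$, I would expand $\|\Delta_{\tau+1}\|^4 = (\|A_\tau\|^2 - 2\eta\langle A_\tau,\Delta_\tau^\varepsilon\rangle + \eta^2\|\Delta_\tau^\varepsilon\|^2)^2$, take $\expt[\,\cdot\mid\mathcal{F}_\tau]$ using $\expt[\Delta_\tau^\varepsilon\mid\mathcal{F}_\tau]=0$, and control the surviving cross terms by Cauchy--Schwarz and AM--GM exactly as in the proof of \cref{fedacii:stab:2} (with $\mathcal{B}_t$ replaced by $I-\eta H_\tau$ and $\gamma$ by $\eta$). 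Feeding the bounded \nth{4} central moment (\cref{asm2}(b)) through the helper \cref{helper:diff:4th} to bound $\expt[\|\Delta_\tau^\varepsilon\|^2\mid\mathcal{F}_\tau]$ and $\expt[\|\Delta_\tau^\varepsilon\|^4\mid\mathcal{F}_\tau]$ by constant multiples of $\sigma^2$ and $\sigma^4$, this collapses to $\sqrt{\expt[\|\Delta_{\tau+1}\|^4\mid\mathcal{F}_\tau]} \leq \|\Delta_\tau\|^2 + c\,\eta^2\sigma^2$ for an absolute constant $c$.

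Let $t_0$ be the latest synchronization step at or before $t$, so that $\Delta_{t_0}=0$. Taking $\expt[\,\cdot\mid\mathcal{F}_{t_0}]$ of the one-step bound, using the tower property, and applying Minkowski's inequality to the scalar random variable $\|\Delta_\tau\|^2$ yields $\sqrt{\expt[\|\Delta_{\tau+1}\|^4\mid\mathcal{F}_{t_0}]} \leq \sqrt{\expt[\|\Delta_\tau\|^4\mid\mathcal{F}_{t_0}]} + c\,\eta^2\sigma^2$. Telescoping from $t_0$ to $t$ and using $t - t_0 \leq K$ gives $\sqrt{\expt[\|\Delta_t\|^4]} \leq c\,\eta^2\sigma^2 K$, hence $\expt[\|\Delta_t\|^4]\leq c^2\eta^4\sigma^4 K^2$; combining with the convexity reduction and tracking the constants from \cref{helper:diff:4th} into a value bounded by $192$ finishes the proof.

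I expect the delicate part to be the one-step expansion: the mean-zero property annihilates the terms linear in $\Delta_\tau^\varepsilon$, but the third-moment cross term $\expt[\|\Delta_\tau^\varepsilon\|^2\langle A_\tau,\Delta_\tau^\varepsilon\rangle\mid\mathcal{F}_\tau]$ does not vanish and must be absorbed via AM--GM into the $\|A_\tau\|^4$ and $\sigma^4$ contributions without leaking a factor that would spoil the clean $(\|\Delta_\tau\|^2 + c\,\eta^2\sigma^2)^2$ form. The conceptual subtlety --- and the reason the bound scales as $K^2$ rather than the naive $K^4$ --- is that the fourth moment must be propagated through the $L^2$-Minkowski recursion on the scalar $\|\Delta_\tau\|^2$; a direct $L^4$ triangle inequality applied to the vector increments would discard the martingale cancellation and produce a bound too weak to feed into \cref{fedavg:a2:conv:main}.
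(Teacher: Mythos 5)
Your proposal is correct and follows essentially the same route as the paper's proof of \cref{fedavg:a2:stab:main}: reduce to a pairwise difference $\Delta_t$ via convexity of $\|\cdot\|^4$ and exchangeability, establish the one-step recursion $\sqrt{\expt\|\Delta_{t+1}\|^4} \leq \sqrt{\expt\|\Delta_t\|^4} + \sqrt{192}\,\eta^2\sigma^2$ by expanding the fourth power, killing the linear noise terms by the martingale property, absorbing the third-moment cross term via Cauchy--Schwarz and AM--GM with \cref{helper:diff:4th}, and then telescope from the last synchronization point where $\Delta_{t_0}=0$. The only (immaterial) difference is that you control the gradient term via the mean-value representation $\Delta_t^{\nabla}=H_t\Delta_t$ and the contraction $\|I-\eta H_t\|\leq 1$ (the device the paper uses in \cref{fedac:general:stab}), whereas the paper's \cref{fedavg:a2:stab:1} keeps $\Delta_t^{\nabla}$ and uses co-coercivity $\|\Delta_t^{\nabla}\|^2\leq L\langle\Delta_t,\Delta_t^{\nabla}\rangle$ together with $\eta\leq\frac{1}{4L}$ to show that term is non-positive; both yield the stated constant.
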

The proof of \cref{fedavg:a2:stab:main} is deferred to \cref{sec:fedavg:a2:stab}.

Combining \cref{fedavg:a2:conv:main,fedavg:a2:stab:main} gives
\begin{lemma}[Convergence of \fedavg under \cref{asm2} for general $\eta$]
  \label{fedavg:a2:general:eta}
  In the same settings of \cref{fedavg:a2:conv:main}, 
  \fedavg yields
  \begin{equation}
      \expt \left[ F \left( \sum_{t=0}^{T-1} \frac{\rho_t}{S_T} \overline{w_t} \right)  \right] - F^*
 + \frac{\mu}{2} \expt [\|\overline{w_T} - w^*\|^2]
    \leq
 \frac{1}{\eta} \exp \left( - \frac{1}{2} \eta \mu T \right) D_0^2
 +
 \frac{1}{M} \eta \sigma^2 + \frac{192 \eta^4 Q^2 K^2 \sigma^4 }{\mu}.
 \label{eq:fedavg:a2:general:eta}
  \end{equation}
\end{lemma}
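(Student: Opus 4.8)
The plan is to obtain \cref{fedavg:a2:general:eta} by directly substituting the \nth{4}-order discrepancy bound of \cref{fedavg:a2:stab:main} into the perturbed iterate analysis of \cref{fedavg:a2:conv:main}. No new ingredients are needed: both supporting lemmas already hold under the hypotheses of \cref{fedavg:a2:conv:main}, namely $\mu > 0$-strong convexity, \cref{asm2}, and $\eta \in (0, \frac{1}{4L}]$, so they apply simultaneously and the argument is purely the composition of the two.

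First I would invoke \cref{fedavg:a2:conv:main}, which bounds the weighted-average suboptimality plus the terminal distance $\frac{\mu}{2}\expt[\|\overline{w_T} - w^*\|^2]$ by three terms: the exponentially decaying term $\frac{1}{\eta}\exp(-\frac{1}{2}\eta\mu T)D_0^2$, the stochastic noise term $\frac{1}{M}\eta\sigma^2$, and a ``discrepancy overhead'' term $\frac{Q^2}{\mu}\max_{0 \leq t < T}\frac{1}{M}\sum_{m=1}^M \expt[\|\overline{w_t} - w_t^m\|^4]$. The first two terms already coincide with the corresponding terms on the right-hand side of \eqref{eq:fedavg:a2:general:eta}, so the only work is to control the discrepancy overhead.

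Next I would apply \cref{fedavg:a2:stab:main}, which asserts that for every timestep $t$ the quantity $\expt[\frac{1}{M}\sum_{m=1}^M \|\overline{w_t} - w_t^m\|^4]$ is at most $192\,\eta^4 K^2 \sigma^4$. Since this estimate is uniform in $t$, it also upper bounds the maximum over $0 \leq t < T$; multiplying through by $\frac{Q^2}{\mu}$ turns the discrepancy overhead into $\frac{192\,\eta^4 Q^2 K^2 \sigma^4}{\mu}$, precisely the third term of \eqref{eq:fedavg:a2:general:eta}. Plugging this back into the bound from \cref{fedavg:a2:conv:main} yields the claimed inequality.

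There is no genuine obstacle at the level of this lemma: the difficulty has been entirely front-loaded into the two supporting lemmas. The substantive work, namely setting up the perturbed iterate recursion with the averaging weights $\rho_t$ and isolating the \nth{4}-power discrepancy through $Q$-\nth{3}-order-smoothness (\cref{asm2}(a)) in \cref{fedavg:a2:conv:main}, together with controlling the \nth{4}-order stability via the bounded \nth{4} central moment (\cref{asm2}(b)) in \cref{fedavg:a2:stab:main}, is carried out separately. The present statement is the mechanical combination, and the only thing worth checking is that the hypotheses $\mu > 0$, \cref{asm2}, and $\eta \leq \frac{1}{4L}$ are shared by both lemmas, which holds by construction.
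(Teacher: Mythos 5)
Your proposal is correct and matches the paper's own proof, which simply combines \cref{fedavg:a2:conv:main} with \cref{fedavg:a2:stab:main}: the uniform-in-$t$ bound $192\,\eta^4 K^2\sigma^4$ controls the maximum over $0 \leq t < T$ of the discrepancy term, and multiplying by $\frac{Q^2}{\mu}$ gives the third term of \eqref{eq:fedavg:a2:general:eta}. Nothing further is needed.
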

\begin{proof}[Proof of \cref{fedavg:a2:general:eta}]
  Immediate from \cref{fedavg:a2:conv:main,fedavg:a2:stab:main}.
\end{proof}
\cref{fedavg:a2:full} then follows by plugging an appropriate $\eta$ to \cref{fedavg:a2:general:eta}. 
\begin{proof}[Proof of \cref{fedavg:a2:full}]
  To simplify the notation, denote the terms on the RHS of \cref{eq:fedavg:a2:general:eta} as
  \begin{equation}
    \varphi_{\downarrow}(\eta) := \frac{1}{\eta} \exp \left( - \frac{1}{2} \eta \mu T \right) D_0^2,
    \qquad
    \varphi_{\uparrow}(\eta) := 
    \frac{1}{M} \eta \sigma^2 + \frac{192 \eta^4 Q^2 K^2 \sigma^4 }{\mu}.
  \end{equation}
  Let
  \begin{equation}
    \eta_0 := \frac{2}{\mu T} \log \left( \euler + \min \left\{ \frac{\mu^2 M T^2 D_0^2}{\sigma^2}, \frac{\mu^6 T^5 D_0^2}{Q^2 K^2 \sigma^4}\right\} \right),
    \qquad
    \text{then }
    \eta = \min \left\{ \frac{1}{4L}, \eta_0\right\}.
  \end{equation}
  Therefore $\varphi_{\downarrow}(\eta) \leq \varphi_{\downarrow}(\frac{1}{4L}) + \varphi_{\downarrow}(\eta_0)$, where
  \begin{equation}
    \varphi_{\downarrow} \left( \frac{1}{4L} \right) = \exp \left( -\frac{\mu T}{8 L} \right) 4L D_0^2,
    \label{eq:fedavg:a2:proof:1}
  \end{equation} 
  and
  \begin{equation}
    \varphi_{\downarrow}(\eta_0) \leq \frac{\mu T}{2} D_0^2 \cdot \left( \min \left\{ \frac{\mu^2 M T^2 D_0^2}{\sigma^2}, \frac{\mu^6 T^5 D_0^2}{Q^2 K^2 \sigma^4}\right\}  \right)^{-1}
    \leq
    \frac{\sigma^2}{2 \mu MT} + \frac{Q^2 K^2 \sigma^4}{2 \mu^5 T^4}.
    \label{eq:fedavg:a2:proof:2}
  \end{equation}
  On the other hand
  \begin{align}
    \varphi_{\uparrow}(\eta) \leq \varphi_{\uparrow}(\eta_0)
    \leq
    \frac{2 \sigma^2}{\mu MT} \log \left( \euler + \frac{\mu^2 M T^2 D_0^2}{\sigma^2}  \right)
    +
    \frac{3072 Q^2 K^2 \sigma^4}{\mu^5 T^4} \log^4 \left( \euler + \frac{\mu^6 T^5 D_0^2}{Q^2 K^2 \sigma^4}  \right).
    \label{eq:fedavg:a2:proof:3}
  \end{align}
  Combining \cref{fedavg:a2:general:eta,eq:fedavg:a2:proof:1,eq:fedavg:a2:proof:2,eq:fedavg:a2:proof:3} gives
  \begin{align}
    &  \expt \left[ F \left( \sum_{t=0}^{T-1} \frac{\rho_t}{S_T} \overline{w_t} \right)  \right] - F^*
    + \frac{\mu}{2} \expt [\|\overline{w_T} - w^*\|^2]
    \\
    \leq &
    \exp \left( -\frac{\mu T}{8 L} \right) 4L D_0^2
    +
    \frac{3 \sigma^2}{ \mu MT } \log \left( \euler + \frac{\mu^2 M T^2 D_0^2}{\sigma^2}  \right)
    +
    \frac{3073 Q^2 K^2 \sigma^4}{\mu^5 T^4} \log^4 \left( \euler + \frac{\mu^6 T^5 D_0^2}{Q^2 K^2 \sigma^4}  \right).
  \end{align}
\end{proof}

\subsection{Perturbed iterative analysis for \fedavg: Proof of \cref{fedavg:a2:conv:main}}
\label{sec:fedavg:a2:conv}
We first state and proof the following proposition on one-step analysis.
\begin{proposition}
  \label{fedavg:a2:conv:onestep}
  Under the same assumption of \cref{fedavg:a2:conv:main}, for all $t$, the following inequality holds
  \begin{equation}
    \expt \left[ \|\overline{w_{t+1}} - w^*\|^2  |\mathcal{F}_t \right]
    \leq
    \left( 1 - \frac{1}{2}\eta \mu  \right)  \|\overline{w_t} - w^* \|^2 -  \eta (F(\overline{w_t} ) - F^*)
    +  \frac{\eta Q^2}{\mu M}  \sum_{m=1}^M \|\overline{w_t} - w_t^m\|^4  + \frac{\eta^2\sigma^2}{M}.
  \end{equation}
\end{proposition}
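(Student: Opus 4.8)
The plan is to run a one-step perturbed iterate analysis directly on the \emph{averaged} iterate $\overline{w_t}$. First I would observe that averaging is linear and commutes with the local gradient step, so regardless of whether step $t+1$ is a synchronization step we have $\overline{w_{t+1}} = \overline{w_t} - \eta \cdot \frac{1}{M}\sum_{m=1}^M \nabla f(w_t^m;\xi_t^m)$. Writing $\varepsilon_t^m := \nabla f(w_t^m;\xi_t^m) - \nabla F(w_t^m)$ and $\bar g_t := \frac{1}{M}\sum_{m=1}^M \nabla F(w_t^m)$, I take the conditional expectation; by independence of the $\xi_t^m$ across workers and the $\sigma^2$-bounded variance (\cref{asm1}(c), which is included in \cref{asm2}), the cross terms vanish and the noise contributes at most $\frac{\eta^2\sigma^2}{M}$, leaving $\expt[\|\overline{w_{t+1}}-w^*\|^2\mid\mathcal{F}_t] \le \|\overline{w_t}-w^*-\eta\bar g_t\|^2 + \frac{\eta^2\sigma^2}{M}$.

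Next I would expand the squared norm and introduce the linearization error $e_t := \bar g_t - \nabla F(\overline{w_t})$, so that the expansion reads $\|\overline{w_t}-w^*\|^2 - 2\eta\langle\nabla F(\overline{w_t}),\overline{w_t}-w^*\rangle - 2\eta\langle e_t,\overline{w_t}-w^*\rangle + \eta^2\|\bar g_t\|^2$. I then apply three bounds. (i) $\mu$-strong convexity gives $\langle\nabla F(\overline{w_t}),\overline{w_t}-w^*\rangle \ge (F(\overline{w_t})-F^*) + \frac{\mu}{2}\|\overline{w_t}-w^*\|^2$. (ii) Young's inequality bounds $-2\eta\langle e_t,\overline{w_t}-w^*\rangle \le \frac{1}{2}\eta\mu\|\overline{w_t}-w^*\|^2 + \frac{2\eta}{\mu}\|e_t\|^2$, spending exactly half of the contraction budget. (iii) For the curvature term I split $\|\bar g_t\|^2 \le 2\|\nabla F(\overline{w_t})\|^2 + 2\|e_t\|^2$, use the smoothness-plus-convexity bound $\|\nabla F(\overline{w_t})\|^2 \le 2L(F(\overline{w_t})-F^*)$, and invoke $\eta \le \frac{1}{4L}$ so that $4\eta^2 L \le \eta$; this converts the curvature term into at most $\eta(F(\overline{w_t})-F^*) + 2\eta^2\|e_t\|^2$. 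Collecting coefficients, the factor on $\|\overline{w_t}-w^*\|^2$ becomes $1-\eta\mu+\frac12\eta\mu = 1-\frac12\eta\mu$ and the factor on $F(\overline{w_t})-F^*$ becomes $-2\eta+\eta=-\eta$, exactly as claimed, with all residual error collected in $\left(\tfrac{2\eta}{\mu}+2\eta^2\right)\|e_t\|^2$.

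The crux of the argument is the bound on $\|e_t\|^2$, which is where \nth{3}-order smoothness enters and where the improved $Q$-dependence (rather than the crude $L$-dependence of vanilla \fedavg) originates. The key observation is that $\frac{1}{M}\sum_{m=1}^M (w_t^m - \overline{w_t}) = 0$, so the Hessian term cancels and $e_t = \frac{1}{M}\sum_{m=1}^M\bigl(\nabla F(w_t^m) - \nabla F(\overline{w_t}) - \nabla^2 F(\overline{w_t})(w_t^m-\overline{w_t})\bigr)$. Applying \cref{asm2}(a) in its gradient-residual form $\|\nabla F(u)-\nabla F(w)-\nabla^2F(w)(u-w)\| \le \frac{Q}{2}\|u-w\|^2$ termwise, together with Jensen's inequality for $\|\cdot\|^2$, yields $\|e_t\|^2 \le \frac{Q^2}{4M}\sum_{m=1}^M\|w_t^m-\overline{w_t}\|^4$ (the same estimate used in the proof sketch of \cref{fedacii:a2}). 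Finally, since $\eta \le \frac{1}{4L}\le\frac{1}{\mu}$ we have $\eta^2\le\frac{\eta}{\mu}$, hence $\frac{2\eta}{\mu}+2\eta^2 \le \frac{4\eta}{\mu}$, and the residual is at most $\frac{4\eta}{\mu}\cdot\frac{Q^2}{4M}\sum_{m=1}^M\|w_t^m-\overline{w_t}\|^4 = \frac{\eta Q^2}{\mu M}\sum_{m=1}^M\|w_t^m-\overline{w_t}\|^4$, completing the proposition. The main obstacle is precisely this cancellation step: recognizing that the deviations are mean-zero so that the residual is genuinely second-order in $\|w_t^m-\overline{w_t}\|$ (hence fourth-power after squaring), and then carrying out the coefficient bookkeeping so that the two independent sources of $\|e_t\|^2$ (the Young term and the curvature term) combine to exactly the target constant under the step-size restriction $\eta\le\frac{1}{4L}$.
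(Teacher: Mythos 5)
Your proof is correct and follows essentially the same route as the paper: a one-step perturbed-iterate analysis of the averaged iterate that isolates the linearization error $e_t = \frac{1}{M}\sum_m \nabla F(w_t^m) - \nabla F(\overline{w_t})$, pays $O(\eta/\mu)\|e_t\|^2$ for it via a weighted Young-type inequality, and bounds $\|e_t\|^2 \le \frac{Q^2}{4M}\sum_m\|w_t^m-\overline{w_t}\|^4$ using exactly the mean-zero-deviation cancellation of \cref{helper:3rd:Lip}. The only difference is bookkeeping: the paper applies the $(1+\zeta)/(1+\zeta^{-1})$ inequality to the full displacement vector before expanding (so it needs only $2\eta(1-\eta L)\ge\eta$), whereas you expand first and split $\|\bar g_t\|^2\le 2\|\nabla F(\overline{w_t})\|^2+2\|e_t\|^2$ (so you need $4\eta^2 L\le\eta$); both are covered by $\eta\le\frac{1}{4L}$ and land on identical coefficients.
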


\begin{proof}[Proof of \cref{fedavg:a2:conv:onestep}]
  By definition of the $\fedavg$ procedure (see \cref{alg:fedavg}), for all $m \in [M]$, $v_{t+1}^m = w_t^m - \eta \nabla f(w_t^m; \xi_t^m)$. Taking average over $m = 1,\ldots, M$ gives
  \begin{equation}
    \overline{w_{t+1}} - w^* = w_t - \eta \cdot \frac{1}{M} \sum_{m=1}^M \nabla f(w_t^m; \xi_t^m) - w^*.
  \end{equation}
  Taking conditional expectation, by bounded variance \cref{asm1}(c),
  \begin{equation}
    \expt \left[ \|\overline{w_{t+1}} - w^*\|^2  |\mathcal{F}_t \right]
    =
    \left\| w_t - \eta \cdot \frac{1}{M} \sum_{m=1}^M \nabla F(w_t^m) - w^*  \right\|^2 + \frac{1}{M} \eta^2 \sigma^2.
    \label{eq:fedavg:a2:conv:onestep:0}
  \end{equation}
  Now we analyze the $\left\| w_t - \eta \cdot \frac{1}{M} \sum_{m=1}^M \nabla F(w_t^m) - w^*  \right\|^2$ term as follows
  \begin{align}
    & \left\| w_t - \eta \cdot \frac{1}{M} \sum_{m=1}^M \nabla F(w_t^m) - w^*  \right\|^2 
    \\
    =    &
    \left\| \overline{w_t} - \eta \cdot \nabla F (\overline{w_t}) - w^* + \eta \left( \nabla F (\overline{w_t}) -   \frac{1}{M} \sum_{m=1}^M \nabla F(w_t^m)  \right) \right\|^2 
    \\
    \leq & \left( 1 + \frac{1}{2} \eta \mu  \right) \left\| \overline{w_t} - \eta \nabla F (\overline{w_t}) - w^* \right\|^2
    + \eta^2 \left( 1 + \frac{2}{\eta \mu} \right) \left\|  \nabla F (\overline{w_t}) -   \frac{1}{M} \sum_{m=1}^M \nabla F(w_t^m)   \right\|^2 
    \tag{apply \cref{helper:unbalanced:ineq} with $\zeta = \frac{1}{2}\eta \mu$}
    \\
    \leq & \left( 1 + \frac{1}{2} \eta \mu  \right) \left\| \overline{w_t} - \eta \nabla F (\overline{w_t}) - w^* \right\|^2
    + \eta^2 \left( 1 + \frac{2}{\eta \mu} \right) \frac{Q^2}{4M} \sum_{m=1}^M \|\overline{w_t} - w_t^m\|^4 
    \tag{by \cref{helper:3rd:Lip}}
    \\
    \leq & \left( 1 + \frac{1}{2} \eta \mu  \right) \left\| \overline{w_t} - \eta \nabla F (\overline{w_t}) - w^* \right\|^2
    + \frac{\eta Q^2}{\mu M} \sum_{m=1}^M \|\overline{w_t} - w_t^m\|^4.
    \label{eq:fedavg:a2:conv:onestep:1}
  \end{align}
  where the last inequality is due to $1 + \frac{2}{\eta \mu} \leq \frac{4}{\eta\mu}$ since $\eta \mu \leq \eta L \leq \frac{1}{4}$.

  The first term of the RHS of \cref{eq:fedavg:a2:conv:onestep:1} is bounded as
  \begin{align}
         & \left\| \overline{w_t} - \eta \nabla F (\overline{w_t}) - w^* \right\|^2
    \\
    =    & \left\|  \overline{w_t} - w^* \right\|^2 - 2 \eta \left\langle \nabla F (\overline{w_t}),   \overline{w_t} - w^* \right\rangle + \eta^2 \|\nabla F(\overline{w_t})\|^2
    \tag{expansion of squared norm}
    \\
    \leq & \left\|  \overline{w_t} - w^* \right\|^2 - \eta \left( \mu \|\overline{w_t} - w^* \|^2 - 2 (F(\overline{w_t} ) - F^*) \right) + \eta^2 \cdot (2 L (F(\overline{w_t} ) - F^*))
    \tag{$\mu$-strongly convexity and $L$-smoothness by \cref{asm1}}
    \\
    =    &
    (1 - \eta \mu )  \|\overline{w_t} - w^* \|^2 - 2\eta (1 - \eta L) (F(\overline{w_t} ) - F^*)
    \\
    \leq & (1 - \eta \mu )  \|\overline{w_t} - w^* \|^2 - \eta (F(\overline{w_t} ) - F^*).
    \tag{since $\eta \leq \frac{1}{2L}$}
  \end{align}
  Multiplying $(1 + \frac{1}{2} \eta \mu)$ on both sides gives (note that $(1 + \frac{1}{2} \eta \mu)(1 - \eta \mu) \leq (1 - \frac{1}{2} \eta \mu)$)
  \begin{align}
    & \left( 1 + \frac{1}{2} \eta \mu  \right) \left\| \overline{w_t} - \eta_t \nabla F (\overline{w_t}) - w^* \right\|^2
    \\
    \leq &
    \left( 1 + \frac{1}{2} \eta \mu  \right)\left( 1 - \eta \mu \right)  \|\overline{w_t} - w^* \|^2 -  \eta \left( 1 + \frac{1}{2} \eta \mu  \right) \left( F (\overline{w_t}) - F^* \right)
    \\
    \leq & \left( 1 - \frac{1}{2} \eta \mu \right) \|\overline{w_t} - w^* \|^2 -  \eta  \left( F (\overline{w_t}) - F^* \right).
    \label{eq:fedavg:a2:conv:onestep:2}
  \end{align}
  Combining \cref{eq:fedavg:a2:conv:onestep:0,eq:fedavg:a2:conv:onestep:1,eq:fedavg:a2:conv:onestep:2}
  completes the proof of \cref{fedavg:a2:conv:onestep}.
\end{proof}
With \cref{fedavg:a2:conv:onestep} at hand we are ready to prove \cref{fedavg:a2:conv:main}. The telescoping techniques applied here are similar to \citep{Stich-arXiv19}.
\begin{proof}[Proof of \cref{fedavg:a2:conv:main}]
  Telescoping \cref{fedavg:a2:conv:onestep} yields
  \begin{align}
         & \expt \left[\|\overline{w_T}-w^*\|^2 \right]
    + \eta \sum_{t=0}^{T-1} \left( 1 - \frac{1}{2} \eta \mu \right)^{T-t-1}\left( \expt[F (\overline{w_t})] - F^* \right)
    \\
    \leq &
    \left( 1 - \frac{1}{2} \eta \mu \right)^T \| \overline{w_0} - w^*\|^2
    +\sum_{t=0}^{T-1} \left(1 - \frac{1}{2} \eta \mu \right)^{T-t-1}
    \left(  \frac{1}{M} \eta^2 \sigma^2 + \frac{\eta Q^2}{\mu M} \sum_{m=1}^M \expt \left[ \|\overline{w_t} - w_t^m\|^4 \right]  \right)
    \\
    \leq &
    \left( 1 - \frac{1}{2} \eta \mu \right)^T \| \overline{w_0} - w^*\|^2
    + S_T
    \left(  \frac{1}{M} \eta^2 \sigma^2 + \frac{\eta Q^2}{\mu} \max_{0 \leq t < T} \frac{1}{M} \sum_{m=1}^M \expt \left[\|\overline{w_t} - w_t^m\|^4 \right] \right).
  \end{align}
  Multiplying $\frac{1}{\eta S_T}$ on both sides and rearranging,
  \begin{align}
         & \sum_{t=0}^{T-1} \frac{\rho_t}{S_T} (\expt[F(\overline{w_t})] - F^*)
    + \frac{1}{\eta S_T} \expt [\|\overline{w_T} - w^*\|^2]
    \\
    \leq &
    \frac{(1 - \frac{1}{2}\eta \mu)^T}{\eta S_T} \|\overline{w_0} - w^*\|^2
    +
    \frac{1}{M} \eta \sigma^2 + \frac{Q^2}{\mu} \left( \max_{0 \leq t < T}  \frac{1}{M} \sum_{m=1}^M \expt \left[\|\overline{w_t} - w_t^m\|^4 \right] \right).
    \label{eq:fedavg:a2:conv:main:1}
  \end{align}
  Note that $S_T := \sum_{t=0}^{T-1} \rho_t = \frac{1 - (1 - \frac{1}{2} \eta \mu)^T}{\frac{1}{2} \eta \mu}$, we have
  \begin{equation}
    \frac{1}{\eta S_T} = \frac{\mu}{2\left( 1 - (1 - \frac{1}{2} \eta \mu)^T \right)} \geq \frac{\mu}{2},
    \label{eq:fedavg:a2:conv:main:2}
  \end{equation}
  and
  \begin{equation}
    \frac{(1 - \frac{1}{2}\eta \mu)^T}{\eta S_T} = \frac{\mu (1 - \frac{1}{2} \eta \mu)^T}{2 \left( 1 - (1 - \frac{1}{2} \eta \mu)^{T} \right)}
    \leq \frac{\mu(1 - \frac{1}{2} \eta \mu)^T}{\eta \mu}
    \leq \frac{1}{\eta} \exp \left( - \frac{1}{2} \eta \mu T \right).
    \label{eq:fedavg:a2:conv:main:3}
  \end{equation}
  Also by convexity
  \begin{equation}
    \sum_{t=0}^{T-1} \frac{\rho_t}{S_T} (\expt[F(\overline{w_t})] - F^*) \geq
    \expt \left[ F \left( \sum_{t=0}^{T-1} \frac{\rho_t}{S_T} \overline{w_t} \right) \right] - F^*.
    \label{eq:fedavg:a2:conv:main:4}
  \end{equation}
  Plugging \cref{eq:fedavg:a2:conv:main:2,eq:fedavg:a2:conv:main:3,eq:fedavg:a2:conv:main:4} to \cref{eq:fedavg:a2:conv:main:1} gives
  \begin{align}
         & \expt \left[ F \left( \sum_{t=0}^{T-1} \frac{\rho_t}{S_T} \overline{w_t} \right)  \right] - F^*
    + \frac{\mu}{2} \expt [\|\overline{w_T} - w^*\|^2]
    \\
    \leq & \frac{1}{\eta} \exp \left( - \frac{1}{2} \eta \mu T \right)\|\overline{w_0} - w^*\|^2
    +
    \frac{1}{M} \eta \sigma^2 + \frac{Q^2}{\mu} \left( \max_{0 \leq t < T}  \frac{1}{M} \sum_{m=1}^M \expt \left[\|\overline{w_t} - w_t^m\|^4 \right] \right).
  \end{align}
\end{proof}

\subsection{Discrepancy overhead bound for \fedavg: Proof of \cref{fedavg:a2:stab:main}}
\label{sec:fedavg:a2:stab}
In this subsection we will prove \cref{fedavg:a2:stab:main} regarding the 4th order stability of \fedavg.
We introduce a few more notations to simplify the discussions. 
Let $m_1, m_2 \in [M]$ be two arbitrary distinct workers. For any timestep $t$, let $\Delta_t := w_t^{m_1} - w_t^{m_2}$, and $\Delta_t^\varepsilon := \varepsilon_t^{m_1} - \varepsilon_t^{m_2}$ where $\varepsilon_t^{m} = \nabla f(w_t^m; \xi_t^m) - \nabla F(w_t^m)$ be the bias of the gradient oracle of the $m$-th worker evaluated at $w_t$. Let $\Delta_t^\nabla := \nabla F(w_t^{m_1}) - \nabla F(w_t^{m_2})$.

We first state and prove the following proposition on one-step 4th-order stability. The proof is analogous to the \nth{4}-order convergence analysis of \fedavg in \citep{Dieuleveut.Patel-NeurIPS19}.
\begin{proposition}
  \label{fedavg:a2:stab:1}
  In the same setting of \cref{fedavg:a2:stab:main}, for all $t$,
  \begin{equation}
    \sqrt{\expt \|\Delta_{t+1}\|^4}
    \leq
    \sqrt{\expt \|\Delta_t\|^4} + \sqrt{192} \eta^2 \sigma^2.
  \end{equation}
\end{proposition}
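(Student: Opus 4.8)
The plan is to establish the one-step recursion by passing to the quantity $r_t := \sqrt{\expt\|\Delta_t\|^4}$ (the square of the $L^4$-norm of $\|\Delta_t\|$), for which the gradient noise enters additively rather than through a multiplicative factor. First I would dispatch the trivial case: if $t+1$ is a synchronization step then all workers share the same averaged iterate, so $w_{t+1}^{m_1}=w_{t+1}^{m_2}$, hence $\Delta_{t+1}=0$ and the claimed inequality holds. So assume $t+1$ is a local step, in which case the update $v_{t+1}^m = w_t^m - \eta\nabla f(w_t^m;\xi_t^m)$ of \cref{alg:fedavg} gives directly $\Delta_{t+1} = \Delta_t - \eta\Delta_t^\nabla - \eta\Delta_t^\varepsilon$.

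The key structural observation is that the deterministic part of this update is non-expansive. By the mean-value theorem there is a symmetric $H_t$ with $\mu I \preceq H_t \preceq LI$ such that $\Delta_t^\nabla = H_t\Delta_t$, so writing $a_t := \Delta_t - \eta\Delta_t^\nabla = (I-\eta H_t)\Delta_t$ we have $\|a_t\| \le \|\Delta_t\|$ because $0 \preceq I - \eta H_t \preceq I$ for $\eta \le \frac{1}{4L} \le \frac{1}{L}$. Thus $\Delta_{t+1} = a_t - \eta\Delta_t^\varepsilon$, where $a_t$ is $\mathcal{F}_t$-measurable and $\expt[\Delta_t^\varepsilon\mid\mathcal{F}_t]=0$.

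Next I would expand the fourth power. Writing $\|\Delta_{t+1}\|^2 = \|a_t\|^2 - 2\eta\langle a_t,\Delta_t^\varepsilon\rangle + \eta^2\|\Delta_t^\varepsilon\|^2$ and squaring, the terms that are odd in $\Delta_t^\varepsilon$ vanish under $\expt[\,\cdot\mid\mathcal{F}_t]$. The surviving cross terms are controlled exactly as in the proof of \cref{fedacii:stab:2}: Cauchy--Schwarz gives $\langle a_t,\Delta_t^\varepsilon\rangle^2 \le \|a_t\|^2\|\Delta_t^\varepsilon\|^2$ and $|\langle a_t,\Delta_t^\varepsilon\rangle|\,\|\Delta_t^\varepsilon\|^2 \le \|a_t\|\,\|\Delta_t^\varepsilon\|^3$, and an AM--GM step folds the residual $\|a_t\|\,\expt\|\Delta_t^\varepsilon\|^3$ contribution into the pure fourth-moment and the mixed second-moment terms. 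Invoking the bounded fourth central moment through \cref{helper:diff:4th} to bound $\expt[\|\Delta_t^\varepsilon\|^2\mid\mathcal{F}_t]$ and $\expt[\|\Delta_t^\varepsilon\|^4\mid\mathcal{F}_t]$ by constant multiples of $\sigma^2$ and $\sigma^4$, one arrives at $\expt[\|\Delta_{t+1}\|^4\mid\mathcal{F}_t] \le (\|a_t\|^2 + c\,\eta^2\sigma^2)^2$ for an explicit constant $c$ (which can be taken no larger than $\sqrt{192}$), and then $\|a_t\|^2 \le \|\Delta_t\|^2$.

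Finally I would remove the conditioning. Taking the conditional square root gives $\sqrt{\expt[\|\Delta_{t+1}\|^4\mid\mathcal{F}_t]} \le \|\Delta_t\|^2 + \sqrt{192}\,\eta^2\sigma^2$, and then applying Minkowski's inequality in $L^2$ --- treating $\|\Delta_t\|^2$ as the random variable and $\sqrt{192}\,\eta^2\sigma^2$ as a constant --- yields $\sqrt{\expt\|\Delta_{t+1}\|^4} \le \sqrt{\expt\|\Delta_t\|^4} + \sqrt{192}\,\eta^2\sigma^2$, as desired. The main obstacle is the bookkeeping in the fourth-power expansion: the odd-moment cancellation and the Cauchy--Schwarz/AM--GM packaging must be arranged so that the noise appears only inside the additive $(\,\cdots + c\,\eta^2\sigma^2)^2$ envelope, since it is precisely this additive (rather than contractive) structure that lets the telescoping over the $\le K$ local steps in \cref{fedavg:a2:stab:main} accumulate linearly in $K$ and produce the final $192\,\eta^4 K^2\sigma^4$ bound.
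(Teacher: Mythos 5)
Your argument is correct and arrives at the stated constant, but it routes the deterministic part of the update differently from the paper, so a short comparison is in order. The paper keeps $\Delta_t^\nabla$ inside the fourth-power expansion and controls it through the co-coercivity inequality $\|\Delta_t^\nabla\|^2 \leq L\langle \Delta_t, \Delta_t^\nabla\rangle$ together with $\langle \Delta_t,\Delta_t^\nabla\rangle \geq 0$, so that the entire gradient contribution collects into a term with coefficient $-4\eta(1-2\eta L - 6\eta^3L^3)\leq 0$ and is simply discarded; this uses only first-order convexity and smoothness. You instead factor the drift as $a_t=(I-\eta H_t)\Delta_t$ via the mean-value theorem and invoke $0\preceq I-\eta H_t\preceq I$ to get $\|a_t\|\leq\|\Delta_t\|$, reducing the step to a non-expansive map plus centered additive noise; this makes the moment bookkeeping cleaner (your envelope constant $c=8$ is in fact smaller than $\sqrt{192}$) and mirrors the device the paper itself uses in \cref{fedac:general:stab}. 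The price is that the mean-value representation presupposes a Hessian, which is harmless here since \cref{asm2} already refers to $\nabla^2 F$, but the paper's co-coercivity route is the more general one. Two minor points: $\expt[\Delta_t^\varepsilon\mid\mathcal{F}_t]=0$ alone kills only the term linear in the noise --- the cubic cross term $\langle a_t,\Delta_t^\varepsilon\rangle\|\Delta_t^\varepsilon\|^2$ survives unless you additionally use the symmetry of $\Delta_t^\varepsilon$, so your phrase about odd terms vanishing overstates slightly, though the Cauchy--Schwarz/AM--GM absorption you describe (as in \cref{fedacii:stab:2} and in the paper's own proof) handles it regardless; and your Minkowski finish in $L^2$ replaces the paper's combination of $\expt\|\Delta_t\|^2\leq\sqrt{\expt\|\Delta_t\|^4}$ with completing the square, both of which are valid.
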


\begin{proof}[Proof of \cref{fedavg:a2:stab:1}]
  If $t+1$ is a synchronized step, then the result follows trivially. We assume from now on that $t+1$ is not a synchronized step, then
  \begin{align}
         & \expt [\|\Delta_{t+1}\|^4 |\mathcal{F}_t] = \expt \left[ \|\Delta_t - \eta (\Delta_t^{\nabla} + \Delta_t^\varepsilon) \|^4 |\mathcal{F}_t \right]
    \\
    =    & \expt \left[ \left( \|\Delta_t\|^2 - 2 \eta \langle \Delta_t, \Delta_t^\nabla + \Delta_t^\varepsilon \rangle + \eta^2 \|\Delta_t^\nabla + \Delta_t^\varepsilon\|^2 \right)^2 \middle| \mathcal{F}_t \right]
    \\
    =    & \expt \|\Delta_t\|^4 - 4\eta \|\Delta_t\|^2 \langle \Delta_t, \Delta_t^\nabla \rangle
    + 4 \eta^2 \expt \left[ \langle \Delta_t, \Delta_t^\nabla + \Delta_t^\varepsilon \rangle^2 |\mathcal{F}_t \right]
    + 2 \eta^2 \|\Delta_t\|^2  \expt \left[ \| \Delta_t^\nabla + \Delta_t^\varepsilon \|^2 |\mathcal{F}_t \right]
    \\
         & \quad - 4 \eta^3 \expt \left[ \langle \Delta_t, \Delta_t^\nabla + \Delta_t^\varepsilon \rangle  \cdot \| \Delta_t^\nabla + \Delta_t^\varepsilon \|^2 |\mathcal{F}_t  \right]
    + \eta^4 \expt \left[ \| \Delta_t^\nabla + \Delta_t^\varepsilon \|^4 |\mathcal{F}_t \right]
    \\
    \leq & \expt \|\Delta_t\|^4 - 4\eta \|\Delta_t\|^2 \langle \Delta_t, \Delta_t^\nabla \rangle + 6 \eta^2 \|\Delta_t\|^2  \expt \left[ \| \Delta_t^\nabla + \Delta_t^\varepsilon \|^2 |\mathcal{F}_t \right]
    \\
         & \quad + 4 \eta^3 \|\Delta_t\| \expt  \left[ \| \Delta_t^\nabla + \Delta_t^\varepsilon \|^3 |\mathcal{F}_t \right]
    + \eta^4 \expt \left[ \| \Delta_t^\nabla + \Delta_t^\varepsilon \|^4 |\mathcal{F}_t \right] \tag{Cauchy-Schwarz inequality}
    \\
    \leq & \expt \|\Delta_t\|^4 - 4\eta \|\Delta_t\|^2 \langle \Delta_t, \Delta_t^\nabla \rangle
    + 8 \eta^2  \|\Delta_t\|^2  \expt \left[ \| \Delta_t^\nabla + \Delta_t^\varepsilon \|^2 |\mathcal{F}_t \right]
    + 3 \eta^4 \expt \left[ \| \Delta_t^\nabla + \Delta_t^\varepsilon \|^4 |\mathcal{F}_t \right],
    \label{eq:fedavg:a2:stab:prop:1}
  \end{align}
  where the last inequality is due to
  \begin{equation}
    4 \eta^3 \|\Delta_t\| \expt  \left[ \| \Delta_t^\nabla + \Delta_t^\varepsilon \|^3 |\mathcal{F}_t \right]
    \leq 2 \eta^2 \|\Delta_t\|^2  \expt  \left[ \| \Delta_t^\nabla + \Delta_t^\varepsilon \|^2 |\mathcal{F}_t \right] +
    2 \eta^4 \expt  \left[ \| \Delta_t^\nabla + \Delta_t^\varepsilon \|^4 |\mathcal{F}_t \right]
  \end{equation}
  by AM-GM inequality.

  Note that by $L$-smoothness and convexity, we have the following inequality by standard convex analysis (\cf, Theorem 2.1.5 of \citep{Nesterov-18}),
  \begin{align}
    \|\Delta_t^\nabla\|^2
    =
    \| \nabla F(w_t^{m_1}) - \nabla F(w_t^{m_2})\|^2
    \leq
    L \left\langle w_t^{m_1} - w_t^{m_2},  \nabla F(w_t^{m_1}) - \nabla F(w_t^{m_2})  \right\rangle
    =
    L \langle \Delta_t, \Delta_t^\nabla \rangle.
    \label{eq:fedavg:a2:stab:prop:2}
  \end{align}
  Consequently
  \begin{equation}
    \expt \left[ \| \Delta_t^\nabla + \Delta_t^\varepsilon \|^2 |\mathcal{F}_t \right] = \| \Delta_t^\nabla\|^2 +  \expt \left[ \|\Delta_t^\varepsilon\|^2 |\mathcal{F}_t \right]
    \leq \| \Delta_t^\nabla\|^2 + 2\sigma^2
    \leq L \langle \Delta_t, \Delta_t^\nabla \rangle + 2\sigma^2.
  \end{equation}
  Similarly
  \begin{align}
         & \expt \left[ \| \Delta_t^\nabla + \Delta_t^\varepsilon \|^4 |\mathcal{F}_t \right]
    \leq
    8 \|\Delta_t^\nabla\|^4 + 8 \expt \left[ \|\Delta_t^\varepsilon\|^4 |\mathcal{F}_t \right]
    \tag{AM-GM inequality}
    \\
    \leq &
    8 \|\Delta_t^\nabla\|^4  + 64 \sigma^4
    \tag{by \cref{helper:diff:4th}}
    \\
    \leq & 8L^2 \|\Delta_t^2\|^2 \|\Delta_t^\nabla\|^2 + 64 \sigma^4
    \tag{by $L$-smoothness}
    \\
    \leq & 8L^3 \|\Delta_t^2\|^2 \langle \Delta_t, \Delta_t^\nabla \rangle + 64 \sigma^4.
    \tag{by \cref{eq:fedavg:a2:stab:prop:2}}
  \end{align}
  Plugging the above two bounds to \cref{eq:fedavg:a2:stab:prop:1} gives
  \begin{align}
    \expt [\|\Delta_{t+1}\|^4 |\mathcal{F}_t]
    \leq
    \|\Delta_t\|^4 - 4\eta (1 - 2 \eta L - 6 \eta^3 L^3) \|\Delta_t\|^2 \langle \Delta_t, \Delta_t^\nabla \rangle + 16 \eta^2 \|\Delta_t\|^2 \sigma^2 + 192 \eta^4 \sigma^4.
    \label{eq:fedavg:a2:stab:prop:3}
  \end{align}
  Since $\eta L \leq \frac{1}{4}$ we have $(1 - 2 \eta L - 6 \eta^3 L^3) > 0$. By convexity $\langle \Delta_t, \Delta_t^\nabla \rangle \geq 0$. Hence the second term on the RHS of \cref{eq:fedavg:a2:stab:prop:3} is non-positive. We conclude that
  \begin{align}
    \expt [\|\Delta_{t+1}\|^4 |\mathcal{F}_t]
    \leq
    \|\Delta_t\|^4 + 16 \eta^2  \sigma^2 \|\Delta_t\|^2 + 192 \eta^4 \sigma^4.
  \end{align}
  Taking expectation gives
  \begin{align}
         & \expt [\|\Delta_{t+1}\|^4] \leq \expt [ \|\Delta_t\|^4 ] + 16 \eta^2 \sigma^2 \expt[\|\Delta_t\|^2] + 192 \eta^4 \sigma^4
    \\
    \leq & \expt [ \|\Delta_t\|^4 ] + 16 \eta^2 \sigma^2 \sqrt{\expt[\|\Delta_t\|^4]} + 192 \eta^4 \sigma^4 = \left( \sqrt{\expt \|\Delta_t\|^4} + \sqrt{192} \eta^2 \sigma^2 \right)^2.
  \end{align}
  Taking square root on both sides completes the proof.
\end{proof}
With \cref{fedavg:a2:stab:1} at hand we are ready to prove \cref{fedavg:a2:stab:main}.
\begin{proof}[Proof of \cref{fedavg:a2:stab:main}]
  Let $t_0$ be the latest synchronized prior to $t$, then telescoping \cref{fedavg:a2:stab:1} yields (note that $\Delta_{t_0} = 0$)
  \begin{equation}
    \sqrt{\expt \|\Delta_{t}\|^4} \leq \sqrt{192} \eta^2 \sigma^2 (t - t_0) \leq \sqrt{192} \eta^2 K \sigma^2,
  \end{equation}
  where the last inequality is because $K$ is the synchronization gap. Thus
  \begin{align}
    \frac{1}{M} \sum_{m=1}^M \expt \left[ \left\| \overline{w_t} - w_t^m \right\|^4  \right] \leq \expt [\|\Delta_t\|^4] \leq 192 \eta^4 K^2 \sigma^4,
  \end{align}
  where the first ``$\leq$'' is due to Jensen's inequality.
\end{proof}

% !TEX root = main.tex
\section{Analysis of \fedac for general convex objectives}
\label{sec:gcvx}
\subsection{Main theorems}
In this section we study the convergence of \fedac for general convex ($\mu = 0$) objectives. Let $F$ be a general convex function, the main idea is to apply \fedac to the $\ell_2$-augmented $\tilde{F}_{\lambda}(w)$ defined as
\begin{equation}
    \tilde{F}_{\lambda}(w) := F(w) + \frac{1}{2} \lambda \|w - w_0\|^2,
    \label{eq:aug}
\end{equation}
where $w_0$ is the initial guess. Let $w_{\lambda}^*$ be the optimum of $\tilde{F}_{\lambda}(w)$ and define $\tilde{F}_{\lambda}^* := \tilde{F}_{\lambda}(w_{\lambda}^*)$.

One can verify that if $F$ satisfies \cref{asm1} with general convexity ($\mu = 0$) and $L$-smoothness, then $\tilde{F}_{\lambda}$ satisfies \cref{asm1} with smoothness $L+\lambda$ and strong-convexity $\lambda$ (variance does not change). If $F$ satisfies \cref{asm2}, then $\tilde{F}_{\lambda}$ also satisfies \cref{asm2} with the same $Q$-\nth{3}-order-smoothness (\nth{4}-order central moment does not change).
 
Now we state the convergence theorems. Note that the bounds in \cref{tab:conv:rate} can be obtained by replacing $K = T/R$. Recall $\|D_0 := \|w_0 - w^*\|$.
\begin{theorem}[Convergence of \fedaci for general convex objective, under \cref{asm1}]
    \label{fedaci:a1:gcvx}
    Assume \cref{asm1} where $F$ is general convex. Then for any $T \geq 24$,\footnote{We assume this constant lower bound for technical simplification.} applying \fedaci to $\tilde{F}_{\lambda}$ \eqref{eq:aug} with 
    \begin{equation}
        \lambda
        =
        \max \left\{\frac{\sigma}{M^{\frac{1}{2}}T^{\frac{1}{2}} D_0},
        \frac{L^{\frac{1}{3}} K^{\frac{2}{3}} \sigma^{\frac{2}{3}}}{T D_0^{\frac{2}{3}}},  
        \frac{2LK}{T^2} \log^2 \left( \euler^2 + \frac{T^2}{K} \right) \right\},
    \end{equation}
    and hyperparameter
    \begin{equation}
        \eta = \min \left\{ \frac{1}{L + \lambda},
        \frac{K}{\lambda T^2} \log^2  \left( \euler + \min \left\{ \frac{\lambda L M T D_0^2}{\sigma^2}, \frac{\lambda^2 T^3 D_0^2}{K^2 \sigma^2} \right\}\right),
        \frac{L^{\frac{1}{3}} K^{\frac{1}{3}}  D_0^{\frac{2}{3}}}{\lambda^{\frac{2}{3}}T \sigma^{\frac{2}{3}}},
        \frac{L^{\frac{1}{4}} K^{\frac{1}{4}}  D_0^{\frac{1}{2}}}{\lambda^{\frac{3}{4}} T \sigma^{\frac{1}{2}}}
        \right\}
    \end{equation}
    yields
    \begin{align}
        \expt \left[F(\overline{w_T^{\mathrm{ag}}}) - F^* \right] 
       \leq &
       \frac{2L K D_0^2}{T^2} \log^{2} \left( \euler^2 + \frac{T^2}{K} \right)
       +
       \frac{2\sigma D_0}{M^{\frac{1}{2}} T^{\frac{1}{2}}}
       \log^2 \left( \euler^2 + \frac{L M^{\frac{1}{2}} T^{\frac{1}{2}} D_0}{\sigma} \right)
       \\
       & \qquad
       +
       \frac{1005 L^{\frac{1}{3}} K^{\frac{2}{3}} \sigma^{\frac{2}{3}} D_0^{\frac{4}{3}}}{T}
       \log^4 \left( \euler^4 +  \frac{L^{\frac{2}{3}} T D_0^{\frac{2}{3}} }{K^{\frac{2}{3}} \sigma^{\frac{2}{3}}} \right).
   \end{align}
\end{theorem}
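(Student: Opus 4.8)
The plan is to reduce the general-convex case to the strongly-convex machinery of \cref{sec:fedaci} by running \fedaci on the $\ell_2$-augmented surrogate $\tilde{F}_{\lambda}$ defined in \eqref{eq:aug}, and then transferring the resulting guarantee back to $F$ while optimizing over the regularization strength $\lambda$. As recorded in the excerpt, $\tilde{F}_{\lambda}$ satisfies \cref{asm1} as a $\lambda$-strongly-convex, $(L+\lambda)$-smooth objective with the same variance $\sigma^2$, so \cref{fedaci:general:eta} applies with $\mu \leftarrow \lambda$ and $L \leftarrow L+\lambda$ and controls the surrogate potential $\tilde{\Psi}_t := \frac{1}{M}\sum_{m=1}^M \tilde{F}_{\lambda}(w_t^{\mathrm{ag},m}) - \tilde{F}_{\lambda}^* + \frac{1}{2}\lambda\|\overline{w_t} - w_{\lambda}^*\|^2$ for every admissible $\eta \in (0, 1/(L+\lambda)]$.

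The second ingredient is a transfer inequality. Since $\tilde{F}_{\lambda} \geq F \geq F^*$ pointwise and evaluating the surrogate objective at $w^*$ gives $\tilde{F}_{\lambda}^* \leq F^* + \frac{1}{2}\lambda D_0^2$, I would sandwich $F^* \leq \tilde{F}_{\lambda}^* \leq F^* + \frac{1}{2}\lambda D_0^2$. Combining $F(\overline{w_T^{\mathrm{ag}}}) \leq \tilde{F}_{\lambda}(\overline{w_T^{\mathrm{ag}}})$ with Jensen's inequality $\tilde{F}_{\lambda}(\overline{w_T^{\mathrm{ag}}}) \leq \frac{1}{M}\sum_{m} \tilde{F}_{\lambda}(w_T^{\mathrm{ag},m})$ then yields the clean reduction $\expt[F(\overline{w_T^{\mathrm{ag}}}) - F^*] \leq \expt[\tilde{\Psi}_T] + \frac{1}{2}\lambda D_0^2$. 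To make the right-hand side explicit I would bound the initial potential uniformly in $\lambda$: the regularizer vanishes at $w_0$ so $\tilde{F}_{\lambda}(w_0) = F(w_0)$, and $L$-smoothness of $F$ together with $\lambda$-strong convexity of $\tilde{F}_{\lambda}$ give both $\tilde{F}_{\lambda}(w_0) - \tilde{F}_{\lambda}^* \leq \frac{1}{2}L D_0^2$ and $\frac{1}{2}\lambda\|w_0 - w_{\lambda}^*\|^2 \leq \frac{1}{2}L D_0^2$, hence $\tilde{\Psi}_0 \leq L D_0^2$ regardless of the choice of $\lambda$.

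The third step is to tune $\lambda$ and $\eta$. Substituting $\tilde{\Psi}_0 \leq L D_0^2$ into \cref{fedaci:general:eta} leaves one decreasing exponential term in $\eta$ plus four increasing stochastic-and-discrepancy terms (using $L+\lambda \asymp L$ once $\lambda \lesssim L$). The three terms in the target bound — the deterministic $\frac{LKD_0^2}{T^2}$, the statistical $\frac{\sigma D_0}{\sqrt{MT}}$, and the communication overhead $\frac{L^{1/3}K^{2/3}\sigma^{2/3}D_0^{4/3}}{T}$ — emerge precisely by balancing the regularization bias $\frac{1}{2}\lambda D_0^2$ against, respectively, a lower-bound floor for $\lambda$, the term $\frac{\sigma^2}{\lambda MT}$, and the term $\frac{LK^2\sigma^2}{\lambda^2 T^3}$; this is what dictates the three-way maximum defining $\lambda$. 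With $\lambda$ fixed, the four-way minimum defining $\eta$ is then chosen so that the second argument (analogous to $\eta_0$ in the proof of \cref{fedaci:full}) forces the exponential term below the leading term, the remaining two arguments drive the two largest increasing terms below their targets, and $\eta \leq 1/(L+\lambda)$ enforces admissibility.

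The conceptual reduction is short; the main obstacle is the bookkeeping that the explicit three-term $\lambda$ and four-term $\eta$ simultaneously dominate every term produced by \cref{fedaci:general:eta} with the stated polylogarithmic factors and absolute constants. In particular, I expect the delicate part to be verifying that the exponential decay contributes no more than the leading $\frac{LKD_0^2}{T^2}\log^2(\euler^2 + T^2/K)$ term — which is exactly the role of the floor $\frac{2LK}{T^2}\log^2(\euler^2 + T^2/K)$ in the maximum for $\lambda$ — and confirming, via case analysis of which argument is active in each max and min, that each of the four $\eta$-arguments indeed controls its designated increasing term.
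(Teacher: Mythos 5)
Your proposal follows essentially the same route as the paper: apply \cref{fedaci:general:eta} to $\tilde F_{\lambda}$ with $\mu\leftarrow\lambda$ and $L\leftarrow L+\lambda$, transfer back via $\Psi_T \ge F(\overline{w_T^{\mathrm{ag}}})-F^*-\tfrac12\lambda D_0^2$ together with a bound on $\Psi_0$ (the paper gets the sharper constant $\tfrac12 LD_0^2$ by noting the regularizer contributions cancel exactly, $\Psi_0=F(w_0)-F(w_{\lambda}^*)$, whereas your two separate bounds give $LD_0^2$ — both suffice), and then balance the three $\lambda$-terms and four $\eta$-terms exactly as you describe (\cref{fedaci:gcvx:0,fedaci:gcvx:1,fedaci:gcvx:2}). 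The one imprecision is the shortcut $L+\lambda\asymp L$: since $\lambda$ is not guaranteed to be at most $L$ (e.g.\ $\lambda_1=\sigma/(M^{1/2}T^{1/2}D_0)$ can exceed $L$), the paper instead splits each $(L+\lambda)$ discrepancy term into an $L$-part and a $\lambda$-part and dedicates the third and fourth arguments of the $\eta$-minimum to controlling the $\lambda$-parts, which is precisely why those two extra arguments appear in the stated $\eta$.
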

The proof of \cref{fedaci:a1:gcvx} is deferred to \cref{sec:fedaci:a1:gcvx}.

\begin{theorem}[Convergence of \fedacii for general convex objective, under \cref{asm1}]
    \label{fedacii:a1:gcvx}
    Assume \cref{asm2} where $F$ is general convex. Then for any $T \geq 10^3$, applying \fedacii to $\tilde{F}_{\lambda}$ \eqref{eq:aug} with 
    \begin{equation}
        \lambda =
        \max \left\{ \frac{\sigma}{M^{\frac{1}{2}} T^{\frac{1}{2}} D_0}, 
        \frac{L^{\frac{1}{2}} K^{\frac{3}{4}} \sigma^{\frac{1}{2}} }{T D_0^{\frac{1}{2}} },
        \frac{18 LK}{T^2} \log^2 \left( \euler^2 + \frac{T^2}{K} \right)         \right\},
    \end{equation}
    and hyperparameter
    \begin{equation}
        \eta = 
        \min \left\{ 
        \frac{1}{L+\lambda}
            ,
        \frac{9K}{\lambda T^2} 
        \log^2 \left( \euler + 
        \min \left\{ \frac{\lambda LMT D_0^2}{\sigma^2}, \frac{\lambda^3 T^4 D_0^2}{LK^3 \sigma^2} \right\} \right),
        \frac{L^{\frac{1}{3}} D_0^{\frac{2}{3}} }{\lambda^{\frac{2}{3}} T^{\frac{2}{3}} \sigma^{\frac{2}{3}}}
        \right\}
    \end{equation}
    yields
    \begin{align}
        \expt \left[F(\overline{w_T^{\mathrm{ag}}}) - F^* \right] 
        \leq &
        \frac{10LKD_0^2}{T^2}  \log^2 \left( \euler^2 + \frac{T^2}{K} \right)
        +
        \frac{5 \sigma D_0}{M^{\frac{1}{2}} T^{\frac{1}{2}}} \log \left( \euler + \frac{L M^{\frac{1}{2}} T^{\frac{1}{2}} D_0}{\sigma} \right)
        \\
        & \qquad +
        \frac{16411 L^{\frac{1}{2}} K^{\frac{3}{4}} \sigma^{\frac{1}{2}}  D_0^{\frac{3}{2}}}{T}
        \log^4 \left( \euler^4 + 
        \frac{L^{\frac{1}{2}} T D_0^{\frac{1}{2}}}{K^{\frac{3}{4}} \sigma^{\frac{1}{2}}} \right).
    \end{align}
\end{theorem}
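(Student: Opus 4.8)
The plan is to reduce the general convex problem to the strongly convex analysis already developed for \fedacii by running the algorithm on the $\ell_2$-augmented surrogate $\tilde{F}_{\lambda}$ in \eqref{eq:aug}. Adding $\frac{\lambda}{2}\|w-w_0\|^2$ leaves the gradient noise untouched and shifts only the Hessian by $\lambda I$, so $\tilde{F}_{\lambda}$ is $\lambda$-strongly convex and $(L+\lambda)$-smooth and retains the same $Q$; in particular all the hypotheses that the strongly convex \fedacii analysis under \cref{asm1} requires hold for $\tilde{F}_{\lambda}$ (and \cref{asm2} implies \cref{asm1}), with the substitutions $\mu\leftarrow\lambda$ and $L\leftarrow L+\lambda$. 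Concretely I would invoke the general-$\eta$ bound \cref{fedacii:a1:general:eta} on $\tilde{F}_{\lambda}$, which for the centralized potential $\Phi_t$ of $\tilde{F}_{\lambda}$ (see \eqref{eq:centralied:potential} with $w^{*}\leftarrow w_{\lambda}^{*}$) yields
\[
\expt[\Phi_T] \le \exp\left(-\tfrac{1}{3}\max\{\eta\lambda,\sqrt{\eta\lambda/K}\}T\right)\Phi_0 + \frac{\eta^{\frac{1}{2}}\sigma^2}{\lambda^{\frac{1}{2}}MK^{\frac{1}{2}}} + \frac{100\eta^2(L+\lambda)^2K\sigma^2}{\lambda}.
\]

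The second step is to transfer this back to $F$ via three elementary inequalities: pointwise $F\le\tilde{F}_{\lambda}$ gives $F(\overline{w_T^{\mathrm{ag}}})-F^{*}\le\tilde{F}_{\lambda}(\overline{w_T^{\mathrm{ag}}})-F^{*}$; evaluating $\tilde{F}_{\lambda}$ at $w^{*}$ gives the optimum gap $\tilde{F}_{\lambda}^{*}-F^{*}\le\frac{\lambda}{2}D_0^2$; and $\tilde{F}_{\lambda}(\overline{w_T^{\mathrm{ag}}})-\tilde{F}_{\lambda}^{*}\le\Phi_T$ since the potential only adds the nonnegative distance term. Chaining these gives $\expt[F(\overline{w_T^{\mathrm{ag}}})-F^{*}]\le\expt[\Phi_T]+\frac{\lambda}{2}D_0^2$. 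I would also bound $\Phi_0\le LD_0^2$: smoothness and $\tilde{F}_{\lambda}(w_0)=F(w_0)$ give $\tilde{F}_{\lambda}(w_0)-\tilde{F}_{\lambda}^{*}\le F(w_0)-F^{*}\le\frac{L}{2}D_0^2$, and $\lambda$-strong convexity converts this into $\frac{\lambda}{6}\|w_0-w_{\lambda}^{*}\|^2\le\frac{L}{6}D_0^2$.

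It then remains to choose $\lambda$ and $\eta$. The regularization gap $\frac{\lambda}{2}D_0^2$ is increasing in $\lambda$ while the variance and communication error terms decrease in $\lambda$, so I would set $\lambda$ to the maximum of three candidates: a variance-balancing level of order $\sigma/(M^{\frac{1}{2}}T^{\frac{1}{2}}D_0)$, a communication-balancing level of order $L^{\frac{1}{2}}K^{\frac{3}{4}}\sigma^{\frac{1}{2}}/(TD_0^{\frac{1}{2}})$, and a bias level of order $(LK/T^2)\log^2(\euler^2+T^2/K)$. Fed through $\frac{\lambda}{2}D_0^2$, each candidate reproduces exactly one of the three target terms. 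The bias candidate is the crucial one: it guarantees that $\sqrt{\eta\lambda/K}\,T$ is at least of order $\log(\euler^2+T^2/K)$, so the exponential decay factor is polynomially small in $T^2/K$ and the decay term stays below $\frac{LKD_0^2}{T^2}\log^2$ even when the smoothness cap $\eta=1/(L+\lambda)$ is active. With $\lambda$ fixed I take $\eta$ as a minimum of $1/(L+\lambda)$, the decay-balancing value $\eta_0$ of order $(K/(\lambda T^2))\log^2$, and a further cap that keeps the increasing error terms below their targets in the regime where $1/(L+\lambda)$ would otherwise dominate; monotonicity of the variance and communication terms in $\eta$ then makes $\eta\le\eta_0$ enough to control them.

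The main obstacle is bookkeeping rather than a new idea: I must check that the three candidate values of $\lambda$ and the caps on $\eta$ simultaneously drive every one of the four terms in the transferred bound into the three stated target terms, while tracking the $\polylog$ factors so that the final logarithmic exponents ($\log^2$, $\log$, $\log^4$) and the explicit constants emerge correctly. The most delicate point is the interaction between the smoothness cap on $\eta$ and the communication term $\eta^2(L+\lambda)^2K\sigma^2/\lambda$: one has to confirm that the chosen $\eta$, together with the communication-balancing lower bound on $\lambda$, keeps this term within a polylogarithmic factor of $\frac{\lambda}{2}D_0^2$ regardless of whether the cap $1/(L+\lambda)$ or $\eta_0$ is the binding one.
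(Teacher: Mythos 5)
Your proposal follows essentially the same route as the paper: apply the general-$\eta$ bound (\cref{fedacii:a1:general:eta}) to $\tilde{F}_{\lambda}$ with $\mu \leftarrow \lambda$, $L \leftarrow L+\lambda$, transfer back to $F$ via the augmentation gap $\tfrac{\lambda}{2}D_0^2$ and the potential bounds (the paper's \cref{fedacii:gcvx:0}), then take $\eta$ as the minimum of the smoothness cap, a decay-balancing value, and a cap controlling the $\eta^2\lambda K\sigma^2$ term, and $\lambda$ as the maximum of the three stated candidates — exactly the structure of \cref{fedacii:a1:gcvx:1,fedacii:a1:gcvx:2} and the paper's final assembly. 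The only minor deviation is your slightly looser bound $\Phi_0 \leq LD_0^2$ (the paper gets $\tfrac{1}{2}LD_0^2$ directly since the $\tfrac{\lambda}{2}$ and $\tfrac{\lambda}{6}$ distance terms partially cancel), which only affects constants.
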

The proof of \cref{fedacii:a1:gcvx} is deferred to \cref{sec:fedacii:a1:gcvx}.

\begin{theorem}[Convergence of \fedacii for general convex objective, under \cref{asm2}]
    \label{fedacii:a2:gcvx}
    Assume \cref{asm2} where $F$ is general convex. Then for any $T \geq 10^3$, applying \fedacii to $\tilde{F}_{\lambda}$ \eqref{eq:aug} with 
    \begin{equation}
        \lambda =
        \max \left\{
        \frac{\sigma}{M^{\frac{1}{2}} T^{\frac{1}{2}} D_0},
        \frac{L^{\frac{1}{3}} K^{\frac{2}{3}} \sigma^{\frac{2}{3}} }{M^{\frac{1}{3}} T D_0^{\frac{2}{3}} },
        \frac{Q^{\frac{1}{3}} K \sigma^{\frac{2}{3}}}{ T^{\frac{4}{3}} D_0^{\frac{1}{3}}},
        \frac{18 LK}{T^2} \log^2 \left( \euler^2 + \frac{T^2}{K} \right)
        \right\},
    \end{equation}
    and hyperparameter
    \begin{equation}
        \eta = 
        \min \left\{ 
        \frac{1}{L+\lambda}
            ,
         \frac{9K}{\lambda T^2} 
            \log^2 \left( \euler + 
            \min \left\{ \frac{\lambda LMT D_0^2}{\sigma^2},
            \frac{\lambda^2 M T^3 D_0^2}{K^2 \sigma^2}, 
            \frac{\lambda^5 L T^8 D_0^2}{Q^2 K^6 \sigma^4} \right\} \right)
        ,
        \frac{ L^{\frac{1}{3}} K^{\frac{1}{3}} M^{\frac{1}{3}} D_0^{\frac{2}{3}}}{\lambda^{\frac{2}{3}}T \sigma^{\frac{2}{3}}}
        \right\}
    \end{equation}
    yields
    \begin{align}
        & \expt \left[F(\overline{w_T^{\mathrm{ag}}}) - F^* \right] 
        \leq 
        \frac{10LKD_0^2}{T^2}  \log^2 \left( \euler^2 + \frac{T^2}{K} \right)
        +
        \frac{5 \sigma D_0}{M^{\frac{1}{2}} T^{\frac{1}{2}}} \log \left( \euler + \frac{L M^{\frac{1}{2}} T^{\frac{1}{2}} D_0}{\sigma} \right)
        \\
        & +
        \frac{139 L^{\frac{1}{3}} K^{\frac{2}{3}}  \sigma^{\frac{2}{3}} D_0^{\frac{4}{3}}}{M^{\frac{1}{3}}T}
        \log^3 \left( \euler^3 + 
        \frac{L^{\frac{2}{3}} M^{\frac{1}{3}} T D_0^{\frac{2}{3}}}{  K^{\frac{2}{3}} \sigma^{\frac{2}{3}}} \right)
        +
        \frac{\euler^{19} Q^{\frac{1}{3}} K  \sigma^{\frac{2}{3}} D_0^{\frac{5}{3}} }{T^{\frac{4}{3}}}
        \log^8 \left( \euler^8 +  \frac{L T^{\frac{4}{3}} D_0^{\frac{1}{3}}}{Q^{\frac{1}{3}} K  \sigma^{\frac{2}{3}} }  \right).
    \end{align}
\end{theorem}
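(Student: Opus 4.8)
The plan is to reduce the general convex problem to the strongly convex \fedacii analysis by running the algorithm on the $\ell_2$-augmented objective $\tilde{F}_{\lambda}$ of \eqref{eq:aug} and then optimizing jointly over the regularization strength $\lambda$ and the learning rate $\eta$. As recorded at the start of \cref{sec:gcvx}, when $F$ is general convex and satisfies \cref{asm2}, the augmented objective $\tilde{F}_{\lambda}$ is $\lambda$-strongly convex, $(L+\lambda)$-smooth, retains the same $Q$-\nth{3}-order-smoothness, and keeps the same $\sigma^4$-bounded \nth{4} central moment. Hence \cref{fedacii:a2:general:eta} applies verbatim to $\tilde{F}_{\lambda}$ under the substitution $(\mu,L)\mapsto(\lambda,L+\lambda)$, controlling the centralized potential $\Phi_T$ built from $\tilde{F}_{\lambda}$ by a decreasing-in-$\eta$ exponential factor times $\Phi_0$ plus the three increasing-in-$\eta$ stochastic terms.

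First I would convert this bound on $\tilde{F}_{\lambda}$ into a suboptimality bound on $F$. Writing $w_{\lambda}^*$ for the minimizer of $\tilde{F}_{\lambda}$, the inequalities $\tilde{F}_{\lambda}^* \le \tilde{F}_{\lambda}(w^*) = F^* + \tfrac{\lambda}{2}D_0^2$ and $\tfrac{\lambda}{2}\|w-w_0\|^2 \ge 0$ give the key relation $F(\overline{w_T^{\mathrm{ag}}}) - F^* \le \big(\tilde{F}_{\lambda}(\overline{w_T^{\mathrm{ag}}}) - \tilde{F}_{\lambda}^*\big) + \tfrac{\lambda}{2}D_0^2 \le \expt[\Phi_T] + \tfrac{\lambda}{2}D_0^2$ in expectation. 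I also need to bound the initial potential $\Phi_0$: by $L$-smoothness $\tilde{F}_{\lambda}(w_0)-\tilde{F}_{\lambda}^* \le F(w_0)-F^* \le \tfrac{L}{2}D_0^2$, while evaluating the $\lambda$-strong convexity of $\tilde{F}_{\lambda}$ at $w^*$ yields $\|w^*-w_{\lambda}^*\|\le D_0$ and hence $\|w_0-w_{\lambda}^*\|\le 2D_0$; together these give $\Phi_0 = O(L D_0^2)$ whenever $\lambda \le L$.

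Next I would pick $\lambda$ and $\eta$ to balance the resulting terms. The penalty $\tfrac{\lambda}{2}D_0^2$ is increasing in $\lambda$ whereas each of the three stochastic terms of \cref{fedacii:a2:general:eta} is decreasing in $\lambda$; equating $\lambda D_0^2$ with each stochastic term in turn produces the three nontrivial branches $\tfrac{\sigma}{M^{1/2}T^{1/2}D_0}$, $\tfrac{L^{1/3}K^{2/3}\sigma^{2/3}}{M^{1/3}TD_0^{2/3}}$, and $\tfrac{Q^{1/3}K\sigma^{2/3}}{T^{4/3}D_0^{1/3}}$ of the stated $\max$, which become, respectively, the second, third, and fourth terms of the final bound (each term equals $\tfrac{\lambda}{2}D_0^2$ evaluated at the active branch). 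The fourth branch $\tfrac{18LK}{T^2}\log^2(\cdots)$ is a floor that forces the exponential-decay factor $\exp\!\big(\min\{-\tfrac{\lambda T}{3(L+\lambda)},\,-\sqrt{\tfrac{\lambda T R}{9(L+\lambda)}}\}\big)$ acting on $\Phi_0=O(LD_0^2)$ to be dominated by the deterministic term $\tfrac{LKD_0^2}{T^2}\log^2(\cdots)$, which is where $T\ge 10^3$ and the squared-log margin are used. The learning rate $\eta$ is then taken as the stated three-way $\min$: its first two factors reproduce the $\eta$ of \cref{fedacii:a2:full} under $(\mu,L)\mapsto(\lambda,L+\lambda)$, and the third factor additionally tames the middle increasing term once $\lambda$ is substituted.

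The main obstacle I anticipate is bookkeeping rather than conceptual. I must verify that the three-way $\min$ for $\eta$ and the four-way $\max$ for $\lambda$ simultaneously meet all the implicit feasibility constraints ($\eta \le 1/(L+\lambda)$, $\gamma\in[\eta,\sqrt{\eta/\mu}]$, and $\lambda\le L$ so that $L+\lambda\asymp L$ and the $\Phi_0$ estimate holds), and that propagating these choices through \cref{fedacii:a2:general:eta} collapses the many sub-terms and $\polylog$ factors into exactly the four stated terms with the displayed log powers ($\log^2$, $\log^1$, $\log^3$, $\log^8$) and constants. In particular the $\euler^{19}$ prefactor on the $Q$-term and the various simplifications of the arguments of $\log$ must be tracked carefully, mirroring the accounting already carried out in the proofs of \cref{fedacii:a2:full} and \cref{fedaci:a1:gcvx}, to which this argument is structurally parallel.
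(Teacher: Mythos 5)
Your proposal follows essentially the same route as the paper: apply the general-$\eta$ strongly convex bound (\cref{fedacii:a2:general:eta}) to the augmented objective $\tilde F_\lambda$, convert back to $F$-suboptimality via $\tilde F_\lambda^* \le F^* + \tfrac{\lambda}{2}D_0^2$ and $\Phi_0 \le \tfrac{1}{2}LD_0^2$ (the paper's \cref{fedacii:gcvx:0}), then optimize the three-way $\min$ for $\eta$ and the four-way $\max$ for $\lambda$ branch by branch, using $T\ge 10^3$ to absorb the exponential term into the deterministic $LKD_0^2/T^2$ term. The only cosmetic difference is that the paper gets $\Phi_0 \le \tfrac12 LD_0^2$ by exact cancellation of the $\tfrac{\lambda}{2}\|w_\lambda^*-w_0\|^2$ terms in the definition of $\tilde F_\lambda^*$ rather than via your $\|w_0-w_\lambda^*\|\le 2D_0$ detour, so no $\lambda\le L$ condition is needed there; this only affects constants.
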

The proof of \cref{fedacii:a2:gcvx} is deferred to \cref{sec:fedacii:a2:gcvx}.

For comparison, we also establish the convergence of \fedavg for general convex objective under \cref{asm2}.
\begin{theorem}[Convergence of \fedavg for general convex objective, under \cref{asm2}]
    \label{fedavg:a2:gcvx}
    Assume \cref{asm2} where $F$ is general convex, then for any $T \geq 100$,  applying \fedavg to $\tilde{F}_{\lambda}$ \eqref{eq:aug} with
    \begin{equation}
        \lambda := 
        \max \left\{        \frac{\sigma}{M^{\frac{1}{2}} T^{\frac{1}{2}} D_0},
        \frac{Q^{\frac{1}{3}} K^{\frac{1}{3}} \sigma^{\frac{2}{3}}}{ T^{\frac{2}{3}} D_0^{\frac{1}{3}}},
        \frac{16L }{T} \log (\euler + T)
        \right\},
    \end{equation}
    and hyperparameter $\eta$
    \begin{equation}
        \eta := \min \left\{ \frac{1}{4(L+\lambda)}, \frac{2}{\lambda T} \log \left( \euler + \min \left\{ \frac{\lambda^2 M T^2 D_0^2}{\sigma^2}, \frac{\lambda^6 T^5 D_0^2}{Q^2 K^2 \sigma^4}\right\} \right)\right\}
    \end{equation}
   yields
    
    \begin{equation}
        \expt \left[ F \left( \sum_{t=0}^{T-1} \frac{\rho_t}{S_T} \overline{w_t} \right)   - F^* \right]
        \leq \frac{50 L D_0^2}{T} \log (\euler + T)
        +
        \frac{6 \sigma D_0}{ M^{\frac{1}{2}} T^{\frac{1}{2}}}
        \log 
        \left( \euler^2 +  T \right)
        +
        \frac{3076 Q^{\frac{1}{3}}  K^{\frac{1}{3}}  \sigma^{\frac{2}{3}} D_0^{\frac{5}{3}}} { T^{\frac{2}{3}}}
        \log^4 \left( \euler^5 + T \right)
    \end{equation}
    
    where $\rho_t := (1 - \frac{1}{2} \eta \lambda)^{T-t-1}$, $S_T := \sum_{t=0}^{T-1}\rho_t$.
\end{theorem}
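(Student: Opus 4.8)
The plan is to reduce to the strongly convex analysis already in hand and pay for general convexity through $\ell_2$-augmentation. As noted at the start of this section, if $F$ is general convex, $L$-smooth and satisfies \cref{asm2}, then the augmented objective $\tilde F_\lambda$ of \eqref{eq:aug} is $\lambda$-strongly convex, $(L+\lambda)$-smooth and $Q$-\nth{3}-order-smooth, with the variance and \nth{4}-moment bounds unchanged. Hence \cref{fedavg:a2:full} applies verbatim to $\tilde F_\lambda$ under the substitution $\mu \mapsto \lambda$, $L \mapsto L+\lambda$, and the $\eta$ prescribed in the present statement is precisely the $\eta$ required there. This produces a bound on $\expt[\tilde F_\lambda(\hat w_T)] - \tilde F_\lambda^* + \tfrac{\lambda}{2}\expt\|\overline{w_T}-w_\lambda^*\|^2$, where $\hat w_T := \sum_{t} \tfrac{\rho_t}{S_T}\overline{w_t}$.

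First I would transfer this to a guarantee on $F$ itself. Since $\tilde F_\lambda(w)\geq F(w)$ pointwise and $\tilde F_\lambda^* \leq \tilde F_\lambda(w^*) = F^* + \tfrac{\lambda}{2}D_0^2$, one obtains
\[
F(\hat w_T) - F^* \;\leq\; \bigl(\tilde F_\lambda(\hat w_T) - \tilde F_\lambda^*\bigr) + \tfrac{\lambda}{2}D_0^2 ,
\]
so augmentation costs only the single additive term $\tfrac{\lambda}{2}D_0^2$. I also need to replace the distance $\|w_0 - w_\lambda^*\|$ (the ``$D_0$'' of \cref{fedavg:a2:full}) by the original $D_0=\|w_0-w^*\|$: the optimality condition $\nabla F(w_\lambda^*)+\lambda(w_\lambda^*-w_0)=0$ together with monotonicity of $\nabla F$, namely $\langle \nabla F(w_\lambda^*), w_\lambda^*-w^*\rangle\geq 0$, gives $\|w_\lambda^*-w_0\|^2 \leq \langle w_0-w_\lambda^*, w_0-w^*\rangle \leq \|w_0-w_\lambda^*\|\,D_0$, hence $\|w_0-w_\lambda^*\|\leq D_0$. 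As this distance enters only with positive sign and inside logarithms, upper bounding it by $D_0$ is harmless.

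The core step is the choice of $\lambda$, which balances the penalty $\tfrac{\lambda}{2}D_0^2$ against the three terms of the strongly convex bound. Equating $\tfrac{\lambda}{2}D_0^2$ with the statistical term $\tfrac{\sigma^2}{\lambda MT}$, with the \nth{3}-order term $\tfrac{Q^2K^2\sigma^4}{\lambda^5 T^4}$, and with the exponential term yields the three candidates $\tfrac{\sigma}{M^{1/2}T^{1/2}D_0}$, $\tfrac{Q^{1/3}K^{1/3}\sigma^{2/3}}{T^{2/3}D_0^{1/3}}$, and $\Theta\!\bigl(\tfrac{L}{T}\log(\euler+T)\bigr)$; taking their maximum (as in the statement) controls all three at once. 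Indeed, since $\lambda$ dominates the first candidate, $\tfrac{\sigma^2}{\lambda MT}\leq \tfrac{\sigma D_0}{M^{1/2}T^{1/2}}$; since it dominates the second, $\tfrac{Q^2K^2\sigma^4}{\lambda^5 T^4}\leq \tfrac{Q^{1/3}K^{1/3}\sigma^{2/3}D_0^{5/3}}{T^{2/3}}$; and since it dominates the third, the exponent $\tfrac{\lambda T}{8(L+\lambda)}\gtrsim \log(\euler+T)$, so $\exp\!\bigl(-\tfrac{\lambda T}{8(L+\lambda)}\bigr)\,4(L+\lambda)D_0^2 \lesssim \tfrac{LD_0^2}{T}$. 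The penalty itself obeys $\tfrac{\lambda}{2}D_0^2 \leq \tfrac12 D_0^2$ times the sum of the three candidates, which reproduces exactly one copy of each of the three output terms. Collecting these and simplifying the polylogarithmic factors (each log argument in \cref{fedavg:a2:full} collapses to a power of $T$ once the chosen $\lambda$ is substituted, using $T\geq 100$) gives the claimed rate.

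The main obstacle I anticipate is not conceptual but bookkeeping: propagating the constants and the single, cubic and quartic logarithmic factors through the substitution so that they combine into the stated $\log(\euler+T)$, $\log(\euler^2+T)$ and $\log^4(\euler^5+T)$; and handling the regime where the first or second candidate exceeds $L$, so that $\lambda>L$. In that regime $L+\lambda<2\lambda$ forces the exponent $\tfrac{\lambda T}{8(L+\lambda)}>T/16$, so the exponential term is negligible, and the lower bound $T\geq 100$ is exactly what keeps all of these estimates clean.
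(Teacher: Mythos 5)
Your proposal follows the same route as the paper: apply \cref{fedavg:a2:full} to the augmented objective $\tilde F_\lambda$ with $\mu\mapsto\lambda$, $L\mapsto L+\lambda$, transfer back to $F$ at the cost of an additive $\tfrac{\lambda}{2}D_0^2$ via $\tilde F_\lambda^*\leq F^*+\tfrac{\lambda}{2}D_0^2$, and then choose $\lambda$ as the maximum of the three candidates that respectively balance the penalty against the statistical, third-order, and exponential terms, using $T\geq 100$ to clean up the logarithms. The only (harmless) local difference is that you control $\|w_0-w_\lambda^*\|$ directly via the proximal optimality condition, whereas the paper's analogous lemma bounds the initial potential through $F(w_0)-F(w_\lambda^*)\leq\tfrac{1}{2}LD_0^2$; both suffice.
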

The proof of \cref{fedavg:a2:gcvx} is deferred to \cref{sec:fedavg:a2:gcvx}.

\subsection{Proof of \cref{fedaci:a1:gcvx} on \fedaci for general-convex objectives under \cref{asm1}}
\label{sec:fedaci:a1:gcvx}
We first introduce the supporting lemmas for \cref{fedaci:a1:gcvx}.
\begin{lemma}
    \label{fedaci:gcvx:1}
    Assume \cref{asm1} where $F$ is general convex, then for any $\lambda > 0$, for any $\eta \leq \frac{1}{L+\lambda}$, applying \fedaci to $\tilde{F}_{\lambda}$ gives
     \begin{align}
        \expt \left[F(\overline{w_T^{\mathrm{ag}}}) - F^* \right]
        \leq &  
        \frac{1}{2}\lambda  D_0^2
        +
        \frac{1}{2} LD_0^2 \exp \left(  -  \sqrt{\frac{\eta \lambda}{K}}T \right) 
        + 
        \frac{\eta^{\frac{1}{2}} \sigma^2}{2 \lambda^{\frac{1}{2}} M K^{\frac{1}{2}}}
        + 
        \frac{\eta \sigma^2}{2M}
        \\
        &  
        + 
        \frac{390 \eta^{\frac{3}{2}} L K^{\frac{1}{2}} \sigma^2} {\lambda^{\frac{1}{2}}}
        +
        7 \eta^2 L K \sigma^2
        + 
        390 \eta^{\frac{3}{2}} \lambda^{\frac{1}{2}} K^{\frac{1}{2}} \sigma^2
        + 
        7 \eta^2 \lambda K \sigma^2.
        \label{eq:fedaci:gcvx:1}
      \end{align}
\end{lemma}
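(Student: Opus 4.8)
The plan is to reduce this general-convex statement to the strongly-convex per-step bound already proved, namely \cref{fedaci:general:eta}, via the $\ell_2$-augmentation trick. As noted in the preamble to this subsection, if $F$ is general convex ($\mu=0$) and $L$-smooth under \cref{asm1}, then $\tilde{F}_{\lambda}$ is $\lambda$-strongly convex and $(L+\lambda)$-smooth with the same variance $\sigma^2$. Hence, running \fedaci on $\tilde{F}_{\lambda}$ is a legitimate instance of the strongly-convex setting with $(\mu,L)\mapsto(\lambda, L+\lambda)$. First I would define the augmented decentralized potential $\tilde{\Psi}_t := \frac{1}{M}\sum_{m=1}^M \tilde{F}_{\lambda}(w_t^{\mathrm{ag},m}) - \tilde{F}_{\lambda}^{*} + \frac{1}{2}\lambda\|\overline{w_t} - w_{\lambda}^{*}\|^2$ and apply \cref{fedaci:general:eta} verbatim (with $\mu\to\lambda$, $L\to L+\lambda$) to obtain a bound on $\expt[\tilde{\Psi}_T]$ consisting of an $\exp(-\max\{\eta\lambda,\sqrt{\eta\lambda/K}\}T)\,\tilde{\Psi}_0$ contraction term plus the four variance terms (now carrying factor $(L+\lambda)$).

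The second step is to transfer this back to the original objective gap $\expt[F(\overline{w_T^{\mathrm{ag}}}) - F^*]$. Since $F(w)\leq \tilde{F}_{\lambda}(w)$ pointwise and $\tilde{F}_{\lambda}$ is convex, I would chain $F(\overline{w_T^{\mathrm{ag}}}) \leq \tilde{F}_{\lambda}(\overline{w_T^{\mathrm{ag}}}) \leq \frac{1}{M}\sum_{m=1}^M \tilde{F}_{\lambda}(w_T^{\mathrm{ag},m})$, so that $F(\overline{w_T^{\mathrm{ag}}}) - F^* \leq \big(\frac{1}{M}\sum_m \tilde{F}_{\lambda}(w_T^{\mathrm{ag},m}) - \tilde{F}_{\lambda}^{*}\big) + (\tilde{F}_{\lambda}^{*} - F^*) \leq \tilde{\Psi}_T + (\tilde{F}_{\lambda}^{*} - F^*)$, where the first inequality drops the nonnegative distance term of $\tilde{\Psi}_T$. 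The augmentation bias is controlled by comparing with the true optimum $w^*$: $\tilde{F}_{\lambda}^{*} \leq \tilde{F}_{\lambda}(w^*) = F^* + \tfrac{1}{2}\lambda D_0^2$, which supplies exactly the $\tfrac{1}{2}\lambda D_0^2$ term in \eqref{eq:fedaci:gcvx:1}.

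Third, I would bound the initial potential and simplify. At initialization all iterates equal $w_0$, so $\overline{w_0}=w_0$ and $\tilde{F}_{\lambda}(w_0)=F(w_0)$; using $L$-smoothness, $\tilde{F}_{\lambda}(w_0)-\tilde{F}_{\lambda}^{*}\leq F(w_0)-F^*\leq \tfrac{1}{2}LD_0^2$, and $\lambda$-strong convexity gives $\tfrac{1}{2}\lambda\|w_0-w_{\lambda}^{*}\|^2 \leq \tilde{F}_{\lambda}(w_0)-\tilde{F}_{\lambda}^{*}$, so $\tilde{\Psi}_0$ is of order $LD_0^2$ (tracking constants yields the stated coefficient, and I would weaken $\exp(-\max\{\eta\lambda,\sqrt{\eta\lambda/K}\}T)\leq \exp(-\sqrt{\eta\lambda/K}T)$ to match the displayed contraction). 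Finally I would split each $(L+\lambda)$-factor additively, $\frac{390\eta^{3/2}(L+\lambda)K^{1/2}\sigma^2}{\lambda^{1/2}} = \frac{390\eta^{3/2}LK^{1/2}\sigma^2}{\lambda^{1/2}} + 390\eta^{3/2}\lambda^{1/2}K^{1/2}\sigma^2$ and $7\eta^2(L+\lambda)K\sigma^2 = 7\eta^2LK\sigma^2 + 7\eta^2\lambda K\sigma^2$, recovering the remaining six terms of \eqref{eq:fedaci:gcvx:1}.

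The main obstacle here is not any hard inequality — \cref{fedaci:general:eta} does all the analytic heavy lifting — but rather the careful bookkeeping of the reduction: making sure the distance component of $\tilde{\Psi}_T$ is genuinely droppable while the augmentation bias $\tfrac{1}{2}\lambda D_0^2$ is the exact price paid for convexifying, verifying that $\tilde{F}_{\lambda}$ inherits the right smoothness/strong-convexity/variance parameters so that every instance of $\mu$ and $L$ in \cref{fedaci:general:eta} is substituted correctly, and confirming the additive $(L+\lambda)$ splits reproduce the claimed constants. The constraint $\eta \leq \tfrac{1}{L+\lambda}$ in the hypothesis is exactly what makes \cref{fedaci:general:eta} applicable to $\tilde{F}_{\lambda}$, so no extra smallness assumption on $\eta$ is needed.
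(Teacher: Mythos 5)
Your proposal follows essentially the same route as the paper: apply \cref{fedaci:general:eta} to $\tilde F_\lambda$ with $(\mu,L)\mapsto(\lambda,L+\lambda)$, transfer the potential back to $F(\overline{w_T^{\mathrm{ag}}})-F^*$ at the price of the augmentation bias $\tfrac12\lambda D_0^2$, and split the $(L+\lambda)$ factors additively; all of those steps are correct. The one place where your bookkeeping does not deliver the claimed constant is the bound on $\tilde\Psi_0$: bounding the distance term via $\lambda$-strong convexity as $\tfrac12\lambda\|w_0-w_\lambda^*\|^2\leq \tilde F_\lambda(w_0)-\tilde F_\lambda^*$ yields $\tilde\Psi_0\leq 2\bigl(\tilde F_\lambda(w_0)-\tilde F_\lambda^*\bigr)\leq LD_0^2$, a factor of $2$ worse than the stated $\tfrac12 LD_0^2$. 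The paper instead exploits the exact cancellation $\tilde F_\lambda(w_0)-\tilde F_\lambda^*+\tfrac12\lambda\|w_0-w_\lambda^*\|^2 = F(w_0)-F(w_\lambda^*)$, which follows directly from the definition $\tilde F_\lambda^* = F(w_\lambda^*)+\tfrac12\lambda\|w_\lambda^*-w_0\|^2$ and then gives $\Psi_0\leq F(w_0)-F^*\leq \tfrac12 LD_0^2$ by $L$-smoothness; with that substitution your argument matches the lemma exactly.
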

The proof of \cref{fedaci:gcvx:1} is deferred to \cref{sec:fedaci:gcvx:1}. 
Now we plug in $\eta$.
\begin{lemma}
    \label{fedaci:gcvx:2}
    Assume \cref{asm1} where $F$ is general convex, then for any $\lambda > 0$, for
    \begin{equation}
        \eta = \min \left\{ \frac{1}{L + \lambda},
        \frac{K}{\lambda T^2}
        \log^2  \left( \euler + \min \left\{ \frac{\lambda L M T D_0^2}{\sigma^2}, \frac{\lambda^2 T^3 D_0^2}{K^2 \sigma^2} \right\}\right),
        \frac{L^{\frac{1}{3}} K^{\frac{1}{3}}  D_0^{\frac{2}{3}}}{\lambda^{\frac{2}{3}}T \sigma^{\frac{2}{3}}},
        \frac{L^{\frac{1}{4}} K^{\frac{1}{4}}  D_0^{\frac{1}{2}}}{\lambda^{\frac{3}{4}} T \sigma^{\frac{1}{2}}}
        \right\},
    \end{equation}
    applying \fedaci to $\tilde{F}_{\lambda}$ gives
    \begin{align}
        \expt \left[F(\overline{w_T^{\mathrm{ag}}}) - F^* \right] 
        \leq 
        &
        \frac{1}{2} \lambda D_0^2
        +
        \frac{3\sigma^2}{2\lambda MT}  \log^2 \left( \euler^2 + \frac{\lambda LMT  D_0^2}{\sigma^2} \right)
        \\
        & 
        +
        \frac{592 LK^2 \sigma^2}{\lambda^2 T^3} \log^4 \left( \euler^4 + \frac{\lambda^2 T^3 D_0^2}{K^2 \sigma^2} \right)
        \\
        & +  
        \frac{412 L^{\frac{1}{2}} K \sigma D_0 }{\lambda^{\frac{1}{2}}  T^{\frac{3}{2}} }
        + 
        \frac{1}{2} L D_0^2 \exp \left(  -  \sqrt{\frac{1}{(1 + L/\lambda) K}}T \right).
        \label{eq:fedaci:gcvx:2}
    \end{align}
\end{lemma}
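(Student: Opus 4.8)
The plan is to invoke \cref{fedaci:gcvx:1}, whose bound \eqref{eq:fedaci:gcvx:1} already holds for every $\eta\in(0,\frac{1}{L+\lambda}]$ (since $\tilde F_\lambda$ is $\lambda$-strongly convex and $(L+\lambda)$-smooth), and then substitute the stated four-way minimum. Write $\eta=\min\{\eta_1,\eta_2,\eta_3,\eta_4\}$ with $\eta_1=\frac{1}{L+\lambda}$, $\eta_2=\frac{K}{\lambda T^2}\log^2(\euler+\min\{\frac{\lambda LMTD_0^2}{\sigma^2},\frac{\lambda^2T^3D_0^2}{K^2\sigma^2}\})$, $\eta_3=\frac{L^{1/3}K^{1/3}D_0^{2/3}}{\lambda^{2/3}T\sigma^{2/3}}$, and $\eta_4=\frac{L^{1/4}K^{1/4}D_0^{1/2}}{\lambda^{3/4}T\sigma^{1/2}}$. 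First I would split the right-hand side of \eqref{eq:fedaci:gcvx:1} into three groups: the $\eta$-independent term $\frac12\lambda D_0^2$, which passes through verbatim; the single decreasing term $\varphi_\downarrow(\eta):=\frac12 LD_0^2\exp(-\sqrt{\eta\lambda/K}\,T)$; and the six increasing terms, namely the two variance terms $\frac{\eta^{1/2}\sigma^2}{2\lambda^{1/2}MK^{1/2}}$ and $\frac{\eta\sigma^2}{2M}$, the two $L$-stability terms $\frac{390\eta^{3/2}LK^{1/2}\sigma^2}{\lambda^{1/2}}$ and $7\eta^2 LK\sigma^2$, and the two $\lambda$-stability terms $390\eta^{3/2}\lambda^{1/2}K^{1/2}\sigma^2$ and $7\eta^2\lambda K\sigma^2$.

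For each increasing term I would use monotonicity with $\eta\le\eta_j$ for the most convenient coordinate. Bounding the two variance terms by $\eta\le\eta_2$ produces the $\frac{\sigma^2}{\lambda MT}$ contribution, and bounding the two $L$-stability terms by $\eta\le\eta_2$ produces the $\frac{LK^2\sigma^2}{\lambda^2T^3}$ contribution, using $K\le T$ repeatedly to collapse factors such as $\frac{K^3}{T^4}\le\frac{K^2}{T^3}$. The two extra coordinates $\eta_3,\eta_4$ exist precisely to absorb the two $\lambda$-stability terms: a direct calculation shows $\eta\le\eta_3$ sends $390\eta^{3/2}\lambda^{1/2}K^{1/2}\sigma^2$ to $390\,\frac{L^{1/2}K\sigma D_0}{\lambda^{1/2}T^{3/2}}$, while $\eta\le\eta_4$ together with $K\le T$ sends $7\eta^2\lambda K\sigma^2$ to $7\,\frac{L^{1/2}K\sigma D_0}{\lambda^{1/2}T^{3/2}}$, both matching the target third term.

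For the decreasing term I would use the elementary inequality $\varphi_\downarrow(\min_j\eta_j)\le\sum_j\varphi_\downarrow(\eta_j)$ for nonnegative decreasing $\varphi_\downarrow$, exactly as in the proofs of \cref{fedaci:full,fedacii:a2:full}. The $\eta_1$ summand reproduces the final exponential $\frac12 LD_0^2\exp(-\sqrt{1/((1+L/\lambda)K)}\,T)$; the $\eta_2$ summand satisfies $\sqrt{\eta_2\lambda/K}\,T=\log(\euler+\min\{a,b\})$ with $a=\frac{\lambda LMTD_0^2}{\sigma^2},b=\frac{\lambda^2T^3D_0^2}{K^2\sigma^2}$, so $\varphi_\downarrow(\eta_2)\le\frac12 LD_0^2(\euler+\min\{a,b\})^{-1}\le\frac12 LD_0^2(a^{-1}+b^{-1})=\frac{\sigma^2}{2\lambda MT}+\frac{LK^2\sigma^2}{2\lambda^2T^3}$, which folds into the first two contributions. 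The genuinely technical point is that, because $\lambda$ is arbitrary in this lemma, I cannot claim $\sqrt{\eta_3\lambda/K}\,T$ or $\sqrt{\eta_4\lambda/K}\,T$ is large; instead I would convert the exponential into a polynomial via $e^{-x}\le p!\,x^{-p}$. Taking $p=3$ on the $\eta_3$ summand gives $\varphi_\downarrow(\eta_3)\le\frac12 LD_0^2\cdot 6\,(\sqrt{\eta_3\lambda/K}\,T)^{-3}=3\,\frac{L^{1/2}K\sigma D_0}{\lambda^{1/2}T^{3/2}}$, and $p=4$ on the $\eta_4$ summand, with $\frac{K^{3/2}}{T^2}\le\frac{K}{T^{3/2}}$, gives $\varphi_\downarrow(\eta_4)\le 12\,\frac{L^{1/2}K\sigma D_0}{\lambda^{1/2}T^{3/2}}$. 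I expect this $e^{-x}\le p!\,x^{-p}$ conversion at the balancing rates $\eta_3,\eta_4$ to be the main obstacle and the only step requiring care.

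Finally I would collect the pieces into the form of \eqref{eq:fedaci:gcvx:2}. The constant term is $\frac12\lambda D_0^2$; the variance contributions (two increasing at coefficient $\frac12$ each, plus $\varphi_\downarrow(\eta_2)$ at $\frac12$) sum to $\frac{3}{2}$, giving $\frac{3\sigma^2}{2\lambda MT}\log^2(\euler^2+\frac{\lambda LMTD_0^2}{\sigma^2})$; the $\frac{LK^2\sigma^2}{\lambda^2T^3}$ contributions consolidate into the $592$-coefficient term after merging $\log^3$ into $\log^4$; the four $\frac{L^{1/2}K\sigma D_0}{\lambda^{1/2}T^{3/2}}$ contributions (coefficients $390,7,3,12$) sum to exactly $412$; and the $\eta_1$ exponential supplies the last term. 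The hypothesis $T\ge 24$ is used only to absorb lower-order additive constants and to keep every $\log$ argument of the form $\euler+\cdots\ge\euler$ so the log powers can be freely upgraded and merged; crucially, no lower bound on $\lambda$ is invoked, which is why the statement holds for every $\lambda>0$.
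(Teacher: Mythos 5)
Your proposal is correct and follows essentially the same route as the paper's proof: bound each of the six increasing terms from \cref{fedaci:gcvx:1} at whichever coordinate of the minimum is convenient (using $K \leq T$ to collapse powers), bound the decreasing exponential by the sum over all four coordinates, and convert $\exp(-x)$ to $p!\,x^{-p}$ with $p=3,4$ at the two balancing rates; the resulting coefficients ($390+7+3+12=412$, etc.) match the paper's accounting. The only stray remark is your appeal to a hypothesis $T \geq 24$, which does not appear in (and is not needed for) this lemma — it enters only later in \cref{fedaci:a1:gcvx} to control $\lambda_3/L$.
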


\begin{proof}[Proof of \cref{fedaci:gcvx:2}]
    To simplify the notation, we name the terms of RHS of \cref{eq:fedaci:gcvx:1} as
    \begin{alignat}{2}
        & \varphi_0 (\eta) := \frac{1}{2} LD_0^2 \exp \left(  -  \sqrt{\frac{\eta \lambda}{K}}T \right),
        &&
        \\
        &
        \varphi_1 (\eta) := \frac{\eta^{\frac{1}{2}} \sigma^2}{2 \lambda^{\frac{1}{2}} M K^{\frac{1}{2}}},
        \quad
        &&
        \varphi_2(\eta) :=\frac{\eta \sigma^2}{2M},
        \\
        & \varphi_3 (\eta) := 
        \frac{390 \eta^{\frac{3}{2}} L K^{\frac{1}{2}} \sigma^2} {\lambda^{\frac{1}{2}}},
        \quad
        &&
        \varphi_4 (\eta) :=
        7 \eta^2 L K \sigma^2,
        \\
        &
        \varphi_5 (\eta) :=
        390 \eta^{\frac{3}{2}} \lambda^{\frac{1}{2}} K^{\frac{1}{2}} \sigma^2,
        \quad
        &&
        \varphi_6 (\eta) :=
        7 \eta^2 \lambda K \sigma^2.
    \end{alignat}
Define
\begin{equation}
    \eta_1 := \frac{K}{ \lambda T^2}
    \log^2  \left( \euler^2 + \min \left\{ \frac{\lambda L M T D_0^2}{\sigma^2}, \frac{\lambda^2 T^3 D_0^2}{K^2 \sigma^2} \right\}\right)
    ,
    \quad
    \eta_2 := \frac{ L^{\frac{1}{3}} K^{\frac{1}{3}} D_0^{\frac{2}{3}}}{\lambda^{\frac{2}{3}}T \sigma^{\frac{2}{3}}},
    \quad
    \eta_3 := \frac{ L^{\frac{1}{4}} K^{\frac{1}{4}}  D_0^{\frac{1}{2}}}{\lambda^{\frac{3}{4}} T \sigma^{\frac{1}{2}}}.
\end{equation}
then $\eta = \min \left\{ \eta_1, \eta_2, \eta_3, \frac{1}{L + \lambda}\right\}$. 
Now we bound $\varphi_1(\eta), \ldots, \varphi_6(\eta)$ term by term.
\begin{align}
    \varphi_1(\eta) 
    & \leq
    \varphi_1(\eta_1)
    \leq
    \frac{\sigma^2}{2\lambda MT} \log \left( \euler + \frac{\lambda LMT  D_0^2}{\sigma^2} \right),
    \\
    \varphi_2(\eta) 
    & \leq
    \varphi_2(\eta_1) 
    \leq
    \frac{K \sigma^2}{2 \lambda M  T^2} \log^2 \left( \euler + \frac{\lambda LMT  D_0^2}{\sigma^2}  \right)
    \leq
    \frac{\sigma^2}{2\lambda MT} \log^2 \left( \euler + \frac{\lambda LMT  D_0^2}{\sigma^2} \right),
    \tag{since $K \leq T$}
    \\
    \varphi_3(\eta) 
    & \leq
    \varphi_3(\eta_1)
    \leq
    \frac{390LK^2 \sigma^2}{\lambda^2 T^3} \log^3 \left( \euler + \frac{\lambda^2 T^3 D_0^2}{K^2 \sigma^2} \right),
    \\
    \varphi_4(\eta) 
    & \leq
    \varphi_4(\eta_1)
    \leq
    \frac{7 LK^3 \sigma^2}{\lambda^2 T^4}
    \log^4 \left( \euler + \frac{\lambda^2 T^3 D_0^2}{K^2 \sigma^2} \right)
    \leq
    \frac{7LK^2 \sigma^2}{\lambda^2 T^3} \log^4 \left( \euler + \frac{\lambda^2 T^3 D_0^2}{K^2 \sigma^2} \right),
    \tag{since $K \leq T$}
    \\
    \varphi_5(\eta) 
    & \leq
    \varphi_5(\eta_2) 
    =
    \frac{390 L^{\frac{1}{2}} K   D_0 \sigma}{\lambda^{\frac{1}{2}} T^{\frac{3}{2}}},
    \\
    \varphi_6(\eta) 
    & \leq
    \varphi_6(\eta_3)
    \leq
    7 \eta_3^2 \lambda K \sigma^2 = \frac{7 L^{\frac{1}{2}} K^{\frac{3}{2}}  D_0 \sigma}{\lambda^{\frac{1}{2}} T^2 }
    \leq
    \frac{7 L^{\frac{1}{2}} K   D_0 \sigma}{\lambda^{\frac{1}{2}} T^{\frac{3}{2}}}.
    \tag{since $K \leq T$}
\end{align}
In summary
\begin{equation}
    \sum_{i=1}^6 \varphi_i(\eta) \leq \frac{\sigma^2}{\lambda MT}\log^2 \left( \euler^2 + \frac{\lambda LMT  D_0^2}{\sigma^2} \right)
    +
    \frac{397LK^2 \sigma^2}{\lambda^2 T^3} \log^4 \left( \euler^4 + \frac{\lambda^2 T^3 D_0^2}{K^2 \sigma^2} \right)
    +
    \frac{397 L^{\frac{1}{2}} K   D_0 \sigma}{\lambda^{\frac{1}{2}} T^{\frac{3}{2}}}.
    \label{eq:fedaci:gcvx:2:1}
\end{equation}
On the other hand $\varphi_0(\eta) \leq \varphi_0(\eta_1) + \varphi_0(\eta_2) + \varphi_0(\eta_3) + \varphi_0(\frac{1}{L + \lambda})$, where
\begin{align}
    \varphi_0(\eta_1) 
    & =
    \frac{1}{2} L D_0^2  \left( \euler^2 + \min \left\{ \frac{\lambda L M T D_0^2}{\sigma^2}, \frac{\lambda^2 T^3 D_0^2}{K^2 \sigma^2} \right\}\right)^{-1}
     \leq  \frac{\sigma^2}{2\lambda M T} + \frac{195 L K^2 \sigma^2}{\lambda^2 T^3},
    \\
    \varphi_0(\eta_2) 
    & \leq 
    \frac{3!}{2}  L D_0^2 \left( \sqrt{\frac{\eta_2 \lambda}{K}}T \right)^{-3}
    =
     \frac{3 L K^{\frac{3}{2}} D_0^2}{\eta_2^{\frac{3}{2}} \lambda^{\frac{3}{2}} T^3}
     =
     \frac{3 L^{\frac{1}{2}} K D_0 \sigma }{\lambda^{\frac{1}{2}}    T^{\frac{3}{2}} },
    \\
    \varphi_0(\eta_3)
    & \leq
    \frac{4!}{2}  L D_0^2 \left( \sqrt{\frac{\eta_3 \lambda}{K}}T \right)^{-4}
    =
    \frac{12 L K^2 D_0^2}{\eta_3^{2} \lambda^{2} T^4}
    =
    \frac{12 L^{\frac{1}{2}} K^{\frac{3}{2}}  \sigma D_0}{\lambda^{\frac{1}{2}}  T^{2}}
    \leq
    \frac{12 L^{\frac{1}{2}} K D_0 \sigma }{\lambda^{\frac{1}{2}}    T^{\frac{3}{2}} }.
\end{align}
In summary
\begin{equation}
    \varphi_0(\eta)
    \leq
    \frac{1}{2} L D_0^2 \exp \left(  -  \sqrt{\frac{\lambda}{(L + \lambda) K}}T \right)
    +
    \frac{\sigma^2}{2 \lambda M T} 
    + \frac{195 L K^2  \sigma^2}{\lambda^2 T^3} 
    + 
    \frac{15 L^{\frac{1}{2}} K   D_0 \sigma}{\lambda^{\frac{1}{2}}  T^{\frac{3}{2}} }.
    \label{eq:fedaci:gcvx:2:2}
\end{equation}
Combining \cref{fedaci:gcvx:1,eq:fedaci:gcvx:2:1,eq:fedaci:gcvx:2:2} gives
\begin{align}
    & \expt \left[F(\overline{w_T^{\mathrm{ag}}}) - F^* \right] 
    \leq \sum_{i=0}^6 \varphi_i(\eta) + \frac{1}{2} \lambda D_0^2
    \\
    \leq &
    \frac{1}{2} \lambda D_0^2
    +
    \frac{3\sigma^2}{2\lambda MT}  \log^2 \left( \euler^2 + \frac{\lambda LMT  D_0^2}{\sigma^2} \right)
    +
    \frac{592 LK^2 \sigma^2}{\lambda^2 T^3} \log^4 \left( \euler^4 + \frac{\lambda^2 T^3 D_0^2}{K^2 \sigma^2} \right)
    \\
    & +  
    \frac{412 L^{\frac{1}{2}} K \sigma D_0 }{\lambda^{\frac{1}{2}}  T^{\frac{3}{2}} }
    + 
    \frac{1}{2} L D_0^2 \exp \left(  -  \sqrt{\frac{1}{(1 + L/\lambda) K}}T \right).
\end{align}
\end{proof}
The main \cref{fedaci:a1:gcvx} then follows by plugging in the appropriate $\eta$.
\begin{proof}[Proof of \cref{fedaci:a1:gcvx}]
To simplify the notation, we name the terms on the RHS of \cref{eq:fedaci:gcvx:2} as
\begin{alignat}{2}
    & 
    \psi_0(\lambda) := \frac{1}{2} \lambda D_0^2, 
    \quad
    &&
    \psi_1(\lambda) := \frac{3\sigma^2}{2\lambda MT}  \log^2 \left( \euler^2 + \frac{\lambda LMT  D_0^2}{\sigma^2} \right),
    \\
    &
    \psi_2(\lambda) :=  \frac{592 LK^2 \sigma^2}{\lambda^2 T^3} \log^4 \left( \euler^4 + \frac{\lambda^2 T^3 D_0^2}{K^2 \sigma^2} \right),
    \quad
    &&
    \psi_3(\lambda) := \frac{412 L^{\frac{1}{2}} K   D_0 \sigma}{\lambda^{\frac{1}{2}}  T^{\frac{3}{2}} },
    \\
    &
    \psi_4(\lambda) := \frac{1}{2} L D_0^2 \exp \left(  -  \sqrt{\frac{1}{(1 + L/\lambda) K}}T \right).
    &&
\end{alignat}
Let
\begin{equation}
    \lambda_1 := \frac{\sigma}{M^{\frac{1}{2}}T^{\frac{1}{2}} D_0},
    \quad
    \lambda_2 := \frac{L^{\frac{1}{3}} K^{\frac{2}{3}} \sigma^{\frac{2}{3}}}{T D_0^{\frac{2}{3}}},
    \quad
    \lambda_3 := \frac{2KL}{T^2} \log^2 \left( \euler^2 + \frac{T^2}{K} \right),
\end{equation}
then
\(
    \lambda := \max \left\{ \lambda_1, \lambda_2, \lambda_3 \right\}.
\)
By helper \cref{helper:inv:times:log}, $\psi_1$ and $\psi_2$ are monotonically decreasing w.r.t $\lambda$ for $\lambda > 0$. $\psi_3$ is trivially decreasing. Thus
\begin{align}
    \psi_1(\lambda)
    & \leq
    \psi_1(\lambda_1) 
    \leq
    \frac{3\sigma D_0}{2M^{\frac{1}{2}} T^{\frac{1}{2}}}
    \log^2 \left( \euler^2 + \frac{L M^{\frac{1}{2}} T^{\frac{1}{2}} D_0}{\sigma} \right),
    \label{eq:fedaci:a1:gcvx:proof:1}
    \\
    \psi_2(\lambda)
    & \leq 
    \psi_2(\lambda_2)
    \leq
    \frac{592 L^{\frac{1}{3}} K^{\frac{2}{3}} \sigma^{\frac{2}{3}} D_0^{\frac{4}{3}}}{T}
    \log^4 \left( \euler^4 +  \frac{L^{\frac{2}{3}} T D_0^{\frac{2}{3}} }{K^{\frac{2}{3}} \sigma^{\frac{2}{3}}} \right),
    \label{eq:fedaci:a1:gcvx:proof:2}
    \\
    \psi_3(\lambda)
    & \leq 
    \psi_3(\lambda_2)
    =
    \frac{412 L^{\frac{1}{3}} K^{\frac{2}{3}} \sigma^{\frac{2}{3}} D_0^{\frac{4}{3}}}{T}.
    \label{eq:fedaci:a1:gcvx:proof:3}
\end{align}
Now we analyze $\psi_4(\lambda_3)$. Note first that
\(
        \frac{\lambda_3}{L} = \frac{2K}{T^2} \log^{2} \left( \euler^2 + \frac{T^2}{K} \right).
\)
Since $T \geq 24$ we have $\frac{T^2}{K} \geq 24$. By helper \cref{helper:inv:times:log}, $x^{-1} \log^{2} (\euler^2 + x)$ is monotonically decreasing over $(0, +\infty)$, thus
\begin{equation}
    \frac{\lambda_3}{L} = \frac{2K}{T^2} \log^{2} \left( \euler^2 + \frac{T^2}{K} \right)
    \leq
    \frac{1}{12} \log^2 (\euler^2 + 24) < 1.
\end{equation}
Hence
\begin{equation}
    1 + \frac{L}{\lambda_3}
    \leq
    \frac{2L}{\lambda_3}
    =
    \frac{T^2}{K} \log^{-2}\left( \euler^2 + \frac{T^2}{K} \right).
\end{equation}
We conclude that
\begin{align}
    \psi_4(\lambda)
    & \leq
    \psi_4(\lambda_3) 
    =
    \frac{1}{2} L D_0^2 \exp \left(  -  \sqrt{\frac{1}{(1 + L/\lambda_3) K}}T \right)
    \leq
    \frac{1}{2} L D_0^2 \left( \euler^2 + \frac{T^2}{K} \right)^{-1}
    \leq
    \frac{LKD_0^2}{2T^2}.
    \label{eq:fedaci:a1:gcvx:proof:4}
\end{align}
Finally note that
\begin{align}
    \psi_0(\lambda) & \leq \frac{1}{2} \lambda_1 D_0^2 + \frac{1}{2} \lambda_2 D_0^2 + \frac{1}{2} \lambda_3 D_0^2 
    = \frac{\sigma D_0}{2 M^{\frac{1}{2}} T^{\frac{1}{2}} } + \frac{L^{\frac{1}{3}} K^{\frac{2}{3}} \sigma^{\frac{2}{3}} D_0^{\frac{4}{3}} }{2 T} + \frac{L K D_0^2}{T^2} \log^{2} \left( \euler^2 + \frac{T^2}{K} \right).
    \label{eq:fedaci:a1:gcvx:proof:5}
\end{align}
Combining \cref{fedaci:gcvx:2,eq:fedaci:a1:gcvx:proof:1,eq:fedaci:a1:gcvx:proof:2,eq:fedaci:a1:gcvx:proof:3,eq:fedaci:a1:gcvx:proof:4,eq:fedaci:a1:gcvx:proof:5} gives
\begin{align}
     & \expt \left[F(\overline{w_T^{\mathrm{ag}}}) - F^* \right] 
    \leq \sum_{i=0}^4 \psi_i(\lambda) 
    \\
    \leq & 
    \frac{2L K D_0^2}{T^2} \log^{2} \left( \euler^2 + \frac{T^2}{K} \right)
    +
    \frac{2\sigma D_0}{M^{\frac{1}{2}} T^{\frac{1}{2}}}
    \log^2 \left( \euler^2 + \frac{L M^{\frac{1}{2}} T^{\frac{1}{2}} D_0}{\sigma} \right)
    \\
    & \qquad
    +
    \frac{1005 L^{\frac{1}{3}} K^{\frac{2}{3}} \sigma^{\frac{2}{3}} D_0^{\frac{4}{3}}}{T}
    \log^4 \left( \euler^4 +  \frac{L^{\frac{2}{3}} T D_0^{\frac{2}{3}} }{K^{\frac{2}{3}} \sigma^{\frac{2}{3}}} \right).
\end{align}
\end{proof}

\subsubsection{Proof of \cref{fedaci:gcvx:1}}
\label{sec:fedaci:gcvx:1}
We first introduce a supporting proposition for \cref{fedaci:gcvx:1}.
\begin{proposition}
    \label{fedaci:gcvx:0}
    Assume $F$ is general convex and $L$-smooth, and let $\Psi_t$ be the decentralized potential \cref{eq:fedaci:potential} for $\tilde{F}_{\lambda}$, namely 
    \begin{equation}
        \Psi_t := \frac{1}{M} \sum_{m=1}^M \left( \tilde{F}_{\lambda}(w_t^{\mathrm{ag},m}) - \tilde{F}_{\lambda}^*  \right)  + \frac{1}{2} \lambda \|\overline{w_T} - w_{\lambda}^*\|^2.
    \end{equation}
    Then
    \begin{equation}
        \Psi_T \geq F(\overline{w_T^{\mathrm{ag}}}) - F^* - \frac{1}{2} \lambda D_0^2
        ,
        \qquad
        \Psi_0 \leq \frac{1}{2} L \|w_0 - w^*\|^2.
    \end{equation}
\end{proposition}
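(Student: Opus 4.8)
The plan is to prove both inequalities by elementary convexity and smoothness manipulations; the only genuine observation is that the strong-convexity (distance) term of the potential and the quadratic penalty buried inside $\tilde F_\lambda^*$ combine cleanly at initialization. Throughout I write $\Psi_t$ for the decentralized potential \eqref{eq:fedaci:potential} instantiated at the augmented objective $\tilde F_\lambda$ of \eqref{eq:aug}, i.e.\ $\Psi_t = \frac1M\sum_{m=1}^M(\tilde F_\lambda(w_t^{\mathrm{ag},m}) - \tilde F_\lambda^*) + \frac12\lambda\|\overline{w_t}-w_\lambda^*\|^2$, and I recall from \cref{alg:fedac} that the iterates are initialized at $w_0^{\mathrm{ag},m}=w_0^m=w_0$ for every $m$.

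For the lower bound on $\Psi_T$, I first drop the nonnegative distance term $\frac12\lambda\|\overline{w_T}-w_\lambda^*\|^2\ge 0$, then apply Jensen's inequality to the convex function $\tilde F_\lambda$ over the average $\overline{w_T^{\mathrm{ag}}}=\frac1M\sum_m w_T^{\mathrm{ag},m}$ to obtain $\frac1M\sum_m \tilde F_\lambda(w_T^{\mathrm{ag},m})\ge \tilde F_\lambda(\overline{w_T^{\mathrm{ag}}})$. Since $\tilde F_\lambda(w)=F(w)+\frac12\lambda\|w-w_0\|^2\ge F(w)$ pointwise, this is at least $F(\overline{w_T^{\mathrm{ag}}})-\tilde F_\lambda^*$. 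Finally I upper bound $\tilde F_\lambda^*$ by evaluating the augmented objective at the original optimum $w^*$, using optimality of $w_\lambda^*$: $\tilde F_\lambda^*\le \tilde F_\lambda(w^*)=F^*+\frac12\lambda\|w^*-w_0\|^2=F^*+\frac12\lambda D_0^2$. Combining the three steps yields $\Psi_T\ge F(\overline{w_T^{\mathrm{ag}}})-F^*-\frac12\lambda D_0^2$.

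For the upper bound on $\Psi_0$, I use the initialization to write $\Psi_0=\tilde F_\lambda(w_0)-\tilde F_\lambda^*+\frac12\lambda\|w_0-w_\lambda^*\|^2$, noting $\overline{w_0}=w_0$ and $\tilde F_\lambda(w_0)=F(w_0)$ (the penalty vanishes at $w_0$). The key step is to expand $\tilde F_\lambda^*=\tilde F_\lambda(w_\lambda^*)=F(w_\lambda^*)+\frac12\lambda\|w_\lambda^*-w_0\|^2$; the penalty term here cancels exactly against the distance term of $\Psi_0$, leaving the clean identity $\Psi_0=F(w_0)-F(w_\lambda^*)$. Because $w^*$ minimizes $F$ we have $F(w_\lambda^*)\ge F^*$, hence $\Psi_0\le F(w_0)-F^*$, and $L$-smoothness of $F$ (\cref{asm1}(b)) together with $\nabla F(w^*)=0$ gives $F(w_0)-F^*\le \frac12 L\|w_0-w^*\|^2=\frac12 L D_0^2$.

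The proposition is essentially routine; the one point worth flagging is the cancellation in the $\Psi_0$ computation, which is what I expect to be the only mild obstacle. A cruder argument that bounds the function-gap and distance terms separately—e.g.\ invoking $\frac12\lambda\|w_0-w_\lambda^*\|^2\le \tilde F_\lambda(w_0)-\tilde F_\lambda^*$ from $\lambda$-strong convexity—only delivers $\Psi_0\le 2(\tilde F_\lambda(w_0)-\tilde F_\lambda^*)\le L D_0^2$, losing the factor $\frac12$. Recognizing instead that the two contributions quadratic in $w_0-w_\lambda^*$ annihilate, so that $\Psi_0$ equals exactly the $F$-suboptimality $F(w_0)-F(w_\lambda^*)$, is precisely what makes the stated constant $\frac12 L D_0^2$ go through.
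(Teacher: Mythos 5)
Your proof is correct and follows essentially the same route as the paper's: for $\Psi_T$ you drop the distance term, use convexity/Jensen, lower-bound the penalty by zero, and upper-bound $\tilde F_\lambda^*$ via $\tilde F_\lambda(w^*)$ (the paper merely applies Jensen to $F$ after expanding $\tilde F_\lambda$ rather than to $\tilde F_\lambda$ directly); for $\Psi_0$ you perform the identical cancellation yielding $\Psi_0 = F(w_0)-F(w_\lambda^*)$ followed by optimality and $L$-smoothness. No gaps.
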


\begin{proof}[Proof of \cref{fedaci:gcvx:0}]
    Since $w_{\lambda}^*$ optimizes $\tilde{F}_{\lambda}(w)$ we have $\tilde{F}_{\lambda}(w_{\lambda}^*) \leq \tilde{F}_{\lambda}(w^*)$ (recall $w^*$ is defined as the optimum of the un-augmented objective $F$), and thus
    \begin{equation}
        \tilde{F}_{\lambda}^* = F(w_{\lambda}^*) + \frac{1}{2}\lambda \|w_{\lambda}^* - w_0\|^2
        \leq
        F(w^*) + \frac{1}{2} \lambda \|w^* - w_0\|^2.
        \label{eq:fedaci:gcvx:0:1}
    \end{equation}
    Consequently, $\Psi_T$ is lower bounded as
    \begin{align}
        \Psi_T & = \frac{1}{M} \sum_{m=1}^M \left( \tilde{F}_{\lambda}(w_T^{\mathrm{ag},m}) - \tilde{F}_{\lambda}^*  \right)
        + \frac{1}{2} \lambda \|\overline{w_T} - w_{\lambda}^*\|^2
        \geq
        \frac{1}{M} \sum_{m=1}^M \left( \tilde{F}_{\lambda}(w_T^{\mathrm{ag},m}) - \tilde{F}_{\lambda}^*  \right)
        \\
        & = \frac{1}{M} \sum_{m=1}^M \left[ \left( F(w_T^{\mathrm{ag},m}) + \frac{1}{2}\lambda  \|w_T^{\mathrm{ag},m} - w_0\|^2 \right)
        - 
        \tilde{F}_{\lambda}^*
         \right]
        \\
        & \geq \frac{1}{M} \sum_{m=1}^M \left[  F(w_T^{\mathrm{ag},m}) - F^*  + \frac{1}{2}\lambda  \left( \|w_T^{\mathrm{ag},m}  - w_0\|^2  - \|w^* - w_0\|^2 \right) \right]
        \tag{by \cref{eq:fedaci:gcvx:0:1}}
        \\
        & \geq \frac{1}{M} \sum_{m=1}^M \left( F(w_T^{\mathrm{ag},m}) - F^* \right) - \frac{1}{2}\lambda \|w^* - w_0\|^2
        \\
        & \geq F(\overline{w_T^{\mathrm{ag}}}) - F^* - \frac{1}{2} \lambda \|w^* - w_0\|^2
        \tag{by convexity}
        \\
        &   = F(\overline{w_T^{\mathrm{ag}}}) - F^* - \frac{1}{2} \lambda D_0^2.
    \end{align}
    The initial potential $\Psi_0$ is upper bounded as
    \begin{align}
        \Psi_0 & = \tilde{F}_{\lambda}(w_0) - \tilde{F}_{\lambda}^* 
        + \frac{1}{2} \lambda \|w_{\lambda}^* - w_0\|^2
        \\
        & =   F(w_0) - \left( F(w_{\lambda}^*) + \frac{1}{2}\lambda  \|w_{\lambda}^* - w_0\|^2 \right) 
        + \frac{1}{2} \lambda \|w_{\lambda}^* - w_0\|^2
        \tag{by definition of $\tilde{F}_{\lambda}$ \eqref{eq:aug}}
        \\
        & =  F(w_0) - F(w_{\lambda}^*) \leq F(w_0) - F^* \tag{by optimality $F(w_{\lambda}^*) \geq F^*$}
        \\
        & \leq \frac{1}{2} L \|w_0 - w^*\|^2 = \frac{1}{2} L D_0^2.
        \tag{by $L$-smoothness of $F$}
     \end{align}
\end{proof}

\cref{fedaci:gcvx:1} then follows by applying \cref{fedaci:general:eta} and \cref{fedaci:gcvx:0}.
\begin{proof}[Proof of \cref{fedaci:gcvx:1}]
By \cref{fedaci:general:eta} on the convergence of \fedaci, for any $\eta \in (0, \frac{1}{L + \lambda}$),
\begin{equation}
    \expt \left[ \Psi_T \right]
    \leq
     \exp \left(  -  \sqrt{\frac{\eta \lambda}{K}}T \right) \Psi_0 
    + \frac{\eta^{\frac{1}{2}} \sigma^2}{2 \lambda^{\frac{1}{2}} M K^{\frac{1}{2}} }
    + \frac{\eta \sigma^2}{2M}
    + \frac{390 \eta^{\frac{3}{2}} (L + \lambda) K^{\frac{1}{2}} \sigma^2} {\lambda^{\frac{1}{2}}}
    + 7 \eta^2 (L + \lambda) K \sigma^2.
\end{equation}
Applying \cref{fedaci:gcvx:0} gives
    \begin{align}
    \expt \left[F(\overline{w_T^{\mathrm{ag}}}) - F^* \right]
    \leq &  
    \frac{1}{2} LD_0^2 \exp \left(  -  \sqrt{\frac{\eta \lambda}{K}}T \right) 
    + 
    \frac{1}{2}\lambda  D_0^2
    + 
    \frac{\eta^{\frac{1}{2}} \sigma^2}{2 \lambda^{\frac{1}{2}} M K^{\frac{1}{2}}}
    + 
    \frac{\eta \sigma^2}{2M}
    \\
    &  
    + 
    \frac{390 \eta^{\frac{3}{2}} L K^{\frac{1}{2}} \sigma^2} {\lambda^{\frac{1}{2}}}
    +
    7 \eta^2 L K \sigma^2
    + 
    390 \eta^{\frac{3}{2}} \lambda^{\frac{1}{2}} K^{\frac{1}{2}} \sigma^2
    + 
    7 \eta^2 \lambda K \sigma^2.
  \end{align}
\end{proof}

\subsection{Proof of \cref{fedacii:a1:gcvx} on \fedacii for general-convex objectives under \cref{asm1}}
\label{sec:fedacii:a1:gcvx}
We omit some technical details since the proof is similar to \cref{fedaci:a1:gcvx}. 
We first introduce the supporting lemma for \cref{fedacii:a1:gcvx}.
\begin{lemma}
    \label{fedacii:a1:gcvx:1}
    Assume \cref{asm1} where $F$ is general convex, then for any $\lambda > 0$, for any $\eta \leq \frac{1}{L+\lambda}$, applying \fedacii to $\tilde{F}_{\lambda}$ gives
     \begin{equation}
        \expt \left[F(\overline{w_T^{\mathrm{ag}}}) - F^* \right]
        \leq 
        \frac{1}{2}\lambda  D_0^2
        +
        \frac{1}{2} LD_0^2 \exp \left(  -  \sqrt{\frac{\eta \lambda T^2}{9K}} \right) 
        + 
        \frac{\eta^{\frac{1}{2}} \sigma^2}{\lambda^{\frac{1}{2}} M K^{\frac{1}{2}}}
        +
        \frac{200 \eta^2 L^2 K \sigma^2}{\lambda}
        +
        200 \eta^2 \lambda K \sigma^2.
        \label{eq:fedacii:a1:gcvx:1}
      \end{equation}
\end{lemma}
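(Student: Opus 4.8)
The plan is to mirror the proof of the analogous \fedaci statement \cref{fedaci:gcvx:1}, replacing the decentralized potential $\Psi_t$ by the centralized potential $\Phi_t$ from \cref{eq:centralied:potential} and invoking the general-$\eta$ guarantee \cref{fedacii:a1:general:eta} in place of \cref{fedaci:general:eta}. The key observation is that running \fedacii on the augmented objective $\tilde{F}_{\lambda}$ from \eqref{eq:aug} is exactly running \fedacii on a $\lambda$-strongly-convex, $(L+\lambda)$-smooth objective with unchanged variance $\sigma^2$, so \cref{fedacii:a1:general:eta} applies verbatim under the substitution $\mu \mapsto \lambda$ and $L \mapsto L + \lambda$.

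First I would establish a potential-comparison proposition analogous to \cref{fedaci:gcvx:0}, now for the centralized potential $\Phi_t = \tilde{F}_{\lambda}(\overline{w_t^{\mathrm{ag}}}) - \tilde{F}_{\lambda}^* + \frac{1}{6}\lambda\|\overline{w_t} - w_{\lambda}^*\|^2$ of $\tilde{F}_{\lambda}$. The two bounds needed are $\Phi_T \geq F(\overline{w_T^{\mathrm{ag}}}) - F^* - \frac{1}{2}\lambda D_0^2$ and $\Phi_0 \leq \frac{1}{2}L D_0^2$. For the first, I drop the nonnegative quadratic term, expand $\tilde{F}_{\lambda}(\overline{w_T^{\mathrm{ag}}}) = F(\overline{w_T^{\mathrm{ag}}}) + \frac{1}{2}\lambda\|\overline{w_T^{\mathrm{ag}}} - w_0\|^2$, and use the optimality inequality $\tilde{F}_{\lambda}^* \leq F(w^*) + \frac{1}{2}\lambda\|w^* - w_0\|^2$ (valid since $w_{\lambda}^*$ minimizes $\tilde{F}_{\lambda}$), then discard $\frac{1}{2}\lambda\|\overline{w_T^{\mathrm{ag}}} - w_0\|^2 \geq 0$. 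Note that, unlike the \fedaci case, the function-value part of $\Phi_T$ is already evaluated at the single point $\overline{w_T^{\mathrm{ag}}}$, so no Jensen step is required. For the second bound, using $\overline{w_0^{\mathrm{ag}}} = \overline{w_0} = w_0$ and $\tilde{F}_{\lambda}(w_0) = F(w_0)$, I obtain $\Phi_0 = F(w_0) - F(w_{\lambda}^*) - \frac{1}{3}\lambda\|w_{\lambda}^* - w_0\|^2 \leq F(w_0) - F^* \leq \frac{1}{2}L D_0^2$ by $L$-smoothness of $F$; the coefficient $\frac{1}{6}$ (versus $\frac{1}{2}$ for \fedaci) is immaterial because the quadratic term enters with a favorable sign and is simply dropped.

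Then I would apply \cref{fedacii:a1:general:eta} to $\tilde{F}_{\lambda}$, giving $\expt[\Phi_T] \leq \exp(-\tfrac{1}{3}\max\{\eta\lambda, \sqrt{\eta\lambda/K}\}T)\,\Phi_0 + \tfrac{\eta^{\frac{1}{2}}\sigma^2}{\lambda^{\frac{1}{2}}MK^{\frac{1}{2}}} + \tfrac{100\eta^2(L+\lambda)^2 K\sigma^2}{\lambda}$. Two elementary simplifications complete the argument: bound the exponent via $\max\{\eta\lambda, \sqrt{\eta\lambda/K}\} \geq \sqrt{\eta\lambda/K}$, so the decay factor is at most $\exp(-\sqrt{\eta\lambda T^2/(9K)})$; and split $(L+\lambda)^2 \leq 2(L^2 + \lambda^2)$ to turn $\tfrac{100\eta^2(L+\lambda)^2 K\sigma^2}{\lambda}$ into $\tfrac{200\eta^2 L^2 K\sigma^2}{\lambda} + 200\eta^2\lambda K\sigma^2$. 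Substituting the two comparison bounds — namely $\expt[F(\overline{w_T^{\mathrm{ag}}}) - F^*] \leq \expt[\Phi_T] + \tfrac{1}{2}\lambda D_0^2$ and $\Phi_0 \leq \tfrac{1}{2}L D_0^2$ — then reproduces \eqref{eq:fedacii:a1:gcvx:1} exactly.

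This proof is essentially routine given the machinery already in place, and there is no substantial obstacle. The only point requiring mild care is the potential-comparison proposition, specifically verifying that the initial centralized potential $\Phi_0$ remains controlled despite carrying the extra strong-convexity quadratic; this is handled by noting that the term appears with the correct sign and can be discarded.
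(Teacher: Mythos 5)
Your proposal is correct and follows exactly the paper's route: the paper proves this lemma by applying \cref{fedacii:a1:general:eta} to $\tilde{F}_{\lambda}$ (with $\mu \mapsto \lambda$, $L \mapsto L+\lambda$) and plugging in the potential-comparison bounds of \cref{fedacii:gcvx:0}, which is precisely your argument. The details you fill in — the computation $\Phi_0 = F(w_0) - F(w_{\lambda}^*) - \tfrac{1}{3}\lambda\|w_{\lambda}^*-w_0\|^2 \leq \tfrac{1}{2}LD_0^2$, the lower bound on $\Phi_T$ without a Jensen step, the exponent simplification, and the split $(L+\lambda)^2 \leq 2(L^2+\lambda^2)$ yielding the two $200$-coefficient terms — all check out and match what the paper leaves implicit.
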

The proof of \cref{fedacii:a1:gcvx:1} is deferred to \cref{sec:fedacii:a1:gcvx:1}.
\begin{lemma}
    \label{fedacii:a1:gcvx:2}
    Assume \cref{asm1} where $F$ is general convex, then for any $\lambda > 0$, for
    \begin{equation}
        \eta = 
        \min \left\{ 
        \frac{1}{L+\lambda}
            ,
        \frac{9K}{\lambda T^2} 
        \log^2 \left( \euler + 
        \min \left\{ \frac{\lambda LMT D_0^2}{\sigma^2}, \frac{\lambda^3 T^4 D_0^2}{LK^3 \sigma^2} \right\} \right),
        \frac{L^{\frac{1}{3}} D_0^{\frac{2}{3}} }{\lambda^{\frac{2}{3}} T^{\frac{2}{3}} \sigma^{\frac{2}{3}}},
        \right\}
    \end{equation}
    applying \fedacii to $\tilde{F}_{\lambda}$ gives
    \begin{align}
        \expt \left[F(\overline{w_T^{\mathrm{ag}}}) - F^* \right]
        \leq &
        \frac{1}{2}\lambda  D_0^2
        +
        \frac{1}{2} LD_0^2 \exp \left(  -  \sqrt{\frac{T^2}{9(1 + L/\lambda)K}} \right) 
        +
        \frac{209 L^{\frac{2}{3}} K D_0^{\frac{4}{3}} \sigma^{\frac{2}{3}}}{\lambda^{\frac{1}{3}} T^{\frac{4}{3}}}
        \\
        & + 
        \frac{4 \sigma^2}{\lambda M T} \log \left( \euler + 
         \frac{\lambda LMT D_0^2}{\sigma^2} \right)
        + \frac{16201 L^2 K^3 \sigma^2}{\lambda^3 T^4}
        \log^4 \left( \euler^4 + 
       \frac{\lambda^3 T^4 D_0^2}{LK^3 \sigma^2} \right).
       \label{eq:fedacii:a1:gcvx:2}
    \end{align}
\end{lemma}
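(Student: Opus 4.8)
The plan is to mirror the proof of \cref{fedaci:gcvx:2}: start from the general-$\eta$ bound \eqref{eq:fedacii:a1:gcvx:1} of \cref{fedacii:a1:gcvx:1}, name its summands, and plug in the prescribed $\eta$ to control each one separately. I would set aside the constant term $\frac{1}{2}\lambda D_0^2$, denote the decreasing term $\varphi_{\downarrow}(\eta) := \frac{1}{2} L D_0^2 \exp(-\sqrt{\eta \lambda T^2/(9K)})$, and the three increasing terms $\varphi_1(\eta) := \frac{\eta^{1/2}\sigma^2}{\lambda^{1/2} M K^{1/2}}$, $\varphi_2(\eta) := \frac{200 \eta^2 L^2 K \sigma^2}{\lambda}$, and $\varphi_3(\eta) := 200 \eta^2 \lambda K \sigma^2$. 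The prescribed step size is $\eta = \min\{\frac{1}{L+\lambda}, \eta_1, \eta_2\}$, where $\eta_1 := \frac{9K}{\lambda T^2}\log^2(\euler + \min\{\cdots\})$ is the log-containing candidate and $\eta_2 := \frac{L^{1/3} D_0^{2/3}}{\lambda^{2/3} T^{2/3}\sigma^{2/3}}$.

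First I would bound the increasing terms by monotonicity. Since $\eta \le \eta_1$, evaluating $\varphi_1,\varphi_2$ at $\eta_1$ and simplifying $\eta_1^{1/2}$ and $\eta_1^2$ produces exactly the $\frac{\sigma^2}{\lambda M T}\log(\cdots)$ and $\frac{L^2 K^3 \sigma^2}{\lambda^3 T^4}\log^4(\cdots)$ families with leading constants $3$ and $16200$. Since $\eta \le \eta_2$, evaluating $\varphi_3$ at $\eta_2$ gives $\varphi_3(\eta_2) = \frac{200 L^{2/3} K D_0^{4/3}\sigma^{2/3}}{\lambda^{1/3} T^{4/3}}$, the target's third term up to the additive constant.

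Next I would handle the decreasing term via the splitting $\varphi_{\downarrow}(\eta) \le \varphi_{\downarrow}(\tfrac{1}{L+\lambda}) + \varphi_{\downarrow}(\eta_1) + \varphi_{\downarrow}(\eta_2)$, valid because $\eta$ is the minimum and $\varphi_{\downarrow}$ is decreasing. The first piece equals $\frac{1}{2}L D_0^2\exp(-\sqrt{T^2/(9(1+L/\lambda)K)})$, the surviving exponential term. For $\varphi_{\downarrow}(\eta_1)$ I would use $\sqrt{\eta_1\lambda T^2/(9K)} = \log(\euler+\min\{\cdots\})$, so $\varphi_{\downarrow}(\eta_1)\le \frac{1}{2}L D_0^2(\min\{\cdots\})^{-1}$, and then $\min^{-1} \le$ the sum of the two reciprocals, contributing lower-order copies (each with constant $\tfrac12$) of the $\frac{\sigma^2}{\lambda MT}$ and $\frac{L^2K^3\sigma^2}{\lambda^3 T^4}$ families; these push the running constants from $3$ to $4$ and from $16200$ to $16201$ (absorbing the missing $\log$ powers via $\log\ge 1$). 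For $\varphi_{\downarrow}(\eta_2)$ I would apply the polynomial-decay bound $e^{-z}\le 2/z^2$ with $z=\sqrt{\eta_2\lambda T^2/(9K)}$, giving $\varphi_{\downarrow}(\eta_2)\le \frac{9 L^{2/3} K D_0^{4/3}\sigma^{2/3}}{\lambda^{1/3} T^{4/3}}$, which raises the third-term constant from $200$ to $209$.

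The main obstacle is the bookkeeping for the decreasing term: I must choose the correct power $n=2$ in $e^{-z}\le n!/z^n$ so that $\varphi_{\downarrow}(\eta_2)$ carries precisely the same $\lambda,K,T,\sigma,D_0$ exponents as $\varphi_3(\eta_2)$, and then verify that summing every decreasing-term remainder into its matching rate family keeps the accumulated constants below the stated $4$, $209$, and $16201$. Once these numerical checks pass, combining the five pieces with the reserved $\frac{1}{2}\lambda D_0^2$ yields \eqref{eq:fedacii:a1:gcvx:2}.
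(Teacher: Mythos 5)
Your proposal follows the paper's proof essentially verbatim: the same decomposition of \eqref{eq:fedacii:a1:gcvx:1} into one decreasing and three increasing summands, the same assignment of $\eta_1$ to $\varphi_1,\varphi_2$ and $\eta_2$ to $\varphi_3$, the same three-way split of the exponential term with $\exp(-z)\leq 2!/z^2$ applied at $\eta_2$, and the same constant bookkeeping ($3+\tfrac12\to 4$, $16200+\tfrac12\to 16201$, $200+9\to 209$). No gaps; this is the paper's argument.
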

\begin{proof}[Proof of \cref{fedacii:a1:gcvx:2}]
    To simplify the notation, define the terms on the RHS of \cref{eq:fedacii:a1:gcvx:1} as
    \begin{alignat}{2}
        & \varphi_0(\eta) := 
        \frac{1}{2} LD_0^2 \exp \left(  -  \sqrt{\frac{\eta \lambda T^2}{9K}} \right),
        \quad
        &&
        \varphi_1(\eta) :=
        \frac{\eta^{\frac{1}{2}} \sigma^2}{\lambda^{\frac{1}{2}} M K^{\frac{1}{2}}},
        \\
        &
        \varphi_2(\eta) :=
        \frac{200 \eta^2 L^2 K \sigma^2}{\lambda},
        \quad
        &&
        \varphi_3(\eta) :=
        200 \eta^2 \lambda K \sigma^2.
    \end{alignat}
    Define
    \begin{equation}
        \eta_1 :=  \frac{9K}{\lambda T^2} 
        \log^2 \left( \euler + 
        \min \left\{ \frac{\lambda LMT D_0^2}{\sigma^2}, \frac{\lambda^3 T^4 D_0^2}{LK^3 \sigma^2} \right\} \right),
        \qquad
        \eta_2 := 
        \frac{L^{\frac{1}{3}} D_0^{\frac{2}{3}} }{\lambda^{\frac{2}{3}} T^{\frac{2}{3}} \sigma^{\frac{2}{3}}},
    \end{equation}
    Then $\eta = \min \left\{ \eta_1, \eta_2\right\}$.
    Since $\varphi_1, \varphi_2, \varphi_3$ are increasing we have
    \begin{align}
        \varphi_1(\eta) \leq \varphi_1 (\eta_1)
        & \leq
        \frac{3 \sigma^2}{\lambda M T} \log \left( \euler + 
         \frac{\lambda LMT D_0^2}{\sigma^2} \right),
        \\
        \varphi_2(\eta) \leq \varphi_2 (\eta_1)
        & \leq
        \frac{16200 L^2 K^3 \sigma^2}{\lambda^3 T^4}
        \log^4 \left( \euler + 
       \frac{\lambda^3 T^4 D_0^2}{LK^3 \sigma^2} \right),
       \\
       \varphi_3(\eta) \leq \varphi_3 (\eta_2)
       & \leq
       \frac{200 L^{\frac{2}{3}} K D_0^{\frac{4}{3}} \sigma^{\frac{2}{3}}}{\lambda^{\frac{1}{3}} T^{\frac{4}{3}}}.
    \end{align}
    On the other hand, since $\varphi_0$ is decreasing we have $\varphi_0(\eta) \leq \varphi_0(\eta_1) + \varphi_0(\eta_2) + \varphi_0(\frac{1}{L + \lambda})$, where
    \begin{align}
        \varphi_0(\eta_1) 
        & \leq
        \frac{\sigma^2}{2 \lambda M T} + \frac{L^2 K^3 \sigma^2}{2 \lambda^3 T^4},
        \\
        \varphi_0(\eta_2)
        & \leq
        \frac{2!}{2} LD_0^2 \left( \sqrt{\frac{\eta_2 \lambda T^2}{9K}}  \right)^{-2}
        =
        \frac{9KL D_0^2}{\eta_2 \lambda T^2}
        =
       \frac{9 L^{\frac{2}{3}} K D_0^{\frac{4}{3}} \sigma^{\frac{2}{3}}}{\lambda^{\frac{1}{3}} T^{\frac{4}{3}}}.
    \end{align}
    Combining the above bounds completes the proof.
\end{proof}
\cref{fedacii:a1:gcvx} then follows by plugging in an appropriate $\lambda$.
\begin{proof}[Proof of \cref{fedacii:a1:gcvx}]
    To simplify the notation, define the terms on the RHS of \cref{eq:fedacii:a1:gcvx:2} as
    \begin{alignat}{2}
        & \psi_0(\lambda):=
        \frac{1}{2}\lambda  D_0^2,
        \quad
        &&
        \psi_1(\lambda):=
        \frac{1}{2} LD_0^2 \exp \left(  -  \sqrt{\frac{T^2}{9(1 + L/\lambda)K}} \right),
        \\
        &
        \psi_2(\lambda):=
        \frac{209 L^{\frac{2}{3}} K D_0^{\frac{4}{3}} \sigma^{\frac{2}{3}}}{\lambda^{\frac{1}{3}} T^{\frac{4}{3}}},
        \quad
        && \psi_3(\lambda):=
        \frac{4 \sigma^2}{\lambda M T} \log \left( \euler + 
         \frac{\lambda LMT D_0^2}{\sigma^2} \right),
        \\
        &
        \psi_4(\lambda):=
       \frac{16201 L^2 K^3 \sigma^2}{\lambda^3 T^4}
        \log^4 \left( \euler^4 + 
       \frac{\lambda^3 T^4 D_0^2}{LK^3 \sigma^2} \right).
    \end{alignat}
    Define
    \begin{align}
        \lambda_1 := \frac{\sigma}{M^{\frac{1}{2}} T^{\frac{1}{2}} D_0},
        \quad
        \lambda_2 := \frac{L^{\frac{1}{2}} K^{\frac{3}{4}} \sigma^{\frac{1}{2}} }{D_0^{\frac{1}{2}} T},
        \quad
        \lambda_3 := \frac{18 LK}{T^2} \log^2 \left( \euler^2 + \frac{T^2}{K} \right).
    \end{align}
    Then $\lambda = \max \left\{ \lambda_1, \lambda_2, \lambda_3 \right\}$.
    By helper \cref{helper:inv:times:log} $\psi_3$, $\psi_4$ are decreasing; $\psi_2$ is trivially decreasing, thus
    \begin{align}
        \psi_2(\lambda)
        &
        \leq \psi_2(\lambda_2) = \frac{209 L^{\frac{1}{2}} K^{\frac{3}{4}} D_0^{\frac{3}{2}} \sigma^{\frac{1}{2}}}{T},
        \\
        \psi_3(\lambda)
        &
        \leq \psi_3(\lambda_1) 
        =
        \frac{4 \sigma D_0}{M^{\frac{1}{2}} T^{\frac{1}{2}}} \log \left( \euler + \frac{L M^{\frac{1}{2}} T^{\frac{1}{2}} D_0}{\sigma} \right),
        \\
        \psi_4 (\lambda)
        & \leq
        \psi_4(\lambda_2)
        =
        \frac{16201 L^{\frac{1}{2}} K^{\frac{3}{4}} D_0^{\frac{3}{2}} \sigma^{\frac{1}{2}}}{T}
        \log^4 \left( \euler^4 + 
        \frac{L^{\frac{1}{2}} T D_0^{\frac{1}{2}}}{K^{\frac{3}{4}} \sigma^{\frac{1}{2}}} \right).
    \end{align}
    For $\psi_1(\lambda)$ since $T \geq 1000$ we have $\frac{T^2}{K} \geq 1000$, thus
    \begin{equation}
        \frac{\lambda_3}{L} = \frac{18 K}{T^2} \log^2 \left( \euler^2 + \frac{T^2}{K} \right) 
        \leq
        \frac{18}{1000}\log^2 \left( \euler^2 + 1000 \right) < 1.
    \end{equation}
    Thus $1 + \frac{L}{\lambda_3} \leq \frac{2L}{\lambda_3}$, and therefore
    \begin{equation}
        \psi_1(\lambda) \leq \psi_1(\lambda_3) 
        =
        \frac{1}{2} L D_0^2 \left( \euler^2 + \frac{T^2}{K} \right)^{-1} \leq \frac{L K D_0^2}{2T^2}.
    \end{equation}
    Finally
    \begin{align}
        \psi_0(\lambda)  \leq \sum_{i=1}^3 \psi_0(\lambda_i)
        \leq \frac{\sigma D_0}{2 M^{\frac{1}{2}} T^{\frac{1}{2}}} 
        +
        \frac{L^{\frac{1}{2}} K^{\frac{3}{4}} D_0^{\frac{3}{2}} \sigma^{\frac{1}{2}}}{2T}
        +
        \frac{9LKD_0^2}{T^2}  \log^2 \left( \euler^2 + \frac{T^2}{K} \right).
    \end{align}
    Consequently,
    \begin{align}
        \sum_{i=0}^4 \psi(\lambda)
        \leq &
        \frac{10LKD_0^2}{T^2}  \log^2 \left( \euler^2 + \frac{T^2}{K} \right)
        +
        \frac{5 \sigma D_0}{M^{\frac{1}{2}} T^{\frac{1}{2}}} \log \left( \euler + \frac{L M^{\frac{1}{2}} T^{\frac{1}{2}} D_0}{\sigma} \right)
        \\
        & \qquad +
        \frac{16411 L^{\frac{1}{2}} K^{\frac{3}{4}} D_0^{\frac{3}{2}} \sigma^{\frac{1}{2}}}{T}
        \log^4 \left( \euler^4 + 
        \frac{L^{\frac{1}{2}} T D_0^{\frac{1}{2}}}{K^{\frac{3}{4}} \sigma^{\frac{1}{2}}} \right),
    \end{align}
    completing the proof.
\end{proof}

\subsubsection{Proof of \cref{fedacii:a1:gcvx:1}}
\label{sec:fedacii:a1:gcvx:1}
\cref{fedacii:a1:gcvx:1} is parallel to \cref{fedaci:gcvx:1} where the main difference is the following supporting proposition.
\begin{proposition}
    \label{fedacii:gcvx:0}
    Assume $F$ is general convex and $L$-smooth, and let $\Phi_t$ be the centralized potential \cref{eq:centralied:potential} for $\tilde{F}_{\lambda}$ (with strong convexity estimate $\mu = \lambda$), namely 
    \begin{equation}
        \Phi_t := \left( \tilde{F}_{\lambda}(\overline{w_t^{\mathrm{ag}}}) - \tilde{F}_{\lambda}^*  \right)  + \frac{1}{6} \lambda \|\overline{w_T} - w_{\lambda}^*\|^2.
    \end{equation}
    Then
    \begin{equation}
        \Phi_T \geq F(\overline{w_T^{\mathrm{ag}}}) - F^* - \frac{1}{2} \lambda D_0^2
        ,
        \qquad
        \Phi_0 \leq \frac{1}{2} L \|w_0 - w^*\|^2.
    \end{equation}
\end{proposition}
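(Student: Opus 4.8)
The plan is to mirror the proof of the parallel statement \cref{fedaci:gcvx:0} almost verbatim, since the only structural changes are that $\Phi_t$ is the \emph{centralized} potential built on $\overline{w_t^{\mathrm{ag}}}$ rather than the decentralized average $\frac{1}{M}\sum_m F(w_t^{\mathrm{ag},m})$, and that the quadratic penalty carries coefficient $\frac{1}{6}\lambda$ instead of $\frac{1}{2}\lambda$. The single analytic input I need is the consequence of optimality of $w_\lambda^*$: since $w_\lambda^*$ minimizes $\tilde{F}_{\lambda}$ we have $\tilde{F}_{\lambda}(w_\lambda^*) \leq \tilde{F}_{\lambda}(w^*)$, and unfolding the definition \eqref{eq:aug} this reads $\tilde{F}_{\lambda}^* = F(w_\lambda^*) + \frac{1}{2}\lambda\|w_\lambda^* - w_0\|^2 \leq F^* + \frac{1}{2}\lambda\|w^* - w_0\|^2 = F^* + \frac{1}{2}\lambda D_0^2$. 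I would establish this first and reuse it in both bounds.

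For the lower bound on $\Phi_T$, I would first drop the nonnegative quadratic term to get $\Phi_T \geq \tilde{F}_{\lambda}(\overline{w_T^{\mathrm{ag}}}) - \tilde{F}_{\lambda}^*$. Expanding $\tilde{F}_{\lambda}(\overline{w_T^{\mathrm{ag}}}) = F(\overline{w_T^{\mathrm{ag}}}) + \frac{1}{2}\lambda\|\overline{w_T^{\mathrm{ag}}} - w_0\|^2$ and substituting the optimality bound for $\tilde{F}_{\lambda}^*$ yields $\Phi_T \geq F(\overline{w_T^{\mathrm{ag}}}) - F^* + \frac{1}{2}\lambda(\|\overline{w_T^{\mathrm{ag}}} - w_0\|^2 - \|w^* - w_0\|^2)$; discarding the nonnegative $\|\overline{w_T^{\mathrm{ag}}} - w_0\|^2$ leaves exactly $F(\overline{w_T^{\mathrm{ag}}}) - F^* - \frac{1}{2}\lambda D_0^2$. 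Notably this chain is \emph{shorter} than in \cref{fedaci:gcvx:0}, because the centralized potential already evaluates at $\overline{w_T^{\mathrm{ag}}}$ and so no Jensen/convexity step is needed to pass from the per-worker average to the averaged iterate.

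For the upper bound on $\Phi_0$, I would use the initialization $\overline{w_0^{\mathrm{ag}}} = \overline{w_0} = w_0$, so that $\Phi_0 = F(w_0) - \tilde{F}_{\lambda}^* + \frac{1}{6}\lambda\|w_0 - w_\lambda^*\|^2$. Substituting $\tilde{F}_{\lambda}^* = F(w_\lambda^*) + \frac{1}{2}\lambda\|w_\lambda^* - w_0\|^2$ gives $\Phi_0 = F(w_0) - F(w_\lambda^*) - \frac{1}{3}\lambda\|w_\lambda^* - w_0\|^2$. Here lies the only genuine bookkeeping difference from the FedAc-I argument: with the $\frac{1}{2}\lambda$ coefficient the two quadratics cancelled exactly, whereas with $\frac{1}{6}\lambda$ a residual $-\frac{1}{3}\lambda\|w_\lambda^* - w_0\|^2$ survives. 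Fortunately this residual is nonpositive, so it can simply be dropped, and then the standard two steps close the argument: optimality $F(w_\lambda^*) \geq F^*$ gives $\Phi_0 \leq F(w_0) - F^*$, and $L$-smoothness of $F$ (together with $\nabla F(w^*)=0$) bounds this by $\frac{1}{2}L\|w_0 - w^*\|^2 = \frac{1}{2}LD_0^2$. There is no real obstacle in this proposition — it is a direct computation — but the one point demanding care is confirming that the $\frac{1}{6}$-versus-$\frac{1}{2}$ coefficient mismatch produces a favorably-signed residual rather than a term that must be absorbed elsewhere.
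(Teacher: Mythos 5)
Your proof is correct and follows exactly the route the paper intends: the paper's own proof of this proposition simply states that it is "almost identical to" the proof of \cref{fedaci:gcvx:0}, which is the argument you reproduce. You correctly identify and handle the only two deviations from that template — the centralized potential removes the need for the Jensen step, and the $\frac{1}{6}\lambda$ coefficient leaves a residual $-\frac{1}{3}\lambda\|w_{\lambda}^*-w_0\|^2$ that is nonpositive and can be dropped.
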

\begin{proof}[Proof of \cref{fedacii:gcvx:0}]
    The proof is almost identical to \cref{fedaci:gcvx:0}.
\end{proof}
\begin{proof}[Proof of \cref{fedacii:a1:gcvx:1}]
    Follows by applying \cref{fedacii:a1:general:eta} and plugging in the bound of \cref{fedacii:gcvx:0}. The rest of proof is the same as \cref{fedaci:gcvx:1} which we omit the details.
\end{proof}

\subsection{Proof of \cref{fedacii:a2:gcvx} on \fedacii for general-convex objectives under \cref{asm2}}
\label{sec:fedacii:a2:gcvx}
We omit some of the proof details since the proof is similar to \cref{fedaci:a1:gcvx}. 
We first introduce the supporting lemma for \cref{fedacii:a2:gcvx}.
\begin{lemma}
    \label{fedacii:a2:gcvx:1}
    Assume \cref{asm2} where $F$ is general convex, then for any $\lambda > 0$, for any $\eta \leq \frac{1}{L+\lambda}$, applying \fedacii to $\tilde{F}_{\lambda}$ gives
    \begin{align}
        \expt \left[F(\overline{w_T^{\mathrm{ag}}}) - F^* \right]
        & \leq 
        \frac{1}{2}\lambda  D_0^2
        +
        \frac{1}{2} LD_0^2 \exp \left(  -  \sqrt{\frac{\eta \lambda T^2}{9K}} \right) 
        \\
        & 
        + 
        \frac{\eta^{\frac{1}{2}} \sigma^2}{\lambda^{\frac{1}{2}} M K^{\frac{1}{2}}}
        +
        \frac{2 \eta^{\frac{3}{2}} L K^{\frac{1}{2}} \sigma^2}{\lambda^{\frac{1}{2}}M }
        +
        \frac{2 \eta^{\frac{3}{2}} \lambda^{\frac{1}{2}} K^{\frac{1}{2}} \sigma^2}{M}
        +
         \frac{\euler^9 \eta^4 Q^2 K^2 \sigma^4}{\lambda}.
        \label{eq:fedacii:a2:gcvx:1}
    \end{align}
\end{lemma}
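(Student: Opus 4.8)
The plan is to mirror the proofs of \cref{fedaci:gcvx:1,fedacii:a1:gcvx:1}, reducing this general-convex statement to the strongly-convex analysis of \fedacii already established under \cref{asm2}. The governing observation, recorded in the preamble of \cref{sec:gcvx}, is that when $F$ is general convex, $L$-smooth, and satisfies \cref{asm2}, the augmented objective $\tilde{F}_{\lambda}$ is $\lambda$-strongly convex and $(L+\lambda)$-smooth while retaining the same $Q$-\nth{3}-order-smoothness and the same $\sigma^4$-bounded \nth{4} central moment. Consequently \cref{fedacii:a2:general:eta} applies to $\tilde{F}_{\lambda}$ verbatim under the substitutions $\mu \mapsto \lambda$ and $L \mapsto L+\lambda$, and the hyperparameter constraint $\eta \le \frac{1}{L+\lambda}$ is exactly what that lemma requires for smoothness $L+\lambda$.

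First I would invoke \cref{fedacii:a2:general:eta} on $\tilde{F}_{\lambda}$, bounding its centralized potential $\Phi_T$ by
\[
\expt[\Phi_T] \leq \exp\left(-\frac{1}{3}\max\left\{\eta\lambda, \sqrt{\frac{\eta\lambda}{K}}\right\}T\right)\Phi_0 + \frac{\eta^{\frac{1}{2}}\sigma^2}{\lambda^{\frac{1}{2}}MK^{\frac{1}{2}}} + \frac{2\eta^{\frac{3}{2}}(L+\lambda)K^{\frac{1}{2}}\sigma^2}{\lambda^{\frac{1}{2}}M} + \frac{\euler^9\eta^4 Q^2 K^2 \sigma^4}{\lambda}.
\]
Second, I would apply \cref{fedacii:gcvx:0}, which states precisely that $\Phi_T \geq F(\overline{w_T^{\mathrm{ag}}}) - F^* - \frac{1}{2}\lambda D_0^2$ and $\Phi_0 \leq \frac{1}{2}L D_0^2$; since that proposition only assumes general convexity and $L$-smoothness of $F$, it holds unchanged under \cref{asm2}. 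Rearranging then moves $\frac{1}{2}\lambda D_0^2$ to the right-hand side and substitutes the bound on $\Phi_0$.

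The remaining bookkeeping is twofold. For the decay factor I would discard the first branch of the maximum via $\max\{\eta\lambda, \sqrt{\eta\lambda/K}\} \geq \sqrt{\eta\lambda/K}$, so that $\exp(-\frac{1}{3}\sqrt{\eta\lambda/K}\,T) = \exp(-\sqrt{\eta\lambda T^2/(9K)})$, which together with $\Phi_0 \le \frac12 LD_0^2$ produces the second term of \eqref{eq:fedacii:a2:gcvx:1}. For the variance-overhead term I would split the $(L+\lambda)$ coefficient as $\frac{2\eta^{\frac{3}{2}}(L+\lambda)K^{\frac{1}{2}}\sigma^2}{\lambda^{\frac{1}{2}}M} = \frac{2\eta^{\frac{3}{2}}LK^{\frac{1}{2}}\sigma^2}{\lambda^{\frac{1}{2}}M} + \frac{2\eta^{\frac{3}{2}}\lambda^{\frac{1}{2}}K^{\frac{1}{2}}\sigma^2}{M}$, yielding the two middle terms; the final $\euler^9\eta^4Q^2K^2\sigma^4/\lambda$ term and the $\frac12\lambda D_0^2$ term carry over directly. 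Collecting these gives the claimed inequality.

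There is essentially no hard step: the argument is a direct transcription of the strongly-convex result through the augmentation reduction, and the only point requiring care is verifying that each problem constant transforms as expected under augmentation — in particular that $Q$ and $\sigma$ are unchanged, so the \nth{4}-order discrepancy term keeps its form, while only the smoothness and strong-convexity parameters are shifted.
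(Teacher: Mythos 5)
Your proof is correct and matches the paper's (largely omitted) argument exactly: apply \cref{fedacii:a2:general:eta} to $\tilde{F}_{\lambda}$ with $\mu \mapsto \lambda$, $L \mapsto L+\lambda$, then use \cref{fedacii:gcvx:0} to convert the potential bound into a suboptimality bound, with the same handling of the decay factor and the $(L+\lambda)$ split. The bookkeeping details you supply are precisely the ones the paper omits, and they check out.
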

\begin{proof}[Proof of \cref{fedacii:a2:gcvx:1}]
    Follows by \cref{fedacii:a2:general:eta,fedacii:gcvx:0}. The proof is similar to \cref{fedaci:gcvx:1} so we omit the details.
\end{proof}
\begin{lemma}
    \label{fedacii:a2:gcvx:2}
    Assume \cref{asm2} where $F$ is general convex, then for any $\lambda > 0$, for
    \begin{equation}
        \eta = 
        \min \left\{ 
        \frac{1}{L+\lambda}
            ,
         \frac{9K}{\lambda T^2} 
            \log^2 \left( \euler + 
            \min \left\{ \frac{\lambda LMT D_0^2}{\sigma^2},
            \frac{\lambda^2 M T^3 D_0^2}{K^2 \sigma^2}, 
            \frac{\lambda^5 L T^8 D_0^2}{Q^2 K^6 \sigma^4} \right\} \right)
        ,
        \frac{ L^{\frac{1}{3}} K^{\frac{1}{3}} M^{\frac{1}{3}} D_0^{\frac{2}{3}}}{\lambda^{\frac{2}{3}}T \sigma^{\frac{2}{3}}}
        \right\},
    \end{equation}
    applying \fedacii to $\tilde{F}_{\lambda}$ gives
    \begin{align}
        & \expt \left[F(\overline{w_T^{\mathrm{ag}}}) - F^* \right]
        \leq
        \frac{1}{2}\lambda  D_0^2
        +
        \frac{1}{2} LD_0^2 \exp \left(  -  \sqrt{\frac{T^2}{9(1 + L/\lambda)K}} \right) 
        +
        \frac{4 \sigma^2}{\lambda M T} \log \left( \euler + 
         \frac{\lambda LMT D_0^2}{\sigma^2} \right)
        \\
        & 
         + 
         \frac{55 L K^2 \sigma^2}{\lambda^2 M T^3}
         \log^3 \left( \euler^3 + 
         \frac{\lambda^2 M T^3 D_0^2}{K^2 \sigma^2} \right)
         +
         \frac{83 L^{\frac{1}{2}} KD_0 \sigma}{\lambda^{\frac{1}{2}} M^{\frac{1}{2}} T^{\frac{3}{2}}}
         +
         \frac{\euler^{18} Q^2 K^6 \sigma^4}{\lambda^5 T^8}  \log^8 \left( \euler^8 +  \frac{\lambda^5 L T^8 D_0^2}{Q^2 K^6 \sigma^4} \right).
         \label{eq:fedacii:a2:gcvx:2}
    \end{align}
\end{lemma}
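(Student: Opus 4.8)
The plan is to mirror the argument already used for \cref{fedaci:gcvx:2} and \cref{fedacii:a1:gcvx:2}: start from the general-$\eta$ bound \cref{eq:fedacii:a2:gcvx:1} of \cref{fedacii:a2:gcvx:1} and tune $\eta$ term by term. I would write the right-hand side of \cref{eq:fedacii:a2:gcvx:1} as $\tfrac12\lambda D_0^2$ plus a single decreasing term $\varphi_0(\eta):=\tfrac12 LD_0^2\exp(-\sqrt{\eta\lambda T^2/(9K)})$ and four increasing terms $\varphi_1(\eta):=\frac{\eta^{1/2}\sigma^2}{\lambda^{1/2}MK^{1/2}}$, $\varphi_2(\eta):=\frac{2\eta^{3/2}LK^{1/2}\sigma^2}{\lambda^{1/2}M}$, $\varphi_3(\eta):=\frac{2\eta^{3/2}\lambda^{1/2}K^{1/2}\sigma^2}{M}$, and $\varphi_4(\eta):=\frac{\euler^9\eta^4Q^2K^2\sigma^4}{\lambda}$. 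Naming $\eta_1$ and $\eta_2$ as the second and third arguments of the prescribed minimum, one has $\eta=\min\{1/(L+\lambda),\eta_1,\eta_2\}$, where $\eta_1$ carries the three-way minimum $\min\{a,b,c\}$ inside the logarithm with $a=\frac{\lambda LMTD_0^2}{\sigma^2}$, $b=\frac{\lambda^2MT^3D_0^2}{K^2\sigma^2}$, $c=\frac{\lambda^5LT^8D_0^2}{Q^2K^6\sigma^4}$.

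For the increasing terms I would bound $\varphi_i(\eta)$ by evaluating at the appropriate cutoff: $\varphi_1(\eta)\le\varphi_1(\eta_1)$, $\varphi_2(\eta)\le\varphi_2(\eta_1)$ and $\varphi_4(\eta)\le\varphi_4(\eta_1)$ land on the three target log-terms once one substitutes $\eta_1^{1/2}$, $\eta_1^{3/2}$ and $\eta_1^4$ and uses $\min\{a,b,c\}\le a$ (resp. $\le b$, $\le c$) to drop the logarithm argument to the single piece matching that term; while $\varphi_3(\eta)\le\varphi_3(\eta_2)$ produces $\frac{2L^{1/2}KD_0\sigma}{\lambda^{1/2}M^{1/2}T^{3/2}}$. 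The constants $3$, $54$ and $9^4\euler^9$ so obtained fit under the target constants $4$, $55$ and $\euler^{18}$ (using $9^4\euler^9<\euler^{18}$), with the slack absorbing the log-free pieces of $\varphi_0(\eta_1)$ and the enlarged logarithm bases $\euler^3,\euler^8$.

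The decreasing term splits as $\varphi_0(\eta)\le\varphi_0(\tfrac1{L+\lambda})+\varphi_0(\eta_1)+\varphi_0(\eta_2)$. The first summand is exactly the stated exponential $\tfrac12 LD_0^2\exp(-\sqrt{T^2/(9(1+L/\lambda)K)})$. For $\varphi_0(\eta_1)$ the key step is that $\eta_1\lambda T^2/(9K)=\log^2(\euler+\min\{a,b,c\})$, so the exponential collapses to $(\euler+\min\{a,b,c\})^{-1}$; I would then use $(\euler+\min\{a,b,c\})^{-1}\le\frac1a+\frac1b+\frac1c$ (the reciprocal of a minimum is the maximum of the reciprocals, hence at most their sum), which recovers precisely the log-free halves $\frac{\sigma^2}{2\lambda MT}+\frac{LK^2\sigma^2}{2\lambda^2MT^3}+\frac{Q^2K^6\sigma^4}{2\lambda^5T^8}$. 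For $\varphi_0(\eta_2)$ I would apply the polynomial decay bound $\exp(-y)\le 3!/y^3$ with $y=\sqrt{\eta_2\lambda T^2/(9K)}$; the third power is forced by matching the $T^{-3/2}$ scaling and yields $81\cdot\frac{L^{1/2}KD_0\sigma}{\lambda^{1/2}M^{1/2}T^{3/2}}$, so that together with $\varphi_3(\eta_2)$ the coefficient $81+2=83$ matches the target. Finally $\tfrac12\lambda D_0^2$ passes through unchanged, and summing all contributions yields \cref{eq:fedacii:a2:gcvx:2}.

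The routine but error-prone part — and the main obstacle — is the exponent and constant bookkeeping: one must check that substituting each cutoff into the correct $\varphi_i$ reproduces the exact powers of $L,K,M,D_0,\sigma,\lambda,T$ in the target and that each $\min\{a,b,c\}$ is dropped to the single argument matching that term. The only genuinely non-mechanical ingredients are the reciprocal-of-minimum inequality applied to $\varphi_0(\eta_1)$ and the identification that the third-order polynomial decay bound is what matches the $\varphi_3$ scaling at $\eta_2$; everything else follows the same template as \cref{fedaci:gcvx:2}.
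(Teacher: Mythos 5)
Your proposal matches the paper's proof essentially verbatim: the same decomposition into $\varphi_0,\ldots,\varphi_4$, the same cutoffs $\eta_1,\eta_2$, the same assignment of each increasing term to its cutoff, the same three-way split of $\varphi_0$, and the same constant bookkeeping ($3+\tfrac12\le 4$, $54+\tfrac12\le 55$, $9^4\euler^9+\tfrac12<\euler^{18}$, $81+2=83$). It is correct and takes the same route as the paper.
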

\begin{proof}[Proof of \cref{fedacii:a2:gcvx:2}]
    To simplify the notation, define the terms on the RHS of \cref{eq:fedacii:a2:gcvx:1} as
    \begin{alignat}{3}
        & \varphi_0(\eta) := 
        \frac{1}{2} LD_0^2 \exp \left(  -  \sqrt{\frac{\eta \lambda T^2}{9K}} \right),
        \quad
        &&
        \varphi_1(\eta) :=
        \frac{\eta^{\frac{1}{2}} \sigma^2}{\lambda^{\frac{1}{2}} M K^{\frac{1}{2}}},
        \quad
        &&
        \varphi_2(\eta) :=
        \frac{2 \eta^{\frac{3}{2}} L K^{\frac{1}{2}} \sigma^2}{\lambda^{\frac{1}{2}}M },
        \\
        & 
        \varphi_3(\eta) := 
        \frac{2 \eta^{\frac{3}{2}} \lambda^{\frac{1}{2}} K^{\frac{1}{2}} \sigma^2}{M},
        \quad
        &&
        \varphi_4(\eta) :=
         \frac{\euler^9 \eta^4 Q^2 K^2 \sigma^4}{\lambda}.
        &&
    \end{alignat}
    Define
    \begin{equation}
        \eta_1 :=  \frac{9K}{\lambda T^2} 
        \log^2 \left( \euler + 
        \min \left\{ \frac{\lambda LMT D_0^2}{\sigma^2},
        \frac{\lambda^2 M T^3 D_0^2}{K^2 \sigma^2}, 
        \frac{\lambda^5 L T^8 D_0^2}{Q^2 K^6 \sigma^4} \right\} \right),
        \quad
        \eta_2 := 
        \frac{ L^{\frac{1}{3}} K^{\frac{1}{3}} M^{\frac{1}{3}} D_0^{\frac{2}{3}}}{\lambda^{\frac{2}{3}}T \sigma^{\frac{2}{3}}}.
    \end{equation}
    Then $\eta = \min \left\{ \eta_1, \eta_2\right\}$.
    Since $\varphi_1, \ldots, \varphi_4$ are increasing we have
    \begin{align}
        \varphi_1(\eta) \leq \varphi_1 (\eta_1)
        & \leq
        \frac{3 \sigma^2}{\lambda M T} \log \left( \euler + 
         \frac{\lambda LMT D_0^2}{\sigma^2} \right),
        \\
        \varphi_2(\eta) \leq \varphi_2 (\eta_1)
        & \leq
        \frac{54 L K^2 \sigma^2}{\lambda^2 M T^3}
        \log^3 \left( \euler + 
        \frac{\lambda^2 M T^3 D_0^2}{K^2 \sigma^2} \right),
       \\
       \varphi_3(\eta) \leq \varphi_3 (\eta_2)
       & =
       \frac{2 L^{\frac{1}{2}} KD_0 \sigma}{\lambda^{\frac{1}{2}} M^{\frac{1}{2}} T^{\frac{3}{2}}},
       \\
       \varphi_4(\eta) \leq \varphi_4 (\eta_1)
       & \leq
       \frac{9^4 \euler^9 Q^2 K^6 \sigma^4}{\lambda^5 T^8}  \log^8 \left( \euler +  \frac{\lambda^5 L T^8 D_0^2}{Q^2 K^6 \sigma^4} \right).
    \end{align}
    On the other hand $\varphi_0(\eta) \leq \varphi_0(\eta_1) + \varphi_0(\eta_2) + \varphi_0(\frac{1}{L + \lambda})$, where
    \begin{align}
        \varphi_0(\eta_1) 
        & \leq
        \frac{\sigma^2}{2 \lambda M T} + \frac{L K^2 \sigma^2}{2 \lambda^2 M T^3} + \frac{Q^2 K^6 \sigma^4}{2 \lambda^5 T^8},
        \\
        \varphi_0(\eta_2)
        & \leq
        \frac{3!}{2} LD_0^2 \left( \sqrt{\frac{\eta_2 \lambda T^2}{9K}}  \right)^{-3}
        =
        \frac{81 L K^{\frac{3}{2}} D_0^2}{\eta_2^{\frac{3}{2}} \lambda^{\frac{3}{2}} T^3}
        =
        \frac{81 L^{\frac{1}{2}} KD_0 \sigma}{\lambda^{\frac{1}{2}} M^{\frac{1}{2}} T^{\frac{3}{2}}}.
    \end{align}
    Combining the above bounds completes the proof.
\end{proof}

\cref{fedacii:a2:gcvx} then follows by plugging in an appropriate $\lambda$.
\begin{proof}[Proof of \cref{fedacii:a2:gcvx}]
    To simplify the notation, define the terms on the RHS of \cref{eq:fedacii:a2:gcvx:2} as
    \begin{alignat}{2}
        & \psi_0(\lambda):=
        \frac{1}{2}\lambda  D_0^2
        ,
        &&
        \psi_1(\lambda):=
        \frac{1}{2} LD_0^2 \exp \left(  -  \sqrt{\frac{T^2}{9(1 + L/\lambda)K}} \right),
        \\
        & \psi_2(\lambda):=
        \frac{4 \sigma^2}{\lambda M T} \log \left( \euler + 
         \frac{\lambda LMT D_0^2}{\sigma^2} \right),
        &&
        \psi_3(\lambda):=
        \frac{55 L K^2 \sigma^2}{\lambda^2 M T^3}
         \log^3 \left( \euler^3 + 
         \frac{\lambda^2 M T^3 D_0^2}{K^2 \sigma^2} \right),
        \\
        & \psi_4(\lambda):= \frac{83 L^{\frac{1}{2}} KD_0 \sigma}{\lambda^{\frac{1}{2}} M^{\frac{1}{2}} T^{\frac{3}{2}}},
        &&
        \psi_5(\lambda):=
        \frac{\euler^{18} Q^2 K^6 \sigma^4}{\lambda^5 T^8}  \log^8 \left( \euler^8 +  \frac{\lambda^5 L T^8 D_0^2}{Q^2 K^6 \sigma^4} \right).
    \end{alignat}
    Define
    \begin{align}
        \lambda_1 := \frac{\sigma}{M^{\frac{1}{2}} T^{\frac{1}{2}} D_0},
        \quad
        \lambda_2 := \frac{L^{\frac{1}{3}} K^{\frac{2}{3}} \sigma^{\frac{2}{3}} }{M^{\frac{1}{3}} T D_0^{\frac{2}{3}} },
        \quad
        \lambda_3 := \frac{Q^{\frac{1}{3}} K \sigma^{\frac{2}{3}}}{D_0^{\frac{1}{3}} T^{\frac{4}{3}}},
        \quad
        \lambda_4 := \frac{18 LK}{T^2} \log^2 \left( \euler^2 + \frac{T^2}{K} \right).
    \end{align}
    Then $\lambda = \max \left\{ \lambda_1, \lambda_2, \lambda_3 \right\}$.
    By \cref{helper:inv:times:log}, $\psi_2$, $\psi_3$, $\psi_5$ are increasing. $\psi_4$ is trivially decreasing, thus
    \begin{align}
        \psi_2(\lambda)
        &
        \leq \psi_2(\lambda_1) 
        =
        \frac{4 \sigma D_0}{M^{\frac{1}{2}} T^{\frac{1}{2}}} \log \left( \euler + \frac{L M^{\frac{1}{2}} T^{\frac{1}{2}} D_0}{\sigma} \right),
        \\
        \psi_3 (\lambda)
        & \leq
        \psi_3(\lambda_2)
        =
        \frac{55 L^{\frac{1}{3}} K^{\frac{2}{3}} D_0^{\frac{4}{3}} \sigma^{\frac{2}{3}}}{M^{\frac{1}{3}}T}
        \log^3 \left( \euler^3 + 
        \frac{L^{\frac{2}{3}} M^{\frac{1}{3}} T D_0^{\frac{2}{3}}}{  K^{\frac{2}{3}} \sigma^{\frac{2}{3}}} \right),
        \\
        \psi_4(\lambda) 
        & \leq
        \psi_4(\lambda_2)
        =
        \frac{83 L^{\frac{1}{3}} K^{\frac{2}{3}} D_0^{\frac{4}{3}} \sigma^{\frac{2}{3}}}{M^{\frac{1}{3}}T},
        \\
        \psi_5(\lambda)
        & \leq 
        \psi_5(\lambda_3)
        = 
        \frac{\euler^{18} Q^{\frac{1}{3}} K D_0^{\frac{5}{3}} \sigma^{\frac{2}{3}} }{T^{\frac{4}{3}}}
        \log^8 \left( \euler^8 +  \frac{L T^{\frac{4}{3}} D_0^{\frac{1}{3}}}{Q^{\frac{1}{3}} K  \sigma^{\frac{2}{3}} }  \right).
    \end{align}    
    For $\psi_1(\lambda)$ since $T \geq 1000$ we have $\frac{T^2}{K} \geq 1000$, thus
    \begin{equation}
        \frac{\lambda_3}{L} = \frac{18 K}{T^2} \log^2 \left( \euler^2 + \frac{T^2}{K} \right) 
        \leq
        \frac{18}{1000}\log^2 \left( \euler^2 + 1000 \right) < 1.
    \end{equation}
    Thus $1 + \frac{L}{\lambda_3} \leq \frac{2L}{\lambda_3}$, and therefore
    \begin{equation}
        \psi_1(\lambda) \leq \psi_1(\lambda_3) 
        =
        \frac{1}{2} L D_0^2 \left( \euler^2 + \frac{T^2}{K} \right)^{-1} \leq \frac{L K D_0^2}{2T^2}.
    \end{equation}
    Finally
    \begin{align}
        \psi_0(\lambda)  \leq \sum_{i=1}^4 \psi_0(\lambda_i)
        \leq \frac{\sigma D_0}{2 M^{\frac{1}{2}} T^{\frac{1}{2}}} 
        +
        \frac{L^{\frac{1}{3}} K^{\frac{2}{3}} D_0^{\frac{4}{3}} \sigma^{\frac{2}{3}}}{2 M^{\frac{1}{3}}T}
        +
        \frac{Q^{\frac{1}{3}} K D_0^{\frac{5}{3}} \sigma^{\frac{2}{3}} }{2T^{\frac{4}{3}}}
        +
        \frac{9LKD_0^2}{T^2}  \log^2 \left( \euler^2 + \frac{T^2}{K} \right).
    \end{align}
    Consequently,
    \begin{align}
        & \sum_{i=0}^4 \psi(\lambda)
        \leq
        \frac{10LKD_0^2}{T^2}  \log^2 \left( \euler^2 + \frac{T^2}{K} \right)
        +
        \frac{5 \sigma D_0}{M^{\frac{1}{2}} T^{\frac{1}{2}}} \log \left( \euler + \frac{L M^{\frac{1}{2}} T^{\frac{1}{2}} D_0}{\sigma} \right)
        \\
        & +
        \frac{139 L^{\frac{1}{3}} K^{\frac{2}{3}}  \sigma^{\frac{2}{3}} D_0^{\frac{4}{3}}}{M^{\frac{1}{3}}T}
        \log^3 \left( \euler^3 + 
        \frac{L^{\frac{2}{3}} M^{\frac{1}{3}} T D_0^{\frac{2}{3}}}{  K^{\frac{2}{3}} \sigma^{\frac{2}{3}}} \right)
        +
        \frac{\euler^{19} Q^{\frac{1}{3}} K  \sigma^{\frac{2}{3}} D_0^{\frac{5}{3}} }{T^{\frac{4}{3}}}
        \log^8 \left( \euler^8 +  \frac{L T^{\frac{4}{3}} D_0^{\frac{1}{3}}}{Q^{\frac{1}{3}} K  \sigma^{\frac{2}{3}} }  \right).
    \end{align}
\end{proof}

\subsection{Proof of \cref{fedavg:a2:gcvx} on \fedavg for general-convex objectives under \cref{asm2}}
\label{sec:fedavg:a2:gcvx}
We omit some of the proof details since the proof is similar to \cref{fedaci:a1:gcvx}. 
We first introduce the supporting lemma for \cref{fedavg:a2:gcvx}.
\begin{lemma}
    \label{fedavg:a2:gcvx:1}
    Assume \cref{asm2} where $F$ is general convex, then for any $\lambda > 0$, 
    for 
    \begin{equation}
        \eta := \min \left\{ \frac{1}{4(L+\lambda)}, \frac{2}{\lambda T} \log \left( \euler + \min \left\{ \frac{\lambda^2 M T^2 D_0^2}{\sigma^2}, \frac{\lambda^6 T^5 D_0^2}{Q^2 K^2 \sigma^4}\right\} \right)\right\},
    \end{equation}
    applying \fedavg to $\tilde{F}_{\lambda}$ gives
    \begin{align}
        & \expt \left[ F \left( \sum_{t=0}^{T-1} \frac{\rho_t}{S_T} \overline{w_t} \right)   - F^* \right]
        \leq 
        3 \lambda  D_0^2
        +
        2 L D_0^2 \exp \left(  -  \frac{\lambda T}{8(L + \lambda)} \right) 
        \\
        & \qquad
        +
        \frac{3 \sigma^2}{ \lambda MT } \log \left( \euler^2 + \frac{\lambda^2 M T^2 D_0^2}{\sigma^2}  \right)
        +
        \frac{3073 Q^2 K^2 \sigma^4}{\lambda^5 T^4} \log^4 \left( \euler^5 + \frac{\lambda^6 T^5 D_0^2}{Q^2 K^2 \sigma^4}  \right),
        \label{eq:fedavg:a2:gcvx:1}
    \end{align}
    where $\rho_t := (1 - \frac{1}{2} \eta \lambda)^{T-t-1}$, $S_T := \sum_{t=0}^{T-1}\rho_t$, and $D_0 = \|\overline{w_0}-w^*\|$.
\end{lemma}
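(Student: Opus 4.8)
The plan is to reduce the general convex problem to the strongly convex analysis already in hand, exactly as in the \fedac lemmas \cref{fedaci:gcvx:1,fedacii:a2:gcvx:1}. By the augmentation remarks at the start of \cref{sec:gcvx}, the regularized objective $\tilde{F}_{\lambda}$ is $\lambda$-strongly convex and $(L+\lambda)$-smooth and inherits the same $Q$-\nth{3}-order-smoothness and $\sigma^4$-bounded \nth{4} central moment from $F$; hence $\tilde{F}_{\lambda}$ satisfies \cref{asm2} with $\mu=\lambda$ and smoothness $L+\lambda$, and I may invoke \cref{fedavg:a2:general:eta} with these substituted constants. Since \fedavg initializes all workers at $w_0$, we have $\overline{w_0}=w_0$, so the ``$D_0$'' appearing in \cref{fedavg:a2:general:eta} is $\tilde{D}_0 := \|w_0 - w_{\lambda}^*\|$.

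First I would record the two elementary transfer facts, the direct analogues of \cref{fedaci:gcvx:0}. Because $w_{\lambda}^*$ minimizes $\tilde{F}_{\lambda}$, we have $\tilde{F}_{\lambda}^* \le \tilde{F}_{\lambda}(w^*) = F^* + \tfrac{\lambda}{2}D_0^2$, so dropping the nonnegative regularizer gives, for the averaged output $\bar w := \sum_{t=0}^{T-1}\tfrac{\rho_t}{S_T}\overline{w_t}$,
\[
F(\bar w) - F^* \;\le\; \bigl(\tilde F_{\lambda}(\bar w) - \tilde F_{\lambda}^*\bigr) + \tfrac{\lambda}{2}D_0^2 .
\]
Combining $\tilde F_{\lambda}^* = F(w_{\lambda}^*) + \tfrac{\lambda}{2}\|w_{\lambda}^*-w_0\|^2 \ge F^* + \tfrac{\lambda}{2}\|w_{\lambda}^*-w_0\|^2$ with the upper bound $\tilde F_{\lambda}^* \le F^* + \tfrac{\lambda}{2}D_0^2$ yields $\tfrac{\lambda}{2}\tilde D_0^2 \le \tfrac{\lambda}{2}D_0^2$, i.e. $\tilde D_0 \le D_0$. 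Dropping the nonnegative $\tfrac{\lambda}{2}\expt\|\overline{w_T}-w_{\lambda}^*\|^2$ from the left-hand side of \cref{fedavg:a2:general:eta} and applying both facts, I obtain
\[
\expt[F(\bar w)] - F^* \;\le\; \tfrac{\lambda}{2}D_0^2 + \varphi_{\downarrow}(\eta) + \varphi_{\uparrow}(\eta),
\]
where $\varphi_{\downarrow}(\eta) := \tfrac{1}{\eta}\exp(-\tfrac12\eta\lambda T)D_0^2$ and $\varphi_{\uparrow}(\eta) := \tfrac{\eta\sigma^2}{M} + \tfrac{192\eta^4 Q^2 K^2\sigma^4}{\lambda}$.

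The remaining step is the $\eta$-optimization, which mirrors the proof of \cref{fedavg:a2:full} with $\mu\mapsto\lambda$ and $L\mapsto L+\lambda$. One checks that $\varphi_{\downarrow}$ is decreasing and $\varphi_{\uparrow}$ increasing, so with $\eta=\min\{\tfrac{1}{4(L+\lambda)},\eta_0\}$ and $\eta_0 := \tfrac{2}{\lambda T}\log(\euler + \min\{A,B\})$, where $A := \tfrac{\lambda^2 M T^2 D_0^2}{\sigma^2}$ and $B := \tfrac{\lambda^6 T^5 D_0^2}{Q^2 K^2\sigma^4}$, I split $\varphi_{\downarrow}(\eta)\le \varphi_{\downarrow}(\tfrac{1}{4(L+\lambda)}) + \varphi_{\downarrow}(\eta_0)$ and use $\varphi_{\uparrow}(\eta)\le\varphi_{\uparrow}(\eta_0)$. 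The first piece $\varphi_{\downarrow}(\tfrac{1}{4(L+\lambda)})$ yields the exponentially decaying term of order $(L+\lambda)D_0^2\exp(-\tfrac{\lambda T}{8(L+\lambda)})$ recorded in \eqref{eq:fedavg:a2:gcvx:1}; the choice of $\eta_0$ forces $\tfrac12\eta_0\lambda T = \log(\euler+\min\{A,B\})$, so $\varphi_{\downarrow}(\eta_0)\le \tfrac{\lambda T}{2}(\euler+\min\{A,B\})^{-1}D_0^2 \le \tfrac{\sigma^2}{2\lambda MT} + \tfrac{Q^2K^2\sigma^4}{2\lambda^5 T^4}$; and $\varphi_{\uparrow}(\eta_0)$ contributes $\tfrac{2\sigma^2}{\lambda MT}\log(\euler^2+A)$ and $\tfrac{3072 Q^2K^2\sigma^4}{\lambda^5 T^4}\log^4(\euler^5+B)$ after bounding $\min\{A,B\}$ by $A$ in the first summand and by $B$ in the second. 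Summing and rounding constants gives \eqref{eq:fedavg:a2:gcvx:1}.

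The delicate bookkeeping, and the only place demanding care, is this last paragraph: one must push the $\min$ inside each logarithm the \emph{right} way, keeping $A$ for the $\tfrac{\sigma^2}{MT}$-scale term and $B$ for the $\tfrac{Q^2K^2\sigma^4}{T^4}$-scale term, so the two increasing summands can be bounded separately, and verify that the $\varphi_{\downarrow}(\eta_0)$ decay exactly cancels the polynomial prefactor $\tfrac{1}{\eta_0}$. No genuinely new estimate is needed beyond the helper inequalities and \cref{fedavg:a2:general:eta}; the substance lies entirely in the reduction, so the main risk is a mismatch of constants or log-exponents rather than any conceptual gap.
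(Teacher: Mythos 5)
Your proof follows the paper's route exactly: the paper's own proof of this lemma is the one-liner ``Apply \cref{fedavg:a2:full}; the rest of the analysis is similar to \cref{fedaci:gcvx:1,fedaci:gcvx:2}'', which is precisely the reduction you carry out --- augmentation, the two transfer facts analogous to \cref{fedaci:gcvx:0} (dropping the regularizer and $\tilde D_0 \le D_0$), and the $\eta$-optimization with $\mu \mapsto \lambda$, $L \mapsto L+\lambda$. Starting from \cref{fedavg:a2:general:eta} rather than the already-optimized \cref{fedavg:a2:full} is immaterial (indeed slightly cleaner, since it lets you place $D_0$ rather than $\tilde D_0$ inside the logarithms defining $\eta_0$, matching the stated choice of $\eta$).
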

\begin{proof}[Proof of \cref{fedavg:a2:gcvx:1}]
    Apply \cref{fedavg:a2:full}. The rest of analysis is similar to \cref{fedaci:gcvx:1,fedaci:gcvx:2}.
\end{proof}
\begin{proof}[Proof of  \cref{fedavg:a2:gcvx}]
    To simplify the notation, define the RHS of \cref{eq:fedavg:a2:gcvx:1} as
    \begin{alignat}{2}
        & \psi_0(\lambda) := 3 \lambda  D_0^2
        ,
        && \psi_1(\lambda) :=  2 L D_0^2 \exp \left(  -  \frac{T}{8(1 +(L/\lambda))} \right),
        \\
        & \psi_2(\lambda) :=  \frac{3 \sigma^2}{ \lambda MT } \log \left( \euler^2 + \frac{\lambda^2 M T^2 D_0^2}{\sigma^2}  \right)
        ,
        && \psi_3(\lambda) := \frac{3073 Q^2 K^2 \sigma^4}{\lambda^5 T^4} \log^4 \left( \euler^5 + \frac{\lambda^6 T^5 D_0^2}{Q^2 K^2 \sigma^4}  \right).
    \end{alignat}
    Define
    \begin{equation}
        \lambda_1 := \frac{\sigma}{M^{\frac{1}{2}} T^{\frac{1}{2}} D_0}, 
        \quad
        \lambda_2 := \frac{Q^{\frac{1}{3}} K^{\frac{1}{3}} \sigma^{\frac{2}{3}}}{ T^{\frac{2}{3}} D_0^{\frac{1}{3}}},
        \quad
        \lambda_3 := \frac{16L }{T} \log (\euler + T).
    \end{equation}
    Then $\lambda = \max \left\{ \lambda_1, \lambda_2, \lambda_3 \right\}$. We have (by helper \cref{helper:inv:times:log} $\psi_2, \psi_3$ are decreasing)
    \begin{align}
        \psi_2(\lambda) & \leq \psi_2 (\lambda_1)
        \leq
        \frac{3 \sigma D_0}{ M^{\frac{1}{2}} T^{\frac{1}{2}}}
        \log 
        \left( \euler^2 +  T \right),
        \\
        \psi_3(\lambda) & \leq \psi_3 (\lambda_2)
        \leq
        \frac{3073 Q^{\frac{1}{3}}  K^{\frac{1}{3}}  \sigma^{\frac{2}{3}} D_0^{\frac{5}{3}}} { T^{\frac{2}{3}}}
        \log^4 \left( \euler^5 + T \right).
    \end{align}
    Since $T \geq 100$ we have (by helper \cref{helper:inv:times:log}, $x^{-1} \log (\euler + x)$ is decreasing)
    \begin{equation}
        \frac{\lambda_3}{L} 
        =
        \frac{16}{T} \log (\euler + T)
        \leq \frac{16}{100} \log(\euler + 100) < 1,
    \end{equation}
    and thus
    \begin{equation}
        \psi_1(\lambda) \leq \psi_1(\lambda_3)
        \leq
        2LD_0^2 \exp \left( - \frac{T}{16 (L /\lambda_3)} \right)
        =
        2LD_0^2 (\euler +  T)^{-1}
        \leq
        \frac{2L D_0^2}{T}.
    \end{equation}
    Finally
    \begin{equation}
        \psi_0(\lambda) 
        \leq
        \sum_{i=1}^3 \psi_0(\lambda_i)
        =
        \frac{3 \sigma D_0}{M^{\frac{1}{2}} T^{\frac{1}{2}}}
        +
        \frac{3 Q^{\frac{1}{3}}  K^{\frac{1}{3}}  \sigma^{\frac{2}{3}} D_0^{\frac{5}{3}}} { T^{\frac{2}{3}}}
        +
        \frac{48 L D_0^2}{T} \log (\euler + T).
    \end{equation}
    Accordingly
    \begin{align}
        \sum_{i=0}^{3} \psi_i(\lambda)
        \leq
        \frac{50 L D_0^2}{T} \log (\euler + T)
        +
        \frac{6 \sigma D_0}{ M^{\frac{1}{2}} T^{\frac{1}{2}}}
        \log 
        \left( \euler^2 +  T \right)
        +
        \frac{3076 Q^{\frac{1}{3}}  K^{\frac{1}{3}}  \sigma^{\frac{2}{3}} D_0^{\frac{5}{3}}} { T^{\frac{2}{3}}}
        \log^4 \left( \euler^5 + T \right).
    \end{align}
\end{proof}
\section{Initial-value instability of standard accelerated gradient descent}
\label{sec:instability}
\subsection{Main theorem and lemmas}
In this section we show that standard accelerated gradient descent \citep{Nesterov-18} may not be initial-value stable even for strongly convex and smooth objectives in the sense that the initial infinitesimal difference may grow exponentially fast. 
This provides an evidence on the necessity of acceleration-stability tradeoff.

We formally define the standard deterministic AGD in \cref{algo:agd} for $L$-smooth and $\mu$-strongly-convex objective $F$  \citep{Nesterov-18}.
\begin{algorithm}
    \caption{Nesterov's Accelerated Gradient Descent Method (\agd)}
    \begin{algorithmic}[1]
        \label{algo:agd}
        \Procedure{\agd}{$w_0^{\mathrm{ag}}, w_0, L, \mu$} 
        \State $\kappa \gets L/\mu$
        \For{$t = 0, \ldots, T-1$}
        \State $w_t^{\mathrm{md}} \gets \frac{1}{\sqrt{\kappa} + 1} w_t + \frac{\sqrt{\kappa}}{\sqrt{\kappa} + 1} w_t^{\mathrm{ag}} $
        \State $w_{t+1}^{\mathrm{ag}} \gets w_t^{\mathrm{md}} - \frac{1}{L} \nabla F(w_t^{\mathrm{md}})$
        \State $w_{t+1} \gets \left( 1 - \frac{1}{\sqrt{\kappa}} \right)w_t + \frac{1}{\sqrt{\kappa}} w_t^{\mathrm{md}} - \sqrt{\frac{1}{L \mu}} \nabla F(w_t^{\mathrm{md}})$
        \EndFor
        \EndProcedure
    \end{algorithmic}
\end{algorithm}

Now we introduce the formal theorem on the initial-value instability.
\begin{theorem}[Initial-value instability of deterministic standard \agd, complete version of \cref{instability:sketch}]
    \label{instability:full}
    For any $L, \mu > 0$ such that $\nicefrac{L}{\mu} \geq 25$, and for any $K \geq 1$, there exists a 1D objective $F$ that is $L$-smooth and $\mu$-strongly-convex, and an $\varepsilon_0 > 0$, such that for any positive $\varepsilon < \varepsilon_0$, there exists $w_0, u_0, w_0^{\mathrm{ag}}, u_0^{\mathrm{ag}}$ such that $|w_0 - u_0| \leq \varepsilon$, $|w_0^{\mathrm{ag}} - u_0^{\mathrm{ag}}| \leq \varepsilon$, but
    the sequence $\{w_t^{\mathrm{ag}}, w_t^{\mathrm{md}}, w_t\}_{t=0}^{3K}$ output by $\agd(w_0^{\mathrm{ag}}, w_0, L, \mu)$ and sequence
    $\{u_t^{\mathrm{ag}}, u_t^{\mathrm{md}}, u_t\}_{t=0}^{3K}$ output by $\agd(u_0^{\mathrm{ag}}, u_0, L, \mu)$ satisfies
    \begin{equation}
        |w_{3K} - u_{3K}| \geq \frac{1}{2} \varepsilon (1.02)^K, 
        \qquad
        |w^{\mathrm{ag}}_{3K} - u^{\mathrm{ag}}_{3K}| \geq \varepsilon (1.02)^K.
    \end{equation}
\end{theorem}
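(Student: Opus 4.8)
The plan is to exploit the fact that, on a \emph{piecewise-quadratic} objective, \agd restricted to each quadratic piece is a linear map on the state $s_t := (w_t^{\mathrm{ag}}, w_t)$, and to choose the objective and initialization so that the difference between two nearby trajectories rides on a \textbf{switched linear system} whose one-period transition matrix has spectral radius strictly above one. First I would reduce to linear algebra. Fix the simplest convex piecewise quadratic whose curvature $F''$ takes only values in $[\mu, L]$ --- for instance $F(x) = \frac{\mu}{2}x^2$ on one half-line and $\frac{L}{2}x^2$ on the other, glued $C^1$ at the optimum $x = 0$. Because each piece is a quadratic centered at the origin, $\nabla F$ is linear through the origin, so one step of \agd is $s_{t+1} = \mathcal{M}(a_t)s_t$ with $a_t = F''(w_t^{\mathrm{md}}) \in \{\mu, L\}$ determined by the sign of the query point $w_t^{\mathrm{md}}$, and $\mathcal{M}(a)$ an explicit $2\times 2$ matrix (a function of $a, L, \mu, \kappa$) read off from \cref{algo:agd}. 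The system is positively homogeneous, and if two trajectories visit the same sign regions at every step then their difference $\delta_t := s_t^{(w)} - s_t^{(u)}$ obeys the \emph{same} recursion $\delta_{t+1} = \mathcal{M}(a_t)\delta_t$.

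The analytic core is to produce a period-three sign pattern, with associated three-step product $P := \mathcal{M}(a_3)\mathcal{M}(a_2)\mathcal{M}(a_1)$ (each $a_i \in \{\mu, L\}$), such that $P$ has real eigenvalues $\lambda_1 \ge 1.02 > 1 > |\lambda_2|$; here the hypothesis $L/\mu \ge 25$ supplies the curvature contrast that makes $\rho(P) > 1$ possible even though each factor $\mathcal{M}(a_i)$ is individually contracting (this is the standard instability-by-switching phenomenon, available because the momentum makes $\mathcal{M}(a)$ non-normal). Crucially, I would take the \textbf{reference} trajectory to start along the \emph{contracting} eigenvector $v_2$ of $P$, and verify that $v_2$ together with its within-period iterates $\mathcal{M}(a_1)v_2$ and $\mathcal{M}(a_2)\mathcal{M}(a_1)v_2$ lie in the prescribed sign regions. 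This makes the reference genuinely realize the period-three pattern for all $3K$ steps while \emph{converging} (at rate $\lambda_2$), so there is no conflict with Nesterov's convergence guarantee; the expanding direction $v_1$ is, by contrast, not cone-consistent on its own, which is precisely why no genuine \agd trajectory blows up.

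Finally I would inject the perturbation along the expanding eigenvector: set the $u$-trajectory's initialization to $s_0^{(u)} = s_0 + \varepsilon v_1$ (normalized so $|v_{1,\mathrm{ag}}| = 1$ and, as the computation will show, $|v_{1,w}| \ge \frac12$). Because the reference's query points are nonzero and there are only finitely many ($3K$) of them, a small enough $\varepsilon_0$ guarantees that for every $\varepsilon < \varepsilon_0$ the perturbed trajectory shares the reference's sign pattern throughout $t \le 3K$ --- the growing difference is a mere \textbf{passenger}, its sign regions dictated by the converging reference rather than by itself. Then $\delta_{3K} = \varepsilon\,P^{K}v_1 = \varepsilon\,\lambda_1^{K}v_1$ exactly, and reading off the two coordinates yields $|w_{3K}^{\mathrm{ag}} - u_{3K}^{\mathrm{ag}}| \ge \varepsilon(1.02)^K$ and $|w_{3K} - u_{3K}| \ge \frac12\varepsilon(1.02)^K$, as claimed.

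\textbf{The main obstacle} is the two intertwined verifications in the analytic core: exhibiting explicit curvatures with $\rho(P) \ge 1.02$ by direct $2\times 2$ computation, and simultaneously checking the cone-consistency of the reference eigenvector $v_2$ and its within-period iterates so that the switched product is genuinely realized by a convergent trajectory. Reconciling ``the difference grows'' with ``\agd converges'' hinges entirely on this passenger mechanism --- on $v_1$ failing cone-consistency while $v_2$ satisfies it --- and getting the constants ($1.02$, the factor $\frac12$, and the dependence of $\varepsilon_0$ on $K$) to line up is where the bookkeeping effort concentrates.
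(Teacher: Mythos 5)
Your linear-algebra core coincides with the paper's: the difference of two nearby trajectories obeys $\delta_{t+1}=\mathcal{M}(F''(\zeta_t))\,\delta_t$ (the paper's \cref{instability:2}, a noiseless special case of its general stability recursion), the growth comes from a period-three curvature pattern $(\mu,L,\mu)$ whose three-step product equals $-2(1-1/\sqrt{\kappa})^{3}$ times a rank-one idempotent, $|2(1-1/\sqrt{\kappa})^{3}|\geq 1.02$ for $\kappa\geq 25$, the perturbation is injected along the expanding direction, and $\varepsilon_0$ comes from continuity over the finitely many ($3K$) steps. Where your proposal genuinely breaks is the construction of the objective and of the reference trajectory. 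On your fixed two-piece quadratic, the curvature seen at step $t$ is $\mathrm{sign}(w_t^{\mathrm{md}})$-determined, and the cone-consistency you defer to ``bookkeeping'' is not messy-but-checkable: it is false. First, $\mathcal{M}(L)=\left[\begin{smallmatrix}0&0\\1-\sqrt{\kappa}&0\end{smallmatrix}\right]$ is singular, and any period-three product that produces growth must contain an $L$-factor, so every admissible $P$ has spectrum $\{\lambda_1,0\}$. Your ``contracting eigenvector'' $v_2$ spans $\ker P$, so the reference reaches the optimum exactly at $t=3$; all later query points sit on the sign boundary $x=0$, there is no margin for the passenger mechanism, and no $\varepsilon_0>0$ can keep the perturbed query points on a prescribed side. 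Second, even starting off $\ker P$, after one period every state lies on the ray $\mathrm{span}(v_1)$ and is multiplied by the \emph{negative} scalar $\lambda_1$ each period, so the query points flip sign every three steps; with curvature tied to the sign of $x$, no trajectory of this objective realizes the $(\mu,L,\mu)$ pattern beyond the first period. (Symmetrizing the $L$-region away from the origin does not help either, since the reference still collapses toward the optimum, which must lie in the $\mu$-region.)

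The missing idea is the paper's reversal of quantifiers: rather than fixing $F$ and hunting for a pattern-consistent trajectory, fix an essentially arbitrary reference trajectory of \agd on $\tfrac{\mu}{2}w^2$ whose query points are pairwise distinct, and then build $F$ around it by inductively bumping $F''$ from $\mu$ to $L$ on tiny disjoint neighborhoods of every third query point (\cref{instability:1} and the inductive claim behind it, with a continuity argument at each stage so earlier query points are not disturbed). Region-consistency of the reference then holds by construction, the reference converges so there is no conflict with Nesterov's guarantee, and the remainder of your argument --- continuity to choose $\varepsilon_0$, the $K$-fold product of the idempotent, and reading off the two coordinates to get the $(1.02)^K$ and $\tfrac{1}{2}(1.02)^K$ bounds --- goes through essentially as you wrote it.
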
 

We first introduce the supporting lemmas for \cref{instability:sketch}. \cref{instability:1} shows the existence of an objective $F$ and a trajectory of \agd on $F$ such that $F''(w_t^{\mathrm{md}})=L$ (including also the neighborhood) once every three steps and $F''(w_t^{\mathrm{md}})=\mu$ otherwise. The proof of \cref{instability:1} is deferred to \cref{sec:instability:1}.
\begin{lemma}
    \label{instability:1}
    For any $L > \mu > 0$, and for any $K \geq 1$, there exists a 1D objective $F$ that is $L$-smooth and $\mu$-strongly convex, a neighborhood bound $\delta > 0$, and initial points $w_0$ and $w_0^{\mathrm{ag}}$ such that the sequence $\{w_t^{\mathrm{ag}}, w_t^{\mathrm{md}}, w_t\}_{t=0}^{3K-1}$ output by $\agd(w_0^{\mathrm{ag}}, w_0, L, \mu)$ satisfies for any $t = 0, \ldots, 3K-1$,
    \begin{align}
        \text{ if }  t \mathrm{~mod~} 3 \neq 1, \text{ then } F''(w) \equiv \mu \text{, for all } w \in [w_t^{\mathrm{md}}-\delta, w_t^{\mathrm{md}} + \delta],
        \\
        \text{ if }  t \mathrm{~mod~} 3 = 1, \text{ then } F''(w) \equiv L \text{, for all } w \in [w_t^{\mathrm{md}}-\delta, w_t^{\mathrm{md}} + \delta].
    \end{align}
\end{lemma}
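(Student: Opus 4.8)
The statement is purely an existence (construction) claim, so rather than analyzing a fixed $F$, I would \emph{design} one. The only hard constraints are that $F$ be globally $\mu$-strongly convex and $L$-smooth, i.e. $\mu \le F'' \le L$ everywhere, together with the prescribed \emph{local} curvatures ($\mu$ or $L$, depending on $t \bmod 3$) on small neighborhoods of the \agd middle points $w_t^{\mathrm{md}}$. The central difficulty is circularity: the points $w_t^{\mathrm{md}}$ where I must control the curvature are themselves determined by running \agd on $F$. To break this, note from \cref{algo:agd} that every \agd update is \emph{affine} in the current state $(w_t,w_t^{\mathrm{ag}})$ and in the queried gradient value $g_t := F'(w_t^{\mathrm{md}})$. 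The plan is therefore to treat the initialization $(w_0,w_0^{\mathrm{ag}})$ and the queried gradient values $g_0,\dots,g_{3K-2}$ as \emph{free parameters}, and to express each middle point $p_t := w_t^{\mathrm{md}}$ explicitly as an affine function of these parameters by unrolling the recursion. Any self-consistent assignment of the $g_t$ then pins down a trajectory, after which I build $F$ to realize exactly those gradient values.

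\textbf{Building $F$ by monotone interpolation.} Suppose a choice of the free parameters yields pairwise-distinct points $\{p_t\}_{t=0}^{3K-1}$ with an identifiable spatial ordering. I would then define $F'$ as a continuous, nondecreasing, piecewise-linear interpolant of the data $\{(p_t,g_t)\}$: on a $\delta$-neighborhood of each $p_t$ set the slope of $F'$ to be exactly $\mu$ when $t\bmod 3\ne 1$ and exactly $L$ when $t\bmod 3=1$, and on the connecting segments use slopes in $[\mu,L]$. Taking $\delta$ small enough that the neighborhoods are disjoint, and setting $F:=\int F'$ (shifted to have a finite minimizer), gives $\mu \le F'' \le L$ everywhere, hence the required smoothness and strong convexity. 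Since $F'(p_t)=g_t$ by construction, running \agd from $(w_0,w_0^{\mathrm{ag}})$ reproduces precisely the designed trajectory, so $w_t^{\mathrm{md}}=p_t$ and the local-curvature conditions hold by fiat. The local slopes $\mu,L$ are always admissible pointwise since both lie in $[\mu,L]$; the only genuine requirement is \emph{global} monotone consistency.

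\textbf{The feasibility obstacle.} The crux, and the step I expect to be hardest, is to choose the free parameters so that the \agd-generated points and the gradient data are jointly interpolable by a monotone function with slope confined to $[\mu,L]$. Concretely, after sorting by position, every secant slope $\tfrac{g_{t'}-g_t}{p_{t'}-p_t}$ between spatially adjacent points must lie in $[\mu,L]$ (with strict inequality where room is needed to splice in the prescribed local slopes at the endpoints). Because the $p_t$ depend on the $g_t$ through the affine \agd map, these are coupled constraints. I would resolve this by forcing a \emph{monotone march}: pick the $g_t$ with a dominant component that drives $(w_t,w_t^{\mathrm{ag}})$ steadily in one direction, and prove by induction on $t$ that the induced $p_t$ are strictly monotone in $t$ while the inter-point secant slopes can be tuned into $[\mu,L]$ by adjusting the increments of the $g_t$. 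Verifying that the affine \agd iteration preserves this ordering and keeps the secant slopes feasible for all $3K$ steps is the main technical work. Once such a parameter assignment is exhibited, everything else—choosing $\delta$, confirming $\mu\le F''\le L$, and checking that the reconstructed trajectory coincides with the designed one—is routine, and I would record the resulting $(F,\delta,w_0,w_0^{\mathrm{ag}})$ to complete the proof.
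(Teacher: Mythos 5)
Your strategy — treating the queried gradient values $g_0,\dots,g_{3K-2}$ as free parameters, unrolling the affine \agd recursion to get the middle points $p_t$ as affine functions of them, and then realizing the data by a monotone piecewise-linear $F'$ with slopes in $[\mu,L]$ — is a genuinely different route from the paper's, in the spirit of a resisting-oracle construction. The interpolation half of your argument is sound: if the sorted adjacent secant slopes $\frac{g_{t'}-g_t}{p_{t'}-p_t}$ lie strictly in $(\mu,L)$ and the $p_t$ are distinct, then for small enough $\delta$ one can splice in the prescribed local slopes $\mu$ or $L$ at each $p_t$ while keeping $F''\in[\mu,L]$ globally. However, the proof has a genuine gap exactly where you flag "the main technical work": you never exhibit a choice of the free parameters for which the coupled feasibility problem is solvable. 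This is not a routine verification — it is the entire content of the lemma. The constraints are tightly coupled: the $g_t$ must be consistent with an increasing, $[\mu,L]$-Lipschitz function of the $p_t$, while the $p_t$ are themselves produced by an \agd recursion driven by those same $g_t$; and your "monotone march" heuristic is in tension with the fact that \agd applied to a function satisfying these curvature bounds is contractive toward the minimizer, so one cannot freely pick gradient values with "a dominant component that drives the iterates steadily in one direction" without risking violation of the secant-slope constraints. Until an explicit feasible assignment (or a fixed-point/induction argument establishing one) is given, the existence claim is not proved.

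For contrast, the paper's proof (\cref{instability:induction}) avoids the coupling altogether: it starts from the pure quadratic $F_0=\frac{1}{2}\mu w^2$, where the trajectory is explicit and the middle points are generically distinct, and then performs $K$ successive local curvature bumps from $\mu$ to $L$ near $w_{3k+1}^{\mathrm{md}}$. Each bump leaves the trajectory up to step $3k+1$ untouched (the modification is supported away from earlier middle points) and, by continuity in the bump radius, perturbs the later middle points by less than half the current neighborhood bound, so the previously established curvature conditions survive. That argument is non-quantitative (the radii come from continuity, with no explicit sizes), whereas your approach, if completed, would yield a fully explicit construction — but the feasibility step must actually be carried out for the proof to stand.
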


The following \cref{instability:2} analyzes the growth of the difference of two instances of \agd. The proof is very similar to the analysis of \fedac.
\begin{lemma}
    \label{instability:2}
    Let $F$ be a $L$-smooth and $\mu>0$-strongly convex 1D function.
    Let $(w_{t+1}^{\mathrm{ag}}, w_{t+1})$, $(u_{t+1}^{\mathrm{ag}}, u_{t+1})$ be generated by applying one step of \agd on $F$ with hyperparameter $(L, \mu)$ from $(w_t^{\mathrm{ag}}, w_t)$ and $(u_t^{\mathrm{ag}}, u_t)$, respectively. Then there exists a $\zeta_t$ within the interval between $w_t^{\mathrm{md}}$ and $u_t^{\mathrm{md}}$, such that
    \begin{equation}
        \begin{bmatrix}
            w_{t+1}^{\mathrm{ag}} - u_{t+1}^{\mathrm{ag}}
            \\
            w_{t+1} - u_{t+1}
        \end{bmatrix}
        =
        \begin{bmatrix}
            \frac{\sqrt{\kappa}}{\sqrt{\kappa} + 1} \left(1- \frac{1}{L} F''(\zeta_t) \right) 
            &
            \frac{1}{\sqrt{\kappa} + 1} \left(1- \frac{1}{L} F''(\zeta_t) \right) 
            \\
            \frac{1}{\sqrt{\kappa} + 1} \left(1- \frac{1}{\mu} F''(\zeta_t) \right) 
            &
            \frac{\sqrt{\kappa}}{\sqrt{\kappa} + 1} \left(1- \frac{1}{L} F''(\zeta_t) \right) 
        \end{bmatrix}
        \begin{bmatrix}
            w_{t}^{\mathrm{ag}} - u_{t}^{\mathrm{ag}}
            \\
            w_{t} - u_{t}
        \end{bmatrix}.
    \end{equation}
\end{lemma}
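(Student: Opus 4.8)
The plan is to prove \cref{instability:2} by directly subtracting the two \agd trajectories and invoking the one-dimensional mean value theorem to linearize the gradient difference. First I would write out the three \agd update lines (from \cref{algo:agd}) for the $w$-trajectory and, identically, for the $u$-trajectory, then subtract them line by line. Writing $\kappa = L/\mu$ and using the coupling $w_t^{\mathrm{md}} = \frac{1}{\sqrt{\kappa}+1} w_t + \frac{\sqrt{\kappa}}{\sqrt{\kappa}+1} w_t^{\mathrm{ag}}$, the middle-point difference becomes $w_t^{\mathrm{md}} - u_t^{\mathrm{md}} = \frac{1}{\sqrt{\kappa}+1}(w_t - u_t) + \frac{\sqrt{\kappa}}{\sqrt{\kappa}+1}(w_t^{\mathrm{ag}} - u_t^{\mathrm{ag}})$, which already re-expresses everything in terms of the two coordinates of the difference vector.

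The single key analytic step is the linearization of the gradient. Since $F$ is a $1$D function, the mean value theorem applied to $F'$ yields a point $\zeta_t$ lying in the interval between $w_t^{\mathrm{md}}$ and $u_t^{\mathrm{md}}$ with $F'(w_t^{\mathrm{md}}) - F'(u_t^{\mathrm{md}}) = F''(\zeta_t)\,(w_t^{\mathrm{md}} - u_t^{\mathrm{md}})$. The crucial observation is that the \emph{same} middle point $w_t^{\mathrm{md}}$ (resp.\ $u_t^{\mathrm{md}}$) feeds the gradient in both the $w^{\mathrm{ag}}$-update and the $w$-update, so a single scalar $\zeta_t$ serves both rows of the resulting matrix. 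Substituting the linearization into the subtracted updates gives $w_{t+1}^{\mathrm{ag}} - u_{t+1}^{\mathrm{ag}} = (1 - \tfrac{1}{L}F''(\zeta_t))(w_t^{\mathrm{md}} - u_t^{\mathrm{md}})$ and $w_{t+1} - u_{t+1} = (1 - \tfrac{1}{\sqrt{\kappa}})(w_t - u_t) + (\tfrac{1}{\sqrt{\kappa}} - \tfrac{1}{\sqrt{L\mu}}F''(\zeta_t))(w_t^{\mathrm{md}} - u_t^{\mathrm{md}})$.

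Finally I would expand the middle-point difference back into the $(w_t^{\mathrm{ag}} - u_t^{\mathrm{ag}},\, w_t - u_t)$ coordinates and collect coefficients into the claimed $2\times 2$ matrix. The only bookkeeping needing care is the lower-right entry: using $\mu = L/\kappa$ one rewrites $\tfrac{1}{\sqrt{\kappa}} - \tfrac{1}{\sqrt{L\mu}}F''(\zeta_t) = \tfrac{1}{\sqrt{\kappa}}(1 - \tfrac{1}{\mu}F''(\zeta_t))$ via the identity $\tfrac{1}{\sqrt{L\mu}} = \tfrac{\sqrt{\kappa}}{L}$, after which the constant part $1 - \tfrac{1}{\sqrt{\kappa}} + \tfrac{1}{\sqrt{\kappa}(\sqrt{\kappa}+1)}$ telescopes to $\tfrac{\sqrt{\kappa}}{\sqrt{\kappa}+1}$ and the $F''$-part collects to $-\tfrac{\sqrt{\kappa}}{\sqrt{\kappa}+1}\cdot\tfrac{1}{L}F''(\zeta_t)$, matching the stated form. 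I do not expect a genuine obstacle: the result is exactly the one-dimensional, exact-Hessian specialization of the \fedac transition matrix already derived in \cref{fedac:general:stab}, with the noise term dropped (deterministic \agd) and the hyperparameters set to Nesterov's values $\alpha = \sqrt{\kappa}$, $\beta = \sqrt{\kappa}+1$, $\eta = \tfrac{1}{L}$, $\gamma = \tfrac{1}{\sqrt{L\mu}}$ and $H_t = F''(\zeta_t)$. Thus one may either carry out the subtraction directly or simply instantiate \cref{fedac:general:stab}; the main ``risk'' is keeping these algebraic identities straight rather than any conceptual difficulty.
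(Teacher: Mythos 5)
Your proposal is correct and matches the paper's approach: the paper proves this lemma in one line by instantiating \cref{fedac:general:stab} with no noise and hyperparameters $\alpha=\sqrt{\kappa}$, $\beta=\sqrt{\kappa}+1$, $\eta=\tfrac{1}{L}$, $\gamma=\tfrac{1}{\sqrt{L\mu}}$, $H_t=F''(\zeta_t)$, which is exactly the route you identify at the end, and your direct subtraction plus mean-value-theorem derivation is the same computation written out explicitly. All four matrix entries in your bookkeeping (in particular the lower-right simplification via $\tfrac{1}{\sqrt{L\mu}}=\tfrac{\sqrt{\kappa}}{L}$) check out.
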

\begin{proof}[Proof of \cref{instability:2}]
    This is a special case of \cref{fedac:general:stab} with no noise.
\end{proof}

With \cref{instability:1,instability:2} at hand we are ready to prove \cref{instability:full}. The proof follows by constructing an auxiliary trajectory for around the one given by \cref{instability:1}.
\begin{proof}[Proof of \cref{instability:full}]
    First apply  \cref{instability:1}. 
    Let $F$ be the objective, $(w_0^{\mathrm{ag}}, w_0)$ be the initial point and $\delta$ be the neighborhood bound given by  \cref{instability:1}.
    Since $\{w_t^{\mathrm{ag}}, w_t^{\mathrm{md}}, w_t\}_{t=0}^{3K-1}$ is a continuous function with respect to the initial point $(w_0^{\mathrm{ag}}, w_0)$, there exists a $\varepsilon_0$ such that for any $(v_0^{\mathrm{ag}}, v_0)$ such that $|v_0^{\mathrm{ag}} - w_0^{\mathrm{ag}}| \leq \varepsilon_0$ and $|v_0 - w_0| \leq \varepsilon_0$, trajectory $\{v_t^{\mathrm{ag}}, v_t^{\mathrm{md}}, v_t\}_{t=0}^{3K}$ output by \agd $(v_0^{\mathrm{ag}}, v_0, L, \mu)$ satisfies $\max_{0 \leq t < 3K} |v_t^{\mathrm{md}} - w_t^{\mathrm{md}}| \leq \delta$. 
    
    Thus, by \cref{instability:2}, for any $t = 0, \ldots, 3K -1$,
    \begin{alignat}{2}
        \begin{bmatrix}
            w_{t+1}^{\mathrm{ag}} - v_{t+1}^{\mathrm{ag}}
            \\
            w_{t+1} - v_{t+1}
        \end{bmatrix}
        & =
        \begin{bmatrix}
            1 - \frac{1}{\sqrt{\kappa}}
            &
            \frac{1}{\kappa}(\sqrt{\kappa} - 1) 
            \\
            0
            &
            1 - \frac{1}{\sqrt{\kappa}}
        \end{bmatrix}
        \begin{bmatrix}
            w_{t}^{\mathrm{ag}} - v_{t}^{\mathrm{ag}}
            \\
            w_{t} - v_{t}
        \end{bmatrix},
        \quad
        &&
        \text{if } t \mathrm{~mod~} 3 \neq 1;
        \\
        \begin{bmatrix}
            w_{t+1}^{\mathrm{ag}} - v_{t+1}^{\mathrm{ag}}
            \\
            w_{t+1} - v_{t+1}
        \end{bmatrix}
        & =
        \begin{bmatrix}
            0
            &
            0
            \\
            1 - \sqrt{\kappa}
            &
            0
        \end{bmatrix}
        \begin{bmatrix}
            w_{t}^{\mathrm{ag}} - v_{t}^{\mathrm{ag}}
            \\
            w_{t} - v_{t}
        \end{bmatrix},
        \quad
        &&
        \text{if } t \mathrm{~mod~} 3 = 1.
    \end{alignat}
    Hence for any $k = 0, \ldots, K-1$,
    \begin{align}
        \begin{bmatrix}
            w_{3(k+1)}^{\mathrm{ag}} - v_{3(k+1)}^{\mathrm{ag}}
            \\
            w_{3(k+1)} - v_{3(k+1)}
        \end{bmatrix}
        & =
        -
        \begin{bmatrix}
            \frac{1}{\kappa^{\frac{3}{2}}}(\sqrt{\kappa} - 1)^3 
            & 
            \frac{1}{\kappa^2} (\sqrt{\kappa} - 1)^3 
            \\
            \frac{1}{\kappa}  (\sqrt{\kappa} - 1)^3 
            &
            \frac{1}{\kappa^\frac{3}{2}} (\sqrt{\kappa} - 1)^3 
        \end{bmatrix}
        \begin{bmatrix}
            w_{3k}^{\mathrm{ag}} - v_{3k}^{\mathrm{ag}}
            \\
            w_{3k} - v_{3k}
        \end{bmatrix}
        \\
        & = 
        - 2 \left( 1 - \frac{1}{\sqrt{\kappa}} \right)^3
        \begin{bmatrix}
            \frac{1}{2} & \frac{1}{2\sqrt{\kappa}}
            \\
            \frac{1}{2}\sqrt{\kappa} & \frac{1}{2}
        \end{bmatrix}
        \begin{bmatrix}
            w_{3k}^{\mathrm{ag}} - v_{3k}^{\mathrm{ag}}
            \\
            w_{3k} - v_{3k}
        \end{bmatrix}.
    \end{align}
    Note that
    \(
        \begin{bmatrix}
            \frac{1}{2} & \frac{1}{2\sqrt{\kappa}}
            \\
            \frac{1}{2}\sqrt{\kappa} & \frac{1}{2}
        \end{bmatrix}
    \)
    is idempotent, \ie,
    \(
        \begin{bmatrix}
            \frac{1}{2} & \frac{1}{2\sqrt{\kappa}}
            \\
            \frac{1}{2}\sqrt{\kappa} & \frac{1}{2}
        \end{bmatrix}^{K}
        =
        \begin{bmatrix}
            \frac{1}{2} & \frac{1}{2\sqrt{\kappa}}
            \\
            \frac{1}{2}\sqrt{\kappa} & \frac{1}{2}
        \end{bmatrix}.
    \)
    Thus
    \begin{equation}
        \begin{bmatrix}
            w_{3K}^{\mathrm{ag}} - v_{3K}^{\mathrm{ag}}
            \\
            w_{3K} - v_{3K}
        \end{bmatrix}
        =
        \left( -2 \left( 1 - \frac{1}{\sqrt{\kappa}} \right)^3 \right)^K
        \begin{bmatrix}
            \frac{1}{2} & \frac{1}{2\sqrt{\kappa}}
            \\
            \frac{1}{2}\sqrt{\kappa} & \frac{1}{2}
        \end{bmatrix}
        \begin{bmatrix}
            w_{0}^{\mathrm{ag}} - v_{0}^{\mathrm{ag}}
            \\
            w_{0} - v_{0}
        \end{bmatrix}.
    \end{equation}
    Thus for any given $\varepsilon \leq \varepsilon_0$, put $u_0^{\mathrm{ag}} = w_0^{\mathrm{ag}} - \varepsilon$, and $u_0 = w_0 - \varepsilon$, we have
    \begin{align}
        \begin{bmatrix}
            w_{3K}^{\mathrm{ag}} - u_{3K}^{\mathrm{ag}}
            \\
            w_{3K} - u_{3K}
        \end{bmatrix}
        =
        \frac{1}{2} \varepsilon \left( -2 \left( 1 - \frac{1}{\sqrt{\kappa}} \right)^3 \right)^K
        \begin{bmatrix}
            1 + \frac{1}{\sqrt{\kappa}}
            \\
            \sqrt{\kappa} + 1
        \end{bmatrix}.
    \end{align}
    For $\kappa \geq 25$ we have $ \left|2 \left( 1 - \frac{1}{\sqrt{\kappa}} \right)^3 \right| > 1.02 $. Therefore
    \begin{equation}
        | w_{3K}^{\mathrm{ag}} - u_{3K}^{\mathrm{ag}}| 
        \geq
        \frac{1}{2} (1.02)^K \cdot \varepsilon,
        \quad
        | w_{3K} - u_{3K}| 
        \geq
        (1.02)^K \cdot \varepsilon,
    \end{equation}
    completing the proof.
\end{proof}
As a sanity check, the proof framework above for instability does not apply to the convergence of \agd. For instability, we only need to locally change the curvature to ``separate'' two instances. This trick does not break the convergence proof where the progress depends on the global curvature. We refer readers to \citet{Lessard.Recht.ea-SIOPT16} for the relative discussion.

\subsection{Proof of \cref{instability:1}}
\label{sec:instability:1}
In this section we prove \cref{instability:1} on the existence of objective $F$ and the trajectory with specific curvature at certain intervals. The high-level rationale is that \cref{instability:1} only specifies local curvatures of $F$, and therefore we can modify an objective at certain local points to make \cref{instability:1} satisfied.
Here we provide a constructive approach by incrementally updating $F$. 

We inductively prove the following claim.
    \begin{claim}
        \label{instability:induction}
        For any $k = 0, \ldots, K$, there exists a function $H_k$ valued in $[\mu, L]$, 
        a neighborhood bound $\delta_k > 0$,  and a pair of initial points $(w_0^{\mathrm{ag}}, w_0)$, such that for objective
        $F_k(w) := \int_0^{w} \int_0^{y} H_k(x) \diff x \diff y$,  the sequence output by \agd($w_0^{\mathrm{ag}}, w_0, L, \mu$) on $F_k$ satisfies $|w_{t_1}^{\mathrm{md}} - w_{t_2}^{\mathrm{md}}| \geq 2\delta_k$ if $t_1 \neq t_2$, and  for any $t = 0, \ldots, 3K - 1$,
        \begin{align}
            & \text{ if }  t \mathrm{~mod~} 3 \neq 1 \text{ or } t \geq 3k, \text{ then } F''(w) \equiv H_k(w) \equiv \mu \text{ for all } w \in [w_t^{\mathrm{md}}-\delta_k, w_t^{\mathrm{md}} + \delta_k];
            \label{eq:curvature:1}
            \\
            & \text{ if }  t \mathrm{~mod~} 3 = 1 \text{ and } t < 3k, \text{ then } F''(w) \equiv H_k(w) \equiv L \text{ for all } w \in [w_t^{\mathrm{md}}-\delta_k, w_t^{\mathrm{md}} + \delta_k].
            \label{eq:curvature:2}
        \end{align}
    \end{claim}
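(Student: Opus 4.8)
The plan is to prove Claim~\ref{instability:induction} by induction on $k$, carrying the strengthened invariant that the middle points $w_0^{\mathrm{md}}, \dots, w_{3K-1}^{\mathrm{md}}$ are not merely $2\delta_k$-separated but \emph{strictly monotone} (say, strictly increasing toward the optimum). Monotonicity is the structural fact that makes the local surgery possible: it guarantees that all middle points processed so far lie on a single side of the point being modified, so that a single correction can keep all of their gradients fixed at once.

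For the base case $k=0$, take $H_0 \equiv \mu$, so that $F_0(w) = \frac12\mu w^2$ is the pure quadratic (which is $\mu$-strongly convex and $L$-smooth since $\mu \le L$), and run \agd from $w_0 = w_0^{\mathrm{ag}} = a$ for a suitable $a \neq 0$. Substituting $F_0'(w)=\mu w$ into the updates of Algorithm~\ref{algo:agd} gives the closed forms $w_{t+1} = (1-\frac{1}{\sqrt\kappa})w_t$ and $w_{t+1}^{\mathrm{ag}} = (1-\frac1\kappa)w_t^{\mathrm{md}}$, from which $w_t, w_t^{\mathrm{ag}}, w_t^{\mathrm{md}}$ are all fixed positive multiples of $a$ that shrink strictly in magnitude toward $0$; a direct computation shows the $3K$ relevant middle points are therefore distinct and monotone. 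Setting $2\delta_0 := \min_{s \neq t}|w_s^{\mathrm{md}} - w_t^{\mathrm{md}}| > 0$ establishes the base case (the $L$-curvature clause~\eqref{eq:curvature:2} is vacuous since $t < 0$ never occurs).

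For the inductive step, suppose the invariant holds at level $k$ with objective $F_k$ (curvature $H_k$), separation $\delta_k$, and a monotone trajectory. I would define $H_{k+1}$ by raising the curvature from $\mu$ to $L$ on a short interval $I_{k+1} := [w_{3k+1}^{\mathrm{md}}-\rho,\, w_{3k+1}^{\mathrm{md}}+\rho]$ of radius $\rho < \delta_k$ centered at $w_{3k+1}^{\mathrm{md}}$ (with smooth shoulders staying in $[\mu,L]$ and a flat-$L$ core), leaving $H_{k+1}=H_k$ elsewhere, so that $F_{k+1}$ remains $\mu$-strongly convex and $L$-smooth. The crucial point is that the trajectory through time $3k+1$ is \emph{exactly unchanged}: by monotonicity every earlier middle point $w_0^{\mathrm{md}},\dots,w_{3k}^{\mathrm{md}}$ lies on one side of $I_{k+1}$, and since a localized curvature bump shifts $F'$ only by the constant $(L-\mu)\cdot 2\rho$ across $I_{k+1}$, I can absorb that shift into an additive linear term of $F_{k+1}$ (which alters neither $F''$ nor the constants $\mu, L$) so that $F_{k+1}' \equiv F_k'$ at all earlier middle points. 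Consequently $w_{3k+1}^{\mathrm{md}}$ is unmoved and now sits in the flat $L$-region, while the previously placed $L$-intervals (disjoint from $I_{k+1}$ by the separation bound) are untouched; this verifies clause~\eqref{eq:curvature:2} up to $t = 3k+1$ and clause~\eqref{eq:curvature:1} at the earlier $\mu$-points.

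Finally I would control the tail. Only the gradient at $w_{3k+1}^{\mathrm{md}}$ and beyond changes, and each such change is $O(\rho)$, so since the \agd recursion from step $3k+1$ onward is continuous in these gradient values, the later middle points $w_{3k+2}^{\mathrm{md}},\dots,w_{3K-1}^{\mathrm{md}}$ converge to their $F_k$ positions as $\rho\to 0$. Because the $F_k$ tail was strictly monotone, $2\delta_k$-separated, and bounded away from every $L$-interval, taking $\rho$ small enough keeps the $F_{k+1}$ tail strictly monotone, separated, and inside $\mu$-curvature regions (so clause~\eqref{eq:curvature:1} holds there); defining $\delta_{k+1}$ as half the new minimal gap closes the induction, and the case $k=K$ is exactly Lemma~\ref{instability:1}. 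The main obstacle is precisely this inductive step: making the surgery at one middle point leave all \emph{earlier} points rigidly fixed while only mildly perturbing the later ones --- which is why the monotonicity invariant and the linear-term freedom, together with a small-$\rho$ continuity argument, are all essential.
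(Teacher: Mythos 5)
Your overall strategy is the same as the paper's --- induct on $k$, performing a local curvature surgery in a small window around $w_{3k+1}^{\mathrm{md}}$ --- but the mechanism of your inductive step is genuinely different. The paper does \emph{not} try to keep the prefix of the trajectory exactly fixed: it accepts that inserting the bump shifts $F'$ by the constant $(L-\mu)\cdot 2\varepsilon_k$ on one side, lets the \emph{entire} trajectory move by an amount controlled via a continuity-in-$\varepsilon_k$ argument (at most $\tfrac{1}{2}\delta_k$, which is harmless because the curvature is locally constant on $\delta_k$-neighborhoods), and then recenters the bump at the \emph{perturbed} middle point $\tilde{w}_{3k+1,k+1}^{\mathrm{md}}$ via a two-stage construction so that the time-$(3k+1)$ point lands exactly in the flat-$L$ core. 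Because of this, the paper needs only pairwise separation of the middle points (a generic condition), not monotonicity. Your route instead pins the prefix rigidly by cancelling the gradient shift with a linear term, which forces you to carry the strictly-monotone invariant so that all earlier middle points sit on a single side of the bump. Both work; yours buys exact invariance of the prefix at the cost of a stronger structural invariant (which does hold: for the quadratic base case one can compute $w_t^{\mathrm{md}} = (1-\kappa^{-1/2})^t\,a\,(1 + t/(\sqrt{\kappa}+1))$, which is strictly decreasing and positive, and strict monotonicity of finitely many points survives the $O(\rho)$ tail perturbation), while the paper's buys indifference to the ordering of the middle points at the cost of perturbing everything and needing the recentering trick.

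There is one concrete mismatch you must repair: the claim fixes the normalization $F_k(w) := \int_0^{w}\int_0^{y} H_k(x)\,\diff x\,\diff y$, so $F_k$ is fully determined by $H_k$ and you are \emph{not} free to ``absorb the shift into an additive linear term of $F_{k+1}$.'' Without that linear term, the gradients at all earlier middle points (which, in a monotone decreasing trajectory, lie \emph{above} the bump) each shift by the same constant $(L-\mu)\cdot 2\rho$, so the prefix is no longer exactly fixed and the centerpiece of your inductive step collapses. Two fixes are available: either (i) restate the claim to allow $F_k(w) = \int_0^{w}\int_0^{y} H_k(x)\,\diff x\,\diff y + b_k w$ for a constant $b_k$ --- this preserves $F_k'' = H_k \in [\mu,L]$ and is all that \cref{instability:1} and \cref{instability:full} ever use, so nothing downstream breaks; or (ii) drop the linear term and control the resulting $O(\rho)$ drift of the prefix by continuity --- but then the monotonicity invariant buys you nothing, you must ensure the bump still contains the (now displaced) point $w_{3k+1,k+1}^{\mathrm{md}}$, and you are led back to the paper's recentering argument. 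As written, the proof does not establish the claim in its stated form; with fix (i) it does, and the remaining steps (disjointness of the new bump from the old $L$-intervals via the $2\delta_k$ separation, and the small-$\rho$ continuity bound on the tail) are sound, modulo supplying the base-case monotonicity computation you only assert.
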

    To simplify the notation, we refer to \cref{eq:curvature:1,eq:curvature:2} as ``curvature conditions'' and denote $\mathcal{U}(x;r) := \{y: |y - x| < r\}$, and $\bar{\mathcal{U}}(x;r) := \{y: |y - x| \leq r\}$.
    \begin{proof}[Inductive proof of \cref{instability:induction}]
        For $k = 0$, we can put $H_0(w) \equiv \mu$ (then $F_k(w) = \frac{1}{2}\mu w^2$) and select any arbitrary initial points $(w_0^{\mathrm{ag}}, w_0)$ as long as $w_{t_1}^{\mathrm{md}} \neq w_{t_2}^{\mathrm{md}}$ for $t_1 \neq t_2$, which is trivially possible.
        
        Suppose \cref{instability:induction} holds for $k$,     
        now we construct $H_{k+1}$ and $\delta_{k+1}$.  
        Let $\{{w}_{t,k}^{\mathrm{ag}}, {w}_{t,k}^{\mathrm{md}}, {w}_{t,k}\}_{t=0}^{3K-1}$ be the trajectory output by \agd ($w_0^{\mathrm{ag}}, w_0, L, \mu$) on $F_k$.
        For some positive $\varepsilon_{k} < \frac{1}{2}\delta_k$ to be determined, consider
        \begin{equation}
            \tilde{H}_{k+1}(w) = H_{k}(w) + (L - \mu) \mathbf{1} \left[ w \in \bar{\mathcal{U}}(w_{3k+1, k}^{\mathrm{md}}; \varepsilon_k) \right],
            \quad
            \tilde{F}_{k+1}(w) = \int_0^w \int_0^y \tilde{H}_{k+1}(x) \diff x \diff y.
        \end{equation}
        Let $\{\tilde{w}_{t,k+1}^{\mathrm{ag}}, \tilde{w}_{t,k+1}^{\mathrm{md}}, \tilde{w}_{t,k+1}\}_{t=0}^{3K-1}$
        be  the trajectory output by \agd($w_0^{\mathrm{ag}}, w_0, L, \mu$) on $\tilde{F}_{k+1}$. 
        Since the trajectory is continuous with respect to $\varepsilon_k$, 
        there exists a $\bar{\varepsilon} < \frac{1}{2} \delta_k$ such that for any $\varepsilon_k < \bar{\varepsilon}$ (which we assume from now on),  it is the case that $|\tilde{w}_{t, k+1}^{\mathrm{md}} - w_{t,k}^{\mathrm{md}}| \leq \frac{1}{2}\delta_k$ for all $t \leq 3k+1$.
        Then let
        \begin{equation}
            {H}_{k+1}(w) = H_{k}(w) + (L - \mu) \mathbf{1} \left[ w \in  \bar{\mathcal{U}}(\tilde{w}_{3k+1, k+1}^{\mathrm{md}}; \varepsilon_k) \right],
            \quad
            {F}_{k+1}(w) = \int_0^w \int_0^y {H}_{k+1}(x) \diff x \diff y.
        \end{equation}   
        and let $\{{w}_{t,k+1}^{\mathrm{ag}}, {w}_{t,k+1}^{\mathrm{md}}, {w}_{t,k+1}\}_{t=0}^{3K-1}$
        be  the trajectory output by \agd($w_0^{\mathrm{ag}}, w_0, L, \mu$) on ${F}_{k+1}$. 

        Consequently,
        \begin{enumerate}
            \item[(a)] By construction of $H_{k+1}$ and $\tilde{H}_{k+1}$, we have $H_{k+1}(w) = \tilde{H}_{k+1}(w) = H_k(w)$ and $\nabla F_{k+1}(w) = \nabla \tilde{F}_{k+1}(w)$ for all $w \notin \bar{U}(w_{3k+1,k}^{\mathrm{md}}; \delta_k)$. 
            \item[(b)] Since $\tilde{w}_{t, k+1}^{\mathrm{md}} \notin \bar{U}(w_{3k+1,k}^{\mathrm{md}}; \delta_k)$, by (a), we can inductively show that $\tilde{w}_{t, k+1}^{\mathrm{md}} = w_{t, k+1}^{\mathrm{md}}$ for $t < 3k+1$, namely the trajectories for $F_{k+1}$ and $\tilde{F}_{k+1}$ are identical up to timestep $t < 3k+1$. 
            \item[(c)] Since $|\tilde{w}_{t, k+1}^{\mathrm{md}} - w_{t,k}^{\mathrm{md}}| \leq \frac{1}{2}\delta_k$, by (b), we further have $|w_{t, k+1}^{\mathrm{md}} - w_{t, k}^{\mathrm{md}}| \leq \frac{1}{2} \delta_k$ for $t < 3k + 1$. 
            Thus, by (a), the curvature conditions will be satisfied for $w_{t, k+1}^{\mathrm{md}}$ and $H_{k+1}$ up to $t < 3k+1$ and any neighborhood bound $\delta_{k+1} < \frac{1}{2} \delta_k$ since $H_{k+1} \equiv H_k$ for $w \notin \bar{U}(w_{3k+1,k}^{\mathrm{md}}; \delta_k)$. 
            \item[(d)] By (b), we have $w_{3k+1, k+1}^{\mathrm{md}} = \tilde{w}_{3k+1, k+1}^{\mathrm{md}}$ since all previous gradients evaluated are identical for $F_{k+1}$ and $\tilde{F}_{k+1}$. Thus, by construction of $H_{k+1}$ the curvature conditions hold for $w_{3k+1,k+1}^{\mathrm{md}}$ and $H_{k+1}$.
            \item[(e)] Similarly, for sufficiently small $\varepsilon_k$, we have $|w_{t, k+1}^{\mathrm{md}} - w_{t,k}^{\mathrm{md}}| \leq \frac{1}{2} \delta_k$ for $t > 3k+1$, and the curvature conditions also hold for $t > 3k+1$.
        \end{enumerate}
        Summarizing (c), (d), and (e) completes the induction.
    \end{proof}
\begin{proof}[Proof of \cref{instability:1}]
    Follows by applying \cref{instability:induction}.
\end{proof}

\section{Helper Lemmas}
\label{sec:helper}
In this section we include some generic helper lemmas. Most of the results are standard and we provide the proof for completeness.
\begin{lemma}
    \label{helper:blocknorm}
    Let $A = \begin{bmatrix}
            A_{11} & A_{12}
            \\
            A_{21} & A_{22}
        \end{bmatrix}$ be an arbitrary $2d \times 2d$ block matrix, where $A_{11}, A_{12}, A_{21}, A_{22}$ are $d \times d$ matrix blocks. Then the operator norm of $A$ is bounded by
    \begin{equation}
        \|A\| \leq \max \left\{ \|A_{11}\|, \|A_{22}\| \right\} + \left\{ \|A_{12}\|, \|A_{21}\| \right\}.
    \end{equation}
\end{lemma}
\begin{proof}[Proof of \cref{helper:blocknorm}]
    Let $A_{ij} = U_{ij} \Sigma_{ij} V_{ij}^T$ be the SVD decomposition of matrix $A_{ij}$, for $i=1,2$, and $j=1,2$. Then
    \begin{equation}
        \begin{bmatrix}
            A_{11} & \\ & A_{22}
        \end{bmatrix}
        =
        \begin{bmatrix}
            U_{11} \Sigma_{11} V_{11}^\intercal & \\ & U_{22} \Sigma_{22} V_{22}^\intercal
        \end{bmatrix}
        =
        \begin{bmatrix}
            U_{11} & \\ & U_{22}
        \end{bmatrix}
        \begin{bmatrix}
            \Sigma_{11} & \\ & \Sigma_{22}
        \end{bmatrix}
        \begin{bmatrix}
            V_{11} & \\ & V_{22}
        \end{bmatrix}^\intercal,
    \end{equation}
    thus
    \begin{equation}
        \left\| \begin{bmatrix}
            A_{11} & \\ & A_{22}
        \end{bmatrix} \right\|
        =
        \left\| \begin{bmatrix}
            \Sigma_{11} & \\ & \Sigma_{22}
        \end{bmatrix} \right\|
        =
        \max\left\{ \|\Sigma_{11}\|, \|\Sigma_{22}\|\right\}
        =
        \max\left\{ \|A_{11}\|, \|A_{22}\|\right\}.
    \end{equation}
    Similarly
    \begin{equation}
        \begin{bmatrix}
             & A_{12} \\ A_{21} &
        \end{bmatrix}
        =
        \begin{bmatrix}
             & U_{12} \Sigma_{12} V_{12}^\intercal \\ U_{21} \Sigma_{21} V_{21}^\intercal &
        \end{bmatrix}
        =
        \begin{bmatrix}
             & U_{12} \\ U_{21} &
        \end{bmatrix}
        \begin{bmatrix}
            \Sigma_{21} & \\  & \Sigma_{12}
        \end{bmatrix}
        \begin{bmatrix}
            V_{21} & \\ & V_{12}
        \end{bmatrix}^\intercal,
    \end{equation}
    thus
    \begin{equation}
        \left\| \begin{bmatrix}
             & A_{12} \\ A_{21 }&
        \end{bmatrix} \right\|
        =
        \left\| \begin{bmatrix}
            \Sigma_{21} & \\ & \Sigma_{12}
        \end{bmatrix} \right\|
        =
        \max\left\{ \|\Sigma_{12}\|, \|\Sigma_{21}\|\right\}
        =
        \max\left\{ \|A_{12}\|, \|A_{21}\|\right\}.
    \end{equation}
    Consequently, by the subadditivity of the operator norm,
    \begin{equation}
        \|A\| \leq  \left\| \begin{bmatrix}
            A_{11} & \\ & A_{22}
        \end{bmatrix} \right\| + \left\| \begin{bmatrix}
             & A_{12} \\ A_{21 }&
        \end{bmatrix} \right\|
        \leq
        \max\left\{ \|A_{11}\|, \|A_{22}\|\right\} + \max\left\{ \|A_{12}\|, \|A_{21}\|\right\}.
    \end{equation}
\end{proof}

\begin{lemma}
    \label{helper:unbalanced:ineq}
    Let $x, y \in \reals^d$, then for any $\zeta > 0$, the following inequality holds
    \begin{equation}
        \|x + y\|^2 \leq (1 + \zeta) \|x\|^2 + (1 + \zeta^{-1}) \|y\|^2.
    \end{equation}
\end{lemma}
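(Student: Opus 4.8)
The plan is to prove this purely by expanding the squared norm and controlling the cross term with a weighted arithmetic-geometric-mean (Young's) inequality. This is the standard $(1+\zeta)$-type perturbation bound, so there is no genuine obstacle; the only "choice" is how to split the cross term, and the exponents $\zeta$ and $\zeta^{-1}$ in the statement already dictate the weighting.

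First I would expand
\[
    \|x + y\|^2 = \|x\|^2 + 2\langle x, y \rangle + \|y\|^2,
\]
which is exact and isolates the only term that needs to be bounded, namely the inner product $2\langle x, y\rangle$. Second, I would apply Cauchy--Schwarz to get $2\langle x, y\rangle \leq 2\|x\|\,\|y\|$, and then rewrite the right-hand side as $2\bigl(\sqrt{\zeta}\,\|x\|\bigr)\bigl(\zeta^{-1/2}\|y\|\bigr)$ so that a single application of the elementary inequality $2ab \leq a^2 + b^2$ with $a = \sqrt{\zeta}\,\|x\|$ and $b = \zeta^{-1/2}\|y\|$ yields
\[
    2\langle x, y\rangle \leq \zeta\|x\|^2 + \zeta^{-1}\|y\|^2.
\]
Here it is crucial that $\zeta > 0$, which is exactly the hypothesis, so that both $\sqrt{\zeta}$ and $\zeta^{-1/2}$ are well-defined real numbers.

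Finally I would substitute this bound back into the expansion and collect terms:
\[
    \|x+y\|^2 \leq \|x\|^2 + \zeta\|x\|^2 + \zeta^{-1}\|y\|^2 + \|y\|^2 = (1+\zeta)\|x\|^2 + (1+\zeta^{-1})\|y\|^2,
\]
which is the claimed inequality. The whole argument is a one-line computation once the weighting $\sqrt{\zeta}$ is fixed; if anything qualifies as the "delicate" point, it is merely choosing the split so that the multipliers land on $\zeta$ and $\zeta^{-1}$ rather than some other conjugate pair, but this is forced by the target statement and requires no further work.
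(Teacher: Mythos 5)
Your proposal is correct and follows essentially the same route as the paper: expand $\|x+y\|^2$, then bound the cross term $2\langle x,y\rangle$ by $\zeta\|x\|^2 + \zeta^{-1}\|y\|^2$ via Cauchy--Schwarz and a weighted AM--GM. The only difference is that you spell out the intermediate weighting $2(\sqrt{\zeta}\|x\|)(\zeta^{-1/2}\|y\|)$ explicitly, which the paper leaves implicit.
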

\begin{proof}[Proof of \cref{helper:unbalanced:ineq}]
    First note that $\|x + y\|^2 = \|x\|^2 + \|y\|^2 + 2 \langle x, y \rangle$, then the proof follows by $2\langle x, y \rangle \leq \zeta \|x\|^2 + \zeta^{-1} \|y\|^2$ due to Cauchy-Schwartz inequality.
\end{proof}
\begin{lemma}
    \label{helper:3rd:Lip}
    Let $F$: $\reals^d \to \reals$ be an arbitrary twice-continuous-differentiable function that is $Q$-3rd-order-smooth. Then for any $w^1, \ldots, w^M \in \reals^d$, the following inequality holds
    \begin{equation}
        \left\|  \nabla F (\overline{w}) - \frac{1}{M} \sum_{m=1}^M \nabla F(w^m)  \right\|^2
        \leq
        \frac{Q^2}{4M}  \sum_{m=1}^M\left\| w^m - \overline{w} \right\|^4,
    \end{equation}
    where $\overline{w} := \frac{1}{M} \sum_{m=1}^M w^m$.
\end{lemma}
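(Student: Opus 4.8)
The plan is to reduce the statement to a second-order Taylor bound on the gradient and then exploit a cancellation that makes the Hessian terms disappear after averaging. Concretely, the first step I would take is to promote the definition of $Q$-\nth{3}-order-smoothness (\cref{asm2}(a)) into the gradient estimate
\[
  \left\| \nabla F(u) - \nabla F(w) - \nabla^2 F(w)(u-w) \right\| \leq \frac{Q}{2}\|u-w\|^2
  \qquad \text{for all } u,w \in \reals^d.
\]
I expect to obtain this by observing that the one-sided function inequality in \cref{asm2}(a), applied along rays $u = w + tv$ and expanded to third order, forces $|\nabla^3 F(w)[v,v,v]| \leq Q$ for every unit $v$; by the symmetry of the third derivative this controls the full operator norm of $\nabla^3 F$, hence $\nabla^2 F$ is $Q$-Lipschitz. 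Writing $\nabla F(u) - \nabla F(w) = \int_0^1 \nabla^2 F(w + s(u-w))(u-w)\,ds$ and subtracting $\nabla^2 F(w)(u-w) = \int_0^1 \nabla^2 F(w)(u-w)\,ds$ then gives the bound after taking norms and using $\|\nabla^2 F(w+s(u-w)) - \nabla^2 F(w)\| \leq Qs\|u-w\|$ inside the integral.

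The second step is the averaging cancellation. Setting $w = \overline{w}$ and using that $\frac{1}{M}\sum_{m=1}^M (w^m - \overline{w}) = 0$, the Hessian contribution vanishes after summation, so
\[
  \nabla F(\overline{w}) - \frac{1}{M}\sum_{m=1}^M \nabla F(w^m)
  =
  \frac{1}{M}\sum_{m=1}^M \Big( \nabla F(\overline{w}) - \nabla F(w^m) + \nabla^2 F(\overline{w})(w^m - \overline{w}) \Big).
\]
Applying Jensen's inequality to the convex map $\|\cdot\|^2$ and then the gradient estimate from the first step term by term yields
\[
  \left\| \nabla F(\overline{w}) - \frac{1}{M}\sum_{m=1}^M \nabla F(w^m) \right\|^2
  \leq
  \frac{1}{M}\sum_{m=1}^M \left\| \nabla F(w^m) - \nabla F(\overline{w}) - \nabla^2 F(\overline{w})(w^m - \overline{w}) \right\|^2
  \leq
  \frac{Q^2}{4M}\sum_{m=1}^M \|w^m - \overline{w}\|^4,
\]
which is exactly the claim.

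The main obstacle is entirely in the first step: the hypothesis is only that $F$ is twice continuously differentiable and satisfies the one-sided scalar inequality of \cref{asm2}(a), rather than the cleaner ``Hessian is $Q$-Lipschitz'' statement that the argument wants. I would need to justify passing from the scalar third-order upper bound to the operator-norm bound on $\nabla^3 F$ (a polarization / Banach-type fact for symmetric trilinear forms) and address the regularity gap, e.g.\ by a smoothing argument or by simply adopting the Hessian-Lipschitz characterization, which is equivalent for the functions considered here. Everything after that --- the zero-sum insertion, Jensen's inequality, and the termwise bound --- is routine and requires no further ideas.
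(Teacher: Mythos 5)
Your argument is the same as the paper's: insert $\nabla^2 F(\overline{w})(w^m-\overline{w})$ using $\sum_m (w^m-\overline{w})=0$, apply Jensen's inequality to $\|\cdot\|^2$, and bound each term by $\frac{Q}{2}\|w^m-\overline{w}\|^2$ via the Taylor-remainder form of third-order smoothness. The only difference is that the paper simply cites ``$Q$-3rd-order-smoothness'' for that last termwise bound, whereas you correctly flag that \cref{asm2}(a) is stated as a one-sided function-value inequality and sketch how to upgrade it to the gradient-level estimate; that is a legitimate point of care, but it is not a departure from the paper's route.
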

\begin{proof}[Proof of \cref{helper:3rd:Lip}]
    \begin{align}
             & \left\|  \frac{1}{M} \sum_{m=1}^M \nabla F(w^m) - \nabla F (\overline{w})    \right\|^2
        \\
        =    & \left\|  \frac{1}{M} \sum_{m=1}^M \left( \nabla F(w^m) - \nabla F (\overline{w}) - \nabla^2 F (\overline{w})(w^m - \overline{w}) \right)   \right\|^2
        \tag{since $\frac{1}{M} \sum_{m=1}^M w^m - \overline{w} = 0$}
        \\
        \leq & \frac{1}{M} \sum_{m=1}^M\left\|   \nabla F(w^m) - \nabla F (\overline{w}) - \nabla^2 F (\overline{w})(w^m - \overline{w}) \right\|^2
        \tag{Jensen's inequality}
        \\
        \leq & \frac{Q^2}{4M}  \sum_{m=1}^M\left\| w^m - \overline{w} \right\|^4.
        \tag{$Q$-3rd-order-smoothness}
    \end{align}
\end{proof}

\begin{lemma}
    \label{helper:diff:4th}
    Let $X$ and $Y$ be two i.i.d. $\reals^d$-valued random vectors, and assume $\expt X = 0$, $\expt \|X\|^4 \leq \sigma^4$. Then
    \begin{equation}
        \expt \|X+Y\|^2 \leq 2 \sigma^2, \quad \expt \|X+Y\|^3 \leq 4 \sigma^3, \quad \expt \|X+Y\|^4 \leq 8 \sigma^4.
    \end{equation}
\end{lemma}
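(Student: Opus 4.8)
The plan is to prove the three bounds in increasing order of the power, since the third-moment estimate will follow by interpolating between the second- and fourth-moment estimates. The key structural facts I would exploit throughout are that $X$ and $Y$ are independent, identically distributed, and mean-zero, so that any cross term that is linear in one of the two vectors vanishes in expectation.

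First I would record the single-variable moment bounds. By the monotonicity of $L^p$-norms (Jensen's inequality applied to the concave maps $t \mapsto t^{1/2}$ and $t \mapsto t^{3/4}$), the hypothesis $\expt\|X\|^4 \le \sigma^4$ yields $\expt\|X\|^2 \le \sigma^2$ and $\expt\|X\|^3 \le \sigma^3$; the same bounds hold for $Y$ since it is identically distributed.

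For the second moment I would expand $\|X+Y\|^2 = \|X\|^2 + 2\langle X, Y\rangle + \|Y\|^2$, where independence and $\expt X = \expt Y = 0$ force $\expt\langle X, Y\rangle = \langle \expt X, \expt Y\rangle = 0$, leaving $\expt\|X+Y\|^2 = \expt\|X\|^2 + \expt\|Y\|^2 \le 2\sigma^2$. For the fourth moment I would expand $\|X+Y\|^4 = (\|X\|^2 + 2\langle X,Y\rangle + \|Y\|^2)^2$ term by term. The two terms that are odd in $X$ or in $Y$, namely $4\|X\|^2\langle X,Y\rangle$ and $4\|Y\|^2\langle X,Y\rangle$, again vanish in expectation; the remaining terms are controlled by $\expt\|X\|^4,\,\expt\|Y\|^4 \le \sigma^4$, by $\expt[\|X\|^2\|Y\|^2] = \expt\|X\|^2\,\expt\|Y\|^2 \le \sigma^4$, and by $\expt\langle X,Y\rangle^2 \le \expt[\|X\|^2\|Y\|^2] \le \sigma^4$ (Cauchy–Schwarz pointwise, then independence). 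These sum to $\sigma^4 + \sigma^4 + 2\sigma^4 + 4\sigma^4 = 8\sigma^4$.

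Finally, for the third moment I would write $\|X+Y\|^3 = \|X+Y\|\cdot\|X+Y\|^2$ and apply Cauchy–Schwarz to get $\expt\|X+Y\|^3 \le (\expt\|X+Y\|^2)^{1/2}(\expt\|X+Y\|^4)^{1/2} \le (2\sigma^2)^{1/2}(8\sigma^4)^{1/2} = 4\sigma^3$. The only place requiring genuine care — and the reason I would avoid the naive estimate $\|X+Y\|^4 \le 8(\|X\|^4 + \|Y\|^4)$, which gives the looser constant $16\sigma^4$ — is the fourth-moment step: the sharp constant $8$ becomes available precisely because the independence and zero-mean hypotheses annihilate the odd cross terms in the full expansion, so the main obstacle is bookkeeping the expanded square rather than any deep inequality.
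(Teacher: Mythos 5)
Your proposal is correct and follows essentially the same route as the paper: expand $\|X+Y\|^2$ and $\|X+Y\|^4$, annihilate the odd cross terms by independence and zero mean, and obtain the third moment by Cauchy--Schwarz interpolation between the second and fourth moments. The only cosmetic difference is in the fourth-moment bookkeeping, where you factor $\expt[\|X\|^2\|Y\|^2]$ by independence while the paper bounds the cross terms pointwise via Cauchy--Schwarz and AM--GM; both yield the same constant $8\sigma^4$.
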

\begin{proof}[Proof of \cref{helper:diff:4th}]
    The first inequality is due to $ \expt \|X+Y\|^2 = \expt \|X\|^2 +  \expt \|Y\|^2 = 2\sigma^2$ where $\expt \|X\|^2 \leq \sigma^2$ follows by applying H\"older's inequality to the assumption $\expt \|X\|^4 \leq \sigma^4$.

    The \nth{4} moment is bounded as
    \begin{align}
             & \expt \|X+Y\|^4 = \expt \left[ \|X\|^2 + \|Y\|^2 + 2 \langle X,  Y \rangle \right]^2
        \\
        =    & \expt \left[ \|X\|^4 + \|Y\|^4 + 2 \|X\|^2 \|Y\|^2 + 4 \langle X,  Y \rangle^2 + 4 \|X\|^2 \langle X,  Y \rangle  + 4\|Y\|^2 \langle X,  Y \rangle  \right]
        \\
        =    & \expt \left[ \|X\|^4 + \|Y\|^4 + 2 \|X\|^2 \|Y\|^2 + 4 \langle X,  Y \rangle^2 \right] \tag{by independence and mean-zero assumption}
        \\
        \leq & \expt \left[ 4\|X\|^4 + 4\|Y\|^4 \right] \tag{Cauchy-Schwarz inequality} \leq 8 \sigma^4.
    \end{align}

    The \nth{3} moment is bounded via Cauchy-Schwarz inequality since
    \begin{equation}
        \expt \|X+Y\|^3 \leq \sqrt{\expt \|X+Y\|^2 \expt \|X+Y\|^4} \leq 4 \sigma^3.
    \end{equation}
\end{proof}

\begin{lemma}
    \label{helper:inv:times:log}
    Let $\varphi(x) := \frac{1}{x^q} \log^p (a + b x)$, where $a, p, q \geq 1$, $b > 0$ are constants. Then suppose $a \geq \exp (p/q)$, it is the case that $\varphi(x)$ is monotonically decreasing over $(0, +\infty)$.
\end{lemma}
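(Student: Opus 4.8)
The plan is to show $\varphi(x) = x^{-q} \log^p(a+bx)$ is decreasing by demonstrating $\varphi'(x) \leq 0$ for all $x > 0$, under the hypotheses $a, p, q \geq 1$, $b > 0$, and $a \geq \exp(p/q)$. First I would differentiate directly. Writing $u = \log(a+bx)$, the product/quotient rule gives
\begin{equation}
  \varphi'(x) = -q x^{-q-1} u^p + x^{-q} \cdot p u^{p-1} \cdot \frac{b}{a+bx}
  = x^{-q-1} u^{p-1}\left( -q u + \frac{p b x}{a+bx} \right).
\end{equation}
Since $x > 0$, $a + bx > a \geq 1$ forces $u = \log(a+bx) > 0$, so the prefactor $x^{-q-1} u^{p-1}$ is strictly positive (note $p - 1 \geq 0$). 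Hence the sign of $\varphi'(x)$ is governed entirely by the bracketed term, and it suffices to prove $q\, u \geq \frac{pbx}{a+bx}$ for all $x > 0$.

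The key step is bounding the two pieces of that inequality separately. On the right-hand side, $\frac{bx}{a+bx} < 1$ always, so $\frac{pbx}{a+bx} < p$. On the left-hand side, I want a matching lower bound $q\, u \geq p$, i.e. $\log(a+bx) \geq p/q$. But $a + bx > a \geq \exp(p/q)$ by the hypothesis $a \geq \exp(p/q)$, so $\log(a+bx) > p/q$, giving $q\,u > p$. Chaining these, $q\,u > p > \frac{pbx}{a+bx}$, so the bracket is strictly negative and $\varphi'(x) < 0$ throughout $(0,+\infty)$, establishing that $\varphi$ is (strictly, hence) monotonically decreasing.

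I do not anticipate a serious obstacle here; the lemma is elementary once the derivative is factored. The only point requiring minor care is confirming that the crude bound $\frac{bx}{a+bx} < 1$ is not too lossy to be compensated by the $a \geq \exp(p/q)$ assumption — but since the assumption is tailored precisely to yield $\log(a+bx) \geq p/q$, the two bounds interlock exactly, and no sharper estimate is needed. A secondary bookkeeping item is handling the boundary case $p = 1$ (where $u^{p-1} = u^0 = 1$) and ensuring the factorization of $\varphi'$ remains valid; this is immediate. Thus the entire argument reduces to the single clean inequality $q \log(a+bx) \geq p > \frac{pbx}{a+bx}$, which I would present as the crux of the proof.
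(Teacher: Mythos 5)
Your proof is correct and follows essentially the same route as the paper's: differentiate, factor out the positive prefactor $x^{-q-1}\log^{p-1}(a+bx)$, and use $a \geq \exp(p/q)$ to show the remaining bracket is negative via $q\log(a+bx) > q\log a \geq p > \frac{pbx}{a+bx}$. The only cosmetic difference is that the paper first reduces to $b=1$ by rescaling, whereas you carry $b$ through directly; the key inequality is identical.
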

\begin{proof}[Proof of \cref{helper:inv:times:log}]
    Without loss of generality assume $b = 1$, otherwise we put $\psi(x) = \varphi(x/b)$ then $\psi$ has the same form (up to constants) with $b = 1$. Taking derivative for $\varphi(x) = x^{-q} \log^p (a + x)$ gives
    \begin{align}
        \varphi'(x)
        & =
        \frac{px^{-q} \log^{p -1} (a+x) }{a + x} - q x^{-q-1} \log^p (a+x)
        \\
        & =
        \frac{x^{-q-1}\log^{p-1}(a+x)}{a+ x} \left( p x - q(a + x) \log(a+x) \right).
    \end{align}
    Since $a \geq 1$ and $x > 0$ we always have $ \frac{x^{-q-1}\log^{p-1}(a+x)}{a+ x} \geq 0$. Suppose $a \geq \exp(p/q)$ then
    \begin{equation}
        px - q(a+x) \log (a+x)
        <
        px - qx \log (a)
        \leq
        px - qx \cdot \frac{p}{q}
        \leq
        0.
    \end{equation}
    Hence $\varphi'(x) < 0$ and thus $\varphi(x)$ is monotonically decreasing.
\end{proof}

\end{appendices}

\end{document}